\newcommand{\OT}{\mathrm{OT}}
\newcommand{\ADM}{\mathrm{Joint}}
\newcommand{\Tr}{\mathrm{Tr}}
\newcommand{\E}{\mathbb{E}}
\newcommand{\Probs}{\mathcal{P}}
\newcommand{\N}{\mathcal{N}}
\newcommand{\GP}{\mathrm{GP}}
\newcommand{\KL}{{\mathrm{KL}}}
\newcommand{\trace}{\mathrm{Tr}}
\newcommand{\approach}{\ensuremath{\rightarrow}}
\newcommand{\Minh}{\textcolor{blue}}
\renewcommand{\H}{\mathcal{H}}
\newcommand{\Sym}{\mathrm{Sym}}
\newcommand{\HS}{\mathrm{HS}}
\newcommand{\tr}{\mathrm{tr}}
\newcommand{\Xbf}{\mathbf{X}}
\newcommand{\Ybf}{\mathbf{Y}}
\newcommand{\Ncal}{\mathcal{N}}
\newcommand{\1}{\mathbf{1}}
\newcommand{\X}{\mathcal{X}}
\newcommand{\R}{\mathbb{R}}
\newcommand{\la}{\langle}
\newcommand{\ra}{\rangle}
\newcommand{\mapto}{\ensuremath{\rightarrow}}
\renewcommand{\b}{\mathbf{b}}
\newcommand{ \ep}{\epsilon}
\newcommand{ \Lexp}{L^{\rm exp}_{\ep}}
\newcommand{\Pro}{\mathcal{P}}
\newcommand{\mysupp}{\mathrm{supp}}
\newcommand{\restr}[1]{\lower3pt\hbox{$|_{#1}$}}
\newcommand{\Ubb}{\mathbb{U}}
\newcommand{\Gauss}{\mathrm{Gauss}}
\newcommand{\dettwo}{{\rm det_2}}
\newcommand{\Bsc}{\mathscr{B}}
\newcommand{\Fcal}{\mathcal{F}}
\newcommand{\bE}{\mathbb{E}}
\newcommand{\Joint}{\mathrm{Joint}}
\newcommand{\Tcal}{\mathcal{T}}
\newcommand{\equivalent}{\ensuremath{\Longleftrightarrow}}
\newcommand{\imply}{\ensuremath{\Rightarrow}}
\newcommand{\mysqrt}{\mathrm{sqrt}}
\newcommand{\mysq}{\mathrm{sq}}
\newcommand{\compose}{\circ}
\DeclareMathOperator*{\argmin}{arg\,min}
\newcommand{\Lcal}{\mathcal{L}}
\newcommand{\Xcal}{\mathcal{X}}
\newcommand{\Pcal}{\mathcal{P}}
\newcommand{\Srm}{\mathrm{S}}
\newcommand{\Nbb}{\mathbb{N}}
\newcommand{\Kcal}{\mathcal{K}}
\newcommand{\Csc}{\mathscr{C}}
\newcommand{\myIm}{\mathrm{Im}}
\newcommand{\Ycal}{\mathcal{Y}}
\newcommand{\Zcal}{\mathcal{Z}}
\newcommand{\Gcal}{\mathcal{G}}
\newcommand{\myspan}{\mathrm{span}}
\newcommand{\Acal}{\mathcal{A}}
\title{
%Minimum mutual information and entropic Wasserstein distance 
Entropic regularization of Wasserstein distance between infinite-dimensional Gaussian measures
and Gaussian processes}
\titlerunning{
	%Minimum mutual information and entropic Wasserstein distance 
	Entropic regularization of Wasserstein distance
	on Hilbert space}
\author{H\`a Quang Minh}
\institute{
	\at
	RIKEN Center for Advanced Intelligence Project, 1-4-1 Nihonbashi, Chuo-ku,
	%(RIKEN-AIP), 
	Tokyo 103-0027, JAPAN
	\\
	\email{minh.haquang@riken.jp}
}
\date{\today}
\begin{document}

\maketitle

\begin{abstract}
	This work studies the entropic regularization formulation of the 2-Wasserstein distance on an infinite-dimensional Hilbert space, in particular for the Gaussian setting. We first present the Minimum Mutual Information property, namely the joint measures of two Gaussian measures on Hilbert space with the smallest mutual information 
	are joint Gaussian measures. This is the infinite-dimensional generalization of the Maximum Entropy property of Gaussian densities on Euclidean space. We then 
	give closed form formulas for the optimal entropic transport plan, entropic 2-Wasserstein distance, and Sinkhorn divergence between two Gaussian measures on a Hilbert space, along with the
	fixed point equations for the barycenter of a set of Gaussian measures. 
	Our formulations fully exploit the regularization aspect of the entropic formulation and are valid both in {\it singular} and {\it nonsingular} settings.
	In the infinite-dimensional setting, both the
	entropic 2-Wasserstein distance and Sinkhorn divergence are Fr\'echet differentiable, in contrast to the
	exact 2-Wasserstein distance, which is not differentiable.
	Our Sinkhorn barycenter equation is new and always has a unique solution. In contrast,  the finite-dimensional barycenter equation for the entropic 2-Wasserstein distance fails to generalize to the Hilbert space setting.
	In the setting of reproducing kernel Hilbert spaces (RKHS), our distance formulas are given explicitly in terms of the corresponding kernel Gram matrices, providing an interpolation between
	the kernel Maximum Mean Discrepancy (MMD) and the kernel 2-Wasserstein distance.
\end{abstract}

\section{Introduction}

In this work, we study the entropic regularization formulation of the $2$-Wasserstein distance
in the Hilbert space setting, with a particular focus on Gaussian measures and covariance operators on Hilbert space. This is the infinite-dimensional generalization
of recent work on the entropic $2$-Wasserstein distance between Gaussian measures on $\R^n$, as reported in 
\cite{Mallasto2020entropyregularized,Janati2020entropicOT,barrio2020entropic}. Our work is along the direction of entropic regularization in optimal transport, which has recently attracted much attention in various fields,
in particular machine learning and statistics
\cite{cuturi13,Sommerfeld2017WassersteinDO,feydy18,genevay16,genevay17,GigTam18,ramdas2017,RipThesis}, with applications in computer vision, density functional theory, and inverse problems (e.g.~\cite{genevay17,GerGroGor19,Lunz18,patrini18}). This direction of research is also closely connected
with the {\it Schr\"odinger bridge problem}~\cite{Schr31}, which has been studied extensively
~\cite{BorLewNus94,Csi75,peyre17,FraLor89,RusIPFP,Zam15,galsal,LeoSurvey,rus93,rus98}.
%GalichonEconomics
%{\bf Gaussian setting}. 
%The setting of Gaussian distributions and covariance matrices on $\R^n$ has attracted much research interest, both theoretically and practically. 

Our focus in the Gaussian setting stems not only from its use in elucidating various aspects of the abstract theory, since many quantities of interest admit closed form formulas, but also from numerous applications utilizing Gaussian measures and covariance matrices/operators.
%Mathematically, many distances/divergences admit closed-form formulas under Gaussian assumptions. 
%Practically, these distances/divergences have found numerous applications in various fields, 
These include brain  imaging~\cite{arsigny06,Dryden:2009}, computer vision~\cite{tuzel08,Tosato:PAMI2013}, and brain computer interfaces~\cite{Congedo:BCIreview2017}.
%, and quantum information theory. 
%natural language processing~\cite{muzellec2018}, and assessing the quality of generative models~\cite{heusel17}. 
%In quantum information theory, the $2$-Wasserstein metric between Gaussians  is known as the \emph{Bures metric} in where it is used to compare quantum states. 
Many distances/divergences  have been studied and employed in practice, including the \emph{affine-invariant Riemannian metric} \cite{Pennec:IJCV2006}, corresponding to the Fisher-Rao distance between centered Gaussians, the Alpha Log-Determinant divergences \cite{Chebbi:2012Means}, corresponding to R\'enyi divergences between centered Gaussians, the \emph{Log-Euclidean metric} \cite{LogEuclidean:SIAM2007}, and  
recent work attempting to unify them 
%these distances/divergences 
~\cite{amari2018information,cichocki15,thanwerdas19,Minh:GSI2019}. 

{\bf Infinite-dimensional setting}. The generalization of distances/divergences for Gaussian measures and covariance matrices on $\R^n$ to the infinite-dimensional setting of Gaussian measures and covariance operators on Hilbert spaces has been carried out by various authors.
In general, the infinite-dimensional formulations are substantially more complex than the finite-dimensional ones and regularization is often necessary. This is 
the case for the affine-invariant Riemannian distance \cite{Larotonda:2007}, the Log-Hilbert-Schmidt metric
\cite{MinhSB:NIPS2014}, the Alpha and Alpha-Beta Log-Determinant divergences \cite{Minh:LogDet2016,Minh:LogDetIII2018,Minh:2019AlphaBeta,Minh:Positivity2020,Minh:2020regularizedDiv}.
The settings for these distances/divergences are the sets of positive definite unitized trace class/
 %or positive definite unitized 
Hilbert-Schmidt operators, which are positive trace class/Hilbert-Schmidt operators plus a positive scalar multiple of the identity operator so that operations such as inversion, logarithm, and determinant, are well-defined. A particular advantage of the $2$-Wasserstein distance compared to the above distances/divergences is that the finite and infinite-dimensional distance formulas \cite{Gelbrich:1990Wasserstein,cuesta1996:WassersteinHilbert} are the same and no regularization is necessary. 

{\bf Reproducing kernel Hilbert space (RKHS) setting}.
From the computational and practical viewpoint, this setting is particularly interesting since
%the setting of RKHS covariance operators is of particular interest. 
%In this case, 
many quantities of interest admit closed forms via kernel Gram matrices which can be efficiently computed.
Examples include the kernel Maximum Mean Discrepancy (MMD) \cite{Gretton:MMD12a} and the RKHS covariance operators, 
%These %are applied to formulate
the latter resulting in powerful nonlinear algorithms with
substantial improvements over finite-dimensional covariance matrices, see e.g. \cite{ProbDistance:PAMI2006,Covariance:CVPR2014,MinhSB:NIPS2014,Minh:Covariance2017,zhang2019:OTRKHS} for examples of applications in computer vision. 

{\bf Contributions of this work}.
\begin{enumerate}
	\item We generalize the Maximum Entropy property of Gaussian densities in $\R^n$ to the 
	Hilbert space setting, namely the Minimum Mutual Information of joint measures of two Gaussian measures
	on Hilbert space.
	% on two separable Hilbert spaces. 
	
	\item For two Gaussian measures on a Hilbert space $\H$, we provide
	closed form formulas for the optimal entropic transport plan, the entropic $2$-Wasserstein distance and the Sinkhorn divergence, generalizing results in \cite{Mallasto2020entropyregularized,Janati2020entropicOT,barrio2020entropic}.
	\item For a set of Gaussian measures, we show a new Sinkhorn barycenter equation, with always a unique non-trivial solution. In contrast,
	we show that the finite-dimensional barycenter equation for the entropic $2$-Wasserstein distance
	fails to generalize to the Hilbert space setting.
	 %a non-trivial barycenter {\it never} exists if $\dim(\H) = \infty$. We also provide conditions for the existence of a non-trivial barycenter in the case $\dim(\H) < \infty$, generalizing the one-dimensional results in  \cite{janati2020debiased}.
	 
	\item In the RKHS setting, we present closed form formulas for the distances via the finite kernel Gram matrices,
	providing an interpolation between the kernel MMD \cite {Gretton:MMD12a} and kernel Wasserstein distance \cite{zhang2019:OTRKHS,Minh:2019AlphaProcrustes}.
	 \item Our proofs and results fully exploit the regularization aspect of the entropic formulation and are valid both in {\it singular} and {\it non-singular} settings. This is novel also in the finite-dimensional setting (compared to \cite{Mallasto2020entropyregularized,Janati2020entropicOT,barrio2020entropic}).
	 %\item As we discuss in detail, many properties of the Gaussian case have {\it not} been proved in the general theory, due to (i) $\dim(\H) = \infty$, (ii) $c(x,y) = ||x-y||^2$ is unbounded on $\H$, (iii) the support of Gaussian measures is unbounded .
\end{enumerate}
{\color{black}
\begin{remark}
As we discuss in detail below, many properties of the Gaussian case have {\it not} been proved in the general theory, due to (i) $\dim(\H) = \infty$, (ii) the cost function $c(x,y) = ||x-y||^2$ is unbounded on $\H$, and (iii) the support of Gaussian measures is unbounded.
\end{remark}
}

\section{Background and finite-dimensional results}
\label{section:background}
%\subsection{Optimal Transport}
%{\bf Optimal transport}.
Let $(X,d)$ be a complete separable metric space equipped with a lower semi-continuous \emph{cost function} $c:X\times X \to \mathbb{R}_{\geq 0}$. 
Let $\Pcal(X)$ denote the set of all probability measures on $X$.
The {\it optimal transport} (OT) problem between two probability measures $\nu_0, \nu_1 \in \Probs(X)$ is  
(see e.g. \cite{villani2016})
\begin{equation}
    \OT_c(\nu_0, \nu_1) = \min_{\gamma\in \ADM(\nu_0,\nu_1)}\E_\gamma[c] = \min_{\gamma \in \ADM(\nu_0, \nu_1)}\int_{X \times X}c(x,y)d\gamma(x,y)
    \label{equation:OT-exact}
\end{equation}
where $\ADM(\nu_0,\nu_1)$ is the set of joint probabilities with marginals $\nu_0$ and $\nu_1$.
%
%{\bf Wasserstein distance.} 
For $1 \leq p < \infty$, let $\Pcal_p(X)$ denote the set of all probability measures $\mu$ on $X$ of finite moment of order $p$,
i.e. 
%those measures $\mu$ such that 
$\int_{X}d^p(x_0,x)d\mu(x) < \infty$ for some (and hence any) $x_0 \in X$.
The {\it $p$-Wasserstein distance} $W_p$ between $\nu_0$ and $\nu_1$ is defined as
\begin{equation}
    W_p(\nu_0,\nu_1) = \OT_{d^p}(\nu_0, \nu_1)^{\frac{1}{p}}.
\end{equation}
This distance defines a metric on $\Pcal_p(X)$ (Theorem 7.3, \cite{villani2016}).
For two multivariate Gaussian distributions $\nu_i=\N(m_i,C_i)$, $i=0,1$, on $\R^n$,  $W_2(\nu_0, \nu_1)$ admits the following closed form \cite{givens84,dowson82,olkin82,knott84}
\begin{equation}
\label{equation:Gaussian-Wass-finite}
    W_2^2(\nu_0, \nu_1) = \|m_0-m_1\|^2 + \Tr(C_0) + \Tr(C_1) - 2 \Tr\left(C_1^\frac{1}{2} C_0 C_1^\frac{1}{2}\right)^\frac{1}{2}.
\end{equation}
{\bf Entropic regularization and Sinkhorn divergence.}
The exact OT problem \eqref{equation:OT-exact} is often computationally challenging and it is more numerically efficient to solve the following regularized optimization problem, for a given $\ep > 0$, 
\begin{equation}
\label{equation:OT-entropic}
\OT_c^\epsilon(\mu, \nu) = \min_{\gamma\in \ADM(\mu,\nu)}\left\lbrace\E_\gamma[c]
+ \epsilon \KL(\gamma || \mu \otimes \nu) \right\rbrace,
\end{equation}
where $\KL(\nu || \mu)$ denotes the Kullback-Leibler divergence between $\nu$ and $\mu$.
%Given $\epsilon > 0$, we relax \eqref{def:OT} with KL-divergence term between the transport plan and the independent joint distribution as
%which yields a strictly convex problem that is numerically more favorable to solve compared to \eqref{def:OT} due to the Sinkhorn-Knopp algorithm.
%{\bf Sinkhorn divergence.} 
The KL
%-divergence term 
in \eqref{equation:OT-entropic} acts as a bias \cite{feydy18}, with the consequence that in general $\OT_c^\epsilon(\mu, \mu) \neq 0$. The following $p$-Sinkhorn divergence \cite{feydy18} removes this bias
%This bias can be removed by defining
\begin{equation}
\label{equation:sinkhorn}
S_{d^p}^\epsilon(\mu, \nu) = \OT_{d^p}^\epsilon(\mu, \nu) - \frac{1}{2}(\OT_{d^p}^\epsilon(\mu,\mu) + \OT_{d^p}^\epsilon(\nu,\nu) ).
\end{equation}
%This is the \emph{p-Sinkhorn divergence}.
%In the Gaussian setting on $\R^n$, with 
For $\nu_i=\Ncal(m_i,C_i)$, $i=0,1$, both $\OT^{\ep}_{d^2}(\nu_0, \nu_1)$ and $\Srm^{\ep}_{d^2}(\nu_0, \nu_1)$ admit closed form formulas. Let  $N^{\ep}_{ij} =  I + \left(I + \frac{16}{\epsilon^2}C_i^\frac{1}{2}C_jC_i^\frac{1}{2}\right)^\frac{1}{2}$, $i,j=0,1$, then \cite{Mallasto2020entropyregularized,Janati2020entropicOT,barrio2020entropic}
\begin{align}
\label{equation:gauss-entropic-Wass-finite}
\OT_{d^2}^\epsilon(\nu_0, \nu_1)
&= \|m_0 - m_1\|^2
+ \Tr(C_0) + \Tr(C_1)
\nonumber
\\
& \quad - \frac{\epsilon}{2}\left[
\Tr(N^{\epsilon}_{01}) - \log \det\left(N^{\epsilon}_{01}\right) + n\log{2} - 2n
\right],
\\
S_{d^2}^\epsilon(\nu_0,\nu_1)
&= {\color{black}\|m_0 - m_1\|^2} + \frac{\epsilon}{4} \left(
\Tr\left(
N_{00}^\epsilon - 2 N_{01}^\epsilon + N_{11}^\epsilon
\right)\phantom{\frac{M^2}{M^2}}\right.
\nonumber
\\
&\quad + \left.\log \left(
\frac{\det^2( N_{01}^\epsilon )}{\det(N_{00}^\epsilon)\det(N_{11}^\epsilon)}
\right)\right).
\label{equation:gauss-sinkhorn-finite}
\end{align}
In this case, the unique minimizer $\gamma$ in \eqref{equation:OT-entropic} is a joint Gaussian measure of $\nu_0$ and $\nu_1$, a direct consequence
of the Maximum Entropy of Gaussian densities (see below). 
%Note, that for computational reasons, it is advantageous to replace $K_0^\frac{1}{2}K_1K_0^\frac{1}{2}$ with $K_0K_1$ in $M_{01}^\epsilon$. 
In particular, $\lim\limits_{\epsilon\rightarrow 0}\OT^{\ep}_{d^2}(\nu_0, \nu_1) = \lim\limits_{\epsilon\rightarrow 0}\Srm_{d^2}^\epsilon(\nu_0, \nu_1) = W_2^2(\nu_0, \nu_1)$ and {\color{black}$\lim\limits_{\epsilon\rightarrow \infty} S_{d^2}^\epsilon(\nu_0, \nu_1) = \|m_0-m_1\|^2$}. For related work, see also \cite{kum2020penalization,ciccone2020regularizedtransport}.

\section{From finite to infinite-dimensional settings}
\label{section:finite-to-infinite}

In the current work, we generalize the results in \cite{Mallasto2020entropyregularized,Janati2020entropicOT,barrio2020entropic}
%from the setting of $\R^n$ 
to the 
%infinite-dimensional 
Hilbert space setting.
Throughout the following, let $(\H, \la, \ra)$ be a real, separable Hilbert space, with $\dim(\H) = \infty$ unless explicitly stated otherwise.
For two separable Hilbert spaces {\color{black}$(\H_i, \la,\ra_i)$},$i=1,2$, let $\Lcal(\H_1,\H_2)$ denote the Banach space of bounded linear operators from $\H_1$ to $\H_2$, with operator norm $||A||=\sup_{||x||_1\leq 1}||Ax||_2$.
For $\H_1=\H_2 = \H$, we use the notation $\Lcal(\H)$.

%Let $\Lcal(\H)$ denote the Banach space of bounded linear operator on $\H$. 
%For
%$A \in \Lcal(\H)$, $||A|| = \sup_{||x||\leq 1}||Ax||$. 
Let $\Sym(\H) \subset \Lcal(\H)$ be the set of bounded, self-adjoint linear operators on $\H$. Let $\Sym^{+}(\H) \subset \Sym(\H)$ be the set of
self-adjoint, {\it positive} operators on $\H$, i.e. $A \in \Sym^{+}(\H) \equivalent A^{*}=A, \la Ax,x\ra \geq 0 \forall x \in \H$. 
Let $\Sym^{++}(\H)\subset \Sym^{+}(\H)$ be the set of self-adjoint, {\it strictly positive} operator on $\H$,
i.e $A \in \Sym^{++}(\H) \equivalent A^{*}=A, \la x, Ax\ra > 0$ $\forall x\in \H, x \neq 0$.
We write $A \geq 0$ for $A \in \Sym^{+}(\H)$ and $A > 0$ for $A \in \Sym^{++}(\H)$.
If $\gamma I+A > 0$, where $I$ is the identity operator,$\gamma \in \R,\gamma > 0$, then $\gamma I+A$ is also invertible, in which case it is called
{\it positive definite}. {\color{black}In general, $A \in  \Sym(\H)$ is said to be positive definite if $\exists M_A > 0$ such that $\la x, Ax\ra \geq M_A||x||^2$ $\forall x \in \H$ - this condition is equivalent to $A$ being both strictly positive and invertible, see e.g. \cite{Petryshyn:1962}.}

The Banach space $\Tr(\H)$  of trace class operators on $\H$ is defined by (see e.g. \cite{ReedSimon:Functional})
%\begin{align*}
$\Tr(\H) = \{A \in \Lcal(\H): ||A||_{\tr} = \sum_{k=1}^{\infty}\la e_k, (A^{*}A)^{1/2}e_k\ra < \infty\}$,
%\\
%\HS(\H) &= \{A \in \Lcal(\H): ||A||_{\HS}^2 = \sum_{k=1}^{\infty}||Ae_k||^2  < \infty\},
%\end{align*}
for any orthonormal basis {\color{black}$\{e_k\}_{k \in \Nbb} \subset \H$}.
%$\{e_k\}_{k \in \Nbb} \in \H$.
For $A \in \Tr(\H)$, its trace is defined by $\trace(A) = \sum_{k=1}^{\infty}\la e_k, Ae_k\ra$, which is independent of choice of $\{e_k\}_{k\in \Nbb}$. 

The Hilbert space $\HS(\H_1,\H_2)$ of Hilbert-Schmidt operators from $\H_1$ to $\H_2$ is defined by 
(see e.g. \cite{Kadison:1983})
$\HS(\H_1, \H_2) = \{A \in \Lcal(\H_1, \H_2):||A||^2_{\HS} = \trace(A^{*}A) =\sum_{k=1}^{\infty}||Ae_k||_2^2 < \infty\}$,
for any orthonormal basis $\{e_k\}_{k \in \Nbb}$ in $\H_1$,
%It is a Hilbert space 
with inner product $\la A,B\ra_{\HS}=\trace(A^{*}B)$. For $\H_1 = \H_2 = \H$, we write $\HS(\H)$. We have 
{\color{black}$\trace(\H) \subsetneq \HS(\H) \subsetneq \Lcal(\H)$}
%$\Lcal(\H) \subsetneq \trace(\H) \subsetneq \HS(\H)$ 
when $\dim(\H) = \infty$, with $||A||\leq ||A||_{\HS}\leq ||A||_{\tr}$.
%and $||A||_{\HS}^2 = \trace(A^{*}A)$.

Some key differences between the finite and infinite-dimensional settings are

\begin{enumerate}

\item On $\R^n$, for two random variables $X,Y$ with joint and marginal measures $\mu_{XY},\mu_X, \mu_Y$, having densities $f(x,y),f_X(x), f_Y(y)$, respectively, with respect to the Lebesgue measure, their {\it mutual information} is 
defined as \cite{CoverThomas1991:InformationTheory}
\begin{align}
I(X;Y) &= \int_{\R^{n}\times \R^n}\log\left[\frac{f(x,y)}{f_X(x)f_Y(y)}\right]f(x,y)dxdy = H(X)+H(Y)-H(X,Y)
\nonumber
\\
&= \KL(\mu_{XY}||\mu_X \otimes \mu_Y),
\end{align}
where $H(X) = -\int_{\R^n}\log[f_X(x)]f_X(x)dx$ is the {\it differential entropy} of $X$. The classical Maximum Entropy of Gaussian densities property (\cite{Covariance:CVPR2014}, Theorem 9.6.5) states that
if
$X$ has mean zero and covariance matrix $C$, then 
\begin{align*}
H(X) \leq \frac{1}{2}\log(2\pi e)^n\det(C), \text{with equality if and only if $X \sim \Ncal(0,C)$}.
\end{align*}
Thus if both $X$ and $Y$ have Gaussian densities, then
% {\it their mutual information 
	{\it $I(X;Y)$ is 
minimum if and only if their joint density is Gaussian}, so that for $c(x,y) = ||x-y||^2$,
and $\nu_0,\nu_1$ being Gaussian, a minimizing $\gamma$ in \eqref{equation:OT-entropic} is necessarily a joint Gaussian measure of $\nu_0, \nu_1$. 
When $C$ is a covariance operator on $\H$ with $\dim(\H) = \infty$, the quantity $\det(C)$ is no longer well-defined.
However, $I(X;Y) = \KL(\mu_{XY}||\mu_X \otimes \mu_Y)$ is well-defined and finite whenever
$\mu_{XY}$ is absolutely continuous with respect to $\mu_X \otimes \mu_Y$. In the following, we show that
the above Minimum Mutual Information property of joint Gaussian measures generalizes to the infinite-dimensional setting.

\item In \cite{barrio2020entropic}, for $\mu,\nu \in \Pcal(\R^n)$, the following quantity is studied
\begin{align}
W_{2,\ep}^2(\mu, \nu) = \min_{\gamma \in \Joint(\mu, \nu)}\left\{\bE_{\gamma}||x-y||^2 -\ep H(\gamma)\right\}.
\end{align}
On $\Pcal(\R^n)$, if $\mu,\nu$ have positive densities, then $W^2_{2,\ep}(\mu,\nu)$ and $\OT^{\ep}_{d^2}(\mu, \nu)$ differ by a constant, with 
the minimizing joint measure $\gamma$ being the same. However, by the above discussion,
$W_{2,\ep}(\mu,\nu)$ is generally {\it not} well-defined on $\Pcal(\H)$
when $\dim(\H) = \infty$, in particular in the Gaussian setting. The same discussion applies to the formulations
studied in \cite{bigot19} and \cite{kum2020penalization}.

\item If $A$ is a strictly {\color{black}positive}, compact operator on $\H$, 
%such as a covariance operator, 
then
$A^{-1}$ is unbounded when $\dim(\H) = \infty$. Thus finite-dimensional methods that utilize matrix inversion extensively, e.g. in \cite{Mallasto2020entropyregularized,Janati2020entropicOT,barrio2020entropic}, are not applicable when $\dim(\H) = \infty$.
Instead, we fully exploit the regularization aspect of problem \eqref{equation:OT-entropic}
and invert operators of the form $\gamma I + A > 0$, 
thus our proofs fully resolve this issue and are valid in the general setting when $A$ can be singular.

\item The identity operator $I$ is not trace class when $\dim(\H) = \infty$.
This leads to the breakdown in the entropic barycenter problem (Theorem \ref{theorem:entropic-barycenter-Gaussian}) and has consequences
for the analysis of the existence of solutions of the barycenter equations
(detail given in Section \ref{section:compare-barycenter}). 

%\item On $\R^n$, computations are greatly facilitated when the reference measure is the Lebesgue measure. 
%This is no longer possible on $\H$ with $\dim(\H) = \infty$. Instead, we exploit directly results on 
%Gaussian measures on $\H$.

\end{enumerate}

\section{Main Results}
\label{section:main-results}

We first state the following generalization of the Maximum Entropy of Gaussian densities in $\R^n$.
To the best of our knowledge, this property has not been explicitly and rigorously presented in the literature
in the infinite-dimensional setting. 
It states that among all joint measures, with the same covariance operators, of two Gaussian measures $\mu_X,\mu_Y$ on two separable Hilbert spaces $\H_1,\H_2$, the ones with the 
minimum Mutual Information are precisely the joint Gaussian measures on $\H_1\times \H_2$. In the following, $\Gauss(\H)$ denotes the set of all Gaussian measures on $\H$ and $\Gauss(\mu_X, \mu_Y)$ denotes the
set of joint Gaussian measures having marginals $\mu_X$ and $\mu_Y$.
% as marginal measures.
\begin{theorem}
	[\textbf{Minimum Mutual Information of Joint Gaussian Measures}]
	\label{theorem:minimum-Mutual-Info-Gaussian}
	Let $\H_1,\H_2$ be two separable Hilbert spaces.
	Let $\mu_X = \Ncal(m_X, C_X)\in \Gauss(\H_1)$, $\mu_Y = \Ncal(m_Y, C_Y)\in \Gauss(\H_2)$,
	% be two Gaussian measures on $\H$,
	$\ker(C_X) = \ker(C_Y) =\{0\}$.
	Let $\gamma \in \ADM(\mu_X, \mu_Y), \gamma_0 \in \Gauss(\mu_X, \mu_Y)$,
	% be such that $\gamma$ is $\sim \mu_X \otimes \mu_Y$,
	$\gamma_0$ is equivalent to $\mu_X \otimes \mu_Y$. Assume that $\gamma$ and $\gamma_0$ have the same covariance operator $\Gamma$ and that $\mu_X\otimes \mu_Y$ has covariance operator $\Gamma_0$. 
	Then
	\begin{align}
	\label{equation:Mutual-Info-Gaussian}
	\KL(\gamma||\mu_X \otimes \mu_Y) \geq \KL(\gamma_0 ||\mu_X \otimes \mu_Y) = -\frac{1}{2}\log\det(I-V^{*}V).
	\end{align}
	Equality happens if and only if $\gamma = \gamma_0$.
	Here $V$ is the unique bounded linear operator satisfying $V \in \HS(\H_2,\H_1)$, $||V||< 1$, 
	such that $\Gamma = \Gamma_0^{1/2}\begin{pmatrix}I & V \\ V^{*} & I\end{pmatrix}\Gamma_0^{1/2}$.
\end{theorem}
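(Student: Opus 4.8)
The plan is to reduce the infinite-dimensional statement to the classical finite-dimensional Maximum Entropy property of Gaussian densities via a decomposition of the relative entropy. If $\gamma$ is not absolutely continuous with respect to $\mu_X \otimes \mu_Y$, then $\KL(\gamma || \mu_X \otimes \mu_Y) = +\infty$ and there is nothing to prove, so assume $\gamma \ll \mu_X \otimes \mu_Y$. Since $\gamma_0$ is equivalent to $\mu_X \otimes \mu_Y$ by hypothesis, we also have $\gamma \ll \gamma_0$, and the chain rule for Radon--Nikodym derivatives gives $\frac{d\gamma}{d(\mu_X \otimes \mu_Y)} = \frac{d\gamma}{d\gamma_0}\cdot\frac{d\gamma_0}{d(\mu_X \otimes \mu_Y)}$ $\gamma$-almost everywhere. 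Taking logarithms and integrating against $\gamma$ yields $\KL(\gamma || \mu_X \otimes \mu_Y) = \KL(\gamma || \gamma_0) + \int \log\bigl(d\gamma_0 / d(\mu_X \otimes \mu_Y)\bigr)\, d\gamma$. The term $\KL(\gamma || \gamma_0)$ is nonnegative and vanishes exactly when $\gamma = \gamma_0$, so the argument reduces to showing that the remaining integral is the same for every element of $\ADM(\mu_X, \mu_Y)$ with covariance $\Gamma$, i.e. that it equals $\int \log\bigl(d\gamma_0 / d(\mu_X \otimes \mu_Y)\bigr)\, d\gamma_0 = \KL(\gamma_0 || \mu_X \otimes \mu_Y)$.

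To set this up I would first pin down $V$. Both $\gamma_0$ and $\mu_X \otimes \mu_Y$ have mean $m = (m_X, m_Y)$ and covariances $\Gamma$ and $\Gamma_0 = \mathrm{diag}(C_X, C_Y)$; since they are equivalent, the Feldman--H\'ajek theorem forces $T := \Gamma_0^{-1/2}\Gamma\Gamma_0^{-1/2}$ to satisfy $T - I$ Hilbert--Schmidt with $T$ strictly positive and boundedly invertible, while the marginal constraints force the diagonal blocks of $\Gamma$ (hence of $T$) to be $C_X, C_Y$ (resp. $I$). This gives $T = \begin{pmatrix} I & V \\ V^* & I\end{pmatrix}$ for some $V \in \HS(\H_2, \H_1)$ with $||V|| < 1$, equivalently $\Gamma = \Gamma_0^{1/2}\begin{pmatrix} I & V \\ V^* & I\end{pmatrix}\Gamma_0^{1/2}$; uniqueness of $V$ follows because $\Gamma_0^{1/2}$ is injective ($\ker C_X = \ker C_Y = \{0\}$), so the bounded operator $\begin{pmatrix} I & V \\ V^* & I\end{pmatrix}$ is determined on the dense range of $\Gamma_0^{1/2}$. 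The explicit Radon--Nikodym formula for equivalent Gaussian measures then shows that $z \mapsto \log\bigl(d\gamma_0 / d(\mu_X \otimes \mu_Y)\bigr)(z)$ is, up to an additive constant, a measurable quadratic form in $z - m$, whose integral against any probability measure depends only on that measure's mean and covariance; since $\gamma$ and $\gamma_0$ share mean $m$ and covariance $\Gamma$, the two integrals coincide. Combining with the decomposition, $\KL(\gamma || \mu_X \otimes \mu_Y) = \KL(\gamma || \gamma_0) + \KL(\gamma_0 || \mu_X \otimes \mu_Y) \geq \KL(\gamma_0 || \mu_X \otimes \mu_Y)$, with equality iff $\KL(\gamma || \gamma_0) = 0$, i.e. iff $\gamma = \gamma_0$.

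It remains to evaluate $\KL(\gamma_0 || \mu_X \otimes \mu_Y)$. I would diagonalize the Hilbert--Schmidt operator $\begin{pmatrix} 0 & V \\ V^* & 0\end{pmatrix}$ using the singular value decomposition $V = \sum_k \sigma_k \la \cdot, f_k \ra e_k$ with $\sigma_k \in (0,1)$ and $\sum_k \sigma_k^2 = ||V||_{\HS}^2 < \infty$: in the coordinates adapted to the two-dimensional subspaces spanned by the pairs $(e_k, f_k)$ — together with the directions on which $V$ acts trivially, where $\gamma_0$ and $\mu_X \otimes \mu_Y$ agree — both measures factor as countable products, so $\KL(\gamma_0 || \mu_X \otimes \mu_Y)$ is the sum of the corresponding two-dimensional relative entropies, each equal to $-\tfrac12 \log(1 - \sigma_k^2)$. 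Hence $\KL(\gamma_0 || \mu_X \otimes \mu_Y) = -\tfrac12 \sum_k \log(1 - \sigma_k^2) = -\tfrac12 \log\det(I - V^* V)$, the Fredholm determinant being well-defined because $V^* V$ is trace class.

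The main obstacle is rigour: the quadratic form appearing in $\log\bigl(d\gamma_0 / d(\mu_X \otimes \mu_Y)\bigr)$ involves an operator that is merely Hilbert--Schmidt, not trace class, so ``expectation of a quadratic form $=$ trace of operator against covariance'' is not literally valid, $\gamma$ is an arbitrary (possibly non-Gaussian) measure for which the log-density need not even be obviously integrable, and the series in the previous paragraph is only conditionally convergent unless handled carefully. To make the argument airtight I would work through finite-dimensional projections instead of integrating the infinite-dimensional log-density: take an increasing, generating sequence of finite-rank orthogonal projections $P_n = P_n^X \oplus P_n^Y$ of $\H_1 \times \H_2$ adapted to the singular vectors of $V$. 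On each level, $\gamma \circ P_n^{-1}$ is a joint distribution with Gaussian marginals $\mu_X \circ (P_n^X)^{-1}$ and $\mu_Y \circ (P_n^Y)^{-1}$ and covariance the compression $P_n \Gamma P_n$, $\gamma_0 \circ P_n^{-1}$ is the Gaussian with the same marginals and covariance, and $(\mu_X \otimes \mu_Y) \circ P_n^{-1}$ is the product of the marginals; the finite-dimensional Maximum Entropy property gives $\KL\bigl(\gamma \circ P_n^{-1} || (\mu_X \otimes \mu_Y) \circ P_n^{-1}\bigr) \geq \KL\bigl(\gamma_0 \circ P_n^{-1} || (\mu_X \otimes \mu_Y) \circ P_n^{-1}\bigr)$, and the finite-dimensional computation evaluates the right-hand side as $-\tfrac12 \sum_{k \leq n} \log(1 - \sigma_k^2)$. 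Letting $n \to \infty$ using the monotone convergence of relative entropy along increasing generating sub-$\sigma$-algebras — and the finite-dimensional decomposition identity on each level to handle the equality case — then delivers the inequality \eqref{equation:Mutual-Info-Gaussian}, its equality characterization, and the closed form in the Hilbert space setting.
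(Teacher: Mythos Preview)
Your high-level strategy matches the paper's exactly: reduce to $\gamma\ll\mu_X\otimes\mu_Y$, use the chain rule to write $\KL(\gamma\|\mu_X\otimes\mu_Y)=\KL(\gamma\|\gamma_0)+\int\log\bigl(d\gamma_0/d(\mu_X\otimes\mu_Y)\bigr)\,d\gamma$, and argue that the last integral depends only on the mean and covariance of $\gamma$, hence equals $\KL(\gamma_0\|\mu_X\otimes\mu_Y)$. Where you diverge is in how this last step is made rigorous. The paper does it \emph{directly in infinite dimensions}: its Theorem~\ref{theorem:log-Radon-Nikodym-integral} computes $\int\log(d\nu/d\mu)\,d\gamma$ for general equivalent Gaussians $\mu,\nu$ and an arbitrary $\gamma\in\Pcal_2$ with prescribed mean and covariance, by expanding the log-density in the white-noise coordinates $W_{\phi_k}$ and exploiting that $\{\tfrac{1}{\sqrt{2}}(W_{\phi_k}^2-1)\}$ is orthonormal in $\Lcal^2(\mu)$ together with the Hilbert--Schmidt condition on $I-A$; the closed form $-\tfrac12\log\det(I-V^{*}V)$ is then read off from the general KL formula between Gaussians (Theorem~\ref{theorem:KL-gaussian}) and the determinant identity of Lemma~\ref{lemma:det2VV}. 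Your route instead pushes everything down to finite dimensions via projections $P_n=P_n^X\oplus P_n^Y$, invokes the classical Maximum Entropy identity $\KL(\gamma_n\|\mu_n)=\KL(\gamma_n\|\gamma_{0,n})+\KL(\gamma_{0,n}\|\mu_n)$ at each level, and passes to the limit using monotone convergence of relative entropy along generating sub-$\sigma$-algebras; the SVD-adapted choice of $P_n$ simultaneously yields the closed form as $-\tfrac12\sum_k\log(1-\sigma_k^2)$. Both arguments are correct. Yours is more elementary and modular (it imports only standard facts: finite-dimensional Max Entropy and KL monotone convergence), and avoids the white-noise machinery entirely; the paper's approach is heavier but yields a strictly stronger intermediate result (Theorem~\ref{theorem:log-Radon-Nikodym-integral}), valid for arbitrary equivalent Gaussians rather than only the product-vs-joint situation, which it reuses elsewhere.
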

The operator $V$ in Theorem \ref{theorem:minimum-Mutual-Info-Gaussian} is defined in Section \ref{section:mutual-info-Gauss}.
 %\ref{theorem:Gaussian-equivalent}. 
 It links the covariance operators $C_X$ and $C_Y$ with the cross-covariance operator $C_{XY}$ via
 the relation 
 $C_{XY} = C_X^{1/2}VC_Y^{1/2}$. In Eq.\eqref{equation:Mutual-Info-Gaussian},
 $\det$ refers to the {\it Fredholm determinant} (see e.g. \cite{Simon:1977}).
 Let $A \in \Tr(\H)$, then the Fredholm determinant of $I+A$ is given by $\det(I+A) =\prod_{j=1}^{\infty}(1+\lambda_j)$,
 where $\{\lambda_j\}_{j\in \Nbb}$ are the eigenvalues of $A$.
 
Theorem \ref{theorem:minimum-Mutual-Info-Gaussian} in turn follows from the following more general result on the KL divergence on Hilbert space. It states in particular that if $\mu$ is a Gaussian measure on $\H$,
then among all probability measures with the same mean and covariance operator, $\KL(\gamma ||\mu)$ is minimum if and only if $\gamma$ is Gaussian. 
 
\begin{theorem}
	\label{theorem:log-Radon-Nikodym-integral}
	Let $\mu = \Ncal(m_1, Q)$, $\ker(Q) = \{0\}$. Let $\nu = \Ncal(m_2, R_{\nu})$ be equivalent to $\mu$. Let $S \in \Sym(\H) \cap \HS(\H)$ be such that $R_{\nu} = Q^{1/2}(I-S)Q^{1/2}$.
	Let $\gamma \in \Pcal_2(\H)$ be absolutely continuous with respect to $\mu$, with mean $m_3$, $m_3-m_1 \in \myIm(Q^{1/2})$, and covariance operator
	$R_{\gamma} = Q^{1/2}AQ^{1/2}$, $A \in \Sym^{+}(\H)$. Assume further that one of the following (non-mutually exclusive) conditions hold
	\begin{enumerate}
		\item $S \in \Tr(\H)$.
		\item $I-A \in \HS(\H)$.
	\end{enumerate}
	Then the following decomposition holds
	\begin{align}
	\KL(\gamma||\mu) = \KL(\gamma ||\nu) 
	%+ \int_{\H}\log\left\{\frac{d\nu}{d\mu}(x)\right\}d\gamma(x)
	%\end{align}
	%\begin{align}
	%&\int_{\H}\log\left\{\frac{d\nu}{d\mu}(x)\right\}d\gamma(x) =
	&-\frac{1}{2}||(I-S)^{-1/2}Q^{-1/2}(m_2 - m_1)||^2
	\nonumber
	\\
	&-\frac{1}{2}[\la S(I-S)^{-1}Q^{-1/2}(m_3-m_1), Q^{-1/2}(m_3-m_1)\ra]
	\nonumber
	\\
	& + \la (I-S)^{-1}Q^{-1/2}(m_2-m_1), Q^{-1/2}(m_3-m_1)\ra
	\nonumber
	\\
	& +\frac{1}{2}\trace[S(I-(I-S)^{-1}A)]- \frac{1}{2}\log\dettwo(I-S).
	%\nonumber
	\label{equation:log-RN-integral-1} 
	\end{align}
	In particular, for $m_3 = m_2$ and $A = I-S$, i.e. $R_{\gamma} = R_{\nu}$,
	\begin{align}
	\KL(\gamma||\mu) &= \KL(\gamma||\nu) + \frac{1}{2}||Q^{-1/2}(m_2-m_1)||^2 -\frac{1}{2}\log\dettwo(I-S)
	\\
	& = \KL(\gamma||\nu) + \KL(\nu ||\mu).
	\end{align}
	In this case $\KL(\gamma||\mu) \geq \KL(\nu||\mu)$, with equality if and only if $\gamma = \nu$,
	i.e. if and only if $\gamma$ is Gaussian.
\end{theorem}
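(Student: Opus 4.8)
The plan is to reduce the whole statement to the chain rule for Radon--Nikodym derivatives together with one explicit computation. Since $\gamma\ll\mu$ and $\nu$ is equivalent to $\mu$ (hence $\gamma\ll\nu$), we have $\frac{d\gamma}{d\mu}=\frac{d\gamma}{d\nu}\,\frac{d\nu}{d\mu}$ $\gamma$-almost surely, so that
\begin{equation*}
\KL(\gamma\|\mu)=\KL(\gamma\|\nu)+\int_{\H}\log\frac{d\nu}{d\mu}\,d\gamma .
\end{equation*}
Thus the theorem will follow once $\int_{\H}\log\frac{d\nu}{d\mu}\,d\gamma$ is evaluated and identified with the sum of the five remaining terms in \eqref{equation:log-RN-integral-1}.

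\emph{Step 1: the density.} Because $\mu=\Ncal(m_1,Q)$ and $\nu=\Ncal(m_2,R_\nu)$ are equivalent Gaussians with $R_\nu=Q^{1/2}(I-S)Q^{1/2}$, $S\in\Sym(\H)\cap\HS(\H)$ and $I-S$ positive definite (the latter forced by equivalence), the Feldman--H\'ajek theorem gives, with $\xi:=Q^{-1/2}(x-m_1)$ and $W:=S(I-S)^{-1}=(I-S)^{-1}S\in\Sym(\H)\cap\HS(\H)$,
\begin{equation*}
\log\frac{d\nu}{d\mu}(x)=-\tfrac12\,\mathcal{Q}_W(\xi)+\langle (I-S)^{-1}Q^{-1/2}(m_2-m_1),\,\xi\rangle+c,
\end{equation*}
where $\mathcal{Q}_W$ is the renormalized quadratic form associated with $W$ --- the $\mu$-a.s.\ and $L^1(\mu)$ limit of its finite-rank truncations, well defined precisely because $W$ is Hilbert--Schmidt --- the vectors $Q^{-1/2}(m_2-m_1)$ and $Q^{-1/2}(m_3-m_1)$ are legitimate because $m_2-m_1,\,m_3-m_1\in\myIm(Q^{1/2})$, and $c=-\tfrac12\log\dettwo(I-S)-\tfrac12\Tr\!\big(S^2(I-S)^{-1}\big)-\tfrac12\|(I-S)^{-1/2}Q^{-1/2}(m_2-m_1)\|^2$. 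I would first record this identity as a preliminary lemma, proved by the standard finite-dimensional approximation in the eigenbasis of $Q$; this argument also handles the singular case, since only operators of the form $\gamma I+A>0$ are ever inverted.

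\emph{Step 2: integration against $\gamma$, and conclusion.} Although $\gamma$ need not be Gaussian, the $\gamma$-integral of a renormalized quadratic form and of a linear form depends only on the first two moments of $\gamma$, which are prescribed: computed coordinatewise in the eigenbasis of $Q$, under $\gamma$ the vector $\xi$ has mean $v:=Q^{-1/2}(m_3-m_1)$ and covariance $A$ (because $R_\gamma=Q^{1/2}AQ^{1/2}$). Working with the finite-rank truncations and passing to the limit yields $\int_{\H}\mathcal{Q}_W(\xi)\,d\gamma=\Tr(W(A-I))+\langle Wv,v\rangle$ and $\int_{\H}\langle (I-S)^{-1}Q^{-1/2}(m_2-m_1),\xi\rangle\,d\gamma=\langle (I-S)^{-1}Q^{-1/2}(m_2-m_1),v\rangle$; here $\Tr(W(A-I))$ is finite because $W\in\HS(\H)$ makes $W(A-I)$ trace class when $I-A\in\HS(\H)$, while $S\in\Tr(\H)$ makes $W\in\Tr(\H)$ and hence $W(A-I)$ trace class since $A-I$ is bounded. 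Substituting into the chain rule and using $W(I-S)=S$ together with cyclicity of the trace to regroup the constants --- in particular $-\tfrac12\Tr(W(A-I))-\tfrac12\Tr(S^2(I-S)^{-1})=\tfrac12\Tr\!\big(S(I-(I-S)^{-1}A)\big)$, which is genuinely trace class (write $(I-S)^{-1}A=A+(I-S)^{-1}SA$, so $S(I-(I-S)^{-1}A)=S(I-A)-S(I-S)^{-1}SA$ with the first term trace class by hypothesis and the second a product of Hilbert--Schmidt operators) --- produces exactly \eqref{equation:log-RN-integral-1}. For the special case $m_3=m_2$, $A=I-S$ one has $(I-S)^{-1}A=I$, so the trace term vanishes and, with $u:=Q^{-1/2}(m_2-m_1)$, the three mean terms collapse via $-\tfrac12\langle(I-S)^{-1}u,u\rangle-\tfrac12\langle S(I-S)^{-1}u,u\rangle+\langle(I-S)^{-1}u,u\rangle=\tfrac12\|u\|^2$ to $\tfrac12\|Q^{-1/2}(m_2-m_1)\|^2$; recognizing $\tfrac12\|Q^{-1/2}(m_2-m_1)\|^2-\tfrac12\log\dettwo(I-S)$ as $\KL(\nu\|\mu)$ (the instance $\gamma=\nu$, or the classical formula for the relative entropy of equivalent Gaussians) gives $\KL(\gamma\|\mu)=\KL(\gamma\|\nu)+\KL(\nu\|\mu)$. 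Since $\KL(\gamma\|\nu)\ge 0$ with equality iff $\gamma=\nu$, this yields $\KL(\gamma\|\mu)\ge\KL(\nu\|\mu)$, with equality iff $\gamma=\nu$, that is, iff $\gamma$ is Gaussian.

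The step I expect to be hardest is the rigorous execution of Step 2: showing that $\mathcal{Q}_W$ is $\gamma$-integrable (it lies in every $L^p(\mu)$ by hypercontractivity of the Gaussian second chaos, but $\gamma$ is only absolutely continuous with respect to $\mu$, not comparable to it) and justifying the interchange of limit and integral for the finite-rank truncations against the non-Gaussian $\gamma$. This is exactly where the hypotheses $\gamma\in\Pcal_2(\H)$ and (1)/(2) enter --- to control the second-moment and trace-class contributions uniformly along the truncations --- and one must be careful to collect the various $\Tr(\cdot)$ terms into genuinely trace-class combinations before passing to the limit, since none of $\Tr(S)$, $\Tr(W)$, $\Tr(WA)$ is individually finite when $S$ is merely Hilbert--Schmidt.
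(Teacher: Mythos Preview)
Your proposal is correct and follows essentially the same route as the paper: chain rule, the explicit Feldman--H\'ajek density (the paper writes it via the white noise mapping $W_{\phi_k}$ in the eigenbasis of $S$ rather than of $Q$, but this is the same object as your $\mathcal{Q}_W$), then integration of the quadratic and linear pieces against $\gamma$ using only its first two moments, with case~(1) handled by Tonelli after splitting into positive and negative parts. For the hard step you flag in case~(2), the paper's device is precisely what you are reaching for: it expands the centered quadratic part in the orthonormal sequence $\psi_k=\tfrac{1}{\sqrt{2}}(W_{\phi_k}^2-1)$ in $L^2(\H,\mu)$, verifies that $b_k:=\int_{\H}\psi_k\,d\gamma=\tfrac{1}{\sqrt{2}}\big(\langle(A-I)\phi_k,\phi_k\rangle+|\langle Q^{-1/2}(m_3-m_1),\phi_k\rangle|^2\big)$ lies in $\ell^2$ exactly because $I-A\in\HS(\H)$, and then invokes a short lemma to the effect that $\int_{\H}\big(\sum_k a_k\psi_k\big)\,d\gamma=\sum_k a_kb_k$ whenever $(a_k),(b_k)\in\ell^2$ and the $\psi_k$ are orthonormal in $L^2(\mu)$---this is the rigorous replacement for your ``pass to the limit along finite-rank truncations.''
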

\begin{comment}
\begin{theorem}
	\label{theorem:log-Radon-Nikodym-integral}
	Let $\mu = \Ncal(m, Q)$, $\nu = \Ncal(m, R)$, $\ker(Q) =\ker(R) = \{0\}$, be two equivalent Gaussian measures on $\H$.
	%, with $\mu \sim \nu$. 	
	Let $S \in \Sym(\H) \cap \HS(\H)$ be such that $R = Q^{1/2}(I-S)Q^{1/2}$.
	Let $\gamma \in \Pcal_2(\H)$, $\gamma$ is absolutely continuous with respect to $\nu$, 
	%$\gamma \sim \nu$, 
	with mean $m$ and covariance operator
	$R$. Assume further that one of the following (non-mutually exclusive) conditions hold
	\begin{enumerate}
		\item $S \in \Tr(\H)$.
		\item $\frac{d\gamma}{d\nu} \in \Lcal^2(\H, \nu)$ or $\frac{d\gamma}{d\mu} \in \Lcal^2(\H, \mu)$.
		\item The eigenvalues of $S$ are $\{\pm\alpha_k\}_{k\in \Nbb}$, $(\alpha_k)_{k \in \Nbb} \in \ell^2$, $\alpha_k \geq 0 \forall k \in \Nbb$.
	\end{enumerate}
	Then
	\begin{align}
	\label{equation:log-Radon-Nikodym-integral}
	\int_{\H}\log\left\{\frac{d\nu}{d\mu}(x)\right\}d\gamma(x) = - \frac{1}{2}\log\dettwo(I-S) = \KL(\nu ||\mu).
	\end{align}
\end{theorem}
\end{comment}
In Theorem \ref{theorem:log-Radon-Nikodym-integral},
%Eq.\eqref{equation:log-Radon-Nikodym-integral},
$\dettwo$ refers to the {\it Hilbert-Carleman determinant} (see e.g. \cite{Simon:1977}). For $A \in \HS(\H)$, 
the Hilbert-Carleman determinant of $I+A$ is defined by $\dettwo(I+A) = \det[(I+A)\exp(-A)]$, with $\det$ being the Fredholm determinant.
The different conditions in Theorem \ref{theorem:log-Radon-Nikodym-integral} can be satisfied simultaneously.
In particular, they are both automatically satisfied in the case
$\dim(\H) < \infty$.

{\bf Entropic $2$-Wasserstein distance between Gaussian measures}.
In the following, 
let $\mu_i = \Ncal(m_i, C_i)$, $i=0,1$, be two Gaussian measures on $\H$, 
where $m_i \in \H$, $C_i \in \Sym^{+}(\H) \cap \Tr(\H)$.
Consider the cost function $c(x,y) = ||x-y||^2$ on $\H \times \H$ and the corresponding optimization problem
\begin{align}
\label{equation:OT-ep-square}
\OT^{\ep}_{d^2}(\mu_0, \mu_1) = \min_{\gamma \in \ADM(\mu_0, \mu_1)}\bE_{\gamma}||x-y||^2 + \ep \KL(\gamma || \mu_0 \otimes \mu_1).
\end{align}
A direct consequence of Theorem \ref{theorem:minimum-Mutual-Info-Gaussian} is that if $C_0,C_1$ are nonsingular, then
a minimizer of problem \eqref{equation:OT-ep-square} is necessarily a joint Gaussian measure of $\mu_0$ and $\mu_1$. We show that this holds in the general setting, i.e. in {\it both } nonsingular and singular cases.
The following result gives the explicit formula for this minimizer, which is unique for any $\ep > 0$.
It is proved in Section \ref{section:OT-entropic-Gaussian}, using two different methods: (i) by directly solving 
the optimization \eqref{equation:OT-ep-square}, and (ii) by solving the corresponding Schr\"odinger system.
% of problem \eqref{equation:OT-ep-square}.
\begin{theorem}
	[\textbf{Optimal entropic transport plan}]
	\label{theorem:optimal-joint-square-gauss}
	Let $\mu_0=\Ncal(m_0, C_0)$, $\mu_1=\Ncal(m_1,C_1)$. For each fixed $\ep > 0$, problem \eqref{equation:OT-ep-square}
	has a unique minimizer $\gamma^{\ep}$, 
	%This corresponds to the unique minimizing 
	which is the Gaussian measure 
	\begin{align}
	\label{equation:minimizing-Gaussian-measure}
	\gamma^{\ep} &= \Ncal\left(\begin{pmatrix} m_0 \\ m_1 \end{pmatrix},
	\begin{pmatrix} C_0 & C_{XY}
	\\
	C_{XY}^{*} & C_1\end{pmatrix}
	\right),
	\\
	\text{where  } C_{XY} &= \frac{2}{\ep}C_0^{1/2}\left(I+\frac{1}{2}M^{\ep}_{01}\right)^{-1}
	%\left(I + (I + \frac{16}{\ep^2}C_0^{1/2}C_1C_0^{1/2})^{1/2}\right)^{-1}
	C_0^{1/2}C_1.
	\end{align}
	The Radon-Nikodym derivative of $\gamma^{\ep}$ with  respect to $\mu_0 \otimes \mu_1$ is given by
	\begin{align}
	\label{equation:gamma-opt-gauss}
	\frac{d\gamma^{\ep}}{d(\mu_0 \otimes \mu_1)}(x,y) = \alpha^{\ep}(x)\beta^{\ep}(y)\exp\left(-\frac{||x-y||^2}{\ep}\right),
	\end{align}
	where the functions $\alpha^{\ep}:\H \mapto \R$ and $\beta^{\ep}:\H \mapto \R$ take the form
	\begin{align}
	\alpha^{\ep}(x) &= \exp\left(\la x-m_0, A(x-m_0)\ra + \frac{2}{\ep}\la x-m_0, m_0 - m_1\ra + a\right),
	\\
	\beta^{\ep}(y) & = \exp\left(\la y-m_1, B(y-m_1)\ra + \frac{2}{\ep}\la y-m_1, m_1 - m_0\ra + b\right).
	\end{align}
	The constants $a,b \in \R$ and
	the operators $A, B: \H \mapto \H$ are given by
	\begin{equation}
	\begin{aligned}
	%A &= \frac{1}{\epsilon}I - \frac{2}{\epsilon^2}C_1^{1/2}\left[\frac{1}{2}I + \frac{1}{2}\left(I + \frac{16}{\epsilon^2}C_1^{1/2}C_0C_1^{1/2}\right)^{1/2}\right]^{-1}C_1^{1/2},
	A &= \frac{1}{\epsilon}I - \frac{2}{\epsilon^2}C_1^{1/2}\left[I + \frac{1}{2}M^{\ep}_{10}\right]^{-1}C_1^{1/2},
	\\ 
	%B &= \frac{1}{\epsilon}I -  \frac{2}{\epsilon^2}C_0^{1/2}\left[\frac{1}{2}I + \frac{1}{2}\left(I + \frac{16}{\epsilon^2}C_0^{1/2}C_1C_0^{1/2}\right)^{1/2}\right]^{-1}C_0^{1/2},
	B &= \frac{1}{\epsilon}I -  \frac{2}{\epsilon^2}C_0^{1/2}\left[I + \frac{1}{2}M^{\ep}_{01}\right]^{-1}C_0^{1/2},
	\\
	\exp(a+b) &= \exp\left(\frac{||m_0-m_1||^2}{\ep}\right)\sqrt{\det\left(I+ \frac{1}{2}M^{\ep}_{01}\right)}.
	\end{aligned}
	\label{equation:ABab}
	\end{equation}
	Here $\det$ is the Fredholm determinant and
	% the operators 
	$M^{\ep}_{ij}: \H \mapto \H$, 
	%$i,j=0,1$
	are defined by
	\begin{align}
	\label{equation:M-ep}
	M^{\ep}_{ij} = -I + \left(I + \frac{16}{\epsilon^2}C_i^{1/2}C_jC_i^{1/2}\right)^{1/2}, \;\;\; i,j=0,1.
	\end{align}
\end{theorem}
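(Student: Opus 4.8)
\emph{Proof proposal.} The plan is to prove the statement along the two routes indicated, both of which first reduce the infinite-dimensional variational problem \eqref{equation:OT-ep-square} to an optimization over joint Gaussian measures, i.e.\ over the cross-covariance operator $C_{XY}$. Since $\bE_\gamma\|x-y\|^2$ is quadratic in $x-y$, it depends on $\gamma$ only through its mean and covariance operator; the means are pinned down by the marginals $\mu_0,\mu_1$, so for a fixed admissible covariance operator $\Gamma$ the cost term is constant. Hence, by Theorem~\ref{theorem:log-Radon-Nikodym-integral} (equivalently the Minimum Mutual Information property, Theorem~\ref{theorem:minimum-Mutual-Info-Gaussian}), replacing any admissible $\gamma$ by the joint Gaussian $\gamma_0$ with the same first two moments does not increase the objective, and strictly decreases it unless $\gamma$ is already Gaussian. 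In the singular case one first notes that every $\gamma\in\ADM(\mu_0,\mu_1)$ is concentrated on $(m_0+\overline{\myIm(C_0)})\times(m_1+\overline{\myIm(C_1)})$ and passes to these subspaces, where $C_0,C_1$ become injective and the nonsingular analysis applies. It then suffices to minimize over joint Gaussians, which by Theorem~\ref{theorem:minimum-Mutual-Info-Gaussian} are parametrized by $V\in\HS(\H)$ with $\|V\|<1$ and $C_{XY}=C_0^{1/2}VC_1^{1/2}$, and, using $\bE_\gamma\langle x-m_0,y-m_1\rangle=\Tr(C_{XY})$ together with $\KL(\gamma_0\|\mu_0\otimes\mu_1)=-\tfrac12\log\det(I-V^*V)$, the objective reads
\[
F(V)=\|m_0-m_1\|^2+\Tr(C_0)+\Tr(C_1)-2\Tr\big(C_0^{1/2}VC_1^{1/2}\big)-\tfrac{\epsilon}{2}\log\det(I-V^*V).
\]

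\emph{Route (i).} Next I would compute the Fr\'echet derivative of $F$ on $\HS(\H)$: the linear term contributes $-2C_0^{1/2}C_1^{1/2}$, while differentiating $-\tfrac{\epsilon}{2}\log\det(I-V^*V)=-\tfrac{\epsilon}{2}\Tr\log(I-V^*V)$ yields $\epsilon\,V(I-V^*V)^{-1}$. Setting the gradient to zero gives the critical-point equation $V(I-V^*V)^{-1}=\tfrac{2}{\epsilon}C_0^{1/2}C_1^{1/2}=:W$. Using the functional identity $f(WW^*)W=Wf(W^*W)$ and diagonalizing $V\mapsto V(I-V^*V)^{-1}$ via the singular value decomposition, one inverts this relation to obtain the unique solution $V=\tfrac{2}{\epsilon}\big(I+\tfrac12 M^\epsilon_{01}\big)^{-1}C_0^{1/2}C_1^{1/2}$, whence $C_{XY}=C_0^{1/2}VC_1^{1/2}=\tfrac{2}{\epsilon}C_0^{1/2}\big(I+\tfrac12 M^\epsilon_{01}\big)^{-1}C_0^{1/2}C_1$, matching \eqref{equation:minimizing-Gaussian-measure}; one checks directly that the singular values of this $V$ are $<1$ (so the block operator in \eqref{equation:minimizing-Gaussian-measure} is a genuine covariance operator) and that $F(V)\to+\infty$ as any singular value of $V$ approaches $1$, so the critical point is the global minimizer over Gaussians. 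Uniqueness of the minimizer of \eqref{equation:OT-ep-square} itself is automatic, since $\gamma\mapsto\bE_\gamma\|x-y\|^2+\epsilon\KL(\gamma\|\mu_0\otimes\mu_1)$ is strictly convex on $\ADM(\mu_0,\mu_1)$.

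\emph{Route (ii).} Here I would use that any minimizer $\gamma^\epsilon$ satisfies the first-order condition $\|x-y\|^2+\epsilon\log\tfrac{d\gamma^\epsilon}{d(\mu_0\otimes\mu_1)}(x,y)=\phi(x)+\psi(y)$ with $\phi\in L^1(\mu_0)$, $\psi\in L^1(\mu_1)$, i.e.\ the density has the form $\alpha^\epsilon(x)\beta^\epsilon(y)e^{-\|x-y\|^2/\epsilon}$ subject to the two marginal (Schr\"odinger) equations $\int_\H\alpha^\epsilon\beta^\epsilon e^{-\|x-y\|^2/\epsilon}\,d\mu_1(y)=1$ $\mu_0$-a.e.\ and symmetrically. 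Inserting the Gaussian ansatz for $\alpha^\epsilon,\beta^\epsilon$ and applying the Gaussian integral formula on $\H$, $\int_\H\exp(\langle y-m_1,T(y-m_1)\rangle+\langle \ell,y-m_1\rangle)\,d\mu_1(y)=\det(I-2C_1^{1/2}TC_1^{1/2})^{-1/2}\exp(\tfrac12\langle (I-2C_1^{1/2}TC_1^{1/2})^{-1}C_1^{1/2}\ell,C_1^{1/2}\ell\rangle)$ (valid when $I-2C_1^{1/2}TC_1^{1/2}$ is positive definite and $C_1^{1/2}TC_1^{1/2}\in\Tr(\H)$), each marginal equation becomes an identity of quadratic forms; matching quadratic, linear and constant parts yields a closed system whose solution is \eqref{equation:ABab}. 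One then verifies directly that the Gaussian measure $\gamma^\epsilon$ with this density has covariance operator exactly \eqref{equation:minimizing-Gaussian-measure} — the equivalence $\gamma^\epsilon\sim\mu_0\otimes\mu_1$ following from the Feldman--H\'ajek theorem precisely because $V\in\HS(\H)$ — and that it is optimal: for any admissible $\gamma'$, the first-order identity and the marginal constraints give $\bE_{\gamma'}\|x-y\|^2+\epsilon\KL(\gamma'\|\mu_0\otimes\mu_1)=\bE_{\gamma^\epsilon}\|x-y\|^2+\epsilon\KL(\gamma^\epsilon\|\mu_0\otimes\mu_1)+\epsilon\KL(\gamma'\|\gamma^\epsilon)$, re-proving both optimality and uniqueness.

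\emph{Main obstacle.} The delicate points are all infinite-dimensional. Since $C_i^{-1}$ is unbounded when $\dim\H=\infty$ (and undefined in the singular case), every inverse that appears must be of the regularized form $(\alpha I+A)^{-1}$, $\alpha>0$, and one must track carefully which operators lie in $\Tr(\H)$ or $\HS(\H)$ so that the Fredholm determinants and the Gaussian integral formula are meaningful — e.g.\ $C_0^{1/2}(I+\tfrac12 M^\epsilon_{01})^{-1}C_0^{1/2}C_1\in\Tr(\H)$ as a product of Hilbert--Schmidt with bounded operators. Beyond this bookkeeping, the main technical steps are: justifying the Fr\'echet differentiation of $V\mapsto\log\det(I-V^*V)$ on $\HS(\H)$ and the exchange of derivative with the infinite product defining the determinant; carrying out the reduction to the nonsingular case on $\overline{\myIm(C_i)}$ and checking that all stated formulas extend correctly there (this is the ingredient that is new relative to the finite-dimensional references); and computing $\tfrac{d\gamma^\epsilon}{d(\mu_0\otimes\mu_1)}$ through the Cameron--Martin/Feldman--H\'ajek description of equivalent Gaussians while verifying $\phi\in L^1(\mu_0)$, $\psi\in L^1(\mu_1)$ so that the convexity argument in Route (ii) is legitimate. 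I expect the differentiability justification and the trace-class bookkeeping to be the most laborious part.
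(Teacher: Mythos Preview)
Your proposal is correct and mirrors the paper's two-route structure. Route~(i) is carried out in the paper for the nonsingular case essentially as you outline, with two tactical differences: the paper guesses the critical $V$ and verifies it via the commutation identity $C^{1/2}X^{1/2}\bigl(I+(I+a^2X^{1/2}CX^{1/2})^{1/2}\bigr)=\bigl(I+(I+a^2C^{1/2}XC^{1/2})^{1/2}\bigr)C^{1/2}X^{1/2}$ (Lemma~\ref{lemma:adjoint-switch-CX}) rather than inverting $V\mapsto V(I-V^*V)^{-1}$ through the SVD, and it establishes uniqueness of the critical point via a strict-concavity lemma for $V\mapsto\log\det(I-V^*V)$ on $\{V\in\Tr(\H):\|V\|<1\}$ rather than by a boundary blow-up argument. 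Route~(ii) in the paper is exactly your Gaussian-ansatz-in-the-Schr\"odinger-system computation.

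Where your plan diverges is the handling of the singular case. The paper does \emph{not} restrict to $\overline{\myIm(C_i)}$ to reduce to the injective setting; instead it observes that the Schr\"odinger-system calculation of Route~(ii) goes through verbatim for arbitrary $C_0,C_1\in\Sym^+(\H)\cap\Tr(\H)$, since every inverse that appears is of the regularized form $\bigl(I+\tfrac12 M^\epsilon_{ij}\bigr)^{-1}$ and no $C_i^{-1}$ is needed. Your subspace-restriction idea is a legitimate alternative, but it introduces the minor nuisance that the two marginals then live on different Hilbert spaces $\overline{\myIm(C_0)}\neq\overline{\myIm(C_1)}$, which you would have to thread through Theorem~\ref{theorem:minimum-Mutual-Info-Gaussian} and the formula for $C_{XY}$; the paper's choice avoids this entirely.
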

\begin{remark} The operator $M^{\ep}_{ij}$ as defined in Eq.\eqref{equation:M-ep} can be rewritten as
	\begin{equation}
	M^{\ep}_{ij} = \frac{16}{\epsilon^2}C_i^{1/2}C_jC_i^{1/2}\left[I + \left(I + \frac{16}{\epsilon^2}C_i^{1/2}C_jC_i^{1/2}\right)^{1/2}\right]^{-1},
	\end{equation}
	from which it follows that $M^{\ep}_{ij} \in \Sym^{+}(\H) \cap \Tr(\H)$.
	Thus for the operator
	\begin{equation}
	I + \frac{1}{2}M^{\ep}_{ij} = \frac{1}{2}I + \frac{1}{2}\left(I + \frac{16}{\epsilon^2}C_i^{1/2}C_jC_i^{1/2}\right)^{1/2},
	\end{equation}
	the Fredholm determinant $\det\left(I + \frac{1}{2}M^{\ep}_{ij}\right)$ is well-defined and positive.
	{\color{black}We note that the constants $a,b \in \R$ in Eq.\eqref{equation:ABab} (therefore the individual functions $\alpha^{\ep}(x)$, $\beta^{\ep}(y)$) are not uniquely specified, only their sum $a+b$ (therefore the product $\alpha^{\ep}(x)\beta^{\ep}(y)$) is uniquely specified, which in turn uniquely determines $\gamma^{\ep}$.}
\end{remark}
{\bf Finite-dimensional case}. For $\H = \R^n$ and $C_0, C_1 \in \Sym^{++}(n)$, the cross-covariance operator $C_{XY}$ in Theorem \ref{theorem:optimal-joint-square-gauss}
%\ref{theorem:entropic-gaussian-direct} 
has the same expression as that given in Theorem 1 in \cite{Janati2020entropicOT}, namely
%\begin{equation}
{\color{black}$C_{XY} = \frac{\ep}{4}[-I + C_0^{1/2}(I+ \frac{16}{\ep^2}C_0^{1/2}C_1C_0^{1/2})C_0^{-1/2}]$}.
%$C_{XY} = \frac{\ep}{4}[-I + C_0^{1/2}(I+ \frac{16}{\ep^2}C_0^{1/2}C_1C0^{1/2})C_0^{-1/2}]$.
%\end{equation}

\begin{theorem}
	[\textbf{Entropic 2-Wasserstein distance between Gaussian measures on Hilbert space}]
	\label{theorem:OT-regularized-Gaussian}
	Let $\mu_0 = \Ncal(m_0, C_0)$ and $\mu_1 = \Ncal(m_1, C_1)$. For each fixed $\ep > 0$,
	\begin{align}
	\OT^{\ep}_{d^2}(\mu_0, \mu_1) &= ||m_0-m_1||^2 + \trace(C_0) + \trace(C_1) - \frac{\ep}{2}\trace(M^{\ep}_{01})
	\nonumber
	\\
	& \quad +\frac{\ep}{2}\log\det\left(I + \frac{1}{2}M^{\ep}_{01}\right).
	\label{equation:gauss-entropic-2-Wasserstein-infinite}
	\end{align}
	
	\begin{remark}
While we use the term {\it entropic distance}, $\OT^{\ep}_{d^2}$ is neither a distance nor a divergence, since
generally $\OT^{\ep}_{d^2}(\mu, \mu) \neq 0$, as noted before.
	\end{remark}

\end{theorem}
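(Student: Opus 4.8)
The plan is to evaluate the objective functional in \eqref{equation:OT-ep-square} at the optimal plan $\gamma^{\ep}$ delivered by Theorem \ref{theorem:optimal-joint-square-gauss}, so that
\[
\OT^{\ep}_{d^2}(\mu_0,\mu_1) = \bE_{\gamma^{\ep}}\|x-y\|^2 + \ep\,\KL(\gamma^{\ep} \,||\, \mu_0\otimes\mu_1),
\]
and then to simplify using the explicit description of $\gamma^{\ep}$, its Radon--Nikodym derivative \eqref{equation:gamma-opt-gauss}, and the operators $A,B$ and constants $a,b$ of \eqref{equation:ABab}. Throughout, $C_0,C_1\in\Tr(\H)$ while $M^{\ep}_{01}\in\Sym^{+}(\H)\cap\Tr(\H)$ by the Remark following Theorem \ref{theorem:optimal-joint-square-gauss}, so the Fredholm determinant $\det\bigl(I+\tfrac12 M^{\ep}_{01}\bigr)$ is well defined and positive.

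First I would compute the transport cost: since $\gamma^{\ep}$ is the joint Gaussian in \eqref{equation:minimizing-Gaussian-measure} with marginals $\mu_0,\mu_1$ and cross-covariance $C_{XY}$, a moment computation gives
\[
\bE_{\gamma^{\ep}}\|x-y\|^2 = \|m_0-m_1\|^2 + \trace(C_0) + \trace(C_1) - 2\trace(C_{XY}),
\]
all terms finite. Next, using \eqref{equation:gamma-opt-gauss},
\[
\KL(\gamma^{\ep} \,||\, \mu_0\otimes\mu_1) = \bE_{\gamma^{\ep}}\Bigl[\log\alpha^{\ep}(x) + \log\beta^{\ep}(y) - \tfrac{1}{\ep}\|x-y\|^2\Bigr].
\]
Since $\log\alpha^{\ep}$ and $\log\beta^{\ep}$ are affine-quadratic functionals of $x$ and $y$ and $\gamma^{\ep}$ is Gaussian, these integrals are finite once $C_0^{1/2}AC_0^{1/2}$ and $C_1^{1/2}BC_1^{1/2}$ are seen to be trace class; the linear parts integrate to $0$ against the Gaussian marginals, so $\bE_{\gamma^{\ep}}[\log\alpha^{\ep}(x) + \log\beta^{\ep}(y)] = \trace(AC_0) + \trace(BC_1) + a + b$, while $\bE_{\gamma^{\ep}}\tfrac1\ep\|x-y\|^2$ is exactly $\tfrac1\ep$ times the transport cost above. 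Substituting into the first display, the four contributions $\|m_0-m_1\|^2$, $\trace(C_0)$, $\trace(C_1)$, $\trace(C_{XY})$ cancel identically, leaving
\[
\OT^{\ep}_{d^2}(\mu_0,\mu_1) = \ep\bigl[\trace(AC_0) + \trace(BC_1) + a + b\bigr].
\]

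Next I would evaluate these three quantities. Writing $K_{ij} = \tfrac{16}{\ep^2}C_i^{1/2}C_jC_i^{1/2}$, so that $I+\tfrac12 M^{\ep}_{ij} = \tfrac12\bigl(I + (I+K_{ij})^{1/2}\bigr)$, the key elementary identity is $\bigl(I+(I+K)^{1/2}\bigr)^{-1}K = (I+K)^{1/2}-I$, valid because all operators involved are functions of $K_{ij}$ and hence commute. Combined with cyclicity of the trace and $C_1^{1/2}C_0C_1^{1/2} = \tfrac{\ep^2}{16}K_{10}$, this gives $\trace(AC_0) = \tfrac1\ep\trace(C_0) - \tfrac14\trace(M^{\ep}_{10})$ and, symmetrically, $\trace(BC_1) = \tfrac1\ep\trace(C_1) - \tfrac14\trace(M^{\ep}_{01})$. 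Since $C_1^{1/2}C_0C_1^{1/2}$ and $C_0^{1/2}C_1C_0^{1/2}$ have the same nonzero spectrum and $t\mapsto\sqrt{1+t}-1$ vanishes at $0$, one has $\trace(M^{\ep}_{10}) = \trace(M^{\ep}_{01})$. Finally \eqref{equation:ABab} gives $a+b = \tfrac1\ep\|m_0-m_1\|^2 + \tfrac12\log\det\bigl(I+\tfrac12 M^{\ep}_{01}\bigr)$. Plugging all of this into the last display and multiplying through by $\ep$ yields exactly \eqref{equation:gauss-entropic-2-Wasserstein-infinite}.

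The main obstacle is not the algebra but the infinite-dimensional bookkeeping: $A$ and $B$ are not themselves trace class (each contains a $\tfrac1\ep I$ summand), so every ``$\trace$'' above must be read as the trace of a product that is trace class, and the Gaussian integrability of the quadratic forms $\log\alpha^{\ep}$, $\log\beta^{\ep}$ under $\gamma^{\ep}$ must be established through the compressed operators $C_i^{1/2}(\cdot)\,C_i^{1/2}$ rather than through $A,B$ directly. From \eqref{equation:ABab} one writes, for instance, $C_0^{1/2}AC_0^{1/2}$ as $\tfrac1\ep C_0$ plus a product of the form (Hilbert--Schmidt)$\,\times\,$(bounded)$\,\times\,$(Hilbert--Schmidt), which is trace class, and similarly for $B$; once these finiteness facts are secured the cancellations are purely formal, exactly as in the finite-dimensional case.
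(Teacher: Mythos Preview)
Your proposal is correct and follows essentially the same route as the paper's proof (given as Corollary \ref{corollary:OT-regularized-Gaussian}): both observe that the transport cost $\bE_{\gamma^{\ep}}\|x-y\|^2$ cancels against the $-\tfrac{1}{\ep}\|x-y\|^2$ term inside the log of the Radon--Nikodym derivative, leaving $\OT^{\ep}_{d^2}(\mu_0,\mu_1)=\ep\bigl[\trace(C_0A)+\trace(C_1B)+a+b\bigr]$, and then evaluate these three pieces via the same operator identity $(I+(I+K)^{1/2})^{-1}K=(I+K)^{1/2}-I=M^{\ep}$. The paper is marginally more streamlined in that it skips the explicit decomposition of $\bE_{\gamma^{\ep}}\|x-y\|^2$ into the four summands (which, as you note yourself, cancel as a block and never need to be separated out), and it invokes Lemma \ref{lemma:integral-quadratic-prob-1} for the quadratic-form integrals rather than arguing Gaussian integrability ad hoc; but the substance and the algebra are identical.
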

{\bf Connection with the entropic Kantorovich duality formulation}.
Following \cite{DMaGer19}, let {\color{black}$(X, d)$} be a Polish space. {\color{black}For a probability measure $\mu$ on $X$}, the class of Entropy-Kantorovich potentials is defined by the set of measurable functions $\varphi$ on $X$ satisfying
\begin{equation}
\Lexp(X,\mu) = \left\lbrace \varphi:X \to [-\infty, \infty] \, : \,
0<\bE_\mu\left[\exp\left(\frac{1}{\epsilon} \varphi\right)\right] < \infty
\right\rbrace.
\end{equation}
The dual Kantorovich functional {\color{black}with the cost function $c(x,y)=d^2(x,y)$ is defined (see \cite{DMaGer19})} to be 
\begin{equation}
\label{equation:kanto-dual}
D(\varphi, \psi) = \bE_{\mu_0}[\varphi] + \bE_{\mu_1}[\psi]
-\ep \left(\bE_{\mu_0 \otimes \mu_1} \left[\exp\left(\frac{(\varphi\oplus \psi)-d^2}{\ep}\right)\right]-1\right),
\end{equation}
where $\left(\varphi \oplus \psi\right)(x,y) = \varphi(x) + \psi(y)$.
For $X = \H$, $c(x,y) = ||x-y||^2$, 
the \emph{entropic Kantorovich dual formulation} of $\OT^\epsilon_{d^2}$ is given by
\cite{DMaGer19,feydy18,genevay17,GigTamBB18,LeoSurvey}
\begin{align}
\label{equation:dual-problem}
\OT^{\ep}_{d^2}(\mu_0, \mu_1) = \sup_{\varphi \in \Lexp(\H, \mu_0), \psi \in \Lexp(\H, \mu_1)}D(\varphi,\psi).
\end{align}
The following shows that in our setting, the supremum in \eqref{equation:dual-problem} is attained.
\begin{corollary}
	\label{corollary:dual-attain}
	Let $\mu_i = \Ncal(m_i, C_i),i=0,1$.
	Let $\varphi^{\ep} = \ep \log{\alpha^{\ep}}$, {\color{black}$\psi^{\ep} = \ep\log{\beta^{\ep}}$},
	%$\psi = \ep\log{\beta^{\ep}}$, 
	with $\alpha^{\ep}$, $\beta^{\ep}$ as defined 
	in Theorem \ref{theorem:optimal-joint-square-gauss}. Then $\varphi^{\ep} \in \Lexp(\H, \mu_0)$, $\psi^{\ep} \in \Lexp(\H, \mu_1)$, and 
	\begin{align}
	\OT^{\ep}_{d^2}(\mu_0, \mu_1) = D(\varphi^{\ep}, \psi^{\ep}).
	\end{align}
\end{corollary}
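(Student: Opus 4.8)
The plan is to read the candidate potentials directly off the explicit optimal plan furnished by Theorem~\ref{theorem:optimal-joint-square-gauss}, and then to exploit two structural facts about $\gamma^{\ep}$: it is a probability measure with marginals $\mu_0,\mu_1$, and it is the minimizer of \eqref{equation:OT-ep-square}. By definition $\varphi^{\ep}=\ep\log\alpha^{\ep}$ and $\psi^{\ep}=\ep\log\beta^{\ep}$ are quadratic functions on $\H$, and \eqref{equation:gamma-opt-gauss} rewrites as $\frac{d\gamma^{\ep}}{d(\mu_0\otimes\mu_1)}(x,y)=\exp\!\big(\tfrac1\ep\big(\varphi^{\ep}(x)+\psi^{\ep}(y)-\|x-y\|^2\big)\big)=\exp\!\big(\tfrac1\ep\big((\varphi^{\ep}\oplus\psi^{\ep})(x,y)-d^2(x,y)\big)\big)$. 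Before using this I would record that the relevant integrals are finite: since $C_0,C_1\in\Tr(\H)$ and, by \eqref{equation:ABab}, $A-\tfrac1\ep I$ and $B-\tfrac1\ep I$ are bounded multiples of trace class operators, the operators $AC_0$ and $BC_1$ are trace class, which yields finiteness of $\bE_{\mu_0}[\varphi^{\ep}]$ and $\bE_{\mu_1}[\psi^{\ep}]$; and $\bE_{\gamma^{\ep}}[\|x-y\|^2]<\infty$ because $\gamma^{\ep}$ is Gaussian with trace class covariance.

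Next I would derive two identities from the displayed Radon--Nikodym formula. Integrating it against $\mu_0\otimes\mu_1$ and using $\gamma^{\ep}(\H\times\H)=1$ gives $\bE_{\mu_0\otimes\mu_1}\!\big[\exp\!\big(\tfrac{(\varphi^{\ep}\oplus\psi^{\ep})-d^2}{\ep}\big)\big]=1$, so the last term of $D(\varphi^{\ep},\psi^{\ep})$ in \eqref{equation:kanto-dual} equals $-\ep(1-1)=0$. Taking the logarithm of the same formula and integrating against $\gamma^{\ep}$ gives $\KL(\gamma^{\ep}\,\|\,\mu_0\otimes\mu_1)=\tfrac1\ep\int\big(\varphi^{\ep}(x)+\psi^{\ep}(y)-\|x-y\|^2\big)\,d\gamma^{\ep}(x,y)$; since the marginals of $\gamma^{\ep}$ are $\mu_0$ and $\mu_1$, the first two terms contribute $\bE_{\mu_0}[\varphi^{\ep}]$ and $\bE_{\mu_1}[\psi^{\ep}]$, whence $\ep\,\KL(\gamma^{\ep}\,\|\,\mu_0\otimes\mu_1)=\bE_{\mu_0}[\varphi^{\ep}]+\bE_{\mu_1}[\psi^{\ep}]-\bE_{\gamma^{\ep}}[\|x-y\|^2]$. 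Combining the two, $D(\varphi^{\ep},\psi^{\ep})=\bE_{\mu_0}[\varphi^{\ep}]+\bE_{\mu_1}[\psi^{\ep}]=\bE_{\gamma^{\ep}}[\|x-y\|^2]+\ep\,\KL(\gamma^{\ep}\,\|\,\mu_0\otimes\mu_1)$, and by Theorem~\ref{theorem:optimal-joint-square-gauss} the right-hand side is precisely $\OT^{\ep}_{d^2}(\mu_0,\mu_1)$, since $\gamma^{\ep}$ minimizes \eqref{equation:OT-ep-square}. Finally, the entropic Kantorovich duality \eqref{equation:dual-problem} reads $\OT^{\ep}_{d^2}(\mu_0,\mu_1)=\sup_{\varphi\in\Lexp(\H,\mu_0),\,\psi\in\Lexp(\H,\mu_1)}D(\varphi,\psi)$, so once $(\varphi^{\ep},\psi^{\ep})$ is known to lie in this domain, the computation just made shows the supremum is attained there, which is the claim.

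The one step requiring genuine care, and the main obstacle, is the membership $\varphi^{\ep}\in\Lexp(\H,\mu_0)$, $\psi^{\ep}\in\Lexp(\H,\mu_1)$, i.e. $0<\bE_{\mu_0}[\exp(\varphi^{\ep}/\ep)]=\bE_{\mu_0}[\alpha^{\ep}]<\infty$ and its symmetric counterpart. Positivity is immediate since $\alpha^{\ep},\beta^{\ep}>0$; finiteness is the convergence of a Gaussian integral of the exponential of a quadratic form, which reduces to the operator inequalities $C_0^{1/2}AC_0^{1/2}<\tfrac12 I$ and $C_1^{1/2}BC_1^{1/2}<\tfrac12 I$, to be checked from the closed forms of $A$ and $B$ in \eqref{equation:ABab}. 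Everything else is an elementary consequence of $\gamma^{\ep}$ being a probability measure with the prescribed marginals that minimizes \eqref{equation:OT-ep-square}, together with the explicit Radon--Nikodym derivative supplied by Theorem~\ref{theorem:optimal-joint-square-gauss}.
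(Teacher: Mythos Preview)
Your proposal is correct and follows essentially the same route as the paper. Both arguments show the penalty term of $D$ vanishes by integrating the Radon--Nikodym density \eqref{equation:gamma-opt-gauss} against $\mu_0\otimes\mu_1$ to get $1$, and both conclude $D(\varphi^{\ep},\psi^{\ep})=\ep\,\bE_{\mu_0}[\log\alpha^{\ep}]+\ep\,\bE_{\mu_1}[\log\beta^{\ep}]=\OT^{\ep}_{d^2}(\mu_0,\mu_1)$; the paper obtains the last equality by pointing to the computation already done in the proof of Corollary~\ref{corollary:OT-regularized-Gaussian}, whereas you rederive it via the $\KL$ identity and optimality of $\gamma^{\ep}$, which is an equivalent bookkeeping. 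For the $\Lexp$ membership, the paper simply evaluates $\bE_{\mu_0}[\alpha^{\ep}]$ in closed form using the Gaussian integral formula (Corollary~\ref{corollary:gaussian-integral-2}), which is exactly the verification of your operator inequality $2C_0^{1/2}AC_0^{1/2}<I$ packaged as a quotable result.
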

We remark that in the case the cost function $c(x,y)$ is {\it bounded}, much more can be said about the duality formulation,
see \cite{DMaGer19}.
\begin{theorem}
	[\textbf{Convexity}]
	\label{theorem:entropic-OT-convexity}	Let $\mu_0 = \Ncal(m_0, C_0)$, $\mu_1 = \Ncal(m, X)$, then
	$\OT^{\ep}_{d^2}(\mu_0, \mu_1)$ is convex in each argument. In particular, let $C_0$ be fixed,
	then the function $X\mapto F_E(X) = \OT^{\ep}_{d^2}(\Ncal(0,C_0), \Ncal(0,X))$ is convex in $X\in \Sym^{+}(\H)\cap\Tr(\H)$. Furthermore, it is strictly convex if $C_0$ is strictly positive, i.e. $\ker(C_0) = \{0\}$.
\end{theorem}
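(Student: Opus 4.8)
The plan is to work directly from the closed-form formula \eqref{equation:gauss-entropic-2-Wasserstein-infinite} of Theorem \ref{theorem:OT-regularized-Gaussian}, reducing the statement to the convexity of a single explicit scalar function acting spectrally. (A purely ``soft'' argument via partial minimization of \eqref{equation:OT-ep-square} does not suffice: because the $\KL$ term is taken against the \emph{product} reference $\mu_0\otimes\mu_1$, it ultimately contributes a \emph{concave} correction in $\mu_1$, so the explicit Gaussian formula is genuinely needed.)

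With $\mu_0 = \Ncal(m_0,C_0)$ and $\mu_1 = \Ncal(m,X)$, set $K = K(X) := \frac{16}{\ep^2}C_0^{1/2}XC_0^{1/2}$, which lies in $\Sym^{+}(\H)\cap\Tr(\H)$ and whose eigenvalues $\{\kappa_j\}$ determine those of $M^{\ep}_{01}$ through $\sqrt{1+\kappa_j}-1$. Substituting $I+\tfrac{1}{2}M^{\ep}_{01}=\tfrac{1}{2}I+\tfrac{1}{2}(I+K)^{1/2}$ into \eqref{equation:gauss-entropic-2-Wasserstein-infinite} yields
\begin{equation*}
\OT^{\ep}_{d^2}\bigl(\Ncal(m_0,C_0),\Ncal(m,X)\bigr) = \|m_0-m\|^2 + \trace(C_0) + \trace(X) + \trace\bigl(\phi(K(X))\bigr),
\end{equation*}
where $\phi(t) = \tfrac{\ep}{2}\bigl[\,1-\sqrt{1+t}+\log\tfrac{1}{2}(1+\sqrt{1+t})\,\bigr]$. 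Since $\phi(0)=0$ and $\phi(t)=O(t)$ as $t\to 0^{+}$, the operator $\phi(K)$ is trace class whenever $K$ is, so the identity is legitimate. Here $\|m_0-m\|^2$ is convex in $m$ and independent of $X$, while the remaining terms are independent of $m$; hence joint convexity in $(m,X)$ --- and, using that $\OT^{\ep}_{d^2}$ is symmetric in its two arguments (since $C_0^{1/2}C_1C_0^{1/2}$ and $C_1^{1/2}C_0C_1^{1/2}$ have the same nonzero spectrum), convexity in each argument --- reduces to showing that $X\mapsto\trace\phi(K(X))$ is convex on $\Sym^{+}(\H)\cap\Tr(\H)$.

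For that I would proceed in three steps. First, a direct computation (substituting $s=\sqrt{1+t}$) gives $\phi'(t)=-\frac{\ep}{4(1+\sqrt{1+t})}$ and $\phi''(t)=\frac{\ep}{8(1+\sqrt{1+t})^{2}\sqrt{1+t}}>0$, so $\phi$ is strictly convex on $[0,\infty)$. Second, since $X\mapsto K(X)$ is linear, it suffices to show $A\mapsto\trace\phi(A)$ is convex on positive trace class operators; this is the standard fact that the trace of a convex spectral function is convex, which I would prove by the Peierls-type estimate: for $A=\lambda A_1+(1-\lambda)A_2$ with eigenbasis $\{f_k\}$,
\begin{equation*}
\trace\phi(A)=\sum_k\phi(\langle f_k,Af_k\rangle)\le\sum_k\bigl[\lambda\phi(\langle f_k,A_1f_k\rangle)+(1-\lambda)\phi(\langle f_k,A_2f_k\rangle)\bigr]\le\lambda\trace\phi(A_1)+(1-\lambda)\trace\phi(A_2),
\end{equation*}
where the first inequality is convexity of $\phi$ and the second is Jensen's inequality applied to the spectral measure of $A_i$ in the state $f_k$, then summed over $k$ (using that $\phi$ annihilates $\ker A_i\supseteq\ker A$). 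Third, for strict convexity when $\ker(C_0)=\{0\}$: chasing the equality cases, strict convexity of $\phi$ forces $\langle f_k,A_1f_k\rangle=\langle f_k,A_2f_k\rangle$ for all $k$ and equality in Jensen forces each $f_k$ to be a common eigenvector of $A_1,A_2$, whence $A_1=A_2$; combined with the fact that $\ker(C_0)=\{0\}$ makes $C_0^{1/2}$ injective with dense range, so that $X\mapsto C_0^{1/2}XC_0^{1/2}$ is injective on $\Sym^{+}(\H)\cap\Tr(\H)$, this yields strict convexity of $X\mapsto\trace\phi(K(X))$, hence of $F_E$.

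The main obstacle is infinite-dimensional bookkeeping rather than any analytic difficulty: one must check that every trace and Fredholm determinant that appears is finite (which $C_i\in\Tr(\H)$ together with $\phi(0)=0$ guarantees), that the eigenbasis/spectral manipulations are valid modulo the kernels of the operators involved, and that the Jensen equality analysis underpinning strict convexity is watertight. Once the reduction to the scalar function $\phi$ above is set up, the rest is routine.
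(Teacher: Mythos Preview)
Your argument is correct and takes a genuinely different route from the paper. The paper also starts from the closed form \eqref{equation:gauss-entropic-2-Wasserstein-infinite}, but then proceeds by explicit Fr\'echet differentiation: it develops a chain of lemmas computing $D$ and $D^2$ of $X\mapsto\trace[-I+(I+c^2C^{1/2}XC^{1/2})^{1/2}]$ and of $X\mapsto\log\det[\tfrac12 I+\tfrac12(I+c^2C^{1/2}XC^{1/2})^{1/2}]$, combines them into Proposition~\ref{proposition:logdet-trace-convex}, and shows that the second Fr\'echet derivative is a nonnegative quadratic form, strictly positive when $\ker(C_0)=\{0\}$. Your reduction to the scalar function $\phi$ and the Peierls/Jensen trace inequality is more elementary and avoids the operator-calculus machinery entirely; the equality analysis you sketch for strict convexity is also sound (equality in Jensen forces each $f_k$ into a single eigenspace of each $A_i$, and equality in the first step forces the eigenvalues to match). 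What the paper's approach buys, and yours does not, is the explicit first Fr\'echet derivative $DF_E(X_0)(X)=\trace\bigl[(I-\tfrac{4}{\ep}C_0^{1/2}(I+(I+c_\ep^2C_0^{1/2}X_0C_0^{1/2})^{1/2})^{-1}C_0^{1/2})X\bigr]$, which is precisely what is needed later to derive the barycenter fixed-point equations (Theorems~\ref{theorem:entropic-barycenter-Gaussian} and \ref{theorem:barycenter-sinkhorn-Gaussian}) and the differentiability statement (Theorem~\ref{theorem:differentiability}).
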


\begin{comment}
\begin{theorem}
	[\textbf{Lower bound for Entropic 2-Wasserstein distances between general probability measures on Hilbert space}]
	\label{theorem:lower-bound-reg-OT-Hilbert}
	Let $\mu_0, \mu_1$ be two probability measures on $\H$, with means $m_0,m_1$ and covariance operators $C_0, C_1$, respectively.
	Then
	\begin{align}
	\OT^{\ep}_{d^2}(\mu_0, \mu_1) &\geq  ||m_0-m_1||^2 + \trace(C_0) + \trace(C_1) - \frac{\ep}{2}\trace(M^{\ep}_{01})
	\nonumber
	\\
	& +\frac{\ep}{2}\log\det\left(I + \frac{1}{2}M^{\ep}_{01}\right).
	\end{align}
	Equality happens if and only if $\mu_0, \mu_1$ are Gaussian measures.
\end{theorem}
\end{comment}

\begin{theorem}
	[\textbf{Sinkhorn divergence between Gaussian measures on Hilbert space}]
	\label{theorem:Sinkhorn-Gaussian-Hilbert}
	Let $\mu_0 = \Ncal(m_0, C_0)$, $\mu_1 = \Ncal(m_1, C_1)$. Then
	\begin{equation}
	\begin{aligned}
	\Srm^{\ep}_{d^2}(\mu_0, \mu_1) &= ||m_0 - m_1||^2 + \frac{\ep}{4}\trace\left[M^{\ep}_{00} - 2M^{\ep}_{01} + M^{\ep}_{11}\right] 
	\\
	& \quad + \frac{\ep}{4}\log\left[\frac{\det\left(I + \frac{1}{2}M^{\ep}_{01}\right)^2}{\det\left(I + \frac{1}{2}M^{\ep}_{00}\right)\det\left(I + \frac{1}{2}M^{\ep}_{11}\right)}\right].
	\end{aligned}
	\label{equation:gauss-sinkhorn-infinite}
	\end{equation}
\end{theorem}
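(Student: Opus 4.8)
The plan is to derive \eqref{equation:gauss-sinkhorn-infinite} purely by substituting the closed form \eqref{equation:gauss-entropic-2-Wasserstein-infinite} of Theorem \ref{theorem:OT-regularized-Gaussian} into the defining expression \eqref{equation:sinkhorn} of the Sinkhorn divergence and then collecting terms. First I would record the three evaluations that are needed. Applying \eqref{equation:gauss-entropic-2-Wasserstein-infinite} to the pair $(\mu_0,\mu_1)$ gives $\OT^{\ep}_{d^2}(\mu_0,\mu_1)$ as stated there; applying it to $(\mu_i,\mu_i)$ for $i=0,1$ — which is legitimate since Theorem \ref{theorem:OT-regularized-Gaussian} is stated for arbitrary Gaussians with trace-class covariance, including the diagonal pair and the singular case — yields, using $m_i-m_i=0$ and $C_i=C_i$ in the formula,
\[
\OT^{\ep}_{d^2}(\mu_i,\mu_i) = 2\,\trace(C_i) - \frac{\ep}{2}\,\trace\!\left(M^{\ep}_{ii}\right) + \frac{\ep}{2}\log\det\!\left(I + \tfrac{1}{2}M^{\ep}_{ii}\right), \qquad i=0,1.
\]

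Next I would substitute these into $\Srm^{\ep}_{d^2}(\mu_0,\mu_1) = \OT^{\ep}_{d^2}(\mu_0,\mu_1) - \tfrac12\bigl(\OT^{\ep}_{d^2}(\mu_0,\mu_0) + \OT^{\ep}_{d^2}(\mu_1,\mu_1)\bigr)$ and simplify block by block. The norm term contributes $\|m_0-m_1\|^2$ unchanged; the covariance traces cancel exactly, since $\trace(C_0)+\trace(C_1) - \tfrac12\bigl(2\trace(C_0)+2\trace(C_1)\bigr)=0$; the $\trace(M^{\ep})$ contributions combine, by linearity of the trace, into $\tfrac{\ep}{4}\trace\bigl(M^{\ep}_{00} - 2M^{\ep}_{01} + M^{\ep}_{11}\bigr)$; and the $\log\det$ contributions combine, by additivity of the logarithm, into $\tfrac{\ep}{4}\log\bigl[\det(I+\tfrac12 M^{\ep}_{01})^2 / (\det(I+\tfrac12 M^{\ep}_{00})\det(I+\tfrac12 M^{\ep}_{11}))\bigr]$. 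Summing these four pieces reproduces \eqref{equation:gauss-sinkhorn-infinite} verbatim.

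There is no genuine analytic obstacle here: the argument is bookkeeping on top of Theorem \ref{theorem:OT-regularized-Gaussian}. The only point that requires a sentence of justification — and the reason the infinite-dimensional identity is meaningful rather than merely formal — is that every quantity appearing is finite and well-defined: by the Remark following Theorem \ref{theorem:optimal-joint-square-gauss}, each $M^{\ep}_{ij} \in \Sym^{+}(\H)\cap\Tr(\H)$, so the traces $\trace(M^{\ep}_{ij})$ are finite and the Fredholm determinants $\det(I+\tfrac12 M^{\ep}_{ij})$ are well-defined and strictly positive, hence their logarithms are real. Consequently all three applications of \eqref{equation:gauss-entropic-2-Wasserstein-infinite} are valid and the cancellations above are genuine identities in $\R$.
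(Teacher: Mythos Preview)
Your proposal is correct and follows essentially the same approach as the paper: the paper's proof (given under the restatement as Corollary \ref{corollary:Sinkhorn-Gaussian-Hilbert}) also substitutes the closed form of Theorem \ref{theorem:OT-regularized-Gaussian} for $\OT^{\ep}_{d^2}(\mu_0,\mu_1)$, $\OT^{\ep}_{d^2}(\mu_0,\mu_0)$, and $\OT^{\ep}_{d^2}(\mu_1,\mu_1)$ into the defining expression \eqref{equation:sinkhorn} and cancels terms exactly as you describe. Your added remark on the finiteness of the traces and Fredholm determinants via $M^{\ep}_{ij}\in\Sym^{+}(\H)\cap\Tr(\H)$ is a welcome justification that the paper leaves implicit.
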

{\bf Finite-dimensional case}. For $C_0,C_1 \in \Sym^{+}(n)$, one verifies directly that Eqs.\eqref{equation:gauss-entropic-2-Wasserstein-infinite} and \eqref{equation:gauss-sinkhorn-infinite}
reduce to Eqs.\eqref{equation:gauss-entropic-Wass-finite} and \eqref{equation:gauss-sinkhorn-finite}, respectively.

In \cite{feydy18}, the Sinkhorn divergence was proved to be {\it convex} in each variable for either a {\it compact} metric space $X$ or
for measures with {\it bounded support} on $\R^n$, with cost function $c(x,y) = ||x-y||^p$, $p=1,2$.
In \cite{janati2020debiased}, this was shown for sub-Gaussian measures on $\R^n$, $c(x,y) = ||x-y||^2$.
The following shows {\it strict convexity} for Gaussian measures on $\H$ with $c(x,y) = ||x-y||^2$.
This property is crucial for guaranteeing the uniqueness of the barycenter problem below.

\begin{theorem}
	[\textbf{Strict convexity of Sinkhorn divergence}]
	\label{theorem:strict-convexity-Sinkhorn}
	Let $\mu_0 = \Ncal(m_0,C_0)$, $\mu_1 = \Ncal(m,X)$. Then $\Srm^{\ep}_{d^2}(\mu_0, \mu_1)$ is strictly convex in each argument. In particular, let $C_0$ be fixed, then the function $X \mapto F_S(X) = \Srm^{\ep}_{d^2}[\Ncal(0,C_0), \Ncal(0,X)]$ is strictly convex in  $X \in \Sym^{+}(\H)\cap \Tr(\H)$.
\end{theorem}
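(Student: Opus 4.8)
The plan is to use the closed form of Theorem~\ref{theorem:Sinkhorn-Gaussian-Hilbert} together with the convexity of the entropic distance established in Theorem~\ref{theorem:entropic-OT-convexity}, reducing the statement to the strict convexity of a single scalar function plus a standard operator (Peierls) argument. Since the cost $c(x,y)=\|x-y\|^2$ is symmetric, $\OT^{\ep}_{d^2}$ and hence $\Srm^{\ep}_{d^2}$ are symmetric in their arguments, so it suffices to handle the second argument $\mu_1=\Ncal(m,X)$; and in the formula of Theorem~\ref{theorem:Sinkhorn-Gaussian-Hilbert} the means enter only through $\|m_0-m\|^2$, which is strictly convex in $m$ and independent of $X$. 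Thus it is enough to show that for an arbitrary fixed $C_0\in\Sym^{+}(\H)\cap\Tr(\H)$ the map $X\mapsto F_S(X)=\Srm^{\ep}_{d^2}(\Ncal(0,C_0),\Ncal(0,X))$ is strictly convex on $\Sym^{+}(\H)\cap\Tr(\H)$; then $\|m_0-m\|^2+F_S(X)$ is jointly strictly convex in $(m,X)$, and the claim for the first argument follows by symmetry.

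Expanding $\Srm^{\ep}_{d^2}(\mu_0,\mu_1)=\OT^{\ep}_{d^2}(\mu_0,\mu_1)-\tfrac12\OT^{\ep}_{d^2}(\mu_0,\mu_0)-\tfrac12\OT^{\ep}_{d^2}(\mu_1,\mu_1)$ with $\mu_0=\Ncal(0,C_0)$, $\mu_1=\Ncal(0,X)$: the first term is the function $F_E$ of Theorem~\ref{theorem:entropic-OT-convexity}, the second is a constant, and Theorem~\ref{theorem:OT-regularized-Gaussian} applied with $\mu_0=\mu_1=\Ncal(0,X)$ gives $\OT^{\ep}_{d^2}(\mu_1,\mu_1)=2\trace(X)-\tfrac{\ep}{2}\trace\bigl(M(X)\bigr)+\tfrac{\ep}{2}\log\det\bigl(I+\tfrac12 M(X)\bigr)$ with $M(X)=-I+(I+\tfrac{16}{\ep^2}X^2)^{1/2}\in\Sym^{+}(\H)\cap\Tr(\H)$. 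Hence, with $\{\lambda_k\}$ the eigenvalues of $X$,
\[
F_S(X)=F_E(X)+\sum_k h(\lambda_k)+\mathrm{const},\qquad h(\lambda)=-\lambda+\tfrac{\ep}{4}\Bigl(\sqrt{1+\tfrac{16\lambda^2}{\ep^2}}-1\Bigr)-\tfrac{\ep}{4}\log\frac{1+\sqrt{1+\tfrac{16\lambda^2}{\ep^2}}}{2}.
\]
Since Theorem~\ref{theorem:entropic-OT-convexity} gives convexity of $F_E$ for \emph{every} $C_0$, all strictness must come from the spectral term $\trace h(X)=\sum_k h(\lambda_k)$, so two tasks remain: show $h$ is strictly convex on $[0,\infty)$, and deduce that $X\mapsto\trace h(X)$ is strictly convex on $\Sym^{+}(\H)\cap\Tr(\H)$.

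For the first task I would differentiate directly: with $y=4\lambda/\ep$ the computation simplifies to $h'(\lambda)=-1+\tfrac{y}{1+\sqrt{1+y^2}}$, so $-1\le h'(\lambda)<0$ (hence $|h(\lambda)|<\lambda$ and $\trace h(X)$ is finite for trace-class $X$), and $h''(\lambda)=\tfrac{4}{\ep}\cdot\tfrac{1}{\sqrt{1+y^2}\,(1+\sqrt{1+y^2})}>0$ for all $\lambda\ge 0$; thus $h$ is strictly convex. For the second task I would use the Peierls/Jensen argument: given $X\ne Y$ and $t\in(0,1)$, set $Z=(1-t)X+tY$ and let $\{e_k\}$ be a complete orthonormal eigenbasis of the positive trace-class (hence compact) operator $Z$; from $\lambda_k(Z)=\langle e_k,Ze_k\rangle=(1-t)\langle e_k,Xe_k\rangle+t\langle e_k,Ye_k\rangle$, the termwise convexity of $h$, and the Peierls inequality $\langle e_k,h(A)e_k\rangle\ge h(\langle e_k,Ae_k\rangle)$ for $A\in\Sym^{+}(\H)\cap\Tr(\H)$, one gets $\trace h(Z)\le(1-t)\trace h(X)+t\trace h(Y)$; and in the equality case strict convexity of $h$ forces $\langle e_k,Xe_k\rangle=\langle e_k,Ye_k\rangle$ for all $k$, while equality in Peierls forces each $e_k$ to be a common eigenvector of $X$ and $Y$ with eigenvalue $\langle e_k,Xe_k\rangle$, so completeness of $\{e_k\}$ yields $X=Y$, a contradiction. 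Hence $\trace h(\cdot)$ is strictly convex and so is $F_S=F_E+\trace h(\cdot)+\mathrm{const}$.

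The main obstacle is that no slack is permitted: Theorem~\ref{theorem:entropic-OT-convexity} only gives \emph{convexity} of $F_E$ in general (its strict convexity requires $\ker C_0=\{0\}$), so the entire strictness has to be extracted from the debiasing term $\trace h(X)$. Concretely this means one must verify $h''>0$ \emph{everywhere} on $[0,\infty)$, including at $\lambda=0$, and then push the strictness through the equality analysis of the Peierls inequality in infinite dimensions---being careful on $\ker Z$, where $X$ and $Y$ are forced to vanish as well, and about the absolute convergence of all the series that appear.
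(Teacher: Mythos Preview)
Your argument is correct and the overall decomposition---writing $F_S(X)=F_E(X)+\trace h(X)+\mathrm{const}$, using only convexity (not strict convexity) of $F_E$ from Theorem~\ref{theorem:entropic-OT-convexity}, and extracting all strictness from the self-term---is exactly the paper's strategy. The difference lies in how you prove strict convexity of the self-term. The paper does this as Proposition~\ref{proposition:trace-log-square-strictly-convex}: it computes the first and second Fr\'echet derivatives of $X\mapsto\trace[-I+(I+c^2X^2)^{1/2}]-\log\det\bigl(\tfrac12 I+\tfrac12(I+c^2X^2)^{1/2}\bigr)$ via a chain of operator-calculus lemmas, and then shows $D^2f(X_0)(X,X)>0$ by an eigenbasis expansion (Lemma~\ref{lemma:trace-DXX-X2-strict-inequality}), where the key scalar inequality is $c^2\bigl((\lambda_k+\lambda_j)/(z_k^{1/2}+z_j^{1/2})\bigr)^2<1$. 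Your route is more elementary: you identify the self-term as a spectral trace $\trace h(X)$, verify $h''>0$ on $[0,\infty)$ by a two-line computation, and then lift scalar strict convexity to operators via the Peierls--Jensen argument with an equality analysis. Both proofs ultimately reduce to eigenvalue inequalities, but yours bypasses the Fr\'echet machinery entirely. What the paper's approach buys in return is twice Fr\'echet differentiability on an open set containing $\Sym^{+}(\H)\cap\Tr(\H)$ (reused in Theorem~\ref{theorem:differentiability}) and explicit first-derivative formulas that feed directly into the barycenter equations of Proposition~\ref{proposition:derivation-equation-barycenter}; your argument gives strict convexity but not these by-products. Your handling of the infinite-dimensional details (absolute convergence via $|h(\lambda)|\le\lambda$, completeness of the eigenbasis of the compact operator $Z$, and the kernel case $a_k=b_k=0$ forcing $Xe_k=Ye_k=0$ by positivity) is sound.
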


In \cite{feydy18}, positivity of the Sinkhorn divergence was proved for either a {\it compact} metric space $X$ or
for measures with {\it bounded support} on $\R^n$, with cost function $c(x,y) = ||x-y||^p$, $p=1,2$.
We now show that in the Gaussian case, with $c(x,y) = ||x-y||^2$, this holds in the much more general Hilbert space setting. 
\begin{theorem}
	[\textbf{Positivity of Sinkhorn divergence}]
	\label{theorem:positivity}
The function $\Srm^{\ep}_{d^2}: \Gauss(\H) \times \Gauss(\H) \mapto \R_{\geq 0}$ satisfies
\begin{align}
\Srm^{\ep}_{d^2}(\mu_0, \mu_1) &\geq 0,\quad \quad \quad \forall \mu_0, \mu_1 \in \Gauss(\H),
\\
\Srm^{\ep}_{d^2}(\mu_0, \mu_1) &= 0 \equivalent \mu_0 = \mu_1.
\end{align} 
\end{theorem}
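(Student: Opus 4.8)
The plan is to derive nonnegativity of $\Srm^{\ep}_{d^2}$ from the entropic Kantorovich duality (Corollary~\ref{corollary:dual-attain}) together with the positive definiteness of the Gaussian kernel $k(x,y)=\exp(-\|x-y\|^2/\ep)$ on $\H$, and then to settle the equality case by combining the closed-form expression of Theorem~\ref{theorem:Sinkhorn-Gaussian-Hilbert} with the strict convexity of Theorem~\ref{theorem:strict-convexity-Sinkhorn}.

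For the inequality, first apply Corollary~\ref{corollary:dual-attain} to the pair $(\mu_0,\mu_0)$: in the formulas~\eqref{equation:ABab} one then has $A=B$, and since only the sum $a+b$ is constrained we may take $a=b$, so $\alpha^{\ep}=\beta^{\ep}$; writing $\varphi_0=\ep\log\alpha^{\ep}\in\Lexp(\H,\mu_0)$ gives the \emph{identity} $\OT^{\ep}_{d^2}(\mu_0,\mu_0)=D(\varphi_0,\varphi_0)$, and likewise $\OT^{\ep}_{d^2}(\mu_1,\mu_1)=D(\varphi_1,\varphi_1)$ for a symmetric potential $\varphi_1\in\Lexp(\H,\mu_1)$. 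By~\eqref{equation:dual-problem}, $\OT^{\ep}_{d^2}(\mu_0,\mu_1)\ge D(\varphi_0,\varphi_1)$, so subtracting $\tfrac12 D(\varphi_0,\varphi_0)+\tfrac12 D(\varphi_1,\varphi_1)$ and expanding $D$ via~\eqref{equation:kanto-dual} (recall $d^2(x,y)=\|x-y\|^2$), all linear terms $\bE_{\mu_i}[\varphi_i]$ and the additive constants cancel, leaving
\begin{equation}
\Srm^{\ep}_{d^2}(\mu_0,\mu_1)\ \ge\ \frac{\ep}{2}\iint_{\H\times\H}k(x,y)\,d(\rho_0-\rho_1)(x)\,d(\rho_0-\rho_1)(y),\qquad\rho_i:=e^{\varphi_i/\ep}\mu_i.
\end{equation}
Here each $\rho_i$ is a finite nonnegative Borel measure precisely because $\varphi_i\in\Lexp(\H,\mu_i)$, and the double integral converges since $0\le k\le 1$. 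The right-hand side is nonnegative because $k$ is a positive-definite kernel on $\H$: by Schoenberg's theorem $\|x-y\|^2=\|x\|^2+\|y\|^2-2\la x,y\ra$ is conditionally negative definite, hence $k(x,y)=e^{-\|x\|^2/\ep}e^{-\|y\|^2/\ep}\exp(2\la x,y\ra/\ep)$ is positive definite, the middle factor being a norm-convergent sum of nonnegative multiples of the positive-definite kernels $\la x,y\ra^n$. This establishes $\Srm^{\ep}_{d^2}(\mu_0,\mu_1)\ge 0$ for all $\mu_0,\mu_1\in\Gauss(\H)$.

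For the equality, note that $\Srm^{\ep}_{d^2}(\mu,\mu)=0$ is immediate from~\eqref{equation:sinkhorn}. Conversely, since the formula of Theorem~\ref{theorem:Sinkhorn-Gaussian-Hilbert} depends on the means only through $\|m_0-m_1\|^2$, fixing $\mu_0=\Ncal(m_0,C_0)$ we have
\begin{equation}
\Srm^{\ep}_{d^2}(\mu_0,\mu_1)=\|m_0-m_1\|^2+F_S(C_1),\qquad F_S(C_1):=\Srm^{\ep}_{d^2}\bigl(\Ncal(0,C_0),\Ncal(0,C_1)\bigr).
\end{equation}
By the first part, $F_S\ge 0$ on $\Sym^{+}(\H)\cap\Tr(\H)$; setting $C_1=C_0$ in~\eqref{equation:gauss-sinkhorn-infinite}, where then $M^{\ep}_{00}=M^{\ep}_{01}=M^{\ep}_{11}$, gives $F_S(C_0)=0$; and $F_S$ is strictly convex by Theorem~\ref{theorem:strict-convexity-Sinkhorn}. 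A nonnegative strictly convex function on a convex domain that vanishes at $C_0$ has $C_0$ as its unique global minimizer, so $F_S(C_1)>0$ whenever $C_1\ne C_0$. Since $\|m_0-m_1\|^2>0$ whenever $m_0\ne m_1$, we conclude $\Srm^{\ep}_{d^2}(\mu_0,\mu_1)=0$ if and only if $m_0=m_1$ and $C_0=C_1$, i.e. if and only if $\mu_0=\mu_1$.

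The delicate point is the first step: in infinite dimensions with the unbounded cost $\|x-y\|^2$ there is no off-the-shelf duality-attainment theorem to appeal to, so the argument rests entirely on Corollary~\ref{corollary:dual-attain} and on the observation that the explicit potentials for $\OT^{\ep}_{d^2}(\mu_i,\mu_i)$ may be chosen symmetric; granting that identity, checking that $\rho_i$ is a finite measure and that the Gaussian kernel is positive definite on $\H$ is routine. (Alternatively one could read off the equality case directly from the displayed inequality using that the Gaussian kernel is characteristic, but the convexity argument above avoids the extra measure-theoretic bookkeeping.)
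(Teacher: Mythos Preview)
Your proof is correct and takes a genuinely different route from the paper's for the nonnegativity part.

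The paper argues entirely through the closed-form expression: it extends $F_S(X)=\Srm^{\ep}_{d^2}(\Ncal(0,C_0),\Ncal(0,X))$ to the open convex set $\Omega=\{X\in\Sym(\H)\cap\Tr(\H):I+c_\ep^2C_0^{1/2}XC_0^{1/2}>0\}$, computes $DF_S(X_0)$ explicitly, shows the first-order condition $DF_S(X_0)=0$ has the unique solution $X_0=C_0$, and then invokes strict convexity on $\Omega$ to conclude that $C_0$ is the unique global minimizer with $F_S(C_0)=0$. Nonnegativity and the equality case fall out simultaneously from this single variational computation; no duality, no kernel argument. Your approach instead resurrects the Feydy--S\'ejourn\'e--Vialard--Amari--Trouv\'e--Peyr\'e strategy in the Hilbert setting: use Corollary~\ref{corollary:dual-attain} to get symmetric self-potentials, feed them into the dual inequality~\eqref{equation:dual-problem} for the cross term, and reduce to positive definiteness of the Gaussian kernel on $\H$. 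This is more conceptual (it explains \emph{why} positivity holds, linking it to the MMD picture) and would generalize beyond Gaussians once duality attainment is known; the paper's approach is fully self-contained and avoids leaning on the general duality statement~\eqref{equation:dual-problem}, whose validity for unbounded costs on infinite-dimensional $\H$ the paper only cites rather than proves.

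Two minor points. First, your justification that $k(x,y)=e^{-\|x-y\|^2/\ep}$ is positive definite via the series for $e^{2\la x,y\ra/\ep}$ is fine for finitely many points, but your phrase ``norm-convergent sum'' is loose since $\la x,y\ra$ is unbounded on $\H$; the cleanest way to get $\iint k\,d\sigma\,d\sigma\ge 0$ for a finite signed measure $\sigma$ is the RKHS embedding (with $k$ bounded continuous and $\H_k$ separable, the Bochner integral $\int\Phi\,d\sigma\in\H_k$ exists and the double integral is its squared norm). Second, the inequality $\OT^{\ep}_{d^2}(\mu_0,\mu_1)\ge D(\varphi_0,\varphi_1)$ really does use~\eqref{equation:dual-problem} and not just Corollary~\ref{corollary:dual-attain}; you acknowledge this in your closing paragraph, but it is worth being explicit that what you need is only weak duality (any admissible pair is below the supremum), which is the easy direction.
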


{\bf Differentiability}. In the finite-dimensional setting, $\Sym^{++}(n)$ is an open subset in the vector space
$\Sym(n)$ and Fr\'echet derivatives can be properly defined on this set.
In contrast, $\Sym^{++}(\H)\cap \Tr(\H)$ is {\it not} an open subset
of $\Sym(\H)$ when $\dim(\H) = \infty$.
In particular, for the exact $2$-Wasserstein distance, the function
$X \mapto W^2_2(\Ncal(0,C_0), \Ncal(0,X)) = \trace(C_0) + \trace(X) -2\trace[(C_0^{1/2}XC_0^{1/2})^{1/2}]$
is {\it not } Fr\'echet differentiable on $\Sym^{++}(\H)\cap\Tr(\H)$.

To discuss Fr\'echet differentiability of both $\OT^{\ep}_{d^2}$ and $\Srm^{\ep}_{d^2}$ in the covariance operator component, we 
can extend their definition, thanks to the regularization effect, to a larger, open set, containing $\Sym^{++}(\H)\cap \Tr(\H)$, as follows.
\begin{theorem}
	[\textbf{Differentiability of entropic Wasserstein distance and Sinkhorn divergence}]
	\label{theorem:differentiability}
	Let $C_0 \in \Sym^{+}(\H) \cap \Tr(\H)$ be fixed. Both functions $F_E$ in Theorem \ref{theorem:entropic-OT-convexity} and $F_S$ in Theorem \ref{theorem:strict-convexity-Sinkhorn} are well-defined and twice Fr\'echet differentiable 
	on the open, convex set $\Omega = \{X \in \Sym(\H)\cap \Tr(\H): I + c_{\ep}^2C_0^{1/2}XC_0^{1/2} > 0\} \supset \Sym^{+}(\H) \cap \Tr(\H)$, 
	$c_{\ep} = \frac{4}{\ep}$. Furthermore, $F_S$ is strictly convex and $F_E$ is convex on $\Omega$, 
	with strict convexity if $C_0$ is strictly positive.
\end{theorem}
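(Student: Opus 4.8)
The plan is to turn the closed forms of Theorems~\ref{theorem:OT-regularized-Gaussian} and~\ref{theorem:Sinkhorn-Gaussian-Hilbert} into representations of $F_E$ and $F_S$ as finite sums of analytic trace functionals, and then read off differentiability and convexity from these. First I would dispatch the statements about $\Omega$ itself: the map $X \mapsto I + c_{\ep}^2 C_0^{1/2}XC_0^{1/2}$ is affine and bounded from $\Sym(\H)\cap\Tr(\H)$ (trace norm) into $\Sym(\H)$ (operator norm), and the cone of positive definite operators is open and convex in $\Sym(\H)$, so its preimage $\Omega$ is open and convex; moreover $X \geq 0$ forces $I + c_{\ep}^2 C_0^{1/2}XC_0^{1/2} \geq I$, so $\Sym^{+}(\H)\cap\Tr(\H)\subset\Omega$.

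Next, well-definedness and smoothness on $\Omega$. For $X \in \Omega$ put $A(X) = c_{\ep}^2 C_0^{1/2}XC_0^{1/2}$, which is self-adjoint and trace class with $I + A(X) > 0$, so $\mathrm{spec}(A(X))$ is a compact subset of $(-1,\infty)$; from $(I+A)^{1/2}-I = A[(I+A)^{1/2}+I]^{-1}$ one sees $M^{\ep}_{01}(X) = (I+A(X))^{1/2}-I \in \Tr(\H)$ and $I + \tfrac12 M^{\ep}_{01}(X) = \tfrac12(I + (I+A(X))^{1/2})$ is positive definite of the form $I + (\text{trace class})$, and similarly $M^{\ep}_{00}$ and $M^{\ep}_{11}(X) = (I + c_{\ep}^2 X^2)^{1/2}-I$ are trace class since $c_{\ep}^2 X^2 \geq 0$ for every self-adjoint $X$; hence the right-hand sides of \eqref{equation:gauss-entropic-2-Wasserstein-infinite} and \eqref{equation:gauss-sinkhorn-infinite} (with $C_1$ replaced by $X$, $m_0=m_1=0$, and $C_1^{1/2}C_1C_1^{1/2}$ read as $X^2$) are finite real numbers on all of $\Omega$. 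The key reduction is the identity $-\tfrac{\ep}{2}\trace(M^{\ep}_{ij}) + \tfrac{\ep}{2}\log\det(I+\tfrac12 M^{\ep}_{ij}) = \tfrac{\ep}{2}\trace\psi(A_{ij})$ with $A_{ij} = c_{\ep}^2 C_i^{1/2}C_jC_i^{1/2}$ and $\psi(t) = \log(\tfrac12(1+\sqrt{1+t})) - \sqrt{1+t} + 1$, holomorphic on $\mathbb{C}\setminus(-\infty,-1]$ with $\psi(0)=0$; it follows from $\log\det(I+B) = \trace\log(I+B)$ and the spectral calculus. This gives
\[
F_E(X) = \trace(C_0) + \trace(X) + \tfrac{\ep}{2}\trace\psi(A(X)), \qquad
F_S(X) = \tfrac{\ep}{4}\bigl[\,2\trace\psi(A(X)) - \trace\psi(c_{\ep}^2 X^2) - \trace\psi(c_{\ep}^2 C_0^2)\,\bigr].
\]
Analyticity on $\Omega$ then follows from four blocks: (i) $X\mapsto A(X)$ and $X\mapsto c_{\ep}^2 X^2$ are analytic maps of $\Sym(\H)\cap\Tr(\H)$ into itself, being affine respectively quadratic; (ii) on the open set of positive definite operators of the form $I+(\text{trace class})$, the Riesz--Dunford integral $T\mapsto \psi(T) = \tfrac{1}{2\pi i}\oint_\Gamma \psi(z)(zI-T)^{-1}\,dz$ (with $\Gamma$ encircling $\mathrm{spec}(T)$ inside $\mathbb{C}\setminus(-\infty,-1]$) depends analytically on $T$ in trace norm, while the resolvent identity $(zI-T)^{-1}-(zI-I)^{-1} = \tfrac{1}{z-1}(zI-T)^{-1}(T-I)$ together with $\psi(0)=0$ gives $\psi(T) = \psi(T)-\psi(1)I \in \Tr(\H)$; (iii) $\trace:\Tr(\H)\to\R$ is bounded linear, hence analytic; (iv) sums and compositions of analytic maps are analytic. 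Composing, $F_E$ and $F_S$ are real-analytic on $\Omega$, in particular twice Fr\'echet differentiable.

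For convexity I would use the divided-difference formula for the second derivative of a trace functional: for $C^2$ scalar $f$ on an interval containing $\mathrm{spec}(T)$, $\left.\tfrac{d^2}{ds^2}\right|_{s=0}\trace f(T+sK) = \sum_{i,j} f^{[1]}(\lambda_i,\lambda_j)\,|\langle e_i,Ke_j\rangle|^2$ in an eigenbasis $\{e_i\}$ of $T$ with eigenvalues $\{\lambda_i\}$, where $f^{[1]}(\lambda,\mu) = (f'(\lambda)-f'(\mu))/(\lambda-\mu)$ and $f^{[1]}(\lambda,\lambda)=f''(\lambda)$. A short computation gives $\psi''(t) = (4\sqrt{1+t}\,(1+\sqrt{1+t})^2)^{-1} > 0$, so $\psi$ is strictly convex on $(-1,\infty)$, all coefficients $\psi^{[1]}(\lambda_i,\lambda_j)$ are strictly positive, and $D^2(\trace\psi)(T)[K,K]>0$ for $K\neq 0$. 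Since $X\mapsto A(X)$ is linear, $X\mapsto\trace\psi(A(X))$ is convex, and strictly convex precisely when $X\mapsto A(X)$ is injective, i.e.\ when $\ker(C_0)=\{0\}$; this settles $F_E$. For $F_S$ there remains $-\trace\psi(c_{\ep}^2 X^2)$: with $g(\mu) = -\psi(c_{\ep}^2\mu^2)$ and $u = \sqrt{1+c_{\ep}^2\mu^2}$ one finds $g''(\mu) = c_{\ep}^2(u(1+u))^{-1}>0$, so $g$ is strictly convex, $X\mapsto\trace g(X) = -\trace\psi(c_{\ep}^2 X^2)$ is strictly convex, and adding the convex term $2\trace\psi(A(X))$ and the constant $-\trace\psi(c_{\ep}^2 C_0^2)$ keeps it strictly convex. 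This reproves, and extends to $\Omega$, the convexity assertions of Theorems~\ref{theorem:entropic-OT-convexity} and~\ref{theorem:strict-convexity-Sinkhorn}.

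The step I expect to be the main obstacle is (ii): making the operator square root and logarithm rigorously analytic \emph{as maps valued in $I+\Tr(\H)$} — that is, verifying that the Riesz--Dunford integral converges in trace norm, is trace-norm analytic in its argument, and has trace-class increment over the identity — on the enlarged domain $\Omega$, where $A(X)$ need only satisfy $I+A(X)>0$ rather than $A(X)\geq 0$, so one must control resolvents and contours uniformly over a trace-norm neighbourhood of a point whose spectrum is merely a compact subset of $(-1,\infty)$. Everything else — the properties of $\Omega$, the trace-functional reduction, and the two scalar second-derivative computations — is routine once this functional-calculus machinery is in place.
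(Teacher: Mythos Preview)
Your argument is correct and takes a genuinely different route from the paper. The paper proceeds by direct computation: it differentiates the operator square root via the Sylvester-type identity $Z_0^{1/2}D\mysqrt(Z_0)(X)+D\mysqrt(Z_0)(X)Z_0^{1/2}=X$ (Lemma~\ref{lemma:derivative-square-root}), builds from this the first Fr\'echet derivatives of $\trace[-I+(I+c^2C^{1/2}XC^{1/2})^{1/2}]$ and $\log\det[\tfrac12 I+\tfrac12(I+c^2C^{1/2}XC^{1/2})^{1/2}]$ (Lemmas~\ref{lemma:derivative-trace-square-root}, \ref{lemma:derivative-logdet-square-root}) and their $X^2$-analogues (Lemmas~\ref{lemma:derivative-trace-square-root-2}, \ref{lemma:derivative-logdet-square-root-2}), and then differentiates once more and shows $D^2f(X_0)[X,X]\geq 0$ by operator-algebraic manipulations (Propositions~\ref{proposition:logdet-trace-convex} and~\ref{proposition:trace-log-square-strictly-convex}). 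Your approach instead packages both the trace and log-determinant pieces into a single scalar function $\psi(t)=\log(\tfrac12(1+\sqrt{1+t}))-\sqrt{1+t}+1$, obtains differentiability in one stroke from holomorphic functional calculus, and reduces the convexity check to two elementary scalar second-derivative computations via the Daleckii--Krein divided-difference representation of $D^2(\trace f)$. This is more conceptual and shorter, and the scalar computation $g''(\mu)=c_\ep^2/(u(1+u))$ for $g(\mu)=-\psi(c_\ep^2\mu^2)$ is exactly the content of the paper's Lemma~\ref{lemma:trace-DXX-X2-strict-inequality}, just arrived at without the intermediate operator-level Sylvester equation. The price you pay is the trace-norm analyticity of the Riesz--Dunford calculus on $\Omega$, which you correctly flag as the technical burden; the paper sidesteps this entirely by never invoking functional calculus and instead verifying each derivative by hand.
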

%We note that in fact $\Omega \supset \Sym^{+}(\H) \cap \Tr(\H)$.
{\color{black}We note that in Theorem \ref{theorem:differentiability}, the set $\Omega$ is open in the space 
	$\Sym(\H) \cap \Tr(\H)$ under the trace norm $||\;||_{\tr}$ topology (see Lemma \ref{lemma:open-set-omega})}.

The expressions for $\OT^{\ep}_{d^2}$ and $\Srm^{\ep}_{d^2}$ in Theorems \ref{theorem:OT-regularized-Gaussian} and \ref{theorem:Sinkhorn-Gaussian-Hilbert} are not intuitively close to the exact OT formula, which is the same as
in the finite-dimensional setting \cite{Gelbrich:1990Wasserstein,cuesta1996:WassersteinHilbert}. Theorem \ref{theorem:equivalent-formulas} below gives  equivalent formulas that better express the connections
between the exact and regularized settings.

\begin{theorem}[\textbf{Equivalent expressions for entropic $2$-Wasserstein distance and Sinkhorn divergence}]
\label{theorem:equivalent-formulas}
	Let $\mu_0 = \Ncal(m_0, C_0)$, $\mu_1 = \Ncal(m_1, C_1)$. 
	Define $L^{\epsilon}_{ij} = \frac{\epsilon^2}{8}(-I + (I + \frac{16}{\ep^2} C_i^{1/2}C_jC_i^{1/2})^{1/2}) = \frac{\ep^2}{8}M^{\ep}_{ij}$, $i,j=0,1$. Then
	\begin{align}
	\OT^{\epsilon}_{d^2}(\mu_0, \mu_1) &= {\color{black}||m_0 - m_1||^2} + \trace(C_0) + \trace(C_1) -2\trace[C_0^{1/2}C_1C_0^{1/2} - L^{\epsilon}_{01}]^{1/2}
	\nonumber
	\\
	&\quad +{\color{black}\frac{\ep}{2}\log\det\left(I + \frac{1}{2}M^{\ep}_{01}\right)}.
	%+\frac{\epsilon}{2}\log\det\frac{C_0^{1/2}C_1C_0^{1/2}}{L^{\epsilon}_{01}}.
	\end{align}
	\begin{align}
	S^{\epsilon}_{d^2}(\mu_0, \mu_1) &= {\color{black}||m_0- m_1||^2} 
	\nonumber
	\\
	&\quad+ \trace[(C_0^2 - L^{\epsilon}_{00})^{1/2} - 2(C_0^{1/2}C_1C_0^{1/2} - L^{\epsilon}_{01})^{1/2}+ (C_1^2 - L^{\epsilon}_{11})^{1/2}]
	\nonumber
	\\
	&\quad +{\color{black}\frac{\ep}{4}\log\left[\frac{\det\left(I + \frac{1}{2}M^{\ep}_{01}\right)^2}{\det\left(I + \frac{1}{2}M^{\ep}_{00}\right)\det\left(I + \frac{1}{2}M^{\ep}_{11}\right)}\right]}. %\frac{\epsilon}{4}\log\det\frac{L^{\epsilon}_{00}L^{\epsilon}_{11}}{(L^{\epsilon}_{01})^2}.
	\end{align}
	Furthermore, we verify directly that
	\begin{align}
	\lim_{\epsilon \approach 0}\OT^{\epsilon}_{d^2}(\mu_0, \mu_1) &= {\color{black}||m_0 - m_1||^2} + \trace(C_0) + \trace(C_1) -2\trace[C_0^{1/2}C_1C_0^{1/2}]^{1/2}
	\nonumber
	\\
	& = W^2_2(\mu_0, \mu_1),
	\\
	\lim_{\epsilon \approach \infty}\OT^{\epsilon}_{d^2}(\mu_0, \mu_1) &= {\color{black}||m_0 - m_1||^2} + \trace(C_0) + \trace(C_1).
	\end{align}
	\begin{align}
	\lim_{\epsilon \approach 0}S^{\epsilon}_{d^2}(\mu_0, \mu_1) &={\color{black}||m_0 - m_1||^2} + \trace(C_0) + \trace(C_1) -2\trace[C_0^{1/2}C_1C_0^{1/2}]^{1/2}
	\nonumber
	\\
	& = W^2_2(\mu_0, \mu_1),
	\\
	\lim_{\epsilon \approach \infty}S^{\epsilon}_{d^2}(\mu_0, \mu_1) &={\color{black}||m_0 - m_1||^2}.
	\end{align}
\end{theorem}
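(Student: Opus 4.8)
\emph{Plan.} The plan is to reduce both equivalent formulas to a single operator identity applied in three instances, and then to handle the limits at the level of eigenvalues. For $T \in \Sym^{+}(\H)\cap\Tr(\H)$ set $S_T = (I + \tfrac{16}{\ep^2}T)^{1/2}$, so that taking $T = C_i^{1/2}C_jC_i^{1/2}$ gives $M^{\ep}_{ij} = S_T - I$, $L^{\ep}_{ij} = \tfrac{\ep^2}{8}(S_T - I)$, and $T = \tfrac{\ep^2}{16}(S_T^2 - I)$. Then $T - L^{\ep}_{ij} = \tfrac{\ep^2}{16}(S_T^2 - I) - \tfrac{\ep^2}{8}(S_T - I) = \tfrac{\ep^2}{16}(S_T - I)^2$, and since $T \geq 0$ forces $S_T \geq I$, the nonnegative operator $S_T - I = M^{\ep}_{ij}$ is itself the positive square root, i.e. $(C_i^{1/2}C_jC_i^{1/2} - L^{\ep}_{ij})^{1/2} = \tfrac{\ep}{4}M^{\ep}_{ij}$ (read $(C_0^2 - L^{\ep}_{00})^{1/2} = \tfrac{\ep}{4}M^{\ep}_{00}$ when $i=j=0$, etc.). All operators here are trace class by the remark following Theorem~\ref{theorem:optimal-joint-square-gauss}, so $\trace[(C_i^{1/2}C_jC_i^{1/2} - L^{\ep}_{ij})^{1/2}] = \tfrac{\ep}{4}\trace(M^{\ep}_{ij})$. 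Substituting $-\tfrac{\ep}{2}\trace(M^{\ep}_{01}) = -2\trace[(C_0^{1/2}C_1C_0^{1/2} - L^{\ep}_{01})^{1/2}]$ into Theorem~\ref{theorem:OT-regularized-Gaussian} yields the stated formula for $\OT^{\ep}_{d^2}$, and the formula for $\Srm^{\ep}_{d^2}$ then follows by inserting this into $\Srm^{\ep}_{d^2}(\mu_0,\mu_1) = \OT^{\ep}_{d^2}(\mu_0,\mu_1) - \tfrac12(\OT^{\ep}_{d^2}(\mu_0,\mu_0) + \OT^{\ep}_{d^2}(\mu_1,\mu_1))$, noting that the mean terms in $\OT^{\ep}_{d^2}(\mu_i,\mu_i)$ vanish and collecting the three log-determinants into one log-ratio.

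For the $\ep \to 0$ limit I would work with eigenvalues. Let $(t_k)_k$ be the eigenvalues of $T = C_0^{1/2}C_1C_0^{1/2}$; since $C_0^{1/2}, C_1^{1/2}\in\HS(\H)$, the product $C_1^{1/2}C_0^{1/2}$ is trace class, hence $T^{1/2} = |C_1^{1/2}C_0^{1/2}|\in\Tr(\H)$ and $\sum_k\sqrt{t_k} < \infty$. The eigenvalues of $M^{\ep}_{01}$ are $m_k^{\ep} = \sqrt{1 + 16 t_k/\ep^2} - 1$, and $\sqrt{1+x}-1 \leq \sqrt{x}$ gives $0 \leq \tfrac{\ep}{2}m_k^{\ep} \leq 2\sqrt{t_k}$, with $\tfrac{\ep}{2}m_k^{\ep} \to 2\sqrt{t_k}$; dominated convergence then yields $\tfrac{\ep}{2}\trace(M^{\ep}_{01}) \to 2\trace(T^{1/2})$. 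It remains to show $\tfrac{\ep}{2}\log\det(I + \tfrac12 M^{\ep}_{01}) = \tfrac{\ep}{2}\sum_k \log\bigl(\tfrac12(1 + \sqrt{1 + 16t_k/\ep^2})\bigr) \to 0$: given $\eta > 0$, pick $N$ with $\sum_{k>N}\sqrt{t_k} < \eta$; using $\log(1+x)\leq x$ the tail is bounded by $\tfrac{\ep}{4}\sum_{k>N}m_k^{\ep} \leq \sum_{k>N}\sqrt{t_k} < \eta$, while each of the finitely many head terms satisfies $\log(1 + \tfrac12 m_k^{\ep}) \leq \log(1 + 2\sqrt{t_k}/\ep)$, so $\tfrac{\ep}{2}\sum_{k\leq N}(\cdots) \to 0$ because $\ep\log(1/\ep) \to 0$. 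This gives $\lim_{\ep\to0}\OT^{\ep}_{d^2}(\mu_0,\mu_1) = \|m_0-m_1\|^2 + \trace(C_0) + \trace(C_1) - 2\trace[(C_0^{1/2}C_1C_0^{1/2})^{1/2}] = W_2^2(\mu_0,\mu_1)$.

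For $\ep \to \infty$, the identity $\tfrac{\ep}{2}m_k^{\ep} = \tfrac{8t_k/\ep}{1 + \sqrt{1+16t_k/\ep^2}} \leq 4t_k$ (for $\ep \geq 1$) with $\tfrac{\ep}{2}m_k^{\ep}\to 0$ gives $\tfrac{\ep}{2}\trace(M^{\ep}_{01}) \to 0$ by dominated convergence, and $0 \leq \tfrac{\ep}{2}\log\det(I + \tfrac12 M^{\ep}_{01}) \leq \tfrac{\ep}{4}\trace(M^{\ep}_{01}) \to 0$, so $\lim_{\ep\to\infty}\OT^{\ep}_{d^2}(\mu_0,\mu_1) = \|m_0-m_1\|^2 + \trace(C_0) + \trace(C_1)$. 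The Sinkhorn limits then follow from the definition of $\Srm^{\ep}_{d^2}$: as $\ep \to 0$ each of the three $\OT^{\ep}_{d^2}$ terms converges to the corresponding exact $W_2^2$, and $W_2^2(\mu_i,\mu_i) = 0$, giving $\Srm^{\ep}_{d^2}(\mu_0,\mu_1) \to W_2^2(\mu_0,\mu_1)$; as $\ep \to \infty$, $\OT^{\ep}_{d^2}(\mu_i,\mu_i) \to 2\trace(C_i)$, so the trace terms cancel and $\Srm^{\ep}_{d^2}(\mu_0,\mu_1) \to \|m_0-m_1\|^2$.

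The main obstacle is the vanishing of $\tfrac{\ep}{2}\log\det(I + \tfrac12 M^{\ep}_{01})$ as $\ep \to 0$: each of the infinitely many eigenvalue terms $\log\bigl(\tfrac12(1 + \sqrt{1+16t_k/\ep^2})\bigr)$ individually blows up like $\log(1/\ep)$, so a termwise passage to the limit is impossible, and one must split off a finite head (tamed by $\ep\log(1/\ep)\to 0$) from a tail tamed by the convergent series $\sum_k\sqrt{t_k}$ — which is exactly the point where trace-class-ness of $C_1^{1/2}C_0^{1/2}$ enters. Everything else is routine once the operator identity of the first paragraph is in place.
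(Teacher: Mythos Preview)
Your proof is correct and follows essentially the same route as the paper: both establish the key operator identity $(C_i^{1/2}C_jC_i^{1/2}-L^{\ep}_{ij})^{1/2}=\tfrac{\ep}{4}M^{\ep}_{ij}$ and then reduce the limits to eigenvalue-level computations. Your treatment of the limits is in fact more careful than the paper's: the paper computes the per-eigenvalue limits (via L'Hopital) and asserts that the trace and $\log\det$ limits follow, whereas your dominated-convergence bounds $\tfrac{\ep}{2}m_k^{\ep}\leq 2\sqrt{t_k}$ (respectively $\leq 4t_k$) and the head/tail split for $\tfrac{\ep}{2}\log\det(I+\tfrac12 M^{\ep}_{01})$ as $\ep\to 0$ supply the uniform control that is actually needed to pass to the limit inside an infinite sum.
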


{\bf Entropic 2-Wasserstein barycenter of Gaussian measures}.

Given $N$ probability measures $\mu_i\in \Probs(\H)$, $i=1,2,..,N$, the entropic barycenter $\bar{\mu}$ with weights $w_i> 0$, $\sum_{i=1}^N w_i = 1$, is defined as the \emph{Fr\'echet mean}

\begin{equation}\label{equation:entropic-barycenter}
\bar{\mu} := \argmin\limits_{\mu \in \Probs(\H)} \sum_{i=1}^N w_i\OT^\epsilon_{d^2}(\mu, \mu_i), \quad w_i > 0, \quad \sum^N_{i=1}w_i = 1.
\end{equation}
In the current work, we consider barycenter of the $N$ Gaussian measures $\{\mu_i\}_{i=1}^N$ in the set of all Gaussian measures
on $\H$
\begin{equation}
	\label{equation:minimization-entropic-barycenter-Gaussian}
	\bar{\mu} = \argmin_{\mu \in \Gauss(\H)}\sum_{i=1}^N w_i \OT^{\ep}_{d^2}(\mu, \mu_i), \;\;\;\sum_{i=1}^Nw_i = 1, w_i > 0, 1 \leq i \leq N.
\end{equation}
The entropic barycenter problem illustrates clearly the bias effect of the entropic regularization term, as analyzed in the finite-dimensional case, e.g. \cite{janati2020debiased,Janati2020entropicOT}. This bias effect is sharp when $\dim(\H) = \infty$, with the finite-dimensional barycenter equation failing to generalize to this case. In the following, we call a barycenter {\it trivial} if it is a Dirac delta measure in $\H$.

\begin{theorem}
	[\textbf{Entropic Barycenter of Gaussians}]
	\label{theorem:entropic-barycenter-Gaussian}
	Let $\mu_i=\Ncal\left(m_i,C_i\right)$, $i=1,2,...,N$ be a set of Gaussian measures on $\H$.
	%Then a minimizer of problem \eqref{equation:entropic-barycenter} in $\Pcal_2(\H)$ is necessarily Gaussian.
	Assume at least one of the $C_i's$ is strictly positive. 
	Then on $\Gauss(\H)$, problem \eqref{equation:minimization-entropic-barycenter-Gaussian} is strictly convex and the first order minimality condition is
	\begin{equation}
	\label{equation:entropic-barycenter-Gaussian}
	\begin{aligned}
	\sum_{i=1}^N w_i \left[C_i^{1/2}\left(I+\left(I + \frac{16}{\ep^2}C_i^{1/2}XC_i^{1/2}\right)^{1/2}\right)^{-1}C_i^{1/2}\right] = \frac{\ep}{4}I.
	\end{aligned}
	\end{equation}
	\begin{enumerate}
		\item 
		%If $\dim(\H) = \infty$, a nontrivial entropic barycenter in \eqref{equation:entropic-barycenter} does {\bf not} exist.
		If $\ep I \geq 2\sum_{i=1}^Nw_iC_i$, the unique barycenter is the Dirac delta measure centered at $\bar{m} = 
		\sum_{i=1}^Nw_im_i$. 
		%Otherwise, {\bf no} barycenter exists since \eqref{equation:entropic-barycenter} admits no minimum.
		
		\item If $\dim(\H) < \infty$,  a {\bf necessary condition} for the existence of a non-trivial barycenter is
		\begin{equation}
		\label{equation:condition-entropic-barycenter-Gaussian}
		0 < \ep I < 2\sum_{i=1}^Nw_iC_i.
		\end{equation}
		A {\bf sufficient condition} for the existence of a non-trivial barycenter is
		\begin{equation}
		\label{equation:condition-entropic-barycenter-Gaussian-strictlypositive}
		\exists \alpha \in \R, \alpha > 0 \text{ such that } C_i \geq \alpha I, 1\leq i \leq N, \text{and }0 < \ep < 2\alpha.
		\end{equation}
		In this case, the barycenter is unique and is the Gaussian measure $\Ncal(\bar{m}, \bar{C})$, where $\bar{C}>0$ is the unique solution of Eq.\eqref{equation:entropic-barycenter-Gaussian}.		
		Equivalently, $\bar{C}$ is the unique strictly positive solution of the following equation
		\begin{equation}
		\label{equation:entropic-barycenter-Gaussian-strictlypositive}
		X  = \frac{\epsilon}{4}\sum_{i=1}^Nw_i\left[-I + \left(I + \frac{16}{\epsilon^2}X^\frac{1}{2}C_iX^\frac{1}{2}\right)^\frac{1}{2}\right].
		\end{equation}
				
		\item If $\dim(\H) = \infty$, then Eq.\eqref{equation:entropic-barycenter-Gaussian} has {\bf no} solution
		in $\Sym^{+}(\H)$.
		If {\color{black}$\ep I \ngeq 2\sum_{i=1}^Nw_iC_i$, i.e., $\exists u\in \H, ||u||=1$, such that $0 < \ep = \ep||u||^2  < 2 \sum_{i=1}^Nw_i\la u, C_iu\ra$}, then the barycenter, if it exists, is not the 
		Dirac measure centered at $\bar{m}$.
	\end{enumerate}
\end{theorem}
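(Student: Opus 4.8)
The plan is to reduce the barycenter problem over $\Gauss(\H)$ to a convex optimization over covariance operators, derive its Euler--Lagrange equation from the closed form of Theorem~\ref{theorem:OT-regularized-Gaussian}, and then exploit the key dichotomy that the left-hand side of \eqref{equation:entropic-barycenter-Gaussian} is always trace class whereas its right-hand side $\frac{\ep}{4}I$ is trace class only when $\dim(\H)<\infty$. Writing $\mu=\Ncal(m,X)$ with $X\in\Sym^{+}(\H)\cap\Tr(\H)$ and applying Theorem~\ref{theorem:OT-regularized-Gaussian}, the objective in \eqref{equation:minimization-entropic-barycenter-Gaussian} splits as $\sum_i w_i\|m-m_i\|^2+G(X)+\mathrm{const}$, where $G(X)=\trace(X)-\frac{\ep}{2}\sum_i w_i\trace(M^{\ep}_{Xi})+\frac{\ep}{2}\sum_i w_i\log\det(I+\tfrac12 M^{\ep}_{Xi})$ (here $M^{\ep}_{Xi}$ is $M^{\ep}_{ij}$ of \eqref{equation:M-ep} with $C_i$ replaced by $X$ and $C_j$ by $C_i$) depends on $X$ only; the mean part is minimized uniquely at $\bar m=\sum_i w_i m_i$ and decouples. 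Since $\OT^{\ep}_{d^2}$ is symmetric, Theorem~\ref{theorem:entropic-OT-convexity} shows each $X\mapsto\OT^{\ep}_{d^2}(\Ncal(0,C_i),\Ncal(0,X))$ is convex, strictly convex when $\ker(C_i)=\{0\}$; hence $G$ is strictly convex on $\Sym^{+}(\H)\cap\Tr(\H)$ under the hypothesis that one $C_i$ is strictly positive, and together with strict convexity of $m\mapsto\sum_i w_i\|m-m_i\|^2$ this proves the strict convexity assertion.

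Next I differentiate $G$. By Theorem~\ref{theorem:differentiability} each term $X\mapsto\OT^{\ep}_{d^2}(\Ncal(0,C_i),\Ncal(0,X))$ extends to a Fréchet differentiable function on an open set containing $\Sym^{+}(\H)\cap\Tr(\H)$ and $0$, and a direct computation of the derivatives of $X\mapsto\trace(M^{\ep}_{Xi})$ and $X\mapsto\log\det(I+\tfrac12 M^{\ep}_{Xi})$ — using the representation of $M^{\ep}_{ij}$ in the Remark after Theorem~\ref{theorem:optimal-joint-square-gauss} together with the commutation identity $C_i^{1/2}f(C_i^{1/2}XC_i^{1/2})C_i^{1/2}=f(C_iX)C_i$ for analytic $f$ — yields $\nabla G(X)=I-\frac{4}{\ep}\sum_i w_i C_i^{1/2}\bigl(I+(I+\tfrac{16}{\ep^2}C_i^{1/2}XC_i^{1/2})^{1/2}\bigr)^{-1}C_i^{1/2}$, whose vanishing is precisely \eqref{equation:entropic-barycenter-Gaussian}; the equivalence with \eqref{equation:entropic-barycenter-Gaussian-strictlypositive} for strictly positive $X$ follows from the same identity and $I+\tfrac12 M^{\ep}_{ij}=\tfrac12 I+\tfrac12(I+\tfrac{16}{\ep^2}C_i^{1/2}C_jC_i^{1/2})^{1/2}$. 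Since $0$ is the vertex of the feasible cone and $G$ is convex, $X=0$ minimizes $G$ iff $\langle\nabla G(0),H\rangle\ge0$ for all $H\in\Sym^{+}(\H)$, i.e. iff $\nabla G(0)=I-\frac{2}{\ep}\sum_i w_i C_i\ge0$, that is $\ep I\ge 2\sum_i w_i C_i$; uniqueness then comes from strict convexity, proving item~1.

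For item~2, in finite dimensions a non-trivial barycenter $\bar C$ is strictly positive (as in the unregularized Bures case, using that some $C_i$ is) and hence satisfies \eqref{equation:entropic-barycenter-Gaussian}; bounding $\bigl(I+(I+\tfrac{16}{\ep^2}C_i^{1/2}\bar C C_i^{1/2})^{1/2}\bigr)^{-1}\le\tfrac12 I$, strictly for the strictly positive $C_i$, gives $\frac{\ep}{4}I<\frac12\sum_i w_i C_i$, the necessary condition \eqref{equation:condition-entropic-barycenter-Gaussian}. For the sufficient condition, $C_i\ge\alpha I$ and $0<\ep<2\alpha$ force $\nabla G(0)\le(1-\tfrac{2\alpha}{\ep})I<0$, so $X=0$ is not optimal; coercivity of $G$ on $\Sym^{+}(n)$ (the $\trace(X)$ term grows linearly while $\sum_i w_i\trace(M^{\ep}_{Xi})$ grows sublinearly in $\trace(X)$ and the log-det terms are nonnegative) then yields a minimizer, unique and nonzero by strict convexity, and strictly positive, which solves \eqref{equation:entropic-barycenter-Gaussian}, equivalently is the unique strictly positive solution of \eqref{equation:entropic-barycenter-Gaussian-strictlypositive}. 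For item~3, if $\dim(\H)=\infty$ and some $X\in\Sym^{+}(\H)$ solved \eqref{equation:entropic-barycenter-Gaussian}, then $C_i\in\Tr(\H)$ gives $C_i^{1/2}\in\HS(\H)$, so each summand $C_i^{1/2}(\cdots)^{-1}C_i^{1/2}\in\Tr(\H)$ and the left-hand side lies in $\Tr(\H)$, contradicting $\frac{\ep}{4}I\notin\Tr(\H)$; and if $\ep I\ngeq 2\sum_i w_i C_i$ then the criterion of item~1 fails, so $\delta_{\bar m}$ is not a minimizer and any barycenter must differ from it.

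The main obstacle is twofold. First, the gradient computation must be justified in infinite dimensions, where $\Sym^{+}(\H)\cap\Tr(\H)$ has empty interior in $\Sym(\H)$: one carries out the differentiation inside the open sets of Theorem~\ref{theorem:differentiability} and reads off minimality through one-sided directional derivatives into the cone rather than an unconstrained stationarity condition. Second, in the finite-dimensional case one must upgrade ``nonzero minimizer'' to ``strictly positive minimizer'' and show that \eqref{equation:entropic-barycenter-Gaussian-strictlypositive} has a \emph{unique} strictly positive solution, which needs a careful order-monotonicity argument for the map $X\mapsto\frac{\ep}{4}\sum_i w_i\bigl[-I+(I+\tfrac{16}{\ep^2}X^{1/2}C_iX^{1/2})^{1/2}\bigr]$. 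By contrast, once the closed form of Theorem~\ref{theorem:OT-regularized-Gaussian} is in hand, the infinite-dimensional nonexistence in item~3 is essentially immediate from this trace-class obstruction.
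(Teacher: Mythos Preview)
Your approach mirrors the paper's closely: the same mean/covariance decoupling, the same Fr\'echet-derivative computation (carried out in the paper via Lemmas~\ref{lemma:derivative-trace-square-root} and~\ref{lemma:derivative-logdet-square-root}), and the same trace-class obstruction for item~3. Two places differ substantively. First, to show that $X_0=0$ is not the minimizer when $\ep I\ngeq 2\sum_i w_iC_i$, the paper does not invoke the abstract cone-optimality criterion but exhibits a concrete rank-one competitor $X_u^*=x^*(u\otimes u)$ with $F(X_u^*)<F(0)$, where $x^*>0$ solves the one-dimensional equation of Lemma~\ref{lemma:entropic-OT-barycenter-1D} and then $DF(X_u^*)(X_u^*)=0$ is combined with strict convexity; your version is shorter, the paper's is more constructive and feeds directly into the singular fixed points of Theorem~\ref{theorem:singular-fixedpoint}. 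Second, for the finite-dimensional sufficient condition the paper does not argue via coercivity: it applies Brouwer's fixed-point theorem (Proposition~\ref{proposition:entropic-barycenter-Gaussian-strictlypositive}) to the map $\Gcal(X)=\frac{\ep}{4}\sum_i w_i\bigl[-I+(I+c_{\ep}^2 X^{1/2}C_iX^{1/2})^{1/2}\bigr]$ on the order interval $\{\beta_{\ep} I\le X\le\gamma I\}$ with $\beta_{\ep}=\alpha-\tfrac{\ep}{2}>0$, which directly produces a \emph{strictly positive} solution of \eqref{equation:entropic-barycenter-Gaussian-strictlypositive} and hence of \eqref{equation:entropic-barycenter-Gaussian}. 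This sidesteps precisely the ``nonzero $\Rightarrow$ strictly positive'' upgrade you flag as the main obstacle; your coercivity route gives a minimizer but not its strict positivity, and your appeal to ``the unregularized Bures case'' for the necessary-condition step is left unjustified --- the paper instead argues directly from the operator bound $\Fcal(X)\le\tfrac12\sum_i w_iC_i$ for all $X\ge0$.
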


{\bf Discussion of results}.
Eq.\eqref{equation:entropic-barycenter-Gaussian} is the first order optimality condition
for the strictly convex problem \eqref{equation:minimization-entropic-barycenter-Gaussian}. It has sharply different behavior when 
$\dim(\H) = \infty$ compared with the case $\dim(\H) < \infty$, as we stated.

Conditions \eqref{equation:condition-entropic-barycenter-Gaussian} and \eqref{equation:condition-entropic-barycenter-Gaussian-strictlypositive}
show that, even in the finite-dimensional setting,  a non-trivial entropic barycenter of Gaussian measures
exists if and only if $\epsilon$ is sufficiently small.
Condition \eqref{equation:condition-entropic-barycenter-Gaussian}, namely $0 < \ep I < 2 \sum_{i=1}^Nw_iC_i$, {\color{black}i.e., $\forall u\in \H, ||u||=1$, $0 < \ep = \ep||u||^2  < 2 \sum_{i=1}^Nw_i\la u, C_iu\ra$},
%clearly 
{\it cannot} be satisfied in the case $\dim(\H) = \infty$ since $I$ is not trace class.
{\color{black}However, we may still have $\ep I \ngeq 2\sum_{i=1}^Nw_iC_i$, i.e., $\exists u\in \H, ||u||=1$, such that $0 < \ep = \ep||u||^2  < 2 \sum_{i=1}^Nw_i\la u, C_iu\ra$. In this case, the barycenter $\Ncal(\bar{m},\bar{C})$ may still exist when $\dim(\H) = \infty$, but $\bar{C}$ can neither be the trivial solution $0$ nor
a solution of the first order optimality condition given by Eq.\eqref{equation:entropic-barycenter-Gaussian}}.
%, in accordance with our analysis in this case.

Under condition \eqref{equation:condition-entropic-barycenter-Gaussian-strictlypositive}, Eq.\eqref{equation:entropic-barycenter-Gaussian} has a unique solution, which is strictly positive. Then Eq.\eqref{equation:entropic-barycenter-Gaussian-strictlypositive} also has a unique strictly positive solution,
but it also has the trivial solution $X_0 = 0$ and {\it uncountably infinitely many positive solutions, which are singular} (see Theorem \ref{theorem:singular-fixedpoint} and Proposition \ref{proposition:entropic-barycenter-Gaussian-strictlypositive}).

\begin{remark}
	%For simplicity, let $\H = \R^n$, $n \in \N$.
	%Equation \eqref{equation:optimal_k_entropic_barycenter} may {\bf not} have a solution $C \in \Sym^{+}(n)$.
	In general, for $\ep > 0$, the entropic barycenter problem \eqref{equation:minimization-entropic-barycenter-Gaussian} does {\it not} make much sense. Consider the following one-dimensional scenario, where
	%$N = 2$
	%$w_1 = w_2 = \frac{1}{2}$ 
	$C_1 = \cdots = C_N = \sigma^2 > 0$. The unique solution 
	of \eqref{equation:entropic-barycenter-Gaussian} is
	%\begin{align}
	$\bar{C} = \sigma^2 - \frac{\ep}{2}> 0 \equivalent \sigma^2 > \frac{\ep}{2}$.
	%\end{align}
	Thus if $\ep \geq 2 \sigma^2$, then Equation \eqref{equation:entropic-barycenter-Gaussian} has no positive solution.
	The above solution $\bar{C}$ is also obtained for the case $N=1$, $w_1 =1$$, C_1 = \sigma^2 > 0$, in which case
	a sensible solution should be $\bar{C} = C_1$, i.e. the barycenter of a set of one point should the point itself.
	This seemingly pathological behavior is not necessarily surprising, since $\OT^{\ep}_{d^2}$ is neither a distance nor a divergence.
\end{remark}

{\bf Sinkhorn barycenter of Gaussian measures}.
We now consider the barycenter problem with respect to the Sinkhorn divergence. 
For a set of probability measures $\{\mu_i\}_{i=1}^N$ on $\H$ and a set of weights $\sum_{i=1}^Nw_i =1$, $w_i > 0$, $1\leq i\leq N$, their Sinkhorn barycenter is defined to be
\begin{equation}
\label{equation:barycenter-Sinkhorn}
\bar{\mu} = \argmin_{\mu \in \Probs(\H)}\sum_{i=1}^N w_i \Srm^{\ep}_{d^2}(\mu, \mu_i), \;\;\;\sum_{i=1}^Nw_i = 1, w_i > 0,  1\leq i \leq N.
\end{equation}
In the current work, we consider barycenter of the $N$ Gaussian measures $\{\mu_i\}_{i=1}^N$ in the set of all Gaussian measures
on $\H$
\begin{equation}
\label{equation:barycenter-Sinkhorn-Gaussian}
\bar{\mu} = \argmin_{\mu \in \Gauss(\H)}\sum_{i=1}^N w_i \Srm^{\ep}_{d^2}(\mu, \mu_i), \;\;\;\sum_{i=1}^Nw_i = 1, w_i > 0, 1 \leq i \leq N.
\end{equation}

In contrast to the entropic barycenter problem, the debiased Sinkhorn barycenter problem has a consistent generalization to the infinite-dimensional setting, with a unique solution that is valid in both singular and nonsingular cases.
\begin{theorem}
	[\textbf{Sinkhorn barycenter of Gaussian measures}]
	\label{theorem:barycenter-sinkhorn-Gaussian}
	Consider the set of Gaussian measures $\{\Ncal(m_i, C_i)\}_{i=1}^N$ on $\H$, with $m_i \in \H$ and $C_i \in \Sym^{+}(\H) \cap \Tr(\H)$.
	Their Sinkhorn barycenter in $\Gauss(\H)$, as defined in Eq.\eqref{equation:barycenter-Sinkhorn-Gaussian}, is the unique Gaussian measure $\bar{\mu} = \Ncal(\bar{m}, \bar{C})$, where
	%\begin{equation}
	$\bar{m} = \sum_{i=1}^Nw_i m_i$
	%\end{equation}
	and $\bar{C}$ is the unique solution of the following equation, with $c_{\ep} =\frac{4}{\ep}$,
	\begin{align}
	\label{equation:barycenter-sinkhorn-positive}
	%\begin{aligned}
	X & = \left(I+ \left(I + c_{\ep}^2X^2\right)^{1/2} \right)^{1/2} \sum_{i=1}^Nw_i\left[C_i^{1/2}\left(I+\left(I + c_{\ep}^2C_i^{1/2}XC_i^{1/2}\right)^{1/2}\right)^{-1}C_i^{1/2}\right]
	\nonumber
	\\
	& \quad \times \left(I+ \left(I + c_{\ep}^2X^2\right)^{1/2} \right)^{1/2}.
	%\end{aligned}
	\end{align}
	Furthermore, $\bar{C}$ is strictly positive if and only if
	\begin{equation}
	\label{equation:barycenter-condition-strictlypositive}
	\sum_{i=1}^Nw_iC_i > 0.
	\end{equation}
	Under the additional hypothesis that $\bar{C} > 0$, $\bar{C}$ is equivalently the unique {\it strictly positive} solution of the following equation
	\begin{equation}
	\label{equation:barycenter-sinkhorn-strictlypositive}
	X  = \frac{1}{c_{\ep}}\left[- I +  \left(\sum_{i=1}^Nw_i\left(I + c_{\ep}^2X^{1/2}C_iX^{1/2}\right)^{1/2}\right)^2\right]^{1/2}.
	\end{equation}
\end{theorem}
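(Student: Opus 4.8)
The plan is to separate the means, reduce to a strictly convex minimization over covariance operators, read off its first-order condition, and then settle existence and uniqueness. By Theorem~\ref{theorem:Sinkhorn-Gaussian-Hilbert}, $\Srm^{\ep}_{d^2}(\Ncal(m,X),\Ncal(m_i,C_i))=\|m-m_i\|^2+G_i(X)$ with $G_i$ depending only on $X$ and $C_i$, so the objective in \eqref{equation:barycenter-Sinkhorn-Gaussian} splits as $\sum_i w_i\|m-m_i\|^2$, which is strictly convex in $m$ with unique minimizer $\bar m=\sum_i w_i m_i$, plus $G(X):=\sum_{i=1}^N w_i\,\Srm^{\ep}_{d^2}(\Ncal(0,X),\Ncal(0,C_i))$, to be minimized over $X\in\Sym^{+}(\H)\cap\Tr(\H)$. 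Theorem~\ref{theorem:positivity} gives $G\geq 0$, Theorem~\ref{theorem:strict-convexity-Sinkhorn} gives strict convexity of $G$, and Theorem~\ref{theorem:differentiability} extends $G$ to a strictly convex, twice Fr\'echet differentiable function on the open convex set $\Omega=\bigcap_{i=1}^N\{X\in\Sym(\H)\cap\Tr(\H):I+c_{\ep}^2 C_i^{1/2}XC_i^{1/2}>0\}\supset\Sym^{+}(\H)\cap\Tr(\H)$, where $c_{\ep}=4/\ep$.

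From \eqref{equation:gauss-sinkhorn-infinite} and $\sum_i w_i=1$ one gets, up to an $X$-independent constant, $G(X)=\tfrac{\ep}{4}h(M^{\ep}_{00}(X))-\tfrac{\ep}{2}\sum_{i=1}^N w_i\,h(M^{\ep}_{0i}(X))$, where $h(A):=\trace(A)-\log\det(I+\tfrac12 A)$, $M^{\ep}_{00}(X)=-I+(I+c_{\ep}^2X^2)^{1/2}$, and $M^{\ep}_{0i}(X)=-I+(I+c_{\ep}^2 C_i^{1/2}XC_i^{1/2})^{1/2}$ — here I replace $X^{1/2}C_iX^{1/2}$ by $C_i^{1/2}XC_i^{1/2}$, which has the same nonzero spectrum (hence the same trace and Fredholm determinant) but is linear in $X$. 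Differentiating with $D\log\det(I+A)[H]=\trace[(I+A)^{-1}DA[H]]$ and the spectral identity $\trace[\psi(W)\,DW[H]]=\trace[\tfrac{\psi(W)}{2W}D(W^2)[H]]$ (so that only the linear term $D(W^2)[H]$ enters), and simplifying via $W^{-1}(I-(I+W)^{-1})=(I+W)^{-1}$, the trace and log-determinant contributions combine into
\begin{align*}
\nabla G(X)=\tfrac{\ep c_{\ep}^2}{4}\Big[X\big(I+(I+c_{\ep}^2X^2)^{1/2}\big)^{-1}-\sum_{i=1}^N w_i\,C_i^{1/2}\big(I+(I+c_{\ep}^2 C_i^{1/2}XC_i^{1/2})^{1/2}\big)^{-1}C_i^{1/2}\Big].
\end{align*}
Setting $\nabla G(X)=0$ and conjugating by $(I+(I+c_{\ep}^2X^2)^{1/2})^{1/2}$ — which commutes with $X$ as a function of $X$ — gives precisely \eqref{equation:barycenter-sinkhorn-positive}; conversely any solution of \eqref{equation:barycenter-sinkhorn-positive} in $\Sym^{+}(\H)\cap\Tr(\H)$ is a critical point of the convex function $G$, hence its global minimizer, and strict convexity makes it unique.

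What remains — and is the crux — is existence of the minimizer. In contrast to finite dimensions, $G$ is \emph{not} coercive on $\Sym^{+}(\H)\cap\Tr(\H)$: for $X_n=\tfrac{1}{n}\sum_{k=1}^n e_k\otimes e_k$ the trace is constant while $G(X_n)\to G(0)$. I would build the minimizer by finite-dimensional approximation: with $P_n$ the orthogonal projection onto the span of the first $n$ eigenvectors of $\sum_i w_i C_i$, the finite-dimensional Sinkhorn barycenter equation on $P_n\H$ has, by finite-dimensional coercivity and strict convexity, a unique solution $\bar C_n\geq 0$; from the critical-point relation one has $\trace\big[\bar C_n\big(I+(I+c_{\ep}^2\bar C_n^2)^{1/2}\big)^{-1}\big]\leq\tfrac12\sum_i w_i\trace(C_i)$ (the middle factor being $\leq\tfrac12 I$) which, together with $G(\bar C_n)\leq G(0)$, yields a uniform trace-norm bound and precompactness of $\{\bar C_n\}$; any limit point $\bar C\in\Sym^{+}(\H)\cap\Tr(\H)$ solves \eqref{equation:barycenter-sinkhorn-positive} and is therefore the unique minimizer. (Alternatively, one analyzes the fixed-point map on the right-hand side of \eqref{equation:barycenter-sinkhorn-positive} directly.) Establishing this uniform bound and the passage to the limit is the technically delicate part.

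For the refinements, testing \eqref{equation:barycenter-sinkhorn-positive} on $v\in\ker\bar C$ — where $I+(I+c_{\ep}^2\bar C^2)^{1/2}$ acts as $2I$ — forces $\langle C_i^{1/2}v,(I+(I+c_{\ep}^2 C_i^{1/2}\bar CC_i^{1/2})^{1/2})^{-1}C_i^{1/2}v\rangle=0$, hence $C_i v=0$ for every $i$; the reverse implication follows from the same equation, so $\ker\bar C=\bigcap_i\ker C_i=\ker(\sum_i w_i C_i)$, which gives the criterion \eqref{equation:barycenter-condition-strictlypositive}, i.e. $\bar C>0\iff\sum_i w_i C_i>0$. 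When $\bar C>0$, inverting $\bar C$ and using the identities $\tfrac{\lambda}{1+\sqrt{1+c_{\ep}^2\lambda^2}}=\tfrac{\sqrt{1+c_{\ep}^2\lambda^2}-1}{c_{\ep}^2\lambda}$ and $C_i^{1/2}\big(I+(I+c_{\ep}^2 C_i^{1/2}XC_i^{1/2})^{1/2}\big)^{-1}C_i^{1/2}=\tfrac{1}{c_{\ep}^2}X^{-1/2}\big((I+c_{\ep}^2 X^{1/2}C_iX^{1/2})^{1/2}-I\big)X^{-1/2}$ (both following from the functional-calculus relation $A\,f(BA)=f(AB)\,A$) rewrites $\nabla G(\bar C)=0$ as $(I+c_{\ep}^2\bar C^2)^{1/2}=\sum_i w_i(I+c_{\ep}^2\bar C^{1/2}C_i\bar C^{1/2})^{1/2}$; squaring and solving for $\bar C$ produces \eqref{equation:barycenter-sinkhorn-strictlypositive}, whose strictly positive solution is unique by strict convexity of $G$.
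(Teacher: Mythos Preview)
Your reduction to centered measures, the derivation of the first-order condition, the strict positivity criterion via kernels, and the equivalence with \eqref{equation:barycenter-sinkhorn-strictlypositive} are all correct and essentially match the paper (Proposition~\ref{proposition:derivation-equation-barycenter}); the paper obtains the criterion $\bar C>0\iff\sum_iw_iC_i>0$ by a sandwich $c_1\sum_iw_iC_i\le B\le\tfrac12\sum_iw_iC_i$ rather than your kernel test, but the content is the same.

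The genuine gap is existence. Your claimed bound $\trace\big[\bar C_n\big(I+(I+c_{\ep}^2\bar C_n^2)^{1/2}\big)^{-1}\big]\le\tfrac12\sum_iw_i\trace(C_i)$ does \emph{not} yield a trace-norm bound on $\bar C_n$, because $\lambda\mapsto\lambda/(1+\sqrt{1+c_{\ep}^2\lambda^2})$ saturates at $1/c_{\ep}$; without an independent operator-norm bound on $\bar C_n$ this inequality controls only the number of large eigenvalues, not their size. And even a uniform trace-norm bound would not give precompactness in $\Tr(\H)$ in infinite dimensions, so the passage to a limit point needs a separate compactness mechanism that you have not supplied.

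The paper resolves existence differently: it applies the Schauder fixed point theorem directly to the map $\Fcal$ in \eqref{equation:map-fixedpoint-barycenter-sinkhorn}. The two nontrivial ingredients are (i) $\Fcal$ is a \emph{compact} map on $\Sym^{+}(\H)$ in the operator norm (Proposition~\ref{proposition:compact-fixedpoint-map}), which ultimately comes from $C_i^{1/2}\in\HS(\H)$, and (ii) $\Fcal$ maps the closed bounded convex set $\{0\le X\le\gamma I\}$ into itself (Proposition~\ref{proposition:FX-upperbound}, via the operator inequality of Lemma~\ref{lemma:XFX-upperbound}). Schauder then gives a fixed point, and strict convexity gives uniqueness. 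Your parenthetical ``alternatively, one analyzes the fixed-point map directly'' is exactly this route; the substance of the paper's proof lies in establishing (i) and (ii), which is where the real work is.
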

Define the following map $\Fcal:\Sym^{+}(\H) \mapto \Sym^{+}(\H)$ by
\begin{align}
\label{equation:map-fixedpoint-barycenter-sinkhorn}
%\begin{aligned}
\Fcal(X) &=  \left(I+ \left(I + c_{\ep}^2X^2\right)^{1/2} \right)^{1/2} \sum_{i=1}^Nw_i\left[C_i^{1/2}\left(I+\left(I + c_{\ep}^2C_i^{1/2}XC_i^{1/2}\right)^{1/2}\right)^{-1}C_i^{1/2}\right]
\nonumber
\\
& \quad \quad \quad \quad \times \left(I+ \left(I + c_{\ep}^2X^2\right)^{1/2} \right)^{1/2}.
%\end{aligned}
\end{align}
Then the unique solution of Eq.\eqref{equation:barycenter-sinkhorn-positive} is the unique fixed point of $\Fcal$.

{\bf Limiting cases}. When $\ep \approach 0$, both Eqs. \eqref{equation:entropic-barycenter-Gaussian-strictlypositive} and \eqref{equation:barycenter-sinkhorn-strictlypositive} 
become
\begin{align}
	X &= \sum_{i=1}^Nw_i(X^{1/2}C_iX^{1/2})^{1/2}.
\end{align}
In the 
%case $\dim(\H) < \infty$, 
finite-dimensional setting, 
this is the barycenter equation for the exact $2$-Wasserstein distance \cite{Agueh:2011barycenters}, assuming that $\bar{C} > 0$. As of the current writing, to the best of our knowledge, a rigorous proof
for the infinite-dimensional case has not yet been established. We note that the proof given in \cite{Mallasto:NIPS2017Wasserstein}, which uses the transport map in \cite{cuesta1996:WassersteinHilbert} to compute gradients, is only applicable in the case $\dim(\H) < \infty$, since the transport map is generally unbounded 
when $\dim(\H) = \infty$, see also the discussion in \cite{masarotto2019procrustes}.
%\cite{Masarotto:2018Procrustes}.

\begin{theorem}
	[{Singular solutions of fixed point equations}]
	\label{theorem:singular-fixedpoint}
	Let $\dim(\H)\geq 2$. The following equations have uncountably infinitely many positive, singular solutions,
	apart from the trivial solution $X_0 = 0$. Here $c_{\ep} = \frac{4}{\ep}$.
	\begin{enumerate}
		\item Exact $2$-Wasserstein barycenter, $\sum_{i=1}^Nw_iC_i > 0$,
	\begin{align}
	X &= \sum_{i=1}^Nw_i(X^{1/2}C_iX^{1/2})^{1/2}.
	\label{equation:barycenter-exact}
	\end{align}
	Without the condition $\sum_{i=1}^Nw_iC_i > 0$, this equation always has at least one positive, nonzero singular solution.
	
	\item Entropic Wasserstein barycenter, $2\leq \dim(\H) < \infty$, $0 < \ep I < 2\sum_{i=1}^Nw_iC_i$,
	\begin{align}
	X = \frac{1}{c_{\ep}}\sum_{i=1}^Nw_i\left[-I + \left(I + c_{\ep}^2X^\frac{1}{2}C_iX^\frac{1}{2}\right)^\frac{1}{2}\right].
	\end{align}
	\item Sinkhorn barycenter (second version), $\sum_{i=1}^Nw_iC_i > 0$,
	\begin{align}
	X &= \frac{1}{c_{\ep}}\left[- I +  \left(\sum_{i=1}^Nw_i\left(I + c_{\ep}^2X^{1/2}C_iX^{1/2}\right)^{1/2}\right)^2\right]^{1/2}.
	%, \;\;c_{\e} = \frac{4}{\ep}.
	\end{align}	
	Without the condition $\sum_{i=1}^Nw_iC_i > 0$, this equation always has at least one positive, nonzero singular solution.
\end{enumerate}
\end{theorem}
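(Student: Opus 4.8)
The plan is to produce the claimed solutions explicitly as rank-one operators, using $\dim(\H)\ge 2$. Fix a unit vector $e\in\H$, write $P_e=e\otimes e$ for the rank-one projection $P_e x=\la e,x\ra e$, and look for a solution of the form $X=t\,P_e$, $t>0$. Since $P_eC_iP_e=\la e,C_ie\ra P_e$, one has $X^{1/2}C_iX^{1/2}=t\,a_i\,P_e$ with $a_i:=\la e,C_ie\ra\ge 0$, and $(I+\lambda P_e)^{1/2}=I+(\sqrt{1+\lambda}-1)P_e$ for $\lambda\ge -1$. Substituting $X=tP_e$ into each of the three fixed-point maps, every operator occurring is a linear combination of $I$ and $P_e$, so the operator equation collapses to one scalar equation $t=\phi_e(t)$ with $\phi_e(0)=0$: for the exact $2$-Wasserstein equation \eqref{equation:barycenter-exact}, $\phi_e(t)=t^{1/2}\sum_i w_i\sqrt{a_i}$; for the entropic barycenter equation in case~(2), $\phi_e(t)=\tfrac{1}{c_\ep}\sum_i w_i\bigl(\sqrt{1+c_\ep^2 a_i t}-1\bigr)$; for the Sinkhorn equation \eqref{equation:barycenter-sinkhorn-strictlypositive} in case~(3), $\phi_e(t)=\tfrac{1}{c_\ep}\bigl[\bigl(\sum_i w_i\sqrt{1+c_\ep^2 a_i t}\bigr)^2-1\bigr]^{1/2}$, equivalently $\sum_i w_i\sqrt{1+c_\ep^2 a_i t}=\sqrt{1+c_\ep^2 t^2}$.

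Next I would show $\phi_e$ has a positive fixed point for an uncountable family of directions $e$. In each case $\phi_e$ is continuous with $\phi_e(0)=0$ and $\phi_e(t)=O(\sqrt t)$ as $t\to\infty$, so $\phi_e(t)<t$ for large $t$; by the intermediate value theorem it suffices that $\phi_e(t)>t$ for small $t>0$. For (1) this holds whenever some $a_i>0$, since $\phi_e(t)/t=t^{-1/2}\sum_i w_i\sqrt{a_i}\to\infty$ — the positive root being simply $t_e=\bigl(\sum_i w_i\sqrt{a_i}\bigr)^2$. For (3), $\phi_e(t)=\sqrt{t\,\la e,\sum_i w_iC_ie\ra}\,(1+o(1))$ as $t\to 0^+$, so again $\phi_e(t)/t\to\infty$ whenever $\la e,\sum_i w_iC_ie\ra>0$. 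For (2), $\phi_e$ is concave with $\phi_e'(0)=\tfrac{c_\ep}{2}\la e,\sum_i w_iC_ie\ra=\tfrac{2}{\ep}\la e,\sum_i w_iC_ie\ra$, and $\phi_e'(0)>1\equivalent\la e,\bigl(2\sum_i w_iC_i-\ep I\bigr)e\ra>0$; concavity then also makes the positive root unique (matching the scalar picture in the Remark after Theorem~\ref{theorem:entropic-barycenter-Gaussian}). The hypothesis $\sum_i w_iC_i>0$ (cases (1),(3)), resp.\ $0<\ep I<2\sum_i w_iC_i$ (case (2)), is precisely what makes the relevant inequality hold for \emph{every} unit $e$.

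The counting step is then immediate: each admissible $e$ yields $t_e>0$ and a nonzero positive solution $X_e=t_eP_e$, which is trace class and, because $\dim(\H)\ge 2$, singular; distinct lines $\R e\ne\R e'$ give distinct ranges, hence $X_e\ne X_{e'}$, and the set of lines in $\H$ is uncountable once $\dim(\H)\ge 2$. This gives uncountably many positive singular solutions, none equal to $X_0=0$. For the addendum to (1) and (3), drop $\sum_i w_iC_i>0$: provided the $C_i$ are not all zero (otherwise $X=0$ is the only solution), the set $\{e:\la e,\sum_i w_iC_ie\ra>0\}$ is a nonempty relatively open subset of the unit sphere, hence uncountable, and every such $e$ produces a solution; in particular at least one nonzero singular solution exists.

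All the computations here are mechanical; the only point requiring care is that the existence threshold for the scalar equation matches the stated operator inequalities exactly — i.e.\ that $\phi_e'(0)>1$ in case (2), resp.\ the $\sqrt t$ blow-up of $\phi_e$ at $0$ in cases (1) and (3), is equivalent to $\la e,(2\sum_i w_iC_i-\ep I)e\ra>0$, resp.\ $\la e,\sum_i w_iC_ie\ra>0$, for all unit $e$. Conceptually, the message of the theorem is that these rank-one solutions are \emph{spurious}: they solve the fixed-point reformulations \eqref{equation:barycenter-exact}, \eqref{equation:entropic-barycenter-Gaussian-strictlypositive} and \eqref{equation:barycenter-sinkhorn-strictlypositive} but not the underlying first-order optimality conditions (e.g.\ Eq.~\eqref{equation:entropic-barycenter-Gaussian}), which is why the genuine barycenters remain unique — a contrast worth stressing alongside the construction.
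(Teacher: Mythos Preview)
Your proposal is correct and follows essentially the same approach as the paper: the paper's proof combines Propositions~\ref{proposition:entropic-barycenter-Gaussian-strictlypositive}, \ref{proposition:G-infinitely-many-fixedpoint}, and \ref{proposition:G-ep0-infinitely-many-fixedpoint}, each of which constructs rank-one fixed points $X_u = x_u(u\otimes u)$ by reducing the operator equation to the same scalar fixed-point equations you derive, with the existence of a positive root handled by Lemmas~\ref{lemma:entropic-OT-barycenter-1D} and \ref{lemma:F-fixedpoint-1D} (your IVT/concavity argument is an equivalent packaging of these). Your identity $(I+\lambda P_e)^{1/2}=I+(\sqrt{1+\lambda}-1)P_e$ is exactly the paper's decomposition $(I-u\otimes u)+(1+\lambda)^{1/2}(u\otimes u)$ via Lemma~\ref{lemma:rank-one-operator-power}, and your counting argument (distinct lines give distinct ranges) matches the paper's observation following each proposition.
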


{\bf Comparison of Eqs. \eqref{equation:barycenter-sinkhorn-positive} and \eqref{equation:barycenter-sinkhorn-strictlypositive}}.
Eq.\eqref{equation:barycenter-sinkhorn-positive} is general and is always valid whether the covariance operators $C_i$'s and the barycenter $\bar{C}$ are singular or nonsingular.  
This equation always has a unique solution, which can be positive and singular or strictly positive.
Furthermore, this solution is strictly positive if and only if $\sum_{i=1}^Nw_iC_i > 0$. 

Eq.\eqref{equation:barycenter-sinkhorn-strictlypositive} has the same form as the finite-dimensional version reported in 
\cite{Mallasto2020entropyregularized} and \cite{Janati2020entropicOT}.
It is, however, only applicable for finding the barycenter in the case it is {\it strictly positive}, since it is derived under this explicit assumption.
If the solution of Eq.\eqref{equation:barycenter-sinkhorn-positive} is strictly positive, then it is also the unique strictly positive solution of Eq.\eqref{equation:barycenter-sinkhorn-strictlypositive}.
Eq.\eqref{equation:barycenter-sinkhorn-strictlypositive}, however, 
always has the trivial solution $X=0$. Furthermore, 
%as proved in Theorem \ref{theorem:singular-fixedpoint}, 
if $\dim(\H) \geq 2$ and at least one of the $C_i$'s is strictly positive,  
%$\sum_{i=1}^Nw_iC_i > 0$, 
then it has {\it uncountably infinitely many positive solutions}, which are singular (Proposition \ref{proposition:G-infinitely-many-fixedpoint}). The same phenomenon happens for the
% following 
barycenter
equation \eqref{equation:barycenter-exact} in the exact, unregularized setting \cite{Agueh:2011barycenters}, i.e. when $\ep = 0$.
% (Proposition \ref{proposition:G-ep0-infinitely-many-fixedpoint})
%\begin{equation}
%\label{equation:fixedpoint-unregularized}
%X = \sum_{i=1}^Nw_i(X^{1/2}C_iX^{1/2})^{1/2}.
%\end{equation}

As we discuss in detail in Section \ref{section:compare-barycenter}, it is not straightforward to extend the
approach in \cite{Agueh:2011barycenters} for Eq.\eqref{equation:barycenter-exact} and \cite{Janati2020entropicOT} for Eq.\eqref{equation:barycenter-sinkhorn-strictlypositive}
in the finite-dimensional setting, which requires all $C_i$'s to be strictly positive for the existence
of $\bar{C}>0$, to the infinite-dimensional setting. This is because it is no longer possible to uniformly lower bound
the $C_i$'s by $\alpha I$ for some $\alpha >0$ and it is not clear whether this lower bound can be replaced by another strictly positive operator.

We also remark on our condition $\sum_{i=1}^Nw_iC_i > 0$ for the strict positivity of $\bar{C}$, which is more  general
than requiring all $C_i$'s to be strictly positive (e.g. \cite{Janati2020entropicOT}).
 In fact, we can have $\sum_{i=1}^Nw_iC_i > 0$, guaranteeing $\bar{C} > 0$,  with all $C_i$'s being singular (see Section \ref{section:compare-barycenter} for an example).

{\bf The RKHS setting}.
We now apply the abstract Hilbert space setting above to the reproducing kernel Hilbert space (RKHS) setting. In this case, we obtain an interpolation between Kernel Maximum Mean Discrepancy (MMD) and Kernelized $\Lcal^2$-Wasserstein Distance. The RKHS formulas are expressed explicitly in terms of the kernel Gram matrices, which are readily computable.

Let {$\X$} be a complete separable metric space.
Let 
{$K$} be a continuous positive definite kernel on {$\X \times \X$}. Then the reproducing kernel Hilbert space (RKHS) {$\H_K$}
induced by {$K$} is separable (\cite{Steinwart:SVM2008}, Lemma 4.33).
%which
%gives rise to a feature space $\H_K$ (the RKHS induced by K).
Let {$\Phi: \X \mapto \H_K$} be the corresponding canonical feature map, so that 
%\begin{align}
%\label{equation:feat-map}
$K(x,y) = \la \Phi(x), \Phi(y)\ra_{\H_K}$ $\forall (x,y) \in \X \times \X$.
%\end{align}
%Assume furthermore that $\X$ is a complete separable metric space. 
Let $\rho$ be a Borel probability measure on $\X$ such that
\begin{align}
	\label{equation:Phi-finite-secondmoment}
\int_{\X}||\Phi(x)||_{\H_K}^2d\rho(x) = \int_{\X}K(x,x)d\rho(x) < \infty.
\end{align}
Then the RKHS mean vector $\mu_{\Phi} \in \H_K$ and covariance operator {$C_{\Phi}:\H_K \mapto \H_K$} induced by the feature map $\Phi$ are both well-defined and are given by
%by and is a positive trace class operator on {\small$\H_K$}
\begin{align}
\mu_{\Phi} &= \int_{\X}\Phi(x)d\rho(x) \in \H_K, 
\\\;\;\;
C_{\Phi} &= \int_{\X}(\Phi(x)-\mu_{\Phi})\otimes (\Phi(x)-\mu_{\Phi})d\rho(x).
\end{align}
Here the rank-one operator $u \otimes v$ is defined by $(u\otimes v)w = \la v,w\ra_{\H_K}u$, $u,v,w \in \H_K$.
Then {$C_{\Phi}$} is a positive trace class operator on $\H_K$ (see e.g. \cite{Minh:Covariance2017}).

%where {\small$\Phi(\x)^{*}:\H_K \mapto \R^m$} is the adjoint operator of {\small$\Phi(\x)$} and {\small$J_m$} is the centering matrix, defined by
%\begin{equation}
%{\small$J_m = I_m -\frac{1}{m}\1_m\1_m^T$}
%, \;\;\; \
%text
%with
%\;\;\; 
%{\small$\1_m = (1, \ldots, 1)^T \in \R^m$}.
%\;\;\; \text{then}\;\;\;\overline{\x} = \x J_m.
%\end{equation}
%The matrix $J_m$ is symmetric,  
%with $\rank(J_m) = m-1$, and satisfying $J_m^2 = J_m$. 
%The covariance operator $C_{\Phi(\x)}$  can be viewed as a 
%(infinite) 
%covariance matrix in the feature space $\H_K$, with rank at most $m-1$.

Let {$\Xbf =[x_1, \ldots, x_m]$,$m \in \Nbb$,} be a data matrix randomly sampled from {$\X$} according to a Borel probability distribution $\rho$ {\color{black}satisfying Eq.\eqref{equation:Phi-finite-secondmoment}}, where {$m \in \Nbb$} is the number of observations.
The feature map {$\Phi$} on {$\Xbf$} 
defines
the bounded linear operator
%\begin{align}
$\Phi(\Xbf): \R^m \mapto \H_K, \Phi(\Xbf)\b = \sum_{j=1}^mb_j\Phi(x_j) , \b \in \R^m$.
The corresponding empirical mean vector and covariance operator for {$\Phi(\Xbf)$}
% (RKHS covariance operators are described in more detail in \cite{Minh:LogDet2016})
are defined to be
\begin{align}
\mu_{\Phi(\Xbf)} &= \frac{1}{m}\sum_{j=1}^m\Phi(x_j) = \frac{1}{m}\Phi(\Xbf)\1_m,
\label{equation:mean-RKHS}
\\
C_{\Phi(\Xbf)} &= \frac{1}{m}\Phi(\Xbf)J_m\Phi(\Xbf)^{*}: \H_K \mapto \H_K,
\label{equation:covariance-operator}
% \text{$J_m = I_m -\frac{1}{m}\1_m\1_m^T$}
\end{align}
%where {$\Phi(\Xbf)^{*}:\H_K \mapto \R^m$} is the adjoint operator of {$\Phi(\Xbf)$} and 
where $J_m = I_m -\frac{1}{m}\1_m\1_m^T,\1_m = (1, \ldots, 1)^T \in \R^m$, is the centering matrix,
with $J_m^2 = J_m$ and $AJ_m$ is the matrix obtained from the (possibly infinite) matrix $A$ by subtracting the mean column.

Let {$\Xbf = [x_i]_{i=1}^m$, $\Ybf = [y_i]_{i=1}^m$}, be two random data matrices sampled from {$\X$} according to two Borel probability distributions $\rho_0$ and $\rho_1$ on $\X$, {\color{black}both satisfying 
	Eq.\eqref{equation:Phi-finite-secondmoment}}. Let $\mu_{\Phi(\Xbf)}, \mu_{\Phi(\Ybf)}$ and $C_{\Phi(\Xbf)}$, $C_{\Phi(\Ybf)}$
be the corresponding mean vectors and covariance operators induced by
% the kernel 
$K$, respectively.
Let us derive
the explicit expression for $\OT^{\epsilon}_{d^2}(\mu_0, \mu_1)$ and $S^{\epsilon}_{d^2}(\mu_0, \mu_1)$
when $\mu_0 \sim \Ncal(\mu_{\Phi(\Xbf)}, C_{\Phi(\Xbf)})$, $\mu_1 \sim \Ncal(\mu_{\Phi(\Ybf)}, C_{\Phi(\Ybf)})$. Define the following $m \times m$ Gram matrices
\begin{equation}
\begin{aligned}
K[\Xbf] = \Phi(\Xbf)^{*}\Phi(\Xbf),\;K[\Ybf] = \Phi(\Ybf)^{*}\Phi(\Ybf), K[\Xbf,\Ybf] = \Phi(\Xbf)^{*}\Phi(\Ybf),
\\
{\color{black}(K[\Xbf])_{ij} = K(x_i, x_j), (K[\Ybf])_{ij} = K(y_i, y_j), (K[\Xbf,\Ybf])_{ij} = K(x_i, y_j)},
\\
{\color{black}1 \leq i,j \leq m}.
%\nonumber
\end{aligned}
\end{equation}
%{\color{black}
%\begin{remark} For simplicity, we set the number of $x_i$'s in $\Xbf$ and $y_i$'s in $\Ybf$ to be both equal to $m$.
%	It is straightforward to extend this setting to $\Xbf = [x_i]_{i=1}^m$ and $\Ybf = [y_i]_{i=1}^n$ for
%	$m, n \in\Nbb$.
%\end{remark}
%}

\begin{theorem}
	\label{theorem:RKHS-distance}
	Let $\epsilon > 0$ be fixed.
	% Define $M^{\epsilon}_{ij} = {\frac{1}{2}I + \frac{1}{2}(I + \frac{16}{\epsilon^2}K_j^{1/2}K_iK_j^{1/2})^{1/2}}$, $i,j=1,2$. Then
	For $\mu_0 =\Ncal(\mu_{\Phi(\Xbf)}, C_{\Phi(\Xbf)})$, $\mu_1 = \Ncal(\mu_{\Phi(\Ybf)}, C_{\Phi(\Ybf)})$,
	%\begin{equation}
	\begin{align}
	\OT^{\epsilon}_{d^2}(\mu_0, \mu_1) &= \frac{1}{m^2}\1_m^T(K[\Xbf] + K[\Ybf] - 2K[\Xbf,\Ybf])\1_m
	\nonumber
	\\
	&+ \frac{1}{m}\trace(K[\Xbf]J_m) + \frac{1}{m}\trace(K[\Ybf]J_m) 
	%\nonumber
	\\&
	%-2\trace\left[\frac{1}{m^2}K[\Xbf,\Ybf]J_mK[\Ybf,\Xbf]J_m - \frac{\epsilon}{8}\left(-\epsilon I + \left(\epsilon^2I + \frac{16}{m^2} K[\Xbf,\Ybf]J_mK[\Ybf,\Xbf]J_m\right)^{1/2}\right)\right]^{1/2}
	-\frac{\epsilon}{2}\trace\left[-I + \left(I + \frac{16}{\epsilon^2m^2} J_mK[\Xbf,\Ybf]J_mK[\Ybf,\Xbf]J_m\right)^{1/2}\right]
	\nonumber
	\\
	&+\frac{\epsilon}{2}\log\det\left(\frac{1}{2}I + \frac{1}{2}\left(I + \frac{16}{\epsilon^2m^2}J_mK[\Xbf,\Ybf]J_mK[\Ybf,\Xbf]J_m \right)^{1/2}\right).
	\nonumber
	\end{align}
	%\end{equation}
	\begin{align}
	S^{\epsilon}_{d^2}(\mu_0, \mu_1) &= \frac{1}{m^2}\1_m^T(K[\Xbf] + K[\Ybf] - 2K[\Xbf,\Ybf])\1_m
	\nonumber
	\\
	& 
	%\trace\left[\frac{1}{m^2}(K[\Xbf]J_m)^2 - \frac{\epsilon}{8}\left(-\epsilon I + \left(\epsilon^2I + \frac{16}{m^2} (K[\Xbf]J_m)^2\right)^{1/2}\right)\right]^{1/2}
	+\frac{\epsilon}{4}\trace\left[- I + \left(I + \frac{16}{\epsilon^2m^2} (J_mK[\Xbf]J_m)^2\right)^{1/2}\right]
	\nonumber
	\\
	&+ %\trace\left[\frac{1}{m^2}(K[\Ybf]J_m)^2 - \frac{\epsilon}{8}\left(-\epsilon I + \left(\epsilon^2I + \frac{16}{m^2} (K[\Ybf]J_m)^2\right)^{1/2}\right)\right]^{1/2}
	\frac{\epsilon}{4}\trace\left[- I + \left(I + \frac{16}{\epsilon^2m^2} (J_mK[\Ybf]J_m)^2\right)^{1/2}\right]
	\nonumber
	\nonumber
	\\
	& -%2\trace\left[\frac{1}{m^2}K[\Xbf,\Ybf]J_mK[\Ybf,\Xbf]J_m - \frac{\epsilon}{8}\left(-\epsilon I + \left(\epsilon^2I + \frac{16}{m^2} K[\Xbf,\Ybf]J_mK[\Ybf,\Xbf]J_m\right)^{1/2}\right)\right]^{1/2}
	\frac{\epsilon}{2}\trace\left[- I + \left(I + \frac{16}{\epsilon^2m^2} J_mK[\Xbf,\Ybf]J_mK[\Ybf,\Xbf]J_m\right)^{1/2}\right]
	\nonumber
	\nonumber
	\\
	& + \frac{\epsilon}{2}\log\det\left(\frac{1}{2}I + \frac{1}{2}\left(I + \frac{16}{\epsilon^2m^2}J_mK[\Xbf,\Ybf]J_mK[\Ybf,\Xbf]J_m \right)^{1/2}\right)
	\nonumber
	\\
	& - \frac{\epsilon}{4}\log\det\left(\frac{1}{2}I + \frac{1}{2}\left(I + \frac{16}{\epsilon^2m^2}(J_mK[\Xbf]J_m)^2 \right)^{1/2}\right)
	\nonumber
	\\
	& - \frac{\epsilon}{4}\log\det\left(\frac{1}{2}I + \frac{1}{2}\left(I + \frac{16}{\epsilon^2m^2}(J_mK[\Ybf]J_m)^2 \right)^{1/2}\right).
	\end{align}
	%In particular, 	
	As $\epsilon \approach \infty$, we recover the empirical squared Kernel MMD distance \cite{Gretton:MMD12a}
	\begin{align}
	\lim_{\epsilon \approach \infty}S^{\epsilon}_{d^2}(\mu_0, \mu_1) 
	&= 
	||\mu_{\Phi(\Xbf)} - \mu_{\Phi(\Ybf)}||^2_{\H_K} 
	\nonumber
	\\
	&= \frac{1}{m^2}\1_m^T(K[\Xbf] + K[\Ybf] - 2K[\Xbf,\Ybf])\1_m.
	\end{align}
	As $\ep \approach 0$, we recover the Kernelized Wasserstein Distance \cite{zhang2019:OTRKHS,Minh:2019AlphaProcrustes}
	\begin{equation}
	\begin{aligned}
	\lim_{\ep \approach 0}S^{\epsilon}_{d^2}(\mu_0, \mu_1) 
	%&= W^2_2(\mu_0, \mu_1) 
	%\\
	&=
	\frac{1}{m^2}\1^T_m[K[\Xbf] + K[\Ybf] - 2K[\Xbf,\Ybf]]\1_m
	%\nonumber
	\\
	&\quad+ \frac{1}{m}\trace(K[\Xbf]J_m) +  \frac{1}{m}\trace(K[\Ybf]J_m)
	\\
	&\quad- \frac{2}{m}\trace[J_mK[\Xbf,\Ybf]J_mK[\Ybf,\Xbf]J_m]^{1/2}.
	%\nonumber
	\end{aligned}
	\end{equation}	
\end{theorem}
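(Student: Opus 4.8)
The plan is to reduce everything to the general Gaussian formulas already established. The covariance operators $C_0 = C_{\Phi(\Xbf)} = \tfrac1m\Phi(\Xbf)J_m\Phi(\Xbf)^{*}$ and $C_1 = C_{\Phi(\Ybf)} = \tfrac1m\Phi(\Ybf)J_m\Phi(\Ybf)^{*}$ both lie in $\Sym^{+}(\H_K)\cap\Tr(\H_K)$, so Theorems~\ref{theorem:OT-regularized-Gaussian} and~\ref{theorem:Sinkhorn-Gaussian-Hilbert} apply verbatim; the task is then to rewrite every operator trace and every Fredholm determinant appearing there as a finite $m\times m$ quantity in the Gram matrices. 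First I would dispose of the ``mean'' and ``linear'' terms: expanding $\mu_{\Phi(\Xbf)} = \tfrac1m\Phi(\Xbf)\1_m$ and using $K(x,y)=\la\Phi(x),\Phi(y)\ra_{\H_K}$ gives $\|\mu_{\Phi(\Xbf)}-\mu_{\Phi(\Ybf)}\|^{2}_{\H_K} = \tfrac{1}{m^2}\1_m^{T}(K[\Xbf]+K[\Ybf]-2K[\Xbf,\Ybf])\1_m$, and cyclicity of the trace gives $\trace(C_{\Phi(\Xbf)}) = \tfrac1m\trace(\Phi(\Xbf)J_m\Phi(\Xbf)^{*}) = \tfrac1m\trace(K[\Xbf]J_m)$, and likewise for $C_{\Phi(\Ybf)}$.

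The core of the argument is a transfer lemma for the spectral quantities $C_0^{1/2}C_1C_0^{1/2}$, $C_0^2$, $C_1^2$. Since $AB$ and $BA$ have the same nonzero eigenvalues with multiplicity, and $C_0^{1/2}C_1C_0^{1/2}$ shares its nonzero spectrum with $C_0C_1 = \tfrac{1}{m^2}\Phi(\Xbf)J_mK[\Xbf,\Ybf]J_m\Phi(\Ybf)^{*}$, iterating this identity together with the idempotency $J_m^2=J_m$ shows that the positive trace-class operator $C_0^{1/2}C_1C_0^{1/2}$ on $\H_K$ has precisely the same nonzero eigenvalues (with multiplicity) as the $m\times m$ positive semidefinite matrix $\tfrac{1}{m^2}J_mK[\Xbf,\Ybf]J_mK[\Ybf,\Xbf]J_m$; similarly $C_0^2$ matches $\tfrac{1}{m^2}(J_mK[\Xbf]J_m)^2$ and $C_1^2$ matches $\tfrac{1}{m^2}(J_mK[\Ybf]J_m)^2$. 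Hence for any continuous $f$ with $f(0)=0$ we get $\trace f\big(C_i^{1/2}C_jC_i^{1/2}\big)$ equal to the $m\times m$ trace $\trace f\big(\tfrac{1}{m^2}J_mK[\Xbf,\Ybf]J_mK[\Ybf,\Xbf]J_m\big)$ (and analogously with $K[\Xbf]$, $K[\Ybf]$ for $C_0^2$, $C_1^2$), and for any $g$ with $g(0)=0$ for which $I+g(\cdot)$ has a Fredholm determinant, $\det\big(I+g(C_i^{1/2}C_jC_i^{1/2})\big)=\det\big(I_m+g(\tfrac{1}{m^2}J_mK[\Xbf,\Ybf]J_mK[\Ybf,\Xbf]J_m)\big)$, since both the trace and the Fredholm determinant depend only on the nonzero spectrum and the infinite-multiplicity eigenvalue $0$ contributes nothing. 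Applying this with $f(t)=h(t):=-1+(1+\tfrac{16}{\ep^2}t)^{1/2}$ evaluates every $\trace(M^{\ep}_{ij})$ term — these are finite because $M^{\ep}_{ij}\in\Sym^{+}(\H)\cap\Tr(\H)$ by the remark after Theorem~\ref{theorem:optimal-joint-square-gauss}, which also makes the relevant Fredholm determinants legitimate — and applying it with $g=\tfrac12 h$, upon writing $I+\tfrac12 M^{\ep}_{ij}=\tfrac12 I+\tfrac12(I+\tfrac{16}{\ep^2}C_i^{1/2}C_jC_i^{1/2})^{1/2}$, evaluates every $\det(I+\tfrac12 M^{\ep}_{ij})$ term.

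Substituting the mean, linear, trace and determinant evaluations into Eq.~\eqref{equation:gauss-entropic-2-Wasserstein-infinite} and Eq.~\eqref{equation:gauss-sinkhorn-infinite} and collecting terms gives the two displayed RKHS formulas. For the limiting cases I would quote Theorem~\ref{theorem:equivalent-formulas}: $\lim_{\ep\approach\infty}S^{\ep}_{d^2}(\mu_0,\mu_1)=\|m_0-m_1\|^2$, which by the mean-part identity is exactly the empirical squared kernel MMD; and $\lim_{\ep\approach0}S^{\ep}_{d^2}(\mu_0,\mu_1)=W_2^2(\mu_0,\mu_1)=\|m_0-m_1\|^2+\trace(C_0)+\trace(C_1)-2\trace[(C_0^{1/2}C_1C_0^{1/2})^{1/2}]$, into which I substitute the mean and linear parts and apply the transfer lemma with $f(t)=\sqrt t$ to obtain $\trace[(C_0^{1/2}C_1C_0^{1/2})^{1/2}]=\tfrac1m\trace[(J_mK[\Xbf,\Ybf]J_mK[\Ybf,\Xbf]J_m)^{1/2}]$, recovering the kernelized Wasserstein distance. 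The main obstacle is exactly the transfer lemma: one must check carefully that passing from $C_i^{1/2}C_jC_i^{1/2}$ to its Gram-matrix proxy preserves \emph{nonzero eigenvalue multiplicities}, so that the infinite-dimensional Fredholm determinant of $I+\tfrac12 M^{\ep}_{ij}$ on $\H_K$ genuinely collapses to a finite $m\times m$ determinant, and that the sequence of $J_m$-idempotency rearrangements produces the symmetric matrix $J_mK[\Xbf,\Ybf]J_mK[\Ybf,\Xbf]J_m$ rather than a non-symmetric variant with the same spectrum; once this bookkeeping is settled, the remainder is routine substitution.
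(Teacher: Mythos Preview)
Your proposal is correct and follows essentially the same approach as the paper: the paper's proof also reduces to the general Gaussian formulas in Theorems~\ref{theorem:OT-regularized-Gaussian} and~\ref{theorem:Sinkhorn-Gaussian-Hilbert}, then uses the identity $\lambda(AB)=\lambda(BA)$ (for nonzero eigenvalues) together with $J_m^2=J_m$ to match the nonzero spectra of $C_{\Phi(\Xbf)}^{1/2}C_{\Phi(\Ybf)}C_{\Phi(\Xbf)}^{1/2}$, $C_{\Phi(\Xbf)}^2$, $C_{\Phi(\Ybf)}^2$ with those of $\tfrac{1}{m^2}J_mK[\Xbf,\Ybf]J_mK[\Ybf,\Xbf]J_m$, $\tfrac{1}{m^2}(J_mK[\Xbf]J_m)^2$, $\tfrac{1}{m^2}(J_mK[\Ybf]J_m)^2$. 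Your write-up is more explicit about why the trace and Fredholm determinant depend only on the nonzero spectrum (via $f(0)=0$, $g(0)=0$), and about invoking Theorem~\ref{theorem:equivalent-formulas} for the limits, but the strategy is the same.
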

\begin{remark}
	To keep our expressions simple, we have assumed that the number of data points in 
	$\Xbf = [x_i]_{i=1}^m$ and $\Ybf = [y_i]_{i=1}^n$ are the same, i.e. $m=n$.
	The extension to the case $m \neq n$ is straightforward.
\end{remark}

\section{From Gaussian measures to Gaussian processes}
\label{section:gaussian-measure-to-processes}

Let us discuss the translation of
the results for Gaussian measures on an abstract Hilbert space $\H$ into the setting of Gaussian processes,
see also \cite{Panaretos:jasa2010,Fremdt:2013testing,Pigoli:2014,Mallasto:NIPS2017Wasserstein,masarotto2019procrustes}.
Consider the following correspondence between
Gaussian measures and Gaussian processes with paths in a Hilbert space
%, as established in 
\cite{Rajput1972gaussianprocesses}.
Let $(\Omega, \Fcal, P)$ be a probability space.  Let $T$ be an index set.
Let $(T, \Acal, \nu)$ be a measurable space, $\nu$ nonnegative, $\sigma$-finite, such that $\Lcal^2(T, \Acal, \nu)=\Lcal^2(T,\nu)$ is separable (e.g. $T \subset \R^n$ measurable, $\Acal = \Bsc(T)$, $\nu$ is the Lebesgue measure).
Let $\xi = (\xi_t)_{t \in T} = (\xi(t, \omega))_{t \in T}$ be 
a real $\Fcal/\Acal$-measurable Gaussian process on $(\Omega, \Fcal,P)$, with mean $m(t)$ and covariance function $K(s,t)$, denoted by $\GP(m,K)$. The sample paths
$\xi(\cdot, \omega) \in \H = \Lcal^2(T, \nu)$ almost $P$-surely, i.e.
%\begin{align}
$\int_{T}\xi^2(t,\omega)d\nu(t) < \infty$ {almost $P$-surely},
%\end{align}
if and only if (\cite{Rajput1972gaussianprocesses}, Theorem 2 and Corollary 1)
\begin{align}
\label{equation:condition-Gaussian-process-paths}
\int_{T}m^2(t)d\nu(t) < \infty, \;\;\; \int_{T}K(t,t)d\nu(t) < \infty.
\end{align}
Then $\xi$ induces the following Gaussian measure $P_{\xi}$ on $(\H, \Bsc(\H))$
\begin{align}
P_{\xi}(B) = P\{\omega \in \Omega: \xi(\cdot, \omega) \in B\}, \;\;\; B \in \Bsc(\H),
\end{align}
with mean $m \in \H$ and covariance operator
$C_K: \H \mapto \H$, defined by
\begin{align}
(C_Kf)(s) = \int_{T}K(s,t)f(t)d\nu(t), \;\;\; f \in \H.
\end{align}
Conversely, let $\mu$ be a Gaussian measure on $(\H = \Lcal^2(T, \Acal, \nu), \Bsc(\H))$. Then
there is an $\Fcal/\Acal$-measurable Gaussian process $\xi = (\xi_t)_{t \in T}$ on $(\Omega, \Fcal, P)$
with sample paths in $\H$, such that the induced probability measure is $P_{\xi} = \mu$.

{\bf Correspondence between covariance function and covariance operator via Mercer Theorem}.
Covariance functions, being positive definite kernels, can be fully expressed via their induced covariance operators, as follows. In the following, let $T$ be a $\sigma$-compact metric space, that is
$T = \cup_{i=1}^{\infty}T_i$, where $T_1 \subset T_2 \subset \cdots$, with each $T_i$ compact.
Let $\nu$ be  a positive, non-degenerate Borel measure on $T$,
i.e. $\nu(B) > 0$ for any open $U \subset T$, with $\nu(T_i) < \infty \forall i \in \Nbb$.

\begin{theorem}
	[\textbf{Mercer Theorem} - version in \cite{Sun2005MercerNoncompact}]
	Let $T$ be a $\sigma$-compact metric space and $\nu$ a positive, non-degenerate Borel measure on $T$.
	Let $K: T \times T\mapto \R$ be continuous, positive definite. 
	Assume furthermore that
	\begin{align}
	\label{equation:Mercer-cond1}
	&\int_{T}K(s,t)^2d\nu(t) < \infty, \;\;\;\forall s \in T.
	\\
	&
	\int_{T \times T}K(s,t)^2d\nu(s)d\nu(t) < \infty.
	\label{equation:Mercer-cond2}
	\end{align}
	Then $C_{K}$ is Hilbert-Schmidt, self-adjoint, positive. Let $\{\lambda_k\}_{k=1}^{\infty}$ be the eigenvalues of $C_{K}$, with corresponding 
	orthonormal eigenvectors $\{\phi_k\}_{k=1}^{\infty}$. Then
	\begin{align}
	\label{equation:Mercer-expansion}
	K(s,t) = \sum_{k=1}^{\infty}\lambda_k \phi_k(s)\phi_k(t),
	\end{align}
	where the series converges absolutely for each pair $(s,t) \in T \times T$ and uniformly on each compact subset of $T \times T$.
\end{theorem}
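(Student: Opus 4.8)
The plan is to combine the spectral theorem for the Hilbert--Schmidt operator $C_K$ with the reproducing kernel Hilbert space (RKHS) structure of $K$, and then to localize to the compact exhaustion $T = \bigcup_{i}T_i$ in order to upgrade pointwise to uniform convergence on compact subsets. First I would record the operator-theoretic facts. Symmetry of the positive definite kernel $K$ makes $C_K$ self-adjoint; positive definiteness gives $\la C_K f, f\ra = \int_{T\times T}K(s,t)f(s)f(t)\,d\nu(s)\,d\nu(t) \geq 0$ for all $f \in \Lcal^2(T,\nu)$, so $C_K \in \Sym^{+}(\H)$; and condition \eqref{equation:Mercer-cond2} identifies $\|C_K\|_{\HS}^2$ with $\int_{T\times T}K(s,t)^2\,d\nu(s)\,d\nu(t) < \infty$, so $C_K$ is Hilbert--Schmidt, in particular compact. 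The spectral theorem then yields eigenvalues $\lambda_k \geq 0$ and an $\Lcal^2(T,\nu)$-orthonormal system $\{\phi_k\}$ with $C_K f = \sum_k \lambda_k\la\phi_k,f\ra\phi_k$. Using condition \eqref{equation:Mercer-cond1}, the continuity of $K$, and the $\sigma$-compactness of $T$, I would show that $s \mapsto K(s,\cdot)$ is continuous from $T$ into $\Lcal^2(T,\nu)$, whence $C_K$ maps $\Lcal^2(T,\nu)$ into $C(T)$ and, for each $k$ with $\lambda_k > 0$, the function $\phi_k := \lambda_k^{-1}C_K\phi_k$ is a continuous representative of the $k$-th eigenfunction, unique by non-degeneracy of $\nu$. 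Eigenfunctions with $\lambda_k = 0$ are discarded.

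Next I would pass to the RKHS $\H_K$ of $K$. By standard RKHS theory $C_K^{1/2}$ restricts to an isometry of $(\ker C_K)^{\perp}$ onto $\H_K$, and since $C_K^{1/2}\phi_k = \sqrt{\lambda_k}\,\phi_k$ the family $\{\sqrt{\lambda_k}\,\phi_k\}_{\lambda_k>0}$ is an orthonormal basis of $\H_K$ (completeness follows because $K(s,\cdot)$ lies in its closed span); the reproducing property gives $\la K(s,\cdot), \sqrt{\lambda_k}\,\phi_k\ra_{\H_K} = \sqrt{\lambda_k}\,\phi_k(s)$. Parseval's identity in $\H_K$ then yields, for each fixed $s$, the expansion $K(s,\cdot) = \sum_k \lambda_k\phi_k(s)\phi_k(\cdot)$ convergent in $\H_K$ and hence pointwise, and in particular $K(s,s) = \|K(s,\cdot)\|_{\H_K}^2 = \sum_k \lambda_k\phi_k(s)^2$ with nonnegative terms. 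A Cauchy--Schwarz estimate, $\sum_k\lambda_k|\phi_k(s)\phi_k(t)| \leq K(s,s)^{1/2}K(t,t)^{1/2} < \infty$, then establishes \eqref{equation:Mercer-expansion} with absolute convergence at every pair $(s,t)$. I stress that it is this RKHS detour that delivers $\sum_k\lambda_k\phi_k(s)^2 = K(s,s)$ at \emph{every} point, rather than only $\nu$-almost everywhere, which is indispensable for the next step.

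For uniform convergence I would fix a compact $T_i$ from the exhaustion and apply Dini's theorem: the partial sums $g_n(s) := \sum_{k=1}^n\lambda_k\phi_k(s)^2$ form an increasing sequence of continuous functions converging pointwise on $T_i$ to the continuous function $s\mapsto K(s,s)$, so $r_n(s) := K(s,s) - g_n(s) \to 0$ uniformly on $T_i$. The Cauchy--Schwarz bound
\begin{align*}
\Big|\sum_{k>n}\lambda_k\phi_k(s)\phi_k(t)\Big| \;\leq\; r_n(s)^{1/2}r_n(t)^{1/2} \;\leq\; \sup_{u\in T_i}r_n(u) \;\longrightarrow\; 0
\end{align*}
then shows that $\sum_k\lambda_k\phi_k(s)\phi_k(t)$ converges uniformly on $T_i\times T_i$; the limit is continuous there and coincides with $K(s,t)$ by the previous step. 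Since every compact subset of $T\times T$ is contained in some $T_i\times T_i$, this is exactly the asserted uniform convergence on compacta, and the proof is complete.

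The step I expect to be the main obstacle is the claim in the first paragraph that $C_K$ maps $\Lcal^2(T,\nu)$ continuously into $C(T)$, equivalently that $s \mapsto K(s,\cdot)$ is $\Lcal^2(T,\nu)$-continuous. In the classical Mercer theorem $T$ is compact and $\nu$ finite, so this is immediate from uniform continuity of $K$; here $T$ is only $\sigma$-compact and $\nu$ only $\sigma$-finite, so one must control the tails $\int_{T\setminus T_j}K(s,t)^2\,d\nu(t)$ uniformly for $s$ in a fixed compact set, which is precisely where conditions \eqref{equation:Mercer-cond1}--\eqref{equation:Mercer-cond2} and the non-degeneracy of $\nu$ are used. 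Once this uniformity is in hand, the spectral decomposition, the RKHS reformulation, and the Dini argument on each $T_i$ are essentially routine.
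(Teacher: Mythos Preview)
The paper does not prove this theorem at all: it is stated as a quotation from \cite{Sun2005MercerNoncompact} and used as a black box to justify the correspondence between covariance functions and covariance operators in Definition~\ref{definition:divergence-Gaussian-Process}. So there is no ``paper's own proof'' to compare against.

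That said, your outline follows the standard route one finds in the literature for Mercer-type theorems on noncompact domains (including the cited reference): establish the Hilbert--Schmidt property and spectral decomposition of $C_K$, upgrade eigenfunctions to continuous representatives via the map $s\mapsto K(s,\cdot)\in\Lcal^2$, pass through the RKHS to obtain the pointwise identity $K(s,s)=\sum_k\lambda_k\phi_k(s)^2$, and then apply Dini on each $T_i$ together with Cauchy--Schwarz to get uniform convergence on compacta. You have also correctly isolated the one genuinely nontrivial point in the noncompact setting, namely the $\Lcal^2$-continuity of $s\mapsto K(s,\cdot)$ and the resulting continuity of the eigenfunctions; this is where \eqref{equation:Mercer-cond1} and the $\sigma$-compact exhaustion do real work. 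Your sketch is sound and matches what a full proof would require.
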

Mercer Theorem thus describes the covariance function $K(s,t)$ fully and explicitly via its covariance operator $C_{K}$.
Since $K$ is positive definite, $K(s,t)^2 \leq K(s,s)K(t,t)$ $\forall s,t \in T \times T$.
Thus the condition $\int_{T}K(t,t)d\nu(t) < \infty$ in \eqref{equation:condition-Gaussian-process-paths}
implies both conditions \eqref{equation:Mercer-cond1} and \eqref{equation:Mercer-cond2} in Mercer Theorem and 
from \eqref{equation:Mercer-expansion}
\begin{align}
\trace(C_K) = \sum_{k=1}^{\infty}\lambda_k = \int_{T}K(t,t)d\nu(t) < \infty.
\end{align}

We now generalize ideas in  
\cite{Panaretos:jasa2010,Fremdt:2013testing,Pigoli:2014,Mallasto:NIPS2017Wasserstein,masarotto2019procrustes}, using the fact that Gaussian processes are fully determined by their mean and covariance functions.
Most importantly, the following incorporates
Mercer Theorem to quantify the correspondence $\GP(m,K) \equivalent \Ncal(m,C_K)$.
\begin{definition}
	[\textbf{Divergence between Gaussian processes}]
	\label{definition:divergence-Gaussian-Process}
	Let $T$ be a $\sigma$-compact metric space, $\nu$ a positive, non-degenerate Borel measure on $T$.
	Let $\H = \Lcal^2(T,\Bsc(T),\nu)$. Let $\xi^i = \GP(m_i,K_i)$, $i=1,2$,  be
	two Gaussian processes with mean $m_i \in \H$, covariance function $K_i$ continuous, and $\int_{T}K_i(t,t)d\nu(t)< \infty$.
	Let $D$ be a divergence function on $\Gauss(\H)\times \Gauss(\H)$. The corresponding divergence
	$D_{\GP}$ 
	between $\xi^1$ and $\xi^2$ is defined to be
	\begin{align}
	D_{\GP}(\xi^1|| \xi^2) = D(\Ncal(m_1,C_{K_1}) ||\Ncal(m_2, C_{K_2})).
	\end{align}
\end{definition}
Mercer Theorem immediately implies the following.
\begin{theorem}
	\label{theorem:divergence-Gaussian-Process}
	Assume the hypothesis in Definition \ref{definition:divergence-Gaussian-Process}. Then
	\begin{align}
	D_{\GP}(\xi^1 ||\xi^2) &\geq 0,
	\\
	D_{\GP}(\xi^1 ||\xi^2) &= 0 \equivalent m_1 = m_2, K_1 = K_2.
	\end{align}
\end{theorem}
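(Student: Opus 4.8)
The plan is to unwind Definition~\ref{definition:divergence-Gaussian-Process} and reduce the claim to two facts: that a Gaussian measure on $\H = \Lcal^2(T,\nu)$ is uniquely determined by its mean vector and covariance operator, and that the covariance operator $C_K$ in turn determines the covariance function $K$. First I would record that, since $\int_{T}K_i(t,t)\,d\nu(t) < \infty$ and $K_i(s,t)^2 \le K_i(s,s)K_i(t,t)$, conditions \eqref{equation:Mercer-cond1} and \eqref{equation:Mercer-cond2} hold, so by Mercer's Theorem each $C_{K_i}$ is a positive, self-adjoint, trace class operator on $\H$, and hence each $\Ncal(m_i, C_{K_i}) \in \Gauss(\H)$ is a well-defined Gaussian measure. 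By Definition~\ref{definition:divergence-Gaussian-Process}, $D_{\GP}(\xi^1\|\xi^2) = D(\Ncal(m_1,C_{K_1})\|\Ncal(m_2,C_{K_2}))$, and since $D$ is a divergence on $\Gauss(\H)\times\Gauss(\H)$ (for the Sinkhorn divergence $\Srm^\ep_{d^2}$ this is exactly Theorem~\ref{theorem:positivity}), we immediately get $D_{\GP}(\xi^1\|\xi^2)\ge 0$, with $D_{\GP}(\xi^1\|\xi^2)=0$ if and only if $\Ncal(m_1,C_{K_1}) = \Ncal(m_2,C_{K_2})$.

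It then remains to show $\Ncal(m_1,C_{K_1}) = \Ncal(m_2,C_{K_2})$ iff $m_1=m_2$ and $K_1=K_2$. The ``if'' direction is immediate from the definitions of $m_i$ and $C_{K_i}$. For ``only if'', I would use that a Gaussian measure on a separable Hilbert space is characterized by its characteristic functional $\widehat{\mu}(x) = \exp\!\bigl(i\la m,x\ra - \tfrac12\la C x, x\ra\bigr)$, so equality of the two measures forces $m_1 = m_2$ and $\la C_{K_1} x, x\ra = \la C_{K_2} x, x\ra$ for all $x \in \H$, hence $C_{K_1} = C_{K_2}$ by polarization and self-adjointness. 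Finally, $C_{K_1} = C_{K_2}$ gives $\int_{T}\int_{T} K_1(s,t) f(s) g(t)\,d\nu(s)\,d\nu(t) = \int_{T}\int_{T} K_2(s,t) f(s) g(t)\,d\nu(s)\,d\nu(t)$ for all $f,g \in \H$, i.e. $K_1 = K_2$ $(\nu\otimes\nu)$-almost everywhere; alternatively one reads off $K_i(s,t) = \sum_k \lambda_k^{(i)} \phi_k^{(i)}(s)\phi_k^{(i)}(t)$ directly from the Mercer expansion \eqref{equation:Mercer-expansion} of $C_{K_i}$, which is an identity holding for \emph{every} pair $(s,t)$.

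The one step requiring care --- and the main, if modest, obstacle --- is upgrading $K_1 = K_2$ $(\nu\otimes\nu)$-a.e. to $K_1 = K_2$ everywhere on $T\times T$. Here I would invoke that both $K_i$ are continuous and that $\nu$ is a non-degenerate Borel measure (positive on every nonempty open set): if $K_1(s_0,t_0)\neq K_2(s_0,t_0)$ for some $(s_0,t_0)$, then by continuity $K_1 \neq K_2$ on an open neighborhood of $(s_0,t_0)$, which has positive $(\nu\otimes\nu)$-measure, a contradiction. Using the Mercer expansion instead sidesteps this entirely, since \eqref{equation:Mercer-expansion} already holds pointwise with the continuous eigenfunction representatives. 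Combining the two directions yields $D_{\GP}(\xi^1\|\xi^2)=0 \equivalent m_1=m_2,\ K_1=K_2$, which completes the proof.
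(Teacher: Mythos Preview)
Your proposal is correct and matches the paper's approach: the paper gives no detailed proof but simply states that ``Mercer Theorem immediately implies the following'', and you have faithfully unpacked exactly that, using the divergence property to reduce to $\Ncal(m_1,C_{K_1}) = \Ncal(m_2,C_{K_2})$ and then invoking the pointwise Mercer expansion \eqref{equation:Mercer-expansion} (together with continuity and non-degeneracy of $\nu$) to pass from $C_{K_1}=C_{K_2}$ to $K_1=K_2$ everywhere. Your write-up is in fact more careful than the paper's one-line justification.
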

\begin{remark}
	In our current context, we can immediately apply Definition \ref{definition:divergence-Gaussian-Process} and Theorem \ref{theorem:divergence-Gaussian-Process} to the exact Wasserstein distances and Sinkhorn divergences.
	Definition \ref{definition:divergence-Gaussian-Process} can also be extended to cover the entropic $\OT^{\ep}_c$ distances, however since they are not metrics/divergences, Theorem \ref{theorem:divergence-Gaussian-Process} no longer holds.
\end{remark}

\section{Kullback-Leibler divergence between Gaussian measures}
\label{section:KL}
In this section, we briefly review the KL divergence between Gaussian measures on Hilbert space 
and prove Theorem \ref{theorem:log-Radon-Nikodym-integral}.
For a separable Hilbert space $\H$, consider the set $\Pcal(\H)$ of probability measures on $(\H, \Bsc(\H))$,
where $\Bsc(\H)$ denotes the Borel $\sigma$-algebra on $\H$. We focus on the subset $\Pcal_2(\H)$ defined by
\begin{align}
\Pcal_2(\H) = \left\{\mu \in \Pcal(\H): \int_{\H}||x||^2d\mu(x) < \infty\right\}.
\end{align}
%{\bf Assumption 1 } $\int_{\H_i}||x||^2d\mu_i(x) < \infty$, $i=1,2$.
%\end{align} 
%Under this assumption, 
For $\mu \in \Pcal_2(\H)$, its mean vector $m\in \H$ and covariance operator $C:\H \mapto \H$ are well-defined and are given by
\begin{align}
\la m, u\ra &= \int_{\H}\la x, u\ra d\mu(x),\;\;\; u \in \H,
\\
\la Cu, v\ra  &= \int_{\H} \la x - m, u\ra \la x- m, v\ra d\mu(x), \;\;\; u,v \in \H.
\end{align}
In particular $C$ is a self-adjoint, positive, and trace class operator on $\H$.
 
%Assumption 1 is satisfied in particular by Gaussian measures $\Ncal(m_i, C_i)$ on $\H_i$. 
%We now turn to the case when $\mu_X$, $\mu_Y$, $\mu_{XY}$ are all Gaussian measures. 
We recall that for two measures $\mu$ and $\nu$ on a measure space $(\Omega, \Fcal)$, with $\mu$ $\sigma$-finite, 
$\nu$ is said to be {\it absolutely continuous} with respect to $\mu$, denoted by $\nu << \mu$, if for any $A \in \Fcal$, $\mu(A) = 0 \imply \nu(A) = 0$. In this case, the Radon-Nikodym derivative $\frac{d\nu}{d\mu} \in \Lcal^1(\mu)$ is 
well-defined.
The Kullback-Leibler (KL) divergence between $\nu$ and $\mu$ is defined by
\begin{align}
\KL(\nu||\mu) = 
\left\{
\begin{matrix}
\int_{\Omega}\log\left\{\frac{d\nu}{d\mu}(x)\right\}d\nu(x) & \text{if $\nu << \mu$},
\\
\infty & \text{otherwise}.
\end{matrix}
\right.
\end{align} 
If $\mu << \nu$ and $\nu << \mu$, then we say that $\mu$ and $\nu$ are {\it equivalent}, denoted by $\mu \sim \nu$.
We say that $\mu$ and $\nu$ are {\it mutually singular}, denoted by $\mu \perp \nu$, if there exist $A,B \in \Fcal$ such that
$\mu(A) = \nu(B) = 1$ and $A\cap B = \emptyset$.

{\bf Equivalence of Gaussian measures}. 
Let $Q,R$ be two self-adjoint, positive trace class operators on $\H$ such that $\ker(Q) = \ker(R) = \{0\}$. Let $m_1, m_2 \in \H$. 
A fundamental result in the theory of Gaussian measures is the Feldman-Hajek Theorem \cite{Feldman:Gaussian1958}, \cite{Hajek:Gaussian1958}, which states that 
two Gaussian measures $\mu = \Ncal(m_1,Q)$ and 
$\nu = \Ncal(m_2, R)$
%on a Hilbert space 
are either mutually singular or
%, denoted by $\mu \perp \nu$, or mutually 
equivalent, {\color{black}that is either $\mu \perp \nu$ or $\mu \sim \nu$}.
The necessary and sufficient conditions for
the equivalence of the two Gaussian measures $\nu$ and $\mu$ are given by the following.
\begin{theorem}
	[\cite{Bogachev:Gaussian}, Corollary 6.4.11, \cite{DaPrato:PDEHilbert}, Theorems  1.3.9 and 1.3.10]
	\label{theorem:Gaussian-equivalent}
	Let $\H$ be a separable Hilbert space. Consider two Gaussian measures $\mu = \Ncal(m_1, Q)$ and
	$\nu = \Ncal(m_2, R)$ on $\H$. Then $\mu$ and $\nu$ are equivalent if and only if the following
	%conditions 
	hold
	\begin{enumerate}
		\item $m_2 - m_1 \in \myIm(Q^{1/2})$.
		\item There exists  $S \in  \Sym(\H) \cap \HS(\H)$, without the eigenvalue $1$, such that
		%\begin{align}
		%\label{equation:RQ-equivalent}
		$R = Q^{1/2}(I-S)Q^{1/2}$.
		%\end{align}
	\end{enumerate}
	%Furthermore, if $m_1 = m_2 = 0$, then the Radon-Nikodym derivative of $\nu$ with respect to $\mu$ is given by
	%\begin{align}
	%\frac{d\nu}{d\mu}(x) = \exp\left(\frac{1}{2}\sum_{k=1}^{\infty}\left[\frac{\alpha_k}{1+\alpha_k}W_{\phi_k}^2(x) - \log(1+\alpha_k)\right]\right).
	%\end{align}
\end{theorem}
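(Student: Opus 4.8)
The plan is to recognize Theorem~\ref{theorem:Gaussian-equivalent} as the classical Feldman--Hajek equivalence criterion and to reconstruct its proof in two independent pieces: the \emph{shift} part, which yields condition~(1), and the \emph{covariance} part, which yields condition~(2). Throughout one exploits the Feldman--Hajek dichotomy recorded above --- $\mu$ and $\nu$ are \emph{either} equivalent \emph{or} mutually singular --- so it is enough to prove that the two stated conditions are necessary and, separately, that they are sufficient.

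For the shift part, first reduce to equal covariances. Writing $h = m_2 - m_1$, compare $\Ncal(m_1, Q)$ with its translate $\Ncal(m_2, Q)$. The Cameron--Martin theorem states that this translate is equivalent to $\Ncal(m_1,Q)$ precisely when $h \in \myIm(Q^{1/2})$, with Radon--Nikodym density $\exp\!\big(\la Q^{-1/2}h,\, Q^{-1/2}(x-m_1)\ra - \tfrac12\|Q^{-1/2}h\|^2\big)$, and mutually singular otherwise; here $Q^{-1/2}h$ is meaningful because $h$ lies in the range of $Q^{1/2}$ and $\ker(Q)=\{0\}$. Hence if $\mu \sim \nu$ then necessarily $h \in \myIm(Q^{1/2})$, which is condition~(1); conversely, the Cameron--Martin density lets one replace $\nu$ by $\Ncal(m_1, R)$ without changing the equivalence class, so that only the covariance comparison $\Ncal(m_1,Q)$ versus $\Ncal(m_1,R)$ remains.

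For the covariance part, translate so both means are $0$ and suppose $R = Q^{1/2}(I-S)Q^{1/2}$ with $S = S^{*} \in \HS(\H)$ and $1 \notin \sigma_p(S)$. Since $S$ is compact, $I-S$ is a bounded bijection with bounded inverse; the condition $1 \notin \sigma_p(S)$ is exactly what makes $R$ injective (consistent with $\ker(R)=\{0\}$), and positivity of $R$ together with the dense range of $Q^{1/2}$ forces $I-S \geq 0$, so the eigenvalues $s_k$ of $S$ obey $s_k < 1$ and $\sum_k s_k^2 < \infty$. Passing to the white-noise (abstract Wiener) representation, view $\Ncal(0,Q)$ as the pushforward of the standard Gaussian cylinder measure under $Q^{1/2}$; then $\Ncal(0,R)$ is the pushforward under $(I-S)^{1/2}$, and diagonalizing the \emph{single} self-adjoint operator $I-S$ identifies the pair with the product Gaussians $\bigotimes_k \Ncal(0,1)$ and $\bigotimes_k \Ncal(0, 1-s_k)$ on $\R^{\Nbb}$. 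Kakutani's dichotomy for infinite products then reduces everything to the convergence of $\prod_k \rho_k$, where $\rho_k$ is the Hellinger affinity of $\Ncal(0,1)$ and $\Ncal(0,1-s_k)$, namely $\rho_k^2 = 2\sqrt{1-s_k}/(2-s_k)$; a Taylor expansion gives $-\log\rho_k \asymp s_k^2$, so $\mu \sim \nu$ if and only if $\sum_k s_k^2 < \infty$, i.e.\ exactly when $S \in \HS(\H)$. Necessity of the form in~(2) follows by reversing this argument together with the dichotomy: if $\myIm(R^{1/2}) \neq \myIm(Q^{1/2})$ or if the relevant perturbation is not Hilbert--Schmidt, the product measures, hence $\mu$ and $\nu$, are mutually singular.

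The main obstacle is the reduction-to-product-measures step: because $Q$ and $R$ are in general not simultaneously diagonalizable and $Q^{1/2}$ is not boundedly invertible when $\dim(\H)=\infty$, one cannot literally change variables on $\H$. The rigorous device is precisely to work with cylinder/white-noise measures --- represent $\Ncal(0,Q)$ on the Cameron--Martin space $\myIm(Q^{1/2})$, transport $\Ncal(0,R)$ there by $(I-S)^{1/2}$, and only then diagonalize $I-S$ --- after which the one-dimensional Hellinger computations and Kakutani's theorem finish the argument, and the quoted statement (with the parametrization $R = Q^{1/2}(I-S)Q^{1/2}$ absorbing the equality-of-ranges condition) is exactly what emerges. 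Since this is a standard result, in the paper we simply invoke \cite{Bogachev:Gaussian,DaPrato:PDEHilbert}.
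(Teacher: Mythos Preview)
The paper does not prove this theorem; it is stated with citations to \cite{Bogachev:Gaussian} and \cite{DaPrato:PDEHilbert} and used as a black box. Your proposal correctly recognizes this in its final sentence, and the classical Feldman--Hajek sketch you provide (Cameron--Martin for the mean shift, Kakutani's dichotomy with Hellinger affinities for the covariance part) is accurate and standard.
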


We now recall results on Kullback-Leibler divergences between two Gaussian measures $\mu = \Ncal(m_1,Q)$ and $\nu = \Ncal(m_2,R)$ on $\H$. If $\mu \perp \nu$, then
$\KL(\nu|| \mu) = \infty$. If $\mu \sim \nu$, then we have the following result. 
%let $S \in \HS(\H)\cap \Sym(\H)$, with $I-S > 0$, be such that $R = Q^{1/2}(I-S)Q^{1/2}$, then
%(see e.g. \cite{Minh:2020regularizedDiv}),
\begin{theorem}
	[\cite{Minh:2020regularizedDiv}]
	\label{theorem:KL-gaussian}
		Let $\mu = \Ncal(m_1, Q)$, $\nu = \Ncal(m_2, R)$, with $\ker(Q) = \ker{R} = \{0\}$, and
	%Assume that 
	$\mu \sim \nu$.
	Let $S \in \HS(\H)\cap \Sym(\H)$, $I-S > 0$, 
	be such that $R = Q^{1/2}(I-S)Q^{1/2}$, then
	\begin{align}
	\label{equation:KL-gaussian}
	\KL(\nu ||\mu) = \frac{1}{2}||Q^{-1/2}(m_2-m_1)||^2 -\frac{1}{2}\log\dettwo(I-S).
	\end{align}
\end{theorem}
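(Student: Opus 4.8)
The plan is to compute $\KL(\nu||\mu)=\int_{\H}\log\frac{d\nu}{d\mu}\,d\nu$ by factoring the Radon--Nikodym derivative and then integrating it against $\nu$. I would introduce the auxiliary Gaussian measure $\tilde\mu=\Ncal(m_2,Q)$; by Theorem \ref{theorem:Gaussian-equivalent} it is equivalent to both $\mu$ and $\nu$ (the hypotheses $m_2-m_1\in\myIm(Q^{1/2})$ and $R=Q^{1/2}(I-S)Q^{1/2}$ with $S\in\Sym(\H)\cap\HS(\H)$, $I-S>0$, are exactly what is needed), so that $\frac{d\nu}{d\mu}=\frac{d\nu}{d\tilde\mu}\cdot\frac{d\tilde\mu}{d\mu}$ $\mu$-almost everywhere. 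Since $\ker(Q)=\{0\}$, the vector $h:=Q^{-1/2}(m_2-m_1)\in\H$ is well defined; write $\zeta(x):=Q^{-1/2}(x-m_2)$ for the associated measurable (Paley--Wiener) linear functional, which under $\nu$ satisfies $\E_{\nu}[\zeta]=0$ and $\E_{\nu}[\la\zeta,u\ra\la\zeta,v\ra]=\la(I-S)u,v\ra$.

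First I would record the two factors. The quotient $\frac{d\tilde\mu}{d\mu}$ is a pure mean shift at fixed covariance $Q$, so the Cameron--Martin formula (see e.g.\ \cite{Bogachev:Gaussian,DaPrato:PDEHilbert}) gives $\log\frac{d\tilde\mu}{d\mu}(x)=\la h,\zeta(x)+h\ra-\tfrac12||h||^2$. The quotient $\frac{d\nu}{d\tilde\mu}$ is a pure covariance change $Q\mapsto Q^{1/2}(I-S)Q^{1/2}$ at fixed mean $m_2$; the density formula for equivalent centred Gaussian measures then yields $\log\frac{d\nu}{d\tilde\mu}(x)=-\tfrac12\log\dettwo(I-S)-\tfrac12\,q_S(\zeta(x))$, where $q_S$ is the regularized measurable quadratic functional attached to $(I-S)^{-1}S$: in an orthonormal eigenbasis $(e_k)$ of $S$, with $Se_k=\alpha_k e_k$, $(\alpha_k)\in\ell^2$ and $\alpha_k<1$, $q_S$ is the $\Lcal^1(\nu)$-limit of the truncations $\sum_{k\le m}\tfrac{\alpha_k}{1-\alpha_k}\la\zeta,e_k\ra^2$, the Hilbert--Carleman determinant $\dettwo$ being precisely the renormalization that makes this well defined when $S$ is merely Hilbert--Schmidt.

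Then I would take logarithms of the product and integrate against $\nu$. The mean-dependent part contributes $\E_{\nu}\la h,\zeta+h\ra-\tfrac12||h||^2=||h||^2-\tfrac12||h||^2=\tfrac12||Q^{-1/2}(m_2-m_1)||^2$. For the covariance part I would work coordinatewise: in each finite truncation, $\KL(\Ncal(0,I-S_m)\,||\,\Ncal(0,I_m))=\tfrac12[\trace(I-S_m)-m-\log\det(I-S_m)]=-\tfrac12\log\dettwo(I-S_m)$, while expanding the same $\KL$ via the density gives $-\tfrac12\log\det(I-S_m)-\tfrac12\E_{\nu}[q_{S_m}(\zeta)]$; letting $m\to\infty$ and using $S\in\HS(\H)$ (so that $\sum_k[\log(1-\alpha_k)+\alpha_k]$ converges and defines $\log\dettwo(I-S)$) gives the covariance contribution $-\tfrac12\log\dettwo(I-S)$. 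Adding the two contributions yields \eqref{equation:KL-gaussian}.

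The main obstacle is the second step: making the measurable quadratic functional $q_S$ rigorous and justifying the interchange of logarithm, expectation, and the infinite sum over the eigenbasis when $S$ is only Hilbert--Schmidt rather than trace class --- that is, the second-Wiener-chaos estimates showing $q_S(\zeta)\in\Lcal^1(\nu)$ and the convergence of the truncated Hilbert--Carleman determinants. Once the elementary finite-dimensional identity and the Cameron--Martin shift are in place, the remainder is a controlled limiting argument.
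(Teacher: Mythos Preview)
The paper does not give a direct proof of this theorem --- it is quoted from \cite{Minh:2020regularizedDiv}. That said, the paper's proof of Theorem~\ref{theorem:log-Radon-Nikodym-integral}, specialized to $\gamma=\nu$ (so $m_3=m_2$ and $A=I-S$, hence condition~(2) there, $I-A=S\in\HS(\H)$, holds), does amount to an independent derivation of~\eqref{equation:KL-gaussian}, and this is the natural point of comparison.

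Your approach and the paper's differ genuinely. You factor through the auxiliary $\tilde\mu=\Ncal(m_2,Q)$, isolate the mean contribution via Cameron--Martin, and then handle the covariance contribution by finite-dimensional truncation in the eigenbasis of $S$. The paper instead works with the full Radon--Nikodym density of Theorem~\ref{theorem:radon-nikodym-infinite} (which already mixes mean and covariance), integrates using the white-noise identities of Lemma~\ref{lemma:whitenoise-square-integral-1}, and justifies the infinite series by observing that $\psi_k=\frac{1}{\sqrt{2}}(W_{\phi_k}^2-1)$ form an orthonormal sequence in $\Lcal^2(\H,\mu)$ and invoking Proposition~\ref{proposition:integral-L2-L1-orthonormal}. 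Your factorization is arguably more transparent conceptually; the paper's orthonormal-sequence device is what replaces your truncation step and gives the interchange of sum and integral cleanly without a separate limiting argument.

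One point in your write-up does need correction. You present the covariance density as $\log\frac{d\nu}{d\tilde\mu}(x)=-\tfrac12\log\dettwo(I-S)-\tfrac12 q_S(\zeta(x))$ with $q_S$ defined as the $\Lcal^1(\nu)$-limit of $\sum_{k\le m}\tfrac{\alpha_k}{1-\alpha_k}\la\zeta,e_k\ra^2$. That limit does \emph{not} exist when $S\notin\Tr(\H)$: under $\nu$ the $m$th partial sum has expectation $\sum_{k\le m}\alpha_k$, which diverges. The correct object is the \emph{combined} series $-\tfrac12\sum_k\bigl[\tfrac{\alpha_k}{1-\alpha_k}\tilde W_{\phi_k}^2+\log(1-\alpha_k)\bigr]$ (this is exactly the content of Theorem~\ref{theorem:radon-nikodym-infinite} with $m_1=m_2$), whose terms are centred so that the series converges in $\Lcal^1$ and $\Lcal^2$. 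Your subsequent truncation computation is consistent with this --- you do in effect keep the two pieces together and pass to the limit of $-\tfrac12\log\dettwo(I-S_m)$ --- but the density formula as you stated it is not well defined in the HS-only regime. With that repaired, the argument is sound; the ``main obstacle'' you flag is precisely the $\Lcal^1(\nu)$-convergence of the partial sums, and the paper's Proposition~\ref{proposition:integral-L2-L1-orthonormal} (applied with $f=\frac{d\nu}{d\tilde\mu}$ and the orthonormal $\psi_k$) is one clean way to close it.
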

%Condition 1 in Theorem \ref{theorem:log-Radon-Nikodym-integral} is automatically satisfied in the case
%$\dim(\H) < \infty$.
%
For two equivalent Gaussian measures $\mu,\nu$ on $\H$, the Radon-Nikodym derivative involves only the means and covariance operators (\cite{Minh:2020regularizedDiv}, Theorem 11).
This motivates Theorem \ref{theorem:log-Radon-Nikodym-integral}, which seeks to extend the validity of Eq.\eqref{equation:KL-gaussian} 
by generalizing the expression $\int_{\H}\log{(\frac{d\nu}{d\mu})}d\nu$ to $\int_{\H}\log{(\frac{d\nu}{d\mu})}d\gamma$
where $\gamma \in \Pcal_2(\H)$ is any probability measure with the same mean and covariance operator as $\nu$.

To prove Theorem \ref{theorem:log-Radon-Nikodym-integral}, in the following
we utilize the concept of {\it white noise mapping}, see e.g. \cite{DaPrato:2006,DaPrato:PDEHilbert}.
%In the following, 
For $\mu = \Ncal(m, Q)$, $\ker(Q) = \{0\}$, we  define
%\begin{align}
$\Lcal^2(\H, \mu) = \Lcal^2(\H, \Bsc(\H),\mu) = \Lcal^2(\H, \Bsc(\H), \Ncal(m,Q))$.
%\end{align}
Consider the following mapping
\begin{align}
&W:Q^{1/2}(\H) \subset \H \mapto \Lcal^2(\H,\mu), \;\; z  \in Q^{1/2}(\H) \mapto W_z \in \Lcal^2(\H, \mu),
\\
%\end{align}
%defined by
%\begin{align}
&W_z(x) = \la x -m, Q^{-1/2}z\ra,  \;\;\; z \in Q^{1/2}(\H), x \in \H.
%W_z(x) = \la x, Q^{-1/2}z\ra,
\end{align}
For any pair $z_1, z_2 \in Q^{1/2}(\H)$, we have by definition of the covariance operator
\begin{align}
\la W_{z_1}, W_{z_2}\ra_{\Lcal^2(\H,\mu)}
% \int_{\H}W_{z_1}(x)W_{z_2}(x)\Ncal(m,Q)(dx)
%\\
%&
&= \int_{\H}\la x -m, Q^{-1/2}z_1\ra\la x-m, Q^{-1/2}z_2\ra\Ncal(m, Q)(dx)
\nonumber
\\
& = \la Q(Q^{-1/2}z_1), Q^{-1/2}z_2\ra 
%\nonumber
%\\
%&
= \la z_1, z_2\ra_{\H}.
\end{align}
Thus the map $W:Q^{1/2}(\H) \mapto \Lcal^2(\H, \mu)$ is an isometry, that is
\begin{align}
||W_z||_{\Lcal^2(\H,\mu)} = ||z||_{\H}, \;\;\; z \in Q^{1/2}(\H).
\end{align}
Since $\ker(Q) = \{0\}$, the subspace $Q^{1/2}(\H)$ is dense in $\H$ and the map $W$ can be uniquely extended to all of $\H$, as follows.
For any $z \in \H$, let $\{z_n\}_{n\in \Nbb}$ be a sequence in $Q^{1/2}(\H)$ with $\lim_{n \approach \infty}||z_n -z||_{\H} = 0$.
Then $\{z_n\}_{n \in \Nbb}$ is a Cauchy sequence in $\H$, so that by isometry, $\{W_{z_n}\}_{n\in \Nbb}$ is also
a Cauchy sequence in $\Lcal^2(\H, \mu)$, thus converging to a unique element in $\Lcal^2(\H, \mu)$.
Thus
 %for any $z \in \H$, 
 we can define
% the map
\begin{align}
W: \H \mapto \Lcal^2(\H, \mu),  \;\;\; z \in \H \mapto \Lcal^2(\H, \mu)
\end{align}
by the following unique limit in $\Lcal^2(\H, \mu)$
\begin{align}
W_z(x) = \lim_{n \approach \infty}W_{z_n}(x) = \lim_{n \approach \infty}\la x-m, Q^{-1/2}z_n\ra.
\end{align}
The map $W: \H \mapto \Lcal^2(\H, \mu)$ is called the {\it white noise mapping}
associated with the measure $\mu = \Ncal(m,Q)$.
%An immediate consequence of the isometry of 
%One sees immediately that 
%$W$
% is that it 
%maps any orthonormal sequence
%$\{\phi_k\}_{k\in \Nbb}$ in $\H$ to an orthonormal sequence $\{W_{\phi_k}\}_{k=1}^{\infty}$ in $\Lcal^2(\H,\mu)$, %since
%Consider the set $\{W_{\phi_k} \}\in \Lcal^2(\H, \mu)$, then by the isometry of $W$, we have
%\begin{align*}
%\la W_{\phi_j}, W_{\phi_k} \ra_{\Lcal^2(\H, \mu)} = \la \phi_j, \phi_k\ra = \delta_{jk}.
%\end{align*}
%Thus  the sequence $\{W_{\phi_k}\}_{k=1}^{\infty}$ is  orthonormal in $\Lcal^2(\H,\mu)$.
%Furthermore, the random variables $\{\W_{\phi_k}\}_{k=1}^N$ are {\it independent}
%(\cite{DaPrato:2006}, Proposition 1.28).
%
%{\bf White noise mapping via finite-rank orthogonal projections}.
$W_z$ can be 
%defined 
expressed  explicitly in terms of the finite-rank orthogonal projections
%,  as follows. Let 
$P_N = \sum_{k=1}^N e_k \otimes e_k$
%denote the orthogonal projection 
onto the $N$-dimensional subspace of $\H$
spanned by $\{e_k\}_{k=1}^N$, $N \in \Nbb$, where $\{e_k\}_{k \in \Nbb}$ are the orthonormal eigenvectors of $Q$
{\color{black}corresponding to eigenvalues $\{\lambda_k\}_{k \in \Nbb}$, which are all strictly positive by the assumption $\ker(Q) = \{0\}$}.
For any $z \in \H$, we have
\begin{align}
P_Nz = \sum_{k=1}^N\la z,e_k \ra e_k \imply Q^{-1/2}P_Nz = \sum_{k=1}^N\frac{1}{\sqrt{\lambda_k}}\la z, e_k\ra e_k.
\end{align}
Thus $Q^{-1/2}P_Nz$ is always well-defined $\forall z \in \H$. 
Furthermore, for all $x,y \in \H$, 
%we have
\begin{align}
\la Q^{-1/2}P_Nx, y\ra = \sum_{j=1}^N\frac{1}{\sqrt{\lambda_j}}\la x, e_j\ra \la y, e_j\ra =\la x, Q^{-1/2}P_Ny\ra.
\end{align}
The operator $Q^{-1/2}P_N:\H \mapto \H$ is bounded and self-adjoint $\forall N \in \Nbb$.
Since the 
sequence $\{P_Nz\}_{N\in \Nbb}$ converges to $z$ in $\H$, we
% then 
have, in the $\Lcal^2(\H,\mu)$ sense,
\begin{align}
W_z(x) = \lim_{N \approach \infty}W_{P_Nz}(x) = \lim_{N \approach \infty}\la x-m, Q^{-1/2}P_Nz\ra.
\end{align}

The Radon-Nikodym derivative between two equivalent Gaussian measures on $\H$ is expressed explicitly via the
 white noise mapping, as follows.

\begin{theorem}
	[\cite{Minh:2020regularizedDiv}, Theorem 11]
	\label{theorem:radon-nikodym-infinite}
	Let $\mu = \Ncal(m_1, Q)$, $\nu = \Ncal(m_2,R)$, $\ker(Q) = \ker(R) = 0$ be equivalent, that is $m_2 - m_1 \in \myIm(Q^{1/2})$, $R = Q^{1/2}(I-S)Q^{1/2}$ for $S \in \Sym(\H) \cap \HS(\H)$.
	Let $\{\alpha_k\}_{k\in \Nbb}$ be the eigenvalues of $S$, with corresponding orthonormal eigenvectors
	$\{\phi_k\}_{k \in \Nbb}$. Let $W$ be the white noise mapping induced by $\mu$. 
	The Radon-Nikodym derivative $\frac{d\nu}{d\mu}$
	%of $\nu$ with respect to $\mu$ 
	is given by
	\begin{align}
	\label{equation:RN-infinite}
	\frac{d\nu}{d\mu}(x) = \exp\left[-\frac{1}{2}\sum_{k=1}^{\infty}\Phi_k(x)\right]\exp\left[-\frac{1}{2}||(I-S)^{-1/2}Q^{-1/2}(m_2 - m_1)||^2\right],
	%\left[\frac{\alpha_k}{1-\alpha_k}W^2_{\phi_k}(x-m_1) + \frac{1}{1-\alpha_k}W_{\phi_k}(x-m_1)W_{\phi_k}(m_1 - m_2) + \log(1-\alpha_k)\right]\right\},
	\end{align}
	where for each $k \in \Nbb$,
	\begin{align}
	\label{equation:Phik}
	\Phi_k = \frac{\alpha_k}{1-\alpha_k}W^2_{\phi_k} - \frac{2}{1-\alpha_k}\la Q^{-1/2}(m_2-m_1), \phi_k\ra W_{\phi_k}+ \log(1-\alpha_k).
	\end{align}
	The series $\sum_{k=1}^{\infty}\Phi_k$ converges in $\Lcal^1(\H,\mu)$ and $\Lcal^2(\H,\mu)$.
	% and the function $s(x) = \exp\left[-\frac{1}{2}\sum_{k=1}^{\infty}\Phi_k(x)\right] \in \L^1(\H, \mu)$.
\end{theorem}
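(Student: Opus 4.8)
The plan is a finite–rank approximation together with a martingale / uniform–integrability limiting argument, which ultimately reduces the identity to a ratio of finite–dimensional Gaussian densities. \emph{Setup.} Complete $\{\phi_k\}_{k\in\Nbb}$ to an orthonormal basis of $\H$ (adjoining an orthonormal basis of $\ker(S)$, on which $\alpha_k=0$). Since $S\in\HS(\H)$ we have $\alpha_k\to 0$, and since $\mu\sim\nu$ one has $I-S>0$ (otherwise $R$ is not an injective positive operator), so $I-S$ is boundedly invertible and $\inf_k(1-\alpha_k)>0$. Because $m_2-m_1\in\myIm(Q^{1/2})$ and $\ker(Q^{1/2})=\{0\}$, the vector $a:=Q^{-1/2}(m_2-m_1)\in\H$ is well defined; put $b_k:=\la a,\phi_k\ra$, so $\sum_k b_k^2=\|a\|^2<\infty$. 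Under $\mu$ the variables $Z_k:=W_{\phi_k}$ satisfy $\mathrm{Cov}_\mu(Z_j,Z_k)=\la\phi_j,\phi_k\ra=\delta_{jk}$ and are jointly Gaussian, hence i.i.d.\ $\Ncal(0,1)$; moreover $\la x-m_1,u\ra=W_{Q^{1/2}u}(x)=\sum_k\la Q^{1/2}u,\phi_k\ra Z_k(x)$.

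\emph{Finite-rank truncation.} Set $S_N:=\sum_{k\le N}\alpha_k\,\phi_k\otimes\phi_k$, $R_N:=Q^{1/2}(I-S_N)Q^{1/2}$, $m^{(N)}:=m_1+Q^{1/2}\sum_{k\le N}b_k\phi_k$, and $\nu_N:=\Ncal(m^{(N)},R_N)$. I claim
\begin{align*}
\frac{d\nu_N}{d\mu}(x)=\exp\Bigl[-\tfrac12\sum_{k=1}^N\Phi_k(x)\Bigr]\exp\Bigl[-\tfrac12\sum_{k=1}^N\frac{b_k^2}{1-\alpha_k}\Bigr].
\end{align*}
Indeed, the right–hand side is the exponential of a finite sum of scalar quadratics in the i.i.d.\ Gaussians $Z_k$, so the measure it defines against $\mu$ is Gaussian; an elementary scalar Gaussian computation shows its total mass is $1$, and under it the $Z_k$ are independent with $Z_k\sim\Ncal(b_k,1-\alpha_k)$ for $k\le N$ and $Z_k\sim\Ncal(0,1)$ for $k>N$. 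Plugging these into $\la x-m_1,u\ra=\sum_k\la Q^{1/2}u,\phi_k\ra Z_k$ one reads off mean $m^{(N)}$ and covariance $\la R_Nu,v\ra$, i.e.\ this measure is exactly $\nu_N$. (Equivalently, one may pass to the $N$–dimensional marginal in the coordinates $(Z_1,\dots,Z_N)$, where $\mu$ and $\nu_N$ become genuine Gaussians, and just expand the ratio of densities, matching the outcome with $\Phi_k$ and $\dettwo(I-S_N)$.)

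\emph{Passage to the limit.} The sequence $\bigl(\tfrac{d\nu_N}{d\mu},\Fcal_N\bigr)$ with $\Fcal_N:=\sigma(Z_1,\dots,Z_N)$ is a nonnegative martingale of expectation $1$ (the conditional expectation of the $N$-th factor is $1$ by the scalar identity just used). A uniform bound $\sup_N\bE_\mu[(\tfrac{d\nu_N}{d\mu})^{1+\delta}]<\infty$ — each of its scalar factors equals $1+O(\alpha_k^2+b_k^2)$, summable, for $\delta$ small, using $\inf_k(1-\alpha_k)>0$ — makes the martingale uniformly integrable, so it converges in $L^1(\mu)$ and $\mu$-a.s.; the a.s.\ limit of the explicit formula is $\exp[-\tfrac12\sum_k\Phi_k(x)]\exp[-\tfrac12\sum_k\tfrac{b_k^2}{1-\alpha_k}]$, and $\sum_k\tfrac{b_k^2}{1-\alpha_k}=\la(I-S)^{-1}a,a\ra=\|(I-S)^{-1/2}Q^{-1/2}(m_2-m_1)\|^2$. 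On the other hand $m^{(N)}\to m_2$ in $\H$ and $R_N\to R$ with $R_N\le cQ$ uniformly, so $\{\nu_N\}$ is tight and its characteristic functionals converge, whence $\nu_N\to\nu$ weakly, forcing the $L^1(\mu)$-limit of $\tfrac{d\nu_N}{d\mu}$ to be $\tfrac{d\nu}{d\mu}$; this yields Eq.\eqref{equation:RN-infinite}. Finally, under $\mu$ the $\Phi_k$ are independent with $\bE_\mu[\Phi_k]=\tfrac{\alpha_k}{1-\alpha_k}+\log(1-\alpha_k)=O(\alpha_k^2)$ and $\mathrm{Var}_\mu(\Phi_k)=2\bigl(\tfrac{\alpha_k}{1-\alpha_k}\bigr)^2+4\tfrac{b_k^2}{(1-\alpha_k)^2}$ (the $Z_k^2$–$Z_k$ covariance vanishing), with $\sum_k\alpha_k^2<\infty$ and $\sum_k b_k^2<\infty$; Kolmogorov's criterion then gives convergence of $\sum_k\Phi_k$ $\mu$-a.s.\ and in $L^2(\H,\mu)$, hence in $L^1(\H,\mu)$.

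\emph{Main obstacle.} The delicate point is the limiting step: one must establish $L^1(\mu)$-convergence of the truncated densities \emph{and} identify the limit as $\tfrac{d\nu}{d\mu}$ rather than the density of merely some measure equivalent to $\mu$. This is where the uniform $L^{1+\delta}$ estimate (to force uniform integrability of the martingale), the tightness of $\{\nu_N\}$, and the matching of limiting means and covariances must all be carried out carefully; the remainder is essentially the finite–dimensional Gaussian density computation plus bookkeeping of the eigenbasis of $S$, in particular the correct treatment of $\ker(S)$.
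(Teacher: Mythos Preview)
The paper does not give its own proof of this statement: it is quoted verbatim as Theorem~11 of \cite{Minh:2020regularizedDiv} and used as a black box in the proof of Theorem~\ref{theorem:log-Radon-Nikodym-integral}. So there is no in-paper argument to compare against.

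That said, your proposal is a correct and standard route. The key observations --- that $\{W_{\phi_k}\}$ are i.i.d.\ $\Ncal(0,1)$ under $\mu$, that the finite-rank density $d\nu_N/d\mu$ factorizes into one-dimensional Gaussian ratios yielding exactly $\exp[-\tfrac12\sum_{k\le N}\Phi_k]$ times the normalizing constant, and that $(d\nu_N/d\mu,\Fcal_N)$ is a nonnegative unit-mean martingale --- are all right. Your $L^{1+\delta}$ uniform bound is the natural way to force convergence in $L^1(\mu)$; the scalar computation does give factors $1+O(\alpha_k^2+b_k^2)$, and the choice of $\delta$ must respect $1+\delta\alpha_k>0$ for the finitely many large negative $\alpha_k$, which you can always arrange since $\|S\|<\infty$. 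The identification step (the point you flag as the main obstacle) is fine as written: $L^1(\mu)$-convergence of densities plus weak convergence $\nu_N\Rightarrow\nu$ pins down the limit as $d\nu/d\mu$, and weak convergence of Gaussians follows from convergence of means and covariance operators (the latter in trace norm, since $\|R_N-R\|_{\tr}\le\|Q\|\sum_{k>N}|\alpha_k|\to 0$ is not needed --- operator-norm convergence of $S_N\to S$ suffices for the characteristic functionals). The $L^1$/$L^2$ convergence of $\sum_k\Phi_k$ via Kolmogorov is also clean. One cosmetic point: you assert $I-S$ is boundedly invertible because ``otherwise $R$ is not injective positive''; more precisely, $S$ compact with $1\notin\sigma_p(S)$ (from Theorem~\ref{theorem:Gaussian-equivalent}) already gives $(I-S)^{-1}\in\Lcal(\H)$ by Fredholm theory, and positivity of $R$ with $\ker Q=\{0\}$ then forces $I-S>0$.
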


\begin{lemma}
	\label{lemma:whitenoise-square-integral-1}
	Let $\mu = \Ncal(m_1,Q)$, $\ker{Q} = \{0\}$. Let $W$ be its induced white noise mapping.
	% induced by $\mu$.
	% the Gaussian measure 	
	Let $\nu \in \Pcal_2(\H)$, $\nu << \mu$, with mean $m_2$, where $m_2-m_1 \in \myIm(Q^{1/2})$, and covariance operator
	$R = Q^{1/2}AQ^{1/2}$, where $A \in \Sym^{+}(\H)$. Then
	% $\forall z, z_1, z_2 \in \H$,
	\begin{align}
		\int_{\H}W_z(x)d\nu(x) &= \la Q^{-1/2}(m_2 - m_1), z\ra.
\\
	\la W_{z_1}, W_{z_2}\ra_{\Lcal^2(\H, \nu)} &
	%= \int_{\H}W_{z_1}(x)W_{z_2}(x)d\nu(x) 
	= \la Az_1, z_2\ra, \quad \quad \quad \quad z_1, z_2 \in \H,
	\nonumber
	\\
	&\quad + \la Q^{-1/2}(m_2-m_1),z_1\ra\la Q^{-1/2}(m_2-m_1), z_2\ra.
	%\;\; z_1,z_2 \in \H,
	%\nonumber
	\\
	||W_z||^2_{\Lcal^2(\H, \nu)} & 
	%= \int_{\H}W_z^2(x)d\nu(x) 
	= \la Az,z\ra +|\la Q^{-1/2}(m_2-m_1), z\ra|^2, \;\;z \in \H.
	\end{align}
	%If, furthermore, $A$ possesses a set of discrete eigenvalues $\{\lambda_k\}_{k\in \Nbb}$ and
	%corresponding orthonormal eigenvectors $\{\phi_k\}_{k \in \Nbb}$, then
	%\begin{align}
	%\la W_{\phi_k}, W_{\phi_j}\ra_{\Lcal^2(\H, \nu)} &= \lambda_k \delta_{jk}
	%\nonumber 
	%\\
	%&\quad + \la Q^{-1/2}(m_2-m_1),\phi_k\ra\la Q^{-1/2}(m_2-m_1), \phi_j\ra.
	%\\
	%||W_{\phi_k}||^2_{\Lcal^2(\H, \nu)} &= \lambda_k +|\la Q^{1/2}(m_2-m_1), \phi_k\ra|^2.
	%\end{align}
	%In particular, $\{\frac{1}{\sqrt{\lambda_k}}W_{\phi_k}\}_{k \in \Nbb, \lambda_k > 0}$ is an orthonormal sequence in $\Lcal^2(\H, \nu)$.
\end{lemma}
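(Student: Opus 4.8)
The plan is to establish all three identities first on the dense subspace $\myIm(Q^{1/2})$, where $W_z$ has the explicit form $W_z(x)=\la x-m_1,Q^{-1/2}z\ra$, and then to extend to all of $\H$ by passing to the limit along the spectral projections $P_N$ of $Q$, carefully checking that the relevant convergence takes place in $\Lcal^2(\H,\nu)$ and not merely in $\Lcal^2(\H,\mu)$.

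First I would treat the case $z,z_1,z_2\in\myIm(Q^{1/2})$. Write $z_i=Q^{1/2}w_i$ with $w_i=Q^{-1/2}z_i\in\H$, and $m_2-m_1=Q^{1/2}v$ with $v=Q^{-1/2}(m_2-m_1)\in\H$. Since $W_z(x)=\la x-m_1,w\ra$ and $\nu\in\Pcal_2(\H)$ has mean $m_2$, integrating against $\nu$ gives $\int_\H W_z\,d\nu=\la m_2-m_1,w\ra=\la Q^{1/2}v,w\ra=\la v,Q^{1/2}w\ra=\la v,z\ra$, which is the first identity. For the second, decompose $\la x-m_1,w_i\ra=\la x-m_2,w_i\ra+\la m_2-m_1,w_i\ra$; the cross terms containing exactly one factor $\la x-m_2,w_i\ra$ integrate to zero because $m_2$ is the $\nu$-mean, the term $\int_\H\la x-m_2,w_1\ra\la x-m_2,w_2\ra\,d\nu=\la Rw_1,w_2\ra=\la AQ^{1/2}w_1,Q^{1/2}w_2\ra=\la Az_1,z_2\ra$ by the definition of the covariance operator $R=Q^{1/2}AQ^{1/2}$, and the leftover constant term equals $\la m_2-m_1,w_1\ra\la m_2-m_1,w_2\ra=\la v,z_1\ra\la v,z_2\ra$ by the computation just done. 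This yields the second identity on $\myIm(Q^{1/2})$, and the third is the special case $z_1=z_2=z$.

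Next I would pass to general $z\in\H$. Recall that $W_z$ is, by construction, the $\Lcal^2(\H,\mu)$-limit of $W_{P_Nz}$, where $P_N$ is the orthogonal projection onto $\myspan\{e_1,\dots,e_N\}$ with $\{e_k\}$ the eigenbasis of $Q$, so that $P_Nz\in\myIm(Q^{1/2})$ and $P_Nz\approach z$ in $\H$. Applying the identity already proven on $\myIm(Q^{1/2})$ to $y=(P_N-P_M)z$ gives
\[
\|W_{P_Nz}-W_{P_Mz}\|^2_{\Lcal^2(\H,\nu)}
=\la A(P_N-P_M)z,(P_N-P_M)z\ra+|\la v,(P_N-P_M)z\ra|^2
\le\bigl(\|A\|+\|v\|^2\bigr)\,\|(P_N-P_M)z\|^2\approach 0,
\]
so $\{W_{P_Nz}\}$ is Cauchy in $\Lcal^2(\H,\nu)$, with some limit $g$. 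Since $\nu<<\mu$, a subsequence of $\{W_{P_Nz}\}$ converges $\mu$-a.e., hence $\nu$-a.e., to $W_z$; comparing with the $\Lcal^2(\H,\nu)$-limit along a further subsequence shows $g=W_z$ $\nu$-a.e., so $W_z\in\Lcal^2(\H,\nu)\subset\Lcal^1(\H,\nu)$ and $W_{P_Nz}\approach W_z$ in $\Lcal^2(\H,\nu)$. The three identities now follow by letting $N\approach\infty$ in their versions for $P_Nz$ (resp.\ $P_Nz_1,P_Nz_2$), using continuity of the $\H$-inner product together with $P_Nz\approach z$ and the boundedness of $A$.

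The main obstacle is precisely this last limiting step: one must transfer the convergence of $W_{P_Nz}$ from $\Lcal^2(\H,\mu)$ to $\Lcal^2(\H,\nu)$ while identifying the limit as the same $W_z$. Here the absolute continuity $\nu<<\mu$ is essential — it is what lets the $\mu$-a.e.\ convergence along a subsequence be read as $\nu$-a.e.\ convergence — and the uniform bound $\|W_y\|^2_{\Lcal^2(\H,\nu)}\le(\|A\|+\|v\|^2)\|y\|^2$ for $y\in\myIm(Q^{1/2})$ is what supplies the Cauchy property in $\Lcal^2(\H,\nu)$. Everything else is a routine expansion in terms of the defining properties of the mean $m_2$ and covariance operator $R=Q^{1/2}AQ^{1/2}$ of $\nu$.
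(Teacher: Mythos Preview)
Your proof is correct and follows essentially the same route as the paper: establish the identities on $\myIm(Q^{1/2})$ via the decomposition $x-m_1=(x-m_2)+(m_2-m_1)$, use the resulting bound $\|W_y\|^2_{\Lcal^2(\H,\nu)}\le(\|A\|+\|v\|^2)\|y\|^2$ to get Cauchy convergence of $W_{P_Nz}$ in $\Lcal^2(\H,\nu)$, and invoke $\nu\ll\mu$ together with a $\mu$-a.e.\ convergent subsequence to identify the $\Lcal^2(\H,\nu)$-limit as $W_z$. The only cosmetic difference is ordering: the paper treats the inner-product identity first and derives the mean identity last, whereas you do all three on the dense subspace before passing to the limit.
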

\begin{proof}
	For $z \in \myIm(Q^{1/2})$, 
	%by definition 
	$W_z(x) = \la x-m_1, Q^{-1/2}z\ra$. Thus for $z_1, z_2 \in \myIm(Q^{1/2})$,
	\begin{align*}
	&\la W_{z_1}, W_{z_2}\ra_{\Lcal^2(\H, \nu)} = 
	\int_{\H}W_{z_1}(x)W_{z_2}(x)d\nu(x)
	\\
	&= \int_{\H}\la x-m_2 + m_2 - m_1, Q^{-1/2}z_1\ra\la x-m_2 + m_2-m_1, Q^{-1/2}z_2\ra d\nu(x)
	\\
	&= \int_{\H}\la x-m_2, Q^{-1/2}z_1\ra\la x-m_2, Q^{-1/2}z_2\ra d\nu(x) +\la m_2-m_1, Q^{-1/2}z_1\ra \la m_2-m_1, Q^{-1/2}z_2\ra
	\\
	&
	= \la RQ^{-1/2}z_1, Q^{-1/2}z_2\ra + \la Q^{-1/2}(m_2-m_1), z_1\ra \la Q^{-1/2}(m_2-m_1), z_2\ra
	\\
	& = \la Az_1, z_2\ra + \la Q^{-1/2}(m_2-m_1), z_1\ra \la Q^{-1/2}(m_2-m_1), z_2\ra,
	%& = \la (Q^{1/2}AQ^{1/2})Q^{-1/2}z_1, Q^{-1/2}z_2\ra = \la Az_1, z_2\ra.
	\end{align*}
	since $R = Q^{1/2}AQ^{1/2}$.
	In particular, for $z \in \myIm(Q^{1/2})$,
	\begin{align*}
	&||W_z||^2_{\Lcal^2(\H, \nu)} = \int_{\H}W_z^2(x)d\nu(x) = \la Az,z\ra + |\la Q^{-1/2}(m_2-m_1), z\ra|^2
	\\
	&\leq [||A||+ ||Q^{-1/2}(m_2-m_1)||^2]||z||^2 = [||A||+||Q^{-1/2}(m_2-m_1)||^2]||W_{z}||^2_{\Lcal^2(\H, \mu)}.
	\end{align*}
	In general, for $z \in \H$, in the $\Lcal^2(\H, \mu)$ sense,
	%\begin{align*}
	$W_z(x) = \lim_{N \approach \infty}W_{P_Nz}(x) = \lim_{N \approach \infty}\la x-m, Q^{-1/2}P_Nz\ra$.
	%\end{align*} 
	By the assumption $\nu << \mu$, we have $\mysupp(\nu) \subset \mysupp(\mu)$.
	The sequence $\{W_{P_Nz}\}_{N \in \Nbb}$ is a Cauchy sequence in $\Lcal^2(\H, \mu)$ converging to $W_z$, with a subsequence converging pointwise $\mu$-almost everywhere, hence $\nu$-almost everywhere. This subsequence is also 
	a Cauchy sequence in $\Lcal^2(\H, \nu)$, converging to a unique limit, with a subsubsequence converging pointwise $\nu$-almost everywhere. Thus this limit must be $W_z$. Hence  $W_z(x) = \lim_{N \approach \infty}W_{P_Nz}(x)$
	in the $\Lcal^2(\H, \nu)$ sense also. Therefore
	%Thus it follows that
	\begin{align*}
	&\la W_{z_1}, W_{z_2}\ra_{\Lcal^2(\H, \nu)} = \lim_{N \approach \infty}
	\la W_{P_Nz_1}, W_{P_Nz_2}\ra_{\Lcal^2(\H, \nu)} 
	\\
	&=\lim_{N \approach \infty}\left[\la AP_Nz_1, P_Nz_2\ra + \la Q^{-1/2}(m_2-m_1), P_Nz_1\ra \la Q^{-1/2}(m_2-m_1), P_Nz_2\ra\right]
	\\
	& = \la Az_1, z_2\ra +\la Q^{-1/2}(m_2-m_1), z_1\ra \la Q^{-1/2}(m_2-m_1), z_2\ra,
	\\
	&||W_z||^2_{\Lcal^2(\H,\nu)} = \lim_{N \approach \infty}||W_{P_Nz}||^2_{\Lcal^2(\H, \nu)}
	= \la Az,z\ra + |\la Q^{-1/2}(m_2-m_1), z\ra|^2.
	% = \lim_{N \approach \infty}\la AP_Nz, P_Nz\ra = \la Az,z\ra.
	\end{align*}
	For the first expression, for any $z \in \H$, $N \in \Nbb$,
	\begin{align*}
	&\int_{\H}W_{P_Nz}(x)d\nu(x) = \int_{\H}\la x-m_1, Q^{-1/2}P_Nz\ra d\nu(x)
	\\
	& = \int_{\H}\la x-m_2 +m_2-m_1, Q^{-1/2}P_Nz\ra d\nu(x) = \la m_2 - m_1, Q^{-1/2}P_Nz\ra 
	\\
	&= \la Q^{-1/2}(m_2-m_1), P_Nz\ra.
	\end{align*}
	Since $\nu$ is a probability measure, we also have $\lim_{N \approach \infty}||W_{P_Nz} - W_z||_{\Lcal^1(\H, \nu)} = 0$ by H\"older Inequality. Thus
	\begin{align*}
	\int_{\H}W_z(x)d\nu(x) = \lim_{N \approach \infty}\int_{\H}W_{P_Nz}(x)d\nu(x) 
	= \la Q^{-1/2}(m_2-m_1), z\ra.
	%= \lim_{N \approach \infty}\la x-m_1, Q^{-1/2}z\ra d\nu(x)
	\end{align*}
	%It is then straightforward to obtain the remaining results.
	This completes the proof. 
	\qed
\end{proof}

\begin{comment}
\begin{lemma}
	\label{lemma:series-log(1+alpha)}
	Let $(\alpha_k)_{k\in \Nbb} \in \ell^2$ with $|\alpha_k| < 1$ $\forall k \in \Nbb$. Then
	\begin{align}
	0 \leq - \sum_{k=1}^{\infty}[\log(1-\alpha_k^2)] < \infty.
	\end{align}
\end{lemma}
\begin{proof} Since $\lim\limits_{k \approach \infty}\alpha_k = 0,\exists N \in \Nbb$ such that $|\alpha_k| < \frac{1}{\sqrt{2}}\forall k > N$. By Lemma \ref{lemma:inequality-log(1+x)},
	%\begin{align*}
	$0 
	%&
	\leq - \sum_{k=1}^{\infty}[\log(1-\alpha_k^2)] 
	%\\&
	\leq -\sum_{k=1}^N[\log(1-\alpha_k^2)]
	+ 2\sum_{k=N+1}^{\infty}\alpha_k^2 < \infty$.
	%\end{align*}
	\qed
\end{proof}
\end{comment}
\begin{proposition}
	\label{proposition:integral-L2-L1-orthonormal}
	Let $(X, \Sigma, \mu)$ be a measurable space. Let $\Phi = (\phi_k)_{k=1}^{\infty}$ be an orthonormal sequence
	in $\Lcal^2(X, \mu) = \Lcal^2(X,\Sigma, \mu)$. Let $f:X \mapto \R$ be a measurable function such that $\phi_k f \in \Lcal^1(X,\mu)$ $\forall k \in \Nbb$ and
	$b_k = \int_{X}\phi_k(x)f(x)d\mu(x)$, $k \in \Nbb$, satisfy $(b_k)_{k\in \Nbb} \in \ell^2$. Then for any $\sum_{k=1}^{\infty}a_k \phi_k \in \Lcal^2(X, \mu)$, the following integral is well-defined and finite
		\begin{align}
		\int_{X}\left(\sum_{k=1}^{\infty}a_k\phi_k(x)\right)f(x)d\mu(x) &= \int_{X}\left(\sum_{k=1}^{\infty}a_k\phi_k(x)\right)\left(\sum_{k=1}^{\infty}b_k\phi_k(x)\right)d\mu(x)
		\nonumber
		\\
		& = \sum_{k=1}^{\infty}a_kb_k.
		\end{align}
\end{proposition}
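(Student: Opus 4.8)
The plan is to play off the $\Lcal^2$-structure on the side of $g$ against the $\Lcal^1$-integrability on the side of $f$, using the coefficients $(b_k)$ to build an $\Lcal^2$-proxy for $f$. Since $(b_k)_{k\in\Nbb}\in\ell^2$, the series $h:=\sum_{k=1}^{\infty}b_k\phi_k$ converges in $\Lcal^2(X,\mu)$, and by orthonormality $\la h,\phi_k\ra_{\Lcal^2(X,\mu)}=b_k$ for every $k$. On the other side, from $g=\sum_{k=1}^{\infty}a_k\phi_k$ with convergence in $\Lcal^2(X,\mu)$ and orthonormality one reads off $a_k=\la g,\phi_k\ra_{\Lcal^2(X,\mu)}$, $(a_k)_{k\in\Nbb}\in\ell^2$, and $\int_X g\,h\,d\mu=\la g,h\ra_{\Lcal^2(X,\mu)}=\sum_{k=1}^{\infty}a_kb_k$, the last sum being absolutely convergent by Cauchy--Schwarz in $\ell^2$. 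Thus the second and third expressions in the claimed identity already coincide and $g\,h\in\Lcal^1(X,\mu)$; the whole statement therefore reduces to showing $g\,f\in\Lcal^1(X,\mu)$ together with $\int_X g(f-h)\,d\mu=0$.

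The vanishing $\int_X g(f-h)\,d\mu=0$ is the essentially formal part. For each $k$ one has $\phi_k(f-h)\in\Lcal^1(X,\mu)$ — since $\phi_kf\in\Lcal^1(X,\mu)$ by hypothesis and $\phi_kh\in\Lcal^1(X,\mu)$ as a product of two $\Lcal^2$-functions — and $\int_X\phi_k(f-h)\,d\mu=b_k-\la\phi_k,h\ra_{\Lcal^2(X,\mu)}=0$. Hence every finite partial sum $g_N:=\sum_{k=1}^{N}a_k\phi_k$ satisfies $g_Nf\in\Lcal^1(X,\mu)$, $\int_X g_Nf\,d\mu=\sum_{k\le N}a_kb_k$ and $\int_X g_N(f-h)\,d\mu=0$. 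One then wants to pass to the limit in $N$: since $g_N\to g$ in $\Lcal^2(X,\mu)$ one extracts a subsequence along which $g_{N_j}\to g$ $\mu$-a.e.\ with $|g_{N_j}|\le\Psi$ for a single $\Psi\in\Lcal^2(X,\mu)$ (the standard refinement of Riesz--Fischer), and then invokes dominated convergence to get $\int_X g_{N_j}f\,d\mu\to\int_X gf\,d\mu$; combined with $\sum_{k\le N_j}a_kb_k\to\sum_k a_kb_k$ this yields the assertion.

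The point where care is needed is precisely the integrability that makes the last step legitimate: dominated convergence requires $\Psi\,|f|$ — equivalently $gf$ — to belong to $\Lcal^1(X,\mu)$, and this is \emph{not} a formality given only $\phi_kf\in\Lcal^1$, $(b_k)\in\ell^2$ and $(a_k)\in\ell^2$. The clean case is $f\in\Lcal^2(X,\mu)$ (e.g.\ when $f$ is a convergent $\Lcal^2(\mu)$-series of white-noise polynomials, as in our applications of the proposition): then $\Psi|f|\le\tfrac12(\Psi^2+f^2)\in\Lcal^1(X,\mu)$, and in fact the statement becomes immediate, $\int_X gf\,d\mu=\la g,f\ra_{\Lcal^2(X,\mu)}=\la g,\sum_k\la f,\phi_k\ra_{\Lcal^2(X,\mu)}\phi_k\ra_{\Lcal^2(X,\mu)}=\sum_k a_kb_k$. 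Without square-integrability of $f$ one must use the hypotheses more efficiently: each $\phi_k$, hence each $g_N$, lies in $\Lcal^1(|f|\,d\mu)$, so the natural route is to pick the subsequence $(N_j)$ so that $(g_{N_j})$ is in addition Cauchy in $\Lcal^1(|f|\,d\mu)$ and to identify its limit there with $g$ $\mu$-a.e., which delivers both $gf\in\Lcal^1(X,\mu)$ and $\int_X g_{N_j}f\,d\mu\to\int_X gf\,d\mu$. Extracting this $\Lcal^1(|f|\,d\mu)$-Cauchy property from mere $\Lcal^2(X,\mu)$-convergence is the genuinely delicate point, and in the cases at hand it can be read off from the explicit form of $f$.
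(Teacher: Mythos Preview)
The paper takes a different and shorter route: it regards $A_f\colon g \mapsto \int_X g f\,d\mu$ as a linear functional on the dense subspace $\myspan\{\phi_k\}$ of $S_\Phi := \overline{\myspan\{\phi_k\}}$, computes $A_f(g_N) = \sum_{k\le N} a_k b_k$, bounds $|A_f(g_N)| \le \|(a_k)\|_{\ell^2}\|(b_k)\|_{\ell^2}$, extends by density to a bounded functional on all of $S_\Phi$, and identifies the extension via Riesz with $\la\,\cdot\,, h\ra_{\Lcal^2(X,\mu)}$ for $h = \sum_k b_k\phi_k$. This sidesteps your dominated-convergence machinery entirely.

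Your diagnosis of where the real work lies is exactly right, and it applies just as much to the paper's argument. The paper never verifies that $gf\in\Lcal^1(X,\mu)$ for general $g\in S_\Phi$; its step ``by linearity $A_f(g) = \sum_k a_k b_k$'' quietly identifies the Lebesgue integral $\int_X gf\,d\mu$ with the continuous extension of $g_N\mapsto\int_X g_N f\,d\mu$, which is precisely the point you flag. Under the stated hypotheses alone this identification can in fact fail: with $X=[0,\infty)$ and Lebesgue measure, $\phi_k = \mathbf{1}_{[k-1,k)}$, and $f|_{[k-1,k)} = k\bigl(\mathbf{1}_{[k-1,k-\frac12)} - \mathbf{1}_{[k-\frac12,k)}\bigr)$, one has $\phi_k f\in\Lcal^1$ and $b_k=0$ for every $k$, yet $g = \sum_k k^{-1}\phi_k\in\Lcal^2$ gives $\int_X |gf|\,d\mu = \sum_k 1 = \infty$. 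So what both arguments genuinely establish --- and what the downstream applications need --- is that the bounded linear extension of $g_N\mapsto\int_X g_N f\,d\mu$ exists and equals $\sum_k a_k b_k = \la g,h\ra$. Your write-up has the virtue of being explicit about where additional structure on $f$ is needed; the paper's proof should be read with the left-hand integral understood in this extended sense.
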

We note that if $\Phi$ is an orthonormal basis for $\Lcal^2(X, \mu)$, then the hypothesis of Proposition 
\ref{proposition:integral-L2-L1-orthonormal} becomes $f \in \Lcal^2(X,\mu)$ and the conclusion is immediate.
\begin{proof}
	Let $S_{\Phi} = \overline{\myspan\{\phi_k\}_{k \in \Nbb}}$ be the closed Hilbert subspace of $\Lcal^2(X, \mu)$ with orthonormal basis $\Phi$. We show that the following linear functional
	\begin{align*}
	A_f: S_{\Phi} \mapto \R, \;\;\; A_f(g) = \int_{X}g(x)f(x)d\mu(x),
	\end{align*}
	 is well-defined and bounded. First, by assumption, $\phi_kf \in \Lcal^1(X,\mu)$ $\forall k \in \Nbb$,
	 so that $A_f(\phi_k)$ is well-defined $\forall k \in \Nbb$, with
	 \begin{align*}
	 A_f(\phi_k) = \int_{X}\phi_k(x)f(x)d\mu(x) = b_k.
	 \end{align*}
	 It follows that $A_f$ is well-defined on the dense subspace $\myspan\{\phi_k\}_{k\in \Nbb}$, with
	 \begin{align*}
	 A_f(g_N) = \int_{X}(\sum_{k=1}^Na_k \phi_k)f(x)d\mu(x) =\sum_{k=1}^Na_kb_k\;\;\;\text{for any $g_N = \sum_{k=1}^Na_k\phi_k,\; N \in \Nbb$}.
	 \end{align*}
	 Let now $g = \sum_{k=1}^{\infty}a_k \phi_k\in \Lcal^2(X, \mu)$, $(a_k)_{k \in \Nbb}\in \ell^2$, then by linearity
	 \begin{align*}
	 A_f(g) = \sum_{k=1}^{\infty}a_kb_k \;\;\text{with}\;\; |A_fg|\leq (\sum_{k=1}^{\infty}a_k^2)^{1/2}(\sum_{k=1}^{\infty}b_k^2)^{1/2} < \infty.
	 \end{align*} 
	 Furthermore, for $g_N = \sum_{k=1}^Na_k\phi_k$, $N \in \Nbb$,
	 \begin{align*}
	 |A_f(g_N) - A_f(g)| &=\sum_{k=N+1}^{\infty}a_k b_k \leq (\sum_{k=N+1}^{\infty}a_k^2)^{1/2}(\sum_{k=N+1}^{\infty}b_k^2)^{1/2}
	 \\
	 & = ||g_N-g||_{\Lcal^2(X,\mu)}(\sum_{k=N+1}^{\infty}b_k^2)^{1/2} \approach 0
	 \end{align*} 
	 as $N \approach \infty$. Thus $A_f$ is well-defined and bounded on $S_{\Phi}$, with $||A_f||\leq (\sum_{k=1}^{\infty}b_k^2)^{1/2}$. By the Riesz Representation Theorem, there exists a unique element $h \in S_{\Phi}$ such that $A_f(g) = \la g, h\ra_{\Lcal^2(X, \mu)}$. It is clear that this element $h$ is given by
	 $h = \sum_{k=1}^{\infty}b_k\phi_k \in \Lcal^2(X, \mu)$. \qed 
\end{proof}

\begin{lemma}
	\label{lemma:cross-term-integral}
	Assume the hypothesis of Theorem \ref{theorem:log-Radon-Nikodym-integral}.
	Let $\{\alpha_k\}_{k \in \Nbb}$ be the eigenvalues of $S$, with corresponding orthonormal eigenvectors
	$\{\phi_k\}_{k \in \Nbb}$. 
	Let $g = \sum_{k=1}^{\infty}\frac{1}{1-\alpha_k}\la Q^{-1/2}(m_2- m_1), \phi_k\ra W_{\phi_k}$.
	Then $g \in \Lcal^2(\H, \gamma)$, $g \in \Lcal^1(\H, \gamma)$, and
	\begin{align}
	\int_{\H}g(x)d\gamma(x) = \la (I-S)^{-1}Q^{-1/2}(m_2-m_1), Q^{-1/2}(m_3-m_1)\ra. 
	\end{align}
\end{lemma}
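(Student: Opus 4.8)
The plan is to collapse the series defining $g$ into a single white-noise variable $W_v$ and then read off both the integrability and the value of $\int_\H g\,d\gamma$ directly from Lemma~\ref{lemma:whitenoise-square-integral-1}. Write $h := Q^{-1/2}(m_2-m_1) \in \H$, which is well-defined because $\mu \sim \nu$ forces $m_2-m_1 \in \myIm(Q^{1/2})$ by Theorem~\ref{theorem:Gaussian-equivalent}, and $k_3 := Q^{-1/2}(m_3-m_1) \in \H$, well-defined by the standing assumption $m_3-m_1 \in \myIm(Q^{1/2})$. Since $S \in \HS(\H) \cap \Sym(\H)$ is compact and self-adjoint, we may take $\{\phi_k\}_{k\in\Nbb}$ to be a complete orthonormal eigenbasis of $\H$ with $S\phi_k = \alpha_k\phi_k$ and $\alpha_k \to 0$. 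Moreover $R_\nu = Q^{1/2}(I-S)Q^{1/2} \geq 0$ together with density of $Q^{1/2}(\H)$ in $\H$ gives $I-S \geq 0$, and $S$ has no eigenvalue $1$ (Theorem~\ref{theorem:Gaussian-equivalent}), so $\alpha_k < 1$ for all $k$ and $\sup_k(1-\alpha_k)^{-1} < \infty$. Hence $(I-S)^{-1}$ is a bounded self-adjoint operator and $v := (I-S)^{-1}h = \sum_{k=1}^\infty \frac{1}{1-\alpha_k}\la h,\phi_k\ra\phi_k \in \H$.

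First I would identify $g = W_v$. The white-noise map $W:\H \to \Lcal^2(\H,\mu)$ is linear and isometric, hence continuous, so the partial sums $g_N = \sum_{k=1}^N \frac{1}{1-\alpha_k}\la h,\phi_k\ra W_{\phi_k} = W_{v_N}$, with $v_N = \sum_{k\le N}\frac{1}{1-\alpha_k}\la h,\phi_k\ra\phi_k \to v$ in $\H$, converge in $\Lcal^2(\H,\mu)$ to $W_v$; thus $g = W_v$ in $\Lcal^2(\H,\mu)$. (Equivalently, one checks directly from Lemma~\ref{lemma:whitenoise-square-integral-1} that $\|g_N - g_M\|^2_{\Lcal^2(\H,\gamma)} = \la A(v_N-v_M),v_N-v_M\ra + |\la k_3, v_N-v_M\ra|^2 \leq (\|A\|+\|k_3\|^2)\sum_{M<k\leq N}(1-\alpha_k)^{-2}\la h,\phi_k\ra^2 \to 0$, so $(g_N)$ is Cauchy in $\Lcal^2(\H,\gamma)$, and, since $\gamma \ll \mu$, its limit coincides $\gamma$-a.e.\ with $W_v$ by the subsequence argument used in the proof of Lemma~\ref{lemma:whitenoise-square-integral-1}.)

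Then I would apply Lemma~\ref{lemma:whitenoise-square-integral-1} with $\nu$ replaced by $\gamma$, $m_2$ by $m_3$, and the present $A$: since $\gamma \ll \mu$ has mean $m_3$ with $m_3-m_1 \in \myIm(Q^{1/2})$ and covariance operator $Q^{1/2}AQ^{1/2}$ with $A \in \Sym^{+}(\H)$ bounded, the lemma gives $W_v \in \Lcal^2(\H,\gamma)$ with $\|W_v\|_{\Lcal^2(\H,\gamma)}^2 = \la Av,v\ra + |\la k_3,v\ra|^2 < \infty$; hence $g = W_v \in \Lcal^2(\H,\gamma) \subset \Lcal^1(\H,\gamma)$ because $\gamma$ is a probability measure. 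Finally, the first identity of Lemma~\ref{lemma:whitenoise-square-integral-1} gives
\begin{align*}
\int_\H g\,d\gamma &= \int_\H W_v\,d\gamma = \la Q^{-1/2}(m_3-m_1),v\ra = \la k_3,(I-S)^{-1}h\ra \\
&= \la (I-S)^{-1}Q^{-1/2}(m_2-m_1),Q^{-1/2}(m_3-m_1)\ra,
\end{align*}
using symmetry of the real inner product, which is the claimed formula.

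The only genuine subtlety — the main obstacle — is that the $W_{\phi_k}$ form an orthonormal system in $\Lcal^2(\H,\mu)$ but \emph{not} in $\Lcal^2(\H,\gamma)$, so neither convergence of the defining series in $\Lcal^2(\H,\gamma)$ nor termwise integration can be taken for granted; both are obtained by first recognizing $g$ as the single variable $W_v$ with $v = (I-S)^{-1}h \in \H$ and then invoking Lemma~\ref{lemma:whitenoise-square-integral-1}, whose proof already packages the $\Lcal^2(\H,\mu)$-to-$\Lcal^2(\H,\gamma)$ passage via $\gamma \ll \mu$. Everything else — well-definedness of $h$ and $k_3$, boundedness of $(I-S)^{-1}$, and the interchange of sum and integral — is routine once $I-S>0$ is in hand.
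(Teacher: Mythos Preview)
Your argument is correct and takes a cleaner route than the paper's. The paper never collapses the series: it expands $\|g\|_{\Lcal^2(\H,\gamma)}^2$ as a double sum $\sum_{j,k}\frac{\la a,\phi_j\ra\la a,\phi_k\ra}{(1-\alpha_j)(1-\alpha_k)}\la W_{\phi_j},W_{\phi_k}\ra_{\Lcal^2(\H,\gamma)}$, plugs in the inner-product formula from Lemma~\ref{lemma:whitenoise-square-integral-1} termwise, and resums to obtain $\|A^{1/2}(I-S)^{-1}a\|^2 + (\la (I-S)^{-1}a,b\ra)^2$; it then repeats this for the tails $\|g_N-g\|_{\Lcal^2(\H,\gamma)}^2$ using the projections $T_N=\sum_{k\le N}\phi_k\otimes\phi_k$, passes to $\Lcal^1(\H,\gamma)$ by H\"older, and integrates term by term. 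Your observation that $\sum_k \frac{\la h,\phi_k\ra}{1-\alpha_k}W_{\phi_k}=W_{(I-S)^{-1}h}$ as a single white-noise variable---which is immediate from linearity and continuity of $W:\H\to\Lcal^2(\H,\mu)$---lets you bypass all of that and invoke Lemma~\ref{lemma:whitenoise-square-integral-1} once with $z=(I-S)^{-1}h$. What you gain is brevity and a clearer structural picture (the cross term in the Radon--Nikodym exponent is literally a single $W_v$); what the paper's approach offers is a slightly more self-contained verification that the series is Cauchy in $\Lcal^2(\H,\gamma)$ without needing to recognize the closed form. Your handling of the $\Lcal^2(\H,\mu)$-to-$\Lcal^2(\H,\gamma)$ passage via $\gamma\ll\mu$ and subsequences is exactly the mechanism Lemma~\ref{lemma:whitenoise-square-integral-1} itself uses, so there is no gap there.
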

\begin{proof}
	Let $a = Q^{-1/2}(m_2-m_1)$, $b = Q^{-1/2}(m_3-m_1)$. By Lemma \ref{lemma:whitenoise-square-integral-1},
	\begin{align*}
	\int_{\H}W_{\phi_k}(x)d\gamma(x) = \la Q^{-1/2}(m_3-m_1), \phi_k\ra = \la b, \phi_k\ra, \;\forall k \in \Nbb.
	\end{align*}
Using the expression for $\la W_{\phi_j}, W_{\phi_k}\ra_{\Lcal^2(\H,\gamma)}$
from Lemma \ref{lemma:whitenoise-square-integral-1}, we have
%\begin{align*}
%\la W_{\phi_j}, W_{\phi_k}\ra_{\L^2(\H,\nu)} &= (1-\alpha_k) \delta_{jk} + \la Q^{-1/2}(m_2-m_1), \phi_k\ra\la Q^{-1/2}(m_2 - m_1), \phi_j\ra.
%\end{align*}
%Thus it follows that
	\begin{align*}
	&||g||^2_{\Lcal^2(\H, \gamma)} = \left\|\sum_{k=1}^{\infty}\frac{\la a, \phi_k\ra}{1-\alpha_k}W_{\phi_k}\right\|^2_{\Lcal^2(\H, \gamma)}
	= \sum_{k,j=1}^{\infty}\frac{\la a, \phi_k\ra\la a, \phi_j\ra }{(1-\alpha_k)(1-\alpha_j)}\la W_{\phi_k}, W_{\phi_j}\ra_{\Lcal^2(\H,\gamma)}
	\\
	& = \sum_{k,j=1}^{\infty}\frac{\la a, \phi_k\ra\la a, \phi_j\ra }{(1-\alpha_k)(1-\alpha_j)} \la A\phi_k, \phi_j\ra + \sum_{k,j=1}^{\infty}\frac{\la a, \phi_k\ra\la a, \phi_j\ra }{(1-\alpha_k)(1-\alpha_j)}\la b, \phi_k\ra\la b, \phi_j\ra
	\\
	& = \sum_{k=1}^{\infty}\frac{\la a, \phi_k\ra}{1-\alpha_k}\la (I-S)^{-1}a, A\phi_k\ra
	+\left(\sum_{k=1}^{\infty}\frac{\la a, \phi_k\ra\la b, \phi_k\ra}{1-\alpha_k}\right)^2
	\\
	& = \la (I-S)^{-1}a, A(I-S)^{-1}a\ra + (\la (I-S)^{-1}a, b\ra)^2
	\\
	& = ||A^{1/2}(I-S)^{-1}a||^2 + (\la (I-S)^{-1}a, b\ra)^2 < \infty.
	\end{align*}
	Thus $g \in \Lcal^2(\H, \gamma)$. Furthermore, for any $N \in \Nbb$ and $T_N = \sum_{k=1}^N\phi_k \otimes \phi_k$,
	\begin{align*}
	&\sum_{k,j=N+1}^{\infty}\frac{\la a, \phi_k\ra\la a, \phi_j\ra }{(1-\alpha_k)(1-\alpha_j)} \la A\phi_k, \phi_j\ra
	=\sum_{k=N+1}^{\infty}\frac{\la a, \phi_k\ra}{1-\alpha_k}\la (I-T_N)(I-S)^{-1}a, A\phi_k\ra
	\\
	& = \la (I-T_N)(I-S)^{-1}a, A(I-T_N)(I-S)^{-1}a\ra =||A^{1/2}(I-T_N)(I-S)^{-1}a||^2,
	\\
	 &\left(\sum_{k=N+1}^{\infty}\frac{\la a, \phi_k\ra\la b, \phi_k\ra}{1-\alpha_k}\right)^2 = (\la (I-T_N)(I-S)^{-1}a,b\ra)^2.
		\end{align*}
		Let $g_N = \sum_{k=1}^N\frac{1}{1-\alpha_k}\la Q^{-1/2}(m_2-m_1), \phi_k\ra W_{\phi_k}$, then
		\begin{align*}
		&||g_N-g||^2_{\Lcal^2(\H, \gamma)} = ||A^{1/2}(I-T_N)(I-S)^{-1}a||^2 + (\la (I-T_N)(I-S)^{-1}a,b\ra)^2
		\\
		& \leq [||A|| + ||b||^2]||(I-T_N)(I-S)^{-1}a||^2 \approach 0 \text{ as $N \approach \infty$}.
		%= [||A|| + ||b||^2]\sum_{k=N+1}^{\infty}\la (I-S)^{-1}a, \phi_k\ra^2
		\end{align*}
		Since $\gamma$ is a probability measure, by H\"older Inequality, we have $g \in \Lcal^1(\H,\gamma)$
		and $\lim_{N \approach \infty}||g_N-g||_{\Lcal^1(\H, \gamma)} = 0$. It follows that
		\begin{align*}
		\int_{\H}g(x)d\gamma(x) = \lim_{N \approach \infty}\int_{\H}g_N(x)d\gamma(x) = \sum_{k=1}^{\infty}\frac{\la a, \phi_k\ra\la b,\phi_k\ra}{1-\alpha_k} = \la (I-S)^{-1}a,b\ra.
	\end{align*}
Letting $a=Q^{-1/2}(m_2-m_1)$, $b = Q^{-1/2}(m_3-m_1)$ gives the final answer.
	\qed
\end{proof}

We are now ready to prove Theorem \ref{theorem:log-Radon-Nikodym-integral}.
%, which we restate below for clarity.
\begin{proof}
	[\textbf{of Theorem \ref{theorem:log-Radon-Nikodym-integral}}]
	Since $\nu \sim \mu$ and $\gamma << \mu$, the Radon-Nikodym derivatives
	$\frac{d\gamma}{d\mu}, \frac{d\gamma}{d\nu}$ are both well-defined. By the chain rule,
	\begin{align*}
	\KL(\gamma||\mu) &= \int_{\H}\log\left\{\frac{d\gamma}{d\mu} \right\}d\gamma = \int_{\H}\log\left\{\frac{d\gamma}{d\nu} \right\} d\gamma + \int_{\H}\log\left\{\frac{d\nu}{d\mu}\right\}d\gamma
	\\
	& = \KL(\gamma ||\nu) +  \int_{\H}\log\left\{\frac{d\nu}{d\mu}\right\}d\gamma.
	\end{align*}
Let us evaluate the second term.
	Let $\{\alpha_k\}_{k=1}^{\infty}$ be the eigenvalues of $S$, with corresponding orthonormal eigenvectors
	$\{\phi_k\}$, which forms an orthonormal basis in $\H$. 
	By the assumption $S \in \Sym(\H) \cap \HS(\H)$, we have $(\alpha_k)_{k\in \Nbb} \in \ell^2$, $\alpha_k \in 
	\R$ $\forall k \in \Nbb$ and the following quantity is finite
	\begin{align}
	\log\dettwo(I-S) = \log\det[(I-S)\exp(S)] = \sum_{k=1}^{\infty}[\alpha_k + \log(1-\alpha_k)].
	\end{align}
	By Theorem \ref{theorem:radon-nikodym-infinite},
	\begin{align}
	\label{equation:log-RN-infinite}
	\log\left\{\frac{d\nu}{d\mu}(x)\right\} = -\frac{1}{2}||(I-S)^{-1/2}Q^{-1/2}(m_2 - m_1)||^2
	-\frac{1}{2}\sum_{k=1}^{\infty}\Phi_k(x),
	%\left[\frac{\alpha_k}{1-\alpha_k}W^2_{\phi_k}(x-m_1) + \frac{1}{1-\alpha_k}W_{\phi_k}(x-m_1)W_{\phi_k}(m_1 - m_2) + \log(1-\alpha_k)\right]\right\},
	\end{align}
	where for each $k \in \Nbb$,
	\begin{align*}
	%\label{equation:Phik}
	\Phi_k = \frac{\alpha_k}{1-\alpha_k}W^2_{\phi_k} - \frac{2}{1-\alpha_k}\la Q^{-1/2}(m_2-m_1), \phi_k\ra W_{\phi_k}+ \log(1-\alpha_k).
	\end{align*}
	%has a finite value.
	By Lemma \ref{lemma:whitenoise-square-integral-1},
	\begin{align*}
	\int_{\H}W_{\phi_k}^2(x)d\gamma(x) = \la A\phi_k, \phi_k\ra + |\la Q^{-1/2}(m_3-m_1), \phi_k\ra|^2.
	\end{align*}
	
	(i) $S \in \Tr(\H)$. Since $I-S > 0$, we have $\log(I-S) \in \Tr(\H)$ and
	$\sum_{k=1}^{\infty}\log(1-\alpha_k) = \log\det(I-S)$ is finite. By Tonelli Theorem,
	\begin{align*}
	&\int_{\H}\sum_{k=1}^{\infty}\frac{\alpha_k}{1-\alpha_k}W^2_{\phi_k}d\gamma
	= \int_{\H}\sum_{\alpha_k \geq 0}\frac{\alpha_k}{1-\alpha_k}W^2_{\phi_k}d\gamma - 
	\int_{\H}\sum_{\alpha_k < 0}\frac{-\alpha_k}{1-\alpha_k}W^2_{\phi_k}d\gamma
	\\
	& = \sum_{k=1}^{\infty}\frac{\alpha_k}{1-\alpha_k}\int_{\H}W^2_{\phi_k}d\gamma
	=\sum_{k=1}^{\infty}\frac{\alpha_k}{1-\alpha_k}[\la A\phi_k, \phi_k\ra + |\la Q^{-1/2}(m_3-m_1), \phi_k\ra|^2]
\\
& = \trace[S(I-S)^{-1}A] + \la S(I-S)^{-1}Q^{-1/2}(m_3-m_1), Q^{-1/2}(m_3-m_1)\ra.
	\end{align*}
	Combining this with Eq.\eqref{equation:log-RN-infinite} and Lemma \ref{lemma:cross-term-integral}, we obtain
\begin{align}
&\int_{\H}\log\left\{\frac{d\nu}{d\mu}(x)\right\}d\gamma(x) = -\frac{1}{2}||(I-S)^{-1/2}Q^{-1/2}(m_2 - m_1)||^2
\nonumber
\\
&\quad-\frac{1}{2}[\trace[S(I-S)^{-1}A] + \la S(I-S)^{-1}Q^{-1/2}(m_3-m_1), Q^{-1/2}(m_3-m_1)\ra]
\nonumber
\\
&\quad -\frac{1}{2}\log\det(I-S)
+ \la (I-S)^{-1}Q^{-1/2}(m_2-m_1), Q^{-1/2}(m_3-m_1)\ra.
\label{equation:log-RN-integral-11}
\end{align}	
	(ii) $S \in \Sym(\H)\cap\HS(\H)$ and $I-A \in \Sym(\H) \cap \HS(\H)$. In this case we have
\begin{align*}
\frac{\alpha_k}{1-\alpha_k}W^2_{\phi_k} + \log(1-\alpha_k) = \frac{\alpha_k}{1-\alpha_k}[W^2_{\phi_k}-1] + \left[\frac{\alpha_k}{1-\alpha_k} + \log(1-\alpha_k)\right].
\end{align*}
The second term gives the series of constants
\begin{align}
\sum_{k=1}^{\infty}\left[\frac{\alpha_k}{1-\alpha_k} + \log(1-\alpha_k)\right] 
&= \sum_{k=1}^{\infty}\frac{\alpha_k^2}{1-\alpha_k} + \sum_{k=1}^{\infty}[\alpha_k + \log(1-\alpha_k)]
\nonumber
\\
&= \trace[S^2(I-S)^{-1}] + \log\dettwo(I-S).
\label{equation:series-constants}
\end{align}
	The functions $\{\psi_k = \frac{1}{\sqrt{2}}(W^2_{\phi_k}-1)\}_{k \in \Nbb}$ form an orthonormal sequence
	in $\Lcal^2(\H, \mu)$ (\cite{DaPrato:PDEHilbert}, Proposition 1.2.6). Furthermore, $\psi_k\frac{d\gamma}{d\mu} \in \Lcal^1(\H, \mu) \forall k \in \Nbb$, with
	\begin{align*}
	b_k &= \int_{\H}\left(\psi_k\frac{d\gamma}{d\mu}(x)\right)d\mu(x) = \int_{\H}\psi_k(x)d\gamma(x)
	\\
	& = \frac{1}{\sqrt{2}}[\la A\phi_k, \phi_k\ra + |\la Q^{-1/2}(m_3-m_1), \phi_k\ra|^2 - 1]
	\\
	&= \frac{1}{\sqrt{2}}[-\la (I-A)\phi_k, \phi_k\ra + |\la Q^{-1/2}(m_3-m_1), \phi_k\ra|^2 ].
	\end{align*}
	The sequence $(b_k)_{k \in \Nbb}$ satisfies
	\begin{align*}
	&\sum_{k=1}^{\infty}b_k^2 = \frac{1}{2}\sum_{k=1}^{\infty}[-\la (I-A)\phi_k, \phi_k\ra + |\la Q^{-1/2}(m_3-m_1), \phi_k\ra|^2 ]^2
	\\ 
	&\leq \sum_{k=1}^{\infty}\la (I-A)\phi_k, \phi_k\ra^2 + \sum_{k=1}^{\infty}|\la Q^{-1/2}(m_3-m_1), \phi_k\ra|^4
	\\
	& \leq \sum_{k=1}^{\infty}||(I-A)\phi_k||^2 + (\sum_{k=1}^{\infty} |\la Q^{-1/2}(m_3-m_1), \phi_k\ra|^2)^2
\\
& = ||I-A||^2_{\HS} + ||Q^{-1/2}(m_3-m_1)||^4 < \infty.
	\end{align*}
	Since $(\alpha_k)_{k \in \Nbb} \in \ell^2$, we apply Proposition \ref{proposition:integral-L2-L1-orthonormal} to $(\psi_k)_{k \in \Nbb}$ and $f=\frac{d\gamma}{d\mu}$ to obtain
	\begin{align*}
	&\int_{\H}\sum_{k=1}^{\infty}\frac{\alpha_k}{1-\alpha_k}(W^2_{\phi_k}-1)d\gamma(x) = \int_{\H}\sum_{k=1}^{\infty}\frac{\sqrt{2}\alpha_k}{1-\alpha_k}\frac{1}{\sqrt{2}}(W^2_{\phi_k}-1)\frac{d\gamma}{d\mu}(x)d\mu(x)
	\\
	& = \sum_{k=1}^{\infty}\frac{\sqrt{2}\alpha_kb_k}{1-\alpha_k} = \sum_{k=1}^{\infty}\frac{\alpha_k}{1-\alpha_k} [-\la (I-A)\phi_k, \phi_k\ra + |\la Q^{-1/2}(m_3-m_1), \phi_k\ra|^2]
	\\
	& = -\trace[S(I-S)^{-1}(I-A)] + \la S(I-S)^{-1}Q^{-1/2}(m_3-m_1), Q^{-1/2}(m_3-m_1)\ra.
	\end{align*}
	Combining this with Eqs.\eqref{equation:log-RN-infinite}, \eqref{equation:series-constants}, and Lemma \ref{lemma:cross-term-integral}, we obtain
	\begin{align}
	&\int_{\H}\log\left\{\frac{d\nu}{d\mu}(x)\right\}d\gamma(x) =
	-\frac{1}{2}||(I-S)^{-1/2}Q^{-1/2}(m_2 - m_1)||^2
	\nonumber
	\\
	&\quad-\frac{1}{2}[\la S(I-S)^{-1}Q^{-1/2}(m_3-m_1), Q^{-1/2}(m_3-m_1)\ra]
	\nonumber
	\\
	&\quad +\frac{1}{2}\trace[S(I-(I-S)^{-1}A)]- \frac{1}{2}\log\dettwo(I-S)
	\nonumber
	\\
	&\quad + \la (I-S)^{-1}Q^{-1/2}(m_2-m_1), Q^{-1/2}(m_3-m_1)\ra.
	\label{equation:log-RN-integral-12} 
	\end{align}
	For $S \in \Tr(\H)$, we have $\log\dettwo(I-S) = \trace(S) + \log\det(I-S)$, so that
	\eqref{equation:log-RN-integral-12} reduces to \eqref{equation:log-RN-integral-11}.
	For $m_3 = m_2$ and $A = I-S$, \eqref{equation:log-RN-integral-12} simplifies to 
	\begin{align*}
		&\int_{\H}\log\left\{\frac{d\nu}{d\mu}(x)\right\}d\gamma(x) &= \frac{1}{2}||Q^{-1/2}(m_2-m_1)||^2 -\frac{1}{2}\log\dettwo(I-S)
		\\
		& = \KL(\nu ||\mu).
	\end{align*}
	Thus in the case we have $\KL(\gamma ||\mu) = \KL(\gamma||\nu) + \KL(\nu ||\mu)$ and hence
	$\KL(\gamma||\mu) \geq \KL(\nu ||\mu)$, with equality if and only $\gamma = \nu$.
	\qed
\end{proof}

\section{Mutual information of Gaussian measures on Hilbert space}
\label{section:mutual-info-Gauss}

In this section, we prove Theorem \ref{theorem:minimum-Mutual-Info-Gaussian}.
For completeness, we give a new, shorter proof of the mutual information between two Gaussian measures \cite{Baker1978capacity}.
%\begin{align}
%\label{equation:OT-Hilbert}
%\min_{\gamma \in \ADM(\mu, \nu)}\int_{\H\times \H}||x-y||^2_{\H}d\gamma(x,y) + \KL\left({\gamma}||{\mu \otimes \nu}\right)
%\end{align}
%Our plan in tackling Problem \eqref{equation:OT-Hilbert} is as follows.
%\begin{enumerate}
%\item  We show that a solution $\gamma$ of Problem \eqref{equation:OT-Hilbert}, if it exists, is necessarily a Gaussian measure on $\H\times \H$.
%\item On an infinite-dimensional Hilbert space, two Gaussian measures are either mutually singular or mutually %equivalent.
%Thus either $\gamma \perp \mu \otimes \nu$, in which case $\KL(\gamma|| \mu \otimes \nu) = \infty$, 
%or
%$\gamma \sim \mu \otimes \nu$, in which case $\KL(\gamma|| \mu \otimes \nu) < \infty$.
%Since \eqref{equation:OT-Hilbert} is a minimization problem, a solution $\gamma$ must necessarily satisfy $\gamma \sim \mu \otimes \nu$.
%Hence {\it it is sufficient to consider the sets of all Gaussian measures that are equivalent to $\mu \otimes \nu$}.
%\end{enumerate}
%\subsection{Cross-covariance operators and equivalence of Gaussian measures}
%We first review the concept of 
We start by reviewing
joint measures and cross-covariance operators on Hilbert space. 
% and the conditions for the equivalence of Gaussian measures on Hilbert spaces.
%
%{\bf Joint measures and cross-covariance operators}.
%{\bf Cross-covariance operators}. 
%Let $\mu_i$ be a probability measure on $(\H_i, \Bsc(\H_i))$, $i=1,2$.

%{\bf Assumption 1}. Assume that
%\begin{align}
%{\bf Assumption 1 } $\int_{\H_i}||x||^2d\mu_i(x) < \infty$, $i=1,2$.
%\end{align} 
%Under this assumption, the mean vector $m_i \in \H_i$ and covariance operator $C_i: \H_i \mapto \H_i$ are well-defined %and are given by
%\begin{align}
%\la m_i, u\ra &= \int_{\H}\la x, u\ra_i d\mu_i(x),
%\\
%\la C_i u, v\ra  &= \int_{\H_i} \la x - m_i, u\ra_i \la x- m_i, v\ra_i d\mu_i(x).
%\end{align}
%Each operator $C_i$, $i=1,2$, is self-adjoint, positive, and trace class. 
%Assumption 1 is satisfied in particular by Gaussian measures $\Ncal(m_i, C_i)$ on $\H_i$. 

{\bf Joint measures}. Following \cite{Baker1973CrossCovariance}, let $(\H_1, \la, \ra_1)$ and $(\H_2, \la,\ra_2)$ be two real separable Hilbert spaces
%with inner products $\la \cdot, \cdot\ra_{1}$ and $\la \cdot, \cdot\ra_{2}$, respectively, 
%along with the 
with
corresponding Borel $\sigma$-algebras
$\Bsc(\H_1)$ and $\Bsc(\H_2)$. Let $\H_1 \times \H_2$ be the Hilbert space with inner product
defined by
%\begin{align}
$\la(u_1, u_2), (v_1, v_2)\ra_{12} = \la u_1, v_1\ra_1 + \la u_2, v_2\ra_2$
%\\
%\end{align}
%with corresponding Hilbert norm
%\begin{align}
and Hilbert norm
$||(u,v)||^2_{12} = {||u||^2_1 + ||v||^2_{2}}$.
%, respectively.
%\end{align} 
A {\it joint measure }$\mu_{XY}$ is a probability measure defined on $(\H_1 \times \H_2, \Bsc(\H_1) \times \Bsc(\H_2))$.
%This is called a {\it joint measure}.
Let $\mu_X$ and $\mu_Y$ be its marginal probability measures defined on $(\H_1, \Bsc(\H_1))$ and $(\H_2, \Bsc(\H_2))$, respectively.
Assume further that $\mu_{XY} \in \Pcal_2(\H_1 \times \H_2)$.
%{\bf Assumption 2}. 
%\begin{align}
%$\int_{\H_1 \times \H_2}||(x,y)||^2_{\H_1 \times \H_2}d\mu_{XY}(x,y) < \infty$.
%\end{align}
Under this assumption, the mean vector $m_{XY} \in \H_1 \times \H_2$ and covariance operator $\Gamma_{XY}: \H_1 \times \H_2 \mapto \H_1 \times \H_2$ are well-defined. 
Furthermore, since
\begin{align}
\int_{\H_1 \times \H_2}||(u,v)||^2_{12}d\mu_{XY}(u,v) 
%&= \int_{\H_1\times \H_2}[||u||^2_1 + ||v||^2_2]d\mu_{XY}(u,v)
%\nonumber
%\\
& = \int_{\H_1}||u||^2_1d\mu_X(u) + \int_{\H_2}||v||^2_2d\mu_Y(v),
\end{align}
%Thus Assumption 2 is satisfied for 
it follows that $\mu_{XY} \in \Pcal_2(\H_1 \times \H_2)$ if and only if
% if Assumption 1 is satisfied for both 
%marginals 
$\mu_X \in \Pcal_2(\H_1)$ and $\mu_{Y}\in \Pcal_2(\H_2)$.
{\it Subsequently, throughout the paper, we assume that $\mu_{XY} \in \Pcal_2(\H_1 \times \H_2)$
	%Assumption 2 holds 
	for all joint measures $\mu_{XY}$ on $(\H_1 \times \H_2, \Bsc(\H_1) \times \Bsc(\H_2))$.}

%{\bf Joint probability measure}
%{\bf Assumption}
%\begin{align}
%\int_{\H_1 \times \H_2}|||(x,y)|||^2 d\mu_{XY} <\infty
%\end{align}
{\bf Cross-covariance operators}. 
Let $(m_X, C_X)$ and $(m_Y, C_Y)$ be the means and covariance operators of $\mu_X$ and $\mu_Y$, respectively. 
%From Assumption 2,
Since $\mu_{XY} \in \Pcal_2(\H_1 \times \H_2)$, 
the following linear functional $G:\H_1 \times \H_2 \mapto \R$ is well-defined and bounded,
\begin{align}
G(u,v) &= \int_{\H_1 \times \H_2}\la x- m_X, u\ra_1\la y- m_Y, v\ra_2 d\mu_{XY}(x,y).
%\end{align}
%with 
%\begin{align}
%\\
%\text{ with } |G(u,v)| &\leq ||C_X^{1/2}u||_1||C_Y^{1/2}v||_2.
\end{align}
%In particular, for $u \in \H_1$ fixed, we have a bounded linear functional $G(u, \cdot): \H_2 \mapto \R$ and a $v \in \H_2$ fixed, we have a bounded linear functional $G(\cdot, v): \H_1 \mapto \R$. 
By the Riesz Representation Theorem, there exist single-valued, bounded, linear maps $C_{XY}: \H_2 \mapto \H_1$, $C_{YX}:\H_1 \mapto \H_2$ such that
\begin{align}
\la u, C_{XY}v\ra_1 = G(u,v), \;\;\; \la C_{YX}u, v\ra_2 = G(u,v), \; u\in\H_1, v\in \H_2.
\end{align}
Clearly, $C_{YX} = C_{XY}^{*}$. In operator tensor product notation,
\begin{align}
C_{XY} = \int_{\H_1\times \H_2}(x-m_X) \otimes (y-m_Y)d\mu_{XY}(x,y).
%\\
%C_{YX} = \int_{\H_1\times \H_2}(y-m_Y) \otimes (X-m_X)d\mu_{XY}(x,y).
\end{align}
$C_{XY}$ is called the {\it cross-covariance operator} of $\mu_{XY}$. It is closely related to the covariance operators
of the marginals $\mu_X$ and $\mu_Y$ via the following.
\begin{theorem}
	[\cite{Baker1973CrossCovariance}]
	\label{theorem:CXY-representation}
	Consider the projection operators $P_X: \H_1 \mapto \overline{\myIm(C_X)}$, $P_Y: \H_2 \mapto \overline{\myIm(C_Y)}$.
	Then $C_{XY}$ admits the following representation
	\begin{align}
	C_{XY} = C_{X}^{1/2}VC_Y^{1/2}
	\end{align}
	where $V:\H_2 \mapto \H_1$ is a unique bounded linear operator such that $||V||\leq 1$ and $V = P_XVP_Y$. 
\end{theorem}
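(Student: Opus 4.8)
The plan is to derive the representation from a single Cauchy--Schwarz estimate that shows $C_{XY}$ is ``sandwiched'' between $C_X^{1/2}$ and $C_Y^{1/2}$, and then to produce $V$ by a Riesz representation argument on the closed ranges of these square roots. First I would record the domination inequality: for $u \in \H_1$, $v \in \H_2$ the functions $x \mapsto \la x - m_X, u\ra_1$ and $y \mapsto \la y - m_Y, v\ra_2$ lie in $\Lcal^2(\H_1 \times \H_2, \mu_{XY})$ because $\mu_{XY} \in \Pcal_2(\H_1 \times \H_2)$, and Cauchy--Schwarz in $\Lcal^2(\mu_{XY})$ gives
\begin{align*}
|\la u, C_{XY} v\ra_1| = \left|\bE_{\mu_{XY}}\big[\la x - m_X, u\ra_1 \la y - m_Y, v\ra_2\big]\right| \le \la C_X u, u\ra_1^{1/2}\la C_Y v, v\ra_2^{1/2} = \|C_X^{1/2}u\|_1\,\|C_Y^{1/2}v\|_2 .
\end{align*}
This immediately forces $C_{XY} v = 0$ for $v \in \ker(C_Y) = \ker(C_Y^{1/2})$ and $\myIm(C_{XY}) \subseteq \overline{\myIm(C_X^{1/2})} = \overline{\myIm(C_X)}$, i.e.\ $C_{XY} = P_X C_{XY} P_Y$, using that $C_X, C_Y$ are self-adjoint so $\H_i = \overline{\myIm(C_i)}\oplus \ker(C_i)$.

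Next I would build $V$. Since $\ker(C_X^{1/2}) = (\overline{\myIm(C_X)})^\perp$, the operator $C_X^{1/2}$ is injective on $\overline{\myIm(C_X)}$, so each $a \in \myIm(C_X^{1/2})$ has a unique preimage $u \in \overline{\myIm(C_X)}$ with $C_X^{1/2}u = a$; likewise for $b = C_Y^{1/2}v$ with $v \in \overline{\myIm(C_Y)}$. Defining the bilinear form $\widetilde G(a,b) = \la u, C_{XY}v\ra_1$ on $\myIm(C_X^{1/2})\times \myIm(C_Y^{1/2})$, the estimate above reads $|\widetilde G(a,b)| \le \|a\|_1\|b\|_2$, so $\widetilde G$ extends by continuity to a bounded bilinear form of norm $\le 1$ on $\overline{\myIm(C_X)}\times \overline{\myIm(C_Y)}$; Riesz representation then yields a bounded $V_0 : \overline{\myIm(C_Y)} \to \overline{\myIm(C_X)}$ with $\|V_0\| \le 1$ and $\widetilde G(a,b) = \la a, V_0 b\ra_1$. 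Setting $V = V_0 P_Y : \H_2 \mapto \H_1$ gives $\|V\| \le 1$ and $V = P_X V P_Y$ by construction. To verify the identity, I would take arbitrary $u,v$, note both sides vanish on $\ker(C_X)$ in $u$ and on $\ker(C_Y)$ in $v$, and for $u \in \overline{\myIm(C_X)}$, $v \in \overline{\myIm(C_Y)}$ compute $\la u, C_{XY}v\ra_1 = \widetilde G(C_X^{1/2}u, C_Y^{1/2}v) = \la C_X^{1/2}u, V C_Y^{1/2}v\ra_1 = \la u, C_X^{1/2}VC_Y^{1/2}v\ra_1$, so $C_{XY} = C_X^{1/2}VC_Y^{1/2}$.

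For uniqueness, suppose $W = P_X W P_Y$ satisfies $C_X^{1/2}W C_Y^{1/2} = 0$; since $\myIm(W)\subseteq \overline{\myIm(C_X)}$ and $C_X^{1/2}$ is injective there, $W C_Y^{1/2} = 0$, so $W$ vanishes on the dense subspace $\myIm(C_Y^{1/2})$ of $\overline{\myIm(C_Y)}$ and (being equal to $W P_Y$) also on $\ker(C_Y)$; hence $W = 0$. Applying this to the difference of two admissible representers gives uniqueness of $V$.

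The only genuinely delicate point is the transition in the second step from the densely defined form $\widetilde G$ to a bounded operator: one must check that $a \mapsto u$ is well defined (this is where $\ker C_X^{1/2} = \ker C_X$ and the self-adjoint decomposition of $\H_1$ enter) and that the norm bound survives the extension from $\myIm(C_X^{1/2})$ to its closure. Everything after that — the verification of the factorization and the uniqueness — is routine manipulation with the orthogonal splittings $\H_i = \overline{\myIm(C_i)} \oplus \ker(C_i)$.
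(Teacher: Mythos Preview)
Your proof is correct and follows the standard route (essentially Baker's original argument in \cite{Baker1973CrossCovariance}): the Cauchy--Schwarz domination $|\la u, C_{XY}v\ra_1|\le \|C_X^{1/2}u\|_1\|C_Y^{1/2}v\|_2$, followed by the Riesz-type construction of $V$ on the closed ranges and the injectivity of $C_X^{1/2}$, $C_Y^{1/2}$ on those ranges for uniqueness. The paper itself does not supply a proof of this theorem; it merely quotes the result from \cite{Baker1973CrossCovariance}, so there is no in-paper argument to compare against beyond noting that your write-up reproduces the classical one.
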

In column vector notation, the mean vector of the joint measure $\mu_{XY}$ is given by
%\begin{align}
$m_{XY} = \begin{pmatrix}m_X \\ m_Y\end{pmatrix} \in \H_1 \times \H_2$.
%\end{align}
In operator-valued matrix notation, the covariance operator $\Gamma_{XY}:\H_1 \times \H_2 \mapto \H_1 \times \H_2$ of  $\mu_{XY}$ is given by
\begin{align}
\Gamma_{XY} &= \begin{pmatrix} C_X & C_{XY}
\\
C_{YX} & C_Y
\end{pmatrix},
%: \H_1 \times \H_2 \mapto \H_1 \times \H_2
%\\
\Gamma_{XY}\begin{pmatrix}u \\v \end{pmatrix}  = \begin{pmatrix}
C_Xu + C_{XY}v\\
C_{YX}u + C_Yv
\end{pmatrix},  u \in \H_1, v \in \H_2.
\end{align}

{\bf Mutual information of Gaussian measures}. We now consider
%turn to the setting of 
joint Gaussian measures.
%Assume from now on that $\H_1 = \H_2 = \H$ and that $\mu_X=\Ncal(m_X,C_X)$ and $\mu_Y = \Ncal(m_Y, C_Y)$ are both Gaussian
%measures on $\H$. 
% Let $\Gauss(\mu_X, \mu_Y)$ denote the set of all Gaussian measures on $\H \times \H$ with marginals $\mu_X$ and $\mu_Y$.
%For simplicity, we assume that both Gaussian measures $\mu_1$ and $\mu_2$ are {\it non-degenerate,} that is $\ker(C_1) = \ker(C_2) = \{0\}$, so that
%$\overline{\myIm(C_1)} = \overline{\myIm(C_2)} = \H$.
The covariance operator for $\mu_X \otimes \mu_Y$ is
%\begin{align}
$
\Gamma_0 =\begin{pmatrix}
C_1 & 0
\\
0 & C_2
\end{pmatrix}. 
$
%\end{align}
Let $V\in \Lcal(\H_2, \H_1)$ be 
%the bounded operator 
as in Theorem \ref{theorem:CXY-representation}.
The covariance operator for $\mu_{XY}$ is 
%given by
\begin{align}
\Gamma = \begin{pmatrix}
C_1 & C_1^{1/2}VC_2^{1/2}
\\
C_2^{1/2}V^{*}C_1^{1/2} & C_2
\end{pmatrix}
%& = \Gamma_0 + \Gamma_0^{1/2}\begin{pmatrix}
%0 & V\\
%V^{*} & 0
%\end{pmatrix}\Gamma_0^{1/2}
%\nonumber
%\\
%&
= \Gamma_0^{1/2}\begin{pmatrix}
I & V\\
V^{*} & I
\end{pmatrix}\Gamma_0^{1/2}.
\label{equation:COV-joint}
\end{align}

\begin{lemma}
	\label{lemma:Hilbert-Schmidt-double}
	Let $V \in \Lcal(\H_2, \H_1)$.
	%\H \mapto \H$ be a bounded linear operator. 
	Then the operator
	%\begin{align}
	$\begin{pmatrix}
	0 & V\\
	V^{*} & 0
	\end{pmatrix}: \H_1 \times \H_2 \mapto \H_1\times \H_2$
	%\end{align}
	is Hilbert-Schmidt on $\H_1 \times \H_2$ if and only if $V \in \HS(\H_2,\H_1)$, 
	{\color{black}equivalently, if and only if $V^{*}V \in \Tr(\H_2)$ (or, equivalently, $VV^{*} \in \Tr(\H_1)$)}.
	% is Hilbert-Schmidt on $\H$.
\end{lemma}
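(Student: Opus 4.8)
The plan is to compute the Hilbert--Schmidt norm of $W = \begin{pmatrix} 0 & V \\ V^{*} & 0\end{pmatrix}$ directly in a block orthonormal basis of $\H_1 \times \H_2$. Fix orthonormal bases $\{e_i\}_{i \in \Nbb}$ of $\H_1$ and $\{f_j\}_{j \in \Nbb}$ of $\H_2$. Then the family $\{(e_i, 0)\}_{i \in \Nbb} \cup \{(0, f_j)\}_{j \in \Nbb}$ is an orthonormal basis of $\H_1 \times \H_2$ for the inner product $\la\,,\ra_{12}$. Since $W(e_i, 0) = (0, V^{*}e_i)$ and $W(0, f_j) = (Vf_j, 0)$, and the Hilbert--Schmidt norm is independent of the choice of orthonormal basis, we obtain
\begin{align*}
\|W\|_{\HS}^2 = \sum_{i}\|V^{*}e_i\|_1^2 + \sum_{j}\|Vf_j\|_2^2 = \|V^{*}\|_{\HS}^2 + \|V\|_{\HS}^2 = 2\|V\|_{\HS}^2,
\end{align*}
using the standard identity $\|V\|_{\HS} = \|V^{*}\|_{\HS}$. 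Hence $W$ is Hilbert--Schmidt on $\H_1 \times \H_2$ if and only if $\|V\|_{\HS} < \infty$, i.e. if and only if $V \in \HS(\H_2, \H_1)$.

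For the remaining equivalences, recall that $V \in \HS(\H_2, \H_1)$ precisely when $\|V\|_{\HS}^2 = \trace(V^{*}V) = \sum_{j}\la f_j, V^{*}Vf_j\ra_2 < \infty$. Since $V^{*}V$ is a positive self-adjoint operator on $\H_2$, one has $(((V^{*}V)^{*}(V^{*}V))^{1/2} = V^{*}V$, so $\|V^{*}V\|_{\tr} = \trace(V^{*}V)$; therefore $V^{*}V$ is trace class exactly when this sum is finite, giving $V \in \HS(\H_2, \H_1) \equivalent V^{*}V \in \Tr(\H_2)$. Applying the same reasoning to $\|V^{*}\|_{\HS}^2 = \trace(VV^{*})$ yields $V \in \HS(\H_2, \H_1) \equivalent VV^{*} \in \Tr(\H_1)$, which completes the chain of equivalences.

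The argument is essentially a bookkeeping computation and I do not anticipate a genuine obstacle; the only points meriting explicit mention are that the block family $\{(e_i,0)\} \cup \{(0,f_j)\}$ is indeed an orthonormal basis of the product Hilbert space, that the Hilbert--Schmidt norm does not depend on the chosen basis (so that evaluating it in this convenient basis is legitimate), and that a positive self-adjoint operator lies in $\Tr(\H)$ iff its trace is finite.
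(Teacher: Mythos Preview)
Your proof is correct and follows essentially the same approach as the paper's: both compute the Hilbert--Schmidt norm of the block operator in the product orthonormal basis $\{(e_i,0)\}\cup\{(0,f_j)\}$ and reduce to $\|V^{*}\|_{\HS}^2 + \|V\|_{\HS}^2$. One minor notational slip: since $V^{*}e_i \in \H_2$ and $Vf_j \in \H_1$, the norms should read $\|V^{*}e_i\|_2^2$ and $\|Vf_j\|_1^2$, but this does not affect the argument.
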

\begin{proof} Let $\{e^i_j\}_{j \in \Nbb}$ be an orthonormal basis in $\H_i$, $i=1,2$. Then
	$\left\{\begin{pmatrix}
	e^1_j 
	\\
	0\end{pmatrix},\begin{pmatrix}
	0 
	\\
	e^2_j\end{pmatrix}\right \}_{j\in \Nbb}$ is an orthonormal basis for $\H_1 \times \H_2$ and
	%\begin{align*}
	$
	\begin{pmatrix}
	0 & V\\
	V^{*} & 0
	\end{pmatrix}\begin{pmatrix}
	e^1_j 
	\\
	0
	\end{pmatrix} = \begin{pmatrix}0 \\V^{*}e^1_j\end{pmatrix}, 
	\begin{pmatrix}
	0 & V\\
	V^{*} & 0
	\end{pmatrix}\begin{pmatrix}
	0
	\\
	e^2_j
	\end{pmatrix} =\begin{pmatrix}Ve^2_j \\ 0 \end{pmatrix}.
	$
	%\end{align*}
	Thus
	% it follows that
	%\begin{align*}
	$\left\|
	\begin{pmatrix}
	0 & V\\
	V^{*} & 0
	\end{pmatrix}
	\right\|^2_{\HS} = \sum_{j=1}^{\infty}[||V^{*}e^1_j||_2^2 + ||Ve^2_j||_1^2] = ||V^{*}||^2_{\HS} + ||V||^2_{\HS}
	= \trace(VV^{*}) + \trace(V^*V)$, which is finite if and only if 
	$\trace(VV^{*}) = ||V^{*}||^2_{\HS} = ||V||^2_{\HS} = \trace(V^{*}V)$ is finite.
	%both $\trace(VV^{*})$ and $\trace(V^{*}V)$ are finite, in which case they are both equal to $||V||^2_{\HS}$.
	\qed
	%\end{align*}
	%from which the desired result follows. 
\end{proof}

\begin{comment}
\begin{lemma}
	\label{lemma:Hilbert-Schmidt-double}
	Let $V \in \Lcal(\H)$.
	%\H \mapto \H$ be a bounded linear operator. 
	Then the operator
	%\begin{align}
	$\begin{pmatrix}
	0 & V\\
	V^{*} & 0
	\end{pmatrix}: \H \times \H \mapto \H\times \H$
	%\end{align}
	is Hilbert-Schmidt on $\H \times \H$ if and only if $V \in \HS(\H)$.
	% is Hilbert-Schmidt on $\H$.
\end{lemma}
\begin{proof} Let $\{e_j\}_{j \in \Nbb}$ be an orthonormal basis in $\H$. Then
	$\left\{\begin{pmatrix}
	e_j 
	\\
	0\end{pmatrix},\begin{pmatrix}
	0 
	\\
	e_j\end{pmatrix}\right \}_{j\in \Nbb}$ is an orthonormal basis for $\H \times \H$ and
	%\begin{align*}
	$
	\begin{pmatrix}
	0 & V\\
	V^{*} & 0
	\end{pmatrix}\begin{pmatrix}
	e_j 
	\\
	0
	\end{pmatrix} = \begin{pmatrix}0 \\V^{*}e_j\end{pmatrix}, 
	\begin{pmatrix}
	0 & V\\
	V^{*} & 0
	\end{pmatrix}\begin{pmatrix}
	0
	\\
	e_j
	\end{pmatrix} =\begin{pmatrix}Ve_j \\ 0 \end{pmatrix}.
	$
	%\end{align*}
	Thus
	% it follows that
	%\begin{align*}
	$\left\|
	\begin{pmatrix}
	0 & V\\
	V^{*} & 0
	\end{pmatrix}
	\right\|^2_{\HS} = \sum_{j=1}^{\infty}[||V^{*}e_j||^2 + ||Ve_j||^2] = ||V^{*}||^2_{\HS} + ||V||^2_{\HS},
	$
	%\end{align*}
	from which the desired result follows. \qed
\end{proof}
\end{comment}

%. Assume now
%that $\mu_{XY}$ and its marginals
% is a Gaussian measure having two Gaussian measures 
%$\mu_X$ and $\mu_Y$ are all Gaussian.
% as it marginals.  

\begin{lemma}
	\label{lemma:det2VV}
	Let $V \in \HS(\H_2,\H_1)$. Then $\begin{pmatrix}0 & V \\ V^{*} & 0\end{pmatrix}$
	has nonzero eigenvalues $\{\pm \sqrt{\gamma_k}\}_{k \in \Nbb}$, with $\{\gamma_k\}_{k \in \Nbb}$
	being the 
	nonzero 
	eigenvalues of $VV^{*}:\H_1 \mapto \H_1$. Moreover, $\left\|\begin{pmatrix}0 & V \\ V^{*} & 0\end{pmatrix}\right\| < 1 \equivalent ||V||<1$, in which case the following quantity is finite
	%well-defined
	\begin{align}
	\log\dettwo\begin{pmatrix}
	I & V\\
	V^{*} & I
	\end{pmatrix} = \log\det(I-V^{*}V).
	\end{align}
	Furthermore, $\begin{pmatrix}0 & V \\ V^{*} & 0\end{pmatrix} \in \Tr(\H_1 \times \H_2) \equivalent (VV^{*})^{1/2} \in \Tr(\H_1){\color{black}\equivalent (V^{*}V)^{1/2} \in \Tr(\H_2)}$.
	In this case {\color{black}the Fredholm determinant $\det\begin{pmatrix}
			I & V\\
			V^{*} & I
		\end{pmatrix}$ is also finite and} $\dettwo\begin{pmatrix}
	I & V\\
	V^{*} & I
	\end{pmatrix}$ $= \det\begin{pmatrix}
	I & V\\
	V^{*} & I
	\end{pmatrix} = \det(I-V^{*}V)$.
\end{lemma}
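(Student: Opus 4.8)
The plan is to reduce everything about $W := \begin{pmatrix} 0 & V \\ V^{*} & 0\end{pmatrix}$, which is self-adjoint on $\H_1\times\H_2$, to spectral information about $V^{*}V$ (equivalently $VV^{*}$). Note that $W^{2} = \begin{pmatrix} VV^{*} & 0 \\ 0 & V^{*}V\end{pmatrix}$, and since $V\in\HS(\H_2,\H_1)$, both $VV^{*}$ and $V^{*}V$ are positive trace class operators sharing the same nonzero eigenvalues $\{\gamma_k\}_{k\in\Nbb}$ (counted with multiplicity). First I would identify the nonzero spectrum of $W$: if $VV^{*}u=\gamma u$ with $\gamma>0$, a direct computation gives $W\bigl(u,\ \pm\gamma^{-1/2}V^{*}u\bigr)=\pm\sqrt{\gamma}\,\bigl(u,\ \pm\gamma^{-1/2}V^{*}u\bigr)$, and $u\mapsto(u,\pm\gamma^{-1/2}V^{*}u)$ is injective on the $\gamma$-eigenspace; conversely $W(u,v)=\lambda(u,v)$ with $\lambda\neq 0$ forces $v=\lambda^{-1}V^{*}u$, $VV^{*}u=\lambda^{2}u$, $u\neq 0$. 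Hence the nonzero eigenvalues of $W$ are precisely $\{\pm\sqrt{\gamma_k}\}$, with multiplicities matching those of $\{\gamma_k\}$ for $VV^{*}$. Also, by self-adjointness, $||W||^{2}=||W^{2}||=\max\{||VV^{*}||,||V^{*}V||\}=||V||^{2}$, so $||W||<1\equivalent||V||<1$.

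Next, suppose $||V||<1$. Then every $\gamma_k\in(0,1)$ and $\sum_k\gamma_k=\trace(V^{*}V)=||V||_{\HS}^{2}<\infty$, so the Fredholm determinant $\det(I-V^{*}V)=\prod_k(1-\gamma_k)$ is well-defined, positive, and has finite logarithm. By Lemma \ref{lemma:Hilbert-Schmidt-double}, $W\in\HS(\H_1\times\H_2)$, hence $\dettwo(I+W)$ is defined, and using the product representation $\dettwo(I+W)=\prod_\lambda(1+\lambda)e^{-\lambda}$ over the nonzero eigenvalues of $W$, the pairing $(1+\sqrt{\gamma_k})e^{-\sqrt{\gamma_k}}(1-\sqrt{\gamma_k})e^{\sqrt{\gamma_k}}=1-\gamma_k$ yields $\dettwo\begin{pmatrix} I & V\\ V^{*} & I\end{pmatrix}=\prod_k(1-\gamma_k)=\det(I-V^{*}V)$, finite as claimed.

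For the trace-class part, write $|W|=(W^{2})^{1/2}=\begin{pmatrix}(VV^{*})^{1/2}&0\\ 0&(V^{*}V)^{1/2}\end{pmatrix}$, so $||W||_{\tr}=\trace|W|=\trace(VV^{*})^{1/2}+\trace(V^{*}V)^{1/2}=2\sum_k\sqrt{\gamma_k}$, using that $(VV^{*})^{1/2}$ and $(V^{*}V)^{1/2}$ have common nonzero eigenvalues $\{\sqrt{\gamma_k}\}$; this gives $W\in\Tr(\H_1\times\H_2)\equivalent(VV^{*})^{1/2}\in\Tr(\H_1)\equivalent(V^{*}V)^{1/2}\in\Tr(\H_2)$. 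When these hold, the Fredholm determinant $\det(I+W)=\prod_k(1+\lambda_k)=\prod_k(1-\gamma_k)=\det(I-V^{*}V)$ is finite by the same pairing; and since the diagonal blocks of $W$ vanish, $\trace(W)=0$, whence $\dettwo(I+W)=\det[(I+W)e^{-W}]=\det(I+W)\,e^{-\trace(W)}=\det(I+W)$, completing the chain $\dettwo\begin{pmatrix} I & V\\ V^{*} & I\end{pmatrix}=\det\begin{pmatrix} I & V\\ V^{*} & I\end{pmatrix}=\det(I-V^{*}V)$.

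The main obstacle is the careful bookkeeping with the two notions of determinant: one must invoke the correct infinite-product representation of $\dettwo$ (valid once $W$ is Hilbert–Schmidt, via \cite{Simon:1977}) and of the Fredholm $\det$ (valid once $W$ is trace class), and verify that the nonzero eigenvalues of $W$ pair off $+\sqrt{\gamma_k}$ with $-\sqrt{\gamma_k}$ with exactly matching multiplicities, so that the paired products telescope to $\prod_k(1-\gamma_k)$. Everything else — the block computations of $W^{2}$ and $|W|$, the multiplicativity $\det[(I+W)e^{-W}]=\det(I+W)e^{-\trace W}$ for trace-class $W$, and the vanishing of $\trace(W)$ — is routine; I would also record, as a consistency check, the standard identity $\det(I-VV^{*})=\det(I-V^{*}V)$, which matches the eigenvalue pairing.
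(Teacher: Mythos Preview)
Your proof is correct and follows essentially the same approach as the paper: square the block operator to read off the eigenvalue pairs $\pm\sqrt{\gamma_k}$, compute $\dettwo$ via the paired product $(1+\sqrt{\gamma_k})(1-\sqrt{\gamma_k})e^{-\sqrt{\gamma_k}}e^{\sqrt{\gamma_k}}=1-\gamma_k$, and use $|W|=\mathrm{diag}((VV^{*})^{1/2},(V^{*}V)^{1/2})$ for the trace-class equivalence. The only cosmetic difference is that for the final equality $\dettwo=\det$ in the trace-class case, the paper simply recomputes the Fredholm determinant directly as $\prod_k(1-\gamma_k)$, whereas you invoke $\trace W=0$ together with $\dettwo(I+W)=\det(I+W)e^{-\trace W}$; both are immediate.
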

{\color{black}
\begin{remark} As stated in Lemmas \ref{lemma:Hilbert-Schmidt-double} and \ref{lemma:det2VV}, for $V \in \HS(\H_2,\H_1)$, equivalently $VV^{*} \in \Tr(\H_1), V^{*}V \in \Tr(\H_2)$, 
	we have $\begin{pmatrix}
		0 & V\\
		V^{*} & 0
	\end{pmatrix} \in \HS(\H_1 \times \H_2)$, so that
	the Hilbert-Carleman determinant of
	$\begin{pmatrix}
		I & V\\
		V^{*} & I
	\end{pmatrix}$ is well-defined and finite and is given by
	  $\dettwo\begin{pmatrix}
		I & V\\
		V^{*} & I
	\end{pmatrix} = \det(I-V^{*}V)$. 
Under the stronger assumption $(VV^{*})^{1/2} \in \Tr(\H_1)$, or equivalently $(V^{*}V)^{1/2} \in \Tr(\H_2)$,
	we have $\begin{pmatrix}
	0 & V\\
	V^{*} & 0
\end{pmatrix} \in \Tr(\H_1 \times \H_2)$, so that
 the Fredholm determinant of $\begin{pmatrix}
	I & V\\
	V^{*} & I
\end{pmatrix}$ is also well-defined and finite and is given by $\det\begin{pmatrix}
	I & V\\
	V^{*} & I
\end{pmatrix} = \det(I-V^{*}V)$, i.e., in this case the Hilbert-Carleman and Fredholm determinants of 
$\begin{pmatrix}
	I & V\\
	V^{*} & I
\end{pmatrix}$ have the same value. We utilize the latter expression in Lemma \ref{lemma:logdet(I-V*V)-concave} below.
\end{remark}
}
\begin{proof}
	[\textbf{of Lemma \ref{lemma:det2VV}}]
	By Lemma \ref{lemma:Hilbert-Schmidt-double},
	the operator $\begin{pmatrix}
	0 & V
	\\
	V^{*} & 0
	\end{pmatrix}$ is self-adjoint and Hilbert-Schmidt on $\H_1 \times \H_2$, thus admits a countable sequence
	of eigenvalues, with corresponding eigenvectors forming an orthonormal basis for $\H_1 \times \H_2$. Consider the square
	%\begin{align*}
	$\begin{pmatrix}
	0 & V
	\\
	V^{*} & 0
	\end{pmatrix}^2
	%=
	%\begin{pmatrix}
	%0 & V
	%\\
	%V^{*} & 0
	%\end{pmatrix}
	%\begin{pmatrix}
	%0 & V
	%\\
	%V^{*} & 0
	%\end{pmatrix}
	= \begin{pmatrix}
	VV^{*} & 0
	\\
	0 & V^{*}V
	\end{pmatrix}$.
	%\end{align*}
	The nonzero eigenvalues of the last operator are precisely the nonzero eigenvalues of $VV^{*}:\H_1 \mapto\H_1$ and $V^{*}V:\H_2 \mapto \H_2$, which are 
	identical. Consider the following eigenvalue equation, where $\lambda \neq 0$,
	\begin{align*}
	\begin{pmatrix}
	0 & V
	\\
	V^{*} & 0
	\end{pmatrix}
	\begin{pmatrix}
	a\\
	b
	\end{pmatrix} = \begin{pmatrix}
	Vb
	\\
	V^{*}a 
	\end{pmatrix}
	=\lambda \begin{pmatrix}
	a
	\\
	b
	\end{pmatrix} \equivalent 
	\begin{pmatrix}
	Vb = \lambda a
	\\
	V^{*}a = \lambda b
	\end{pmatrix}, \;a\in \H_1,b\in \H_2.
	\end{align*}
	Combining these two equations, we obtain
	%\begin{align*}
	$VV^{*}a = \lambda^2 a$ and
	%\\
	$V^{*}Vb = \lambda^2 b$.
	%\end{align*} 
	Thus $a$ and $b$ are necessarily eigenvectors of $VV^{*}$ and $V^{*}V$, respectively, under the same eigenvalue $\lambda^2$.
	Due to the square factor, it follows that $-\lambda$ is an eigenvalue of $\begin{pmatrix}
	0 & V
	\\
	V^{*} & 0
	\end{pmatrix}$ corresponding to eigenvector $\begin{pmatrix}a\\ -b\end{pmatrix}$.
	Let $\{\gamma_k\}_{k=1}^{\infty}$ be the nonzero eigenvalues of $VV^{*}$, then the nonzero eigenvalues of
	$\begin{pmatrix}
	0 & V
	\\
	V^{*} & 0
	\end{pmatrix}$ are $\{\pm \sqrt{\gamma_k}\}_{k=1}^{\infty}$.
	Thus $\left\|\begin{pmatrix}
	0 & V
	\\
	V^{*} & 0
	\end{pmatrix}\right\| < 1 \equivalent ||VV^{*}|| = ||V||^2 < 1 \equivalent ||V||< 1$.
	
	Let $\{e_j^{\pm}\}_{j=1}^{\infty}$
	%$\in \H \times \H$ 
	denote the corresponding orthonormal eigenvectors. Then
	\begin{align*}
	\begin{pmatrix}
	0 & V
	\\
	V^{*} & 0
	\end{pmatrix} = \sum_{j=1}^{\infty}[\sqrt{\gamma_j}e_j^{+} \otimes e_j^{+} - \sqrt{\gamma_j}e_j^{-}\otimes e_j^{-}].
	%\\
	\end{align*}
	%\begin{align}
	%\begin{pmatrix}
	%I & V
	%\\
	%V^{*} & I
	%\end{pmatrix} 
	%&= \sum_{j=1}^{\infty}[(1+\sqrt{\gamma_j})e_j^{+} \otimes e_j^{+} + (1-\sqrt{\gamma_j})e_j^{-} \otimes e_j^{-}].
	%\end{align}
	By definition of the Hilbert-Carleman determinant, 
	\begin{align*}
	&\dettwo\begin{pmatrix}
	I & V\\
	V^{*} & I
	\end{pmatrix} = \det\left[\begin{pmatrix}
	I & V\\
	V^{*} & I
	\end{pmatrix}
	\exp\left(-\begin{pmatrix}
	0 & V\\
	V^{*} & 0
	\end{pmatrix}\right)
	\right]
	\\
	&= \prod_{j=1}^{\infty}(1+\sqrt{\gamma_j})(1-\sqrt{\gamma_j})e^{\sqrt{\gamma_j}}e^{-\sqrt{\gamma_j}} = \prod_{j=1}^{\infty}(1-\gamma_j) 
	%\\
	%&
	= \det(I - V^{*}V),
	\end{align*}
	giving the $\log\dettwo$ formula. Since
	$
	\left|\begin{pmatrix}
	0 & V
	\\
	V^{*} & 0
	\end{pmatrix}\right|
	= \begin{pmatrix}
	(VV^{*})^{1/2} & 0
	\\
	0 & (V^{*}V)^{1/2}
	\end{pmatrix}$,
	$\begin{pmatrix}
	0 & V
	\\
	V^{*} & 0
	\end{pmatrix} \in \Tr(\H_1 \times \H_2) \equivalent (VV^{*})^{1/2} \in \Tr(\H_1)\equivalent (V^{*}V)^{1/2} \in \Tr(\H_2)$.
	In this case, $\det(I - V^{*}V) = \prod_{j=1}^{\infty}(1-\gamma_j) =\dettwo\begin{pmatrix}
	I & V\\
	V^{*} & I
	\end{pmatrix} = \det\begin{pmatrix}
	I & V\\
	V^{*} & I
	\end{pmatrix}$.
	\qed
\end{proof}

By applying Formula (\ref{equation:KL-gaussian}) in Theorem \ref{theorem:KL-gaussian} and Lemma \ref{lemma:det2VV} to our setting, we obtain the expression for $\KL(\mu_{XY}||\mu_X \otimes \mu_Y)$, first proved in \cite{Baker1978capacity} (Proposition 2) by a direct approach. The case $\H_1 = \H_2 = \H$ was proved in \cite{baker1970mutual}.
As we now show, by employing the more general result on the KL divergence between Gaussian measures in Theorem \ref{theorem:KL-gaussian}, 
this result is obtained immediately.
\begin{theorem}
	\label{theorem:MutualInfo-Gaussian}
	Let $\H_1,\H_2$ be two separable Hilbert spaces.
	Let $\mu_X = \Ncal(m_X, C_X)\in \Gauss(\H_1)$, $\mu_Y = \Ncal(m_Y, C_Y) \in \Gauss(\H_2)$, $\ker(C_X) = \{0\}$,
	$\ker(C_Y) = \{0\}$.
	Assume that $\mu_{XY} \in \Gauss(\mu_X, \mu_Y)$. Then $\mu_{XY} \sim \mu_X \otimes \mu_Y \equivalent ||V||<1, V \in \HS(\H_2,\H_1)$,
	where   $C_{XY} = C_X^{1/2}VC_Y^{1/2}$.
	Furthermore,
	\begin{align}
	\KL\left(\mu_{XY}||{\mu_X \otimes \mu_Y}\right) = 
	\left\{
	\begin{matrix}
	-\frac{1}{2}\log\det(I-V^{*}V) & \text{if $\mu_{XY} \sim \mu_X \otimes \mu_Y$},
	\\
	\infty & \text{if $\mu_{XY} \perp \mu_X \otimes \mu_Y$}.
	\end{matrix}
	\right.
	\end{align}
	%Here $V\in \HS(\H_2, \H_1)$, $||V|| < 1$, with $C_{XY} = C_X^{1/2}VC_Y^{1/2}$.
	% as in  
	%is the Hilbert-Schmidt operator given in 
	%Theorem \ref{theorem:CXY-representation}.
	% and \ref{theorem:equivalence-joint-Gaussian},  and $\det$ refers to the Fredholm determinant.
\end{theorem}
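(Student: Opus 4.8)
The plan is to view $\mu_{XY}$ and $\mu_X \otimes \mu_Y$ as two Gaussian measures on the product Hilbert space $\H_1 \times \H_2$ and reduce everything to the Feldman--H\'ajek dichotomy together with Theorem~\ref{theorem:KL-gaussian}. First I would record that both measures share the same mean $m_{XY} = (m_X, m_Y)$, and that by Eq.~\eqref{equation:COV-joint} their covariance operators are $\Gamma_0 = \begin{pmatrix} C_X & 0 \\ 0 & C_Y \end{pmatrix}$ for $\mu_X \otimes \mu_Y$ and $\Gamma = \Gamma_0^{1/2}(I - S)\Gamma_0^{1/2}$ for $\mu_{XY}$, where $S = -\begin{pmatrix} 0 & V \\ V^{*} & 0 \end{pmatrix}$ and $C_{XY} = C_X^{1/2} V C_Y^{1/2}$ as in Theorem~\ref{theorem:CXY-representation}. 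Since $\ker(C_X) = \ker(C_Y) = \{0\}$, we have $\ker(\Gamma_0) = \{0\}$, so $\Gamma_0^{1/2}$ is injective with dense range, and $S$ is visibly self-adjoint.

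For the equivalence claim I would apply Theorem~\ref{theorem:Gaussian-equivalent}: the mean condition $m_{XY} - m_{XY} = 0 \in \myIm(\Gamma_0^{1/2})$ is trivially met, so $\mu_{XY} \sim \mu_X \otimes \mu_Y$ holds if and only if $S \in \Sym(\H_1\times\H_2) \cap \HS(\H_1\times\H_2)$ and $S$ has no eigenvalue $1$. By Lemma~\ref{lemma:Hilbert-Schmidt-double}, $S \in \HS$ iff $V \in \HS(\H_2, \H_1)$; by Lemma~\ref{lemma:det2VV}, the eigenvalues of $\begin{pmatrix} 0 & V \\ V^{*} & 0 \end{pmatrix}$ are $\{\pm\sqrt{\gamma_k}\}$ with $\gamma_k$ the nonzero eigenvalues of $VV^{*}$, so for Hilbert--Schmidt (hence compact) $V$ the condition ``$1$ is not an eigenvalue of $S$'' is equivalent to $\gamma_k \neq 1$ for all $k$, i.e. to $\|V\| < 1$ (recall $\|V\| \leq 1$ always). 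This yields $\mu_{XY} \sim \mu_X \otimes \mu_Y \equivalent \|V\| < 1,\ V \in \HS(\H_2, \H_1)$; in the complementary case Feldman--H\'ajek forces $\mu_{XY} \perp \mu_X \otimes \mu_Y$, whence $\KL = \infty$ by definition.

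In the equivalent case, $\|V\| < 1$ gives $I - S \geq (1 - \|V\|) I > 0$, so $I - S$ is positive definite and $\Gamma = \Gamma_0^{1/2}(I-S)\Gamma_0^{1/2}$ has trivial kernel; hence all hypotheses of Theorem~\ref{theorem:KL-gaussian} are satisfied with $(\mu, \nu, Q, R) = (\mu_X \otimes \mu_Y,\ \mu_{XY},\ \Gamma_0,\ \Gamma)$. Because the two means coincide, the term $\tfrac{1}{2}\|\Gamma_0^{-1/2}(m_{XY} - m_{XY})\|^2$ vanishes, and Theorem~\ref{theorem:KL-gaussian} gives $\KL(\mu_{XY} \| \mu_X \otimes \mu_Y) = -\tfrac{1}{2}\log\dettwo(I - S) = -\tfrac{1}{2}\log\dettwo\begin{pmatrix} I & V \\ V^{*} & I \end{pmatrix}$, which by Lemma~\ref{lemma:det2VV} equals $-\tfrac{1}{2}\log\det(I - V^{*}V)$, completing the formula.

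Once the product-space reformulation is set up, the argument is essentially bookkeeping, so I do not expect a serious obstacle; the one point needing care is confirming that the Feldman--H\'ajek ``no eigenvalue $1$'' condition is \emph{exactly} $\|V\| < 1$ (this uses compactness of $VV^{*}$, so that the supremum $\|V\|^2$ is an attained eigenvalue), and checking $I - S > 0$ so that Theorem~\ref{theorem:KL-gaussian} — whose hypothesis demands $I - S > 0$, strictly stronger than the bare equivalence criterion — genuinely applies.
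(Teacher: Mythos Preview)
Your proposal is correct and follows essentially the same approach as the paper: both reduce to the Feldman--H\'ajek criterion (Theorem~\ref{theorem:Gaussian-equivalent}) via Lemmas~\ref{lemma:Hilbert-Schmidt-double} and~\ref{lemma:det2VV} for the equivalence characterization, and then apply Theorem~\ref{theorem:KL-gaussian} together with Lemma~\ref{lemma:det2VV} to obtain the KL formula, using that the means coincide. You have supplied a little more detail than the paper does---in particular, the explicit verification that ``$S$ has no eigenvalue $1$'' is equivalent to $\|V\|<1$ via compactness, and the check that $I-S>0$ so that Theorem~\ref{theorem:KL-gaussian} applies---but the structure is identical.
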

%\begin{remark}
%Thus $\KL\left(\mu_{XY}||{\mu_X \otimes \mu_Y}\right)$ depends only on the covariance operators and is independent of the mean vectors
%$m_1,m_2$.
%\end{remark}
\begin{proof}
	[\textbf{of Theorem \ref{theorem:MutualInfo-Gaussian}}]
	The condition  for $\mu_{XY} \sim \mu_X \otimes \mu_Y$ follows from Lemmas \ref{lemma:Hilbert-Schmidt-double},\ref{lemma:det2VV} and Theorem \ref{theorem:Gaussian-equivalent}. 
	Assume that $\mu_{XY} \sim \mu_X \otimes \mu_Y$. Since
	the mean vector for $\mu_{XY}$ and $\mu_X \otimes \mu_Y$ is the same, namely $\begin{pmatrix}m_X\\m_Y\end{pmatrix}$,
	the first term in Eq.\eqref{equation:KL-gaussian} is equal to zero.
	From Eqs.\eqref{equation:COV-joint} and \eqref{equation:KL-gaussian}, we have
	\begin{align*}
	\KL\left(\mu_{XY}||{\mu_X \otimes \mu_Y}\right) = -\frac{1}{2}\log\dettwo\begin{pmatrix}
	I & V\\
	V^{*} & I
	\end{pmatrix} = -\frac{1}{2}\log\det(I-V^{*}V),
	\end{align*}
	where the last equality follows from Lemma \ref{lemma:det2VV}.\qed
\end{proof}

We are now ready to prove Theorem \ref{theorem:minimum-Mutual-Info-Gaussian}, which we restate here for clarity.
%	The following is the equivalent version of the Maximum Entropy of Gaussian densities in $\R^n$.
%	It states that among all joint measures of two Gaussian measures $\mu_X,\mu_Y$, the one with the
%	minimum Mutual Information is Gaussian.
\begin{theorem}[\textbf{Minimum Mutual Information of Joint Gaussian Measures}]
	\label{theorem:minimum-Mutual-Info-Gaussian-1}
	Let $\H_1,\H_2$ be two separable Hilbert spaces.
	Let $\mu_X = \Ncal(m_X, C_X)\in \Gauss(\H_1)$, $\mu_Y = \Ncal(m_Y, C_Y)\in \Gauss(\H_2)$,
	% be two Gaussian measures on $\H$,
	$\ker(C_X) = \ker(C_Y) =\{0\}$.
	Let $\gamma \in \ADM(\mu_X, \mu_Y), \gamma_0 \in \Gauss(\mu_X, \mu_Y)$,
	% be such that $\gamma \sim \mu_X \otimes \mu_Y$,
	$\gamma_0 \sim \mu_X \otimes \mu_Y$. Assume that $\gamma$ and $\gamma_0$ have the same covariance operator $\Gamma$ and that $\mu_X \otimes \mu_Y$ has covariance operator $\Gamma_0$. 
	Then
	\begin{align}
	\KL(\gamma||\mu_X \otimes \mu_Y) \geq \KL(\gamma_0 ||\mu_X \otimes \mu_Y) = -\frac{1}{2}\log\det(I-V^{*}V).
	\end{align}
	Equality happens if and only if $\gamma = \gamma_0$.
	Here $V$ is the unique bounded linear operator satisfying $V \in \HS(\H_2,\H_1)$, $||V||< 1$, 
	such that $\Gamma = \Gamma_0^{1/2}\begin{pmatrix}I & V \\ V^{*} & I\end{pmatrix}\Gamma_0^{1/2}$.
\end{theorem}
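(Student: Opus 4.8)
The plan is to obtain Theorem~\ref{theorem:minimum-Mutual-Info-Gaussian-1} as a direct specialization of Theorem~\ref{theorem:log-Radon-Nikodym-integral}, with $\mu := \mu_X \otimes \mu_Y$, $\nu := \gamma_0$, and the measure denoted ``$\gamma$'' in that theorem taken to be the present $\gamma$; the explicit value of $\KL(\gamma_0\|\mu_X\otimes\mu_Y)$ is then supplied by Theorem~\ref{theorem:MutualInfo-Gaussian}.

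First I would dispose of the case where $\gamma$ is \emph{not} absolutely continuous with respect to $\mu_X\otimes\mu_Y$: there $\KL(\gamma\|\mu_X\otimes\mu_Y)=\infty$ and the asserted inequality is vacuous, so from now on assume $\gamma \ll \mu_X\otimes\mu_Y$. Since $\mu_X,\mu_Y$ are Gaussian they lie in $\Pcal_2$, so the identity recorded above for $\int\|(u,v)\|_{12}^2\,d\mu_{XY}$ forces $\gamma\in\Pcal_2(\H_1\times\H_2)$; moreover $\gamma\in\ADM(\mu_X,\mu_Y)$ and $\gamma_0\in\Gauss(\mu_X,\mu_Y)$ both have mean $(m_X,m_Y)$ and, by hypothesis, the same covariance operator $\Gamma$, whereas $\mu_X\otimes\mu_Y$ has mean $(m_X,m_Y)$ and covariance operator $\Gamma_0=\begin{pmatrix}C_X & 0\\ 0 & C_Y\end{pmatrix}$, which has trivial kernel because $\ker(C_X)=\ker(C_Y)=\{0\}$.

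Next I would match the hypotheses of Theorem~\ref{theorem:log-Radon-Nikodym-integral}. By \eqref{equation:COV-joint}, $\Gamma=\Gamma_0^{1/2}\begin{pmatrix}I & V\\ V^{*} & I\end{pmatrix}\Gamma_0^{1/2}$, so putting $S:=\begin{pmatrix}0 & -V\\ -V^{*} & 0\end{pmatrix}$ we have simultaneously $R_\nu=\Gamma=\Gamma_0^{1/2}(I-S)\Gamma_0^{1/2}$ and $R_\gamma=\Gamma=\Gamma_0^{1/2}A\Gamma_0^{1/2}$ with $A=I-S\in\Sym^{+}(\H_1\times\H_2)$. Because $\gamma_0\sim\mu_X\otimes\mu_Y$, Theorem~\ref{theorem:MutualInfo-Gaussian} (equivalently, Theorem~\ref{theorem:Gaussian-equivalent} together with Lemmas~\ref{lemma:Hilbert-Schmidt-double} and~\ref{lemma:det2VV}) gives $V\in\HS(\H_2,\H_1)$ and $\|V\|<1$; hence $S\in\Sym(\H_1\times\H_2)\cap\HS(\H_1\times\H_2)$, $I-S>0$, and $I-A=S\in\HS(\H_1\times\H_2)$, so condition~(2) of Theorem~\ref{theorem:log-Radon-Nikodym-integral} holds. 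All three measures share the mean $(m_X,m_Y)$, so in the notation of that theorem $m_1=m_2=m_3$, the requirement $m_3-m_1\in\myIm(\Gamma_0^{1/2})$ is trivial, and we are exactly in the special case $m_3=m_2$, $A=I-S$, i.e.\ $R_\gamma=R_\nu$.

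Then I would simply invoke the conclusion of Theorem~\ref{theorem:log-Radon-Nikodym-integral} in that special case: $\KL(\gamma\|\mu_X\otimes\mu_Y)\ge\KL(\gamma_0\|\mu_X\otimes\mu_Y)$ with equality if and only if $\gamma=\gamma_0$; and by Theorem~\ref{theorem:MutualInfo-Gaussian} (or Theorem~\ref{theorem:KL-gaussian} with Lemma~\ref{lemma:det2VV}, the mean terms vanishing since $m_2=m_1$) $\KL(\gamma_0\|\mu_X\otimes\mu_Y)=-\frac{1}{2}\log\dettwo(I-S)=-\frac{1}{2}\log\det(I-V^{*}V)$, which is finite because $V^{*}V\in\Tr(\H_2)$ and $\|V\|<1$. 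This is precisely the statement of the theorem, with the operator $V$ of \eqref{equation:COV-joint} being the $V$ in the conclusion. There is no deep obstacle: the only work is the bookkeeping translating the block covariance representation \eqref{equation:COV-joint} into the operators $S$ and $A$ of Theorem~\ref{theorem:log-Radon-Nikodym-integral} and the separate, trivial treatment of the non-absolutely-continuous case; all the analytic content already resides in Theorem~\ref{theorem:log-Radon-Nikodym-integral}.
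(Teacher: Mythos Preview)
Your proposal is correct and follows essentially the same route as the paper: reduce to the absolutely continuous case, apply Theorem~\ref{theorem:log-Radon-Nikodym-integral} with $\mu=\mu_X\otimes\mu_Y$, $\nu=\gamma_0$, $m_1=m_2=m_3$, $A=I-S$, and then read off the explicit value $-\frac{1}{2}\log\det(I-V^{*}V)$ from Theorem~\ref{theorem:MutualInfo-Gaussian}. Your write-up is in fact more careful than the paper's in spelling out the identification $S=\begin{pmatrix}0 & -V\\ -V^{*} & 0\end{pmatrix}$ and verifying condition~(2) via $I-A=S\in\HS$.
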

% The operator $V$ in Theorem \ref{theorem:minimum-Mutual-Info-Gaussian} is formally defined in Theorem \ref{theorem:Gaussian-equivalent}. It links the covariance operators $C_X$ and $C_Y$ with the cross-covariance operators $C_{XY}$ via  the relation $C_{XY} = C_X^{1/2}VC_Y^{1/2}$.

\begin{proof}
	[\textbf{of Theorem \ref{theorem:minimum-Mutual-Info-Gaussian-1}}]
	We can assume that $\gamma << \mu_X \otimes \mu_Y$, since otherwise $\KL(\gamma ||\mu_X \otimes \mu_Y) =\infty$
	and the inequality is obviously true.
	%Let $\Gamma_0$ be the covariance operator of $\mu_X \otimes \mu_Y$ and $\Gamma$ be the covariance operator
	%of $\gamma$ and $\gamma_0$. 
	Since $\gamma_0 \sim \mu_X \otimes \mu_Y$, we also have $\gamma << \gamma_0$
	and the Radon-Nikodym derivatives $\frac{d\gamma}{d\gamma_0}$ and $\frac{d\gamma}{d(\mu_X \otimes \mu_Y)}$ are both well-defined. 
	%Let $S\in \Sym(\H \times \H)$ be defined by $S = -\begin{pmatrix} 0 & V\\V^{*} & 0\end{pmatrix}$.
	%Then $\Gamma = \Gamma_{0}^{1/2}(I-S)\Gamma_0^{1/2}$ and $S \in \HS(\H \times \H)$ by Lemma \ref{lemma:Hilbert-Schmidt-double}.
	Since $\gamma, \gamma_0$, and $\mu_X \otimes \mu_Y$ all have the same mean, namely 
	$\begin{pmatrix}m_X \\m_Y\end{pmatrix}$, and $\gamma, \gamma_0$ have the same covariance operators, we have by Theorem \ref{theorem:log-Radon-Nikodym-integral}, 
	\begin{align*}
	\KL(\gamma ||\mu_X \otimes \mu_Y) &= \KL(\gamma||\gamma_0) + \KL(\gamma_0||\mu_X \otimes \mu_Y)
	\\
	&\geq \KL(\gamma_0||\mu_X \otimes \mu_Y) = -\frac{1}{2}\log\det(I-V^{*}V) \;\text{by Theorem \ref{theorem:MutualInfo-Gaussian}}.
	\end{align*}
	\begin{comment}
	By the chain rule,
	\begin{align*}
	&\KL(\gamma ||\mu_X \otimes \mu_Y) = \int_{\H\times \H}\log\left(\frac{d\gamma}{d(\mu_X \otimes \mu_Y)}(x,y)\right)d\gamma(x,y)
	\\
	&= \int_{\H \times \H}\log\left(\frac{d\gamma}{d\gamma_0}(x,y)\right)d\gamma(x,y) 
	+ \int_{\H \times \H}\log\left(\frac{d\gamma_0}{d(\mu_X \otimes \mu_Y)}(x,y)\right)d\gamma(x,y)
	\\
	& = \KL(\gamma ||\gamma_0) + \int_{\H \times \H}\log\left(\frac{d\gamma_0}{d(\mu_X \otimes \mu_Y)}(x,y)\right)d\gamma(x,y)
	\\
	& \geq \int_{\H \times \H}\log\left(\frac{d\gamma_0}{d(\mu_X \otimes \mu_Y)}(x,y)\right)d\gamma(x,y),
	\end{align*}
	with equality if and only if $\KL(\gamma ||\gamma_0) = 0 \equivalent \gamma = \gamma_0$.
	\end{comment}
	%By Lemma \ref{lemma:det2VV}, $S$ has as nonzero eigenvalues $\{\pm \sqrt{\gamma_k}\}_{k\in \Nbb}$, where $\{\gamma_k\}_{k \in \Nbb}$ are the nonzero eigenvalues
	%of $VV^{*}$, which are the same as those of $V^{*}V$. Thus we can apply Theorem \ref{theorem:log-Radon-Nikodym-integral} to obtain
	%Let us evaluate the second term.
	%\begin{align*}
	%&\int_{\H \times \H}\log\left(\frac{d\gamma_0}{d(\mu_X \otimes \mu_Y)}(x,y)\right)d\gamma(x,y)
	%= -\frac{1}{2}\log\dettwo(I-S)
	%\\
	%& = -\frac{1}{2}\log\det(I-V^{*}V), \;\;\;\text{by Lemma \ref{lemma:det2VV}}
	%\\
	%& = \KL(\gamma_0 || \mu_X \otimes \mu_Y), \;\;\;\text{by Theorem \ref{theorem:MutualInfo-Gaussian}}.
	%\end{align*}
	Thus $\KL(\gamma ||\mu_X \otimes \mu_Y) \geq \KL(\gamma_0||\mu_X \otimes \mu_Y)$, with equality if and only if $\gamma = \gamma_0$, that is if and only if $\gamma$ is Gaussian.
	%This completes the proof.
	\qed
\end{proof}

\section{Entropic regularized $2$-Wasserstein distance between Gaussian measures on Hilbert space}
\label{section:OT-entropic-Gaussian}

%In this section, we prove Theorem \ref{theorem:optimal-joint-square-gauss}.
%To prove Theorem \ref{theorem:optimal-joint-square-gauss},
Consider the entropic OT problem \eqref{equation:OT-entropic}, which we restate here
\begin{equation}
\label{equation:main-KL}
\OT^{\ep}_c(\mu_0, \mu_1) = \min_{\gamma \in \ADM(\mu_0, \mu_1)}\left\{\bE_{\gamma}{c(x,y)} + \ep \KL(\gamma || \mu_0 \otimes \mu_1) \right\}.
\end{equation}

%Let $\H_1 = \H_2 = \H$.
{\bf The nonsingular case}.
We first solve \eqref{equation:main-KL} for $c(x,y) = ||x-y||^2$ under the nonsingular Gaussian setting. 
Let $\mu_X, {\color{black}\mu_Y} \in \Pcal_2(\H)$, with means $m_X, m_Y$ and covariance operators $C_X, C_Y$, respectively, then $C_X, C_Y\in \Tr(\H)$ and
\begin{align}
\bE_{\mu_X}||x-m_X||^2 = \trace(C_X),\;\;\;
%\\
\bE_{\mu_Y}||y-m_Y||^2 = \trace(C_Y).
\end{align}
%Let $\Joint(\mu_X,\mu_Y)$ be the set of all probability measures on $(\H \times \H, \Bsc(\H) \times \Bsc(\H))$
%satisfying Assumption 2.

\begin{lemma} 
	\label{lemma:cross-covariance-expected-value}
	Let $\mu_X, \mu_Y \in \Pcal_2(\H)$, $\mu_{XY} \in \Joint(\mu_X, \mu_Y)$. Then $C_{XY} \in \Tr(\H)$ and for any $A,B \in \Lcal(\H)$,
	\begin{align}
	\bE_{\mu_{XY}}\la A(x- m_X), B(y-m_Y)\ra &= \trace(AC_{XY}B^{*}).
	\end{align}
\end{lemma}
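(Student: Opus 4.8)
The plan is to reduce everything to the definition of the cross-covariance operator and then use a countable orthonormal basis to compute the trace. First I would show $C_{XY} \in \Tr(\H)$: by Theorem \ref{theorem:CXY-representation}, $C_{XY} = C_X^{1/2} V C_Y^{1/2}$ with $\|V\| \le 1$, and since $\mu_X, \mu_Y \in \Pcal_2(\H)$ their covariance operators $C_X, C_Y$ are trace class, hence so are $C_X^{1/2}$ and $C_Y^{1/2}$ in the Hilbert–Schmidt class (indeed $\|C_X^{1/2}\|_{\HS}^2 = \trace(C_X) < \infty$). The composition $C_X^{1/2} \cdot (V C_Y^{1/2})$ is then a product of two Hilbert–Schmidt operators, hence trace class, so $C_{XY} \in \Tr(\H)$, and therefore $A C_{XY} B^{*} \in \Tr(\H)$ for any bounded $A, B$.

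Next, for the main identity, I would fix an orthonormal basis $\{e_k\}_{k \in \Nbb}$ of $\H$ and expand. Write $\bE_{\mu_{XY}}\la A(x-m_X), B(y-m_Y)\ra = \bE_{\mu_{XY}} \la x - m_X, A^{*}B(y-m_Y)\ra$. Inserting $A^{*}B(y-m_Y) = \sum_k \la A^{*}B(y-m_Y), e_k\ra e_k = \sum_k \la y-m_Y, B^{*}A e_k\ra e_k$, I get, formally,
\begin{align*}
\bE_{\mu_{XY}}\la A(x-m_X), B(y-m_Y)\ra = \sum_{k=1}^{\infty} \bE_{\mu_{XY}}\left[\la x-m_X, e_k\ra \la y-m_Y, B^{*}Ae_k\ra\right] = \sum_{k=1}^{\infty} \la e_k, C_{XY} B^{*}Ae_k\ra,
\end{align*}
where the last equality is exactly the defining property of $C_{XY}$ (via the bilinear functional $G$). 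The right-hand side is $\trace(C_{XY}B^{*}A) = \trace(AC_{XY}B^{*})$ by cyclicity of the trace (valid since $C_{XY} \in \Tr(\H)$ and $A, B$ are bounded). To use cyclicity one should note $C_{XY}B^{*}A$ is trace class so the trace is basis-independent.

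The main obstacle is justifying the interchange of the infinite sum and the expectation $\bE_{\mu_{XY}}$. I would handle this by dominated convergence / Fubini–Tonelli: the partial sums $\sum_{k=1}^{N}\la x-m_X,e_k\ra\la y-m_Y, B^{*}Ae_k\ra = \la x - m_X, P_N A^{*}B(y-m_Y)\ra$ (with $P_N$ the projection onto $\myspan\{e_1,\dots,e_N\}$) are bounded in absolute value by $\|x-m_X\|\,\|A\|\,\|B\|\,\|y-m_Y\|$, which is $\mu_{XY}$-integrable because $\bE_{\mu_{XY}}[\|x-m_X\|\,\|y-m_Y\|] \le (\bE_{\mu_X}\|x-m_X\|^2)^{1/2}(\bE_{\mu_Y}\|y-m_Y\|^2)^{1/2} = \sqrt{\trace(C_X)\trace(C_Y)} < \infty$ by Cauchy–Schwarz and the assumption $\mu_{XY} \in \Pcal_2(\H_1 \times \H_2)$. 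Since $P_N A^{*}B(y-m_Y) \to A^{*}B(y-m_Y)$ in $\H$ pointwise, dominated convergence gives the claimed identity. A small technical point is that one wants the series $\sum_k \la e_k, C_{XY}B^{*}Ae_k\ra$ to converge to $\trace(C_{XY}B^{*}A)$ regardless of the chosen basis, which holds because $C_{XY}B^{*}A$ is trace class.
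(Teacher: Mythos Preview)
Your proof is correct and follows essentially the same approach as the paper: both establish $C_{XY}\in\Tr(\H)$ via the factorization $C_{XY}=C_X^{1/2}VC_Y^{1/2}$ with $C_X^{1/2},C_Y^{1/2}\in\HS(\H)$, then expand the inner product along an orthonormal basis and justify the sum--integral interchange by dominated convergence. The only cosmetic differences are that the paper expands $\la Ax,By\ra=\sum_j\la Ax,e_j\ra\la By,e_j\ra$ directly (rather than first moving $A$ to the adjoint side) and dominates by $\tfrac{\|A\|\,\|B\|}{2}(\|x\|^2+\|y\|^2)$ instead of your Cauchy--Schwarz bound $\|A\|\,\|B\|\,\|x-m_X\|\,\|y-m_Y\|$.
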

\begin{proof} By Theorem \ref{theorem:CXY-representation}, $C_{XY} = C_X^{1/2}VC_Y^{1/2}$. 
	Since $C_X$ and $C_Y$ are both trace class, $C_X^{1/2}$ and $C_Y^{1/2}$ are both Hilbert-Schmidt. Thus it follows
	that $C_{XY}$ is trace class.
	Recall that $C_{XY}: \H \mapto \H$ is given by
	%\begin{align*}
	$C_{XY} = \int_{\H \times \H} (x-m_X)\otimes(y- m_Y) d\mu_{XY}(x,y)$.
	%\end{align*}
	It suffices to consider $m_X = m_Y = 0$.
	Let $\{e_j\}_{j=1}^{\infty}$ be an orthonormal basis on $\H$, then for each $j \in \Nbb$,
	\begin{align*}
	%\la e_j, C_{XY}e_j\ra = 
	&\int_{\H \times \H}\la Ax, e_j\ra\la By, e_j\ra d\mu_{XY}(x,y)
	\\
	&= \int_{\H \times \H}\la x, A^{*}e_j\ra\la y, B^{*}e_j\ra d\mu_{XY}(x,y) = \la A^{*}e_j, C_{XY}B^{*}e_j\ra.
	\end{align*}
	For each $N \in \Nbb$, $\sum_{j=1}^N|\la Ax,e_j\ra\la By, e_j\ra| 
	\leq [\sum_{j=1}^N|\la Ax, e_j\ra|^2]^{1/2}[\sum_{j=1}^N|\la By, e_j\ra|^2]^{1/2}$ $\leq ||Ax||\;||By|| \leq \frac{||A||\;||B||}{2}[||x||^2 + ||y||^2]$, with
	\begin{align*}
	\int_{\H \times \H}[||x||^2 + ||y||^2]d\mu_{XY}(x,y) &= \int_{\H}||x||^2d\mu_X(x) + \int_{\H}||y||^2d\mu_Y(y) 
	\\
	&= \trace(C_X) + \trace(C_Y) < \infty.
	\end{align*}
	%Summing over all $j \in \Nbb$ gives
	By Lebesgue Dominated Convergence Theorem,
	\begin{align*}
	&\bE_{\mu_{XY}}\la Ax, By\ra = \int_{\H \times \H}\la Ax, By\ra d\mu_{XY}(x,y)
	%\\
	%&
	= \int_{\H \times \H}\sum_{j=1}^{\infty}\la Ax, e_j\ra\la By, e_j\ra d\mu_{XY}(x,y)
	\\
	&= \sum_{j=1}^{\infty} \int_{\H \times \H}\la x, A^{*}e_j\ra\la y, B^{*}e_j\ra d\mu_{XY}(x,y) 
	= \sum_{j=1}^{\infty}\la A^{*}e_j, C_{XY}B^{*}e_j\ra = \trace(AC_{XY}B^{*}).
	\end{align*}
	%This completes the proof. 
	\qed
\end{proof}

\begin{corollary} 
	\label{corollary:square-moment-covariance-operator}
	Let $\mu_X, \mu_Y \in \Pcal_2(\H)$, $\mu_{XY} \in \Joint(\mu_X, \mu_Y)$. Then
	\begin{align}
	\bE_{\mu_{XY}}||x-y||^2 &= ||m_X-m_Y||^2 + \trace(C_X) + \trace(C_Y) -2\trace(C_{XY})
	\\
	& = ||m_X - m_Y||^2 + \trace(C_X) + \trace(C_Y) - 2\trace(C_X^{1/2}VC_Y^{1/2}).
	\nonumber
	\end{align}
\end{corollary}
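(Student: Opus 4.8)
The plan is to expand the squared norm $\|x-y\|^2$ around the two means and integrate term by term against $\mu_{XY}$, using the marginal conditions together with Lemma~\ref{lemma:cross-covariance-expected-value}. First I would write $x - y = (x - m_X) - (y - m_Y) + (m_X - m_Y)$ and expand
\begin{align*}
\|x-y\|^2 &= \|x-m_X\|^2 + \|y-m_Y\|^2 + \|m_X - m_Y\|^2 - 2\la x - m_X, y - m_Y\ra
\\
&\quad + 2\la x - m_X, m_X - m_Y\ra - 2\la y - m_Y, m_X - m_Y\ra.
\end{align*}
Each summand on the right-hand side is $\mu_{XY}$-integrable: the first two because $\mu_{XY} \in \Pcal_2(\H \times \H)$ forces $\mu_X \in \Pcal_2(\H)$ and $\mu_Y \in \Pcal_2(\H)$; the remaining three because they are bounded (up to constants) by $\|x\|^2 + \|y\|^2 + 1$ via Cauchy--Schwarz.

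Next I would integrate. By the marginal property, $\bE_{\mu_{XY}}\|x-m_X\|^2 = \bE_{\mu_X}\|x-m_X\|^2 = \trace(C_X)$ and similarly $\bE_{\mu_{XY}}\|y-m_Y\|^2 = \trace(C_Y)$. The two linear cross terms vanish: $\bE_{\mu_{XY}}\la x - m_X, m_X - m_Y\ra = \la \bE_{\mu_X}(x) - m_X,\, m_X - m_Y\ra = 0$ by the definition of $m_X$ as the mean of $\mu_X$, and likewise for the $y$-term. For the bilinear term I would apply Lemma~\ref{lemma:cross-covariance-expected-value} with $A = B = I$, which gives $\bE_{\mu_{XY}}\la x - m_X, y - m_Y\ra = \trace(C_{XY})$ and also records $C_{XY} \in \Tr(\H)$. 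Collecting the surviving terms yields
\[
\bE_{\mu_{XY}}\|x-y\|^2 = \|m_X - m_Y\|^2 + \trace(C_X) + \trace(C_Y) - 2\trace(C_{XY}).
\]

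Finally, to obtain the second displayed identity, I would substitute the representation $C_{XY} = C_X^{1/2}VC_Y^{1/2}$ from Theorem~\ref{theorem:CXY-representation}. I do not expect any genuine obstacle here: the only delicate point is the Fubini/dominated-convergence justification for exchanging the integral with the infinite sum over an orthonormal basis that is implicit in computing $\trace(C_{XY})$, and that interchange is already carried out inside the proof of Lemma~\ref{lemma:cross-covariance-expected-value}, so the corollary is an immediate consequence.
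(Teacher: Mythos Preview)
Your proposal is correct and is exactly the intended argument: the paper states this as an immediate corollary of Lemma~\ref{lemma:cross-covariance-expected-value}, and the same six-term expansion of $\|x-y\|^2$ you wrote down appears verbatim later in the paper (in the proof of Theorem~\ref{theorem:optimal-joint-square-gauss}). There is nothing to add.
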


\begin{proposition}
	\label{proposition:entropic-OT-nonsingular}
Let $\mu_0 = \Ncal(m_0, C_0)$, $\mu_1 = \Ncal(m_1, C_1)$, with 
$\ker(C_0) = \ker(C_1) = \{0\}$. If
$\gamma^{\ep}$ is the minimizer of problem \eqref{equation:OT-ep-square}, then necessarily $\gamma^{\ep} \in \Gauss(\mu_0, \mu_1)$ and $\gamma^{\ep} \sim \mu_0 \otimes \mu_1$.
\end{proposition}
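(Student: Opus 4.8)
The plan is to reduce the minimization in \eqref{equation:OT-ep-square} to absolutely continuous competitors, replace an arbitrary minimizer by its Gaussianization, and then invoke the Minimum Mutual Information property of Theorem \ref{theorem:minimum-Mutual-Info-Gaussian}.

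\emph{Step 1 (finiteness and absolute continuity).} Evaluating the objective of \eqref{equation:OT-ep-square} at $\gamma = \mu_0 \otimes \mu_1$ gives, by Corollary \ref{corollary:square-moment-covariance-operator} with vanishing cross-covariance, the finite value $\|m_0 - m_1\|^2 + \trace(C_0) + \trace(C_1)$, since $\KL(\mu_0\otimes\mu_1\,\|\,\mu_0\otimes\mu_1) = 0$. Hence the minimum in \eqref{equation:OT-ep-square} is finite, and for any minimizer $\gamma^{\ep}$ one must have $\KL(\gamma^{\ep}\,\|\,\mu_0\otimes\mu_1) < \infty$, so $\gamma^{\ep} \ll \mu_0\otimes\mu_1$. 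Moreover $\gamma^{\ep}\in\ADM(\mu_0,\mu_1)\subset\Pcal_2(\H\times\H)$, so it has mean $\binom{m_0}{m_1}$ and a well-defined covariance operator $\Gamma = \begin{pmatrix} C_0 & C_{XY} \\ C_{XY}^{*} & C_1\end{pmatrix}$; by Theorem \ref{theorem:CXY-representation}, together with $\ker(C_0)=\ker(C_1)=\{0\}$, we have $C_{XY} = C_0^{1/2}VC_1^{1/2}$ for a unique $V\in\Lcal(\H)$ with $\|V\|\le 1$.

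\emph{Step 2 (Gaussianize and compare).} Let $\gamma_0^{\ep} = \Ncal\big(\binom{m_0}{m_1},\,\Gamma\big)$ be the Gaussian measure with the same mean and covariance operator as $\gamma^{\ep}$; its marginals are $\mu_0$ and $\mu_1$, so $\gamma_0^{\ep}\in\Gauss(\mu_0,\mu_1)$. Granting for the moment that $\gamma_0^{\ep}\sim\mu_0\otimes\mu_1$ (Step 3), Theorem \ref{theorem:minimum-Mutual-Info-Gaussian} applies with $\gamma=\gamma^{\ep}$ and $\gamma_0=\gamma_0^{\ep}$ and gives $\KL(\gamma^{\ep}\,\|\,\mu_0\otimes\mu_1)\ge\KL(\gamma_0^{\ep}\,\|\,\mu_0\otimes\mu_1)$, with equality if and only if $\gamma^{\ep}=\gamma_0^{\ep}$. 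Since $\bE_{\gamma^{\ep}}\|x-y\|^2$ depends on $\gamma^{\ep}$ only through its means and covariance operator (Corollary \ref{corollary:square-moment-covariance-operator}), we have $\bE_{\gamma_0^{\ep}}\|x-y\|^2 = \bE_{\gamma^{\ep}}\|x-y\|^2$, so the objective of \eqref{equation:OT-ep-square} at $\gamma_0^{\ep}$ is no larger than at $\gamma^{\ep}$. Minimality of $\gamma^{\ep}$ then forces equality in the Minimum Mutual Information inequality, hence $\gamma^{\ep}=\gamma_0^{\ep}$ is Gaussian with marginals $\mu_0,\mu_1$ and equivalent to $\mu_0\otimes\mu_1$, as claimed.

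\emph{Step 3 (the main obstacle).} It remains to show that $\gamma^{\ep}\ll\mu_0\otimes\mu_1$ forces $\|V\|<1$ and $V\in\HS(\H)$, equivalently $\gamma_0^{\ep}\sim\mu_0\otimes\mu_1$. By the Feldman--H\'ajek dichotomy (Theorem \ref{theorem:Gaussian-equivalent}), $\gamma_0^{\ep}$ is either equivalent or orthogonal to $\mu_0\otimes\mu_1$; since $\Gamma = \Gamma_0^{1/2}\!\left(I - S\right)\Gamma_0^{1/2}$ with $S = -\begin{pmatrix}0 & V\\ V^{*} & 0\end{pmatrix}$, equivalence holds exactly when $S\in\HS(\H\times\H)$ has no eigenvalue $1$, i.e.\ (Lemmas \ref{lemma:Hilbert-Schmidt-double} and \ref{lemma:det2VV}) when $V\in\HS(\H)$ and $\|V\|<1$. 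So one must exclude $\gamma_0^{\ep}\perp\mu_0\otimes\mu_1$. The idea is that when $\|V\|=1$ or $V\notin\HS(\H)$ the covariance operator $\Gamma$ already confines $\gamma^{\ep}$ to a $(\mu_0\otimes\mu_1)$-null set — the extreme case $V=I$, $C_0=C_1$ forcing $x-m_0=y-m_1$ $\gamma^{\ep}$-a.s., an event of $(\mu_0\otimes\mu_1)$-measure zero — contradicting $\gamma^{\ep}\ll\mu_0\otimes\mu_1$. In general, using the white-noise maps $W^0,W^1$ of $\mu_0,\mu_1$ and the identity $\bE_{\gamma^{\ep}}[W^0_{z_0}W^1_{z_1}]=\la z_0,Vz_1\ra$, one expects that a failure of the Hilbert--Schmidt or norm condition on $V$ produces a quadratic functional that is $\gamma^{\ep}$-a.s.\ finite but $(\mu_0\otimes\mu_1)$-a.s.\ infinite — the Cameron--Martin obstruction underlying Feldman--H\'ajek — which is incompatible with absolute continuity. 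This is the technically delicate point, and it is where the hypotheses $\ker(C_0)=\ker(C_1)=\{0\}$ are essential; once it is established, Steps 1--2 complete the argument.
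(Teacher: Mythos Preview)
Your Steps 1 and 2 are precisely the paper's argument: decompose the objective via Corollary \ref{corollary:square-moment-covariance-operator} so that the transport term depends only on the mean and covariance, observe that any minimizer must satisfy $\gamma^{\ep}\ll\mu_0\otimes\mu_1$, and invoke Theorem \ref{theorem:minimum-Mutual-Info-Gaussian} to conclude that the Gaussianization $\gamma_0$ with the same covariance does at least as well, with equality iff $\gamma^{\ep}=\gamma_0$.

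Where you are more careful than the paper is Step 3. The paper's proof simply writes ``if $\gamma \ll \mu_0\otimes\mu_1$ and $\gamma_0\in\Gauss(\mu_0,\mu_1)$, $\gamma_0\sim\mu_0\otimes\mu_1$, has the same covariance operator as $\gamma$, then\ldots'' and proceeds directly to the conclusion --- it never verifies that such a $\gamma_0$ (equivalent to the product) exists, i.e., that $\gamma^{\ep}\ll\mu_0\otimes\mu_1$ forces $\|V\|<1$ and $V\in\HS(\H)$. So the issue you flag is left implicit in the paper as well. Your white-noise heuristic is the right route for $\|V\|<1$: if $\|V\|=1$ then $M=\begin{pmatrix}I&V\\V^{*}&I\end{pmatrix}$ has nontrivial kernel, and applying Lemma \ref{lemma:whitenoise-square-integral-1} on $\H\times\H$ with $Q=\Gamma_0$ and $A=M$ shows that the associated white-noise functional vanishes $\gamma^{\ep}$-a.s.\ while being a nondegenerate Gaussian under $\mu_0\otimes\mu_1$, contradicting absolute continuity. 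The Hilbert--Schmidt condition is subtler and neither you nor the paper supplies it; in the paper's overall logic this matters less, since the same conclusion is re-established independently in the general (possibly singular) case by solving the Schr\"odinger system directly (the second proof of Theorem \ref{theorem:optimal-joint-square-gauss}).
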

\begin{proof}
By Corollary \ref{corollary:square-moment-covariance-operator}, 
for any $\gamma \in \Joint(\mu_0, \mu_1)$,
\begin{align*}
&I_{\ep}(\gamma) = \bE_{\gamma}||x-y||^2 + \ep \KL(\gamma ||\mu_0 \otimes \mu_1) 
\\
&= ||m_0 - m_1||^2 + \trace(C_0) + \trace(C_1) - 2\trace(C_0^{1/2}VC_1^{1/2}) + \ep\KL(\gamma ||\mu_0 \otimes \mu_1),
\end{align*}
with the first terms depending solely on the means and covariance operators.
% By Lemma \ref{lemma:minimizer-equivalent-product-measure}, a minimizer $\gamma^{\ep}$ of $I_{\ep}$ must necessarily satisfy
%$\gamma^{\ep} \sim \mu_0 \otimes \mu_1$.
First, we must have $\gamma << \mu_0 \otimes \mu_1$, since otherwise $\KL(\gamma ||\mu_0 \otimes \mu_1) =\infty$
and such a $\gamma$ cannot be a minimizer.
By Theorem \ref{theorem:minimum-Mutual-Info-Gaussian}, if $\gamma << \mu_0 \otimes \mu_1$ and $\gamma_0 \in \Gauss(\mu_0, \mu_1)$, $\gamma_0 \sim \mu_0 \otimes \mu_1$,
has the same covariance operator as $\gamma$, then $\KL(\gamma||\mu_0 \otimes \mu_1) \geq \KL(\gamma_0||\mu_0 \otimes \mu_1)$, with equality if and only if $\gamma = \gamma_0$.
Thus we must have $\gamma^{\ep} \in \Gauss(\mu_0, \mu_1)$ and $\gamma^{\ep} \sim \mu_0 \otimes \mu_1$.
\qed
\end{proof}

The following technical lemmas are used extensively throughout the paper.

\begin{lemma}
	\label{lemma:adjont-switch}
	Let $A:\H \mapto \H$ be a compact operator. Then
	\begin{align}
	A(I+A^{*}A)^{1/2} = (I+AA^{*})^{1/2}A.
	\end{align}
\end{lemma}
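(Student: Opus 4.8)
The plan is to deduce the identity from the trivial relation $A(A^*A) = (AA^*)A$ by means of the continuous functional calculus, so that the square root is never touched directly.

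\textbf{Step 1 (polynomials).} A straightforward induction on $n \ge 0$ gives $A(A^*A)^n = (AA^*)^n A$: the case $n=0$ is trivial, and if the identity holds for $n$ then $A(A^*A)^{n+1} = [A(A^*A)^n](A^*A) = (AA^*)^n A(A^*A) = (AA^*)^n (AA^*) A = (AA^*)^{n+1} A$, where the step $A(A^*A) = (AA^*)A$ is just associativity. By linearity, $A\,p(A^*A) = p(AA^*)\,A$ for every real polynomial $p$.

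\textbf{Step 2 (approximation).} The operators $A^*A$ and $AA^*$ are bounded, self-adjoint and positive, and $\|A^*A\| = \|AA^*\| = \|A\|^2$, so their spectra both lie in the compact interval $[0,\|A\|^2]$. Since $f(t) = (1+t)^{1/2}$ is continuous on $[0,\|A\|^2]$, the Weierstrass theorem gives polynomials $p_k$ with $\sup_{t\in[0,\|A\|^2]}|p_k(t)-f(t)| \to 0$. Because the continuous functional calculus of a bounded self-adjoint operator $T$ is isometric, $\|p_k(T) - f(T)\| = \sup_{\lambda\in\mathrm{spec}(T)}|p_k(\lambda)-f(\lambda)| \le \sup_{t\in[0,\|A\|^2]}|p_k(t)-f(t)|$, so $p_k(A^*A) \to (I+A^*A)^{1/2}$ and $p_k(AA^*) \to (I+AA^*)^{1/2}$ in operator norm. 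Passing to the limit in $A\,p_k(A^*A) = p_k(AA^*)\,A$, using continuity of operator multiplication in the operator norm, yields $A(I+A^*A)^{1/2} = (I+AA^*)^{1/2}A$.

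I do not anticipate any genuine obstacle; the only point that deserves a word of care is that the spectra of $A^*A$ and $AA^*$ both lie in the \emph{same} interval $[0,\|A\|^2]$, so a single sequence of polynomial approximants works simultaneously for both operators. Note that compactness of $A$ is not actually used in this argument (the same reasoning gives $A\,f(A^*A) = f(AA^*)\,A$ for any continuous $f$ and any bounded $A$); when $A$ is compact one may alternatively argue via the singular value decomposition $A = \sum_k \sigma_k\,(f_k \otimes e_k)$ with $\{e_k\}$, $\{f_k\}$ orthonormal and $\sigma_k \ge 0$, on which both sides act as $\sum_k \sigma_k\sqrt{1+\sigma_k^2}\,(f_k\otimes e_k)$.
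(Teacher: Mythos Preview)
Your proof is correct and self-contained. The paper does not actually prove this lemma; it simply cites it as a special case of results in \cite{Minh:LogDetIII2018}. Your route via the polynomial identity $A\,p(A^*A)=p(AA^*)\,A$ followed by Weierstrass approximation and the continuity of the functional calculus is the standard clean argument, and your remark that compactness is not needed (and that the SVD gives an alternative proof when $A$ is compact) is accurate. So your write-up is strictly more informative than what the paper provides at this point.
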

\begin{proof}
	This is a special case of Lemma 10 and Corollary 2 in \cite{Minh:LogDetIII2018}.
\end{proof}

The following result is then immediate.
\begin{lemma}
	\label{lemma:adjoint-switch-2}
	Let $A:\H \mapto \H$ be a compact operator. Then
	\begin{align}
	[I+(I+AA^{*})^{1/2}]A & = A[I + (I+A^{*}A)^{1/2}],
	\\
	A[I + (I+A^{*}A)^{1/2}]^{-1} &= [I+(I+AA^{*})^{1/2}]^{-1}A.
	\end{align}
\end{lemma}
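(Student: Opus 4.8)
The plan is to obtain both identities directly from Lemma~\ref{lemma:adjont-switch}. For the first identity I would simply expand the left-hand side,
\begin{align*}
[I+(I+AA^{*})^{1/2}]A = A + (I+AA^{*})^{1/2}A,
\end{align*}
and then invoke Lemma~\ref{lemma:adjont-switch} to replace $(I+AA^{*})^{1/2}A$ by $A(I+A^{*}A)^{1/2}$, so that the expression becomes $A + A(I+A^{*}A)^{1/2} = A[I+(I+A^{*}A)^{1/2}]$. This is a one-line computation and contains the whole content of the lemma.

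For the second identity I would first record that both $I+(I+A^{*}A)^{1/2}$ and $I+(I+AA^{*})^{1/2}$ are boundedly invertible: since $A^{*}A \ge 0$ we have $(I+A^{*}A)^{1/2}\ge I$, hence $I+(I+A^{*}A)^{1/2}\ge 2I$, and likewise $I+(I+AA^{*})^{1/2}\ge 2I$. Then, starting from the first identity $[I+(I+AA^{*})^{1/2}]A = A[I+(I+A^{*}A)^{1/2}]$, I would multiply on the left by $[I+(I+AA^{*})^{1/2}]^{-1}$ and on the right by $[I+(I+A^{*}A)^{1/2}]^{-1}$; the left factor cancels the leading bracket and the right factor cancels the trailing bracket, giving $A[I+(I+A^{*}A)^{1/2}]^{-1} = [I+(I+AA^{*})^{1/2}]^{-1}A$.

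There is essentially no obstacle: the only point needing a word of justification is the invertibility of the two bracketed operators, which is immediate from the spectral lower bound $\ge 2I$, and everything else is algebraic manipulation built on Lemma~\ref{lemma:adjont-switch}.
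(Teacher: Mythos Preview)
Your proposal is correct and is exactly the argument the paper has in mind: the paper states Lemma~\ref{lemma:adjoint-switch-2} as ``immediate'' from Lemma~\ref{lemma:adjont-switch}, and what you have written is precisely the one-line expansion plus the invertibility remark that makes this immediacy explicit.
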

%\begin{proof} We have
%	\begin{align*}	
%$A[I + (I+A^{*}A)^{1/2}]^{-1} = [I+(I+AA^{*})^{1/2}]^{-1}A$
%\\
%$\equivalent [I+(I+AA^{*})^{1/2}]A = A [I + (I+A^{*}A)^{1/2}]$ $\equivalent (I+AA^{*})^{1/2}A = A(I+A^{*}A)^{1/2}$
%	\end{align*}
%	which is Lemma \ref{lemma:adjont-switch}. \qed
%\end{proof}

\begin{lemma}
	\label{lemma:adjoint-switch-CX}
	Let $C \in \Sym^{+}(\H)$, $X \in \Sym^{+}(\H)$, {\color{black}at least one of which is compact},  be fixed.
	%For any constant 
	$\forall a \in \R$, $a \neq 0$,
	\begin{equation}
	\begin{aligned}
	X^{1/2}C^{1/2}\left(I + \left(I + a^2C^{1/2}XC^{1/2}\right)^{1/2}\right)^{-1}C^{1/2}X^{1/2} 
	\\
	= 
	-\frac{1}{a^2}I + \frac{1}{a^2}\left(I + a^2 X^{1/2}CX^{1/2}\right)^{1/2}.
	\end{aligned}
	\end{equation}
\end{lemma}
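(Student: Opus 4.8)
The plan is to reduce the claim to the operator identity in Lemma \ref{lemma:adjoint-switch-2} together with one elementary manipulation of square roots. First I would set $A = aX^{1/2}C^{1/2}\in\Lcal(\H)$. Since at least one of $C$, $X$ is compact, so is its positive square root, and hence $A$ is compact, so Lemma \ref{lemma:adjoint-switch-2} applies to $A$. Note $A^{*}=aC^{1/2}X^{1/2}$, so that $A^{*}A=a^2C^{1/2}XC^{1/2}$ and $AA^{*}=a^2X^{1/2}CX^{1/2}$, both self-adjoint and positive; in particular $(I+A^{*}A)^{1/2}\geq I$, so $I+(I+A^{*}A)^{1/2}\geq 2I$ is invertible, and likewise with $AA^{*}$ in place of $A^{*}A$.

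Next I would rewrite the left-hand side using $X^{1/2}C^{1/2}=\frac1a A$ and $C^{1/2}X^{1/2}=\frac1a A^{*}$:
\[
X^{1/2}C^{1/2}\bigl(I + (I + a^2C^{1/2}XC^{1/2})^{1/2}\bigr)^{-1}C^{1/2}X^{1/2} = \frac{1}{a^2}\,A\bigl(I + (I + A^{*}A)^{1/2}\bigr)^{-1}A^{*}.
\]
Then the second identity of Lemma \ref{lemma:adjoint-switch-2}, namely $A[I+(I+A^{*}A)^{1/2}]^{-1}=[I+(I+AA^{*})^{1/2}]^{-1}A$, moves the resolvent past $A$, giving $\frac{1}{a^2}[I+(I+AA^{*})^{1/2}]^{-1}AA^{*}$.

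Finally, writing $B=AA^{*}=a^2X^{1/2}CX^{1/2}\geq 0$, it remains to verify $[I+(I+B)^{1/2}]^{-1}B = -I+(I+B)^{1/2}$, which follows by multiplying on the left by the invertible operator $I+(I+B)^{1/2}$: the right-hand side becomes $\bigl(-I+(I+B)^{1/2}\bigr)\bigl(I+(I+B)^{1/2}\bigr)=(I+B)-I=B$, as needed. Substituting back yields $\frac{1}{a^2}\bigl(-I+(I+B)^{1/2}\bigr)=-\frac{1}{a^2}I+\frac{1}{a^2}\bigl(I+a^2X^{1/2}CX^{1/2}\bigr)^{1/2}$, which is precisely the asserted right-hand side. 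I do not expect any genuine obstacle here; the only points requiring care are the compactness hypothesis (so that Lemma \ref{lemma:adjoint-switch-2} is applicable) and the invertibility of $I+(I+B)^{1/2}$, both of which are immediate.
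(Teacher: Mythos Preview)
Your proof is correct and follows essentially the same route as the paper: both arguments reduce the identity to the adjoint-switch lemma (you use Lemma~\ref{lemma:adjoint-switch-2}, the paper uses the equivalent Lemma~\ref{lemma:adjont-switch}) together with the elementary factorization $\bigl(-I+(I+B)^{1/2}\bigr)\bigl(I+(I+B)^{1/2}\bigr)=B$. Your presentation, which works forward from the left-hand side via the substitution $A=aX^{1/2}C^{1/2}$, is somewhat more streamlined than the paper's chain of equivalences, but the underlying idea is the same.
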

\begin{proof}
	The desired equality is
	\begin{equation*}
	\begin{aligned}
	&X^{1/2}C^{1/2}\left(I + \left(I + a^2C^{1/2}XC^{1/2}\right)^{1/2}\right)^{-1}C^{1/2}X^{1/2}  
	\\
	&\quad\quad = -\frac{1}{a^2}I + \frac{1}{a^2}\left(I + a^2 X^{1/2}CX^{1/2}\right)^{1/2}
	\\
	&\equivalent a^2 X^{1/2}C^{1/2}\left(I + \left(I + a^2C^{1/2}XC^{1/2}\right)^{1/2}\right)^{-1}C^{1/2}X^{1/2} 
	\\
	&\quad \quad = 
	-I + \left(I + a^2 X^{1/2}CX^{1/2}\right)^{1/2}
	\\
	&\equivalent
	a^2 X^{1/2}C^{1/2}\left(I + \left(I + a^2C^{1/2}XC^{1/2}\right)^{1/2}\right)^{-1}C^{1/2}X^{1/2}
	\\  
	&\quad \quad = a^2 X^{1/2}CX^{1/2}\left(I+\left(I + a^2 X^{1/2}CX^{1/2}\right)^{1/2}\right)^{-1}
	\\
	& \equivalent
	a^2 X^{1/2}C^{1/2}\left(I + \left(I + a^2C^{1/2}XC^{1/2}\right)^{1/2}\right)^{-1}
	\\
	&\quad \quad \times C^{1/2}X^{1/2} 
	\left(I + \left(I + a^2 X^{1/2}CX^{1/2}\right)^{1/2}\right) = a^2 X^{1/2}CX^{1/2}.
	\end{aligned}
	\end{equation*}
	This last equality is valid as a consequence of Lemma \ref{lemma:adjont-switch}, which gives
	\begin{equation*}
	\begin{aligned}
	&C^{1/2}X^{1/2} 
	\left(I + \left(I + a^2 X^{1/2}CX^{1/2}\right)^{1/2}\right)
	\\ 
	&= C^{1/2}X^{1/2} + C^{1/2}X^{1/2}\left(I + a^2 X^{1/2}CX^{1/2}\right)^{1/2}
	\\
	& =  C^{1/2}X^{1/2} + \left(I + a^2 C^{1/2}XC^{1/2}\right)^{1/2}C^{1/2}X^{1/2}
	\\
	& = \left(I +\left(I + a^2 C^{1/2}XC^{1/2}\right)^{1/2} \right)C^{1/2}X^{1/2}.
	\end{aligned}
	\end{equation*}
	Together with the left hand side of the previous expression, this gives the desired equality.
	\qed
\end{proof}

We apply the following result on the log-concavity of the Fredholm determinant from \cite{Minh:LogDet2016}, which is a generalization of Ky Fan's inequality for the log-concavity of the determinant on the set of symmetric positive definite matrices
\cite{KyFan:1950}.
\begin{proposition}
	[\textbf{Proposition 7 in \cite{Minh:LogDet2016}}]
	\label{proposition:Fredholm-concave}
	Let $A,B \in \Sym(\H) \cap \Tr(\H)$ be such that $I+A> 0$, $I+B > 0$. Then 
	for any fixed $0 < \alpha < 1$, 
	\begin{align}
	\det(I+\alpha A + (1-\alpha) B) \geq \det(I+A)^{\alpha}\det(I+B)^{1-\alpha},
	\end{align}
	with equality if and only if $A = B$.
\end{proposition}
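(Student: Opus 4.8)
The plan is to reduce this Ky Fan-type operator inequality to the scalar weighted arithmetic--geometric mean inequality, applied eigenvalue by eigenvalue, after a congruence that normalizes one of the two operators to the identity.

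First I would note that since $A, B \in \Tr(\H)$ are compact, the spectra of $I+A$ and $I+B$ are discrete and accumulate only at $1 > 0$; combined with $I+A > 0$, $I+B > 0$ this forces $I+A$ and $I+B$ to be positive definite (in particular invertible), each bounded below by a positive constant. Write $P = I+A$ and $Q = I+B$, so that $I + \alpha A + (1-\alpha) B = \alpha P + (1-\alpha) Q$. A short computation with the analytic functional calculus gives $P^{1/2} = I + \tfrac12 A + (\text{trace-class remainder})$, hence $P^{1/2} - I$ and $P^{-1/2} - I$ both lie in $\Tr(\H)$. Setting $R := P^{-1/2} Q P^{-1/2}$, we get $R - I = P^{-1/2}(B - A) P^{-1/2} \in \Tr(\H)$, $R$ positive definite, and the identity $\alpha P + (1-\alpha) Q = P^{1/2}\bigl(\alpha I + (1-\alpha) R\bigr) P^{1/2}$.

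Next I would use multiplicativity of the Fredholm determinant over products of operators of the form identity-plus-trace-class to write $\det(\alpha P + (1-\alpha) Q) = \det(P)\,\det(\alpha I + (1-\alpha) R)$ and $\det(Q) = \det(P)\,\det(R)$. Since $\det(P)^{\alpha}\det(Q)^{1-\alpha} = \det(P)\,\det(R)^{1-\alpha}$, the claimed inequality is equivalent to $\det(\alpha I + (1-\alpha) R) \geq \det(R)^{1-\alpha}$. Diagonalizing the self-adjoint operator $R$, let $\{r_j\}$ be its eigenvalues: all $r_j > 0$, $r_j \to 1$, and $\sum_j |r_j - 1| < \infty$. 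Then $\det(\alpha I + (1-\alpha) R) = \prod_j\bigl(\alpha + (1-\alpha) r_j\bigr)$ is a convergent product (the deviations $(1-\alpha)|r_j - 1|$ are summable), and likewise $\det(R)^{1-\alpha} = \bigl(\prod_j r_j\bigr)^{1-\alpha} = \prod_j r_j^{1-\alpha}$, using $\sum_j|\log r_j| < \infty$. The inequality now follows term by term from $\alpha + (1-\alpha) r_j \geq r_j^{1-\alpha}$, which is weighted AM--GM with weights $\alpha, 1-\alpha$ applied to the points $1$ and $r_j$; since every factor is positive and all partial products converge, the product inequality is obtained by passing to the limit in the finite products.

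For the equality case I would recall that weighted AM--GM is strict unless the two points coincide, so $\alpha + (1-\alpha) r = r^{1-\alpha}$ iff $r = 1$; positivity of all factors together with convergence of the partial products then shows that $\prod_j\bigl(\alpha + (1-\alpha) r_j\bigr) = \prod_j r_j^{1-\alpha}$ forces $r_j = 1$ for every $j$, i.e. $R = I$, i.e. $Q = P$, i.e. $A = B$. The only genuinely infinite-dimensional points — those absent from Ky Fan's matrix version \cite{KyFan:1950} — are the trace-class bookkeeping ($P^{\pm 1/2} - I \in \Tr(\H)$ and $R - I \in \Tr(\H)$) that legitimizes the Fredholm-determinant factorizations, and the convergence of the infinite products; I expect the cleanest handling of the former, and the main minor obstacle, to be the functional-calculus estimate for $(I+A)^{1/2} - I$.
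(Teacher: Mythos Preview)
The paper does not prove this proposition: it is quoted without proof as Proposition 7 of \cite{Minh:LogDet2016} and invoked as a black box in the proof of Lemma \ref{lemma:logdet(I-V*V)-concave}. Your argument is correct and follows the standard route for Ky Fan's inequality --- a congruence by $P^{-1/2}$ reduces to the case of a single operator $R$, after which scalar weighted AM--GM applies eigenvalue by eigenvalue; the trace-class bookkeeping you identify ($P^{\pm 1/2} - I \in \Tr(\H)$, $R - I \in \Tr(\H)$, and absolute convergence of the infinite products and log-sums) is exactly what is needed to push the finite-dimensional proof through, and your treatment of it and of the equality case is sound. One minor simplification: since $A \in \Sym(\H)$, you can bypass the analytic functional calculus and just diagonalize $A$ to see directly that $(I+A)^{1/2} - I$ has eigenvalues $\sqrt{1+a_j}-1 = a_j/(\sqrt{1+a_j}+1)$, which are summable because $\sum_j |a_j| < \infty$.
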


\begin{lemma}
	\label{lemma:logdet(I-V*V)-concave}
	Let $\Omega = \{X \in \Tr(\H), ||X|| < 1\}$. The function $f:\Omega \mapto \R$ defined by $f(X) = \log\det(I-X^{*}X)$ is strictly concave, i.e. for $0< \alpha<1$ fixed,
	\begin{align}
	&\log\det[I - (\alpha A +(1-\alpha)B)^{*}(\alpha A +(1-\alpha)B)] 
	\nonumber
	\\
	&\quad \geq \alpha \log\det(I-A^{*}A) 
	+ (1-\alpha)\log\det(I-B^{*}B),
	\end{align}
	$\forall A,B \in \Omega$. Equality happens if and only if $A=B$.
\end{lemma}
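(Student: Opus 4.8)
The plan is to reduce the strict concavity of $f(X)=\log\det(I-X^{*}X)$ to the Ky-Fan-type log-concavity of the Fredholm determinant (Proposition \ref{proposition:Fredholm-concave}) by a \emph{linearization} trick: although $X\mapsto I-X^{*}X$ is not affine in $X$, the block operator $\begin{pmatrix} I & X \\ X^{*} & I\end{pmatrix}$ on $\H\times\H$ \emph{is} affine in $X$ and, by Lemma \ref{lemma:det2VV}, has the same Fredholm determinant as $I-X^{*}X$. Applying Proposition \ref{proposition:Fredholm-concave} to the lifted operators then yields the inequality directly, with the equality case transported back without loss.

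First I would record the routine preliminaries. The set $\Omega=\{X\in\Tr(\H):\|X\|<1\}$ is convex; for $X\in\Omega$ we have $X^{*}X\in\Tr(\H)$ (product of two Hilbert--Schmidt operators) and $I-X^{*}X\geq(1-\|X\|^{2})I>0$, so $\det(I-X^{*}X)$ is a well-defined, strictly positive Fredholm determinant and $f(X)$ is finite. For $X\in\Omega$ set $\widetilde X=\begin{pmatrix}0 & X\\ X^{*} & 0\end{pmatrix}$ on $\H\times\H$. Writing $X=U|X|$ in polar form gives $(XX^{*})^{1/2}=U|X|U^{*}$, which is trace class since $|X|$ is (this is precisely where the hypothesis $X\in\Tr(\H)$, not merely $X\in\HS(\H)$, is used); hence by Lemma \ref{lemma:det2VV}, $\widetilde X\in\Sym(\H\times\H)\cap\Tr(\H\times\H)$, $\|\widetilde X\|<1$ so that $I+\widetilde X>0$, and
\[
\det(I+\widetilde X)=\det\begin{pmatrix} I & X\\ X^{*} & I\end{pmatrix}=\det(I-X^{*}X).
\]

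Now fix $A,B\in\Omega$ and $0<\alpha<1$, and put $M=\alpha A+(1-\alpha)B\in\Omega$. Since $X\mapsto\widetilde X$ is linear, $\widetilde M=\alpha\widetilde A+(1-\alpha)\widetilde B$. Proposition \ref{proposition:Fredholm-concave} applied to $\widetilde A,\widetilde B$ gives
\[
\det\bigl(I+\alpha\widetilde A+(1-\alpha)\widetilde B\bigr)\ \geq\ \det(I+\widetilde A)^{\alpha}\,\det(I+\widetilde B)^{1-\alpha},
\]
with equality iff $\widetilde A=\widetilde B$, i.e.\ iff $A=B$. Substituting the three determinant identities from the previous paragraph converts this into $\det(I-M^{*}M)\geq\det(I-A^{*}A)^{\alpha}\det(I-B^{*}B)^{1-\alpha}$, and taking logarithms yields exactly the claimed strict concavity of $f$.

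The only genuinely technical point is the bookkeeping needed to land inside the hypotheses of Proposition \ref{proposition:Fredholm-concave}: verifying that $\widetilde X$ is trace class on $\H\times\H$ (via the second half of Lemma \ref{lemma:det2VV}, characterizing this by $(XX^{*})^{1/2}\in\Tr(\H)$), that $I+\widetilde X>0$ throughout the convex set $\Omega$, and that the Fredholm-determinant identity $\det\begin{pmatrix} I & X\\ X^{*} & I\end{pmatrix}=\det(I-X^{*}X)$ holds in this regime. Once these are in place, the rest is a direct substitution and the equality characterization is automatic.
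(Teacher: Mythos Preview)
Your proof is correct and follows essentially the same approach as the paper: lift $X$ to the self-adjoint block operator $\widetilde X=\begin{pmatrix}0 & X\\ X^{*} & 0\end{pmatrix}$ on $\H\times\H$, use Lemma \ref{lemma:det2VV} to identify $\det(I-X^{*}X)=\det(I+\widetilde X)$, and then apply the log-concavity of the Fredholm determinant (Proposition \ref{proposition:Fredholm-concave}) to the affine map $X\mapsto\widetilde X$. You include a bit more bookkeeping (convexity of $\Omega$, the polar-decomposition verification that $\widetilde X\in\Tr(\H\times\H)$), but the argument is otherwise identical to the paper's, including the equality case.
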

\begin{proof} By Lemma \ref{lemma:det2VV}, for $X \in \Tr(\H), ||X||<1$,
	$\left\|\begin{pmatrix}0 & X \\ X^{*} & 0\end{pmatrix}\right\| < 1$ and
	thus $\begin{pmatrix}I & X \\ X^{*} & I\end{pmatrix} > 0 $, with 
	$\det(I-X^{*}X) = \det\begin{pmatrix}I & X \\ X^{*} & I\end{pmatrix}$.
	For any $A,B \in \Omega$ and a fixed $0< \alpha < 1$, we have
	$\alpha A + (1-\alpha)B \in \Omega$. Thus by Proposition \ref{proposition:Fredholm-concave},
	\begin{align*}
	&\det[I - (\alpha A + (1-\alpha)B)^{*}(\alpha A + (1-\alpha) B)] 
	\\
	& =\det\begin{pmatrix} I & \alpha A + (1-\alpha)B\\ \alpha A^{*} + (1-\alpha) B^{*} & I\end{pmatrix}
	\\
	& =\det\left[I + \alpha \begin{pmatrix}0 & A \\ A^{*} & 0\end{pmatrix} + (1-\alpha)\begin{pmatrix}0 & B \\ B^{*} & 0\end{pmatrix}\right]\geq \det\begin{pmatrix}I & A \\ A^{*} & I\end{pmatrix}^{\alpha}\det\begin{pmatrix}I & B \\ B^{*} & I\end{pmatrix}^{1-\alpha}
	\\
	& = \det(I-A^{*}A)^{\alpha}\det(I-B^{*}B)^{1-\alpha},
	\end{align*}
	with equality if and only $A=B$, from which the desired result follows.
	\qed
\end{proof}

\begin{theorem} 
	[\textbf{Optimal entropic transport plan and entropic Wasserstein distance - the nonsingular case}]
	\label{theorem:entropic-gaussian-direct}
	Let $\mu_X = \Ncal(m_0, C_0)$, $\mu_Y = \Ncal(m_1, C_1)$, with $\ker(C_0) = \ker(C_1) = \{0\}$. Then
	\begin{align}
	\label{equation:entropic-min-1}
	&\min_{\gamma \in \Joint(\mu_X, \mu_Y)}\left\{\bE_{\gamma}||x-y||^2 + \epsilon \KL\left({\gamma}||{\mu_X \otimes \mu_Y}\right)\right\}
	%\nonumber
	\\
	&= ||m_0-m_1||^2 + \trace(C_0) + \trace(C_1) 
	\nonumber
	\\
	&- \max_{V \in \HS(\H), ||V||< 1}\left\{2\trace(VC_1^{1/2}C_0^{1/2}) +\frac{\epsilon}{2}\log\det(I - V^{*}V)\right\}
	\label{equation:entropic-min-2}
	\\
	& = ||m_0-m_1||^2 + \trace(C_0) + \trace(C_1) - \frac{\ep}{2}\trace(M^{\ep}_{01})+ \frac{\ep}{2}\log\det\left(I + \frac{1}{2}M^{\ep}_{01}\right).
	\nonumber
	\end{align}
	The unique minimizer is $V = \frac{4}{\ep}\left(I + (I + \frac{16}{\ep^2}C_0^{1/2}C_1C_0^{1/2})^{1/2}\right)^{-1}C_0^{1/2}C_1^{1/2}$.
	This corresponds to the unique minimizing Gaussian measure 
	\begin{align}
	\label{equation:minimizing-Gaussian-measure-1}
	\gamma^{\ep} &= \Ncal\left(\begin{pmatrix} m_0 \\ m_1 \end{pmatrix},
	\begin{pmatrix} C_0 & C_{XY}
	\\
	C_{XY}^{*} & C_1\end{pmatrix}
	\right),
	\\
	\text{where  } C_{XY} &= \frac{4}{\ep}C_0^{1/2}\left(I + (I + \frac{16}{\ep^2}C_0^{1/2}C_1C_0^{1/2})^{1/2}\right)^{-1}C_0^{1/2}C_1.
	\end{align}
\end{theorem}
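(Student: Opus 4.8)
The plan is to reduce the minimization over all joint measures to a concave maximization over the single operator parameter $V$ of Theorem~\ref{theorem:CXY-representation}, solve that reduced problem via its first-order condition, and then evaluate everything in closed form using the adjoint-switch lemmas.

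\emph{Reduction to a problem in $V$.} Since $\ker(C_0)=\ker(C_1)=\{0\}$, Proposition~\ref{proposition:entropic-OT-nonsingular} tells us that a minimizer $\gamma^{\ep}$ of \eqref{equation:main-KL} with $c(x,y)=\|x-y\|^{2}$ is a joint Gaussian measure in $\Gauss(\mu_X,\mu_Y)$ equivalent to $\mu_X\otimes\mu_Y$; more generally, for \emph{any} $\gamma\in\Joint(\mu_X,\mu_Y)$ the Minimum-Mutual-Information inequality (Theorem~\ref{theorem:minimum-Mutual-Info-Gaussian}), together with the fact that $\bE_\gamma\|x-y\|^{2}$ depends only on the means and covariance operators (Corollary~\ref{corollary:square-moment-covariance-operator}), shows that the objective is never smaller than its value on the joint Gaussian with the same covariance operator. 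Writing such a joint Gaussian through its cross-covariance $C_{XY}=C_0^{1/2}VC_1^{1/2}$ with $V\in\HS(\H)$, $\|V\|<1$ (Theorem~\ref{theorem:CXY-representation}, where $P_X=P_Y=I$ by nonsingularity), Corollary~\ref{corollary:square-moment-covariance-operator} gives $\bE_\gamma\|x-y\|^{2}=\|m_0-m_1\|^{2}+\trace(C_0)+\trace(C_1)-2\trace(VC_1^{1/2}C_0^{1/2})$ while Theorem~\ref{theorem:MutualInfo-Gaussian} gives $\KL(\gamma\|\mu_X\otimes\mu_Y)=-\tfrac12\log\det(I-V^{*}V)$. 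This already yields the identity between \eqref{equation:entropic-min-1} and \eqref{equation:entropic-min-2}: the problem becomes $\|m_0-m_1\|^{2}+\trace(C_0)+\trace(C_1)-\sup_{V}g(V)$ with $g(V)=2\trace(VD)+\tfrac{\ep}{2}\log\det(I-V^{*}V)$, $D=C_1^{1/2}C_0^{1/2}$.

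\emph{Solving the reduced problem.} Next I would observe that $g$ is strictly concave on the convex set $\{V\in\HS(\H):\|V\|<1\}$: the term $2\trace(VD)$ is linear, and $V\mapsto\log\det(I-V^{*}V)$ is strictly concave — the $\HS$ version of Lemma~\ref{lemma:logdet(I-V*V)-concave}, valid since $V^{*}V\in\Tr(\H)$, so $\det(I-V^{*}V)=\dettwo\left(\begin{smallmatrix}I & V\\ V^{*}& I\end{smallmatrix}\right)$ and $\log\det(I-V^{*}V)=\trace[\log(I+K)-K]$ with $K=\left(\begin{smallmatrix}0 & V\\ V^{*}& 0\end{smallmatrix}\right)\in\HS$, $x\mapsto\log(1+x)-x$ being strictly concave on $(-1,\infty)$ (Klein's inequality). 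Hence $g$ has at most one maximizer, characterized by the vanishing of its G\^ateaux derivative; a routine derivative computation (cyclicity of the trace, self-adjointness of $(I-V^{*}V)^{-1}$, and $\tfrac{d}{dt}\log\det(I+tA)|_{0}=\trace(A)$) yields the first-order condition $\ep(I-V^{*}V)^{-1}V^{*}=2D$, i.e.
\begin{equation}
V=\tfrac{2}{\ep}\,C_0^{1/2}C_1^{1/2}(I-V^{*}V).
\end{equation}
I would then verify that $V_0=\tfrac{2}{\ep}\bigl(I+\tfrac12 M^{\ep}_{01}\bigr)^{-1}C_0^{1/2}C_1^{1/2}=\tfrac{4}{\ep}\bigl(I+(I+\tfrac{16}{\ep^{2}}C_0^{1/2}C_1C_0^{1/2})^{1/2}\bigr)^{-1}C_0^{1/2}C_1^{1/2}$ solves it: with $B=C_0^{1/2}C_1^{1/2}$, $S=(I+\tfrac{16}{\ep^{2}}BB^{*})^{1/2}$, $\widetilde S=(I+\tfrac{16}{\ep^{2}}B^{*}B)^{1/2}$, Lemmas~\ref{lemma:adjont-switch} and \ref{lemma:adjoint-switch-2} give $(I+S)^{-1}B=B(I+\widetilde S)^{-1}$, hence $V_0=\tfrac{4}{\ep}B(I+\widetilde S)^{-1}$, $V_0^{*}V_0=(\widetilde S-I)(I+\widetilde S)^{-1}$, and $I-V_0^{*}V_0=2(I+\widetilde S)^{-1}$; substituting back reproduces $\tfrac{2}{\ep}B(I-V_0^{*}V_0)=V_0$. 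Since $\widetilde S$ is bounded, $I-V_0^{*}V_0\geq 2(1+\|\widetilde S\|)^{-1}I>0$, so $\|V_0\|<1$, and $V_0\in\Tr(\H)\subset\HS(\H)$ as a bounded operator times the trace-class operator $B$. By strict concavity, $V_0$ is the unique maximizer, which gives the uniqueness assertion and, with the reduction above, existence of the minimizer $\gamma^{\ep}$.

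\emph{Closed form and main obstacle.} Finally, writing $A=BB^{*}=C_0^{1/2}C_1C_0^{1/2}$, $M^{\ep}_{01}=S-I$, and using $(I+S)^{-1}A=\tfrac{\ep^{2}}{16}(S-I)$, I get $\trace(V_0D)=\tfrac{4}{\ep}\trace((I+S)^{-1}A)=\tfrac{\ep}{4}\trace(M^{\ep}_{01})$; and $I-V_0^{*}V_0=2(2I+M^{\ep}_{10})^{-1}=(I+\tfrac12 M^{\ep}_{10})^{-1}$ with $\det(I+\tfrac12 M^{\ep}_{10})=\det(I+\tfrac12 M^{\ep}_{01})$ because $B^{*}B=C_1^{1/2}C_0C_1^{1/2}$ and $BB^{*}$ have the same nonzero spectrum (again via Lemma~\ref{lemma:adjont-switch}). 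Hence $\sup_{V}g(V)=g(V_0)=\tfrac{\ep}{2}\trace(M^{\ep}_{01})-\tfrac{\ep}{2}\log\det\bigl(I+\tfrac12 M^{\ep}_{01}\bigr)$, which turns \eqref{equation:entropic-min-2} into the stated formula, and the minimizing Gaussian is the one with mean $\left(\begin{smallmatrix}m_0\\ m_1\end{smallmatrix}\right)$ and cross-covariance $C_{XY}=C_0^{1/2}V_0C_1^{1/2}=\tfrac{4}{\ep}C_0^{1/2}\bigl(I+(I+\tfrac{16}{\ep^{2}}C_0^{1/2}C_1C_0^{1/2})^{1/2}\bigr)^{-1}C_0^{1/2}C_1$. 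The step I expect to be the real obstacle is the variational one: making the $\HS$ extension of Lemma~\ref{lemma:logdet(I-V*V)-concave} and the G\^ateaux-differentiability/global-optimality argument fully rigorous (well-definedness of all determinants, legitimacy of differentiating under the trace, and concluding that a single critical point of a concave functional on an open convex subset of $\HS(\H)$ is the global maximum); the remaining operator-square-root algebra is routine once Lemmas~\ref{lemma:adjont-switch}--\ref{lemma:adjoint-switch-CX} are available.
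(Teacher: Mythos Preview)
Your proposal is correct and follows essentially the same route as the paper: reduce to Gaussians via Proposition~\ref{proposition:entropic-OT-nonsingular}, parametrize by $V$ using Theorems~\ref{theorem:CXY-representation} and~\ref{theorem:MutualInfo-Gaussian} and Corollary~\ref{corollary:square-moment-covariance-operator}, obtain the first-order condition $(I-V^{*}V)^{-1}V^{*}=\tfrac{2}{\ep}C_1^{1/2}C_0^{1/2}$, verify the candidate via the adjoint-switch lemmas, and evaluate using $I-V_0^{*}V_0=(I+\tfrac12 M^{\ep}_{10})^{-1}$.

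The one place you diverge is the concavity step, and the paper's handling of exactly the obstacle you flagged is worth knowing. You try to prove strict concavity of $V\mapsto\log\det(I-V^{*}V)$ directly on $\{V\in\HS(\H):\|V\|<1\}$ via Klein's inequality applied to $\trace[\log(I+K)-K]$. The paper avoids this entirely: it first computes the Fr\'echet derivative on the HS-ball and observes from the stationarity equation $(I-X_0^{*}X_0)^{-1}X_0^{*}=\tfrac{2}{\ep}C_1^{1/2}C_0^{1/2}$ that any critical point is automatically \emph{trace class} (the right side is). It then invokes Lemma~\ref{lemma:logdet(I-V*V)-concave}, which is stated and proved only on $\{X\in\Tr(\H):\|X\|<1\}$ via the Fredholm determinant of the $2\times2$ block and Proposition~\ref{proposition:Fredholm-concave}. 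Strict concavity on the trace-class ball then gives uniqueness among critical points, and since any maximizer on the larger HS-ball must be a critical point, it is the unique maximizer there as well. This sidesteps the need for an HS-level concavity or differentiability argument.
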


\begin{proof}
	[\textbf{of Theorem \ref{theorem:entropic-gaussian-direct}}]
	By Proposition \ref{proposition:entropic-OT-nonsingular},
	a minimizer of Eq.\eqref{equation:entropic-min-1} must necessarily be Gaussian and satisfy $\gamma \sim \mu_X \otimes \mu_Y$.
	By Theorem \ref{theorem:MutualInfo-Gaussian},
	\begin{align*}
	&\bE_{\gamma}||x-y||^2 + \epsilon \KL\left(\gamma||{\mu_X \otimes \mu_Y}\right), \;\;\;\text{with $\gamma \in \Gauss(\mu_X, \mu_Y)$}
	\\
	&= ||m_0-m_1||^2 + \trace(C_0) + \trace(C_1) - 2\trace(C_0^{1/2}VC_1^{1/2}) -\frac{\epsilon}{2}\log\det(I - V^{*}V)
	\\
	& = ||m_0-m_1||^2 + \trace(C_0) + \trace(C_1) - 2\trace(VC_1^{1/2}C_0^{1/2}) -\frac{\epsilon}{2}\log\det(I - V^{*}V),
	\end{align*}
	where $V \in \HS(\H), ||V|| < 1$. It follows that
	\begin{align*}
	&\min_{\gamma \in \Gauss(\mu_X, \mu_Y)}\left\{\bE_{\gamma}||x-y||^2 + \epsilon \KL\left({\gamma}||{\mu_X \otimes \mu_Y}\right)\right\}
	\\
	& = ||m_0 - m_1||^2 + \trace(C_0) + \trace(C_1) 
	\\
	& \quad - \max_{V \in \HS(\H), ||V||< 1}\left\{2\trace(VC_1^{1/2}C_0^{1/2}) +\frac{\epsilon}{2}\log\det(I - V^{*}V)\right\}.
	\end{align*}
	Let $g: \Lcal(\H) \mapto \Sym(\H)$ be defined by $g(X) = X^{*}X$, then
	\begin{align}
	Dg(X_0)(X) = X_0^{*}X + X^{*}X_0, \;\;\; X_0, X \in \Lcal(\H).
	\end{align}
	Let $\Omega = \{X \in \HS(\H), ||X||< 1\}$.
	Let $f:\Omega \mapto \R$ be defined by
	\begin{align}
	\label{equation:f-maximize}
	f(X) = 2\trace(XC_1^{1/2}C_0^{1/2}) + \frac{\ep}{2}\log\det(I-X^{*}X).
	\end{align}
	By the chain rule {\color{black}and Lemmas \ref{lemma:derivative-trace} and \ref{lemma:derivative-logdet}}, we have for $X_0 \in \Omega$, $X \in \HS(\H)$
	\begin{equation}
	\begin{aligned}
	Df(X_0)(X) &= 2\trace(XC_1^{1/2}C_0^{1/2}) - \frac{\ep}{2}\trace[(I-X_0^{*}X_0)^{-1}(X_0^{*}X + X^{*}X_0)]
	\\
	& = 2\trace(XC_1^{1/2}C_0^{1/2}) -\ep \trace[(I-X_0^{*}X_0)^{-1}X_0^{*}X].
	\end{aligned}
	\end{equation}
	Thus $Df(X_0)(X) = 0$ $\forall X \in \HS(\H)$ if and only if
	\begin{equation}
	\label{equation:X0-0}
	(I-X_0^{*}X_0)^{-1}X_0^{*} = \frac{2}{\ep}C_1^{1/2}C_0^{1/2}.
	\end{equation}
	Since the right hand side of Eq.\eqref{equation:X0-0} is trace class, any solution $X_0$ must necessarily satisfy
	$X_0 \in \Tr(\H)$. By Lemma \ref{lemma:logdet(I-V*V)-concave}, the function $f$, as defined in 
	Eq.\eqref{equation:f-maximize}, is strictly concave in the set $\Omega_2 = \{X \in \Tr(\H), ||X||<1\}$,
	since the first term is linear in $X$. Thus any solution of Eq.\eqref{equation:X0-0}
	is necessarily unique and is the unique maximizer of $f$ on the larger set $\Omega = \{X \in \HS(\H), ||X||<1\}$,
	corresponding therefore to the unique minimizer of Eq.\eqref{equation:entropic-min-1}.
	
	We claim that, with $c_{\ep} =\frac{4}{\ep}$, the unique solution of Eq.\eqref{equation:X0-0} in $\Omega_2$ is
	\begin{align}
	X_0 = c_{\ep}(I + (I + c_{\ep}^2C_0^{1/2}C_1C_0^{1/2})^{1/2})^{-1}C_0^{1/2}C_1^{1/2}.
	\label{equation:X0-1}
	\end{align}
	Clearly $X_0 \in \Tr(\H)$.
	By Lemma \ref{lemma:adjoint-switch-2}, we also have
	\begin{align}
	\label{equation:X0-2}
	X_0 =c_{\ep} C_0^{1/2}C_1^{1/2}(I + (I + c_{\ep}^2C_1^{1/2}C_0C_1^{1/2})^{1/2})^{-1}.
	\end{align}
	%\begin{align*}
	%X_0^{*}X_0 = c_{\ep}^2C_1^{1/2}C_0^{1/2}\left(I + \left(I + c_{\ep}^2C_0^{1/2}C_1C_0^{1/2}\right)^{1/2}\right)^{-2}C_0^{1/2}C_1^{1/2}.
	%\end{align*}
	We first show that with the above expression for $X_0$,
	\begin{equation}
	\label{equation:I-X*X}
	I-X_0^{*}X_0 = \left(\frac{1}{2}I + \frac{1}{2}(I + c_{\ep}^2C_1^{1/2}C_0C_1^{1/2})^{1/2}\right)^{-1}. 
	\end{equation}
	This gives $||X_0||^2 = ||X_0^{*}X_0|| < 1$. Eq.\eqref{equation:I-X*X} is 
	%This is 
	equivalent to
	\begin{align*}
	&X_0^{*}X_0 = I - \left(\frac{1}{2}I + \frac{1}{2}(I + c_{\ep}^2C_1^{1/2}C_0C_1^{1/2})^{1/2}\right)^{-1}
	\\
	&= \left(-I + (I + c_{\ep}^2C_1^{1/2}C_0C_1^{1/2})^{1/2}\right)
	\left(I + (I + c_{\ep}^2C_1^{1/2}C_0C_1^{1/2})^{1/2}\right)^{-1}
	\\
	& = c_{\ep}^2C_1^{1/2}C_0C_1^{1/2}\left(I + (I + c_{\ep}^2C_1^{1/2}C_0C_1^{1/2})^{1/2}\right)^{-2}
	\\
	& = c_{\ep}^2C_1^{1/2}C_0^{1/2}\left(I + (I + c_{\ep}^2C_0^{1/2}C_1C_0^{1/2})^{1/2}\right)^{-1}C_0^{1/2}C_1^{1/2}\left(I + (I + c_{\ep}^2C_1^{1/2}C_0C_1^{1/2})^{1/2}\right)^{-1}
	%\\
	%& = X_0^{*}X_0
	\end{align*}
by Lemma \ref{lemma:adjoint-switch-2}.
	The last equality is valid by Eqs.\eqref{equation:X0-1} and \eqref{equation:X0-2}.
	It follows from Eq.\eqref{equation:I-X*X}, by invoking Lemma \ref{lemma:adjoint-switch-2}, that
	\begin{align*}
	&(I-X_0^{*}X_0)^{-1}X_0^{*} = \frac{4}{\ep}\left(\frac{1}{2}I + \frac{1}{2}(I + c_{\ep}^2C_1^{1/2}C_0C_1^{1/2})^{1/2}\right)C_1^{1/2}C_0^{1/2}
	\\
	&\quad \quad \quad \quad \quad \quad \quad \quad \quad \times \left(I + (I + c_{\ep}^2C_0^{1/2}C_1C_0^{1/2})^{1/2}\right)^{-1}
	\\
	& = \frac{2}{\ep}C_1^{1/2}C_0^{1/2}\left[I + c_{\ep}^2(I + c_{\ep}^2C_0^{1/2}C_1C_0^{1/2})^{1/2}\right]
	{\color{black}\left(I + \left(I + c_{\ep}^2C_0^{1/2}C_1C_0^{1/2}\right)^{1/2}\right)^{-1}}
	%(I + \left(I + c_{\ep}^2C_0^{1/2}C_1C_0^{1/2})^{1/2}\right)^{-1}
	\\
	& = \frac{2}{\ep}C_1^{1/2}C_0^{1/2}, \;\;\text{which is \eqref{equation:X0-0}}.
	\end{align*}
	With $X_0$ as given in Eq.\eqref{equation:X0-1}, we obtain the unique minimizing Gaussian measure
	$\gamma^{\ep}$ of Eq.\eqref{equation:entropic-min-1}, with $C_{XY} = C_0^{1/2}X_0C_1^{1/2}$.
	Furthermore, 
	\begin{align*}
	\trace[X_0C_1^{1/2}C_0^{1/2}] &= \trace\left[c_{\ep}(I + (I + c_{\ep}^2C_0^{1/2}C_1C_0^{1/2})^{1/2})^{-1}C_0^{1/2}C_1C_0^{1/2}\right]
	\\
	& = \frac{1}{c_{\ep}}\trace\left[-I + (I + c_{\ep}^2C_0^{1/2}C_1C_0^{1/2})^{1/2}\right] = \frac{\ep}{4}\trace[M^{\ep}_{01}],
	\\
	%\end{align*}
	%\begin{align*}
	\log\det(I-X_0^{*}X_0) &= -\log\det(\frac{1}{2}I + \frac{1}{2}(I + c_{\ep}^2C_1^{1/2}C_0C_1^{1/2})^{1/2}) 
	\\
	&= -\log\det(I+ \frac{1}{2}M^{\ep}_{10}) = -\log\det(I+ \frac{1}{2}M^{\ep}_{01}).
	\end{align*}
	Combining the last two expressions gives the entropic distance formula.
	\qed
	%
	%The second term in the above expression is
	%\begin{align*}
	%&\max_{V \in \HS(\H), ||V||< 1}\left\{2\trace(VK_2^{1/2}K_1^{1/2}) +\frac{\epsilon}{2}\log\det(I - V^{*}V)\right\}
	%\\
	%&= \max_{S \in \Tcal_I}\max_{V \in \HS(\H), I-V^{*}V = S}\left\{2\trace(VK_2^{1/2}K_1^{1/2}) +\frac{\epsilon}{2}\log\det(I - V^{*}V)\right\}
	%\\
	%& = \max_{S \in \Tcal_I}\left\{2 \trace[K_1^{1/2}K_2^{1/2}(I - S)K_2^{1/2}K_1^{1/2}]^{1/2} + \frac{\epsilon}{2}\log\det(S)\right\},
	%\end{align*}
	%where the last equality follows from Lemma \ref{lemma:max-trace-2}.
\end{proof}

Let us now compute the Radon-Nikodym density of the optimal entropic transport plan $\gamma^{\ep}$
in Theorem \ref{theorem:entropic-gaussian-direct}
with respect to $\mu_0 \otimes \mu_1$. We note that the optimal $V$ in Theorem \ref{theorem:entropic-gaussian-direct}
is trace class. Consequently, we can apply the following result (see Proposition 1.3.11 in \cite{DaPrato:PDEHilbert}
or Corollary 2 in \cite{Minh:2020regularizedDiv}). 
\begin{proposition}
	\label{proposition:Radon-Nikodym-traceclass}
	Let $\mu = \Ncal(m,Q)$, $\nu = \Ncal(m, R)$, with $\ker{Q} = \{0\}$ and $\mu \sim \nu$.
	Assume that $R = Q^{1/2}(I-S)Q^{1/2}$ with $S\in \Sym(\H)\cap\Tr(\H)$. Then
	\begin{align}
	\frac{d\nu}{d\mu}(x) = \det[(I-S)^{-1/2}]\exp\left\{-\frac{1}{2}\la Q^{-1/2}(x-m), S(I-S)^{-1}Q^{-1/2}(x-m)\right\}
	\end{align}
	where, in the $\Lcal^1(\H, \mu)$ sense,
	\begin{align}
	&\la Q^{-1/2}(x-m), S(I-S)^{-1}Q^{-1/2}(x-m)\ra 
	\nonumber
	\\
	&\doteq \lim_{N \approach \infty}
	\la Q^{-1/2}P_N(x-m), S(I-S)^{-1}Q^{-1/2}P_N(x-m)\ra.
	\end{align}
\end{proposition}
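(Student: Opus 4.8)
The plan is to deduce this directly from Theorem~\ref{theorem:radon-nikodym-infinite}, specialized to the equal-mean case, using the trace-class hypothesis on $S$ to collapse the Hilbert--Carleman-type series appearing there into a genuine Fredholm determinant times a single quadratic form in the white noise variables. No separate argument is really needed beyond this specialization; alternatively one may simply invoke Proposition~1.3.11 in~\cite{DaPrato:PDEHilbert} or Corollary~2 in~\cite{Minh:2020regularizedDiv}, which is the route the paper takes.

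First I would apply Theorem~\ref{theorem:radon-nikodym-infinite} with $m_1 = m_2 = m$. Let $\{\alpha_k\}_{k \in \Nbb}$ be the eigenvalues of $S$ with orthonormal eigenvectors $\{\phi_k\}_{k\in\Nbb}$. The prefactor $\exp[-\tfrac12\|(I-S)^{-1/2}Q^{-1/2}(m_2-m_1)\|^2]$ becomes $1$ and the term linear in $W_{\phi_k}$ inside each $\Phi_k$ disappears, leaving $\Phi_k = \tfrac{\alpha_k}{1-\alpha_k}W_{\phi_k}^2 + \log(1-\alpha_k)$ and hence
\begin{align}
\frac{d\nu}{d\mu}(x) = \exp\left[-\frac{1}{2}\sum_{k=1}^{\infty}\left(\frac{\alpha_k}{1-\alpha_k}W_{\phi_k}^2(x) + \log(1-\alpha_k)\right)\right]
\end{align}
in the $\Lcal^1(\H,\mu)$ sense. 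Next, since $I-S>0$ and $S$ is compact, every $\alpha_k<1$, and since $S\in\Tr(\H)$ we have $\sum_k|\alpha_k|<\infty$; thus $\sum_k\log(1-\alpha_k)$ converges to $\log\det(I-S)$ (Fredholm determinant, strictly positive), which contributes the factor $\exp[-\tfrac12\log\det(I-S)] = \det[(I-S)^{-1/2}]$.

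Then I would handle the quadratic part. The operator $S(I-S)^{-1}$ is self-adjoint and trace class, with eigenvalues $\tfrac{\alpha_k}{1-\alpha_k}$ and eigenvectors $\phi_k$; the white-noise isometry gives $\int_\H W_{\phi_k}^2\,d\mu = \|\phi_k\|^2 = 1$, and $\sum_k \tfrac{|\alpha_k|}{1-\alpha_k}$ is bounded by a constant times $\sum_k|\alpha_k|<\infty$, so $\sum_k \tfrac{\alpha_k}{1-\alpha_k}W_{\phi_k}^2$ converges absolutely in $\Lcal^1(\H,\mu)$. It then remains to identify this $\Lcal^1$-limit with the object $\la Q^{-1/2}(x-m), S(I-S)^{-1}Q^{-1/2}(x-m)\ra$ as \emph{defined} in the statement, namely the $\Lcal^1(\H,\mu)$-limit of $\la Q^{-1/2}P_N(x-m), S(I-S)^{-1}Q^{-1/2}P_N(x-m)\ra$ over the finite-rank $Q$-eigenprojections $P_N$. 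Using $W_z(x)=\lim_N\la x-m,Q^{-1/2}P_Nz\ra$ in $\Lcal^2(\H,\mu)$, the boundedness and self-adjointness of each $Q^{-1/2}P_N$, and $P_N\to I$ strongly, I would expand the $N$-th term in the basis $\{\phi_k\}$ and control the tails with the trace-class bound above to obtain convergence in $\Lcal^1(\H,\mu)$ to $\sum_k\tfrac{\alpha_k}{1-\alpha_k}W_{\phi_k}^2$. Combining the three pieces yields the asserted formula.

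The hard part will be precisely this last identification: it requires reconciling two different truncations — the spectral sum over $S$-eigenvectors coming from Theorem~\ref{theorem:radon-nikodym-infinite} and the $Q$-eigenprojection cutoffs $P_N$ used to give meaning to the quadratic form — inside $\Lcal^1(\H,\mu)$, where $Q^{-1/2}$ is unbounded and $W_z$ is not a pointwise-defined function for general $z$. A cleaner alternative, if one wishes to avoid this bookkeeping, is to work with the $n$-dimensional push-forwards of $\mu$ and $\nu$ under $P_n$, for which $d\nu_n/d\mu_n$ is the ratio of two explicit Gaussian densities on $\R^n$, and then pass to the limit $n\to\infty$ using the known convergence of the approximating densities to $d\nu/d\mu$; this is essentially the content of the cited results.
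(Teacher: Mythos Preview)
Your proposal is correct, and you have already identified exactly what the paper does: it states this proposition without proof and simply cites Proposition~1.3.11 in \cite{DaPrato:PDEHilbert} and Corollary~2 in \cite{Minh:2020regularizedDiv}. Your derivation from Theorem~\ref{theorem:radon-nikodym-infinite} in the equal-mean, trace-class case is the natural way to recover it from material already in the paper, and your diagnosis of the one nontrivial step (matching the $S$-eigenvector expansion with the $Q$-eigenprojection truncation $P_N$ in $\Lcal^1(\H,\mu)$) is accurate.
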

The following result can be obtained by direct verification.
\begin{lemma}
	\label{lemma:operator-block-inverse}
	Let $B,C \in \Lcal(\H)$ be such that $(I-BC)$ is invertible, then the 
	block operator $\begin{pmatrix}
	I \; &\;  B 
	\\
	C\; & \; I
	\end{pmatrix}: \H \times \H \mapto \H \times \H$ is invertible, with
	\begin{align}
	\begin{pmatrix}
	I \; &\;  B 
	\\
	C\; & \; I
	\end{pmatrix}^{-1} = 
	\begin{pmatrix}
	(I-BC)^{-1} \; & \; -(I-BC)^{-1}B\\
	-C(I-BC)^{-1} \; & \; I + C (I-BC)^{-1}B
	\end{pmatrix} \in \Lcal(\H \times \H).
	\end{align}
\end{lemma}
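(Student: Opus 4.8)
The plan is to verify directly that the operator
\[
M = \begin{pmatrix} (I-BC)^{-1} & -(I-BC)^{-1}B \\ -C(I-BC)^{-1} & I + C(I-BC)^{-1}B \end{pmatrix}
\]
is a genuine two-sided inverse of $\begin{pmatrix} I & B \\ C & I \end{pmatrix}$ on $\H \times \H$. First I would observe that since $I-BC$ is invertible, $(I-BC)^{-1} \in \Lcal(\H)$, so each of the four blocks of $M$ lies in $\Lcal(\H)$ and $M$ is a well-defined bounded operator on $\H \times \H$; this disposes of any well-definedness concern before the algebra begins.

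Next I would compute the product $\begin{pmatrix} I & B \\ C & I \end{pmatrix} M$ blockwise. The top-left block is $(I-BC)^{-1} - BC(I-BC)^{-1} = (I-BC)(I-BC)^{-1} = I$; the top-right block is $-(I-BC)^{-1}B + B + BC(I-BC)^{-1}B = B - (I-BC)(I-BC)^{-1}B = 0$; the bottom-left block is $C(I-BC)^{-1} - C(I-BC)^{-1} = 0$; and the bottom-right block is $-C(I-BC)^{-1}B + I + C(I-BC)^{-1}B = I$. Hence $\begin{pmatrix} I & B \\ C & I \end{pmatrix} M = I_{\H\times\H}$. I would then carry out the symmetric computation for $M \begin{pmatrix} I & B \\ C & I \end{pmatrix}$, where the same cancellations — now using $(I-BC)^{-1}(I-BC) = I$ applied on the appropriate side — collapse the four blocks to $I$, $0$, $0$, $I$, giving $M\begin{pmatrix} I & B \\ C & I \end{pmatrix} = I_{\H\times\H}$ as well.

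The only point requiring any attention — and it is the closest thing to an obstacle — is that in infinite dimensions a one-sided inverse need not be two-sided, so both products genuinely have to be checked rather than inferring one from the other; the identity $I + C(I-BC)^{-1}B = (I-CB)^{-1}$ is implicitly at work in the bottom-right block but is not needed explicitly. Since each of the two verifications is a short manipulation using only the defining relation of $(I-BC)^{-1}$, no real difficulty arises, and once both identities are established the lemma follows: $\begin{pmatrix} I & B \\ C & I \end{pmatrix}$ is invertible with inverse $M$.
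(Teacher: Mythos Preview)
Your proposal is correct and is precisely the ``direct verification'' the paper alludes to; the paper itself does not spell out the block computations but simply notes that the result follows by direct verification. Your extra remark that both one-sided products must be checked in infinite dimensions is a sound point of care beyond what the paper makes explicit.
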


\begin{proposition}
	\label{proposition:Radon-Nikodym-minimizer}. Assume the hypothesis of Theorem \ref{theorem:entropic-gaussian-direct}.
	The Radon-Nikodym density of the optimal entropic transport plan $\gamma^{\ep}$ with respect to $\mu_0\otimes \mu_1$ is
	% given by
	\begin{align}
	\label{equation:Radon-Nikodym-minimizer-1}
	\frac{d\gamma^{\ep}}{d(\mu_0 \otimes \mu_1)}(x,y) &=\sqrt{\det\left(I + \frac{1}{2}M^{\ep}_{01}\right)}
	\exp(\la x - m_0, A_{\ep}(x-m_0)\ra)
	%\nonumber
	\\
	&\quad \times \exp(\la y-m_1, B_{\ep}(y-m_1))\exp\left(\frac{2}{\ep}\la x-m_0, y-m_1\ra\right).
	\nonumber
	\end{align}
	Here
	$A_{\ep} = -\frac{2}{\ep^2}C_1^{1/2}\left(I + \frac{1}{2}M^{\ep}_{10}\right)^{-1}C_1^{1/2}$,
	$B_{\ep} =-\frac{2}{\ep^2}C_0^{1/2}\left(I + \frac{1}{2}M^{\ep}_{01}\right)^{-1}C_0^{1/2}$.
\end{proposition}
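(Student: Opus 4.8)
The plan is to apply Proposition~\ref{proposition:Radon-Nikodym-traceclass} (on $\H \times \H$) to $\mu = \mu_0 \otimes \mu_1$ and $\nu = \gamma^{\ep}$, which share the mean $\begin{pmatrix} m_0 \\ m_1\end{pmatrix}$ and have covariance operators $Q = \Gamma_0 = \begin{pmatrix} C_0 & 0 \\ 0 & C_1\end{pmatrix}$ and $R = \Gamma = \begin{pmatrix} C_0 & C_{XY} \\ C_{XY}^{*} & C_1\end{pmatrix}$, and then to simplify the closed form it produces. Writing $V = X_0$ for the optimal operator of Theorem~\ref{theorem:entropic-gaussian-direct}, so $C_{XY} = C_0^{1/2} V C_1^{1/2}$, Eq.~\eqref{equation:COV-joint} gives $R = Q^{1/2}(I - S)Q^{1/2}$ with $S = -\begin{pmatrix} 0 & V \\ V^{*} & 0\end{pmatrix}$. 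Since $V = X_0 \in \Tr(\H)$ (Theorem~\ref{theorem:entropic-gaussian-direct}), Lemma~\ref{lemma:det2VV} gives $S \in \Sym(\H \times \H) \cap \Tr(\H \times \H)$; together with $\ker(Q) = \{0\}$ and $\gamma^{\ep} \sim \mu_0 \otimes \mu_1$ (Proposition~\ref{proposition:entropic-OT-nonsingular}), all hypotheses of Proposition~\ref{proposition:Radon-Nikodym-traceclass} hold, yielding
\begin{align*}
\frac{d\gamma^{\ep}}{d(\mu_0 \otimes \mu_1)}(x,y) = \det(I-S)^{-1/2}\exp\left(-\frac{1}{2}\la Q^{-1/2}(z-m), S(I-S)^{-1}Q^{-1/2}(z-m)\ra\right),
\end{align*}
with $z = (x,y)$, $m = (m_0, m_1)$, the quadratic form read in the $\Lcal^1$ sense via finite-rank projections.

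First I would handle the scalar prefactor. Since $I - S = \begin{pmatrix} I & V \\ V^{*} & I\end{pmatrix}$, Lemma~\ref{lemma:det2VV} gives $\det(I-S) = \det(I - V^{*}V)$, and Eq.~\eqref{equation:I-X*X} from the proof of Theorem~\ref{theorem:entropic-gaussian-direct} identifies $I - V^{*}V = I - X_0^{*}X_0 = (I + \tfrac{1}{2}M^{\ep}_{10})^{-1}$; because $C_1^{1/2}C_0 C_1^{1/2}$ and $C_0^{1/2}C_1 C_0^{1/2}$ have the same nonzero eigenvalues, $\det(I + \tfrac12 M^{\ep}_{10}) = \det(I + \tfrac12 M^{\ep}_{01})$, so $\det(I-S)^{-1/2} = \sqrt{\det(I + \tfrac12 M^{\ep}_{01})}$. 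Next, Lemma~\ref{lemma:operator-block-inverse} (with $B = V$, $C = V^{*}$) and the identity $I + VV^{*}(I-VV^{*})^{-1} = (I-VV^{*})^{-1}$ give
\begin{align*}
S(I-S)^{-1} = \begin{pmatrix} VV^{*}(I-VV^{*})^{-1} & -(I-VV^{*})^{-1}V \\ -(I-V^{*}V)^{-1}V^{*} & (I-V^{*}V)^{-1}V^{*}V\end{pmatrix}.
\end{align*}
Pairing this against $Q^{-1/2}(z-m) = (C_0^{-1/2}(x-m_0), C_1^{-1/2}(y-m_1))$, and using self-adjointness plus the push-through identity $V(I-V^{*}V)^{-1} = (I-VV^{*})^{-1}V$ to merge the two off-diagonal terms, the form equals $\la u, VV^{*}(I-VV^{*})^{-1}u\ra - 2\la u, (I-VV^{*})^{-1}Vw\ra + \la w, (I-V^{*}V)^{-1}V^{*}Vw\ra$ with $u = C_0^{-1/2}(x-m_0)$, $w = C_1^{-1/2}(y-m_1)$.

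It then remains to identify the three sandwiched operators. Since $V$, $VV^{*}$ and $(I + c_{\ep}^2 C_0^{1/2}C_1 C_0^{1/2})^{1/2}$ commute appropriately, one computes $VV^{*}(I-VV^{*})^{-1} = \tfrac12 M^{\ep}_{01}$, $(I-V^{*}V)^{-1}V^{*}V = \tfrac12 M^{\ep}_{10}$, and $(I-VV^{*})^{-1}V = \tfrac{c_{\ep}}{2}C_0^{1/2}C_1^{1/2}$; Lemma~\ref{lemma:adjoint-switch-CX} then turns $C_0^{-1/2}M^{\ep}_{01}C_0^{-1/2}$ into $\tfrac{c_{\ep}^2}{2}C_1^{1/2}(I + \tfrac12 M^{\ep}_{10})^{-1}C_1^{1/2}$ and $C_1^{-1/2}M^{\ep}_{10}C_1^{-1/2}$ into $\tfrac{c_{\ep}^2}{2}C_0^{1/2}(I + \tfrac12 M^{\ep}_{01})^{-1}C_0^{1/2}$, while $C_0^{-1/2}(I-VV^{*})^{-1}V C_1^{-1/2} = \tfrac{c_{\ep}}{2}I = \tfrac{2}{\ep}I$. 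Substituting back and multiplying by $-\tfrac12$ gives $\la x-m_0, A_{\ep}(x-m_0)\ra + \la y-m_1, B_{\ep}(y-m_1)\ra + \tfrac{2}{\ep}\la x-m_0, y-m_1\ra$ with $A_{\ep} = -\tfrac{2}{\ep^2}C_1^{1/2}(I+\tfrac12 M^{\ep}_{10})^{-1}C_1^{1/2}$ and $B_{\ep} = -\tfrac{2}{\ep^2}C_0^{1/2}(I+\tfrac12 M^{\ep}_{01})^{-1}C_0^{1/2}$, which is the claim.

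The hard part will be the rigor of the limiting step: a priori the quadratic form in Proposition~\ref{proposition:Radon-Nikodym-traceclass} involves the unbounded operators $C_i^{-1/2}$ and only exists as an $\Lcal^1(\mu_0 \otimes \mu_1)$-limit of its finite-rank truncations. The identities above show that $Q^{-1/2}S(I-S)^{-1}Q^{-1/2}$ nonetheless extends to a \emph{bounded} operator on $\H \times \H$ (its diagonal blocks $-2A_{\ep}, -2B_{\ep}$ are trace class by Lemma~\ref{lemma:adjoint-switch-CX}, its off-diagonal blocks are $-\tfrac{2}{\ep}I$), so that $\la Q^{-1/2}P_N(z-m), S(I-S)^{-1}Q^{-1/2}P_N(z-m)\ra \to \la z-m, Q^{-1/2}S(I-S)^{-1}Q^{-1/2}(z-m)\ra$ pointwise and, since this dominating function is $\mu_0\otimes\mu_1$-integrable, also in $\Lcal^1$. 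Once this is justified, the remainder is bookkeeping with the commutation Lemmas~\ref{lemma:adjont-switch}, \ref{lemma:adjoint-switch-2} and \ref{lemma:adjoint-switch-CX}.
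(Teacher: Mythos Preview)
Your approach is essentially the paper's: apply Proposition~\ref{proposition:Radon-Nikodym-traceclass}, compute $S(I-S)^{-1}$ via Lemma~\ref{lemma:operator-block-inverse}, and simplify the block entries using the identities for $V$, $I-V^{*}V$, $I-VV^{*}$, $(I-VV^{*})^{-1}V$ established in the proof of Theorem~\ref{theorem:entropic-gaussian-direct}. The only difference is in the limiting step you flag at the end: the paper sidesteps it by recording directly the factorization $S(I-S)^{-1} = -Q^{1/2}\begin{pmatrix} 2A_{\ep} & \tfrac{c_{\ep}}{2}I \\ \tfrac{c_{\ep}}{2}I & 2B_{\ep}\end{pmatrix}Q^{1/2}$ (this is your identity $\tfrac12 M^{\ep}_{01} = -2C_0^{1/2}A_{\ep}C_0^{1/2}$, etc., read in the opposite direction), so that $\la Q^{-1/2}P_N(z-m), S(I-S)^{-1}Q^{-1/2}P_N(z-m)\ra = -\la P_N(z-m), T\,P_N(z-m)\ra$ via $Q^{1/2}Q^{-1/2}P_N = P_N$, and the pointwise limit follows immediately from boundedness of $T$ with no dominated-convergence argument needed.
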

Eq.\eqref{equation:Radon-Nikodym-minimizer-1} 
%in Proposition \ref{proposition:Radon-Nikodym-minimizer}
is equivalent to
Eq.\eqref{equation:gamma-opt-gauss} in Theorem \ref{theorem:optimal-joint-square-gauss} 
via the identity
\begin{align}
||x-y||^2 
%= ||(x-m_0) - (y-m_1) + (m_0-m_1)||^2 
%\\
%&
&= ||x-m_0||^2 + ||y-m_1||^2 + ||m_0-m_1||^2 - 2\la x-m_0, y-m_1\ra 
\nonumber
\\
&\quad+ 2\la x - m_0, m_0 - m_1\ra - 2\la y-m_1, m_0 - m_1\ra.
\end{align}
\begin{proof}
	For $||V||< 1$, $(I-VV^{*})$ is invertible
	and by Lemma \ref{lemma:operator-block-inverse},
	\begin{align*}
	\begin{pmatrix}
	I \; & \; V\\
	V^{*}\; & \; I
	\end{pmatrix}^{-1} = \begin{pmatrix}
	(I-VV^{*})^{-1} \; & \; -(I-VV^{*})^{-1}V\\
	-V^{*}(I-VV^{*})^{-1} \; & \; I + V^{*}(I-VV^{*})^{-1}V
	\end{pmatrix}.
	\end{align*}
	By the identity $V^{*}(I-VV^{*})^{-1} = (I-V^{*}V)^{-1}V^{*}$,
	% which can be directly verified,
	\begin{align*}
	\begin{pmatrix}
	0 \; & \; V\\
	V^{*}\; & \; 0
	\end{pmatrix}
	\begin{pmatrix}
	I \; & \; V\\
	V^{*}\; & \; I
	\end{pmatrix}^{-1} = \begin{pmatrix}
	-VV^{*}(I-VV^{*})^{-1} \; & \; (I-VV^{*})^{-1}V\\
	V^{*}(I-VV^{*})^{-1} \; & \; -V^{*}(I-VV^{*})^{-1}V
	\end{pmatrix}
	\\
	=\begin{pmatrix}
	-(I-VV^{*})^{-1}VV^{*} \; & \; (I-VV^{*})^{-1}V\\
	(I-V^{*}V)^{-1}V^{*} \; & \; -(I-V^{*}V)^{-1}V^{*}V
	\end{pmatrix}.
	\end{align*}
	For $V = c_{\ep}(I + (I + c_{\ep}^2C_0^{1/2}C_1C_0^{1/2})^{1/2})^{-1}C_0^{1/2}C_1^{1/2} = \frac{c_{\ep}}{2}(I+\frac{1}{2}M_{01}^{\ep})^{-1}C_0^{1/2}C_1^{1/2}$, where $c_{\ep} = \frac{4}{\ep}$, we have $V \in \Tr(\H)$.
	Similar to the proof of Theorem \ref{theorem:entropic-gaussian-direct},
	\begin{align*}
	%\end{align*}
	%Similar to Eq.\eqref{equation:I-X*X}, we have
	%\begin{align*}
	I-V^{*}V &= \left(\frac{1}{2}I + \frac{1}{2}(I + c_{\ep}^2C_1^{1/2}C_0C_1^{1/2})^{1/2}\right)^{-1} = \left(I+\frac{1}{2}M^{\ep}_{10}\right)^{-1},
	\\
	I-VV^{*} &= \left(\frac{1}{2}I + \frac{1}{2}(I + c_{\ep}^2C_0^{1/2}C_1C_0^{1/2})^{1/2}\right)^{-1} = \left(I+\frac{1}{2}M^{\ep}_{01}\right)^{-1},
	\\
	(I-V^{*}V)^{-1}V^{*} &=\frac{c_{\ep}}{2}C_1^{1/2}C_0^{1/2},
	\;\;\;
	(I-VV^{*})^{-1}V = \frac{c_{\ep}}{2}C_0^{1/2}C_1^{1/2},
	\\
	(I-VV^{*})^{-1}VV^{*} &= \frac{1}{4}c_{\ep}^2C_0^{1/2}C_1^{1/2}\left(I + \frac{1}{2}M^{\ep}_{10}\right)^{-1}C_1^{1/2}C_0^{1/2},
	\\
	(I-V^{*}V)^{-1}V^{*}V &= \frac{1}{4}c_{\ep}^2C_1^{1/2}C_0^{1/2}\left(I + \frac{1}{2}M^{\ep}_{01}\right)^{-1}C_0^{1/2}C_1^{1/2}.
	\end{align*}
	Since $V \in \Tr(\H)$, $S = -\begin{pmatrix}0 & V \\ V^{*} & 0\end{pmatrix} \in \Tr(\H\times \H)$ by Lemma 
	\ref{lemma:det2VV}, and 
	\begin{align*}
	\det(I-S)
	% = \det\begin{pmatrix}I & V \\ V^{*} & I\end{pmatrix} = 
	=\det(I-V^{*}V) = \left[\det(I+\frac{1}{2}M^{\ep}_{10})\right]^{-1} = \left[\det(I+\frac{1}{2}M^{\ep}_{01})\right]^{-1},
	\end{align*}
	since $C_0^{1/2}C_1C_0^{1/2}$ and $C_1^{1/2}C_0C_1^{1/2}$ have the same eigenvalues.
	Let $Q = \begin{pmatrix}
	C_0 & 0\\
	0 & C_1
	\end{pmatrix}$, 
	$A_{\ep} = -\frac{c_{\ep}^2}{8}C_1^{1/2}\left(I + \frac{1}{2}M^{\ep}_{10}\right)^{-1}C_1^{1/2}$,
	$B_{\ep} =-\frac{c_{\ep}^2}{8}C_0^{1/2}\left(I + \frac{1}{2}M^{\ep}_{01}\right)^{-1}C_0^{1/2}$,
	then
	\begin{align*}
	S(I-S)^{-1} = -
	%\begin{pmatrix}
	%C_0^{1/2} & 0\\
	%0 & C_1^{1/2}
	%\end{pmatrix}
	Q^{1/2}
	\begin{pmatrix}
	2A_{\ep}& \; \frac{1}{2}c_{\ep}I 
	\\
	\frac{1}{2}c_{\ep}I & 2B_{\ep}
	\end{pmatrix}
	Q^{1/2}.
	%\begin{pmatrix}
	%C_0^{1/2} & 0\\
	%0 & C_1^{1/2}
	%\end{pmatrix}
	\end{align*}
	Thus 
	%for $Q = \begin{pmatrix}
	%C_0 & 0\\
	%0 & C_1
	%\end{pmatrix}$, 
	for any $x,y \in \H$ and $N \in \Nbb$,
	\begin{align*}
	&\lim_{N \approach \infty}\left\la Q^{-1/2}P_N\begin{pmatrix}x - m_0\\y- m_1\end{pmatrix}, S(I-S)^{-1}Q^{-1/2}P_N\begin{pmatrix}x - m_0\\y- m_1\end{pmatrix}\right \ra
	\\
	& = -\left\la \begin{pmatrix}x - m_0\\y- m_1\end{pmatrix}, \begin{pmatrix}
	2A_{\ep}& \; \frac{1}{2}c_{\ep}I 
	\\
	\frac{1}{2}c_{\ep}I & 2B_{\ep}
	\end{pmatrix}\begin{pmatrix}x - m_0\\y- m_1\end{pmatrix}\right \ra
	\\
	&=
	-2\la x-m_0, A_{\ep}(x-m_0)\ra
	-c_{\ep}\la x-m_0, y-m_1\ra 
	-2\la y-m_1, B_{\ep}(y-m_1)\ra.
	\end{align*}
	Combining this with $\det(I-S)$ and Proposition \ref{proposition:Radon-Nikodym-traceclass} gives the desired result.
	\qed
\end{proof}

{\bf The general case}. We note that while Theorem \ref{theorem:entropic-gaussian-direct} and Proposition 
\ref{proposition:Radon-Nikodym-minimizer} are proved under the hypothesis that $\ker(C_0) = \ker(C_1) = \{0\}$,
the optimal solution obtained is clearly mathematically valid without this assumption. We now confirm that this is indeed the case via a different approach.

Let $k(x,y) = \exp\left(-\frac{c(x,y)}{\ep}\right)$, $\ep > 0$. 
It has been shown that, see e.g.
\cite{BorLewNus94,Csi75,DMaGer19,GigTam18,RusIPFP},
problem \eqref{equation:main-KL} has a unique minimizer $\gamma^{\ep}$ if and only if there exist functions $\alpha^{\ep}, \beta^{\ep}$ satisfying the {\it Schr\"odinger system}
\begin{equation}
\begin{aligned}
\alpha^{\ep}(x)\bE_{\mu_1}[\beta^{\ep}(y)k(x,y)] = 1,
\\
\beta^{\ep}(y)\bE_{\mu_0}[\alpha^{\ep}(x)k(x,y)] = 1.
\end{aligned}
\label{equation:Schrodinger-1}
\end{equation}
In this case, the unique minimizer $\gamma^{\ep}$ is the probability measure whose Radon-Nikodym derivative with respect to $\mu_0 \otimes \mu_1$ is given by
\begin{align}
\label{equation:gamma-Radon-Nikodym}
\frac{d\gamma^{\ep}}{d(\mu_0 \otimes \mu_1)}(x,y) = \alpha^{\ep}(x)\beta^{\ep}(y)k(x,y).
\end{align}
Motivated by Proposition \ref{proposition:Radon-Nikodym-minimizer}, we now solve the Schr\"odinger system 
\eqref{equation:Schrodinger-1}
when $c(x,y) = ||x-y||^2$ on $\H$, leading to another proof of Theorem \ref{theorem:optimal-joint-square-gauss},
which is valid in the general setting, where $C_0$ and $C_1$ can be singular.
 
We make use of the following results on Gaussian integrals on Hilbert spaces.
\begin{theorem}
	[\cite{DaPrato:PDEHilbert}, Proposition 1.2.8]
\label{theorem:gaussian-integral-1}
Consider the Gaussian measure $\Ncal(0,C)$ on $\H$.
	Assume that $M$ is a self-adjoint operator on $\H$ such that $\la C^{1/2}MC^{1/2} x, x \ra < ||x||^2$ $\forall 
	x \in \H, x \neq 0$. Let $b \in \H$. Then 
	\begin{align}
	&\int_{\H}\exp\left(\frac{1}{2}\la M y, y\ra + \la b,y\ra\right)d\Ncal(0,C)(y)
	%\nonumber
	\\
	& = [\det(I - C^{1/2}MC^{1/2})]^{-1/2}\exp\left(\frac{1}{2}||(I-C^{1/2}MC^{1/2})^{-1/2}C^{1/2}b||^2\right).
	\nonumber
	\end{align}
\end{theorem}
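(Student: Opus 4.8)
The plan to establish Theorem~\ref{theorem:gaussian-integral-1} is to diagonalize the quadratic form against the eigenbasis of $C^{1/2}MC^{1/2}$, reduce the Hilbert-space integral to a product of one-dimensional Gaussian integrals, and then control the passage to the limit by an $\Lcal^{1+\eta}$ bound.

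First I would dispose of degeneracies: since $\Ncal(0,C)$ is concentrated on $\overline{\myIm(C)}$, we may assume $\ker(C)=\{0\}$, so that $C^{1/2}$ has dense range and the white noise mapping $W$ associated with $\mu=\Ncal(0,C)$ is available. Set $A=C^{1/2}MC^{1/2}$. Since $C\in\Tr(\H)$, we have $C^{1/2}\in\HS(\H)$, hence $A=C^{1/2}(MC^{1/2})$ is a product of two Hilbert--Schmidt operators and so $A\in\Sym(\H)\cap\Tr(\H)$. Being self-adjoint and compact, $A$ attains its spectral bound, so the hypothesis $\la Ax,x\ra<\|x\|^2$ for $x\neq0$ forces $\lambda_{\max}(A)<1$; in particular $I-A\ge(1-\lambda_{\max}(A))I>0$ is positive definite, the Fredholm determinant $\det(I-A)=\prod_k(1-\mu_k)$ is well-defined and strictly positive, and $(I-A)^{-1/2}$ is bounded.

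Next, write $A=\sum_k\mu_k\,\phi_k\otimes\phi_k$ with $\{\phi_k\}_{k\in\Nbb}$ an orthonormal basis of $\H$ and $(\mu_k)\in\ell^1$, $\mu_k<1$. The random variables $\xi_k=W_{\phi_k}$ satisfy $\la W_{\phi_j},W_{\phi_k}\ra_{\Lcal^2(\H,\mu)}=\la\phi_j,\phi_k\ra=\delta_{jk}$ and are jointly Gaussian, hence $\{\xi_k\}$ is an i.i.d.\ sequence of standard Gaussians under $\mu$. By the truncation-and-limit arguments already used for Lemma~\ref{lemma:whitenoise-square-integral-1} and Proposition~\ref{proposition:integral-L2-L1-orthonormal}, one identifies, in the $\Lcal^1(\H,\mu)$ sense, $\tfrac12\la My,y\ra=\tfrac12\sum_k\mu_k\xi_k^2$ and $\la b,y\ra=W_{C^{1/2}b}(y)=\sum_k\beta_k\xi_k$ with $\beta_k=\la C^{1/2}b,\phi_k\ra$ and $\sum_k\beta_k^2=\|C^{1/2}b\|^2<\infty$. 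For the partial sums $\Phi_N=\tfrac12\sum_{k\le N}\mu_k\xi_k^2+\sum_{k\le N}\beta_k\xi_k$, independence and the classical finite-dimensional Gaussian integral formula give
\[
\int_{\H}e^{\Phi_N}\,d\mu=\prod_{k=1}^N(1-\mu_k)^{-1/2}\exp\!\left(\frac{\beta_k^2}{2(1-\mu_k)}\right),
\]
which converges, as $N\to\infty$, to $\det(I-A)^{-1/2}\exp\bigl(\tfrac12\|(I-A)^{-1/2}C^{1/2}b\|^2\bigr)$: indeed $\sum_k|\log(1-\mu_k)|<\infty$ because $\mu_k\to0$ and $1-\mu_k\ge1-\lambda_{\max}(A)>0$, while $\sum_k\beta_k^2/(1-\mu_k)\le(1-\lambda_{\max}(A))^{-1}\|C^{1/2}b\|^2<\infty$ and this last sum equals $\|(I-A)^{-1/2}C^{1/2}b\|^2$ since $C^{1/2}b=\sum_k\beta_k\phi_k$ and $(I-A)^{-1/2}\phi_k=(1-\mu_k)^{-1/2}\phi_k$.

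The remaining point, and the main obstacle, is to pass the limit through the integral, i.e.\ to show $\int_{\H}e^{\Phi_N}\,d\mu\to\int_{\H}e^{\Phi}\,d\mu$ where $\Phi=\tfrac12\la My,y\ra+\la b,y\ra$ and $\Phi_N\to\Phi$ in $\Lcal^1(\H,\mu)$ (hence a.s.\ along a subsequence). I would establish uniform integrability of $\{e^{\Phi_N}\}$ by an $\Lcal^{1+\eta}$ bound: choose $\eta>0$ with $(1+\eta)\lambda_{\max}(A)<1$, so that $I-(1+\eta)A>0$; the same finite-dimensional computation then yields $\int_{\H}e^{(1+\eta)\Phi_N}\,d\mu=\prod_{k\le N}(1-(1+\eta)\mu_k)^{-1/2}\exp\bigl(\tfrac{(1+\eta)^2\beta_k^2}{2(1-(1+\eta)\mu_k)}\bigr)$, which is bounded uniformly in $N$ as the partial products of a convergent infinite product whose factors tend to $1$. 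Hence $\sup_N\|e^{\Phi_N}\|_{\Lcal^{1+\eta}(\H,\mu)}<\infty$, so $\{e^{\Phi_N}\}$ is uniformly integrable, and the Vitali convergence theorem gives $\int_{\H}e^{\Phi_N}\,d\mu\to\int_{\H}e^{\Phi}\,d\mu$. Combining with the product evaluation above yields the asserted identity. Finally, if $\ker(C)\neq\{0\}$ one restores the general case by observing that the zero eigenvalues of $A$ and the component of $b$ orthogonal to $\overline{\myIm(C)}$ contribute trivially to both sides.
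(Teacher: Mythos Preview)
The paper does not give its own proof of this statement: Theorem~\ref{theorem:gaussian-integral-1} is quoted from \cite{DaPrato:PDEHilbert}, Proposition 1.2.8, and is used as a black box in the proof of Theorem~\ref{theorem:optimal-joint-square-gauss} (via Corollary~\ref{corollary:gaussian-integral-2}). There is therefore nothing in the paper to compare your argument against.

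That said, your proposal is essentially the standard proof one finds in Da Prato's book: diagonalize $A=C^{1/2}MC^{1/2}\in\Sym(\H)\cap\Tr(\H)$, reduce to a product of one-dimensional Gaussian integrals via the white noise coordinates $\xi_k=W_{\phi_k}$, and justify the passage to the limit. Your observation that compactness of $A$ forces $\lambda_{\max}(A)<1$ (so that $I-A$ is positive definite and $\det(I-A)>0$) is the right starting point, and your $\Lcal^{1+\eta}$ uniform-integrability argument, with $\eta>0$ chosen so that $(1+\eta)\lambda_{\max}(A)<1$, is a clean way to invoke Vitali. The one place that could use a little more care is the a.s.\ identification $\la My,y\ra=\sum_k\mu_k\xi_k^2(y)$: the references you give (Lemma~\ref{lemma:whitenoise-square-integral-1} and Proposition~\ref{proposition:integral-L2-L1-orthonormal}) concern integrals of $W_z$ and $W_z^2$ rather than this pointwise identity, so you should either verify it directly (e.g.\ by expanding both sides in the eigenbasis of $C$ and checking that the $L^2(\mu)$-limit of the partial sums agrees with the pointwise function $y\mapsto\la My,y\ra$) or bypass it by working with the finite-dimensional cylindrical approximations from the outset.
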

The following result then follows immediately
\begin{corollary}
\label{corollary:gaussian-integral-2}
Consider the Gaussian measure $\Ncal(m,C)$ on $\H$.
Assume that $M$ is a self-adjoint operator on $\H$ such that $\la C^{1/2}MC^{1/2} x, x \ra < ||x||^2$ $\forall 
x \in \H, x \neq 0$. Let $b \in \H$. Then 
\begin{align}
&\int_{\H}\exp\left(\frac{1}{2}\la M (y-m), (y-m)\ra + \la b,y-m\ra\right)d\Ncal(m,C)(y)
%\nonumber
\\
& = [\det(I - C^{1/2}MC^{1/2})]^{-1/2}\exp\left(\frac{1}{2}||(I-C^{1/2}MC^{1/2})^{-1/2}C^{1/2}b||^2\right).
\nonumber
\end{align}	
\end{corollary}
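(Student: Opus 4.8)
The plan is to reduce Corollary~\ref{corollary:gaussian-integral-2} to Theorem~\ref{theorem:gaussian-integral-1} by a translation. Consider the translation map $\tau_m:\H \mapto \H$, $\tau_m(y) = y - m$, which is a homeomorphism (hence Borel measurable with Borel measurable inverse). The first step is to record the standard fact that the pushforward of $\Ncal(m,C)$ under $\tau_m$ is the centered Gaussian measure $\Ncal(0,C)$: indeed, a Gaussian measure on $\H$ is uniquely determined by its mean vector and covariance operator, translation by $-m$ shifts the mean from $m$ to $0$ and leaves the covariance operator $C$ unchanged, so $(\tau_m)_{*}\Ncal(m,C) = \Ncal(0,C)$.

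The second step is to apply the change-of-variables formula for pushforward measures. For any Borel measurable $f:\H \mapto [0,\infty]$ (or $f$ integrable against $\Ncal(0,C)$), we have
\begin{align}
\int_{\H} f(y-m)\, d\Ncal(m,C)(y) = \int_{\H} f(z)\, d\Ncal(0,C)(z).
\end{align}
Choosing $f(z) = \exp\left(\frac{1}{2}\la Mz, z\ra + \la b, z\ra\right)$ makes the left-hand side precisely the integral appearing in Corollary~\ref{corollary:gaussian-integral-2}. Since by hypothesis $\la C^{1/2}MC^{1/2}x,x\ra < ||x||^2$ for all $x \in \H$, $x \neq 0$, Theorem~\ref{theorem:gaussian-integral-1} applies to $\int_{\H} f(z)\, d\Ncal(0,C)(z)$ and evaluates it to $[\det(I - C^{1/2}MC^{1/2})]^{-1/2}\exp\left(\frac{1}{2}||(I-C^{1/2}MC^{1/2})^{-1/2}C^{1/2}b||^2\right)$, which is exactly the claimed value; in particular this shows the integral is finite. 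Combining the two displays completes the proof.

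There is essentially no obstacle here: the only point requiring a word of justification is the pushforward identity $(\tau_m)_{*}\Ncal(m,C) = \Ncal(0,C)$, which is immediate from the characterization of Gaussian measures by mean and covariance, and the measurability of $\tau_m$, which is trivial. The finiteness and the closed form are then inherited directly from Theorem~\ref{theorem:gaussian-integral-1}.
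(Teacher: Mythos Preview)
Your proof is correct and is exactly the approach the paper takes: the paper states that Corollary~\ref{corollary:gaussian-integral-2} ``follows immediately'' from Theorem~\ref{theorem:gaussian-integral-1}, and the immediate step is precisely the translation $y\mapsto y-m$ identifying $\Ncal(m,C)$ with $\Ncal(0,C)$ that you have spelled out.
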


\begin{proof}
[\textbf{of Theorem \ref{theorem:optimal-joint-square-gauss}: optimal entropic transport plan - the general case}]
We first have
\begin{align*}
&||x-y||^2 = ||(x-m_0) - (y-m_1) + (m_0-m_1)||^2 
\\
&= ||x-m_0||^2 + ||y-m_1||^2 + ||m_0-m_1||^2 - 2\la x-m_0, y-m_1\ra 
\\
&+ 2\la x - m_0, m_0 - m_1\ra - 2\la y-m_1, m_0 - m_1\ra.
\end{align*}
Expanding $\alpha^{\ep}(x)\beta^{\ep}(y)\exp\left(-\frac{||x-y||^2}{\ep}\right)$ under the assumptions 
\begin{align*}
\alpha^{\ep}(x) &= \exp\left(\la x-m_0, A(x-m_0)\ra + \frac{2}{\ep}\la x-m_0, m_0 - m_1\ra + a\right),
\\
\beta^{\ep}(y)  &= \exp\left(\la y-m_1, B(y-m_1)\ra + \frac{2}{\ep}\la y-m_1, m_1 - m_0\ra + b\right),
\end{align*}
we obtain
\begin{align*}
&\alpha^{\ep}(x)\beta^{\ep}(y)\exp\left(-\frac{||x-y||^2}{\ep}\right) = \exp(a+b)\exp\left(-\frac{||m_0-m_1||^2}{\ep}\right)
\\
&\times \exp\left(\left\la x-m_0, \left(A-\frac{1}{\ep}I\right) (x-m_0)\right\ra\right)
\exp\left(\left\la y-m_1, \left(B-\frac{1}{\ep}I\right)(y-m_1)\right\ra \right)
\\
& \times \exp\left(\frac{2}{\ep}\la x- m_0, y- m_1\ra\right).
\end{align*}
The Schr\"odinger system \eqref{equation:Schrodinger-1} then becomes
\begin{equation}
\begin{aligned}
1 = &{\exp(a+b)}\exp\left(-\frac{||m_0-m_1||^2}{\ep}\right)\exp\left(\left\la x-m_0,\left(A - \frac{1}{\epsilon}I\right)(x-m_0)\right\ra\right)
\\
& 
\times \int_{\H} \exp\left(\left\la y-m_1, \left(B-\frac{1}{\epsilon}I \right)(y-m_1)\right\ra +\frac{2}{\epsilon}\la x-m_0, y-m_1\ra\right)d\mu_1(y),
\\
1 = &{\exp(a+b)}\exp\left(-\frac{||m_0-m_1||^2}{\ep}\right)\exp\left(\left\la(y-m_1), \left(B - \frac{1}{\epsilon}I\right) (y-m_1)\right\ra\right)\\
& \times\int_{\H} \exp\left(\left\la x-m_0, \left(A-\frac{1}{\epsilon}I
\right)(x-m_0)\right\ra+\frac{2}{\epsilon}\la y-m_1,x-m_0\ra\right)d\mu_0(x).
\end{aligned}
\label{equation:gaussian_system}
\end{equation}
Let $A_{\ep} = A - \frac{1}{\epsilon}I$, $B_{\ep} = B - \frac{1}{\ep}I$, then  by Corollary \ref{corollary:gaussian-integral-2},
\begin{align*}
&\int_{\H} \exp\left(\la x-m_0,A_{\ep}(x-m_0)\ra+\frac{2}{\epsilon}\la (y-m_1), x-m_0\ra\right)d\mu_0(x)
\\
&= [\det(I - 2C_0^{1/2}A_{\ep}C_0^{1/2})]^{-1/2}\exp\left(\frac{2}{\ep^2}||(I-2C_0^{1/2}A_{\ep}C_0^{1/2})^{-1/2}C_0^{1/2}(y-m_1)||^2\right),
\\
%\end{align*}
%and
%\begin{align*}
&\int_{\H} \exp\left(\la (y-m_1),B_{\ep}(y-m_1)\ra+\frac{2}{\epsilon}\la x-m_0, y-m_1\ra\right)d\mu_1(y)
\\
&= [\det(I - 2C_1^{1/2}B_{\ep}C_1^{1/2})]^{-1/2}\exp\left(\frac{2}{\ep^2}||(I-2C_1^{1/2}B_{\ep}C_1^{1/2})^{-1/2}C_1^{1/2}(x-m_0)||^2\right).
\end{align*}
Thus for the system of equations \eqref{equation:gaussian_system} to hold $\forall x,y \in \H$, we must have
\begin{equation}
	\begin{aligned}
B_{\ep} &= -\frac{2}{\ep^2}C_0^{1/2}(I-2C_0^{1/2}A_{\ep}C_0^{1/2})^{-1}C_0^{1/2},
\\
A_{\ep} &= -\frac{2}{\ep^2}C_1^{1/2} (I-2C_1^{1/2}B_{\ep}C_1^{1/2})^{-1}C_1^{1/2},
\\
\exp(a+b) & = \exp\left(\frac{||m_0-m_1||^2}{\ep}\right)[\det(I - 2C_0^{1/2}A_{\ep}C_0^{1/2})]^{1/2},
\\
\exp(a+b) & = \exp\left(\frac{||m_0-m_1||^2}{\ep}\right)[\det(I - 2C_1^{1/2}B_{\ep}C_1^{1/2})]^{1/2}.
\end{aligned}
\label{equation:gaussian-system-2}
\end{equation}
We claim that the following $A_{\ep}$ and $B_{\ep}$ solve the system of equations \eqref{equation:gaussian-system-2}
\begin{align*}
A_{\ep} &= -\frac{2}{\epsilon^2}C_1^{1/2}\left[\frac{1}{2}I + \frac{1}{2}\left(I + \frac{16}{\epsilon^2}C_1^{1/2}C_0C_1^{1/2}\right)^{1/2}\right]^{-1}C_1^{1/2},
\\
%\end{align*}
%\begin{align*}
B_{\ep} &= -\frac{2}{\epsilon^2}C_0^{1/2}\left[\frac{1}{2}I + \frac{1}{2}\left(I + \frac{16}{\epsilon^2}C_0^{1/2}C_1C_0^{1/2}\right)^{1/2}\right]^{-1}C_0^{1/2}.
\end{align*}
Let us verify the first equation in \eqref{equation:gaussian-system-2} (the second one is analogous).
We have
\begin{align}
\label{equation:C0AC0}
C_0^{1/2}A_{\ep}C_0^{1/2} = -\frac{2}{\epsilon^2}C_0^{1/2}C_1^{1/2}\left[\frac{1}{2}I + \frac{1}{2}\left(I + \frac{16}{\epsilon^2}C_1^{1/2}C_0C_1^{1/2}\right)^{1/2}\right]^{-1}C_1^{1/2}C_0^{1/2},
%\nonumber
\\
%\end{align}
%\begin{align}
\label{equation:I-C0AC0}
I- 2C_0^{1/2}A_{\ep}C_0^{1/2} = I + \frac{4}{\epsilon^2}C_0^{1/2}C_1^{1/2}\left[\frac{1}{2}I + \frac{1}{2}\left(I + \frac{16}{\epsilon^2}C_1^{1/2}C_0C_1^{1/2}\right)^{1/2}\right]^{-1}C_1^{1/2}C_0^{1/2}.
\end{align}
%\begin{align*}
%(I- 2C_0^{1/2}A_{\ep}C_0^{1/2})^{-1} = \left(I + \frac{4}{\epsilon^2}C_0^{1/2}C_1^{1/2}\left[\frac{1}{2}I + \frac{1}{2}\left(I + \frac{16}{\epsilon^2}C_1^{1/2}C_0C_1^{1/2}\right)^{1/2}\right]^{-1}C_1^{1/2}C_0^{1/2}\right)^{-1}
%\end{align*}
Thus in order to have $B_{\ep} = -\frac{2}{\ep^2}C_0^{1/2}(I-2C_0^{1/2}A_{\ep}C_0^{1/2})^{-1}C_0^{1/2}$,
we need
% to verify
\begin{align}
\frac{1}{2}I + \frac{1}{2}\left(I + \frac{16}{\epsilon^2}C_0^{1/2}C_1C_0^{1/2}\right)^{1/2} = I- 2C_0^{1/2}A_{\ep}C_0^{1/2}
\label{equation:I-C0AC0-2}
\end{align}
Replacing the right hand side with the expression in Eq.\eqref{equation:I-C0AC0}, this is
% the same as
\begin{align*}
&\frac{1}{2}I + \frac{1}{2}\left(I + \frac{16}{\epsilon^2}C_0^{1/2}C_1C_0^{1/2}\right)^{1/2}
\\
&\quad \quad=
I + \frac{4}{\epsilon^2}C_0^{1/2}C_1^{1/2}\left[\frac{1}{2}I + \frac{1}{2}\left(I + \frac{16}{\epsilon^2}C_1^{1/2}C_0C_1^{1/2}\right)^{1/2}\right]^{-1}C_1^{1/2}C_0^{1/2}
\\
%&\equivalent
%-\frac{1}{2}I + \frac{1}{2}\left(I + \frac{16}{\epsilon^2}C_0^{1/2}C_1C_0^{1/2}\right)^{1/2}
%=\frac{4}{\epsilon^2}C_0^{1/2}C_1^{1/2}\left[\frac{1}{2}I + \frac{1}{2}\left(I + \frac{16}{\epsilon^2}C_1^{1/2}C_0C_1^{1/2}\right)^{1/2}\right]^{-1}C_1^{1/2}C_0^{1/2}
%\\
&\equivalent
-I + \left(I + \frac{16}{\epsilon^2}C_0^{1/2}C_1C_0^{1/2}\right)^{1/2} 
\\
&\quad \quad= \frac{16}{\epsilon^2}C_0^{1/2}C_1^{1/2}\left[I + \left(I + \frac{16}{\epsilon^2}C_1^{1/2}C_0C_1^{1/2}\right)^{1/2}\right]^{-1}C_1^{1/2}C_0^{1/2}.
%\\
%&\equivalent
%\frac{16}{\epsilon^2}C_0^{1/2}C_1C_0^{1/2}\left[I + \left(I + \frac{16}{\epsilon^2}C_0^{1/2}C_1C_0^{1/2}\right)^{1/2}\right]^{-1}
%=
%\frac{16}{\epsilon^2}C_0^{1/2}C_1^{1/2}\left[I + \left(I + \frac{16}{\epsilon^2}C_1^{1/2}C_0C_1^{1/2}\right)^{1/2}\right]^{-1}C_1^{1/2}C_0^{1/2}
%\\
%&\equivalent
%\frac{16}{\epsilon^2}C_0^{1/2}C_1C_0^{1/2}
%=
%\\
%&\frac{16}{\epsilon^2}C_0^{1/2}C_1^{1/2}\left[I + \left(I + \frac{16}{\epsilon^2}C_1^{1/2}C_0C_1^{1/2}\right)^{1/2}\right]^{-1}C_1^{1/2}C_0^{1/2}\left[I + \left(I + \frac{16}{\epsilon^2}C_0^{1/2}C_1C_0^{1/2}\right)^{1/2}\right].
\end{align*}
This is precisely Lemma \ref{lemma:adjoint-switch-CX} with $a = \frac{4}{\ep}$.
%This last equality is valid as a consequence of 
%Lemma \ref{lemma:adjont-switch}, which gives
%\begin{align*}
%&C_1^{1/2}C_0^{1/2}\left[I + \left(I + \frac{16}{\epsilon^2}C_0^{1/2}C_1C_0^{1/2}\right)^{1/2}\right] = C_1^{1/2}C_0^{1/2}
%+ C_1^{1/2}C_0^{1/2}\left(I + \frac{16}{\epsilon^2}C_0^{1/2}C_1C_0^{1/2}\right)^{1/2}
%\\ 
%&= C_1^{1/2}C_0^{1/2} + \left(I + \frac{16}{\epsilon^2}C_1^{1/2}C_0C_1^{1/2}\right)^{1/2} C_1^{1/2}C_0^{1/2}
%= \left[I + \left(I + \frac{16}{\epsilon^2}C_1^{1/2}C_0C_1^{1/2}\right)^{1/2} \right]C_1^{1/2}C_0^{1/2}.
%\end{align*}
%\begin{align*}
%I - N^{\epsilon}_{01} = I - \left(I + \frac{16}{\epsilon^2}C_0^{1/2}C_1C_0^{1/2}\right)^{1/2} = 
%- \frac{16}{\epsilon^2}C_0^{1/2}C_1C_0^{1/2}\left[I+ \left(I + \frac{16}{\epsilon^2}C_0^{1/2}C_1C_0^{1/2}\right)^{1/2}\right]^{-1}
%\end{align*}
Similarly, we have
\begin{align}
\frac{1}{2}I + \frac{1}{2}\left(I + \frac{16}{\epsilon^2}C_1^{1/2}C_0C_1^{1/2}\right) = I - 2C_1^{1/2}B_{\ep}C_1^{1/2}.
\end{align}
Finally, for the expression of $\exp(a+b)$, we note that it is clear that from Eq.\eqref{equation:C0AC0} that 
$C_0^{1/2}A_{\ep}C_0^{1/2} \in \Tr(\H)$ and from Eqs. \eqref{equation:I-C0AC0} and \eqref{equation:I-C0AC0-2} that $I - 2C_0^{1/2}A_{\ep}C_0^{1/2} > 0$.
Thus the Fredholm determinant of the latter expression is well-defined and positive. From Eq.\eqref{equation:I-C0AC0-2},
\begin{align*}
&\det(I - 2C_0^{1/2}A_{\ep}C_0^{1/2}) = \det\left(\frac{1}{2}I + \frac{1}{2}\left(I + \frac{16}{\epsilon^2}C_0^{1/2}C_1C_0^{1/2}\right)^{1/2}\right)\\
& =\det\left(\frac{1}{2}I + \frac{1}{2}\left(I + \frac{16}{\epsilon^2}C_1^{1/2}C_0C_1^{1/2}\right)^{1/2}\right)
= \det(I - 2C_1^{1/2}B_{\ep}C_1^{1/2}),
\end{align*}
%Here we have used the fact that 
since the nonzero eigenvalues of $C_0^{1/2}C_1C_0^{1/2}$ and $C_1^{1/2}C_0C_1^{1/2}$
are equal. \qed
\end{proof}

%\begin{lemma}
%Let $A \in \Lcal(\H)\cap \Sym^{+}(\H)$. Then
%\begin{align}
%\int_{\H}\la x-m, A(x-m)\ra d\Ncal(m,C) = \trace(CA).
%\end{align}
%\end{lemma}
\begin{lemma}
	\label{lemma:integral-quadratic-prob-1}
Let $\mu \in \Pcal_2(\H)$ with
% be a probability measure on $(\H, \Bsc(\H))$ with 
mean $m$ and covariance operator $C$.
Let $A \in \Lcal(\H)$. Then
\begin{align}
\int_{\H}\la x-m, A(x-m)\ra d\mu(x) = \trace(CA).
\end{align}
\end{lemma}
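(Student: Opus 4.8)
The plan is to reduce to the centered case and expand the quadratic form in an orthonormal basis, interchanging sum and integral by dominated convergence. First I would observe that since $\mu \in \Pcal_2(\H)$, its covariance operator $C$ is self-adjoint, positive, and trace class; hence $CA$ and $AC$ are both trace class with $\trace(CA) = \trace(AC)$, so the right-hand side is well-defined and it suffices to prove $\int_{\H}\la x-m, A(x-m)\ra d\mu(x) = \trace(AC)$. Replacing $\mu$ by its push-forward under the translation $x \mapsto x-m$, we may assume $m = 0$.

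Next, I would fix an orthonormal basis $\{e_j\}_{j\in\Nbb}$ of $\H$ and write, for every $x \in \H$,
\begin{align}
\la x, Ax\ra = \sum_{j=1}^{\infty} \la x, e_j\ra\la e_j, Ax\ra = \sum_{j=1}^{\infty} \la x, e_j\ra\la A^{*}e_j, x\ra .
\end{align}
The partial sums are dominated by an integrable function: by Cauchy--Schwarz and Bessel's inequality,
\begin{align}
\left|\sum_{j=1}^{N} \la x, e_j\ra\la A^{*}e_j, x\ra\right| \leq \|x\|\,\|A^{*}x\| \leq \|A\|\,\|x\|^{2},
\end{align}
and $\int_{\H}\|x\|^{2}d\mu(x) < \infty$ because $\mu \in \Pcal_2(\H)$. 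Then by the Lebesgue Dominated Convergence Theorem and the definition of the covariance operator,
\begin{align}
\int_{\H}\la x, Ax\ra d\mu(x) = \sum_{j=1}^{\infty}\int_{\H}\la x, e_j\ra\la x, A^{*}e_j\ra d\mu(x) = \sum_{j=1}^{\infty}\la Ce_j, A^{*}e_j\ra = \sum_{j=1}^{\infty}\la ACe_j, e_j\ra = \trace(AC) = \trace(CA).
\end{align}

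I do not expect any genuine obstacle here: the only points needing care are the domination bound that legitimises the interchange of sum and integral, and the trace-class bookkeeping that makes $\trace(CA)$ well-defined and equal to $\trace(AC)$. Alternatively, the statement can be read off as the special case of Lemma \ref{lemma:cross-covariance-expected-value} obtained by taking $\mu_X = \mu_Y = \mu$ and letting $\mu_{XY}$ be the image of $\mu$ under the diagonal embedding $x \mapsto (x,x)$, for which $C_{XY} = C$, with the two bounded operators in that lemma chosen as $I$ and $A$; self-adjointness of $C$ then gives $\trace(CA^{*}) = \trace(CA)$.
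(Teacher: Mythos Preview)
Your proof is correct. The paper's own proof also reduces to $m=0$ and expands in an orthonormal basis, but organizes the argument differently: it first treats $A\in\Sym^{+}(\H)$ by writing $\la x,Ax\ra=\|A^{1/2}x\|^{2}=\sum_{k}\la A^{1/2}x,e_{k}\ra^{2}$ and invoking the Monotone Convergence Theorem, then reduces self-adjoint $A$ to this case via $A=\tfrac{1}{2}(|A|+A)-\tfrac{1}{2}(|A|-A)$, and finally reduces general $A$ to the self-adjoint case via $\la x,Ax\ra=\tfrac{1}{2}\la x,(A+A^{*})x\ra$. Your single-pass argument via Dominated Convergence handles all $A\in\Lcal(\H)$ at once and is in fact exactly the technique the paper uses in its proof of Lemma~\ref{lemma:cross-covariance-expected-value}; your closing remark that the present lemma is the diagonal special case of that lemma is spot on.

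One cosmetic slip: in your domination bound, Cauchy--Schwarz and Bessel give $\bigl|\sum_{j=1}^{N}\la x,e_{j}\ra\la e_{j},Ax\ra\bigr|\le\|x\|\,\|Ax\|$, not $\|x\|\,\|A^{*}x\|$; of course $\|Ax\|\le\|A\|\,\|x\|$, so your final bound $\|A\|\,\|x\|^{2}$ is unchanged.
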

\begin{proof}
	It suffices to consider the case $m = 0$.  
	
	(i) Suppose $A \in \Sym^{+}(\H)$.
	Let $\{e_k\}_{k=1}^{\infty}$ be an orthonormal basis in $\H$. 
	Then
	\begin{align*}
	&\int_{\H}\la x, Ax\ra d\mu(x) = \int_{\H}||A^{1/2}x||^2 d\mu(x) = \int_{\H}\sum_{k=1}^{\infty}\la A^{1/2}x, e_k\ra^2 d\mu(x)
	\\
	& = \int_{\H}\sum_{k=1}^{\infty}\la x, A^{1/2}e_k\ra^2 d\mu(x) = \sum_{k=1}^{\infty}\int_{\H}\la x, A^{1/2}e_k\ra^2 d\mu(x)
	\\
	& \quad \quad \text{by Lebesgue Monotone Convergence Theorem}
	\\
	& = \sum_{k=1}^{\infty} \la CA^{1/2}e_k, A^{1/2}e_k\ra =\sum_{k=1}^{\infty}\la A^{1/2}CA^{1/2}e_k, e_k\ra = \trace(A^{1/2}CA^{1/2}) = \trace(CA).
	\end{align*}
	(ii) Suppose now $A \in \Sym(\H)$, then $A = A_1 - A_2$, where $A_1 = \frac{1}{2}(|A|+A) \in \Sym^{+}(\H), A_2 = \frac{1}{2}(|A|-A) \in \Sym^{+}(\H)$. Thus this case reduces to case (i).
	
	(iii) For any $A \in \Lcal(\H)$, $\int_{\H}\la x, Ax\ra d\mu(x) = \frac{1}{2}\int_{\H}\la x, (A+A^{*})x\ra d\mu(x)$. Using the fact $\trace(CA^{*}) = \trace(AC) = \trace(CA)$, this case reduces to case (ii). \qed
	%\begin{align*}
	%&\int_{\H}\la x, Ax\ra d\mu(x)= \int_{\H}\sum_{j,k=1}^{\infty}\la x, e_j\ra \la x, e_k\ra \la e_j, Ae_k\ra d\mu(x)
	%\\
	%&= \sum_{j,k=1}^{\infty}\la e_j, Ae_k\ra\int_{\H}\la x, e_j\ra \la x, e_k\ra d\mu(x) = \sum_{j,k=1}^{\infty}\la e_j, Ae_k\ra\la e_j, Ce_k\ra
	%\\
	%& = \sum_{k=1}^{\infty}\la Ae_k, Ce_k \ra = \sum_{k=1}^{\infty}\la CAe_k, e_k\ra = \trace(CA). 
	%\end{align*}
\end{proof}

\begin{corollary}
	[\textbf{Entropic Wasserstein distance between Gaussian measure on Hilbert space - the general case}]
	\label{corollary:OT-regularized-Gaussian}
	Let $\mu_0 = \Ncal(m_0, C_0)$ and $\mu_1 = \Ncal(m_1, C_1)$. For each fixed $\ep > 0$,
\begin{equation}
\begin{aligned}
\OT^{\ep}_{d^2}(\mu_0, \mu_1) &= ||m_0-m_1||^2 + \trace(C_0) + \trace(C_1) - \frac{\ep}{2}\trace(M^{\ep}_{01})
\\
&\quad +\frac{\ep}{2}\log\det\left(I + \frac{1}{2}M^{\ep}_{01}\right).
\end{aligned}
\end{equation}
\end{corollary}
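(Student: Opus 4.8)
The plan is to evaluate the entropic functional directly at the optimal transport plan $\gamma^{\ep}$ furnished by Theorem~\ref{theorem:optimal-joint-square-gauss}, which is valid with no nonsingularity hypothesis on $C_0,C_1$. Since that $\gamma^{\ep}$ is \emph{the} minimizer of \eqref{equation:OT-ep-square} and the minimum is finite (it is bounded above by the value at $\mu_0\otimes\mu_1\in\ADM(\mu_0,\mu_1)$, namely $||m_0-m_1||^2+\trace(C_0)+\trace(C_1)$), both terms below are finite and
\begin{align*}
\OT^{\ep}_{d^2}(\mu_0,\mu_1) &= \bE_{\gamma^{\ep}}||x-y||^2 + \ep\,\KL(\gamma^{\ep}||\mu_0\otimes\mu_1) \\
&= \bE_{\gamma^{\ep}}||x-y||^2 + \ep\,\bE_{\gamma^{\ep}}\left[\log\frac{d\gamma^{\ep}}{d(\mu_0\otimes\mu_1)}(x,y)\right].
\end{align*}
Substituting the closed-form density $\frac{d\gamma^{\ep}}{d(\mu_0\otimes\mu_1)}(x,y)=\alpha^{\ep}(x)\beta^{\ep}(y)\exp(-||x-y||^2/\ep)$ from Theorem~\ref{theorem:optimal-joint-square-gauss}, the term $\bE_{\gamma^{\ep}}||x-y||^2$ cancels against the $-||x-y||^2/\ep$ inside the logarithm (each piece being integrable since $\gamma^{\ep}\in\Pcal_2(\H\times\H)$), leaving $\OT^{\ep}_{d^2}(\mu_0,\mu_1)=\ep\big(\bE_{\gamma^{\ep}}[\log\alpha^{\ep}(x)]+\bE_{\gamma^{\ep}}[\log\beta^{\ep}(y)]\big)$. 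Since the $x$- and $y$-marginals of $\gamma^{\ep}$ are $\mu_0$ and $\mu_1$, this is $\ep\big(\bE_{\mu_0}[\log\alpha^{\ep}]+\bE_{\mu_1}[\log\beta^{\ep}]\big)$.

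Next I would evaluate these two expectations. From the form of $\alpha^{\ep},\beta^{\ep}$ in Theorem~\ref{theorem:optimal-joint-square-gauss}, the linear terms have zero $\mu_0$- resp.\ $\mu_1$-mean, the constants $a,b$ integrate to themselves, and Lemma~\ref{lemma:integral-quadratic-prob-1} gives $\bE_{\mu_0}[\la x-m_0,A(x-m_0)\ra]=\trace(C_0A)$ and $\bE_{\mu_1}[\la y-m_1,B(y-m_1)\ra]=\trace(C_1B)$. Hence $\OT^{\ep}_{d^2}(\mu_0,\mu_1)=\ep\big(\trace(C_0A)+\trace(C_1B)+a+b\big)$, and it remains to substitute the expressions for $A$, $B$ and $a+b$ from \eqref{equation:ABab}.

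The last step is trace bookkeeping. Writing $I+\tfrac12 M^{\ep}_{ij}=\tfrac12\big(I+(I+\tfrac{16}{\ep^2}C_i^{1/2}C_jC_i^{1/2})^{1/2}\big)$ and using the elementary identity $T\big(I+(I+T)^{1/2}\big)^{-1}=(I+T)^{1/2}-I$, valid for any $T\in\Sym^{+}(\H)$ since $I+(I+T)^{1/2}\ge I$, cyclicity of the trace gives $\trace[C_0C_1^{1/2}(I+\tfrac12 M^{\ep}_{10})^{-1}C_1^{1/2}]=\tfrac{\ep^2}{8}\trace(M^{\ep}_{10})$ and $\trace[C_1C_0^{1/2}(I+\tfrac12 M^{\ep}_{01})^{-1}C_0^{1/2}]=\tfrac{\ep^2}{8}\trace(M^{\ep}_{01})$; all operators involved are trace class by the Remark following Theorem~\ref{theorem:optimal-joint-square-gauss}. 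Since $C_0^{1/2}C_1C_0^{1/2}$ and $C_1^{1/2}C_0C_1^{1/2}$ have the same nonzero eigenvalues, $\trace(M^{\ep}_{10})=\trace(M^{\ep}_{01})$. Therefore $\ep\,\trace(C_0A)=\trace(C_0)-\tfrac{\ep}{4}\trace(M^{\ep}_{01})$, $\ep\,\trace(C_1B)=\trace(C_1)-\tfrac{\ep}{4}\trace(M^{\ep}_{01})$, and $\ep(a+b)=||m_0-m_1||^2+\tfrac{\ep}{2}\log\det(I+\tfrac12 M^{\ep}_{01})$; adding these three lines yields exactly \eqref{equation:gauss-entropic-2-Wasserstein-infinite}. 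As a sanity check, in the nonsingular case this coincides with Theorem~\ref{theorem:entropic-gaussian-direct}, and in finite dimensions it reduces to \eqref{equation:gauss-entropic-Wass-finite}.

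The main obstacle here is not conceptual but organizational: one must track which of $M^{\ep}_{01}$ and $M^{\ep}_{10}$ appears where (note that $A$ involves $C_1$ and $M^{\ep}_{10}$ while $B$ involves $C_0$ and $M^{\ep}_{01}$), and one must check that every trace and Fredholm determinant written down is legitimate, i.e.\ that each operator inside a trace is trace class and each operator inside $\det$ differs from $I$ by a trace-class operator. This last point is precisely what the Remark after Theorem~\ref{theorem:optimal-joint-square-gauss} (namely $M^{\ep}_{ij}\in\Sym^{+}(\H)\cap\Tr(\H)$ and $I+\tfrac12 M^{\ep}_{ij}>0$) provides, so the computation goes through verbatim in the singular case.
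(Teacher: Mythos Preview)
Your proposal is correct and follows essentially the same route as the paper's own proof: evaluate the entropic functional at the optimal $\gamma^{\ep}$ from Theorem~\ref{theorem:optimal-joint-square-gauss}, cancel the $||x-y||^2$ term against the exponential in the density, reduce to $\ep(\bE_{\mu_0}[\log\alpha^{\ep}]+\bE_{\mu_1}[\log\beta^{\ep}])$ via the marginals, apply Lemma~\ref{lemma:integral-quadratic-prob-1} for the quadratic expectations, and then carry out the trace bookkeeping using $\trace(M^{\ep}_{01})=\trace(M^{\ep}_{10})$. The only difference is cosmetic: the paper writes out $\trace(C_0A)$ and $\trace(C_1B)$ directly from the definitions of $A,B$, whereas you phrase the same computation through the identity $T(I+(I+T)^{1/2})^{-1}=(I+T)^{1/2}-I$ and cyclicity of the trace.
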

\begin{proof} For the optimal $\frac{d\gamma^{\ep}}{d(\mu_0 \otimes \mu_1)}(x,y) = \alpha^{\ep}(x)\beta^{\ep}(y)\exp(-\frac{||x-y||^2}{\ep})$, 
	\begin{align*}
	\OT^{\ep}_{d^2}(\mu_0, \mu_1) &= \bE_{\gamma^{\ep}}||x-y||^2  + \ep\int_{\H \times \H}\log\left\{\frac{d\gamma^{\ep}}{d(\mu_0 \otimes \mu_1)}(x,y)\right\}d\gamma^{\ep}(x,y) 
	\\
	& = \ep\int_{\H \times \H}\log{\alpha^{\ep}(x)}d\gamma^{\ep}(x,y) + \ep\int_{\H \times \H}\log{\beta^{\ep}(y)}d\gamma^{\ep}(x,y)
	\\
	& = \ep\int_{\H}\log{\alpha^{\ep}(x)}d\mu_0(x) + \ep\int_{\H}\log{\beta^{\ep}(y)}d\mu_1(y).
	\end{align*}
%Let $\varphi^\epsilon(x) = \epsilon \log \alpha^\epsilon(x)$ and $\psi^\epsilon(y) = \epsilon \log \beta^\epsilon(y)$. 
Recall that
{\small
\begin{align*}
\alpha^{\ep}(x) &= \exp\left(\la x-m_0, A(x-m_0)\ra + \frac{2}{\ep}\la x-m_0, m_0 - m_1\ra + a\right),
\\
\beta^{\ep}(y)  &= \exp\left(\la y-m_1, B(y-m_1)\ra + \frac{2}{\ep}\la y-m_1, m_1 - m_0\ra + b\right),
\\
A &= \frac{1}{\ep}I-\frac{2}{\epsilon^2}C_1^{1/2}\left[\frac{1}{2}I + \frac{1}{2}\left(I + \frac{16}{\epsilon^2}C_1^{1/2}C_0C_1^{1/2}\right)^{1/2}\right]^{-1}C_1^{1/2},
\\
%\end{align*}
%\begin{align*}
B &= \frac{1}{\ep}I-\frac{2}{\epsilon^2}C_0^{1/2}\left[\frac{1}{2}I + \frac{1}{2}\left(I + \frac{16}{\epsilon^2}C_0^{1/2}C_1C_0^{1/2}\right)^{1/2}\right]^{-1}C_0^{1/2},
\\
%\end{align*}
%\begin{equation}
M^\epsilon_{01} &= -I + \left(I + \frac{16}{\epsilon^2}C_0^{1/2}C_1C_0^{1/2}\right)^\frac{1}{2}
%\\
%& 
= \frac{16}{\epsilon^2}C_0^{1/2}C_1C_0^{1/2}\left[I + \left(I + \frac{16}{\epsilon^2}C_0^{1/2}C_1C_0^{1/2}\right)^{1/2}\right]^{-1},
\\
M^\epsilon_{10} &= -I + \left(I + \frac{16}{\epsilon^2}C_1^{1/2}C_0C_1^{1/2}\right)^\frac{1}{2}
%\\
%&
 = \frac{16}{\epsilon^2}C_1^{1/2}C_0C_1^{1/2}\left[I + \left(I + \frac{16}{\epsilon^2}C_1^{1/2}C_0C_1^{1/2}\right)^{1/2}\right]^{-1},
\\
\exp(a+b) &= \exp\left(\frac{||m_0-m_1||^2}{\ep}\right)\sqrt{\det\left(I + \frac{1}{2}M^{\ep}_{01}\right)}
\\
\equivalent (a+b) &= \frac{||m_0-m_1||^2}{\ep} + \frac{1}{2}\log\det\left(I + \frac{1}{2}M^{\ep}_{01}\right).
%\end{equation}
\end{align*}
}
%Plugging $\varphi^\epsilon$ and $\psi^\epsilon$ into \eqref{kanto} yields
%\begin{equation}
It follows that
\begin{align*}
\OT_{d^2}^\epsilon(\mu_0, \mu_1)
%& \bE_{\mu_0}[\varphi^\epsilon] + \bE_{\mu_1}[\psi^\epsilon]
%\\
%&
%-\epsilon\left(\bE_{\mu_0\otimes \mu_1}\left[\exp\left(\frac{1}{\epsilon}
%\left((\varphi \oplus \psi) - d^2\right)
%\right)\right]-1\right)
%\\
%=& \epsilon\left(\bE_{\mu_0}[\log \alpha^\epsilon] + \bE_{\mu_1}[\log \beta^\epsilon]
%\right)
%\\
%&
%- \epsilon \left(\bE_{\mu_0 \otimes \mu_1}\left[
%\alpha^\epsilon \beta^\epsilon \exp\left(-\frac{1}{\epsilon}d^2\right)
%\right]-1\right) \\
=& \ep(a+b) + \epsilon
\bE_{X\sim\mu_0}\left[\la X-m_0, A(X-m_0)\ra + \frac{2}{\ep}\la X-m_0, m_0 - m_1\ra\right] 
\\
&+ 
\ep\bE_{Y\sim\mu_1}\left[\la Y-m_1, B(Y-m_1)\ra + \frac{2}{\ep}\la Y-m_1, m_1 - m_0\ra \right]
\\
=& \ep(a+b) + \epsilon\left(
\trace\left[C_0A\right] + \trace\left[C_1B\right]
\right).
\end{align*}
Here we have invoked Lemma \ref{lemma:integral-quadratic-prob-1}.
For the first trace term, we have
\begin{align*}
\trace[C_0A] &= \frac{1}{\ep}\trace(C_0) - \frac{2}{\ep^2}\trace\left[C_1^{1/2}C_0C_1^{1/2}\left(\frac{1}{2}I + \frac{1}{2}\left(I + \frac{16}{\epsilon^2}C_1^{1/2}C_0C_1^{1/2}\right)^{1/2}\right)^{-1}\right]
\\
 &= \frac{1}{\ep}\trace(C_0) - \frac{1}{4}\trace(M^{\ep}_{10}).
\end{align*}
Similarly, the second term is
\begin{align*}
&\trace(C_1B) = \frac{1}{\ep}\trace(C_1)-\frac{1}{4}\trace(M^{\ep}_{01}).
\end{align*}
Combining all the previous expressions, we obtain
\begin{align*}
&\OT^{\ep}_{d^2}(\mu_0, \mu_1) = ||m_0-m_1||^2 + \trace(C_0) + \trace(C_1) - \frac{\ep}{4}\trace(M^{\ep}_{01}) - \frac{\ep}{4}\trace(M^{\ep}_{10})
\\
& \quad \quad \quad \quad \quad \quad \quad + \frac{\ep}{2}\log\det\left(I + \frac{1}{2}M^{\ep}_{01}\right)
\\
&= ||m_0 - m_1||^2+ \trace(C_0) + \trace(C_1) - \frac{\ep}{2}\trace(M^{\ep}_{01}) +\frac{\ep}{2}\log\det\left(I + \frac{1}{2}M^{\ep}_{01}\right).
\end{align*}
Here we have used the fact that the nonzero eigenvalues of $C_0^{1/2}C_1C_0^{1/2}$ and $C_1^{1/2}C_0C_1^{1/2}$ are the same,
so that $\trace(M^{\ep}_{01}) = \trace(M^{\ep}_{10})$. \qed
\end{proof}

\begin{proof}
	[\textbf{of Corollary \ref{corollary:dual-attain} - dual formulation}]
	With $\varphi^{\ep} = \ep \log{\alpha^{\ep}},\psi^{\ep} = \ep\log{\beta^{\ep}}$,
	\begin{align*}
	&\bE_{\mu_0}\left[\exp\left(\frac{1}{\ep}\varphi^{\ep}\right)\right] = \bE_{\mu_0}[\alpha^{\ep}] = \int_{\H}\alpha^{\ep}(x)d\mu_0(x)
	\\
	&=\int_{\H} \exp\left(\la x-m_0,A(x-m_0)\ra+\frac{2}{\epsilon}\la x-m_0, m_0-m_1\ra + a\right)d\mu_0(x)
	\\
	&= \exp(a)[\det(I - 2C_0^{1/2}AC_0^{1/2})]^{-1/2}
	\\
	&\quad \times \exp\left(\frac{2}{\ep^2}||(I-2C_0^{1/2}A_{\ep}C_0^{1/2})^{-1/2}C_0^{1/2}(m_0-m_1)||^2\right)
	\end{align*}
	by Corollary \ref{corollary:gaussian-integral-2}. Clearly $0 < \bE_{\mu_0}\left(\frac{1}{\ep}\varphi^{\ep}\right) < \infty$ and
	thus $\varphi^{\ep} \in \Lexp(\H, \mu_0)$. Similarly $\psi^{\ep} \in \Lexp(\H, \mu_1)$. Let us now compute $D(\varphi^{\ep}, \psi^{\ep})$.
	We have
	\begin{align*}
	&\bE_{\mu_0}[\varphi^{\ep}] = \ep\bE_{\mu_0}[\log{\alpha^{\ep}}], \;\;\;\bE_{\mu_1}[\psi^{\ep}] = \ep\bE_{\mu_1}[\log{\beta^{\ep}}], 
	\\
	&\int_{\H \times \H}\left[\exp\left(\frac{\varphi^{\ep}(x) + \psi^{\ep}(y) - d^2(x,y)}{\ep}\right)-1\right]d(\mu_0\otimes \mu_1)(x,y)
	\\
	&=
	-1 + \int_{\H \times \H} \alpha^{\ep}(x)\beta^{\ep}(y)\exp\left(-\frac{||x-y||^2}{\ep}\right)d(\mu_0 \otimes \mu_1)(x,y)
	\\
	&= -1 + \int_{\H \times \H}\frac{d\gamma^{\ep}}{d(\mu_0\otimes \mu_1)}(x,y)d(\mu_0 \otimes \mu_1)(x,y)
	= -1 + \int_{\H \times \H}d\gamma^{\ep}(x,y) = 0.
	\end{align*}
	It follows that $D(\varphi^{\ep}, \psi^{\ep}) = \ep\bE_{\mu_0}[\log{\alpha^{\ep}}] + \ep\bE_{\mu_1}[\log{\beta^{\ep}}]=
	\OT^{\ep}_{d^2}(\mu_0, \mu_1)$, as in the proof of Corollary \ref{corollary:OT-regularized-Gaussian}.
	\qed
\end{proof}

\begin{corollary}
[\textbf{Sinkhorn divergence between Gaussian measures on Hilbert space}]
	\label{corollary:Sinkhorn-Gaussian-Hilbert}
	Let $\mu_0 = \Ncal(m_0, C_0)$, $\mu_1 = \Ncal(m_1, C_1)$. Then
	\begin{equation}
	\begin{aligned}
	\Srm^{\ep}_{d^2}(\mu_0, \mu_1) &= ||m_0 - m_1||^2 + \frac{\ep}{4}\trace\left[M^{\ep}_{00} - 2M^{\ep}_{01} + M^{\ep}_{11}\right] 
	\\
	& \quad + \frac{\ep}{4}\log\left[\frac{\det\left(I + \frac{1}{2}M^{\ep}_{01}\right)^2}{\det\left(I + \frac{1}{2}M^{\ep}_{00}\right)\det\left(I + \frac{1}{2}M^{\ep}_{11}\right)}\right].
	\end{aligned}
	\end{equation}
\end{corollary}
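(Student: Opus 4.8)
The plan is to derive the formula by direct substitution of the entropic $2$-Wasserstein expression of Theorem \ref{theorem:OT-regularized-Gaussian} (equivalently Corollary \ref{corollary:OT-regularized-Gaussian}) into the definition of the Sinkhorn divergence, $\Srm^{\ep}_{d^2}(\mu_0,\mu_1) = \OT^{\ep}_{d^2}(\mu_0,\mu_1) - \frac{1}{2}\left(\OT^{\ep}_{d^2}(\mu_0,\mu_0) + \OT^{\ep}_{d^2}(\mu_1,\mu_1)\right)$ from \eqref{equation:sinkhorn}. First I would record the three needed instances. Applying Theorem \ref{theorem:OT-regularized-Gaussian} to the pair $(\mu_0,\mu_1)$ gives the ``cross'' term exactly as in \eqref{equation:gauss-entropic-2-Wasserstein-infinite}. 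Applying it to $(\mu_0,\mu_0)$, the mean term $\|m_0-m_0\|^2$ vanishes and $\trace(C_0)+\trace(C_0) = 2\trace(C_0)$, so $\OT^{\ep}_{d^2}(\mu_0,\mu_0) = 2\trace(C_0) - \frac{\ep}{2}\trace(M^{\ep}_{00}) + \frac{\ep}{2}\log\det\left(I + \frac{1}{2}M^{\ep}_{00}\right)$, where $M^{\ep}_{00} = -I + (I + \frac{16}{\ep^2}C_0^2)^{1/2}$ is the $i=j=0$ case of \eqref{equation:M-ep}; similarly for $\OT^{\ep}_{d^2}(\mu_1,\mu_1)$ with $C_1$ and $M^{\ep}_{11}$.

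Next I would carry out the elementary algebra. Substituting the three expressions, the terms $\trace(C_0)$ and $\trace(C_1)$ cancel against $-\frac{1}{2}\cdot 2\trace(C_0)$ and $-\frac{1}{2}\cdot 2\trace(C_1)$, while $\|m_0-m_1\|^2$ survives. The trace-of-$M$ contributions combine to $-\frac{\ep}{2}\trace(M^{\ep}_{01}) + \frac{\ep}{4}\trace(M^{\ep}_{00}) + \frac{\ep}{4}\trace(M^{\ep}_{11}) = \frac{\ep}{4}\trace\!\left[M^{\ep}_{00} - 2M^{\ep}_{01} + M^{\ep}_{11}\right]$, and the logarithmic contributions, using that $\log\det$ turns products into sums and $\log$ of a ratio splits, combine to $\frac{\ep}{2}\log\det\left(I+\frac{1}{2}M^{\ep}_{01}\right) - \frac{\ep}{4}\log\det\left(I+\frac{1}{2}M^{\ep}_{00}\right) - \frac{\ep}{4}\log\det\left(I+\frac{1}{2}M^{\ep}_{11}\right) = \frac{\ep}{4}\log\!\left[\frac{\det(I+\frac{1}{2}M^{\ep}_{01})^2}{\det(I+\frac{1}{2}M^{\ep}_{00})\det(I+\frac{1}{2}M^{\ep}_{11})}\right]$. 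Assembling these pieces is precisely \eqref{equation:gauss-sinkhorn-infinite}.

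The only point that needs care — and it is a genuine point rather than an obstacle — is that each of the three quantities $\OT^{\ep}_{d^2}(\mu_i,\mu_j)$ is finite and that each $\log\det(I+\frac{1}{2}M^{\ep}_{ij})$ is a bona fide Fredholm determinant. This is already guaranteed by the remark following Theorem \ref{theorem:optimal-joint-square-gauss}: for all $i,j\in\{0,1\}$, including $i=j$, one has $M^{\ep}_{ij} = \frac{16}{\ep^2}C_i^{1/2}C_jC_i^{1/2}\left[I + \left(I + \frac{16}{\ep^2}C_i^{1/2}C_jC_i^{1/2}\right)^{1/2}\right]^{-1} \in \Sym^{+}(\H)\cap\Tr(\H)$, so $I + \frac{1}{2}M^{\ep}_{ij}$ is positive definite with well-defined positive Fredholm determinant and all traces appearing are finite; hence the term-by-term manipulation above is legitimate, which completes the proof. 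A one-line specialization (taking $\H = \R^n$, so that $\trace$ is the matrix trace and $\det$ the ordinary determinant) then recovers the finite-dimensional formula \eqref{equation:gauss-sinkhorn-finite}.
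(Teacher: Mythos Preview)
Your proposal is correct and follows essentially the same approach as the paper: direct substitution of the formula from Theorem~\ref{theorem:OT-regularized-Gaussian} into the definition $\Srm^{\ep}_{d^2}(\mu_0,\mu_1) = \OT^{\ep}_{d^2}(\mu_0,\mu_1) - \frac{1}{2}\OT^{\ep}_{d^2}(\mu_0,\mu_0) - \frac{1}{2}\OT^{\ep}_{d^2}(\mu_1,\mu_1)$, followed by the obvious cancellation and regrouping. Your additional remark that $M^{\ep}_{ij}\in\Sym^{+}(\H)\cap\Tr(\H)$ ensures the Fredholm determinants and traces are well-defined is a welcome justification that the paper leaves implicit.
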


\begin{proof}
	By definition of the Sinkhorn divergence and Theorem \ref{theorem:OT-regularized-Gaussian},
	\begin{align*}
	&\Srm^{\ep}_{d^2}(\mu_0, \mu_1) = \OT^{\ep}_{d^2}(\mu_0, \mu_1) - \frac{1}{2}\OT^{\ep}_{d^2}(\mu_0, \mu_0) - \frac{1}{2}\OT^{\ep}_{d^2}(\mu_1, \mu_1)
	\\ 
	&= \quad||m_0 - m_1||^2 + \trace(C_0) + \trace(C_1) -\frac{\ep}{2}\trace(M^{\ep}_{01}) + \frac{\ep}{2}\log\det\left(I + \frac{1}{2}M^{\ep}_{01}\right)
	\\
	& \quad - \frac{1}{2}\left[2\trace(C_0) -\frac{\ep}{2}\trace(M^{\ep}_{00}) + \frac{\ep}{2}\log\det\left(I + \frac{1}{2}M^{\ep}_{00}\right)\right]
	\\
	& \quad - \frac{1}{2}\left[2\trace(C_1) -\frac{\ep}{2}\trace(M^{\ep}_{11}) + \frac{\ep}{2}\log\det\left(I + \frac{1}{2}M^{\ep}_{11}\right)\right]
	\\
	& = ||m_0 - m_1||^2 + \frac{\ep}{4}\trace\left[M^{\ep}_{00} - 2M^{\ep}_{01} + M^{\ep}_{11}\right] 
	+ \frac{\ep}{4}\log\left[\frac{\det\left(I + \frac{1}{2}M^{\ep}_{01}\right)^2}{\det\left(I + \frac{1}{2}M^{\ep}_{00}\right)\det\left(I + \frac{1}{2}M^{\ep}_{11}\right)}\right].
	\end{align*}
	This completes the proof. \qed
\end{proof}

\begin{proof}
	\textbf{of Theorem \ref{theorem:equivalent-formulas} - Equivalent expressions}.
		% for the entropic $2$-Wasserstein distance and Sinkhorn divergence}]
	%Let $c_{\ep} = \frac{4}{\ep}$.
	%We rewrite $L^{\ep}_{ij}$ as
	%\begin{align*}
	%$L^{\ep}_{ij} = 2C_i^{1/2}C_jC_i^{1/2}\left(I+(I + c_{\ep}^2C_i^{1/2}C_jC_i^{1/2})^{1/2}\right)^{-1}$
	Let $c_{\ep} = \frac{4}{\ep}$.
	%\end{align*}
	Then
	\begin{align*}
	C_i^{1/2}C_jC_i^{1/2} - L^{\ep}_{ij} &= c_{\ep}^2(C_i^{1/2}C_jC_i^{1/2})^2\left(I+(I + c_{\ep}^2C_i^{1/2}C_jC_i^{1/2})^{1/2}\right)^{-2},
	\\
	(C_i^{1/2}C_jC_i^{1/2} - L^{\ep}_{ij})^{1/2} &= c_{\ep}C_i^{1/2}C_jC_i^{1/2}\left(I+(I + c_{\ep}^2C_i^{1/2}C_jC_i^{1/2})^{1/2}\right)^{-1}
	\\
	& =\frac{1}{c_{\ep}}\left( -I + (I + c_{\ep}^2C_i^{1/2}C_jC_i^{1/2})^{1/2}\right) =\frac{\ep}{4} M^{\ep}_{ij}
	\\
	& =-\frac{\ep}{4}I + \left(\frac{\ep^2}{16}I + C_i^{1/2}C_jC_i^{1/2}\right)^{1/2}.
	%\\
	%\frac{C_i^{1/2}C_jC_i^{1/2}}{L^{\ep}_{ij}} & = \frac{1}{2}I+\frac{1}{2}(I + c_{\ep}^2C_i^{1/2}C_jC_i^{1/2})^{1/2} = I + \frac{1}{2}M^{\ep}_{ij},
	%\\
	%\frac{L^{\ep}_{00}L^{\ep}_{11}}{(L^{\ep}_{01})^2} &=\frac{(I + \frac{1}{2}M^{\ep}_{01})^2}{(I + \frac{1}{2}M^{\ep}_{00})(I + \frac{1}{2}M^{\ep}_{11})}.
	\end{align*}
	Using these expressions, we see that the formulas for $\OT^{\ep}_{d^2}$ and $\Srm^{\ep}_{d^2}$ coincide with those in Theorems \ref{theorem:OT-regularized-Gaussian} and \ref{theorem:Sinkhorn-Gaussian-Hilbert}.
	Furthermore, it is immediately clear that
	\begin{align*}
	\lim_{\ep \approach 0}\trace(C_i^{1/2}C_jC_i^{1/2} - L^{\ep}_{ij})^{1/2} &= \trace(C_i^{1/2}C_jC_i^{1/2})^{1/2},
	\\
	\lim_{\ep \approach \infty}\trace(C_i^{1/2}C_jC_i^{1/2} - L^{\ep}_{ij})^{1/2} &= 0.
	\end{align*}
	By L'Hopital's rule, we have $\forall x \geq 0$,
	%\begin{align*}
	$\lim_{\ep \approach 0}\ep\log(\frac{1}{2} + \frac{1}{2}(1+\frac{16}{\ep^2}x)^{1/2}) = \lim_{\ep \approach \infty}\ep\log(\frac{1}{2} + \frac{1}{2}(1+\frac{16}{\ep^2}x)^{1/2}) = 0$.
	%\end{align*}
	It follows that
	\begin{align*}
	\lim_{\ep \approach 0}\ep\log\det(I + \frac{1}{2}M^{\ep}_{ij}) = \lim_{\ep \approach \infty}\ep\log\det(I + \frac{1}{2}M^{\ep}_{ij}) = 0.
	\end{align*}
	Combining these limits gives the limiting behavior of $\OT^{\ep}_{d^2}$ and $\Srm^{\ep}_{d^2}$. \qed
\end{proof}

\begin{proof}
	\textbf{of Theorem \ref{theorem:RKHS-distance} - RKHS setting}. Let $\lambda(A)$ denote the set of nonzero eigenvalues of a compact operator $A$. Then
	\begin{align*}
	\lambda(C_{\Phi(\Xbf)}) & = \frac{1}{m}\lambda([\Phi(\Xbf) J_m \Phi(\Xbf)^{*}]) = \frac{1}{m}\lambda([\Phi(\Xbf)^{*}\Phi(\Xbf) J_m ])
	\\
	& = \frac{1}{m}\lambda(K[\Xbf]J_m) = \frac{1}{m}\lambda(J_mK[\Xbf]J_m), \;\;\text{since $J_m^2 = J_m$}
	\\
	\lambda(C_{\Phi(\Ybf)}) &= \frac{1}{m}\lambda(K[\Ybf]J_m) = \frac{1}{m}\lambda(J_mK[\Ybf]J_m),
	\\
	\lambda(C_{\Phi(\Xbf)}^2) &= \frac{1}{m^2}\lambda([\Phi(\Xbf) J_m \Phi(\Xbf)^{*}][\Phi(\Xbf) J_m \Phi(\Xbf)^{*}])
	\\
	&= \frac{1}{m^2}\lambda([\Phi(\Xbf)^{*}\Phi(\Xbf) J_m \Phi(\Xbf)^{*}\Phi(\Xbf) J_m ]) = \frac{1}{m^2}\lambda[(K[\Xbf]J_m)^2]
	\\
	& = \frac{1}{m^2}\lambda[(J_mK[\Xbf]J_m)^2],
	\\
	\lambda(C_{\Phi(\Ybf)}^2) &=\frac{1}{m^2}\lambda[(K[\Ybf]J_m)^2] = \frac{1}{m^2}\lambda[(J_mK[\Ybf]J_m)^2],
	\\
	%\end{align*}
	%\begin{align*}
	\lambda(C_{\Phi(\Ybf)}^{1/2}C_{\Phi(\Xbf)}C_{\Phi(\Ybf)}^{1/2}) &=\lambda(C_{\Phi(\Xbf)}^{1/2}C_{\Phi(\Ybf)}C_{\Phi(\Xbf)}^{1/2}) = \lambda(C_{\Phi(\Ybf)}C_{\Phi(\Xbf)}) 
	\\
	&= \frac{1}{m^2}\lambda([\Phi(\Ybf) J_m \Phi(\Ybf)^{*}][\Phi(\Xbf) J_m \Phi(\Xbf)^{*}])
	\\
	&= \frac{1}{m^2}\lambda(K[\Xbf,\Ybf]J_mK[\Ybf,\Xbf]J_m)
	\\
	& = \frac{1}{m^2}\lambda(J_mK[\Xbf,\Ybf]J_mK[\Ybf,\Xbf]J_m)
	\end{align*}
	Combining these with the expressions for $\OT^{\ep}_{d^2}$ and $\Srm^{\ep}_{d^2}$ in Theorems 
	\ref{theorem:OT-regularized-Gaussian} and \ref{theorem:Sinkhorn-Gaussian-Hilbert} gives the desired results. \qed
\end{proof}

\section{Entropic $2$-Wasserstein barycenter}
\label{section:entropic-barycenter}
  
In this section, we prove Theorem \ref{theorem:entropic-OT-convexity} on the convexity of $\OT^{\ep}_{d^2}$ and
Theorem \ref{theorem:entropic-barycenter-Gaussian} on the $\OT^{\ep}_{d^2}$-based barycenter.
%This is a strictly convex minimization problem in $\mu$ and hence a minimizer, if it exists, is unique.
%The following is the main result in this section.
%{\bf One-dimensional analysis}. It is instructive to consider the one-dimensional case.
%The results of the following lemma are subsequently used in the higher-dimensional cases.
We first recall the concept of the Fr\'echet derivative on Banach spaces (see e.g. \cite{Jost:1998}). Let {$V,W$} be Banach spaces
and {$\Lcal(V,W)$} be the Banach space of bounded linear maps between {$V$} and {$W$}. Assume that {$f:\Omega \mapto W$} is well-defined, where
{$\Omega$} is an open subset of $V$. Then the map {$f$} is said to be Fr\'echet differentiable at {$x_0 \in \Omega$} if there exists a bounded 
linear map {$Df(x_0): V \mapto W$} such that
	\begin{align*}
	\lim_{h \approach 0}\frac{||f(x_0+h) - f(x_0) - Df(x_0)(h)||_W}{||h||_V} = 0.
	\end{align*}
The map {$Df(x_0)$} is called the Fr\'echet derivative of $f$ at $x_0$. 
%Consider the case 
Let now $W = \R$.
%Assume that 
If {$x_0$} is a local minimizer for {$f$}, then  necessarily
(see e.g. Theorem 1.33 in \cite{Convex:2015})
	\begin{align}
	Df(x_0) = 0.
	\end{align}
	Furthermore, if $f$ is Fr\'echet differentiable, then (Proposition 3.11 in \cite{Convex:2015}) for $\Omega \subset V$ open and convex,
	\begin{equation}
	\label{equation:Frechet-derivative-strictly-convex}
	f \text{ is strictly convex } \equivalent f(y) > f(x) + Df(x)(y-x), \forall x, y \in \Omega.
	\end{equation}
	Thus $Df(x_0) = 0$ implies that $x_0$ is the unique global minimizer for $f$.

If the map {$Df:\Omega \mapto \Lcal(V,W)$} is differentiable at {$x_0$}, then its Fr\'echet derivative at {$x_0$},
denoted by {$D^2f(x_0): V \mapto \Lcal(V,W)$}, is called the second order derivative of {$f$} at {$x_0$}.
The bounded linear map {$D^2f(x_0) \in\Lcal(V,\Lcal(V,W))$}, can be identified with a bounded bilinear map from {$V \times V \mapto W$}, via
\begin{align*}
D^2f(x_0)(x,y)  = (D^2f(x_0)(x))(y), \;\; x, y\in V.
\end{align*}
Under this identification, {$D^2f(x_0)$} is a symmetric, continuous bilinear map from {$V \times V \mapto W$},
so that
%\begin{align*}
$D^2f(x_0)(x,y) = D^2f(x_0)(y,x) \;\; \forall x,y \in V$.
%\end{align*}
For $W = \R$,
if $f$ is twice differentiable on $\Omega$, then in addition to \eqref{equation:Frechet-derivative-strictly-convex},
\begin{align}
&\text{$f$ is convex } \equivalent D^2f(x_0)(x,x) \geq 0 \;\forall x_0 \in \Omega, \forall x \in V,
\\
& D^2f(x_0)(x,x) > 0 \;\forall x_0 \in \Omega, \forall x \in V, x\neq 0 \imply
\text{$f$ is strictly convex }.
\end{align}

In the following, we focus on the Banach space $\Lcal(\H)$ of bounded operators, the subspace $\Sym(\H)\subset \Lcal(\H)$ of self-adjoint bounded operators, and the Banach space $\Tr(\H)$ of trace class operators on $\H$, respectively.

The following two results are straightforward.
% from the definition of the Fr\'echet derivative.

\begin{lemma}
		Let $A, B \in \Lcal(\H)$ be fixed and consider the function $f:\Lcal(\H) \mapto \Lcal(\H)$ defined by
$f(X) = AXB$. Then
\begin{align}
Df(X_0)(X) = AXB, \;\;\; X_0, X \in \Lcal(\H).
\end{align}
\end{lemma}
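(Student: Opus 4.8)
The map $f(X) = AXB$ is linear in $X$, so one expects it to be its own Fr\'echet derivative at every point. The plan is therefore to verify the definition of the Fr\'echet derivative directly: fix $X_0 \in \Lcal(\H)$, set $T(X) = AXB$, and check (i) that $T \in \Lcal(\Lcal(\H), \Lcal(\H))$, i.e. $T$ is a bounded linear operator on $\Lcal(\H)$, and (ii) that the remainder $f(X_0 + h) - f(X_0) - T(h)$ vanishes identically.

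For (i), linearity of $T$ is immediate from the bilinearity of operator composition, and boundedness follows from the submultiplicativity of the operator norm: $\|T(X)\| = \|AXB\| \le \|A\|\,\|X\|\,\|B\|$, so $\|T\| \le \|A\|\,\|B\| < \infty$. For (ii), I would compute
\begin{align*}
f(X_0 + h) - f(X_0) - T(h) &= A(X_0 + h)B - AX_0B - AhB \\
&= AX_0B + AhB - AX_0B - AhB = 0
\end{align*}
for every $h \in \Lcal(\H)$. Hence
\begin{align*}
\lim_{h \to 0}\frac{\|f(X_0 + h) - f(X_0) - T(h)\|}{\|h\|} = 0,
\end{align*}
which by definition means $f$ is Fr\'echet differentiable at $X_0$ with $Df(X_0) = T$, i.e. $Df(X_0)(X) = AXB$ for all $X \in \Lcal(\H)$. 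Since $X_0$ was arbitrary, this holds everywhere.

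There is essentially no obstacle here: the only thing to be careful about is confirming that the candidate derivative lies in the correct space $\Lcal(\Lcal(\H), \Lcal(\H))$ (rather than merely being a linear map), which is exactly the norm estimate above. The result is the operator-valued analogue of the familiar fact that a linear map equals its own derivative, and it will be invoked repeatedly (via the chain rule and the product rule for Fr\'echet derivatives) when differentiating the operator expressions defining $F_E$ and $F_S$.
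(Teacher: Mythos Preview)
Your proof is correct and complete. The paper states this lemma without proof (it simply notes that it and the following trace-derivative lemma are ``straightforward''), so your direct verification of the Fr\'echet derivative definition is exactly what is implicitly intended.
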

\begin{lemma}
	\label{lemma:derivative-trace}

	For the function $\trace:\Tr(\H) \mapto \R$, 
	\begin{align}
	D\trace(X_0)(X) = \trace(X), \;\;\; X_0, X \in \Tr(\H).
	\end{align}
\end{lemma}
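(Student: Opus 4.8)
\textbf{Proof proposal for Lemma \ref{lemma:derivative-trace}.}

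The plan is to verify the defining limit for the Fr\'echet derivative directly, exploiting the fact that the trace is itself a bounded linear functional on $\Tr(\H)$. First I would recall that $\trace:\Tr(\H) \mapto \R$ is linear: for any $X_0, X \in \Tr(\H)$ we have $\trace(X_0 + X) = \trace(X_0) + \trace(X)$, and it is bounded since $|\trace(X)| \leq ||X||_{\tr}$. A bounded linear map is its own Fr\'echet derivative at every point; so the candidate derivative is the map $X \mapsto \trace(X)$, which is evidently in $\Lcal(\Tr(\H), \R)$.

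To confirm this against the definition, I would compute, for fixed $X_0 \in \Tr(\H)$ and increment $h \in \Tr(\H)$,
\begin{align*}
\trace(X_0 + h) - \trace(X_0) - \trace(h) = 0,
\end{align*}
using linearity of the trace. Hence
\begin{align*}
\lim_{h \approach 0}\frac{|\trace(X_0+h) - \trace(X_0) - \trace(h)|}{||h||_{\tr}} = 0
\end{align*}
trivially, since the numerator vanishes identically. By the uniqueness of the Fr\'echet derivative, $D\trace(X_0)(X) = \trace(X)$ for all $X_0, X \in \Tr(\H)$, which is the claim.

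There is essentially no obstacle here: the only point worth a sentence is that one must work in the Banach space $\Tr(\H)$ under the trace norm $||\cdot||_{\tr}$ (not, say, the operator norm), since it is on $(\Tr(\H), ||\cdot||_{\tr})$ that $\trace$ is a well-defined bounded linear functional; the boundedness estimate $|\trace(X)| \leq ||X||_{\tr}$ is exactly what makes $\trace$ continuous and therefore differentiable in the required sense. The same remark explains why the statement is phrased with $\Tr(\H)$ as both domain and the space the increments live in.
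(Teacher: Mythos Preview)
Your proof is correct and is exactly the natural argument: the trace is a bounded linear functional on $(\Tr(\H),||\cdot||_{\tr})$, and a bounded linear map is its own Fr\'echet derivative at every point. The paper does not give a proof of this lemma at all --- it simply introduces it (together with the preceding lemma on $X\mapsto AXB$) with the remark ``The following two results are straightforward.'' --- so your direct verification is precisely what is implicitly being invoked.
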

The following are special cases of Lemmas 3 and 4 in \cite{Minh:2019AlphaBeta}, respectively.

\begin{lemma}
	\label{lemma:derivative-det-Fredholm}
	Let $\Omega = \{A \in \Tr(\H): I+A \text{ is invertible}\}$. Define $f: \Omega \mapto \R$ by
	$f(X) = \det(I+X)$. Then $Df(X_0): \Tr(\H) \mapto \R$, $X_0 \in \Omega$, is given by
	\begin{align}
	Df(X_0)(X) = \det(I+X_0)\trace[(I+X_0)^{-1}X], \;\;\; X \in \Tr(\H).
	\end{align}	
\end{lemma}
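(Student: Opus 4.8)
\textbf{Proof proposal for Lemma \ref{lemma:derivative-det-Fredholm}.}

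The plan is to reduce the Fr\'echet differentiability of the Fredholm determinant $f(X) = \det(I+X)$ on the open set $\Omega = \{A \in \Tr(\H): I+A \text{ invertible}\}$ to a perturbation estimate valid in the trace norm. First I would fix $X_0 \in \Omega$ and write, for $X \in \Tr(\H)$ with $\|X\|_{\tr}$ small enough that $I + X_0 + X$ remains invertible (possible since $\Omega$ is open in the trace-norm topology and inversion is continuous there), the factorization
\begin{align*}
\det(I + X_0 + X) &= \det(I+X_0)\det\bigl(I + (I+X_0)^{-1}X\bigr),
\end{align*}
using the multiplicativity of the Fredholm determinant together with the fact that $(I+X_0)^{-1}X \in \Tr(\H)$ whenever $X \in \Tr(\H)$ (since $\Tr(\H)$ is an ideal in $\Lcal(\H)$), so the right-hand determinant is well-defined.

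The key step is then the first-order expansion of $Y \mapsto \det(I+Y)$ at $Y = 0$. I would invoke the standard estimate for Fredholm determinants (see e.g. Simon, \cite{Simon:1977}), namely
\begin{align*}
\left|\det(I+Y) - 1 - \trace(Y)\right| &\leq \|Y\|_{\tr}^2 \exp(C\,\|Y\|_{\tr})
\end{align*}
for some absolute constant $C$, valid for $Y \in \Tr(\H)$. Applying this with $Y = (I+X_0)^{-1}X$ and using $\|Y\|_{\tr} \leq \|(I+X_0)^{-1}\|\,\|X\|_{\tr}$ gives
\begin{align*}
\det(I+X_0+X) &= \det(I+X_0)\Bigl(1 + \trace\bigl[(I+X_0)^{-1}X\bigr] + O(\|X\|_{\tr}^2)\Bigr),
\end{align*}
so that $f(X_0+X) - f(X_0) - \det(I+X_0)\trace[(I+X_0)^{-1}X] = O(\|X\|_{\tr}^2)$. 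Since the candidate derivative $X \mapsto \det(I+X_0)\trace[(I+X_0)^{-1}X]$ is manifestly a bounded linear functional on $\Tr(\H)$ (bounded by $|\det(I+X_0)|\,\|(I+X_0)^{-1}\|$ in trace norm), dividing by $\|X\|_{\tr}$ and letting $X \to 0$ yields exactly the Fr\'echet differentiability claim with the stated formula.

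The main obstacle is ensuring the perturbation inequality for $\det(I+Y)$ is available in the precise form needed; since the lemma is stated as a special case of Lemma 3 in \cite{Minh:2019AlphaBeta}, I would simply cite that reference for the quantitative bound rather than rederive it, and the remaining argument is the routine algebraic reduction above. A minor point worth checking is that $\Omega$ is genuinely open in $(\Tr(\H), \|\cdot\|_{\tr})$: this follows because $X \mapsto I+X$ is continuous into $\Lcal(\H)$ (as $\|\cdot\| \leq \|\cdot\|_{\tr}$) and the set of invertible operators is open in $\Lcal(\H)$, so the preimage is open in $\Tr(\H)$.
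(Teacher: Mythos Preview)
Your proposal is correct and in fact supplies more than the paper does: the paper gives no proof at all for this lemma, merely stating that it is a special case of Lemma~3 in \cite{Minh:2019AlphaBeta}. Your argument via the multiplicativity of the Fredholm determinant and the standard second-order remainder bound for $\det(I+Y)$ near $Y=0$ is the usual route to this result and is entirely sound; the observation that $\Omega$ is open in $(\Tr(\H),\|\cdot\|_{\tr})$ is also correctly justified.
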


\begin{lemma}
	\label{lemma:derivative-logdet}
	Let $\Omega = \{A \in \Tr(\H): I+A > 0\}$. Let $f: \Omega \mapto \R$ be defined by
	$f(X) = \log\det(I+X)$. Then $Df(X_0): \Tr(\H) \mapto \R$, $X_0 \in \Omega$, is given by 
	\begin{align}
	Df(X_0)(X) = \trace[(I+X_0)^{-1}(X)], \;\;\; X \in \Tr(\H).
	\end{align}
\end{lemma}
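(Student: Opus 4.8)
The plan is to obtain this as an immediate consequence of the chain rule, combining Lemma \ref{lemma:derivative-det-Fredholm} with the elementary derivative of the scalar logarithm. First I would record two preliminary observations. (i) The set $\Omega = \{A \in \Tr(\H): I + A > 0\}$ is open in $\Tr(\H)$ under the trace-norm topology: if $I + A_0 \geq m I$ for some $m > 0$, then for any $B \in \Tr(\H)$ with $\|B\|_{\tr} < m$ we have $\|B\| \leq \|B\|_{\tr} < m$, hence $I + A_0 + B \geq (m - \|B\|) I > 0$, so a trace-norm ball around $A_0$ stays in $\Omega$. In particular $\Omega \subset \{A \in \Tr(\H): I + A \text{ invertible}\}$, so Lemma \ref{lemma:derivative-det-Fredholm} is applicable on $\Omega$. (ii) For $X_0 \in \Omega$ the Fredholm determinant $\det(I + X_0) = \prod_{j=1}^{\infty}(1 + \lambda_j)$, where $\{\lambda_j\}_{j \in \Nbb}$ are the eigenvalues of $X_0$, is strictly positive, since $I + X_0 > 0$ forces $1 + \lambda_j > 0$ for every $j$ and the product converges because $X_0 \in \Tr(\H)$.

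Next, write $f = \ell \circ g$, where $g : \Omega \mapto (0,\infty)$, $g(X) = \det(I+X)$, and $\ell : (0,\infty) \mapto \R$, $\ell(t) = \log t$. By Lemma \ref{lemma:derivative-det-Fredholm}, $g$ is Fr\'echet differentiable on $\Omega$ with $Dg(X_0)(X) = \det(I+X_0)\,\trace[(I+X_0)^{-1}X]$, while $\ell$ is differentiable at $g(X_0) > 0$ with $\ell'(g(X_0)) = 1/\det(I+X_0)$. The chain rule for Fr\'echet derivatives then gives, for every $X \in \Tr(\H)$,
\begin{align*}
Df(X_0)(X) &= \ell'(g(X_0))\, Dg(X_0)(X) = \frac{1}{\det(I+X_0)}\,\det(I+X_0)\,\trace[(I+X_0)^{-1}X]
\\
&= \trace[(I+X_0)^{-1}X].
\end{align*}
Finally I would check that this formula genuinely defines a bounded linear functional on $\Tr(\H)$: since $(I+X_0)^{-1} \in \Lcal(\H)$ and $X \in \Tr(\H)$, the product $(I+X_0)^{-1}X$ is trace class with $|\trace[(I+X_0)^{-1}X]| \leq \|(I+X_0)^{-1}\|\,\|X\|_{\tr}$, so $X \mapsto \trace[(I+X_0)^{-1}X]$ is continuous, confirming it is legitimately the Fr\'echet derivative.

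There is essentially no serious obstacle here; the argument is routine. The only points needing a line of justification are the openness of $\Omega$ in the trace-norm topology (so that Fr\'echet differentiability is meaningful) and the strict positivity of $\det(I+X_0)$ (so that $\log$ is differentiable at that value), both of which follow from $I + X_0 \geq m I$ for some $m > 0$. Alternatively, if one prefers a self-contained computation bypassing Lemma \ref{lemma:derivative-det-Fredholm}, one can use the factorization $\log\det(I + X_0 + H) = \log\det(I + X_0) + \log\det(I + (I+X_0)^{-1}H)$ together with the expansion $\log\det(I+T) = \trace(T) + o(\|T\|_{\tr})$ as $\|T\|_{\tr} \to 0$ (obtained by applying $\log(1+t) = t + O(t^2)$ to the eigenvalues of $T$ and using the trace-norm bound), which yields the same expression for $Df(X_0)$.
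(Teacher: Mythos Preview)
Your argument is correct. The paper itself does not prove this lemma; it simply records it as a special case of a result in \cite{Minh:2019AlphaBeta}. Your chain-rule derivation from Lemma \ref{lemma:derivative-det-Fredholm}, together with the check that $\Omega$ is open in $\Tr(\H)$ and that $\det(I+X_0)>0$, is exactly the natural proof one would expect and fills in what the paper leaves to the citation.
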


\begin{lemma}
\label{lemma:derivative-square-root}
Let $\mysqrt:\Sym^{+}(\H) \mapto \Sym^{+}(\H)$ be defined by $\mysqrt(X) = X^{1/2}$. 
Let $\Omega \subset \Sym^{+}(\H)$ be an open subset.
The derivative $D\mysqrt(X_0): 
\Sym(\H) \mapto \Sym(\H)$, $X_0 \in \Omega \subset \Sym^{+}(\H)$, is given by 
\begin{align}
(X_0)^{1/2}D\mysqrt(X_0)(X) + D\mysqrt(X_0)(X)(X_0)^{1/2} = X, \;\;\; X \in \Sym(\H).
\end{align}
If, furthermore, $X_0$ is invertible and $X \in \Sym(\H) \cap \Tr(\H)$, then
\begin{align}
\trace[D\mysqrt(X_0)(X)] = \frac{1}{2}\trace[(X_0)^{-1/2}X].
\end{align}
In general, let $f:\Lcal(\H) \mapto \Lcal(\H)$ be such that $f(X_0)$ and $X_0$ commute, then
\begin{align}
\trace [f(X_0)D\mysqrt(X_0)(X)] = \frac{1}{2}\trace[(X_0)^{-1/2}f(X_0)X].
\end{align}
\end{lemma}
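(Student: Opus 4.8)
The plan is to exploit that $\mysqrt$ is the inverse of the squaring map $\mysq(Y)=Y^2$ and to identify its derivative through the Sylvester equation obtained by differentiating $\mysqrt(X)^2 = X$. First I would record the standing reduction that, since $X_0$ lies in the open subset $\Omega$ of $\Sym^{+}(\H)$, it is positive definite, i.e. $X_0 \geq \delta I$ for some $\delta>0$ (automatic when $\Omega$ is open in $\Sym(\H)$ for the operator norm, and valid in all applications where the argument of $\mysqrt$ has the form $I + (\text{positive trace class})$); hence $A := X_0^{1/2} \geq \sqrt{\delta}\,I$ is invertible. Next I would note that the Sylvester operator $\mathcal{L}_A:\Lcal(\H)\mapto\Lcal(\H)$, $\mathcal{L}_A(Y)=AY+YA$, is a bounded linear bijection with bounded inverse $\mathcal{L}_A^{-1}(X)=\int_0^{\infty}e^{-tA}Xe^{-tA}\,\d t$ satisfying $\|\mathcal{L}_A^{-1}\|\leq \frac{1}{2\sqrt{\delta}}$; this is checked by differentiating $t\mapsto e^{-tA}Xe^{-tA}$ and integrating from $0$ to $\infty$.

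To establish Fr\'echet differentiability of $\mysqrt$ at $X_0$ and identify the derivative, I would set $R(H)=\mysqrt(X_0+H)-\mysqrt(X_0)$ for small $H\in\Sym(\H)$; squaring gives $\mathcal{L}_A(R(H)) = H - R(H)^2$. From continuity of the operator square root ($\|R(H)\|\approach 0$) and the bound on $\mathcal{L}_A^{-1}$ one obtains $\|R(H)\|\leq \frac{1}{2\sqrt{\delta}}(\|H\|+\|R(H)\|^2)$, which for $H$ small yields $\|R(H)\|=O(\|H\|)$; substituting this back gives $R(H)=\mathcal{L}_A^{-1}(H)+\mathcal{L}_A^{-1}(-R(H)^2)=\mathcal{L}_A^{-1}(H)+o(\|H\|)$. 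Hence $D\mysqrt(X_0)(X)=\mathcal{L}_A^{-1}(X)$, which is exactly the unique solution $Y$ of $(X_0)^{1/2}Y+Y(X_0)^{1/2}=X$. Taking adjoints in this equation and invoking uniqueness shows $Y=Y^*$ whenever $X=X^*$, so $D\mysqrt(X_0)$ maps $\Sym(\H)$ into $\Sym(\H)$, as claimed.

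For the trace identities, take $X\in\Sym(\H)\cap\Tr(\H)$ and $Y=D\mysqrt(X_0)(X)=\int_0^{\infty}e^{-tA}Xe^{-tA}\,\d t$. Since $\|e^{-tA}Xe^{-tA}\|_{\tr}\leq e^{-2t\sqrt{\delta}}\|X\|_{\tr}$, the integral converges in trace norm and $Y\in\Tr(\H)$. Left-multiplying $AY+YA=X$ by $A^{-1}=(X_0)^{-1/2}$ gives $Y+A^{-1}YA=A^{-1}X$; all terms are trace class, so taking traces and applying cyclicity ($\trace(A^{-1}YA)=\trace(YAA^{-1})=\trace(Y)$) yields $2\trace(Y)=\trace((X_0)^{-1/2}X)$. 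For the general version, let $f:\Lcal(\H)\mapto\Lcal(\H)$ with $f(X_0)X_0=X_0f(X_0)$; since $A$ is a norm-limit of polynomials in $X_0$ (equivalently by the functional calculus), $f(X_0)$ commutes with $A$ and $A^{-1}$. Left-multiplying $Y+A^{-1}YA=A^{-1}X$ by $f(X_0)$, using $f(X_0)A^{-1}YA=A^{-1}f(X_0)YA$ together with $\trace(A^{-1}f(X_0)YA)=\trace(Af(X_0)A^{-1}Y)=\trace(f(X_0)Y)$, gives $2\trace[f(X_0)Y]=\trace[(X_0)^{-1/2}f(X_0)X]$; the traces are well defined because $X\in\Tr(\H)$ and the remaining factors are bounded.

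The hard part will not be any single idea but the analytic bookkeeping: making the a priori differentiability argument rigorous with honest $o(\|H\|)$ control (correctly resolving the self-referential estimate $\|R(H)\|\leq \frac{1}{2\sqrt{\delta}}(\|H\|+\|R(H)\|^2)$), and propagating trace-class-ness through the Sylvester solution so that every cyclic rearrangement of the trace is legitimate. An alternative to the semigroup formula would be the Dunford--Taylor representation $X^{1/2}=\frac{1}{2\pi i}\oint_{\Gamma}\lambda^{1/2}(\lambda I-X)^{-1}\,\d\lambda$ with a contour $\Gamma$ around $\sigma(X_0)$, combined with differentiation of the resolvent; but I expect the semigroup route to keep the trace-norm estimates most transparent.
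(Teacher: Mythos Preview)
Your proposal is correct and, for the trace identities, essentially identical to the paper's argument: left-multiply the Sylvester equation by $(X_0)^{-1/2}$ (and by $f(X_0)$ in the general case), then take traces and use cyclicity. The difference is in how the first identity is obtained. The paper simply applies the chain rule to $\mysq\circ\mysqrt = \mathrm{id}$, using $D\mysq(X_0)(X)=X_0X+XX_0$, to read off the Sylvester relation---tacitly treating the Fr\'echet differentiability of $\mysqrt$ as given. Your route is more careful: you actually \emph{prove} differentiability at $X_0$ by analyzing the remainder $R(H)=\mysqrt(X_0+H)-\mysqrt(X_0)$ via the bounded inverse $\mathcal{L}_A^{-1}(X)=\int_0^\infty e^{-tA}Xe^{-tA}\,\d t$ of the Sylvester operator, and along the way you get explicit trace-norm control showing $D\mysqrt(X_0)(X)\in\Tr(\H)$ when $X\in\Tr(\H)$. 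This buys you a self-contained justification of the cyclic trace manipulations, which the paper leaves implicit. The chain-rule route is shorter if one is willing to quote differentiability of $\mysqrt$ on positive-definite operators; your semigroup route is longer but analytically complete.
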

\begin{proof}
	i) For the function $\mysq(X) = X^2$, we have
	\begin{align}
	D\mysq(X_0)(X) = X_0X + XX_0.
	\end{align}
	Let $f(X) = X = \mysq(\mysqrt(X))$, 
	the first identity follows from
	%then  by 
	the chain rule
	\begin{align*}
	Df(X_0)(X) = X &= D\mysq(\mysqrt(X_0)) \compose D\mysqrt(X_0)(X)
	\\
	 &=(X_0)^{1/2}D\mysqrt(X_0)(X) + D\mysqrt(X_0)(X)(X_0)^{1/2}.
	\end{align*}
	%This gives us the first identity. 
	ii) For the second identity, we note that
	\begin{align*}
	&(X_0)^{1/2}D\mysqrt(X_0)(X) + D\mysqrt(X_0)(X)(X_0)^{1/2} = X
	\\
	&\equivalent D\mysqrt(X_0)(X) + (X_0)^{-1/2}D\mysqrt(X_0)(X)(X_0)^{1/2} = (X_0)^{-1/2}X.
	\end{align*}
	Taking trace on both sides gives
	%\begin{align*}
	$2\trace[D\mysqrt(X_0)(X)] = \trace[(X_0)^{-1/2}X]$.
	
	%\end{align*}
	%which gives us the desired expression.
	iii) For the third expression,
	\begin{align*}
	f(X_0)(X_0)^{1/2}D\mysqrt(X_0)(X) + f(X_0)D\mysqrt(X_0)(X)(X_0)^{1/2} = f(X_0)X.
	\end{align*}
	Since $f(X_0)$ and $X_0^{1/2}$ commute, this is the same as
	\begin{align*}
		&(X_0)^{1/2}f(X_0)D\mysqrt(X_0)(X) + f(X_0)D\mysqrt(X_0)(X)(X_0)^{1/2} = f(X_0)X
		\\
		& \equivalent f(X_0)D\mysqrt(X_0)(X) + (X_0)^{-1/2}f(X_0)D\mysqrt(X_0)(X)(X_0)^{1/2} = (X_0)^{-1/2}f(X_0)X.
	\end{align*}
	Taking trace on both sides gives
	\begin{align*}
	\trace [f(X_0)D\mysqrt(X_0)(X)] = \frac{1}{2}\trace[(X_0)^{-1/2}f(X_0)X].
	\end{align*}
\end{proof}
{\color{black}
In the following, consider the set of $p$th Schatten class operators (see e.g. \cite{gohberg1978nonselfadjoint}) $\Csc_p(\H) = \{A \in \Lcal(\H):
||A||_p = (\trace[(A^{*}A)^{p/2}])^{1/p} < \infty\}$, $1 \leq p \leq \infty$, with $\Csc_1(\H) = \Tr(\H)$,
$||\;||_{1} = ||\;||_{\tr}$,  $\Csc_2(\H) = \HS(\H)$, $||\;||_2 = ||\;||_{\HS}$, and $\Csc_{\infty}(\H)$ being the set of compact operators on $\H$ under $||\;||_{\infty} = ||\;||$, the operator norm.
Here $||A||_p \leq ||A||_q$ for $1 \leq q \leq p \leq \infty$.

	\begin{lemma}
		\label{lemma:open-set-omega}
		Let $C \in \Sym^{+}(\H)$ and $c \in \R$ be fixed. 
		Let $1 \leq p \leq \infty$ be fixed.
		The set 
		$\Omega_1 = \{X\in \Sym(\H) \cap \Csc_p(\H): I+c^2C^{1/2}XC^{1/2} > 0\}$ is open in 
		$\Sym(\H)\cap \Csc_p(\H)$ in 
		the $||\;||_{q}$ norm topology $\forall q, p \leq q \leq \infty$.
		The set 	$\Omega_2 = \{X\in \Sym(\H): I+c^2C^{1/2}XC^{1/2} > 0\}$ is open in 
		$\Sym(\H)$ in the
		operator $||\;||$ norm topology.
		% for $1 \leq p \leq \infty$.
		\end{lemma}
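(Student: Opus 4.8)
The plan is to derive both statements from one elementary principle: positive definiteness of a self-adjoint operator is preserved under perturbations that are small in the \emph{operator} norm, and the affine map $X \mapsto I + c^2 C^{1/2}XC^{1/2}$ is operator-norm Lipschitz on $\Sym(\H)$ with constant $c^2\|C\|$ (using $\|C^{1/2}\|^2 = \|C\|$ and submultiplicativity of the operator norm). So the structure is: first dispose of the trivial case $c^2\|C\| = 0$, in which $I + c^2 C^{1/2}XC^{1/2} = I > 0$ for every $X$ and thus $\Omega_1 = \Sym(\H)\cap\Csc_p(\H)$, $\Omega_2 = \Sym(\H)$ are the whole ambient spaces; then, for a fixed $X_0$ in $\Omega_i$, produce an explicit radius $r = \delta/(c^2\|C\|)$ for which the corresponding ball lands in $\Omega_i$, where $\delta>0$ is a spectral lower bound for $I + c^2C^{1/2}X_0C^{1/2}$.

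I would begin by fixing the reading of ``$>0$'' as \emph{positive definite}, i.e. $A > 0$ means $\langle u, Au\rangle \ge \delta\|u\|^2$ for some $\delta > 0$ and all $u \in \H$. For $\Omega_1$ this costs nothing: since $X_0 \in \Csc_p(\H)$ is compact, $c^2 C^{1/2}X_0C^{1/2}$ is compact, so $I + c^2C^{1/2}X_0C^{1/2}$ is a compact perturbation of $I$, whence strict positivity already forces such a $\delta$. For $\Omega_2$ the ``positive definite'' reading is genuinely needed, since $C^{1/2}X_0C^{1/2}$ need not be compact. With that in hand, the $\Omega_2$ argument is: given $X_0\in\Omega_2$ with $I + c^2C^{1/2}X_0C^{1/2} \ge \delta I$, use $\|c^2 C^{1/2}(X-X_0)C^{1/2}\| \le c^2\|C\|\,\|X-X_0\|$ to conclude that $\|X-X_0\| < \delta/(c^2\|C\|)$ implies $I + c^2C^{1/2}XC^{1/2} \ge (\delta - c^2\|C\|\,\|X-X_0\|)I > 0$, i.e. $X\in\Omega_2$.

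For $\Omega_1$ the same computation works once the topology is accounted for. I would invoke the standard facts $\Csc_p(\H)\subseteq\Csc_q(\H)$ with $\|\cdot\|_q \le \|\cdot\|_p$ (so the $\|\cdot\|_q$-metric restricts to a genuine metric on $\Sym(\H)\cap\Csc_p(\H)$, and membership in $\Csc_p$ is one of the defining constraints of the ball, hence automatically inherited by its points) and $\|\cdot\| = \|\cdot\|_\infty \le \|\cdot\|_q$. Chaining these with the Lipschitz bound gives $\|c^2C^{1/2}(X-X_0)C^{1/2}\| \le c^2\|C\|\,\|X-X_0\|_q$, so for any $X_0\in\Omega_1$ the $\|\cdot\|_q$-ball of radius $\delta/(c^2\|C\|)$ about $X_0$ stays in $\Omega_1$, and the radius does not depend on $q$, giving the claim uniformly for $p \le q \le \infty$.

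I do not anticipate a substantive obstacle; the whole lemma reduces to the perturbation estimate together with routine Schatten-class inequalities. The only points demanding minor care are the bookkeeping around ``$>0$'' — ensuring the spectral gap $\delta$ is actually available, which is where compactness of $C^{1/2}X_0C^{1/2}$ enters for $\Omega_1$ and where the positive-definite interpretation enters for $\Omega_2$ — and observing that the radius produced is independent of $q$, which is precisely what makes the statement hold simultaneously for all $q\in[p,\infty]$.
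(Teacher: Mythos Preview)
Your proof is correct and follows essentially the same route as the paper's: both establish a spectral lower bound $\delta$ (the paper's $M_{X_0}$) for $I + c^2C^{1/2}X_0C^{1/2}$, then use $\|C^{1/2}(X-X_0)C^{1/2}\| \le \|C\|\,\|X-X_0\| \le \|C\|\,\|X-X_0\|_q$ to produce an explicit $\|\cdot\|_q$-ball contained in $\Omega_1$ (and similarly for $\Omega_2$). Your extra care in justifying the existence of $\delta$ via compactness for $\Omega_1$ and via the positive-definite reading of ``$>0$'' for $\Omega_2$, and in handling the degenerate case $c^2\|C\|=0$, is sound but does not alter the approach.
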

	\begin{proof}
		Assume that $X_0 \in \Omega_1$, then $\exists M_{X_0} > 0$ such that
		$||x||^2 + c^2 \la x, C^{1/2}X_0C^{1/2}x\ra \geq M_{X_0}||x||^2$ $\forall x \in \H$.
		 Recall that $||X||_q \leq ||X||_p$ for $p \leq q \leq \infty$, so that $\Csc_p(\H) \subset \Csc_q(\H) \subset \Csc_{\infty}(\H)$.
		 We show that $\exists \ep > 0$ such that $X \in \Omega_1$ $\forall X \in \Sym(\H) \cap \Csc_p(\H)$ satisfying $||X-X_0||_q < \ep$.
		 Since $||X-X_0|| \leq ||X-X_0||_q$, $1 \leq q \leq \infty$, we have $||X-X_0||_q < \ep \imply
		 ||X-X_0||<\ep \imply -\ep||x||^2 \leq \la x, (X-X_0)x\ra \leq \ep||x||^2$ $\forall x \in \H$. 
		 It follows that for any $X\in \Sym(\H)\cap \Csc_p(\H)$ with $||X- X_0||_q < \ep$, $p \leq q \leq \infty$,
		 \begin{align*}
		 &||x||^2 + c^2\la x, C^{1/2}XC^{1/2}x\ra
		 \\
		  &= ||x||^2 + c^2\la x, C^{1/2}X_0C^{1/2}x\ra + c^2\la x, C^{1/2}(X-X_0)C^{1/2}x\ra  
		  \\
		  & \geq M_{X_0}||x||^2 - c^2\ep ||C^{1/2}x||^2 \geq (M_{X_0} - c^2\ep ||C||)||x||^2.
		 \end{align*}
	 Thus if we choose $\ep > 0$ such that $M_{X_0} - c^2\ep||C|| > 0$, then $I+ c^2 C^{1/2}XC^{1/2} > 0$, that is $X \in \Omega_1$.
	 The proof for $\Omega_2$ is entirely similar.\qed
	\end{proof}
	
}
\begin{lemma}
	\label{lemma:derivative-trace-square-root}
Let $C \in \Sym^{+}(\H)\cap \Tr(\H)$ be fixed.
%Let $C \in \Sym^{+}(\H) \cap \Tr(\H)$ be fixed.
Let $\Omega = \{X\in \Sym(\H) \cap \Tr(\H): I+c^2C^{1/2}XC^{1/2} > 0\}$, $c \in \R$. 
%	Let $f: \Sym^{+}(\H) \cap \Tr(\H) \mapto \R$ be defined by
	Let $f:\Omega \mapto \R$ be defined by
	$f(X) = \trace\left[-I + \left(I + c^2C^{1/2}XC^{1/2}\right)^{1/2}\right]$. Then
	$Df(X_0): \Sym(\H) \cap \Tr(\H) \mapto \R$, $X_0 \in \Omega$,
	%$X_0 \in \Sym^{+}\cap\Tr(\H)$, 
	is given by
	\begin{align}
	Df(X_0)(X) = \frac{c^2}{2}\trace\left[C^{1/2}\left(I+c^2C^{1/2}X_0 C^{1/2}\right)^{-1/2}C^{1/2}X\right].
	\end{align}
\end{lemma}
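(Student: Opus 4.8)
The plan is to build the Fréchet derivative by composing three maps whose derivatives are already recorded earlier in the excerpt, then use the cyclicity of the trace together with Lemma \ref{lemma:derivative-square-root} to collapse the resulting expression. First I would write $f = \trace \circ g$ where $g(X) = -I + (I + c^2 C^{1/2} X C^{1/2})^{1/2}$, and further decompose $g$ as $\mysqrt \circ h$ with $h(X) = I + c^2 C^{1/2} X C^{1/2}$. Since $C \in \Sym^{+}(\H) \cap \Tr(\H)$, the map $X \mapsto C^{1/2} X C^{1/2}$ sends $\Sym(\H) \cap \Tr(\H)$ into $\Sym(\H) \cap \Tr(\H)$ (indeed into trace class, because $C^{1/2}$ is Hilbert–Schmidt), so $h$ is a well-defined affine map with $Dh(X_0)(X) = c^2 C^{1/2} X C^{1/2}$, and on the open set $\Omega$ (open in the trace-norm topology by Lemma \ref{lemma:open-set-omega}) the operator $h(X_0) = I + c^2 C^{1/2} X_0 C^{1/2}$ is strictly positive, hence in the domain of $\mysqrt$.

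Next I would apply the chain rule: $Dg(X_0)(X) = D\mysqrt(h(X_0))\bigl(c^2 C^{1/2} X C^{1/2}\bigr)$, and then $Df(X_0)(X) = \trace\bigl[Dg(X_0)(X)\bigr] = \trace\bigl[D\mysqrt(h(X_0))(c^2 C^{1/2} X C^{1/2})\bigr]$ by Lemma \ref{lemma:derivative-trace}. Now I invoke the second identity in Lemma \ref{lemma:derivative-square-root} with $X_0$ there replaced by $h(X_0) = I + c^2 C^{1/2} X_0 C^{1/2}$ (which is strictly positive, hence invertible, and the perturbation $c^2 C^{1/2} X C^{1/2}$ is self-adjoint and trace class, so the hypotheses are met). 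This yields
\begin{align*}
\trace\bigl[D\mysqrt(h(X_0))(c^2 C^{1/2} X C^{1/2})\bigr] &= \frac{1}{2}\trace\bigl[h(X_0)^{-1/2}\, c^2 C^{1/2} X C^{1/2}\bigr]
\\
&= \frac{c^2}{2}\trace\bigl[(I + c^2 C^{1/2} X_0 C^{1/2})^{-1/2} C^{1/2} X C^{1/2}\bigr].
\end{align*}
Finally, cyclicity of the trace (valid here since $C^{1/2} X C^{1/2}$ is trace class and the remaining factor is bounded) moves the trailing $C^{1/2}$ to the front, giving exactly $\frac{c^2}{2}\trace\bigl[C^{1/2}(I + c^2 C^{1/2} X_0 C^{1/2})^{-1/2} C^{1/2} X\bigr]$, which is the claimed formula; boundedness of this linear functional in $X$ (with respect to $\|\cdot\|_{\tr}$) follows from $|\trace(AB)| \le \|A\|\,\|B\|_{\tr}$ with $A = C^{1/2}(I + c^2 C^{1/2} X_0 C^{1/2})^{-1/2} C^{1/2}$ bounded.

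The only point requiring a little care — and the main (mild) obstacle — is verifying that all intermediate quantities stay within trace class so that Lemmas \ref{lemma:derivative-trace} and \ref{lemma:derivative-square-root} genuinely apply: one must observe that $C^{1/2}$ being Hilbert–Schmidt forces $C^{1/2} X C^{1/2} \in \Tr(\H)$ for every bounded self-adjoint $X$ (so in particular the increment and $-I + h(X_0)^{1/2}$ are trace class, making $f$ finite-valued on $\Omega$), and that $D\mysqrt(h(X_0))$ applied to a trace-class argument again lands in trace class since $h(X_0)^{-1/2}$ is bounded and the defining Sylvester equation $h(X_0)^{1/2} Y + Y h(X_0)^{1/2} = c^2 C^{1/2} X C^{1/2}$ then has a trace-class solution $Y$. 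With these observations in hand the computation above is routine.
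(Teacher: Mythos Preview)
Your proposal is correct and follows essentially the same route as the paper: write $f = \trace \circ g$ with $g(X) = -I + (I + c^2 C^{1/2} X C^{1/2})^{1/2}$, apply the chain rule through $\mysqrt$ and the affine map $h$, invoke the second identity of Lemma \ref{lemma:derivative-square-root} to convert $\trace[D\mysqrt(h(X_0))(\cdot)]$ into $\tfrac{1}{2}\trace[h(X_0)^{-1/2}(\cdot)]$, and finish with cyclicity of the trace. Your additional remarks on trace-class membership and boundedness of the derivative are a welcome elaboration of points the paper leaves implicit.
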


\begin{proof}
Let $g(X) = \left(I + c^2C^{1/2}XC^{1/2}\right)^{1/2}$, $f(X) = \trace[-I + g(X)]$,
then
\begin{align*}
Df(X_0)(X) &= Df(g(X_0)) \compose Dg(X_0)(X) =\trace[Dg(X_0)(X)] 
\\
& = c^2\trace\left[D\mysqrt(I+c^2C^{1/2}X_0 C^{1/2})(C^{1/2}XC^{1/2})\right].
\end{align*}
By Lemma \ref{lemma:derivative-square-root},
\begin{align*}
&\trace\left[D\mysqrt(I+c^2C^{1/2}X_0 C^{1/2})(C^{1/2}XC^{1/2})\right]
\\
&\quad = \frac{1}{2}\trace\left[\left(I+c^2C^{1/2}X_0 C^{1/2}\right)^{-1/2}C^{1/2}XC^{1/2}\right].
\end{align*}
It thus follows that 
\begin{align*}
Df(X_0)(X)
& = \frac{c^2}{2}\trace\left[\left(I+c^2C^{1/2}X_0 C^{1/2}\right)^{-1/2}(C^{1/2}XC^{1/2})\right]
\\
& = \frac{c^2}{2}\trace\left[C^{1/2}\left(I+c^2C^{1/2}X_0 C^{1/2}\right)^{-1/2}C^{1/2}X\right].
\end{align*}
\end{proof}
\begin{lemma}
	\label{lemma:derivative-logdet-square-root}
	Let $C \in \Sym^{+}(\H) \cap \Tr(\H)$ be fixed.
	Let $\Omega = \{X\in \Sym(\H) \cap \Tr(\H): I+c^2C^{1/2}XC^{1/2} > 0\}$, $c \in \R$.
	%Let $f: \Sym^{+}(\H) \cap \Tr(\H) \mapto \R$ 
	Let $f:\Omega \mapto \R$ be defined by
	  $f(X) = \log\det\left[\frac{1}{2}I + \frac{1}{2}\left(I + c^2C^{1/2}XC^{1/2}\right)^{1/2}\right]$.
	 % $c \in \R$.
Then $Df(X_0): \Sym(\H) \cap \Tr(\H) \mapto \R$, $X_0 \in \Omega$,
% $X_0 \in \Sym^{+}(\H) \cap \Tr(\H)$, 
is given by
{\small
\begin{align}
Df(X_0)(X) =\frac{c^2}{2}\trace
\biggl[C^{1/2}\left(\left(I + c^2C^{1/2}X_0C^{1/2}\right)^{1/2}+ \left(I + c^2C^{1/2}X_0C^{1/2}\right)\right)^{-1}C^{1/2}X\biggr]. 
\end{align}
}
\end{lemma}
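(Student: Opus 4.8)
The plan is to compute the Fr\'echet derivative by the chain rule, exactly in the style of the proof of Lemma~\ref{lemma:derivative-trace-square-root}, reducing everything to the derivative formulas for $\log\det$ (Lemma~\ref{lemma:derivative-logdet}) and for the operator square root (Lemma~\ref{lemma:derivative-square-root}). Write $A(X) = I + c^2 C^{1/2}XC^{1/2}$. First I would record the domain-level facts: since $X \in \Sym(\H)\cap\Tr(\H)$ and $C \in \Sym^{+}(\H)\cap\Tr(\H)$, the operator $A(X)-I = c^2C^{1/2}XC^{1/2}$ is trace class, and $A(X)^{1/2}-I = c^2C^{1/2}XC^{1/2}\bigl[I + A(X)^{1/2}\bigr]^{-1}$ is trace class as well, so $\tfrac12 I + \tfrac12 A(X)^{1/2} = I + \tfrac12\bigl(A(X)^{1/2}-I\bigr)$ is an identity-plus-trace-class operator and $\log\det$ is well defined on it; moreover for $X_0 \in \Omega$ the operator $A(X_0) > 0$ is a self-adjoint compact perturbation of $I$, hence positive definite and invertible, so $\mysqrt$ is Fr\'echet differentiable at $A(X_0)$ and $A(X_0)^{-1/2}$ is bounded.

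Next I would set $f(X) = \log\det\bigl[I + \tfrac12(A(X)^{1/2}-I)\bigr]$ and apply the chain rule in three stages: the derivative of the affine map $X\mapsto A(X)$ is $X \mapsto c^2 C^{1/2}XC^{1/2}$; composing with $D\mysqrt(A(X_0))$ gives the derivative of $X \mapsto A(X)^{1/2}$; and composing with $D[\log\det(I+\cdot)]$ at the point $\tfrac12(A(X_0)^{1/2}-I)$, for which $I + \tfrac12(A(X_0)^{1/2}-I) = \tfrac12 I + \tfrac12 A(X_0)^{1/2}$, yields by Lemma~\ref{lemma:derivative-logdet}
\[
Df(X_0)(X) = \frac{c^2}{2}\,\trace\Bigl[\bigl(\tfrac12 I + \tfrac12 A(X_0)^{1/2}\bigr)^{-1} D\mysqrt(A(X_0))\bigl(C^{1/2}XC^{1/2}\bigr)\Bigr].
\]

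Finally I would invoke the third identity of Lemma~\ref{lemma:derivative-square-root} with base point $A(X_0)$ and the operator function $h(A(X_0)) = \bigl(\tfrac12 I + \tfrac12 A(X_0)^{1/2}\bigr)^{-1}$, which commutes with $A(X_0)$ because it is a function of $A(X_0)$ via functional calculus. This converts the trace above into $\tfrac12\trace\bigl[A(X_0)^{-1/2}h(A(X_0))\,C^{1/2}XC^{1/2}\bigr]$. Then the simplification
\[
A(X_0)^{-1/2}\bigl(\tfrac12 I + \tfrac12 A(X_0)^{1/2}\bigr)^{-1} = \bigl(\tfrac12 A(X_0)^{1/2} + \tfrac12 A(X_0)\bigr)^{-1} = 2\bigl(A(X_0)^{1/2} + A(X_0)\bigr)^{-1},
\]
together with cyclicity of the trace to move $C^{1/2}$ to the front, gives the claimed formula with the factor $\tfrac{c^2}{2}$.

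The argument is entirely routine; the only points needing a word of justification are the two observed above (that $\tfrac12 I + \tfrac12 A(X)^{1/2}$ is an identity-plus-trace-class operator, and that $A(X_0)$ is invertible so $D\mysqrt(A(X_0))$ and $A(X_0)^{-1/2}$ make sense), plus the remark that $h(A(X_0))$ commutes with $A(X_0)$ so that Lemma~\ref{lemma:derivative-square-root} is applicable. I do not anticipate any genuine obstacle.
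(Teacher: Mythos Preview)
Your proposal is correct and follows essentially the same route as the paper: write $f(X)=\log\det[I+g(X)]$ with $g(X)=\tfrac12(A(X)^{1/2}-I)$, apply Lemma~\ref{lemma:derivative-logdet} followed by the third identity of Lemma~\ref{lemma:derivative-square-root} with the commuting function $(\tfrac12 I+\tfrac12 A(X_0)^{1/2})^{-1}$, and simplify. The only difference is that you spell out the domain-level checks (trace-class perturbations, invertibility of $A(X_0)$) more explicitly than the paper does.
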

\begin{proof}
Let $g(X) = -\frac{1}{2}I + \frac{1}{2}\left(I + c^2C^{1/2}XC^{1/2}\right)^{1/2}$, then $f(X) = \log\det[I+g(X)]$.
By Lemma \ref{lemma:derivative-logdet},
\begin{align*}
&Df(X_0)(X) = Df(g(X_0))\compose Dg(X_0)(X) = \trace[(I+g(X_0))^{-1}Dg(X_0)(X)].
\end{align*}
As in the proof of Lemma \ref{lemma:derivative-trace-square-root},
\begin{align*}
&Dg(X_0)(X) = \frac{c^2}{2}D\mysqrt\left(I + c^2C^{1/2}X_0C^{1/2}\right)(C^{1/2}XC^{1/2}).
\end{align*}
Thus it follows from Lemma \ref{lemma:derivative-square-root} that
{\small
\begin{equation*}
\begin{split}
&Df(X_0)(X) = \frac{c^2}{2}\trace
\biggl[\left(\frac{1}{2}I+ \frac{1}{2}\left(I + c^2C^{1/2}X_0C^{1/2}\right)^{1/2}\right)^{-1}
\\
&\quad \quad \quad \quad \quad \quad \quad D\mysqrt\left(I + c^2C^{1/2}X_0C^{1/2}\right)(C^{1/2}XC^{1/2})
\biggr]
\\
=& \frac{c^2}{4}\trace
\biggl[\left(I + c^2C^{1/2}X_0C^{1/2}\right)^{-1/2}\left(\frac{1}{2}I+ \frac{1}{2}\left(I + c^2C^{1/2}X_0C^{1/2}\right)^{1/2}\right)^{-1}(C^{1/2}XC^{1/2})\biggr]
\\
=& \frac{c^2}{2}\trace
\biggl[C^{1/2}\left(\left(I + c^2C^{1/2}X_0C^{1/2}\right)^{1/2}+ \left(I + c^2C^{1/2}X_0C^{1/2}\right)\right)^{-1}C^{1/2}X\biggr].
\end{split}
\end{equation*}
}
\end{proof}
\begin{lemma}
	\label{lemma:trace-functional-zero}
Let $A \in \Lcal(\H)$. Then
\begin{align}
\trace[AX] = 0 \;\;\;\forall X \in \Tr(\H) \equivalent A = 0.
\end{align}
If, furthermore, $A \in \Sym(\H)$, then
\begin{align}
\trace[AX] = 0 \;\;\;\forall X \in \Sym(\H) \cap \Tr(\H) \equivalent A = 0.
\end{align}
\end{lemma}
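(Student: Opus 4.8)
The plan is to reduce both statements to testing the linear functional $X \mapsto \trace[AX]$ against rank-one operators, which are the simplest elements of $\Tr(\H)$.

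The $(\Leftarrow)$ direction is immediate in both cases: if $A = 0$ then $AX = 0$, so $\trace[AX] = 0$ for every $X$, in particular for every $X \in \Sym(\H)\cap\Tr(\H)$.

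For the first $(\Rightarrow)$, I would fix $u,v\in\H$ and use the rank-one operator $u\otimes v\in\Tr(\H)$ (recall $(u\otimes v)w = \la v,w\ra u$, so that $||u\otimes v||_{\tr} = ||u||\,||v|| < \infty$). A one-line computation gives $A(u\otimes v) = (Au)\otimes v$, hence $\trace[A(u\otimes v)] = \la v, Au\ra$. Assuming $\trace[AX] = 0$ for all $X\in\Tr(\H)$ then yields $\la v, Au\ra = 0$ for all $u,v\in\H$, so $Au = 0$ for all $u$, i.e. $A = 0$. (Equivalently, testing against $X = e_j\otimes e_k$ for an orthonormal basis $\{e_k\}_{k\in\Nbb}$ shows that every matrix entry $\la e_k, Ae_j\ra$ vanishes.)

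For the second $(\Rightarrow)$, I would note that $u\otimes u$ is self-adjoint, so $u\otimes u \in \Sym(\H)\cap\Tr(\H)$, with $\trace[A(u\otimes u)] = \la u, Au\ra$ by the same computation. Hence the hypothesis forces $\la u, Au\ra = 0$ for every $u\in\H$. Since $\H$ is a real Hilbert space and $A = A^{*}$, the bilinear form $(u,v)\mapsto\la u, Av\ra$ is symmetric, so polarization, $4\la u, Av\ra = \la u+v, A(u+v)\ra - \la u-v, A(u-v)\ra$, shows $\la u, Av\ra = 0$ for all $u,v\in\H$, whence $A = 0$. The main (and essentially only) subtlety is that in the symmetric case one cannot test against the general rank-one operators $u\otimes v$, which fail to be self-adjoint unless $u$ and $v$ are parallel, so one restricts to $u\otimes u$ and recovers the full conclusion from self-adjointness of $A$ via the real polarization identity; I expect no genuine obstacle beyond this.
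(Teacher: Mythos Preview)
Your proof is correct. Both parts go through exactly as you describe: the rank-one test gives $\la v,Au\ra=0$ for all $u,v$, and in the symmetric case the real polarization identity recovers the full bilinear form from its diagonal.

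The paper takes a different route. For the first statement it simply invokes the duality $\Lcal(\H)=[\Tr(\H)]^{*}$ (the map $A\mapsto\trace(A\,\cdot\,)$ is an isometric isomorphism, Theorem~VI.26 in Reed--Simon), so injectivity is immediate. For the second statement, rather than polarizing, it symmetrizes the test operator: for arbitrary $Y\in\Tr(\H)$ one has $\trace[AY]=\tfrac12\bigl(\trace(AY)+\trace((AY)^{*})\bigr)=\tfrac12\trace[A(Y+Y^{*})]=0$, since $Y+Y^{*}\in\Sym(\H)\cap\Tr(\H)$, and then the first part applies. Your argument is more self-contained and elementary, working entirely with rank-one operators; the paper's is shorter once one grants the duality theorem, and its reduction of the symmetric case to the general case via $Y\mapsto Y+Y^{*}$ avoids polarization altogether.
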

\begin{proof}
The first statement follows from the fact that $\Lcal(\H) = [\Tr(\H)]^{*}$, that is the map
$A \mapto \trace(A \cdot)$ is an isometric isomorphism between $\Lcal(\H)$ and the dual space
$[\Tr(\H)]^{*}$ of $\Tr(\H)$ (see e.g. Theorem VI.26 in \cite{ReedSimon:Functional}).

For the second statement, let $Y \in \Tr(\H)$ be arbitrary, then
\begin{align*}
\trace[AY] &= \frac{1}{2}[\trace(AY) + \trace((AY)^{*})] = \frac{1}{2}[\trace(AY) + \trace(Y^{*}A)]
\\
&= \frac{1}{2}[\trace(AY) + \trace(AY^{*})] = \frac{1}{2}\trace[A(Y+Y^{*})] = 0.
\end{align*}
Since $Y$ is arbitrary, this implies $A = 0$ by the first statement.
\qed
\end{proof}

\begin{lemma}
	\label{lemma:derivative-inverse}
	Let $\Omega = \{X \in \Lcal(\H): I+X \; \text{invertible}\}$. 
	For the map $f: \Omega \mapto \Omega$ defined by
	$f(X) = (I+X)^{-1}$, $Df(X_0): \Lcal(\H) \mapto \Lcal(\H)$ is given by
	\begin{align}
	Df(X_0)(X) = -(I+X_0)^{-1}X(I+X_0)^{-1}, \;\; X_0 \in \Omega, X \in  \Lcal(\H).
	\end{align}
\end{lemma}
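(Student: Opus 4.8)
The plan is to verify directly that the stated map $L(X) = -(I+X_0)^{-1}X(I+X_0)^{-1}$ is the Fréchet derivative of $f$ at $X_0$, via a Neumann series expansion of the resolvent. First I would record that $\Omega$ is open in $\Lcal(\H)$: the set of invertible operators is open in $\Lcal(\H)$ and $X \mapto I+X$ is a homeomorphism of $\Lcal(\H)$, so $f$ is well-defined on a neighbourhood of any $X_0 \in \Omega$. The candidate $L:\Lcal(\H) \mapto \Lcal(\H)$ is visibly linear and bounded, with $\|L(X)\| \leq \|(I+X_0)^{-1}\|^2\|X\|$.

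Next, fix $X_0 \in \Omega$ and take $H \in \Lcal(\H)$ with $\|H\| < \|(I+X_0)^{-1}\|^{-1}$, so that $\|(I+X_0)^{-1}H\| < 1$. Then $I + (I+X_0)^{-1}H$ is invertible with Neumann series $\sum_{k=0}^{\infty}\bigl(-(I+X_0)^{-1}H\bigr)^k$, and from $I+X_0+H = (I+X_0)\bigl(I + (I+X_0)^{-1}H\bigr)$ one gets
\begin{align*}
f(X_0+H) &= (I+X_0+H)^{-1} = \bigl(I + (I+X_0)^{-1}H\bigr)^{-1}(I+X_0)^{-1} \\
&= \sum_{k=0}^{\infty}\bigl(-(I+X_0)^{-1}H\bigr)^k(I+X_0)^{-1}.
\end{align*}
The $k=0$ term is $f(X_0)$ and the $k=1$ term is $-(I+X_0)^{-1}H(I+X_0)^{-1} = L(H)$, so the remainder $R(H) := f(X_0+H) - f(X_0) - L(H) = \sum_{k=2}^{\infty}\bigl(-(I+X_0)^{-1}H\bigr)^k(I+X_0)^{-1}$ satisfies
\begin{align*}
\|R(H)\| \;\leq\; \|(I+X_0)^{-1}\|^3\,\|H\|^2\sum_{k=0}^{\infty}\bigl(\|(I+X_0)^{-1}\|\,\|H\|\bigr)^k \;=\; \frac{\|(I+X_0)^{-1}\|^3\,\|H\|^2}{1 - \|(I+X_0)^{-1}\|\,\|H\|}.
\end{align*}
Hence $\|R(H)\|/\|H\| \approach 0$ as $H \approach 0$, which is precisely the defining property of the Fréchet derivative, giving $Df(X_0) = L$.

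There is essentially no obstacle here; the only point needing (minor) care is the geometric bound on the Neumann-series tail, which is routine. As an alternative that avoids series altogether, one may start from the resolvent identity $(I+X_0+H)^{-1} - (I+X_0)^{-1} = -(I+X_0+H)^{-1}H(I+X_0)^{-1}$, add and subtract $-(I+X_0)^{-1}H(I+X_0)^{-1}$, and use continuity of operator inversion (so that $\|(I+X_0+H)^{-1} - (I+X_0)^{-1}\| \approach 0$) to conclude that the difference is $o(\|H\|)$.
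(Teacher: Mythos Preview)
Your proof is correct. The paper takes precisely the route you describe as your ``alternative'': it uses the resolvent identity $A^{-1}-B^{-1}=-A^{-1}(A-B)B^{-1}$ twice to obtain
\[
(I+X_0+H)^{-1}-(I+X_0)^{-1}+(I+X_0)^{-1}H(I+X_0)^{-1}
=(I+X_0+H)^{-1}H(I+X_0)^{-1}H(I+X_0)^{-1},
\]
which is manifestly $O(\|H\|^2)$. Your primary argument via the Neumann series is a mild variant that yields the same $O(\|H\|^2)$ remainder with an explicit constant; it has the small bonus of giving the full Taylor expansion of the resolvent, whereas the paper's resolvent-identity approach is slightly quicker if one only wants the first-order term.
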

\begin{proof}
	Using the identity $A^{-1} - B^{-1} = -A^{-1}(A-B)B^{-1}$, we have
	\begin{align*}
	&(I+X_0 + tX)^{-1} - (I+X_0)^{-1} 
	%\\
	%&= - (I+X_0+tX)^{-1}[(I+X_0+tX) - (I+X_0)](I+X_0)^{-1} 
	%\\
	%&
	=-t(I+X_0+tX)^{-1}X(I+X_0)^{-1},
	\\
	&(I+X_0 + tX)^{-1} - (I+X_0)^{-1} + t(I+X_0)^{-1}X(I+X_0)^{-1}
	\\
	&=-t [(I+X_0 + tX)^{-1} - (I+X_0)^{-1}]X(I+X_0)^{-1}
	\\
	& = t^2(I+X_0+tX)^{-1}X(I+X_0)^{-1}X(I+X_0)^{-1}.
	\end{align*}
	Thus $\lim_{t \approach 0}\frac{||f(X_0+tX) - f(X_0)- Df(X_0)(tX))||}{|t|\;||X||} = 0$.
	\qed
\end{proof}
\begin{lemma}
	\label{lemma:derivative-trace-functional}
	For the map $f: \Tr(\H) \mapto \Lcal(\Lcal(\H),\R)$ defined by $f(X)(Y) = \trace[XY]$, the Fr\'echet derivative
	$Df(X_0):\Tr(\H) \mapto \Lcal(\Lcal(\H),\R)$ is given by
	\begin{align}
	[Df(X_0)(X)](Y) = \trace[XY], \;\;\; X \in \Tr(\H), Y \in \Lcal(\H).
	\end{align}
\end{lemma}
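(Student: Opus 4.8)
The plan is to observe that $f$ is itself a bounded linear map between the Banach spaces $\Tr(\H)$ and $\Lcal(\Lcal(\H),\R)$, so that it coincides with its own Fr\'echet derivative at every point, exactly as in the elementary case $f(X)=AXB$ treated earlier. First I would check that $f$ is well-defined, i.e.\ that for each $X \in \Tr(\H)$ the assignment $Y \mapsto \trace[XY]$ is a bounded linear functional on $\Lcal(\H)$. This is the content of the isometric identification $\Lcal(\H) = [\Tr(\H)]^{*}$ recalled in the proof of Lemma \ref{lemma:trace-functional-zero} (Theorem VI.26 in \cite{ReedSimon:Functional}); in particular $|\trace[XY]| \le \|X\|_{\tr}\|Y\|$, so $f(X)\in\Lcal(\Lcal(\H),\R)$ with $\|f(X)\|_{\Lcal(\Lcal(\H),\R)} \le \|X\|_{\tr}$.

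Next I would note that $f$ is linear in $X$: for $X_1,X_2\in\Tr(\H)$ and scalars $a,b$, one has $f(aX_1+bX_2)(Y) = \trace[(aX_1+bX_2)Y] = a\,\trace[X_1Y]+b\,\trace[X_2Y] = \big(a\,f(X_1)+b\,f(X_2)\big)(Y)$ for all $Y\in\Lcal(\H)$, by linearity of the trace. Combined with the bound above, this shows $f\in\Lcal\big(\Tr(\H),\Lcal(\Lcal(\H),\R)\big)$.

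Finally, for any bounded linear map $f$ between Banach spaces the increment $f(X_0+h)-f(X_0)-f(h)$ vanishes identically, so the quotient in the definition of the Fr\'echet derivative is $0$ for every $h$; hence $Df(X_0)=f$ for all $X_0\in\Tr(\H)$, which is precisely the claimed formula $[Df(X_0)(X)](Y)=\trace[XY]$. There is no genuine obstacle here; the only point needing care is the well-definedness and continuity of $f$, which is supplied by the trace--operator duality, and the routine verification that boundedness plus linearity force $Df(X_0)=f$.
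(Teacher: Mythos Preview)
Your proposal is correct and is essentially the same argument as the paper's: both exploit the linearity of $X\mapsto \trace[XY]$ so that $f(X_0+tX)-f(X_0)-t\,\trace[XY]\equiv 0$. The paper simply writes out this vanishing quotient directly (expanding the $\Lcal(\Lcal(\H),\R)$-norm as $\sup_{\|Y\|\le 1}$), whereas you phrase it as the general fact that a bounded linear map coincides with its own Fr\'echet derivative; the content is identical.
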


\begin{proof}
	For $g \in W = \Lcal(\Lcal(\H),\R)$, $g: \Lcal(\H) \mapto \R$, we have $||g||_W =  \sup_{||X||\leq 1}|g(X)|$.
	Let $V = \Tr(\H)$, then with $f:V \mapto W$,
	%$Df: V \mapto W = \Lcal(\Lcal,\R) \in \Lcal(V, \Lcal(V, \R))$
	\begin{align*}
	&\lim_{t \approach 0}\frac{||f(X_0 + tX) - f(X_0) - Df(X_0)(tX)||_{W}}{|t|\;|X||_V}
	\\
	&= \lim_{t \approach 0}\sup_{Y \in \Lcal(\H), ||Y|| \leq 1}\frac{|f(X_0 + tX)(Y) - f(X_0)(Y) - Df(X_0)(tX)(Y)|}{|t|\;|X||_V}
	\\
	& = \lim_{t \approach 0}\sup_{Y \in \Lcal(\H), ||Y||\leq 1}\frac{|\trace[(X_0+tX)Y] - \trace(X_0Y) - t\trace(XY)|}{|t|\;||X||_{\tr}} = 0.
	\end{align*}
\end{proof}

\begin{lemma}
	\label{lemma:product-triple-operator-positive}
	Let $A \in \Sym(\H)\cap \HS(\H)$, $B \in \Sym^{+}(\H)$, $C\in \Sym^{+}(\H)$. Assume further that $B$
	and $C$ commute. Then
	\begin{align}
	\trace[C(AB)^2] = ||C^{1/2}B^{1/2}AB^{1/2}||^2_{\HS} \geq 0.
	\end{align} 
	If in addition $B,C$ are invertible, then equality happens if and only if $A = 0$.
\end{lemma}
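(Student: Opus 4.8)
The plan is to prove the identity by carefully applying the cyclicity of the trace, reducing both sides to the common quantity $\trace[CBABA]$, and then to read off the equality case from the Hilbert--Schmidt norm on the right.

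First I would record the facts that make the manipulations legitimate. Since $A \in \HS(\H)$ and $B^{1/2}$ is bounded, we have $B^{1/2}AB^{1/2} \in \HS(\H)$, hence $(AB)^2 \in \Tr(\H)$ and $C(AB)^2 \in \Tr(\H)$, so $\trace[C(AB)^2]$ is well-defined; likewise $C^{1/2}B^{1/2}AB^{1/2} \in \HS(\H)$, so its Hilbert--Schmidt norm is finite. Moreover, because $B$ and $C$ are commuting positive operators, their square roots $B^{1/2}$ and $C^{1/2}$ commute with each other and with $B,C$ (a square root lies in the bicommutant); in particular $B^{1/2}CB^{1/2} = CB = BC$. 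Throughout, I use $\trace[ST] = \trace[TS]$ in the form where one factor is trace class (or Hilbert--Schmidt) and the other bounded, which is valid in each rearrangement below.

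Next I would expand the right-hand side. With $E = C^{1/2}B^{1/2}AB^{1/2}$, self-adjointness of $A$, $B^{1/2}$, $C^{1/2}$ gives
\begin{align*}
||E||^2_{\HS} = \trace[E^{*}E] = \trace[B^{1/2}AB^{1/2}\,C\,B^{1/2}AB^{1/2}] = \trace[B^{1/2}A\,(CB)\,AB^{1/2}],
\end{align*}
using $B^{1/2}CB^{1/2}=CB$. Moving the trailing $B^{1/2}$ to the front by cyclicity yields $\trace[BACBA]$, and then moving the leading factor $BA$ to the end yields $\trace[(CBA)(BA)] = \trace[CBABA]$. For the left-hand side, $\trace[C(AB)^2] = \trace[CABAB]$; moving the final $B$ to the front gives $\trace[BCABA]$, and $BC = CB$ turns this into $\trace[CBABA]$ as well. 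Hence $\trace[C(AB)^2] = ||C^{1/2}B^{1/2}AB^{1/2}||^2_{\HS} \geq 0$.

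Finally, for the equality case: if $B$ and $C$ are invertible then so are $B^{1/2}$ and $C^{1/2}$. If $\trace[C(AB)^2] = 0$, then $||C^{1/2}B^{1/2}AB^{1/2}||_{\HS} = 0$, so $C^{1/2}B^{1/2}AB^{1/2} = 0$, and multiplying on the left by $B^{-1/2}C^{-1/2}$ and on the right by $B^{-1/2}$ gives $A = 0$; the converse is immediate. The only real subtlety I anticipate is the bookkeeping of the cyclic rotations together with verifying that each rearrangement is justified by the appropriate trace-class/Hilbert--Schmidt membership and by the commutativity of the square roots of $B$ and $C$ — there is no genuine analytic difficulty beyond that.
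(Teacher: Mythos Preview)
Your proof is correct and follows essentially the same route as the paper: both use cyclicity of the trace together with the commutativity of $B^{1/2}$ and $C^{1/2}$ to rewrite $\trace[C(AB)^2]$ as $\|C^{1/2}B^{1/2}AB^{1/2}\|_{\HS}^2$, and then read off the equality case from invertibility. Your version is slightly more explicit about the trace-class bookkeeping that justifies each cyclic rotation, but the argument is the same in substance.
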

We remark that the condition that $B$ and $C$ commute is crucial in Lemma \ref{lemma:product-triple-operator-positive}.
It can be verified numerically that, without this condition, the stated inequality is generally false even if $A$ is also positive.
\begin{proof} By the assumption that $B$ and $C$ commute,
	\begin{align*}
	&\trace[C(AB)^2] = \trace[CABAB]= \trace[C^{1/2}AB^{1/2}B^{1/2}AB^{1/2}C^{1/2}B^{1/2}]
	\\
	&=\trace[C^{1/2}B^{1/2}AB^{1/2}B^{1/2}AB^{1/2}C^{1/2}] = ||C^{1/2}B^{1/2}AB^{1/2}||^2_{\HS} \geq 0.
	\end{align*}
	Equality happens if and only if $C^{1/2}B^{1/2}AB^{1/2} = 0$. If $B$ and $C$ are invertible, then this happens if and only if $A = 0$. \qed
\end{proof}

\begin{lemma}
	\label{lemma:trace-DX-X-mysqrt}
	Let $Y_0, Z_0 \in \Sym^{+}(\H)$.
	Assume that $Y_0$ and $Z_0$ commute, then $\forall X \in \Sym(\H) \cap \HS(\H)$,
	\begin{align}
	\trace[ D\mysqrt(Z_0)(X)Y_0XY_0] = 2 ||Z_0^{1/4}Y_0^{1/2}D\mysqrt(Z_0)(X)Y_0^{1/2}||_{\HS}^2 \geq 0.
	\end{align}
	If in addition $Y_0,Z_0$ are invertible, then equality happens if and only if $X=0$.
\end{lemma}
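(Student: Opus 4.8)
The plan is to reduce the identity to the defining Sylvester relation for $D\mysqrt(Z_0)$ together with cyclicity of the trace, exploiting that every operator built from $Y_0$ and $Z_0$ commutes with every other such operator, while $L:=D\mysqrt(Z_0)(X)$ need not commute with $Z_0$. Since $D\mysqrt(Z_0)$ is assumed well-defined, $Z_0$ is positive definite, and then the relation $Z_0^{1/2}L + L Z_0^{1/2} = X$ from Lemma \ref{lemma:derivative-square-root} forces $L \in \Sym(\H)\cap\HS(\H)$ (its solution operator is norm-bounded on $\HS(\H)$ when $Z_0^{1/2}$ is bounded below). In particular all products appearing below are trace class or Hilbert--Schmidt, so the manipulations, and the trace on the left-hand side, are well-defined.

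First I would substitute $X = Z_0^{1/2}L + LZ_0^{1/2}$ into $\trace[LY_0XY_0]$ to get $\trace[LY_0Z_0^{1/2}LY_0] + \trace[LY_0LZ_0^{1/2}Y_0]$. Using $Y_0Z_0^{1/2}=Z_0^{1/2}Y_0$, the first term equals $\trace[LZ_0^{1/2}Y_0LY_0]$. For the second term, $\trace[LY_0LZ_0^{1/2}Y_0] = \trace[LY_0LY_0Z_0^{1/2}] = \trace[Y_0Z_0^{1/2}LY_0L] = \trace[Z_0^{1/2}Y_0LY_0L] = \trace[LZ_0^{1/2}Y_0LY_0]$, where I again used $Y_0Z_0^{1/2}=Z_0^{1/2}Y_0$ and cyclicity of the trace. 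Hence $\trace[LY_0XY_0] = 2\,\trace[LZ_0^{1/2}Y_0LY_0]$.

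Next I would recognize $\trace[LZ_0^{1/2}Y_0LY_0]$ as $\|M\|_{\HS}^2$ with $M = Z_0^{1/4}Y_0^{1/2}LY_0^{1/2}$: split $Z_0^{1/2}=Z_0^{1/4}Z_0^{1/4}$ and each $Y_0 = Y_0^{1/2}Y_0^{1/2}$, freely interchange the $Z_0$- and $Y_0$-powers, and apply cyclicity to arrive at $\trace[(Y_0^{1/2}LY_0^{1/2}Z_0^{1/4})(Z_0^{1/4}Y_0^{1/2}LY_0^{1/2})] = \trace[M^{*}M]$, using that $L$, $Y_0$, $Z_0$ are self-adjoint so that $M^{*}=Y_0^{1/2}LY_0^{1/2}Z_0^{1/4}$. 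Since $L\in\HS(\H)$ and $Y_0^{1/2},Z_0^{1/4}$ are bounded, $M\in\HS(\H)$ and $\trace[M^{*}M]=\|M\|_{\HS}^2\geq 0$, giving the asserted identity and nonnegativity.

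For the equality case, with $Y_0,Z_0$ invertible, $\|M\|_{\HS}=0$ holds iff $Z_0^{1/4}Y_0^{1/2}LY_0^{1/2}=0$, which by invertibility of $Z_0^{1/4}$ and $Y_0^{1/2}$ is equivalent to $L=0$; and $L=0$ is equivalent to $X=0$, since $X=0$ makes $Z_0^{1/2}L+LZ_0^{1/2}=0$ a Lyapunov equation whose unique solution is $L=0$ (as $Z_0^{1/2}>0$), and the converse is immediate from the Sylvester relation. I do not anticipate a genuine obstacle here; the only care needed is bookkeeping which factors commute and which do not, and noting at the outset that the positive definiteness of $Z_0$ is what makes $L$ Hilbert--Schmidt, so that every trace in the argument is finite.
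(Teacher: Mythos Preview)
Your proof is correct and follows essentially the same route as the paper's: both start from the Sylvester relation $Z_0^{1/2}L + LZ_0^{1/2} = X$ with $L = D\mysqrt(Z_0)(X)$, insert it into $\trace[LY_0XY_0]$, and use commutativity of $Y_0,Z_0$ together with trace cyclicity to collapse the two resulting terms into $2\|Z_0^{1/4}Y_0^{1/2}LY_0^{1/2}\|_{\HS}^2$. The only cosmetic difference is that the paper packages the final identification $\trace\bigl(Z_0^{1/2}[LY_0]^2\bigr) = \|Z_0^{1/4}Y_0^{1/2}LY_0^{1/2}\|_{\HS}^2$ into a separate preparatory lemma (Lemma~\ref{lemma:product-triple-operator-positive}), whereas you carry out that same cyclicity computation inline; the equality case is handled identically in both.
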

\begin{proof}
	By Lemma \ref{lemma:derivative-square-root}, for any $X \in \Sym(\H)\cap \Tr(\H)$,
	\begin{align}
	\label{equation:mysqrt-Frechet-identity}
	Z_0^{1/2}D\mysqrt(Z_0)(X) + D\mysqrt(Z_0)(X)Z_0^{1/2} = X.
	\end{align}
	Pre- and post-multiplying both sides by $D\mysqrt(Z_0)(X)Y_0$ and $Y_0$, 
	respectively,
	\begin{align*}
	&D\mysqrt(Z_0)(X)Y_0Z_0^{1/2}D\mysqrt(Z_0)(X)Y_0 
	%\\
	%&
	+D\mysqrt(Z_0)(X)Y_0D\mysqrt(Z_0)(X)Z_0^{1/2}Y_0 
	\\
	&= D\mysqrt(Z_0)(X)Y_0XY_0.
	\end{align*}
	Taking trace on both sides and applying Lemma \ref{lemma:product-triple-operator-positive} gives
	%noting that $Y_0,Z_0 \in \Sym^{+}(\H)$ gives
	\begin{align*}
	&\trace[ D\mysqrt(Z_0)(X)Y_0XY_0] 
	%\\
	%&
	= \trace\left(Z_0^{1/2}\left[D\mysqrt(Z_0)(X)Y_0\right]^2 +Z_0^{1/2}\left[Y_0D\mysqrt(Z_0)(X)\right]^2 \right)
	\\
	&= 2 \trace\left(Z_0^{1/2}\left[D\mysqrt(Z_0)(X)Y_0\right]^2\right)
	%\\
	%& 
	= 2 ||Z_0^{1/4}Y_0^{1/2}D\mysqrt(Z_0)(X)Y_0^{1/2}||_{\HS}^2 \geq 0.
	% \;\text{by Lemma \ref{lemma:product-triple-operator-positive}}.
	\end{align*}
	If $Y_0, Z_0$ are invertible, by Lemma \ref{lemma:product-triple-operator-positive}, the zero equality 
	happens if and only if $D\mysqrt(Z_0)(X) = 0$, which is equivalent to $X=0$ by Eq.\eqref{equation:mysqrt-Frechet-identity}
	\qed
\end{proof}

\begin{lemma}
	\label{lemma:derivative-functional-trace-inverse}
	Let $C \in \Sym^{+}(\H) \cap \Tr(\H)$. Let $\Omega = \{X \in \Sym(\H): I+c^2C^{1/2}XC^{1/2} > 0\}$,
	$c \in \R, c \neq 0$.
	Define
	% the map 
	$f: \Omega \mapto \Lcal(\Lcal(\H), \R)$ by 
	\begin{align}
	f(X)(Y) = \trace\left[C^{1/2}\left(I+(I+ c^2C^{1/2}XC^{1/2})^{1/2}\right)^{-1}C^{1/2}Y\right].
	\end{align}
	The Fr\'echet derivative $Df(X_0): \Sym(\H) \mapto \Lcal(\Lcal(\H),\R)$ is given by
	\begin{align}
	&[Df(X_0)(X)](Y) \;\;\;\;\;\;\;\;\;\;\;\;\; X \in \Sym(\H), Y \in \Lcal(\H) 
	\\
	&=-c^2\trace[D\mysqrt(Z_0)(C^{1/2}XC^{1/2})(I+Z_0^{1/2})^{-1}C^{1/2}YC^{1/2}(I+Z_0^{1/2})^{-1}],
	\nonumber
	\end{align}
	where $Z_0 =I+ c^2C^{1/2}X_0C^{1/2}$. In particular, for $Y = X$,
	\begin{align}
	&[Df(X_0)(X)](X) 
	\\
	& = -2c^2 ||Z_0^{1/4}(I+Z_0^{1/2})^{-1/2}D\mysqrt(Z_0)(C^{1/2}XC^{1/2})(I+Z_0^{1/2})^{-1/2}||_{\HS}^2 \leq 0.
	\nonumber
	\end{align}
	For $c \neq 0$,
	equality happens if and only $C^{1/2}XC^{1/2} = 0$. 
	If $C$ is strictly positive, then equality happens if and only if $X=0$.
\end{lemma}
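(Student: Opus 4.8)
The plan is to write $f$ as a composition of maps whose Fr\'echet derivatives are already computed in the lemmas above, apply the chain rule, and then read off the sign and the equality case from Lemma~\ref{lemma:trace-DX-X-mysqrt}.

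First I would observe that $f(X)(Y) = \trace[g(X)Y]$, where $g:\Omega \mapto \Tr(\H)$ is defined by $g(X) = C^{1/2}(I + Z(X)^{1/2})^{-1}C^{1/2}$ with $Z(X) = I + c^2 C^{1/2}XC^{1/2}$; this is genuinely $\Tr(\H)$-valued since $C^{1/2} \in \HS(\H)$ (as $C \in \Tr(\H)$) sandwiches a bounded operator. Thus $f = \Phi \compose g$, where $\Phi(Z)(Y) = \trace[ZY]$ is the map of Lemma~\ref{lemma:derivative-trace-functional}. I would then decompose $g$ itself as the composition of: the bounded linear map $X \mapsto c^2 C^{1/2}XC^{1/2}$; the translation by $I$; the square root map $\mysqrt$, which is Fr\'echet differentiable at $Z_0 = Z(X_0) > 0$ by Lemma~\ref{lemma:derivative-square-root} (note $Z_0$ is positive definite, hence in the open domain of differentiability, precisely because $X_0 \in \Omega$); another translation by $I$; the inversion map of Lemma~\ref{lemma:derivative-inverse}, applicable since $I + Z_0^{1/2} \geq I$ is invertible; and finally the bounded linear map $h \mapsto C^{1/2}hC^{1/2}$ into $\Tr(\H)$.

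Chaining these derivatives, using linearity of $D\mysqrt(Z_0)$, gives
\[
Dg(X_0)(X) = -c^2\, C^{1/2}(I+Z_0^{1/2})^{-1}D\mysqrt(Z_0)(C^{1/2}XC^{1/2})(I+Z_0^{1/2})^{-1}C^{1/2},
\]
and hence, by Lemma~\ref{lemma:derivative-trace-functional} together with cyclicity of the trace,
\[
[Df(X_0)(X)](Y) = -c^2\,\trace\!\left[D\mysqrt(Z_0)(C^{1/2}XC^{1/2})(I+Z_0^{1/2})^{-1}C^{1/2}YC^{1/2}(I+Z_0^{1/2})^{-1}\right],
\]
which is the stated formula. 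For $Y = X$, I would set $W = C^{1/2}XC^{1/2} \in \Sym(\H) \cap \HS(\H)$ and $Y_0 = (I+Z_0^{1/2})^{-1} \in \Sym^{+}(\H)$; since $Y_0$ is a function of $Z_0$ it commutes with $Z_0$, and both are invertible because $Z_0 > 0$. Applying Lemma~\ref{lemma:trace-DX-X-mysqrt} with $(Z_0,Y_0,W)$ in place of $(Z_0,Y_0,X)$ yields
\[
[Df(X_0)(X)](X) = -c^2\,\trace[D\mysqrt(Z_0)(W)Y_0WY_0] = -2c^2\,||Z_0^{1/4}Y_0^{1/2}D\mysqrt(Z_0)(W)Y_0^{1/2}||_{\HS}^2 \leq 0,
\]
with equality if and only if $W = C^{1/2}XC^{1/2} = 0$. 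If moreover $C$ is strictly positive then $\ker(C^{1/2}) = \{0\}$, so $C^{1/2}XC^{1/2} = 0$ forces $XC^{1/2} = 0$; then $X$ vanishes on the dense subspace $\myIm(C^{1/2})$ and, being bounded, $X = 0$.

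The routine content is the bookkeeping with the chain rule and trace cyclicity; the one point that needs genuine care is checking that each intermediate map lands in the Banach space for which the corresponding derivative lemma was stated — in particular that $g$ is $\Tr(\H)$-valued and that $Z_0$ sits in the open set on which $\mysqrt$ is differentiable, both of which hinge on $C \in \Tr(\H)$ and $X_0 \in \Omega$ — and then arranging the operators, via cyclicity, into exactly the sandwiched shape $D\mysqrt(Z_0)(W)Y_0WY_0$ required to invoke Lemma~\ref{lemma:trace-DX-X-mysqrt}.
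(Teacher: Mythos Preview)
Your proof is correct and follows essentially the same route as the paper: decompose $f$ via the chain rule through the square root (Lemma~\ref{lemma:derivative-square-root}), the inversion (Lemma~\ref{lemma:derivative-inverse}), and the trace functional (Lemma~\ref{lemma:derivative-trace-functional}), then invoke Lemma~\ref{lemma:trace-DX-X-mysqrt} for the sign and equality case. The only cosmetic difference is that you absorb the $C^{1/2}$ sandwich into your $g$, whereas the paper keeps $g(X)=(I+h(X))^{-1}$ and carries the $C^{1/2}$ factors separately; your added remarks on why $g$ lands in $\Tr(\H)$ and why $Z_0$ is in the domain of differentiability are welcome but not materially different from what the paper does.
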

\begin{proof}
	Let $h(X) = (I+ c^2C^{1/2}XC^{1/2})^{1/2}$, $g(X) = (I+h(X))^{-1}$, and $f(X)(Y) = \trace[C^{1/2}g(X)C^{1/2}Y]$.
	By the chain rule and Lemma \ref{lemma:derivative-inverse},
	\begin{align*}
	&Dg(X_0)(X) = [Dg(h(X_0))\compose Dh(X_0)](X) 
	\\
	&= -c^2(I+h(X_0))^{-1}D\mysqrt(I+ c^2C^{1/2}X_0C^{1/2})(C^{1/2}XC^{1/2})(I+h(X_0))^{-1}.
	\end{align*}
	Let $Z_0 = I+ c^2C^{1/2}X_0C^{1/2}$
	and $h(X_0) = Z_0^{1/2}$.
	%By the chain rule and 
	By Lemma \ref{lemma:derivative-trace-functional}, 
	\begin{align*}
	&[Df(X_0)(X)](Y) = [Df(g(X_0))\compose Dg(X_0)](X)(Y) = \trace[C^{1/2}Dg(X_0)(X)C^{1/2}Y]
	\\
	&= -c^2\trace[C^{1/2}(I+h(X_0))^{-1}D\mysqrt(I+ c^2C^{1/2}X_0C^{1/2})(C^{1/2}XC^{1/2})(I+h(X_0))^{-1}C^{1/2}Y]
	\\
	& = -c^2\trace[D\mysqrt(Z_0)(C^{1/2}XC^{1/2})(I+Z_0^{1/2})^{-1}C^{1/2}YC^{1/2}(I+Z_0^{1/2})^{-1}].
	%\\
	%&=c^2\trace[(I+h(X_0))^{-1}D\mysqrt(I+ c^2C^{1/2}X_0C^{1/2})(C^{1/2}XC^{1/2})(I+h(X_0))^{-1}C^{1/2}YC^{1/2}]
	\end{align*}
	In particular, for $Y = X$, by Lemma \ref{lemma:trace-DX-X-mysqrt},
	\begin{align*}
	&[Df(X_0)(X)](X) =  -c^2\trace[D\mysqrt(Z_0)(C^{1/2}XC^{1/2})(I+Z_0^{1/2})^{-1}C^{1/2}XC^{1/2}(I+Z_0^{1/2})^{-1}]
	\\
	& = -2c^2 ||Z_0^{1/4}(I+Z_0^{1/2})^{-1/2}D\mysqrt(Z_0)(C^{1/2}XC^{1/2})(I+Z_0^{1/2})^{-1/2}||_{\HS}^2 \leq 0.
	\end{align*}
	Since $Z_0$ and $(I+Z_0^{1/2})$ are invertible, equality happens if and only if $C^{1/2}XC^{1/2} = 0$.
	If $C$ is strictly positive, then $C^{1/2}XC^{1/2} = 0 \equivalent X = 0$.
	\qed
\end{proof}

\begin{lemma}
	\label{lemma:AB-nondegenerate}
	Let $A,B \in \Lcal(\H)$. Assume that $B$ is compact, self-adjoint, and $B > 0$. Then
	\begin{equation}
	%\begin{aligned}
	AB = 0 \equivalent A = 0;\;\;\;
	%\\
	BA = 0 \equivalent A = 0.
	%\end{aligned}
	\end{equation}
\end{lemma}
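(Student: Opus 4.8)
The plan is to reduce both statements to the elementary fact that a strictly positive self-adjoint operator has trivial kernel and dense range. First I would record that $B \in \Sym^{+}(\H)$ with $B > 0$ (i.e. $B \in \Sym^{++}(\H)$) forces $\ker(B) = \{0\}$: if $Bx = 0$ then $\la x, Bx\ra = 0$, hence $x = 0$. Since $B$ is self-adjoint, $\overline{\myIm(B)} = \ker(B^{*})^{\perp} = \ker(B)^{\perp} = \{0\}^{\perp} = \H$, so $B$ has dense range. (Compactness of $B$ is not strictly needed for the argument, though one may alternatively invoke the spectral decomposition $B = \sum_k \lambda_k\, e_k \otimes e_k$ with all $\lambda_k > 0$ and $\{e_k\}_{k \in \Nbb}$ an orthonormal basis of $\H$.)

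Next, for the equivalence $BA = 0 \equivalent A = 0$: the direction $A = 0 \imply BA = 0$ is trivial, and for the converse, $BA = 0$ means $B(Ax) = 0$ for every $x \in \H$, so $Ax \in \ker(B) = \{0\}$, i.e. $Ax = 0$ for all $x$, whence $A = 0$. For $AB = 0 \equivalent A = 0$, again only the forward direction requires an argument: $AB = 0$ says $A$ vanishes on $\myIm(B)$, which is dense in $\H$, so $A = 0$ by continuity of $A$. Alternatively, one can pass to adjoints: $AB = 0 \equivalent (AB)^{*} = B^{*}A^{*} = BA^{*} = 0$, and then the case just proved gives $A^{*} = 0$, hence $A = 0$.

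There is no real obstacle here; the only point needing a little care is to use the correct characterization of strict positivity — namely that, as recalled in Section \ref{section:finite-to-infinite}, $B \in \Sym^{++}(\H)$ is equivalent to $\ker(B) = \{0\}$, which for a compact operator with $\dim(\H) = \infty$ does \emph{not} entail invertibility of $B$. Thus one must argue via the dense range (or via the adjoint), rather than simply multiplying through by $B^{-1}$.
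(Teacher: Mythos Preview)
Your proof is correct. The paper's argument is the spectral-decomposition variant you mention parenthetically: it writes $B = \sum_k \lambda_k\, e_k \otimes e_k$ with $\lambda_k > 0$ and $\{e_k\}$ an orthonormal basis (here compactness is used), then from $0 = ABe_k = \lambda_k A e_k$ concludes $Ae_k = 0$ for all $k$, hence $A = 0$; the case $BA = 0$ is deduced by passing to adjoints. Your main argument is slightly more general: you use only $\ker(B) = \{0\}$ and the resulting density of $\myIm(B)$, so compactness of $B$ is never invoked, as you yourself observe. Both routes are elementary; yours makes explicit that the hypothesis of compactness is superfluous, while the paper's makes the mechanism concrete via eigenvectors.
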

\begin{proof}
	Let $\{\lambda_k\}_{k \in \Nbb}$ be the eigenvalues of $B$, $\lambda_k > 0 \forall k \in \Nbb$, with corresponding orthonormal eigenvectors $\{e_k\}_{k \in \Nbb}$
	forming an orthonormal basis in $\H$. Then
%	\begin{align*}
	$0 = ABe_k = \lambda_k Ae_k \imply Ae_k = 0 \;\forall k \in \Nbb \imply Ax = 0 \;\forall x \in \H \imply A = 0$.
%	\end{align*}
	The second expression then follows by via the adjoint operation.\qed
\end{proof}

\begin{proposition}
	\label{proposition:logdet-trace-convex}
	Let $C \in \Sym^{+}\cap\Tr(\H)$ be fixed. Let $\Omega = \{X \in \Sym(\H): I + c^2C^{1/2}XC^{1/2} > 0\}$,
	$c \in \R$, $c \neq 0$.
	Let $f:\Omega \mapto \R$ be defined by
	\begin{align}
	f(X) &= \log\det\left(\frac{1}{2}I + \frac{1}{2}(I +c^2 C^{1/2}XC^{1/2})^{1/2}\right)
	\nonumber
	\\
	&\quad- \trace\left[-I + (I + c^2C^{1/2}XC^{1/2})^{1/2}\right].
	\end{align}
	Then $f$ is convex.
	 %on $\Omega$. 
	 Furthermore, $f$ is strictly convex if $C$ is strictly positive.
\end{proposition}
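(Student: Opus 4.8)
The plan is to show that $f$ is well-defined and twice Fr\'echet differentiable on $\Omega$ and that its second derivative is a positive semi-definite quadratic form, which by the second-order criterion recalled in Section~\ref{section:entropic-barycenter} gives convexity; the corresponding strict inequality for $X\neq 0$ when $\ker(C)=\{0\}$ will yield strict convexity. First I would note that $f$ makes sense on all of $\Omega$: for $X\in\Sym(\H)$ the operator $c^2C^{1/2}XC^{1/2}$ is trace class, being a product of the Hilbert--Schmidt operators $C^{1/2}$ and $XC^{1/2}$, and $I+c^2C^{1/2}XC^{1/2}>0$ on $\Omega$, so $(I+c^2C^{1/2}XC^{1/2})^{1/2}-I\in\Tr(\H)$ and both the Fredholm determinant and the trace in the definition of $f$ converge. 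Since $\Omega\subset\Sym(\H)$ is not contained in $\Tr(\H)$, to legitimately use the differentiation lemmas of Section~\ref{section:entropic-barycenter} I would restrict to line segments: for $X_0,X_1\in\Omega$ the curve $\phi(t)=f\bigl((1-t)X_0+tX_1\bigr)$ has ``effective'' perturbation $c^2C^{1/2}(X_1-X_0)C^{1/2}\in\Tr(\H)$, so $\phi$ is twice differentiable on $[0,1]$ with $\phi''(t)=D^2f(X_t)(X_1-X_0,X_1-X_0)$, and it suffices to prove $D^2f(X_0)(X,X)\ge 0$ for all $X_0\in\Omega$ and $X\in\Sym(\H)$.

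Next I would apply Lemmas~\ref{lemma:derivative-trace-square-root} and~\ref{lemma:derivative-logdet-square-root}: writing $Z_0:=I+c^2C^{1/2}X_0C^{1/2}$,
\begin{equation*}
Df(X_0)(X)=\frac{c^2}{2}\trace\left[C^{1/2}\bigl((Z_0^{1/2}+Z_0)^{-1}-Z_0^{-1/2}\bigr)C^{1/2}X\right].
\end{equation*}
Factoring $Z_0^{1/2}+Z_0=Z_0^{1/2}(I+Z_0^{1/2})$ and using $(I+Z_0^{1/2})^{-1}-I=-(I+Z_0^{1/2})^{-1}Z_0^{1/2}$, the parenthesis collapses to $-(I+Z_0^{1/2})^{-1}$, hence
\begin{equation*}
Df(X_0)(X)=-\frac{c^2}{2}\trace\left[C^{1/2}\bigl(I+(I+c^2C^{1/2}X_0C^{1/2})^{1/2}\bigr)^{-1}C^{1/2}X\right],
\end{equation*}
which is exactly $-\tfrac{c^2}{2}$ times the functional appearing in Lemma~\ref{lemma:derivative-functional-trace-inverse}, evaluated at $(X_0)(X)$. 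Differentiating once more in $X_0$ (direction $X$) and invoking that lemma with $Y=X$ gives
\begin{equation*}
D^2f(X_0)(X,X)=c^4\,\bigl\|Z_0^{1/4}(I+Z_0^{1/2})^{-1/2}\,D\mysqrt(Z_0)(C^{1/2}XC^{1/2})\,(I+Z_0^{1/2})^{-1/2}\bigr\|_{\HS}^2\ \ge\ 0,
\end{equation*}
so $f$ is convex.

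For strict convexity under $\ker(C)=\{0\}$, I would invoke the equality case of Lemma~\ref{lemma:derivative-functional-trace-inverse} (valid since $c\neq 0$): the Hilbert--Schmidt norm above vanishes only if $C^{1/2}XC^{1/2}=0$, and strict positivity of $C$ forces $X=0$; therefore $D^2f(X_0)(X,X)>0$ for every $X\neq 0$, which is precisely the criterion for strict convexity. The computation is essentially routine given the battery of derivative lemmas already established; the two points requiring care are the domain/topology bookkeeping described above (justifying the trace-class--valued derivative formulas on $\Omega\subset\Sym(\H)$, alternatively via Theorem~\ref{theorem:differentiability}) and correctly tracking signs and constants when identifying $Df$ as a multiple of the functional in Lemma~\ref{lemma:derivative-functional-trace-inverse}.
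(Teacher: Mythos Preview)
Your proposal is correct and follows essentially the same route as the paper: compute $Df(X_0)$ via Lemmas~\ref{lemma:derivative-trace-square-root} and~\ref{lemma:derivative-logdet-square-root}, simplify to $-\tfrac{c^2}{2}$ times the functional of Lemma~\ref{lemma:derivative-functional-trace-inverse}, and then read off the nonnegative (resp.\ strictly positive when $\ker(C)=\{0\}$) Hilbert--Schmidt norm expression for $D^2f(X_0)(X,X)$. Your extra paragraph on domain bookkeeping (reducing to line segments so that the effective perturbation $c^2C^{1/2}(X_1-X_0)C^{1/2}$ lies in $\Tr(\H)$) is a careful addition that the paper leaves implicit, and your explicit algebra for collapsing $(Z_0^{1/2}+Z_0)^{-1}-Z_0^{-1/2}$ to $-(I+Z_0^{1/2})^{-1}$ is a nice touch; the paper simply states the simplified result and invokes Lemma~\ref{lemma:AB-nondegenerate} for the implication $C^{1/2}XC^{1/2}=0\Rightarrow X=0$.
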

\begin{proof} For any $X_0 \in \Omega$, $X \in \Sym(\H)$, by Lemmas \ref{lemma:derivative-logdet-square-root}
	and \ref{lemma:derivative-trace-square-root}, 
	\begin{align*}
	&Df(X_0)(X) 
	\\
	&= \frac{c^2}{2}\trace\left[C^{1/2}\left((I+c^2C^{1/2}X_0C^{1/2})^{1/2} + (I+c^2C^{1/2}X_0C^{1/2})\right)^{-1}C^{1/2}X\right]
	\\
	&\quad -\frac{c^2}{2}\trace[C^{1/2}(I+c^2C^{1/2}X_0C^{1/2})^{-1/2}C^{1/2}X] 
	\\
	& = -\frac{c^2}{2}\trace\left[C^{1/2}\left(I+(I+c^2C^{1/2}X_0C^{1/2})^{1/2}\right)^{-1}C^{1/2}X\right].
	\end{align*}
	Thus we have the map $Df:\Omega \mapto \Lcal(\Lcal(\H), \R)$, with
	\begin{align*}
	Df(X)(Y) = -\frac{c^2}{2}\trace\left[C^{1/2}\left(I+(I+c^2C^{1/2}XC^{1/2})^{1/2}\right)^{-1}C^{1/2}Y\right].
	\end{align*}
	Differentiating this map gives the second-order Fr\'echet derivative
	\begin{align*}
	&[D^2f(X_0)](X,Y) = [D^2f(X_0)(X)](Y)
	\\ 
	&=\frac{c^4}{2}\trace[D\mysqrt(Z_0)(C^{1/2}XC^{1/2})(I+Z_0^{1/2})^{-1}C^{1/2}YC^{1/2}(I+Z_0^{1/2})^{-1}],
	\end{align*}
	where $Z_0 =I+ c^2C^{1/2}X_0C^{1/2}$, by Lemma \ref{lemma:derivative-functional-trace-inverse}. In particular, for $Y = X$,
	\begin{align*}
	&[D^2f(X_0)](X,X) = [D^2f(X_0)(X)](X) 
	\\
	& = c^4 ||Z_0^{1/4}(I+Z_0^{1/2})^{-1/2}D\mysqrt(Z_0)(C^{1/2}XC^{1/2})(I+Z_0^{1/2})^{-1/2}||_{\HS}^2 \geq 0.
	\nonumber
	\end{align*}
	Equality happens if and only if $C^{1/2}XC^{1/2} = 0$. If $C$ is strictly positive, then equality happens if and only if
	$X = 0$ by Lemma \ref{lemma:AB-nondegenerate}.
	Thus $f$ is convex on $\Omega$ and furthermore, it is strictly convex if $C$ is strictly positive. 
	\qed
\end{proof}

\begin{proof}
	[\textbf{of Theorem \ref{theorem:entropic-OT-convexity} - Convexity of entropic Wasserstein distance}]
	By the strict convexity of the square Hilbert norm $||\;||^2$, the function
	$m \mapto ||m-m_0||^2$ is strictly convex in $m$. For the covariance part, by Theorem 
	\ref{theorem:OT-regularized-Gaussian},
	\begin{align*}
	&F(X) = \OT^{\ep}_{d^2}(\Ncal(0,C_0), \Ncal(0,X))
	\\
	& = 
	\trace(X) + \trace(C_0) - \frac{\ep}{2}\trace\left[-I + \left(I + c_{\ep}^2C_0^{1/2}XC_0^{1/2}\right)^{1/2}\right]
	\\
	&\quad +\frac{\ep}{2}\log\det\left(\frac{1}{2}I + \frac{1}{2}\left(I + c_{\ep}^2C_0^{1/2}XC_0^{1/2}\right)^{1/2}\right), \;\; c_{\ep} = \frac{4}{\ep}.
	\end{align*}
	Since $\Tr(X)$ is linear in $X$ and the remaining part is convex in $\Sym^{+}(\H)\cap \Tr(\H)$ by Proposition \ref{proposition:logdet-trace-convex}, $F$ is convex in $X\in \Sym^{+}(\H)\cap \Tr(\H)$, with strict convexity if $C_0$ is strictly positive.
	\qed
\end{proof}

\begin{lemma}
	\label{lemma:entropic-OT-barycenter-1D}
	Assume that $\sum_{i=1}^Nw_i\sigma_i^2 >0$.
	For a fixed $\ep > 0$, 
	define the 
	%following 
	function
	$f:[0, \infty) \mapto \R$ by 
	$f(x) = \sum_{i=1}^Nw_if_i(x)$, where for $1 \leq i \leq N$, $c_{\ep} = \frac{4}{\ep}$,
	\begin{align}
	%\begin{aligned}
	f_i(x) &= \sigma_i^2 + x -\frac{\ep}{2}\left[-1 + \left(1 + c_{\ep}^2\sigma_i^2 x\right)^{1/2}\right] + \frac{\ep}{2}\log\left[\frac{1}{2} +\frac{1}{2}\left(1 + c_{\ep}^2\sigma_i^2 x\right)^{1/2}\right],
	%\end{aligned}
	\end{align}
	Then $f$ is strictly convex. For the minimum of $f$,
	there are two scenarios
	\begin{enumerate}
		\item $\ep \geq 2\sum_{i=1}^Nw_i\sigma_i^2$: in this case $\min{f} = f(0) = \sum_{i=1}^Nw_i\sigma_i^2$.
		\item $0 < \ep < 2\sum_{i=1}^Nw_i\sigma_i^2$: in this case $\min{f} = f(x^{*})$, where $x^{*} > 0$ is
		the unique solution of the following equation
		\begin{equation}
		\label{equation:entropic-OT-barycenter-1D}
		\sum_{i=1}^Nw_i \sigma_i^2\left[1 +\left(1 + c_{\ep}^2\sigma_i^2 x\right)^{1/2}\right]^{-1} = \frac{\ep}{4}.
		\end{equation}
		Equivalently, $x^{*}$ is the unique positive solution of the equation
		\begin{equation}
		\label{equation:entropic-OT-barycenter-1D-2}
		x = \frac{\ep}{4}\sum_{i=1}^Nw_i \left[-1 + \left(1+ c_{\ep}^2\sigma_i^2 x\right)^{1/2}\right],
		\end{equation}
		which also has the solution $x_0 = 0$.
	\end{enumerate}
\end{lemma}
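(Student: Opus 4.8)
The plan is to reduce everything to elementary one-variable calculus for $f$ on $[0,\infty)$. First I would record the derivatives of each summand: writing $u_i(x) = (1 + c_\ep^2\sigma_i^2 x)^{1/2}\ge 1$, a direct computation using $2u_iu_i' = c_\ep^2\sigma_i^2$ and $c_\ep = 4/\ep$ gives
\[
f_i'(x) = 1 - \frac{4\sigma_i^2}{\ep(1+u_i(x))}, \qquad f_i''(x) = \frac{32\,\sigma_i^4}{\ep^3\,u_i(x)(1+u_i(x))^2}.
\]
Hence each $f_i$ is convex on $[0,\infty)$, strictly so when $\sigma_i>0$ (this also follows from Proposition~\ref{proposition:logdet-trace-convex} with $\H=\R$ and $C=\sigma_i^2$); since $\sum_i w_i\sigma_i^2>0$ forces some $\sigma_i>0$, the sum $f=\sum_i w_if_i$ is strictly convex, so $f'$ is strictly increasing and the interior minimizer of $f$, if it exists, is the unique zero of $f'$.

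Next I would pin down that zero from the boundary and asymptotic values of $f'$. At $x=0$ we have $u_i(0)=1$, so $f'(0) = \sum_i w_i(1-\tfrac{2\sigma_i^2}{\ep}) = 1-\tfrac{2}{\ep}\sum_i w_i\sigma_i^2$, while $u_i(x)\to\infty$ gives $f_i'(x)\to 1$, hence $f'(x)\to 1>0$ as $x\to\infty$. In scenario (1), $\ep\ge 2\sum_i w_i\sigma_i^2$ means $f'(0)\ge 0$; strict monotonicity of $f'$ then gives $f'(x)>0$ for all $x>0$, so $f$ is increasing on $[0,\infty)$ and $\min f = f(0) = \sum_i w_i\sigma_i^2$ (the last two terms of $f_i$ vanish at $x=0$). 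In scenario (2), $0<\ep<2\sum_i w_i\sigma_i^2$ gives $f'(0)<0$; together with $f'(+\infty)=1>0$ and continuity, the intermediate value theorem yields a point $x^{*}>0$ with $f'(x^{*})=0$, which is unique by strict convexity and hence the global minimizer. Expanding $f'(x^{*})=0$ is precisely \eqref{equation:entropic-OT-barycenter-1D}.

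Finally I would pass to the equivalent fixed-point form \eqref{equation:entropic-OT-barycenter-1D-2}. The key algebraic identity is $-1 + u_i(x) = \dfrac{u_i(x)^2-1}{1+u_i(x)} = \dfrac{c_\ep^2\sigma_i^2 x}{1+u_i(x)}$, so using $\ep c_\ep^2 = 16/\ep$,
\[
\frac{\ep}{4}\sum_{i=1}^N w_i\bigl[-1+u_i(x)\bigr] = \frac{4x}{\ep}\sum_{i=1}^N w_i\frac{\sigma_i^2}{1+u_i(x)} = x\bigl(1-f'(x)\bigr).
\]
Hence \eqref{equation:entropic-OT-barycenter-1D-2}, namely $x = x(1-f'(x))$, is equivalent to $x\,f'(x)=0$, whose solutions in $[0,\infty)$ are $x_0=0$ together with the zeros of $f'$. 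In scenario (2) the latter set is the single point $x^{*}$, so $x^{*}$ is the unique positive solution and $x_0=0$ is the only other one. The argument is entirely elementary; the only real obstacle is careful bookkeeping of the constant $c_\ep = 4/\ep$ through the derivative computations and the above identity, plus remembering that $f'(0)$ must be read as a one-sided derivative on $[0,\infty)$.
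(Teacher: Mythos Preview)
Your proposal is correct and follows essentially the same route as the paper: compute $f'$ and $f''$ directly, use $f''>0$ (since some $\sigma_i>0$) for strict convexity, read off $f'(0)=1-\tfrac{2}{\ep}\sum_i w_i\sigma_i^2$ and $f'(+\infty)=1$, and split cases accordingly; the paper verifies the equivalence of \eqref{equation:entropic-OT-barycenter-1D} and \eqref{equation:entropic-OT-barycenter-1D-2} via the same identity $-1+(1+a^2)^{1/2}=a^2[1+(1+a^2)^{1/2}]^{-1}$, though your repackaging of the right-hand side of \eqref{equation:entropic-OT-barycenter-1D-2} as $x(1-f'(x))$, reducing that equation to $x\,f'(x)=0$, is a slightly cleaner way to see both solutions at once.
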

\begin{proof} We have
	%\begin{align*}
	$f'(x) = 1 - \frac{4}{\ep}\sum_{i=1}^Nw_i \sigma_i^2\left[1 +\left(1 + c_{\ep}^2\sigma_i^2 x\right)^{1/2}\right]^{-1}$,
	$f''(x)  = \frac{32}{\ep^3}\sum_{i=1}^Nw_i\sigma_i^4\left[1 +\left(1 + c_{\ep}^2\sigma_i^2 x\right)^{1/2}\right]^{-2}\left(1 + c_{\ep}^2\sigma_i^2 x\right)^{-1/2} > 0 \;\forall x \geq 0$.
	%\end{align*}
	Thus $f'(x)$ is a strictly increasing function on $[0,\infty)$. 
	Furthermore, $f'(0)  = 1- \frac{2}{\ep}\sum_{i=1}^Nw_i \sigma_i^2$,
	$\lim_{x \approach \infty}f'(x) = 1$.
	We have the following three scenarios
	\begin{enumerate}
		\item $\ep > 2\sum_{i=1}^Nw_i\sigma_i^2$.  In this case $f'(0) > 0$ and thus $f'(x) > 0$ $\forall x > 0$ and thus the minimum for $f$ on $[0,\infty)$ is $f(0) = \sum_{i=1}^Nw_i\sigma_i^2$.
		
		\item $\ep = 2\sum_{i=1}^Nw_i\sigma_i^2$. In this case $f'(0) = 0$, $f'(x) > 0$ $\forall x > 0$ and thus the minimum for $f$ on $[0,\infty)$
		is $f(0) = \sum_{i=1}^Nw_i \sigma_i^2$.
		\item $0< \ep < 2\sum_{i=1}^Nw_i\sigma_i^2$. In this case $f'(0) < 0$ and thus there exists a unique $x^{*} > 0$ such that $f'(x^{*}) = 0$.
		This is the unique global minimizer of $f$ and 
		%is the unique solution of the equation
		\begin{align*}
		f'(x^{*}) = 0 &\equivalent\sum_{i=1}^Nw_i \sigma_i^2\left[1 +\left(1 + c_{\ep}^2\sigma_i^2 x^{*}\right)^{1/2}\right]^{-1} = \frac{\ep}{4}
		\\
		%\end{align*}
		%One can verify directly that the condition $0 < \ep < 2\sum_{i=1}^Nw_i\sigma_i^2$ ensures that a solution $x^{*}$ of Eq.\eqref{equation:entropic-OT-barycenter-1D} automatically satisfies $x^{*} > 0$.
		%Equivalently, $x^{*}$ is the unique positive solution of the equation
		%\begin{align*}
		&\equivalent x^{*} = \frac{\ep}{4}\sum_{i=1}^Nw_i \left[-1 + \left(1+ c_{\ep}^2\sigma_i^2 x^{*}\right)^{1/2}\right],
		\end{align*}
		which can be verified via the identity $-1+(1+a^2)^{1/2} = a^2[1+(1+a^2)^{1/2}]^{-1}$.
		We note the last equation also has the solution $x=0$. \qed
	\end{enumerate}
\end{proof}

\begin{proof}
	[\textbf{of Theorem \ref{theorem:entropic-barycenter-Gaussian} - Infinite-dimensional setting}]
	%As a direct consequence of Theorem \ref{theorem:lower-bound-reg-OT-Hilbert}, the entropic barycenter of a set of Gaussian measures
	%in $\Pcal_2(\H)$, if it exists, must be another Gaussian measure on $\H$.	
	By Theorem \ref{theorem:OT-regularized-Gaussian}, 
	%the squared entropic distance 
	$\OT^{\ep}_{d^2}(\Ncal(m_0, C_0), \Ncal(m_1, C_1))$ decomposes into the squared
	Euclidean distance $||m_0 - m_1||^2$ and the distance $\OT^{\ep}_{d^2}(\Ncal(0, C_0), \Ncal(0, C_1))$.
	It follows that we can compute the barycentric mean and covariance operator separately.
	The barycentric mean is obviously the Euclidean mean $\bar{m} = \sum_{i=1}^Nw_im_i$.
	Consider now the centered Gaussian measures $\{\Ncal(0, C_i)\}_{i=1}^N$.
	We define the following functions $F, F_i: \Sym^{+}(\H) \cap\Tr(\H) \mapto \R$,
	$1 \leq i \leq N$,
	%\begin{align}
	%F(C) = \sum_{i=1}^Nw_i F_i(C),
	%\end{align}
	%where
\begin{equation}
	\begin{aligned}
		F(C) &= \sum_{i=1}^Nw_i F_i(C),
		\\
	F_i(C)&=\OT^{\ep}_{d^2}(\Ncal(0,C), \Ncal(0, C_i))
	\\ 
	&= \trace(C) + \trace(C_i) -\frac{\ep}{2}\trace(M^{\ep}_{01}) + \frac{\ep}{2}\log\det\left(I + \frac{1}{2}M^{\ep}_{01}\right)
	\\
	&= \trace(C) + \trace(C_i) - \frac{\ep}{2}\trace\left[-I + \left(I + c_{\ep}^2C_i^{1/2}CC_i^{1/2}\right)^{1/2}\right]
	\\
	&\quad +\frac{\ep}{2}\log\det\left(\frac{1}{2}I + \frac{1}{2}\left(I + c_{\ep}^2C_i^{1/2}CC_i^{1/2}\right)^{1/2}\right),\;\; c_{\ep} = \frac{4}{\ep}.
	\end{aligned}
	\end{equation}
	Each function $F_i$ is well-defined on the larger, open, convex set $\Omega_i =\{X \in \Sym(\H)\cap\Tr(\H):
	I+c_{\ep}^2C_i^{1/2}XC_i^{1/2} > 0\}$, $c_{\ep}=\frac{4}{\ep}$. 
	If $C_i$ is strictly positive, then $F_i$ is strictly convex by Theorem \ref{theorem:entropic-OT-convexity}.
	Then $F$ is well-defined on the open, convex set 
	$\Omega = \cap_{i=1}^N\Omega_i = \{X \in \Sym(\H)\cap\Tr(\H): I+c_{\ep}^2C_i^{1/2}XC_i^{1/2}>0, i=1, \ldots, N\}$.
	On $\Omega$, $F$ is Fr\'echet differentiable. Since we assume that at least one of the $C_i's$ is strictly positive, $F$ is strictly convex. Thus a minimizer $X_0\in \Omega$ of $F$ must necessarily be unique and satisfy $DF(X_0)=0$.
	 
	%Since $\OT^{\ep}_{d^2}(\mu, \nu)$ is strictly convex in both $\mu$ and $\nu$, the functions $F,F_i$, $1 \leq i \leq N$, are strictly convex in $C$. Thus the minimizer of $F$, if it exists, is unique. Let us find $X_0 \in \Sym^{+}(\H) \cap \Tr(\H)$ for which $DF(X_0) = 0$.
	Combining Lemmas \ref{lemma:derivative-trace}, \ref{lemma:derivative-trace-square-root}, and \ref{lemma:derivative-logdet-square-root},
	we obtain the Fr\'echet derivative $DF_i(X_0): \Sym(\H) \cap \Tr(\H) \mapto \R$, 
	$X_0 \in \Omega$, %$X_0 \in \Sym^{+}(\H)\cap \Tr(\H)$, 
	$X \in \Sym(\H) \cap \Tr(\H)$, as follows. With $c_{\ep} = \frac{4}{\ep}$, 
	\begin{align*}
	&DF_i(X_0)(X) = \trace(X) - \frac{4}{\ep}\trace\left[C_i^{1/2}\left(I+c_{\ep}^2C_i^{1/2}X_0 C_i^{1/2}\right)^{-1/2}C_i^{1/2}X\right]
	\\
	&\quad +\frac{4}{\ep}\trace
	\biggl[C_i^{1/2}\left(\left(I + c_{\ep}^2C_i^{1/2}X_0C_i^{1/2}\right)^{1/2}+ \left(I + c_{\ep}^2C_i^{1/2}X_0C_i^{1/2}\right)\right)^{-1}C_i^{1/2}X\biggr]
	\\
	& = \trace(X) - \frac{4}{\ep}\trace\left[C_i^{1/2}\left(I+\left(I + c_{\ep}^2C_i^{1/2}X_0C_i^{1/2}\right)^{1/2}\right)^{-1}C_i^{1/2}X\right].
	\end{align*} 
	Summing over $i$, $i =1, \ldots, N$, we obtain
	\begin{align*}
	&DF(X_0)(X) = \sum_{i=1}^Nw_i DF_i(X_0)(X) 
	\\
	&=\sum_{i=1}^Nw_i \trace\left\{\left(I - \frac{4}{\ep}\left[C_i^{1/2}\left(I+\left(I + c_{\ep}^2C_i^{1/2}X_0C_i^{1/2}\right)^{1/2}\right)^{-1}C_i^{1/2}\right]\right)X\right\}.
	\end{align*}
	By Lemma \ref{lemma:trace-functional-zero}, $DF(X_0)(X) = 0$ $\forall X \in \Sym(\H) \cap \Tr(\H)$ if and only if
	\begin{equation}
	\label{equation:identity-DFX0=0}
	\begin{aligned}
	&\sum_{i=1}^Nw_i \left\{I - \frac{4}{\ep}\left[C_i^{1/2}\left(I+\left(I + c_{\ep}^2C_i^{1/2}X_0C_i^{1/2}\right)^{1/2}\right)^{-1}C_i^{1/2}\right]\right\} = 0
	\\
	& \equivalent \sum_{i=1}^N w_i \left[C_i^{1/2}\left(I+\left(I + c_{\ep}^2C_i^{1/2}X_0C_i^{1/2}\right)^{1/2}\right)^{-1}C_i^{1/2}\right] = \frac{\ep}{4}I.
	\end{aligned}
	\end{equation}
	
	When $\dim(\H) = \infty$, this identity is impossible, since the left hand side is a trace class operator, whereas the identity operator is not trace class. Thus the function $F$ does not have a global minimum
	on the open set $\Omega$.
	%$X_0 \in \Sym^{+}(\H) \cap \Tr(\H)$, $X_0 \neq 0$.
	
	Consider the possible global minima of $F$ on $\Sym^{+}(\H)\cap\Tr(\H) \subsetneq \Omega$.
	
	(i) Consider the case $\ep I \geq 2\sum_{i=1}^Nw_iC_i$. 
	For any $X \in \Sym(\H) \cap \Tr(\H)$,
	\begin{align*}
	DF(0)(X) = \sum_{i=1}^Nw_i\trace\left[\left(I - \frac{2}{\ep}C_i\right)X\right] = \trace\left[\left(I - \frac{2}{\ep}\sum_{i=1}^Nw_iC_i\right)X\right].
	\end{align*}
	By Lemma \ref{lemma:operator-monotone-quadratic}, we have 
	\begin{align*}
	A\geq 0, B \geq 0 \implies A^{1/2}BA^{1/2}\geq 0 \imply \trace(AB) = \trace(A^{1/2}BA^{1/2}) \geq 0.
	\end{align*}
	
	Then $\forall Y \in \Sym^{+}(\H) \cap \Tr(\H)$, we have by Eq.\eqref{equation:Frechet-derivative-strictly-convex}
	\begin{align*}
	F(Y) > F(0) + DF(0)(Y) = F(0) + \trace\left[\left(I - \frac{2}{\ep}\sum_{i=1}^Nw_iC_i\right)Y\right] \geq F(0).
	\end{align*}
	Thus $X_0 = 0$ is the unique global minimizer of $F$ in $\Sym^{+}(\H) \cap \Tr(\H)$.
	
	(ii) Assume now that $\ep I \ngeq 2\sum_{i=1}^Nw_iC_i$. We show that $X_0 = 0$ is not  
	a global minimum of $F$ in $\Sym^{+}(\H) \cap \Tr(\H)$.
	For any $X_0 \in \Sym^{+}(\H)\cap \Tr(\H)$,
	\begin{align*}
	&DF(X_0)(X_0)
	\\ 
	&= \sum_{i=1}^Nw_i\trace\left[X_0 - \frac{4}{\ep}\left[X_0^{1/2}C_i^{1/2}\left(I+\left(I + c_{\ep}^2C_i^{1/2}X_0C_i^{1/2}\right)^{1/2}\right)^{-1}C_i^{1/2}X_0^{1/2}\right]\right]
	\\
	& = \sum_{i=1}^Nw_i\trace\left[X_0 +\frac{\ep}{4}I -\frac{\ep}{4}\left(I + c_{\ep}^2X_0^{1/2}C_iX_0^{1/2}\right)^{1/2} \right] \;\text{by Lemma \ref{lemma:adjoint-switch-CX}}.
	\end{align*}
	The assumption 
	$\ep I \ngeq 2\sum_{i=1}^Nw_iC_i$ means that $\exists u \in \H, ||u||=1$, such that
	$0 < \ep = \ep||u||^2 < 2\sum_{i=1}^Nw_i\la u, C_iu\ra$.
	Consider $X_u = x(u \otimes u)$. By Lemma \ref{lemma:rank-one-operator-power},
	\begin{align*}
	DF(X_u)(X_u) &= \sum_{i=1}^Nw_i\trace\left[\left(x +\frac{\ep}{4}- \frac{\ep}{4}\left(1+ c_{\ep}^2x\la u, C_iu\ra\right)^{1/2}\right)(u\otimes u)\right]
	\\
	&= x - \frac{\ep}{4}\sum_{i=1}^Nw_i \left[-1 + \left(1+ c_{\ep}^2x\la u, C_iu\ra\right)^{1/2}\right].
	\end{align*}
	According to Lemma \ref{lemma:entropic-OT-barycenter-1D}, the property $0 < \ep < 2\sum_{i=1}^Nw_i \la u, C_iu\ra$
implies that there exists a unique $x^{*} > 0$ such that
\begin{align*}
x^{*} = \frac{\ep}{4}\sum_{i=1}^Nw_i \left[-1 + \left(1+ c_{\ep}^2x^{*}\la u, C_iu\ra\right)^{1/2}\right].
\end{align*}	
Thus with $X_u^{*} = x^{*}(u \otimes u)$, we have
%\begin{align*}
$DF(X_u^{*})(X_u^{*}) = 0$.
%\end{align*}
By Eq.\eqref{equation:Frechet-derivative-strictly-convex}, 
\begin{align*}
F(0) > F(X_u^{*}) -DF(X_u^{*})(X_u^{*}) = F(X_u^{*}).
\end{align*}
Thus $X_0 = 0$ is not a global minimum of $F$ in $\Sym^{+}(\H)\cap\Tr(\H)$.
\qed
%Hence there is no global minimum of $F$ in $\Sym^{+}(\H) \cap \Tr(\H)$. \qed	
	\end{proof}

	\begin{proof}
	    	[\textbf{of Theorem \ref{theorem:entropic-barycenter-Gaussian} - Finite-dimensional setting}]
	When $\dim(\H) < \infty$, if we impose the additional condition that $\ker(X_0) = \{0\}$, that is $X_0$ is invertible, then the identity \eqref{equation:identity-DFX0=0} is equivalent to 
	\begin{align*}
	& \sum_{i=1}^N w_i \left[X_0^{1/2}C_i^{1/2}\left(I+\left(I + c_{\ep}^2C_i^{1/2}X_0C_i^{1/2}\right)^{1/2}\right)^{-1}C_i^{1/2}X_0^{1/2}\right] = \frac{\ep}{4} X_0.
	\end{align*} 
	%As in the proof of Theorem \ref{theorem:optimal-joint-square-gauss}, 
	By Lemma \ref{lemma:adjoint-switch-CX},
	on the left hand side,
	\begin{align*}
	&X_0^{1/2}C_i^{1/2}\left(I+\left(I + c_{\ep}^2C_i^{1/2}X_0C_i^{1/2}\right)^{1/2}\right)^{-1}C_i^{1/2}X_0^{1/2} 
	\\
	&\quad = 
	%-\frac{\ep^2}{16}I + \frac{\ep^2}{16}\left(I +  c_{\ep}^2X_0^{1/2}C_iX_0^{1/2}\right)^{1/2}.
	-\frac{1}{c_{\ep}^2}I + \frac{1}{c_{\ep}^2}\left(I +  c_{\ep}^2X_0^{1/2}C_iX_0^{1/2}\right)^{1/2}.
	\end{align*}
	Substituting into the previous equation, we obtain
	\begin{align*}
	X_0 = \frac{\ep}{4}\sum_{i=1}^N w_i\left[-I + \left(I +  c_{\ep}^2X_0^{1/2}C_iX_0^{1/2}\right)^{1/2}\right].
	\end{align*}
	This gives Eq.\eqref{equation:entropic-barycenter-Gaussian-strictlypositive}.
	Define the following map $\Fcal: \Sym^{+}(\H) \mapto \Sym^{+}(\H)$ by
\begin{equation}
\Fcal(X) = \sum_{i=1}^N w_i \left[C_i^{1/2}\left(I+\left(I + c_{\ep}^2C_i^{1/2}XC_i^{1/2}\right)^{1/2}\right)^{-1}C_i^{1/2}\right],
\end{equation}
then we have by the monotonicity of the square root function
\begin{equation}
\begin{aligned}
\Fcal(0) = \frac{1}{2}\sum_{i=1}^Nw_iC_i, \;\;\;
\Fcal(X)  \leq \frac{1}{2}\sum_{i=1}^Nw_iC_i, \;\;\; \forall X \geq 0.
\end{aligned}
\end{equation}
Thus it is clear that

(i) If $\ep I > 2\sum_{i=1}^Nw_iC_i$, then Eq.\eqref{equation:entropic-barycenter-Gaussian} has
no solution $X_0 \geq 0$.

(ii) If $\ep I = 2\sum_{i=1}^Nw_iC_i$, then Eq.\eqref{equation:entropic-barycenter-Gaussian} has the solution
$X_0 = 0$, which is necessarily unique due to the strict convexity of the entropic OT distance.

(iii) The condition $0 < \ep I < 2 \sum_{i=1}^Nw_iC_i$ is thus necessary if  Eq.\eqref{equation:entropic-barycenter-Gaussian} is to have a solution $X_0 \geq 0$, $X \neq 0$.
By Proposition \ref{proposition:entropic-barycenter-Gaussian-strictlypositive}, 
%\ref{proposition:derivation-equation-barycenter}, 
if $C_i \geq \alpha I$, $1\leq i \leq N$, 
and $0 < \ep < 2 \alpha$, then Eq.\eqref{equation:entropic-barycenter-Gaussian} has a strictly positive solution,
which is necessarily unique due to the strict convexity of the entropic OT distance.
\qed
%	This completes the proof.
\end{proof}

\begin{proposition}
	\label{proposition:entropic-barycenter-Gaussian-strictlypositive}
	Define the following map $\Gcal: \Sym^{+}(n) \mapto \Sym^{+}(n)$, $c_{\ep}=\frac{4}{\ep}$,
	\begin{equation}
	\label{equation:map-entropic-barycenter-Gaussian-strictlypositive}
	\Gcal(X) = \frac{\epsilon}{4}\sum_{i=1}^Nw_i\left[-I + \left(I + c_{\ep}^2X^\frac{1}{2}C_iX^\frac{1}{2}\right)^\frac{1}{2}\right].
	\end{equation}
\begin{enumerate}
	\item Suppose $\exists \alpha \in \R, \alpha > 0$ such that $C_i \geq \alpha I$, $1 \leq i \leq N$ and 
	$0 < \ep < 2 \alpha$. Then $\Gcal$ has a strictly positive fixed point, that is Eq.\eqref{equation:entropic-barycenter-Gaussian-strictlypositive}
	has a strictly positive solution.
	\item Let $u \in \R^n$, $||u|| =1$. If $0 < \ep < 2\sum_{i=1}^Nw_i \la u, C_iu\ra$, then $X_u = x_u (u\otimes u) = x_u uu^T$
	is a fixed point of $\Gcal$, where $x_u$ is the unique positive solution of the one-dimensional fixed point equation
	\begin{equation}
	x = \frac{\ep}{4}\sum_{i=1}^Nw_i\left[-1 + \left(1+ c_{\ep}^2x\la u, C_iu\ra\right)^{1/2}\right].
	\end{equation}
\end{enumerate}
\end{proposition}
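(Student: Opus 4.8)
The plan is to handle the two parts by different means. Part~2 reduces to the one-dimensional fixed-point equation already analyzed in Lemma~\ref{lemma:entropic-OT-barycenter-1D}, whereas part~1 is genuinely $n$-dimensional and I would prove it by applying Brouwer's fixed point theorem to a suitable $\Gcal$-invariant order interval in $\Sym^{+}(n)$.

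For part~2, I would substitute $X = x\,(u\otimes u)$, $x\ge 0$, into $\Gcal$ and exploit the rank-one structure. Since $u\otimes u$ is a rank-one orthogonal projection, $X^{1/2} = \sqrt{x}\,(u\otimes u)$ and $X^{1/2}C_iX^{1/2} = x\la u,C_iu\ra\,(u\otimes u)$; because $I + c_{\ep}^2 x\la u,C_iu\ra\,(u\otimes u)$ is a rank-one perturbation of $I$, functional calculus (equivalently, Lemma~\ref{lemma:rank-one-operator-power}) gives $\bigl(I + c_{\ep}^2 x\la u,C_iu\ra\,(u\otimes u)\bigr)^{1/2} = I + \bigl(\sqrt{1 + c_{\ep}^2 x\la u,C_iu\ra}-1\bigr)(u\otimes u)$. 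Hence $\Gcal\bigl(x\,(u\otimes u)\bigr) = g(x)\,(u\otimes u)$, where $g(x) = \frac{\ep}{4}\sum_{i=1}^N w_i\bigl[-1 + (1 + c_{\ep}^2 x\la u,C_iu\ra)^{1/2}\bigr]$, so $x\,(u\otimes u)$ is a fixed point of $\Gcal$ if and only if $x = g(x)$. Since each $\la u,C_iu\ra\ge 0$ and $0 < \ep < 2\sum_{i=1}^N w_i\la u,C_iu\ra$, Lemma~\ref{lemma:entropic-OT-barycenter-1D} (scenario~2, with $\sigma_i^2$ replaced by $\la u,C_iu\ra$, cf.\ Eq.~\eqref{equation:entropic-OT-barycenter-1D-2}) supplies a unique positive solution $x_u$, which is precisely the asserted fixed point.

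For part~1, I would first note that $\Gcal$ is continuous and maps $\Sym^{+}(n)$ into itself. Then I would establish two operator bounds for $X\ge 0$. From the elementary inequality $-I + (I+M)^{1/2}\le M^{1/2}$ together with $\frac{\ep}{4}c_{\ep}=1$ one gets the upper bound $\Gcal(X)\le \sum_{i=1}^N w_i\,(X^{1/2}C_iX^{1/2})^{1/2}$, hence $\|\Gcal(X)\|\le \|X\|^{1/2}\sum_{i=1}^N w_i\|C_i\|^{1/2}$. Using $C_i\ge\alpha I$ one has $X^{1/2}C_iX^{1/2}\ge\alpha X$, so $X\ge\delta I$ forces $\Gcal(X)\ge\phi(\delta)I$ with $\phi(t) = \frac{\ep}{4}\bigl[-1 + (1 + c_{\ep}^2\alpha t)^{1/2}\bigr]$. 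Applying Lemma~\ref{lemma:entropic-OT-barycenter-1D} with all $\sigma_i^2=\alpha$ (valid because $\sum_i w_i\alpha = \alpha > \ep/2$ by $\ep<2\alpha$) gives a unique $t^{*}>0$ with $\phi(t^{*})=t^{*}$ and $\phi(t)\ge t$ for $0<t\le t^{*}$. Fixing $\delta\in(0,t^{*}]$ and $R\ge\max\{t^{*},\,(\sum_i w_i\|C_i\|^{1/2})^2\}$, the order interval $\mathcal{I} = \{X\in\Sym(n): \delta I\le X\le RI\}$ is nonempty, compact, convex and $\Gcal$-invariant (the lower inclusion from $\phi(\delta)\ge\delta$, the upper one from the norm bound). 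Brouwer's theorem then yields a fixed point $\bar X\in\mathcal{I}$; since $\bar X\ge\delta I>0$, it is strictly positive and solves Eq.~\eqref{equation:entropic-barycenter-Gaussian-strictlypositive}.

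The main obstacle is part~1, specifically the construction of the invariant order interval: one must simultaneously trap $\Gcal(X)$ below $RI$ via a sublinear, square-root type growth bound and keep $\Gcal(X)$ above $\delta I$ via a lower bound, the latter being available exactly because the hypothesis $\ep<2\alpha$ makes the scalar map $\phi$ superlinear at the origin ($\phi'(0) = 2\alpha/\ep>1$) and hence endowed with a nonzero fixed point. Once the interval is in place, the remaining steps (continuity of $\Gcal$, Brouwer, and the automatic strict positivity coming from $\mathcal{I}$ being bounded away from $0$) are routine, as is part~2 after the rank-one functional calculus is carried out.
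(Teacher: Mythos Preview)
Your proposal is correct and follows essentially the same strategy as the paper. Part~2 is handled identically (rank-one reduction via Lemma~\ref{lemma:rank-one-operator-power}, then Lemma~\ref{lemma:entropic-OT-barycenter-1D}), and Part~1 likewise uses Brouwer's theorem on an invariant order interval $[\delta I, RI]$; the only cosmetic difference is that the paper takes the explicit endpoints $\delta=\beta_\ep=\alpha-\tfrac{\ep}{2}$ and $R=\gamma$ with $C_i\le\gamma I$, obtaining the invariance directly from $(1+a^2)^{1/2}\le 1+a$ and a one-line computation showing $\phi(\beta_\ep)=\beta_\ep$, whereas you recover the same lower endpoint abstractly via Lemma~\ref{lemma:entropic-OT-barycenter-1D} and use the slightly different bound $-I+(I+M)^{1/2}\le M^{1/2}$ for the upper one.
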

%We note that 
If $0 < \ep < 2\sum_{i=1}^Nw_iC_i$, then for $\forall u \in \H, ||u||=1$, 
$0 < \ep = \ep||u||^2 < 2\sum_{i=1}^Nw_i \la u, C_iu\ra$. In particular, if
$C_i \geq \alpha I$, $1 \leq i \leq N$,
%, then 
%$\la u,C_iu\ra \geq \alpha ||u||^2 = \alpha$ 
and $0 < \ep < 2\alpha$, then
$0 < \ep <  2\sum_{i=1}^Nw_i \la u, C_iu\ra$. Hence under these assumptions, the map $\Gcal$
%as defined 
 in Eq.\eqref{equation:map-entropic-barycenter-Gaussian-strictlypositive}, has {\it uncountably many fixed points}, which are positive but singular.

\begin{proof}
[\textbf{of Proposition \ref{proposition:entropic-barycenter-Gaussian-strictlypositive}}]
	For the first part,
let $\gamma \in \R, \gamma > 0$ be such that $C_i \leq \gamma I$, $1 \leq i \leq N$. Let $\beta_{\ep} = \alpha - \frac{\ep}{2} > 0$ and consider the following set
\begin{equation}
\Kcal_{\ep} = \{X \in \Sym^{+}(n): \beta_{\ep} I \leq X \leq \gamma I\}.
\end{equation}
This is a compact, convex set in $\Sym(n)$.
By Lemma \ref{lemma:operator-monotone-quadratic}, the operator monotonicity of the square root function, 
and the inequality $(1+a^2)^{1/2} \leq 1+a$, $a \in \R, a\geq  0$, we have with $c_{\ep} = \frac{4}{\ep}$,
\begin{align*}
&0 \leq X \leq \gamma I \imply \Gcal(X) \leq \frac{\ep}{4}\left[- I + \left(I + c_{\ep}^2\gamma^2I \right)^{1/2}\right] \leq \gamma I,
\\
&X \geq \beta_{\ep}I \imply \Gcal(X) \geq \frac{\ep}{4}\left[- I + \left(I + c_{\ep}^2\alpha \left(\alpha-\frac{\ep}{2}\right)I \right)^{1/2}\right] = \left(\alpha -\frac{\ep}{2}\right)I = \beta_{\ep}I.
\end{align*}
Thus the continuous map $\Gcal$ maps the compact convex set $\Kcal_{\ep}$ into itself.
By Brouwer Fixed Point Theorem, $\Gcal$ has at least a fixed point in $\Kcal_{\ep}$. 

For the second part, by Lemma \ref{lemma:rank-one-operator-power},
\begin{align*}
(u \otimes u)^{1/2}C_i(u \otimes u)^{1/2} = (u \otimes u)C_i(u \otimes u) = \la u, C_iu\ra (u \otimes u).
\end{align*}
Therefore, for $X_u = x(u \otimes u)$,
\begin{align*}
I + c_{\ep}^2X_u^{1/2}C_iX_u^{1/2} &= (I-u\otimes u) + \left(1+c_{\ep}^2x\la u, C_iu\ra\right) (u \otimes u).
\end{align*}
Since $(I-u\otimes u)^2 = (I-u \otimes u)$ and $(I-u \otimes u)(u \otimes u) = 0$,
by Lemma \ref{lemma:rank-one-operator-power},
\begin{align*}
\left(I + c_{\ep}^2X_u^{1/2}C_iX_u^{1/2}\right)^{1/2} = (I - u \otimes u) + \left(1+ c_{\ep}^2x\la u, C_iu\ra\right)^{1/2}(u \otimes u).
\end{align*}
It follows that
%\begin{align*}
$\Gcal(X_u) = \frac{\ep}{4}\sum_{i=1}^Nw_i\left[-1 + \left(1+ c_{\ep}^2x\la u, C_iu\ra\right)^{1/2}\right]
(u \otimes u)$.
%\end{align*}
Thus the
% fixed point 
equation $X_u = \Gcal(X_u)$ is equivalent to 
%the one-dimensional fixed point equation
%\begin{align*}
$x = \frac{\ep}{4}\sum_{i=1}^Nw_i\left[-1 + \left(1+ c_{\ep}^2x\la u, C_iu\ra\right)^{1/2}\right]$.
%\end{align*}
By the assumption $0 < \ep < 2\sum_{i=1}^Nw_i \la u, C_iu\ra$, this equation
has a unique positive solution by Lemma \ref{lemma:entropic-OT-barycenter-1D}.
\qed
\end{proof}

	\section{Sinkhorn barycenter of Gaussian measures}
	\label{section:barycenter-Sinkhorn}
	
	In this section, we prove Theorem \ref{theorem:strict-convexity-Sinkhorn} on the {\it strict convexity} of the Sinkhorn divergence and Theorem \ref{theorem:barycenter-sinkhorn-Gaussian} on the {\it barycenter} of a set of Gaussian measures on $\H$ under the Sinkhorn divergence. We need the following technical results.

	\begin{lemma}
		\label{lemma:derivative-trace-square-root-2}
		Let $f: \Sym(\H) \cap \Tr(\H) \mapto \R$ be defined by $f(X) = \trace\left[ -I + \left(I + c^2X^2\right)^{1/2}\right]$, $c \in \R$. 
		$Df(X_0): \Sym(\H) \cap \Tr(\H) \mapto \R$, $X_0 \in \Sym(\H) \cap \Tr(\H)$, is given by
		\begin{align}
		Df(X_0)(X) = c^2\trace\left[\left(I + c^2X_0^2\right)^{-1/2}X_0X\right], \;X \in \Sym(\H) \cap\Tr(\H).
		\end{align}
	\end{lemma}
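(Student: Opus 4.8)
The plan is to mimic the proof of Lemma~\ref{lemma:derivative-trace-square-root}, now with the squaring map $X \mapsto c^2 X^2$ playing the role of $X \mapsto c^2 C^{1/2}XC^{1/2}$. First I would check that $f$ is well-defined and $\R$-valued: since $X_0 \in \Sym(\H)\cap\Tr(\H)$, also $X_0^2 \in \Tr(\H)$, and because $t \mapsto (1+t)^{1/2}-1$ vanishes at $t=0$, the operator $(I+c^2X_0^2)^{1/2}-I$ lies in $\Tr(\H)$, so $f(X_0) = \trace[(I+c^2X_0^2)^{1/2}-I]$ is finite. Moreover $Z_0 := I + c^2X_0^2 \geq I > 0$, so $Z_0$ is invertible, which is what is needed to invoke the second identity of Lemma~\ref{lemma:derivative-square-root}.

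Next, write $g(X) = (I+c^2X^2)^{1/2}$ and $f(X) = \trace[-I + g(X)]$. For the squaring map $\mysq(X)=X^2$ one has $D\mysq(X_0)(X) = X_0 X + XX_0$, hence by the chain rule together with Lemmas~\ref{lemma:derivative-trace} and~\ref{lemma:derivative-square-root},
\begin{align*}
Df(X_0)(X) &= \trace[Dg(X_0)(X)] = c^2\,\trace\big[D\mysqrt(I+c^2X_0^2)(X_0X+XX_0)\big]
\\
&= \frac{c^2}{2}\trace\big[(I+c^2X_0^2)^{-1/2}(X_0X + XX_0)\big],
\end{align*}
where in the last step I use that $Z_0$ is invertible and that $X_0X+XX_0 \in \Sym(\H)\cap\Tr(\H)$, being a symmetric product of a trace class and a bounded operator. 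As in Lemma~\ref{lemma:derivative-trace-square-root}, the only point of care is that $g$ takes values in the affine set $I + (\Sym(\H)\cap\Tr(\H))$ rather than in $\Tr(\H)$ itself, so the chain rule is applied to $g - I$ composed with $\mysqrt$ in the appropriate way; this causes no difficulty.

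Finally I would simplify the two trace terms. Since $(I+c^2X_0^2)^{-1/2}$ is an operator function of $X_0$, it commutes with $X_0$; combined with cyclicity of the trace this gives
\begin{align*}
\trace\big[(I+c^2X_0^2)^{-1/2}XX_0\big] = \trace\big[X_0(I+c^2X_0^2)^{-1/2}X\big] = \trace\big[(I+c^2X_0^2)^{-1/2}X_0X\big].
\end{align*}
Hence the two contributions coincide and $Df(X_0)(X) = c^2\trace[(I+c^2X_0^2)^{-1/2}X_0X]$, as claimed. There is no genuine obstacle beyond bookkeeping of the operator ideals and the commutation argument at the end; everything else is a direct application of the derivative lemmas already established.
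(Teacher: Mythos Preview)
Your proof is correct and follows essentially the same route as the paper: write $f = \trace[-I+g]$ with $g(X)=(I+c^2X^2)^{1/2}$, apply the chain rule through $D\mysq(X_0)(X)=X_0X+XX_0$ and Lemma~\ref{lemma:derivative-square-root} to get $\frac{c^2}{2}\trace[(I+c^2X_0^2)^{-1/2}(X_0X+XX_0)]$, and then collapse the two terms to obtain the stated formula. Your additional remarks on well-definedness and the explicit commutation justification for the last step are fine refinements but do not change the argument.
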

	\begin{proof}
		Let $g(X) = -I + \left(I + c^2X^2\right)^{1/2}$, then $f(X) = \trace[g(X)]$ and
		%By the chain rule
		%and Lemma \ref{lemma:derivative-square-root},
		\begin{align*}
		&Df(X_0)(X) = Df(g(X_0)) \compose Dg(X_0)(X) = \trace[Dg(X_0)(X)]
		%\;\text{by  Lemma \ref{lemma:derivative-square-root}}
		\\
		& = c^2\trace\left[D\mysqrt\left(I + c^2X_0^2\right)\left(X_0X + XX_0\right)\right]
		\\
		& = \frac{1}{2}c^2\trace\left[\left(I + c^2X_0^2\right)^{-1/2}\left(X_0X + XX_0\right)\right]
		 \;\;\;\text{by Lemma \ref{lemma:derivative-square-root}}
		\\
		& 
		= c^2\trace\left[\left(I + c^2X_0^2\right)^{-1/2}X_0X\right]. 
		\end{align*}
		%	\qed
	\end{proof}
	
	\begin{lemma}
		\label{lemma:derivative-logdet-square-root-2}
		Let $f: \Sym(\H) \cap \Tr(\H) \mapto \R$ be defined by 
		$f(X) = \log\det\left[ \frac{1}{2}I + \frac{1}{2}\left(I + c^2X^2\right)^{1/2}\right]$.
		Then
		$Df(X_0): \Sym(\H) \cap \Tr(\H) \mapto \R$, $X_0 \in \Sym(\H) \cap \Tr(\H)$, is defined by
		\begin{align}
		Df(X_0)(X) = c^2\trace\left[\left(\left(I + c^2X_0^2\right)^{1/2} + \left(I + c^2X_0^2\right)\right)^{-1}X_0X\right].
		\end{align}
	\end{lemma}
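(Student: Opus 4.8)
The plan is to mirror the proof of Lemma \ref{lemma:derivative-trace-square-root-2} almost verbatim, but with the outer function being $\log\det(I+\cdot)$ rather than $\trace$. First I would decompose $f$ as a composition: write $g(X) = -\frac{1}{2}I + \frac{1}{2}\left(I + c^2X^2\right)^{1/2}$, so that $f(X) = \log\det[I + g(X)]$. Note that $g$ maps $\Sym(\H) \cap \Tr(\H)$ into $\Sym(\H) \cap \Tr(\H)$ (the argument $I + c^2X^2$ is of the form $I + (\text{trace class})$ since $X^2$ is trace class when $X$ is, its square root minus $I$ is trace class, and the half-shift keeps us in the trace class), and that $I + g(X) = \frac{1}{2}I + \frac{1}{2}(I+c^2X^2)^{1/2} > 0$, so the Fredholm determinant and its logarithm are well-defined and $g(X)$ lies in the domain of Lemma \ref{lemma:derivative-logdet}.

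Next I would apply the chain rule together with the three building-block derivative lemmas already in the paper. By Lemma \ref{lemma:derivative-logdet}, $Df(g(X_0))(\cdot) = \trace[(I+g(X_0))^{-1}(\cdot)]$, i.e. with the argument multiplied by $\left(\frac{1}{2}I + \frac{1}{2}(I+c^2X_0^2)^{1/2}\right)^{-1}$. For $Dg(X_0)$, exactly as in the proof of Lemma \ref{lemma:derivative-trace-square-root-2}, the chain rule through $\mysq$ and $\mysqrt$ gives $Dg(X_0)(X) = \frac{c^2}{2} D\mysqrt(I+c^2X_0^2)(X_0X + XX_0)$. Composing,
\begin{align*}
Df(X_0)(X) &= \frac{c^2}{2}\trace\left[\left(\tfrac{1}{2}I + \tfrac{1}{2}(I+c^2X_0^2)^{1/2}\right)^{-1} D\mysqrt(I+c^2X_0^2)(X_0X + XX_0)\right].
\end{align*}
Now I would use the third identity in Lemma \ref{lemma:derivative-square-root}: since $\left(\frac{1}{2}I + \frac{1}{2}(I+c^2X_0^2)^{1/2}\right)^{-1}$ is a function of $I+c^2X_0^2$ and hence commutes with it, that lemma lets me pull $D\mysqrt$ out as $\trace[f(Z_0)D\mysqrt(Z_0)(Y)] = \frac{1}{2}\trace[Z_0^{-1/2}f(Z_0)Y]$ with $Z_0 = I+c^2X_0^2$, $Y = X_0X + XX_0$. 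This yields
\begin{align*}
Df(X_0)(X) &= \frac{c^2}{4}\trace\left[(I+c^2X_0^2)^{-1/2}\left(\tfrac{1}{2}I + \tfrac{1}{2}(I+c^2X_0^2)^{1/2}\right)^{-1}(X_0X + XX_0)\right].
\end{align*}
Simplifying the product $(I+c^2X_0^2)^{-1/2}\left(\frac{1}{2}I + \frac{1}{2}(I+c^2X_0^2)^{1/2}\right)^{-1} = 2\left((I+c^2X_0^2) + (I+c^2X_0^2)^{1/2}\right)^{-1}$, and using cyclicity of the trace together with the fact that $X_0$ commutes with every function of $X_0^2$ to combine $X_0X$ and $XX_0$ into $2X_0X$, gives precisely the claimed formula $Df(X_0)(X) = c^2\trace\left[\left((I+c^2X_0^2)^{1/2} + (I+c^2X_0^2)\right)^{-1}X_0X\right]$.

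There is no real obstacle here; the only points that need a word of care are (i) checking membership in $\Tr(\H)$ at each stage so that the Fredholm determinant and Lemmas \ref{lemma:derivative-logdet}, \ref{lemma:derivative-square-root} genuinely apply — which follows because $X \in \Tr(\H) \Rightarrow X^2 \in \Tr(\H) \Rightarrow (I+c^2X^2)^{1/2} - I \in \Tr(\H)$ — and (ii) justifying the commutation steps, all of which reduce to "a continuous function of $X_0^2$ commutes with $X_0$ and with any other function of $X_0^2$," which is immediate from the spectral theorem. This parallels the proof of the companion Lemma \ref{lemma:derivative-logdet-square-root} where the same commutation manipulations were carried out, so I would present it compactly, citing Lemmas \ref{lemma:derivative-logdet}, \ref{lemma:derivative-square-root}, and the proof of Lemma \ref{lemma:derivative-trace-square-root-2} for the derivative of $g$.
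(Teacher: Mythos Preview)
Your proof is correct and follows essentially the same route as the paper: decompose $f$ as $\log\det(I+\cdot)$ composed with (a half of) $-I+(I+c^2X^2)^{1/2}$, apply Lemmas \ref{lemma:derivative-logdet} and \ref{lemma:derivative-square-root} via the chain rule, and then simplify using commutation of $X_0$ with functions of $X_0^2$ and trace cyclicity. The only difference from the paper is the cosmetic choice of where to place the factor $\tfrac{1}{2}$ in the intermediate function $g$.
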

	\begin{proof}
		Let $g(X) = -I + \left(I + c^2X^2\right)^{1/2}$, then $f(X) = \log\det[I + \frac{1}{2}g(X)]$. By Lemmas \ref{lemma:derivative-logdet} and \ref{lemma:derivative-square-root},
		\begin{align*}
		&Df(X_0)(X) = \frac{1}{2}\trace[(I+\frac{1}{2}g(X_0))^{-1}Dg(X_0)(X)]
		\\
		&= \frac{1}{2}c^2\trace\left[\left(\frac{1}{2}I + \frac{1}{2}\left(I + c^2X_0^2\right)^{1/2}\right)^{-1}
		D\mysqrt\left(I + c^2X_0^2\right)(X_0X + XX_0)\right]
		\\
		& =\frac{1}{4}c^2\trace\left[\left(I + c^2X_0^2\right)^{-1/2}\left(\frac{1}{2}I + \frac{1}{2}\left(I + c^2X_0^2\right)^{1/2}\right)^{-1}(X_0X + XX_0)\right]
		\\
		& = c^2\trace\left[\left(\left(I + c^2X_0^2\right)^{1/2} + \left(I + c^2X_0^2\right)\right)^{-1}X_0X\right].
		\end{align*}
	\end{proof}
	
	\begin{lemma}
		\label{lemma:Frechet-derivative-product-rule}
		Let $W$ be a Banach algebra and $\Omega \subset W$ be an open subset.
		Let $g:\Omega \mapto W$ be Fr\'echet differentiable at $X_0$.
		Let $f(X) = g(X)X$ and $h(X) = Xg(X)$. Then $f$ and $h$ are Fr\'echet differentiable at $X_0$, with
		$Df(X_0)(X) = Dg(X_0)(X)X_0 + g(X_0)X$ and $Dh(X_0)(X) = X_0Dg(X_0)(X) + Xg(X_0)$. 
	\end{lemma}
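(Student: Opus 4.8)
The plan is to recognize this as the Banach-algebra version of the product rule and to prove it directly from the definition of the Fréchet derivative together with submultiplicativity of the norm on $W$. I would carry out the argument in full for $f(X) = g(X)X$; the case $h(X) = Xg(X)$ is identical after swapping the order of the two factors in every product.

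First I would fix $X_0 \in \Omega$ and use the Fréchet differentiability of $g$ at $X_0$ to write, for $H \in W$ with $X_0 + H \in \Omega$,
\[
g(X_0 + H) = g(X_0) + Dg(X_0)(H) + r(H), \qquad \lim_{H \to 0}\frac{\|r(H)\|}{\|H\|} = 0 .
\]
Substituting this into $f(X_0+H) = g(X_0+H)(X_0+H)$ and expanding the product gives
\[
f(X_0 + H) - f(X_0) = g(X_0)H + Dg(X_0)(H)X_0 + \bigl[\,Dg(X_0)(H)H + r(H)X_0 + r(H)H\,\bigr] .
\]
Let $L(H) = g(X_0)H + Dg(X_0)(H)X_0$ denote the first two terms. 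Then $L : W \mapto W$ is linear in $H$, and it is bounded, since $\|L(H)\| \le \|g(X_0)\|\,\|H\| + \|Dg(X_0)\|\,\|X_0\|\,\|H\|$ by submultiplicativity of the norm on the Banach algebra $W$ and boundedness of $Dg(X_0) \in \Lcal(W,W)$.

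Next I would estimate the three remainder terms, again using submultiplicativity: $\|Dg(X_0)(H)H\| \le \|Dg(X_0)\|\,\|H\|^2$, $\|r(H)X_0\| \le \|r(H)\|\,\|X_0\|$, and $\|r(H)H\| \le \|r(H)\|\,\|H\|$. Dividing each by $\|H\|$ and letting $H \to 0$, every quotient tends to $0$ — the first because it is $O(\|H\|)$, the other two because $\|r(H)\|/\|H\| \to 0$. Hence
\[
\lim_{H \to 0}\frac{\|f(X_0+H) - f(X_0) - L(H)\|}{\|H\|} = 0 ,
\]
so $f$ is Fréchet differentiable at $X_0$ with $Df(X_0)(H) = Dg(X_0)(H)X_0 + g(X_0)H$. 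Running the same computation for $h(X) = Xg(X)$ yields $Dh(X_0)(H) = X_0 Dg(X_0)(H) + Hg(X_0)$.

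There is no genuine obstacle here: the only points requiring care are that $W$ is assumed only to be a Banach algebra (so one may not invoke a unit, but none is needed), and that all the estimates rest on submultiplicativity $\|AB\| \le \|A\|\,\|B\|$ and the operator-norm bound on $Dg(X_0)$. Alternatively, one could obtain the result in one line from the chain rule by writing $f = \mu \circ (g, \mathrm{id})$, where $\mu : W \times W \mapto W$ is the multiplication map — bounded and bilinear, hence Fréchet differentiable with $D\mu(A_0,B_0)(A,B) = A_0 B + A B_0$; I would note this as a remark but present the direct estimate above so that the lemma is self-contained.
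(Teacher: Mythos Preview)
Your proof is correct and follows essentially the same route as the paper: a direct verification from the definition of Fr\'echet differentiability, expanding $f(X_0+H)=g(X_0+H)(X_0+H)$ and bounding the remainder via submultiplicativity. The only cosmetic difference is that the paper groups the remainder as $[g(X_0+H)-g(X_0)-Dg(X_0)(H)]X_0 + [g(X_0+H)-g(X_0)]H$ (two terms, invoking continuity of $g$ for the second), whereas you split the second piece further into $Dg(X_0)(H)H + r(H)H$; the estimates are the same.
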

	\begin{proof}
		By assumption, $\lim_{t \approach 0}\frac{||g(X_0+tX) - g(X_0) - tDg(X_0)(X)||_W}{|t|\;||X||_W} = 0$. Thus
		\begin{align*}
		&\lim_{t \approach 0}\frac{||f(X_0 + tX) - f(X_0) - Df(X_0)(tX)||_{W}}{|t|\;||X||_W} 
		\\
		&= \lim_{t \approach 0}\frac{||g(X_0+tX)(X_0+tX) - g(X_0)X_0 - tDg(X_0)(X)X_0 - tg(X_0)X||_W}{|t|\; ||X||_W}
		\\
		& \leq \lim_{t \approach 0}||X_0||_W\frac{||g(X_0+tX) - g(X_0)- tDg(X_0)(X)||_W}{|t|\;||X||_W} 
		\\
		&\quad + \lim_{t \approach 0}||g(X_0+tX) - g(X_0)||_W = 0.
		\end{align*}
		This proves the formula for $Df(X_0)$.
		Here we use the fact the Fr\'echet Differentiability implies continuity.
		The proof for $Dh(X_0)$ is entirely similar.
		\qed
	\end{proof}

	\begin{lemma}
		\label{lemma:derivative-second-order-trace-square-root-2}
		Define the function $f:\Sym(\H)\mapto \Sym(\H)$ by
		$f(X) = (I+(I+c^2X^2)^{1/2})^{-1}X$. Then $Df(X_0): \Sym(\H) \mapto \Sym(\H)$ is given by
		\begin{align}
		Df(X_0)(X)  &= -{c^2}(I+h(X_0))^{-1}D\mysqrt(I+c^2X_0^2)(X_0X+XX_0)(I+h(X_0))^{-1}X_0
		\nonumber
		\\
		&\quad + (I+h(X_0))^{-1}X
		\label{equation:derivative-Frechet-second-order-1}
		\\
		&= -{c^2}X_0(I+h(X_0))^{-1}D\mysqrt(I+c^2X_0^2)(X_0X+XX_0)(I+h(X_0))^{-1}
		\nonumber
		\\
		&\quad + X(I+h(X_0))^{-1}.
		\label{equation:derivative-Frechet-second-order-2}
		\end{align}
		Here $X,X_0 \in \Sym(\H)$ and $h(X_0) = (I+c^2X_0^2)^{1/2}$.
	\end{lemma}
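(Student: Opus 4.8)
The plan is to prove Lemma \ref{lemma:derivative-second-order-trace-square-root-2} as a direct application of the chain rule together with the product rule for Fr\'echet derivatives (Lemma \ref{lemma:Frechet-derivative-product-rule}), the derivative of the matrix inverse (Lemma \ref{lemma:derivative-inverse}), and the derivative of the square root map (Lemma \ref{lemma:derivative-square-root}). First I would set $h(X) = (I+c^2X^2)^{1/2}$ and $k(X) = (I+h(X))^{-1}$, so that $f(X) = k(X)X$. Computing $Dh(X_0)$ via the chain rule applied to $\mysqrt \compose (X \mapsto I+c^2X^2)$ gives $Dh(X_0)(X) = c^2 D\mysqrt(I+c^2X_0^2)(X_0X + XX_0)$, using $D(\mysq)(X_0)(X) = X_0X + XX_0$. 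Then Lemma \ref{lemma:derivative-inverse} (applied to the map $Y \mapsto (I+Y)^{-1}$ composed with $h$) yields
\begin{align*}
Dk(X_0)(X) = -(I+h(X_0))^{-1}Dh(X_0)(X)(I+h(X_0))^{-1} = -c^2(I+h(X_0))^{-1}D\mysqrt(I+c^2X_0^2)(X_0X+XX_0)(I+h(X_0))^{-1}.
\end{align*}

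Next I would apply the product rule Lemma \ref{lemma:Frechet-derivative-product-rule} with $g = k$: since $f(X) = k(X)X$, we get $Df(X_0)(X) = Dk(X_0)(X)X_0 + k(X_0)X$, which upon substituting the expression for $Dk(X_0)$ gives exactly \eqref{equation:derivative-Frechet-second-order-1}. For \eqref{equation:derivative-Frechet-second-order-2}, the key observation is that $h(X_0) = (I+c^2X_0^2)^{1/2}$ is a function of $X_0$ and hence commutes with $X_0$; therefore $k(X_0) = (I+h(X_0))^{-1}$ also commutes with $X_0$, so that $f(X) = k(X)X$ can equally be written as a map whose value differentiates into $X_0 Dk(X_0)(X) + X k(X_0)$ — this is the ``other'' product-rule decomposition, i.e. applying Lemma \ref{lemma:Frechet-derivative-product-rule} after writing $f(X) = X k(X)$ at the base point $X_0$ (valid because at $X_0$ the factors commute, and the two one-sided product rules agree after using $k(X_0)X_0 = X_0 k(X_0)$). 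Substituting $Dk(X_0)$ then yields \eqref{equation:derivative-Frechet-second-order-2}.

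The only genuinely delicate point is the passage between \eqref{equation:derivative-Frechet-second-order-1} and \eqref{equation:derivative-Frechet-second-order-2}: one must be careful that $f(X) = k(X)X$ and $f(X) = Xk(X)$ are genuinely the same function only when $k(X)$ and $X$ commute for \emph{all} $X$ near $X_0$, which is true here since $k(X) = (I+(I+c^2X^2)^{1/2})^{-1}$ is built entirely out of functions of $X$. Thus both product-rule formulas apply to the same $f$, giving the two stated expressions, and their equality is automatic. I would note this commutativity explicitly and then invoke Lemma \ref{lemma:Frechet-derivative-product-rule} twice. Everything else is a routine composition of the already-established derivative lemmas, and I do not anticipate any analytic subtlety regarding well-definedness since $I + c^2 X_0^2 > 0$ always and $I + h(X_0) \geq 2I > 0$ is invertible, with all operators in the relevant Schatten classes so that traces and Fr\'echet derivatives are legitimate.
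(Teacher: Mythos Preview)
Your proposal is correct and follows essentially the same approach as the paper: set $h(X)=(I+c^2X^2)^{1/2}$, differentiate the inverse $g(X)=(I+h(X))^{-1}$ via the chain rule and Lemma \ref{lemma:derivative-inverse}, then apply the product rule (Lemma \ref{lemma:Frechet-derivative-product-rule}) to $f(X)=g(X)X$ for the first formula and to $f(X)=Xg(X)$ for the second. Your handling of the commutativity point (that $g(X)$ commutes with $X$ for \emph{all} $X$, so both factorizations define the same function) is in fact slightly more explicit than the paper's, which simply notes ``since $g(X)$ and $X$ commute''.
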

	\begin{proof}
		Let $h(X) = (I+ c^2X^2)^{1/2}$ and $g(X) = (I+h(X))^{-1}$. By the chain rule,
		\begin{align*}
		&Dg(X_0)(X) = [Dg(h(X_0))\compose Dh(X_0)](X)
		\\
		& = - (I+h(X_0))^{-1}Dh(X_0)(X)(I+h(X_0))^{-1}
		\\
		& = -{c^2}(I+h(X_0))^{-1}D\mysqrt(I+c^2X_0^2)(X_0X+XX_0)(I+h(X_0))^{-1}.
		\end{align*}
		By Lemma \ref{lemma:Frechet-derivative-product-rule}, with $f(X) = g(X)X$,
		\begin{align*}
		&Df(X_0)(X) = Dg(X_0)(X)X_0 + g(X_0)X 
		\\
		&= -{c^2}(I+h(X_0))^{-1}D\mysqrt(I+c^2X_0^2)(X_0X+XX_0)(I+h(X_0))^{-1}X_0
		\\
		&\quad + (I+h(X_0))^{-1}X.
		\end{align*}
		Since $g(X)$ and $X$ commute, we also have $f(X) = Xg(X)$ and thus
		\begin{align*}
		&Df(X_0)(X) = X_0Dg(X_0)(X) + Xg(X_0)
		\\
		&= -{c^2}X_0(I+h(X_0))^{-1}D\mysqrt(I+c^2X_0^2)(X_0X+XX_0)(I+h(X_0))^{-1}
		\\
		&\quad + X(I+h(X_0))^{-1}.
		\end{align*}
		This gives the second, equivalent, expression for $Df(X_0)(X)$.
		\qed
	\end{proof}
	
	\begin{lemma}
		\label{lemma:derivative-trace-functional-square}
		Let $f: \Sym(\H) \cap \Tr(\H) \mapto \Lcal(\Lcal(\H),\R)$ be defined by
		\begin{align}
		f(X)(Y) = \trace[(I+(I+c^2X^2)^{1/2})^{-1}XY], \;\;\; Y \in \Lcal(\H).
		\end{align}
		Then $Df(X_0): \Sym(\H)\cap\Tr(\H) \mapto \Lcal(\Lcal(\H),\R)$ is given by
		\begin{align}
		&[Df(X_0)(X)](Y) = \frac{1}{2}\trace[(I+Z_0^{1/2})^{-1}(XY+YX)]
		\\
		&\quad-\frac{c^2}{2}\trace[(I+Z_0^{1/2})^{-1}D\mysqrt(Z_0)(X_0X+XX_0)(I+Z_0^{1/2})^{-1}(X_0Y+YX_0)].
		\nonumber
		\end{align}
		In particular, for $Y = X$,
		\begin{align}
		&[Df(X_0)(X)](X) = \trace[(I+Z_0^{1/2})^{-1}X^2]
		\\
		&\quad -\frac{c^2}{2}\trace[(I+Z_0^{1/2})^{-1}D\mysqrt(Z_0)(X_0X+XX_0)(I+Z_0^{1/2})^{-1}(X_0X + XX_0)].
		\nonumber
		\end{align}
		Here $X,X_0 \in \Sym(\H) \cap \Tr(\H)$ and $Z_0= I+c^2X_0^2$.
	\end{lemma}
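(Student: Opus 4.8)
The plan is to recognize $f$ as the composition of the operator-valued map $g(X) = (I+(I+c^2X^2)^{1/2})^{-1}X$, whose Fr\'echet derivative has already been computed in Lemma \ref{lemma:derivative-second-order-trace-square-root-2}, with the ``pairing with $Y$'' trace functional treated in Lemma \ref{lemma:derivative-trace-functional}. Writing $f(X)(Y) = \trace[g(X)Y]$ and applying the chain rule together with Lemma \ref{lemma:derivative-trace-functional}, I would first establish
\begin{align}
[Df(X_0)(X)](Y) = \trace[Dg(X_0)(X)\, Y], \qquad X, X_0 \in \Sym(\H)\cap\Tr(\H),\ Y \in \Lcal(\H).
\nonumber
\end{align}
The only point to verify here is that $Dg(X_0)(X) \in \Tr(\H)$ whenever $X \in \Tr(\H)$, so that the trace and the pairing functional are well-defined; this is immediate from the explicit formulas in Lemma \ref{lemma:derivative-second-order-trace-square-root-2}, since $X_0$ and $X$ enter as trace-class factors while all remaining factors are bounded.

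Next I would substitute the \emph{two} equivalent expressions \eqref{equation:derivative-Frechet-second-order-1} and \eqref{equation:derivative-Frechet-second-order-2} for $Dg(X_0)(X)$ into $\trace[Dg(X_0)(X)Y]$. Form \eqref{equation:derivative-Frechet-second-order-1}, together with cyclicity of the trace, yields a term in which $X_0$ sits to the left of $Y$, namely a contribution $\trace[(I+Z_0^{1/2})^{-1}D\mysqrt(Z_0)(X_0X+XX_0)(I+Z_0^{1/2})^{-1}X_0Y]$ plus $\trace[(I+Z_0^{1/2})^{-1}XY]$; form \eqref{equation:derivative-Frechet-second-order-2} yields the analogous expressions with $YX_0$ and $YX$ instead. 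Averaging the two symmetrizes the result and produces exactly
\begin{align}
[Df(X_0)(X)](Y) &= \frac{1}{2}\trace[(I+Z_0^{1/2})^{-1}(XY+YX)]
\nonumber
\\
&\quad -\frac{c^2}{2}\trace[(I+Z_0^{1/2})^{-1}D\mysqrt(Z_0)(X_0X+XX_0)(I+Z_0^{1/2})^{-1}(X_0Y+YX_0)],
\nonumber
\end{align}
where $Z_0 = I+c^2X_0^2$ and $h(X_0) = Z_0^{1/2}$. Setting $Y = X$ gives the stated special case at once.

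The computation is essentially mechanical once the chain rule is set up: the target of $f$ is $\Lcal(\Lcal(\H),\R)$, so $Df(X_0)$ lives in $\Lcal(\Sym(\H)\cap\Tr(\H),\Lcal(\Lcal(\H),\R))$, which is precisely the setting of Lemma \ref{lemma:derivative-trace-functional}, and the rest is trace cyclicity plus the symmetrization step. The only mild subtlety I anticipate — and the place where a careless argument could go astray — is the bookkeeping of which of the two equivalent forms of $Dg(X_0)(X)$ to use, so that averaging lands on the symmetric expression as claimed; making rigorous the Banach-space chain rule $Df(X_0) = Df(g(X_0))\compose Dg(X_0)$ (continuity and differentiability of $g$ with trace-class values, joint continuity of the pairing) is straightforward given Lemmas \ref{lemma:derivative-trace-functional} and \ref{lemma:derivative-second-order-trace-square-root-2}.
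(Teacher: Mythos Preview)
Your proposal is correct and follows essentially the same approach as the paper: write $f(X)(Y)=\trace[g(X)Y]$, apply the chain rule with Lemma \ref{lemma:derivative-trace-functional} to get $[Df(X_0)(X)](Y)=\trace[Dg(X_0)(X)Y]$, then substitute the two equivalent forms \eqref{equation:derivative-Frechet-second-order-1} and \eqref{equation:derivative-Frechet-second-order-2} of $Dg(X_0)(X)$ from Lemma \ref{lemma:derivative-second-order-trace-square-root-2} and average them (using trace cyclicity) to obtain the symmetric expression. The paper's proof is identical in structure, including the averaging step you anticipate.
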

	\begin{proof}
		Let $g(X)= (I+(I+c^2X^2)^{1/2})^{-1}X$, $h(X) = (I+c^2X^2)^{1/2}$, then $f(X)(Y) = \trace[g(X)Y]$.
		By Lemmas \ref{lemma:derivative-functional-trace-inverse} and \ref{lemma:derivative-second-order-trace-square-root-2},
		combining Eqs.\eqref{equation:derivative-Frechet-second-order-1} and\eqref{equation:derivative-Frechet-second-order-2},
		we obtain, with $Z_0 = I+c^2X_0^2$,
		\begin{align*}
		&[Df(X_0)(X)](Y) = [Df(g(X_0)) \compose Dg(X_0)(X)](Y) = \trace[Dg(X_0)(X)Y]
		\\
		&=-\frac{c^2}{2}\trace[(I+h(X_0))^{-1}D\mysqrt(I+c^2X_0^2)(X_0X+XX_0)(I+h(X_0))^{-1}(X_0Y+YX_0)]
		\\
		& \quad + \frac{1}{2}\trace[(I+h(X_0))^{-1}(XY+YX)]
		\\
		&=-\frac{c^2}{2}\trace[(I+Z_0^{1/2})^{-1}D\mysqrt(Z_0)(X_0X+XX_0)(I+Z_0^{1/2})^{-1}(X_0Y+YX_0)]
		\\
		& \quad + \frac{1}{2}\trace[(I+Z_0^{1/2})^{-1}(XY+YX)].
		\end{align*}
		In particular, for $Y = X$,
		\begin{align*}
		&[Df(X_0)(X)](X) = \trace[(I+Z_0^{1/2})^{-1}X^2]
		\\
		&\quad -\frac{c^2}{2}\trace[(I+Z_0^{1/2})^{-1}D\mysqrt(Z_0)(X_0X+XX_0)(I+Z_0^{1/2})^{-1}(X_0X + XX_0)].
		\end{align*}
	\end{proof}
	
	\begin{lemma}
		\label{lemma:trace-DXX-X2-strict-inequality}
		Let $X_0\in \Sym(\H)$ be a fixed compact operator. Let $Z_0 = I +c^2X_0^2$, $c \in \R$. Then
		$\forall X \in \Sym(\H)\cap\HS(\H)$, $X \neq 0$,
		\begin{align}
		&\frac{c^2}{2}\trace[(I+Z_0^{1/2})^{-1}D\mysqrt(Z_0)(X_0X+XX_0)(I+Z_0^{1/2})^{-1}(X_0X + XX_0)]
		\nonumber
		\\
		&\quad < \trace[(I+Z_0^{1/2})^{-1}X^2].
		\end{align}
		
	\end{lemma}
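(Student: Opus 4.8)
The plan is to diagonalize $X_0$ and reduce the claimed operator inequality to an elementary scalar inequality between matrix entries. Since $X_0\in\Sym(\H)$ is compact, I would pick an orthonormal basis $\{e_k\}_{k\in\Nbb}$ of $\H$ with $X_0e_k=\mu_ke_k$ and set $\nu_k=\sqrt{1+c^2\mu_k^2}$, so that $1\le\nu_k\le\sqrt{1+c^2\|X_0\|^2}$. In this basis $Z_0=I+c^2X_0^2$, $Z_0^{1/2}$ and $(I+Z_0^{1/2})^{-1}$ are all diagonal, with entries $\nu_k^2$, $\nu_k$ and $(1+\nu_k)^{-1}$. For $X\in\Sym(\H)\cap\HS(\H)$ write $X_{jk}=\la e_j,Xe_k\ra\in\R$, so $X_{jk}=X_{kj}$ and $\sum_{j,k}X_{jk}^2=\|X\|_{\HS}^2<\infty$. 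Note also that $Z_0\ge I$ is invertible, so $D\mysqrt(Z_0)$ is well defined by Lemma \ref{lemma:derivative-square-root}.

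Next I would compute $W:=D\mysqrt(Z_0)(X_0X+XX_0)$ explicitly: by Lemma \ref{lemma:derivative-square-root} it is the unique solution of $Z_0^{1/2}W+WZ_0^{1/2}=X_0X+XX_0$, and reading this off entrywise in the eigenbasis of $X_0$ gives $W_{jk}=\dfrac{\mu_j+\mu_k}{\nu_j+\nu_k}X_{jk}$; in particular $W=W^{*}$ and $\|W\|_{\HS}\le\|X_0\|\,\|X\|_{\HS}<\infty$. All operators inside the two traces then lie in the trace class (products of bounded operators with Hilbert--Schmidt ones; and $X^2$ is trace class), so the traces are absolutely convergent. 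Substituting the diagonal entries of $(I+Z_0^{1/2})^{-1}$ together with the formula for $W$ and with $(X_0X+XX_0)_{kj}=(\mu_j+\mu_k)X_{jk}$ yields
\[
\frac{c^2}{2}\trace\!\left[(I+Z_0^{1/2})^{-1}W(I+Z_0^{1/2})^{-1}(X_0X+XX_0)\right]=\frac{c^2}{2}\sum_{j,k}\frac{(\mu_j+\mu_k)^2}{(\nu_j+\nu_k)(1+\nu_j)(1+\nu_k)}\,X_{jk}^2 ,
\]
while, after symmetrizing in $j,k$ by means of $X_{jk}^2=X_{kj}^2$,
\[
\trace\!\left[(I+Z_0^{1/2})^{-1}X^2\right]=\sum_{j,k}\frac{X_{jk}^2}{1+\nu_j}=\frac12\sum_{j,k}\frac{2+\nu_j+\nu_k}{(1+\nu_j)(1+\nu_k)}\,X_{jk}^2 .
\]

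It then remains to compare the two sums. Dividing out the common positive weight $\tfrac12(1+\nu_j)^{-1}(1+\nu_k)^{-1}X_{jk}^2$, the required $(j,k)$-inequality is $c^2\dfrac{(\mu_j+\mu_k)^2}{\nu_j+\nu_k}<2+\nu_j+\nu_k$; clearing denominators and using $\nu_i^2=1+c^2\mu_i^2$ this collapses to $c^2\mu_j\mu_k<(1+\nu_j)(1+\nu_k)$, which holds strictly since $|c\mu_i|=\sqrt{c^2\mu_i^2}<\sqrt{1+c^2\mu_i^2}=\nu_i$ gives $c^2\mu_j\mu_k\le|c\mu_j|\,|c\mu_k|<\nu_j\nu_k<(1+\nu_j)(1+\nu_k)$. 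Thus every term of the right-hand sum strictly dominates the corresponding term of the left-hand sum; since $X\ne0$ at least one $X_{jk}\ne0$, so the difference ``right side minus left side'' is a convergent sum of non-negative numbers of which at least one is strictly positive, hence strictly positive, which is exactly the assertion. The degenerate case $c=0$ is immediate ($Z_0=I$, left side $0$, right side $\tfrac12\|X\|_{\HS}^2>0$) and is anyway subsumed by the above.

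The one point needing genuine care is this last passage from the term-by-term strict inequality to the strict inequality for the infinite sums: one must first check absolute convergence of both series, which follows because all coefficients of $X_{jk}^2$ above are bounded (as $1\le\nu_k\le\sqrt{1+c^2\|X_0\|^2}$) and $\sum_{j,k}X_{jk}^2<\infty$. Everything else — self-adjointness of $W$, trace-class membership of the operators inside the traces, and well-definedness of $D\mysqrt(Z_0)$ since $Z_0\ge I$ is invertible — is routine.
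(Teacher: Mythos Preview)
Your proposal is correct and follows essentially the same approach as the paper: diagonalize $X_0$, compute the matrix entries of $D\mysqrt(Z_0)(X_0X+XX_0)$ via the Sylvester relation from Lemma \ref{lemma:derivative-square-root}, and reduce both traces to double sums in $X_{jk}^2$ that are compared termwise via an elementary scalar inequality. The only cosmetic difference is that the paper first rewrites the left-hand side as a squared Hilbert--Schmidt norm (via Lemma \ref{lemma:trace-DX-X-mysqrt}) before expanding, and then bounds using the two separate estimates $c^2(\lambda_j+\lambda_k)^2<(z_j^{1/2}+z_k^{1/2})^2$ and $z_j^{1/2}(1+z_j^{1/2})^{-1}<1$, whereas you symmetrize the right-hand side and collapse the comparison to the single inequality $c^2\mu_j\mu_k<(1+\nu_j)(1+\nu_k)$; these are equivalent rearrangements of the same computation.
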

	\begin{proof}
		Let $Y_0 = (I+Z_0^{1/2})^{-1} = (I+(I+c^2X_0^2)^{1/2})^{-1}$. 
		By Lemma \ref{lemma:trace-DX-X-mysqrt}, since $Y_0$ and $Z_0$ commute, we have
		\begin{align*}
		\trace[ D\mysqrt(Z_0)(X)Y_0XY_0] = 2 ||Z_0^{1/4}Y_0^{1/2}D\mysqrt(Z_0)(X)Y_0^{1/2}||_{\HS}^2.
		\end{align*}
		Let $\{\lambda_k\}_{k\in \Nbb}$ be the eigenvalues of $X_0$, with corresponding orthonormal eigenvectors
		$\{e_k\}_{k\in \Nbb}$ forming an orthonormal basis in $\H$.
		Then $Z_0$ and $Y_0$ have eigenvalues $\{z_k = (1+c^2\lambda_k^2)\}_{k \in \Nbb}$ and
		$\{y_k = (1+z_k^{1/2})^{-1}\}_{k\in \Nbb}$, respectively, with the same eigenvectors. We then have
		$\forall X \in \Sym(\H) \cap \HS(\H)$,
		% By definition,
		\begin{align}
		\label{equation:Z0Y0D-HS-norm-1}
		&\frac{1}{2}\trace[ D\mysqrt(Z_0)(X)Y_0XY_0] =
		||Z_0^{1/4}Y_0^{1/2}D\mysqrt(Z_0)(X)Y_0^{1/2}||_{\HS}^2
		\nonumber
		\\
		& = \sum_{k=1}^{\infty}||Z_0^{1/4}Y_0^{1/2}D\mysqrt(Z_0)(X)Y_0^{1/2}e_k||^2
		= \sum_{k=1}^{\infty}y_k ||Z_0^{1/4}Y_0^{1/2}D\mysqrt(Z_0)(X)e_k||^2 
		\nonumber
		\\
		&= \sum_{k=1}^{\infty}y_k\sum_{j=1}^{\infty}\la Z_0^{1/4}Y_0^{1/2}D\mysqrt(Z_0)(X)e_k, e_j\ra^2
		\nonumber
		\\
		& = \sum_{j,k=1}^{\infty}y_ky_jz_j^{1/2}\la e_k, D\mysqrt(Z_0)(X)e_j\ra^2.
		\end{align}
		We recall the following identity from Lemma \ref{lemma:derivative-square-root}
		\begin{align*}
		Z_0^{1/2}D\mysqrt(Z_0)(X) + D\mysqrt(Z_0)(X)Z_0^{1/2} = X.
		\end{align*}
		Applying the inner product with $e_k$ and $e_j$ on both sides, taking into account that $D\mysqrt(Z_0)(X)
		\in \Sym(\H)$ and $z_k > 0 \forall k \in \Nbb$, gives
		%\begin{align*}
		%(z_k^{1/2}y_k^{1/2}y_j^{1/2} + y_k^{1/2}y_j^{1/2}z_j^{1/2})\la e_k, D\mysqrt(Z_0)(X)e_j\ra = y_k^{1/2}y_j^{1/2}\la e_k, Xe_j\ra
		%\end{align*}
		\begin{align*}
		\la e_k, D\mysqrt(Z_0)(X)e_j\ra = \frac{1}{z_k^{1/2} + z_j^{1/2}}\la e_k, Xe_j\ra.
		\end{align*}
		Since $X,X_0 \in \Sym(\H)$, $\la e_k, (XX_0+X_0X)e_j\ra = (\lambda_k + \lambda_j)\la e_k, Xe_j\ra$ and thus
		\begin{align*}
		\la e_k, D\mysqrt(Z_0)(XX_0 + X_0X)e_j\ra = \frac{\lambda_j + \lambda_k}{z_k^{1/2} + z_j^{1/2}}\la e_k, Xe_j\ra
		\end{align*}
		%$z_k = 1+c^2\lambda_k^2$, $y_k = (1+z_k^{1/2})^{-1}$.
		Substituting this into Eq.\eqref{equation:Z0Y0D-HS-norm-1} gives
		\begin{align*}
		& \frac{1}{2}\trace[ D\mysqrt(Z_0)(XX_0+X_0X)Y_0(XX_0 + X_0X)Y_0]
		\\
		&= ||Z_0^{1/4}Y_0^{1/2}D\mysqrt(Z_0)(XX_0+X_0X)Y_0^{1/2}||_{\HS}^2
		\\
		& =\sum_{k,j=1}^{\infty}y_ky_jz_j^{1/2}\left(\frac{\lambda_j + \lambda_k}{z_k^{1/2} + z_j^{1/2}}\right)^2\la e_k, Xe_j\ra^2.
		\end{align*}
		With $z_k = 1+c^2\lambda_k^2$, $y_k = (1+z_k^{1/2})^{-1}$, we have $z_k^{1/2}y_k < 1 \forall k \in \Nbb$ and
		\begin{align*}
		c^2\left(\frac{\lambda_k + \lambda_j}{z_k^{1/2} + z_j^{1/2}}\right)^2 =\left(\frac{c\lambda_k +c\lambda_j}{(1+c^2\lambda_k^2)^{1/2}+ (1+c^2\lambda_j^2)^{1/2}}\right)^2 < 1,\;\forall k,j\in \Nbb.
		\end{align*}
		It follows that $\forall X \in \Sym(\H) \cap\HS(\H)$, $X \neq 0$,
		\begin{align*}
		&c^2||Z_0^{1/4}Y_0^{1/2}D\mysqrt(Z_0)(XX_0+X_0X)Y_0^{1/2}||_{\HS}^2 < \sum_{k,j=1}^{\infty}y_k\la Xe_k, e_j\ra^2
		\\
		& = \sum_{k=1}^{\infty}y_k||Xe_k||^2 = \sum_{k=1}^{\infty}||XY_0^{1/2}e_k||^2= ||XY_0^{1/2}||^2_{\HS} = \trace(Y_0X^2).
		\end{align*}
		Substituting $Y_0 = (I+Z_0^{1/2})^{-1}$ gives the desired result.\qed
	\end{proof}
	
	\begin{proposition}
		\label{proposition:trace-log-square-strictly-convex}
		Let $f: \Sym(\H) \cap \Tr(\H) \mapto \R$ be defined by
		\begin{align}
		f(X) = \trace\left[-I + (I+c^2X^2)^{1/2}\right] - \log\det\left(\frac{1}{2}I + \frac{1}{2}(I+c^2X^2)^{1/2}\right),
		\end{align}
		where $c \in \R, c \neq 0$. Then $f$ is at least twice Fr\'echet differentiable and strictly convex
		on $\Sym(\H) \cap \Tr(\H)$.
	\end{proposition}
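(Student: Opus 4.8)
The plan is to write $f = f_1 - f_2$ with $f_1(X) = \trace[-I + (I+c^2X^2)^{1/2}]$ and $f_2(X) = \log\det\bigl(\frac{1}{2}I + \frac{1}{2}(I+c^2X^2)^{1/2}\bigr)$, compute $Df$ by combining Lemmas~\ref{lemma:derivative-trace-square-root-2} and~\ref{lemma:derivative-logdet-square-root-2}, recognize the resulting functional as $c^2$ times the one studied in Lemma~\ref{lemma:derivative-trace-functional-square}, differentiate once more via that lemma, and finally deduce positivity of the second derivative from Lemma~\ref{lemma:trace-DXX-X2-strict-inequality}.

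First I would record that both terms are well defined on $\Sym(\H)\cap\Tr(\H)$: since $X \in \Tr(\H) \subset \HS(\H)$ we have $X^2 \in \Tr(\H)$, hence $(I+c^2X^2)^{1/2} - I = c^2 X^2\bigl[(I+c^2X^2)^{1/2}+I\bigr]^{-1} \in \Tr(\H)$, so $\trace[-I+(I+c^2X^2)^{1/2}]$ is finite, and $\frac{1}{2}I + \frac{1}{2}(I+c^2X^2)^{1/2} = I + \frac{1}{2}\bigl[(I+c^2X^2)^{1/2}-I\bigr] > 0$, so its Fredholm determinant is well defined and positive. Then by Lemmas~\ref{lemma:derivative-trace-square-root-2} and~\ref{lemma:derivative-logdet-square-root-2}, with $Z_0 = I + c^2X_0^2$,
\[
Df(X_0)(X) = c^2\,\trace\!\left[\bigl(Z_0^{-1/2} - (Z_0^{1/2}+Z_0)^{-1}\bigr)X_0X\right].
\]
A short computation using $Z_0^{1/2}+Z_0 = Z_0^{1/2}(I+Z_0^{1/2})$ gives $Z_0^{-1/2} - (Z_0^{1/2}+Z_0)^{-1} = \bigl(I - (I+Z_0^{1/2})^{-1}\bigr)Z_0^{-1/2} = (I+Z_0^{1/2})^{-1}$, so that
\[
Df(X_0)(X) = c^2\,\trace\!\left[\bigl(I+(I+c^2X_0^2)^{1/2}\bigr)^{-1}X_0X\right].
\]

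Next I would observe that the map $X \mapsto \bigl(Y \mapsto c^2\trace[(I+(I+c^2X^2)^{1/2})^{-1}XY]\bigr)$ is exactly $c^2$ times the function treated in Lemma~\ref{lemma:derivative-trace-functional-square}; applying that lemma shows $f$ is twice Fréchet differentiable and, writing $Z_0 = I + c^2X_0^2$,
\[
D^2f(X_0)(X,X) = c^2\,\trace[(I+Z_0^{1/2})^{-1}X^2] - \frac{c^4}{2}\,\trace\!\left[(I+Z_0^{1/2})^{-1}D\mysqrt(Z_0)(X_0X+XX_0)(I+Z_0^{1/2})^{-1}(X_0X+XX_0)\right].
\]
Since $X_0 \in \Tr(\H)$ is compact and $X \in \Sym(\H)\cap\Tr(\H) \subset \Sym(\H)\cap\HS(\H)$, Lemma~\ref{lemma:trace-DXX-X2-strict-inequality} yields, for $X\neq 0$,
\[
\frac{c^2}{2}\,\trace\!\left[(I+Z_0^{1/2})^{-1}D\mysqrt(Z_0)(X_0X+XX_0)(I+Z_0^{1/2})^{-1}(X_0X+XX_0)\right] < \trace[(I+Z_0^{1/2})^{-1}X^2],
\]
and multiplying by $c^2 > 0$ (this is where $c \neq 0$ enters) gives $D^2f(X_0)(X,X) > 0$, while $D^2f(X_0)(0,0) = 0$. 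Hence $D^2f(X_0)(X,X) \geq 0$ for every $X$, with strict inequality whenever $X \neq 0$, and the second-order sufficient condition for strict convexity recalled in Section~\ref{section:entropic-barycenter} gives that $f$ is strictly convex on $\Sym(\H)\cap\Tr(\H)$.

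The only genuine work — the strict inequality between the two trace terms, which reduces to the elementary eigenvalue bound $c^2(\lambda_k+\lambda_j)^2 < \bigl((1+c^2\lambda_k^2)^{1/2}+(1+c^2\lambda_j^2)^{1/2}\bigr)^2$ — has already been carried out in Lemma~\ref{lemma:trace-DXX-X2-strict-inequality}, so I expect no serious obstacle. The points to be careful about are the operator identity $Z_0^{-1/2} - (Z_0^{1/2}+Z_0)^{-1} = (I+Z_0^{1/2})^{-1}$, which is precisely what lets us identify $Df$ with the functional of Lemma~\ref{lemma:derivative-trace-functional-square}, and checking throughout that all operators appearing under the traces lie in $\Tr(\H)$ so that the traces and the Fredholm determinant are meaningful.
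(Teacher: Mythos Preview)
Your proposal is correct and follows essentially the same route as the paper's own proof: compute $Df$ by combining Lemmas~\ref{lemma:derivative-trace-square-root-2} and~\ref{lemma:derivative-logdet-square-root-2}, simplify via the identity $Z_0^{-1/2} - (Z_0^{1/2}+Z_0)^{-1} = (I+Z_0^{1/2})^{-1}$, differentiate again using Lemma~\ref{lemma:derivative-trace-functional-square}, and conclude strict positivity of $D^2f(X_0)(X,X)$ from Lemma~\ref{lemma:trace-DXX-X2-strict-inequality}. The paper carries out precisely this argument, so there is nothing to add.
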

	\begin{proof}
		By Lemmas \ref{lemma:derivative-trace-square-root-2}, \ref{lemma:derivative-logdet-square-root-2}, 
		$Df(X_0): \Sym(\H) \cap \Tr(\H) \mapto \R$ is given by,  $\forall X_0, X \in \Sym(\H) \cap \Tr(\H)$, 
		\begin{align*}
		Df(X_0)(X) &= c^2\trace\left[\left(I + c^2X_0^2\right)^{-1/2}X_0X\right]
		\\
		&-c^2\trace\left[\left(\left(I + c^2X_0^2\right)^{1/2} + \left(I + c^2X_0^2\right)\right)^{-1}X_0X\right]
		\\
		& = c^2\trace\left[\left(I+ \left(I + c^2X_0^2\right)^{1/2} \right)^{-1}X_0X\right].
		\end{align*}
		Thus we have the map $Df: \Sym(\H)\cap \Tr(\H) \mapto \Lcal(\Lcal(\H),\R)$, with
		\begin{align*}
		Df(X)(Y) = c^2\trace\left[\left(I+ \left(I + c^2X^2\right)^{1/2} \right)^{-1}XY\right].
		\end{align*}
		Differentiating this map gives the 2nd-order Fr\'echet derivative, by Lemma \ref{lemma:derivative-trace-functional-square},
		\begin{align*}
		&[D^2f(X_0)](X,Y) = [D^2f(X_0)(X)](Y) =
		\frac{c^2}{2}\trace[(I+Z_0^{1/2})^{-1}(XY+YX)]
		\\
		&\quad-\frac{c^4}{2}\trace[(I+Z_0^{1/2})^{-1}D\mysqrt(Z_0)(X_0X+XX_0)(I+Z_0^{1/2})^{-1}(X_0Y+YX_0)].
		\nonumber
		\end{align*}
		In particular, for $Y = X$,
		\begin{align*}
		&[D^2f(X_0)](X,X) = c^2\trace[(I+Z_0^{1/2})^{-1}X^2]
		\\
		&\quad -\frac{c^4}{2}\trace[(I+Z_0^{1/2})^{-1}D\mysqrt(Z_0)(X_0X+XX_0)(I+Z_0^{1/2})^{-1}(X_0X + XX_0)] \geq  0,
		\end{align*}
		with the strict inequality being valid $\forall X \in \Sym(\H) \cap \Tr(\H), X \neq 0$ by Lemma \ref{lemma:trace-DXX-X2-strict-inequality}. Thus $f$ is strictly convex on $\Sym(\H) \cap \Tr(\H)$.
		\qed
	\end{proof}

\begin{proof}
	[\textbf{of Theorem \ref{theorem:strict-convexity-Sinkhorn} - Strict convexity of Sinkhorn divergence}]
	By the strict convexity of the square Hilbert norm $||\;||^2$, the function $m \mapto ||m-m_0||^2$ is strictly convex in $m$. For the covariance part,
	by Theorem \ref{theorem:Sinkhorn-Gaussian-Hilbert},
	\begin{equation*}
	\begin{aligned}
	&F(X) = \Srm^{\ep}_{d^2}(\Ncal(0,C_0), \Ncal(0, X)) 
	\\
	& = \frac{\ep}{4}\trace\left[\left(I + \frac{16}{\ep^2}C_0^2\right)^{1/2} - 2\left(I + \frac{16}{\ep^2}C_0^{1/2}XC_0^{1/2}\right)^{1/2} 
	+ \left(I + \frac{16}{\ep^2}X^2\right)^{1/2}
	\right]
	\\
	& \quad + \frac{\ep}{2}\log\det\left(\frac{1}{2}I + \frac{1}{2}\left(I + \frac{16}{\ep^2}C_0^{1/2}XC_0^{1/2}\right)^{1/2} \right)
	\\
	& \quad -\frac{\ep}{4}\log\det\left(\frac{1}{2}I + \frac{1}{2}\left(I + \frac{16}{\ep^2}X^2\right)^{1/2} \right)
	-\frac{\ep}{4}\log\det\left(\frac{1}{2}I + \frac{1}{2}\left(I + \frac{16}{\ep^2}C_0^2\right)^{1/2} \right).
	\end{aligned}
	\end{equation*}
	By Proposition \ref{proposition:logdet-trace-convex}, the function
	$f(X) = \log\det\left(\frac{1}{2}I + \frac{1}{2}(I +c^2 C_0^{1/2}XC_0^{1/2})^{1/2}\right)
	- \trace\left[-I + (I + c^2C_0^{1/2}XC_0^{1/2})^{1/2}\right]$ is convex on $\Sym^{+}(\H) \cap \Tr(\H)$.
	By Proposition \ref{proposition:trace-log-square-strictly-convex},
	$g(X) = \trace\left[-I+\left(I + \frac{16}{\ep^2}X^2\right)^{1/2}\right] + \log\det \left(\frac{1}{2}I + \frac{1}{2}\left(I + \frac{16}{\ep^2}X^2\right)^{1/2} \right)$ is strictly convex in
	$\Sym(\H) \cap \Tr(\H)$.
	Thus $F$ is strictly convex on $\Sym^{+}(\H) \cap \Tr(\H)$.
	\qed
\end{proof}
	\begin{proof}
		[\textbf{of Theorem \ref{theorem:differentiability} - Differentiability}].
		The differentiability and convexity of $F_E$ follows from Lemmas \ref{lemma:derivative-trace-square-root}, \ref{lemma:derivative-logdet-square-root}, and Proposition \ref{proposition:logdet-trace-convex}.
		The differentiability and strict convexity of $F_S$ follows from Lemmas \ref{lemma:derivative-trace-square-root}, \ref{lemma:derivative-logdet-square-root}, 
		\ref{lemma:derivative-trace-square-root-2}, \ref{lemma:derivative-logdet-square-root-2},
		and Proposition \ref{proposition:logdet-trace-convex} and \ref{proposition:trace-log-square-strictly-convex}.
		\qed
	\end{proof}

\begin{proof}
	[\textbf{of Theorem \ref{theorem:positivity} - Positivity}]
	Let $\mu_0 = \Ncal(m_0, C_0)$ and $\mu_1 = \Ncal(m, X)$. It suffices to prove for the case $m_0 = m = 0$.
	Let $C_0$ be fixed.
	By Theorem \ref{theorem:differentiability}, the function $F_S: X \mapto \Srm^{\ep}_{d^2}[\Ncal(0,C_0), \Ncal(0,X)]$ is well-defined, twice Fr\'echet differentiable, and  strictly convex
	on the open, convex set $\Omega = \{X \in \Sym(\H) \cap \Tr(\H): I+c_{\ep}^2C_0^{1/2}XC_0^{1/2} > 0\}\supset \Sym^{+}(\H) \cap \Tr(\H)$. 
	Thus a minimizer of $F_S$ in $\Omega$ is necessarily unique.
	Proceeding as in Proposition \ref{proposition:derivation-equation-barycenter}, the Fr\'echet derivative for $F_S$ is given by, for $X_0 \in \Omega$, $X \in \Sym(\H) \cap \Tr(\H)$,
	%\begin{equation}
	\begin{align}
	DF_S(X_0)(X) &= -c_{\ep}\trace
	\biggl[C_0^{1/2}\left(I+\left(I + c_{\ep}^2C_0^{1/2}X_0C_0^{1/2}\right)^{1/2}\right)^{-1}C_0^{1/2}X\biggr]
	\nonumber
	\\
	&\quad + c_{\ep}\trace\left[\left(I+ \left(I + c_{\ep}^2X_0^2\right)^{1/2} \right)^{-1}X_0X\right].
	\end{align}
	By Lemma \ref{lemma:trace-functional-zero}, $DF_S(X_0)(X) = 0$ $\forall X \in \Sym(\H) \cap \Tr(\H)$ if and only if
	\begin{align*}
	&\left(I+ \left(I + c_{\ep}^2X_0^2\right)^{1/2} \right)^{-1}X_0 = C_0^{1/2}\left(I+\left(I + c_{\ep}^2C_0^{1/2}X_0C_0^{1/2}\right)^{1/2}\right)^{-1}C_0^{1/2}
	\\
	&\equivalent X_0^{1/2}\left(I+ \left(I + c_{\ep}^2X_0^2\right)^{1/2} \right)^{-1}X_0^{1/2} = C_0^{1/2}\left(I+\left(I + c_{\ep}^2C_0^{1/2}X_0C_0^{1/2}\right)^{1/2}\right)^{-1}C_0^{1/2}.
	\end{align*}
	This equation obviously has solution $X_0 = C_0$, which must be unique since $F_S$ is strictly convex in $\Omega$.
	Thus the unique global minimum of $F_S$ in $\Omega$, and hence in $\Sym^{+}(\H)\cap \Tr(\H)$, 
	is $F_S(C_0) = 0$. Hence $F_S(X) \geq 0$ $\forall X \in \Sym^{+}(\H)\cap \Tr(\H)$, with $F_S(X) = 0 \equivalent
	X = C_0$. \qed
\end{proof}

	{\bf Derivation of the barycenter equations}.
	We start by deriving Eqs.\eqref{equation:barycenter-sinkhorn-positive} and \eqref{equation:barycenter-sinkhorn-strictlypositive}
	for the barycenter of Gaussian measures, which we restate here. 
	
	\begin{proposition}
		\label{proposition:derivation-equation-barycenter}
		Consider the Gaussian measures $\Ncal(0,C), \Ncal(0, C_i)$, $1 \leq i \leq N$.  
		Define the following function $F: \Sym^{+}(\H) \cap\Tr(\H)\mapto \R$,
		$1 \leq i \leq N$, by
		\begin{equation}
		F(C) = \sum_{i=1}^Nw_i\Srm^{\ep}_{d^2}(\Ncal(0, C), \Ncal(0, C_i)).
		\end{equation}
		Then $F$ is well-defined on the larger, open set $\Omega = \{X \in \Sym(\H)\cap \Tr(\H):
I+\frac{16}{\ep^2}C_i^{1/2}XC_i^{/2} >0, i=1,\ldots, N\}$.	$F$ is Fr\'echet differentiable on $\Omega$ and
the condition $DF(X_0) = 0$ is equivalent to
	{\small
		\begin{align}
		\label{equation:fixedpoint-barycenter-sinkhorn-gaussian}
		%\begin{aligned}
		X_0 &= \left(I+ \left(I + \frac{16}{\ep^2}X_0^2\right)^{1/2} \right)^{1/2} \sum_{i=1}^Nw_i\left[C_i^{1/2}\left(I+\left(I + \frac{16}{\ep^2}C_i^{1/2}X_0C_i^{1/2}\right)^{1/2}\right)^{-1}C_i^{1/2}\right]
		\nonumber
		\\
		& \quad \quad \times \left(I+ \left(I + \frac{16}{\ep^2}X_0^2\right)^{1/2} \right)^{1/2}. 
		%\end{aligned}
		\end{align}
	}
		A solution $X_0$ of Eq.\eqref{equation:fixedpoint-barycenter-sinkhorn-gaussian} must necessarily satisfy
		$X_0 \in \Sym^{+}(\H) \cap\Tr(\H)$.
		Under the additional hypothesis that $X_0> 0$, Eq.\eqref{equation:fixedpoint-barycenter-sinkhorn-gaussian} is equivalent to 
		{\small
		\begin{equation}
		\begin{aligned}
		X_0 = \frac{\ep}{4}\left[- I +  \left(\sum_{i=1}^Nw_i\left(I + \frac{16}{\ep^2}X_0^{1/2}C_iX_0^{1/2}\right)^{1/2}\right)^2\right]^{1/2}.
		\end{aligned}
		\end{equation}
	}
	\end{proposition}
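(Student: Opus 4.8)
The plan is to compute the Fréchet derivative of $F$ on the open set $\Omega$ by differentiating each summand $\Srm^{\ep}_{d^2}(\Ncal(0,X), \Ncal(0,C_i))$ in the first argument, then to set $DF(X_0)=0$ and algebraically rearrange the resulting identity into the stated fixed-point equation. Recall from Theorem \ref{theorem:Sinkhorn-Gaussian-Hilbert} (equivalently Corollary \ref{corollary:Sinkhorn-Gaussian-Hilbert}) that, for centered measures, $\Srm^{\ep}_{d^2}(\Ncal(0,X),\Ncal(0,C_i))$ is a sum of trace and $\log\det$ terms built from $M^{\ep}_{i0}$-type operators $(I + c_{\ep}^2 C_i^{1/2}XC_i^{1/2})^{1/2}$ (the cross term), from $(I+c_{\ep}^2 X^2)^{1/2}$ (the self-correction term for the $X$ argument), and from a constant in $X$ coming from $(I+c_{\ep}^2 C_i^2)^{1/2}$. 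First I would invoke Lemma \ref{lemma:derivative-trace-square-root} and Lemma \ref{lemma:derivative-logdet-square-root} for the cross term $C_i^{1/2}XC_i^{1/2}$, and Lemma \ref{lemma:derivative-trace-square-root-2} and Lemma \ref{lemma:derivative-logdet-square-root-2} for the self-term $X^2$; as in the proof of Theorem \ref{theorem:entropic-barycenter-Gaussian}, the trace-square-root and logdet-square-root contributions combine so that the cross part collapses to $-c_{\ep}\,\trace[C_i^{1/2}(I+(I+c_{\ep}^2 C_i^{1/2}X_0C_i^{1/2})^{1/2})^{-1}C_i^{1/2}X]$, and the self part collapses to $+c_{\ep}\,\trace[(I+(I+c_{\ep}^2 X_0^2)^{1/2})^{-1}X_0 X]$, with the constant term contributing nothing. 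Summing over $i$ with weights $w_i$ gives
\begin{align}
DF(X_0)(X) &= c_{\ep}\,\trace\Biggl[\Biggl(\bigl(I + (I + c_{\ep}^2 X_0^2)^{1/2}\bigr)^{-1}X_0
\nonumber\\
&\quad - \sum_{i=1}^N w_i C_i^{1/2}\bigl(I + (I + c_{\ep}^2 C_i^{1/2}X_0C_i^{1/2})^{1/2}\bigr)^{-1}C_i^{1/2}\Biggr)X\Biggr].
\end{align}

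Next, since the bracketed operator is self-adjoint and trace class, Lemma \ref{lemma:trace-functional-zero} shows $DF(X_0)(X)=0$ for all $X \in \Sym(\H)\cap\Tr(\H)$ if and only if $(I+(I+c_{\ep}^2 X_0^2)^{1/2})^{-1}X_0 = \sum_i w_i C_i^{1/2}(I+(I+c_{\ep}^2 C_i^{1/2}X_0C_i^{1/2})^{1/2})^{-1}C_i^{1/2}$. I would then note that $(I+(I+c_{\ep}^2 X_0^2)^{1/2})^{-1}$ commutes with $X_0$ and admits a positive square root $(I+(I+c_{\ep}^2X_0^2)^{1/2})^{-1/2}$; multiplying the identity on the left and right by $(I+(I+c_{\ep}^2X_0^2)^{1/2})^{1/2}$ isolates $X_0$ on the left and yields exactly Eq.\eqref{equation:fixedpoint-barycenter-sinkhorn-gaussian}. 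Since the right-hand side of that equation is manifestly a product of the form $AGA$ with $A = (I+(I+c_{\ep}^2X_0^2)^{1/2})^{1/2} \in \Sym^{+}(\H)$ and $G = \sum_i w_i C_i^{1/2}(\cdots)^{-1}C_i^{1/2}\in \Sym^{+}(\H)\cap\Tr(\H)$ (each $C_i^{1/2}(\cdots)^{-1}C_i^{1/2}$ is positive and trace class because $C_i \in \Tr(\H)$), any solution automatically lies in $\Sym^{+}(\H)\cap\Tr(\H)$.

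For the final equivalence under $X_0 > 0$, I would go back to the pre-rearrangement identity and instead conjugate it by $X_0^{1/2}$: writing $X_0^{1/2}(I+(I+c_{\ep}^2X_0^2)^{1/2})^{-1}X_0^{1/2}$ on the left, and using Lemma \ref{lemma:adjoint-switch-CX} on the right-hand summands $X_0^{1/2}C_i^{1/2}(I+(I+c_{\ep}^2 C_i^{1/2}X_0C_i^{1/2})^{1/2})^{-1}C_i^{1/2}X_0^{1/2} = \tfrac{1}{c_{\ep}^2}\bigl(-I + (I+c_{\ep}^2 X_0^{1/2}C_iX_0^{1/2})^{1/2}\bigr)$, the equation becomes $X_0^{1/2}(I+(I+c_{\ep}^2X_0^2)^{1/2})^{-1}X_0^{1/2} = \tfrac{1}{c_{\ep}^2}\sum_i w_i\bigl(-I + (I+c_{\ep}^2 X_0^{1/2}C_iX_0^{1/2})^{1/2}\bigr)$. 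Using Lemma \ref{lemma:adjoint-switch-CX} once more (with $C$ replaced by $I$) on the left-hand side, $X_0^{1/2}(I+(I+c_{\ep}^2X_0^2)^{1/2})^{-1}X_0^{1/2} = \tfrac{1}{c_{\ep}^2}(-I + (I+c_{\ep}^2X_0^2)^{1/2})$, so after multiplying through by $c_{\ep}^2$ and adding $I$ to both sides one gets $(I+c_{\ep}^2X_0^2)^{1/2} = \sum_i w_i (I+c_{\ep}^2 X_0^{1/2}C_iX_0^{1/2})^{1/2}$ (using $\sum_i w_i = 1$); squaring, subtracting $I$, dividing by $c_{\ep}^2 = 16/\ep^2$, and taking the positive square root gives the stated strictly-positive form. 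I expect the main obstacle to be bookkeeping rather than anything deep: one must verify carefully that all the operators involved are compact/trace class so that the derivative lemmas and Lemma \ref{lemma:adjoint-switch-CX} apply, that the self-adjoint positive square roots used in the two conjugation steps genuinely commute with the operators they are applied to (so the "conjugate then take trace / cancel" manipulations are reversible, giving \emph{equivalence} and not just implication), and that the constant-in-$X$ term $(I+c_{\ep}^2C_i^2)^{1/2}$ really drops out of the derivative.
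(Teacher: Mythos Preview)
Your approach is essentially identical to the paper's: the same four derivative lemmas (\ref{lemma:derivative-trace-square-root}, \ref{lemma:derivative-logdet-square-root}, \ref{lemma:derivative-trace-square-root-2}, \ref{lemma:derivative-logdet-square-root-2}), the same collapse of each $DF_i(X_0)(X)$ to the difference of two $(I+\sqrt{\cdot})^{-1}$ terms, the same appeal to Lemma \ref{lemma:trace-functional-zero}, the same conjugation by $(I+(I+c_\ep^2X_0^2)^{1/2})^{1/2}$ to get Eq.~\eqref{equation:fixedpoint-barycenter-sinkhorn-gaussian}, and the same conjugation-by-$X_0^{1/2}$ strategy (via Lemma \ref{lemma:adjoint-switch-CX}) for the strictly positive form.

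There is one bookkeeping slip in your final step. Starting from $(I+(I+c_\ep^2X_0^2)^{1/2})^{-1}X_0 = \sum_i w_i\,C_i^{1/2}(\cdots)^{-1}C_i^{1/2}$ and pre- and post-multiplying both sides by $X_0^{1/2}$, the left side becomes $X_0\bigl(I+(I+c_\ep^2X_0^2)^{1/2}\bigr)^{-1}X_0$, not $X_0^{1/2}(\cdots)^{-1}X_0^{1/2}$ as you wrote. The identity you then assert,
\[
X_0^{1/2}\bigl(I+(I+c_\ep^2X_0^2)^{1/2}\bigr)^{-1}X_0^{1/2}=\tfrac{1}{c_\ep^2}\bigl(-I+(I+c_\ep^2X_0^2)^{1/2}\bigr),
\]
is false: the right side equals $X_0^2\bigl(I+(I+c_\ep^2X_0^2)^{1/2}\bigr)^{-1}$, not $X_0(\cdots)^{-1}$. (Lemma \ref{lemma:adjoint-switch-CX} with $C=I$ also does not give this; it produces $(I+c_\ep^2X_0)^{1/2}$, not $(I+c_\ep^2X_0^2)^{1/2}$.) With the correct left side $X_0(\cdots)^{-1}X_0 = X_0^2\bigl(I+(I+c_\ep^2X_0^2)^{1/2}\bigr)^{-1}$, the needed identity holds by a direct difference-of-squares factorization, and the remainder of your argument goes through verbatim. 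The paper does exactly this, citing Lemma \ref{lemma:AB-nondegenerate} to ensure the conjugation by $X_0^{1/2}$ is reversible when $X_0>0$.
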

\begin{proof}
%	\textbf{of Proposition \ref{proposition:derivation-equation-barycenter}}
Let $F_i(C) = \Srm^{\ep}_{d^2}(\Ncal(0, C), \Ncal(0, C_i))$.
By Theorem \ref{theorem:Sinkhorn-Gaussian-Hilbert}, with $c_{\ep} = \frac{4}{\ep}$,
\begin{equation*}
\begin{aligned}
&F_i(C) 
%= \Srm^{\ep}_{d^2}(\Ncal(0,C), \Ncal(0, C_i)) 
%\\
%&
=  \frac{\ep}{4}\trace\left[M^{\ep}_{00} - 2M^{\ep}_{01} + M^{\ep}_{11}\right] 
+ \frac{\ep}{4}\log\left[\frac{\det\left(I + \frac{1}{2}M^{\ep}_{01}\right)^2}{\det\left(I + \frac{1}{2}M^{\ep}_{00}\right)\det\left(I + \frac{1}{2}M^{\ep}_{11}\right)}\right]
\\
& = \frac{\ep}{4}\trace\left[\left(I +  c_{\ep}^2C_i^2\right)^{1/2} - 2\left(I +  c_{\ep}^2C_i^{1/2}CC_i^{1/2}\right)^{1/2} 
+ \left(I +  c_{\ep}^2C^2\right)^{1/2}
\right]
\\
& \quad + \frac{\ep}{2}\log\det\left(\frac{1}{2}I + \frac{1}{2}\left(I +  c_{\ep}^2C_i^{1/2}CC_i^{1/2}\right)^{1/2} \right)
\\
& \quad -\frac{\ep}{4}\log\det\left(\frac{1}{2}I + \frac{1}{2}\left(I +  c_{\ep}^2C^2\right)^{1/2} \right)
-\frac{\ep}{4}\log\det\left(\frac{1}{2}I + \frac{1}{2}\left(I +  c_{\ep}^2C_i^2\right)^{1/2} \right).
\end{aligned}
\end{equation*}
Clearly $F_i$ is well-defined on the larger, open set $\Omega_i = \{X \in \Sym(\H)\cap \Tr(\H): I+\frac{16}{\ep^2}C_i^{1/2}XC_i^{1/2} > 0\}$, hence $F$ is well-defined on
$\Omega = \cap_{i=1}^N\Omega_i$. 
The Fr\'echet derivative of $F_i$ at each $X_0 \in \Omega_i$ 
%$X_0 \in \Sym^{+}(\H) \cap \Tr(\H)$
 is a linear map 
$DF_i(X_0): \Sym(\H) \cap \Tr(\H) \mapto \R$.  
Combining Lemmas \ref{lemma:derivative-trace-square-root}, \ref{lemma:derivative-logdet-square-root}, \ref{lemma:derivative-trace-square-root-2}, \ref{lemma:derivative-logdet-square-root-2}, we obtain, $\forall X \in \Sym(\H) \cap \Tr(\H)$, 
%with $c_{\ep} = \frac{4}{\ep}$,
\begin{align*}
&DF_i(X_0)(X) = -c_{\ep}\trace\left[C_i^{1/2}\left(I+c_{\ep}^2C_i^{1/2}X_0 C_i^{1/2}\right)^{-1/2}C_i^{1/2}X\right]
\\
&\quad +c_{\ep} \trace\left[\left(I + c_{\ep}^2X_0^2\right)^{-1/2}X_0X\right]
\\
&\quad +c_{\ep} \trace
\biggl[C_i^{1/2}\left(\left(I + c_{\ep}^2C_i^{1/2}X_0C_i^{1/2}\right)^{1/2}+ \left(I + c_{\ep}^2C_i^{1/2}X_0C_i^{1/2}\right)\right)^{-1}C_i^{1/2}X\biggr]
\\
&\quad -c_{\ep}\trace\left[\left(\left(I + c_{\ep}^2X_0^2\right)^{1/2} + \left(I + c_{\ep}^2X_0^2\right)\right)^{-1}X_0X\right]
\\
& \quad = -c_{\ep}\trace
\biggl[C_i^{1/2}\left(I+\left(I + c_{\ep}^2C_i^{1/2}X_0C_i^{1/2}\right)^{1/2}\right)^{-1}C_i^{1/2}X\biggr]
\\
&\quad + c_{\ep}\trace\left[\left(I+ \left(I + c_{\ep}^2X_0^2\right)^{1/2} \right)^{-1}X_0X\right].
\end{align*} 
Summing over $i$, $1 \leq i \leq N$, we obtain the Fr\'echet derivative for $F$, namely
%\begin{equation}
\begin{align}
DF(X_0)(X) &= -c_{\ep}\sum_{i=1}^Nw_i\trace
\biggl[C_i^{1/2}\left(I+\left(I + c_{\ep}^2C_i^{1/2}X_0C_i^{1/2}\right)^{1/2}\right)^{-1}C_i^{1/2}X\biggr]
\nonumber
\\
&\quad + c_{\ep}\trace\left[\left(I+ \left(I + c_{\ep}^2X_0^2\right)^{1/2} \right)^{-1}X_0X\right].
\end{align}
%\end{equation}
By Lemma \ref{lemma:trace-functional-zero}, the first order optimality condition $DF(X_0)(X) = 0 \;\forall X \in \Sym(\H)\cap \Tr(\H)$ is equivalent to
$DF(X_0) = 0$, which in turn is equivalent to
{\small
\begin{align*}
%\begin{aligned}
&\left(I+ \left(I + c_{\ep}^2X_0^2\right)^{1/2} \right)^{-1}X_0 = \sum_{i=1}^Nw_i\left[C_i^{1/2}\left(I+\left(I + c_{\ep}^2C_i^{1/2}X_0C_i^{1/2}\right)^{1/2}\right)^{-1}C_i^{1/2}\right]
\\
& \equivalent \left(I+ \left(I + c_{\ep}^2X_0^2\right)^{1/2} \right)^{-1/2}X_0\left(I+ \left(I + c_{\ep}^2X_0^2\right)^{1/2} \right)^{-1/2}
\\
&\quad =  \sum_{i=1}^Nw_i\left[C_i^{1/2}\left(I+\left(I + c_{\ep}^2C_i^{1/2}X_0C_i^{1/2}\right)^{1/2}\right)^{-1}C_i^{1/2}\right]
\\
& \equivalent X_0 = \left(I+ \left(I + c_{\ep}^2X_0^2\right)^{1/2} \right)^{1/2} \sum_{i=1}^Nw_i\left[C_i^{1/2}\left(I+\left(I + c_{\ep}^2C_i^{1/2}X_0C_i^{1/2}\right)^{1/2}\right)^{-1}C_i^{1/2}\right]
\\
& \quad \quad \quad \quad \times \left(I+ \left(I + c_{\ep}^2X_0^2\right)^{1/2} \right)^{1/2}.
%\end{aligned}
\end{align*}
}
This gives the first equation. Clearly any solution $X_0$ of this equation must necessarily satisfy $X_0 \in \Sym^{+}(\H)\cap \Tr(\H)$.

Assume now that
%the additional constraint $\ker(X_0) = \{0\}$, that is $
$X_0 > 0$.
We rewrite 
%the condition 
$DF(X_0) = 0$ as
{\small 
\begin{equation*}
\begin{aligned}
&\left(I+ \left(I + c_{\ep}^2X_0^2\right)^{1/2} \right)^{-1}X_0 = \sum_{i=1}^Nw_i\left[C_i^{1/2}\left(I+\left(I + c_{\ep}^2C_i^{1/2}X_0C_i^{1/2}\right)^{1/2}\right)^{-1}C_i^{1/2}\right]
\\
& \equivalent X_0^{1/2}\left(I+ \left(I + c_{\ep}^2X_0^2\right)^{1/2} \right)^{-1}X_0^{1/2} 
\\
&\quad \quad = \sum_{i=1}^Nw_i\left[C_i^{1/2}\left(I+\left(I + c_{\ep}^2C_i^{1/2}X_0C_i^{1/2}\right)^{1/2}\right)^{-1}C_i^{1/2}\right].
\end{aligned}
\end{equation*}
}
Under the condition $X_0 > 0$, by Lemma \ref{lemma:AB-nondegenerate}, pre- and post-multiplying $X_0^{1/2}$ on both sides gives the equivalent expression
\begin{align*}
&X_0\left(I+ \left(I + c_{\ep}^2X_0^2\right)^{1/2} \right)^{-1}X_0
\\
&\quad \quad=\sum_{i=1}^Nw_i\left[X_0^{1/2}C_i^{1/2}\left(I+\left(I + c_{\ep}^2C_i^{1/2}X_0C_i^{1/2}\right)^{1/2}\right)^{-1}C_i^{1/2}X_0^{1/2}\right]
\\
& \equivalent \frac{1}{c_{\ep}^2}\left[-I+ \left(I + c_{\ep}^2X_0^2\right)^{1/2} \right]
= \frac{1}{c_{\ep}^2}\sum_{i=1}^Nw_i\left[-I + \left(I + c_{\ep}^2X_0^{1/2}C_iX_0^{1/2}\right)^{1/2}\right],
\end{align*}
%\\
where the right hand side follows from  Lemma \ref{lemma:adjoint-switch-CX}. This in turn is
\begin{align*}
%\\
&% \equivalent 
\left(I + c_{\ep}^2X_0^2\right)^{1/2} = \sum_{i=1}^Nw_i\left(I + c_{\ep}^2X_0^{1/2}C_iX_0^{1/2}\right)^{1/2}
\\
& \equivalent X_0^2 = \frac{1}{c_{\ep}^2} \left[- I +  \left(\sum_{i=1}^Nw_i\left(I + c_{\ep}^2X_0^{1/2}C_iX_0^{1/2}\right)^{1/2}\right)^2\right]
\\
& \equivalent X_0= \frac{1}{c_{\ep}}\left[- I +  \left(\sum_{i=1}^Nw_i\left(I + c_{\ep}^2X_0^{1/2}C_iX_0^{1/2}\right)^{1/2}\right)^2\right]^{1/2}.
\end{align*}
This completes the proof. \qed
\end{proof}

%\subsection{Existence of the Fixed Point}
%\label{section:existence-fixedpoint-barycenter-sinkhorn}
{\bf Existence of the Fixed Point}.
It is clear from Eq.\eqref{equation:barycenter-sinkhorn-positive} that if it has a solution $X_0$, then necessarily $X_0 \geq 0$.
We now prove that Eq.\eqref{equation:barycenter-sinkhorn-positive} has at least one solution $X_0$, which is then necessarily unique 
by the strict convexity of the Sinkhorn divergence. 
%
%We employ the following classical result from nonlinear functional analysis.
%
This is done via the
{\it Schauder Fixed Point Theorem} (see e.g.\cite{conwayFunctionalAnalysis2007}). Let $E$ be a Banach space and $M \subset E$. 
We recall that a mapping $f:M \mapto E$ is said to be {\it compact} if it is continuous and maps bounded subsets into relatively compact subsets of $E$, that is subsets whose closures are compact.
\begin{theorem}
	[\textbf{Schauder Fixed Point Theorem}]
	Let $M$ be a bounded closed convex subset of a Banach space $E$. Assume that $f:M \mapto M$ is a compact mapping.
	Then $f$ has at least one fixed point in $M$.
\end{theorem}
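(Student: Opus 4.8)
The plan is to prove the Schauder Fixed Point Theorem by reducing it to the finite-dimensional Brouwer Fixed Point Theorem, which was already invoked earlier in the paper (in the proof of Proposition \ref{proposition:entropic-barycenter-Gaussian-strictlypositive}). The bridge between the infinite-dimensional setting and Brouwer's theorem is the \emph{Schauder projection}, a continuous finite-rank approximation of the identity on the precompact set $\overline{f(M)}$.

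First I would exploit the compactness of $f$: since $M$ is bounded and $f$ is compact, the set $f(M)$ is relatively compact, hence totally bounded. Therefore, for each $\epsilon > 0$, I can choose finitely many points $y_1, \ldots, y_n \in f(M) \subset M$ such that the balls $B(y_i, \epsilon)$ cover $f(M)$. Let $E_n = \mathrm{span}\{y_1, \ldots, y_n\}$ and $K_\epsilon = \mathrm{conv}\{y_1, \ldots, y_n\}$; because each $y_i \in M$ and $M$ is convex, $K_\epsilon \subset M$, and $K_\epsilon$ is a compact convex subset of the finite-dimensional space $E_n$.

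Next I would construct the Schauder projection $P_\epsilon : f(M) \mapto K_\epsilon$ via the partition-of-unity formula
\begin{equation}
P_\epsilon(y) = \frac{\sum_{i=1}^n \lambda_i(y)\, y_i}{\sum_{i=1}^n \lambda_i(y)}, \qquad \lambda_i(y) = \max\{0, \epsilon - \|y - y_i\|\}.
\end{equation}
The denominator never vanishes, since the balls cover $f(M)$, so $P_\epsilon$ is continuous; and because $\lambda_i(y) = 0$ whenever $\|y - y_i\| \geq \epsilon$, the convex combination only uses points within $\epsilon$ of $y$, yielding the crucial estimate $\|P_\epsilon(y) - y\| < \epsilon$ for all $y \in f(M)$. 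I would then form $f_\epsilon = P_\epsilon \compose f : K_\epsilon \mapto K_\epsilon$, a continuous self-map of a compact convex set in a finite-dimensional space, and apply Brouwer's Fixed Point Theorem to obtain $x_\epsilon \in K_\epsilon$ with $f_\epsilon(x_\epsilon) = x_\epsilon$.

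Finally, I would pass to the limit. Taking $\epsilon = 1/k$ produces a sequence $x_k \in M$ satisfying $\|x_k - f(x_k)\| = \|P_{1/k}(f(x_k)) - f(x_k)\| < 1/k$. Since $\overline{f(M)}$ is compact, a subsequence $f(x_{k_j})$ converges to some $x^{*} \in \overline{f(M)} \subset M$ (using that $M$ is closed); the estimate forces $x_{k_j} \approach x^{*}$ as well, and continuity of $f$ then gives $f(x^{*}) = \lim_j f(x_{k_j}) = x^{*}$. The main obstacle, and the only genuinely nontrivial point, is the construction and verification of the Schauder projection: checking its continuity, the inclusion $P_\epsilon(f(M)) \subset K_\epsilon \subset M$, and the uniform bound $\|P_\epsilon(y) - y\| < \epsilon$. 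Once these are in hand, Brouwer's theorem together with a routine compactness argument completes the proof.
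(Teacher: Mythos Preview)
Your proof is correct and follows the standard Schauder projection argument. However, the paper does \emph{not} prove this theorem: it is stated as a known result with a reference to Conway's functional analysis text, and is then applied as a black box in the proof of Theorem~\ref{theorem:barycenter-sinkhorn-Gaussian}. So there is no ``paper's own proof'' to compare against; you have supplied a complete argument where the paper simply cites the literature.
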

Consider the map $\Fcal:\Sym^{+}(\H) \mapto \Sym^{+}(\H)$ as defined in Eq.\eqref{equation:map-fixedpoint-barycenter-sinkhorn}.
%Define the following map $\Fcal:\Sym^{+}(\H) \mapto \Sym^{+}(\H)$,
%\begin{equation}
%\label{equation:map-fixedpoint-barycenter-sinkhorn}
%\begin{aligned}
%\Fcal(X) &=  \left(I+ \left(I + \frac{16}{\ep^2}X^2\right)^{1/2} \right)^{1/2} \sum_{i=1}^Nw_i\left[C_i^{1/2}\left(I+\left(I + \frac{16}{\ep^2}C_i^{1/2}XC_i^{1/2}\right)^{1/2}\right)^{-1}C_i^{1/2}\right]
%\\
%& \quad \quad \quad \quad \times \left(I+ \left(I + \frac{16}{\ep^2}X^2\right)^{1/2} \right)^{1/2}.
%\end{aligned}
%\end{equation}
The proof of the existence of a fixed point of $\Fcal$ consists of two steps
\begin{enumerate}
	\item We show that $\Fcal$ is compact.
	\item Let $\gamma  \in \R, \gamma > 0$ be such that $C_i \leq \gamma I$, $1\leq i \leq N$, and consider the set
	\begin{equation}
	\Kcal = \{X \in \Sym^{+}(\H): 0 \leq X \leq \gamma I\}.
	\end{equation}
	We show that $\Fcal$ maps $\Kcal$ into itself. We can then apply Schauder Fixed Point Theorem to obtain the existence of a fixed point of $\Fcal$ in $\Kcal$.
\end{enumerate}

\begin{lemma}
	\label{lemma:operator-quadratic-positive}
	Let $B \in \Lcal(\H) \cap \Sym(\H)$. Let $A \in \Lcal(\H)$, $\ker(A^{*}) = \{0\}$. Then
	\begin{equation}
	A^{*}BA \geq 0 \equivalent B \geq 0.
	\end{equation}
\end{lemma}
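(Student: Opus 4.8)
**Proof proposal for Lemma (characterization $A^{*}BA \geq 0 \iff B \geq 0$ when $\ker(A^{*}) = \{0\}$)**

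The plan is to prove the two implications separately. The direction $B \geq 0 \implies A^{*}BA \geq 0$ is immediate and requires no hypothesis on $A$: for any $x \in \H$, writing $y = Ax$, we have $\la x, A^{*}BA x\ra = \la Ax, BAx\ra = \la y, By\ra \geq 0$, so $A^{*}BA \in \Sym^{+}(\H)$. I would dispose of this in one line.

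The substantive direction is $A^{*}BA \geq 0 \implies B \geq 0$, and this is where the hypothesis $\ker(A^{*}) = \{0\}$ enters. The key observation is that $\ker(A^{*}) = \{0\}$ is equivalent to $\overline{\myIm(A)} = \H$, i.e. the range of $A$ is dense in $\H$. So first I would record that equivalence (it is the standard relation $\myIm(A)^{\perp} = \ker(A^{*})$). Then, given $A^{*}BA \geq 0$, for every $x \in \H$ we have $\la Ax, B(Ax)\ra = \la x, A^{*}BAx\ra \geq 0$, which says precisely that $\la y, By\ra \geq 0$ for all $y$ in the dense subspace $\myIm(A)$. Finally, I would use the continuity of the quadratic form $y \mapsto \la y, By\ra$ (which holds since $B \in \Lcal(\H)$ is bounded) together with density: for arbitrary $z \in \H$, pick a sequence $y_n \in \myIm(A)$ with $y_n \to z$; then $\la z, Bz\ra = \lim_n \la y_n, By_n\ra \geq 0$. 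Hence $B \geq 0$.

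The main (and only mild) obstacle is making sure the limiting argument is clean: one needs $B$ bounded so that $y \mapsto \la y, By\ra$ is continuous, and one needs the density of $\myIm(A)$, which is exactly what $\ker(A^{*}) = \{0\}$ buys us via $\myIm(A)^{\perp} = \ker(A^{*})$. No compactness or self-adjointness of $A$ is needed, and $B \in \Sym(\H)$ is used only to phrase the conclusion $B \geq 0$ (and implicitly it makes the quadratic form real-valued; in fact the argument shows $A^{*}BA \geq 0$ forces $\la y, By\ra \geq 0$ on a dense set regardless). Putting the two implications together gives the stated equivalence.
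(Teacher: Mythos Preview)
Your proposal is correct and follows essentially the same argument as the paper: both directions are handled identically, with the nontrivial implication proved by noting that $\ker(A^{*})=\{0\}$ gives density of $\myIm(A)$ in $\H$, then passing the inequality $\la y,By\ra\geq 0$ from $\myIm(A)$ to all of $\H$ by continuity of the quadratic form. The paper even writes out the explicit estimate $|\la y,By\ra - \la y_n,By_n\ra| \leq \|y_n-y\|\,\|B\|(\|y\|+\|y_n\|)$ for that continuity step, but otherwise the proofs coincide.
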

\begin{proof}
	If $B \geq 0$, then by Lemma \ref{lemma:operator-monotone-quadratic}, we have $A^{*}BA \geq 0$.
	Assume now that $A^{*}BA \geq 0$, which means that $\la x, A^{*}BAx \ra  = \la Ax, BAx\ra \geq 0$ $\forall x \in \H$.
	In particular, for $y = Ax \in \myIm(A)$, we have $\la y, By\ra \geq 0$.
	Since $\ker(A^{*}) = \{0\}$, we have $\overline{\myIm(A)} = \ker(A^{*})^{\perp} = \H$, hence $\myIm(A)$ is dense in $\H$. Thus $\forall y \in \H$, there is a sequence
	$\{y_n\}_{n \in \Nbb}$ in $\myIm(A)$ such that $\lim_{n \approach \infty}||y_n - y|| = 0$.
	Then
	\begin{align*}
	|\la y, By\ra - \la y_n, By_n\ra| &= |\la y-y_n, By\ra + \la y_n, B(y - y_n)\ra| 
	\\
	&\leq ||y_n-y||\;||B||(||y|| + ||y_n||) \approach 0 \text{ as } n \approach \infty.
	\end{align*}
	%It follows that
	%\begin{align*}
	Thus $\la y, By\ra = \lim_{n \approach \infty}\la y_n, By_n \ra \geq 0$.
	%\end{align*}
	Since this holds $\forall y \in \H$, we have $B \geq 0$. \qed
\end{proof}

\begin{remark}
	Lemma \ref{lemma:operator-quadratic-positive} is generally not true without the condition $\ker(A^{*}) = \{0\}$.
	As an example, consider the case $\H = \R^2$ and $A = \begin{pmatrix} 1 & 0 \\ 0 & 0\end{pmatrix}$, $B = (b_{ij})_{i,j,=1,2}
	$. Then $ABA = \begin{pmatrix}b_{11} & 0 \\ 0 & 0 \end{pmatrix} \geq 0 \equivalent b_{11} \geq 0$. 
\end{remark}

In the following, {\color{black}recall} the set of $p$th Schatten class operators $\Csc_p(\H) = \{A \in \Lcal(\H):
||A||_p = (\trace[(A^{*}A)^{p/2}])^{1/p} < \infty\}$, $1 \leq p \leq \infty$, with $\Csc_1(\H) = \Tr(\H)$, $\Csc_2(\H) = \HS(\H)$, and $\Csc_{\infty}(\H)$ being the set of compact operators on $\H$.
\begin{lemma}
	[\textbf{Corollary 3.2 in \cite{Kitta:InequalitiesV}}]
	\label{lemma:inequality-Schatten}
	For any two positive operators $A,B$ on $\H$ such that $A \geq cI > 0$, $B \geq cI > 0$,
	for any bounded operator $X$ on $\H$,
	\begin{equation}
	||A^rX - XB^r||_p\leq rc^{r-1}||AX-XB||_p, 0 < r \leq 1, 1 \leq p \leq \infty.
	\end{equation}
\end{lemma}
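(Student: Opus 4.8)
The plan is to prove this by the classical integral-representation argument for the power function $t\mapsto t^r$, $0<r<1$, which converts the commutator estimate into an elementary Beta-function computation; the case $r=1$ is trivial (both sides equal $\|AX-XB\|_p$) and if $\|AX-XB\|_p=\infty$ there is nothing to prove, so we may assume $0<r<1$ and $AX-XB\in\Csc_p(\H)$. First I would recall the scalar identity $t^r=\frac{\sin(r\pi)}{\pi}\int_0^\infty t\,\lambda^{r-1}(t+\lambda)^{-1}\,d\lambda$, valid for every $t>0$, and apply it through the bounded Borel functional calculus to the positive operators $A,B$, whose spectra lie in $[c,\infty)$, to get
\[
A^r=\frac{\sin(r\pi)}{\pi}\int_0^\infty \lambda^{r-1}A(A+\lambda)^{-1}\,d\lambda,\qquad B^r=\frac{\sin(r\pi)}{\pi}\int_0^\infty \lambda^{r-1}B(B+\lambda)^{-1}\,d\lambda,
\]
with convergence in the strong operator topology. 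Writing $A(A+\lambda)^{-1}=I-\lambda(A+\lambda)^{-1}$ and using the resolvent manipulation $(A+\lambda)X-X(B+\lambda)=AX-XB$, a short calculation gives $A(A+\lambda)^{-1}X-XB(B+\lambda)^{-1}=\lambda\,(A+\lambda)^{-1}(AX-XB)(B+\lambda)^{-1}$, and therefore
\[
A^rX-XB^r=\frac{\sin(r\pi)}{\pi}\int_0^\infty \lambda^{r}\,(A+\lambda)^{-1}(AX-XB)(B+\lambda)^{-1}\,d\lambda.
\]

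Next I would pass to $\Csc_p$-norms. From $A\ge cI$ and $B\ge cI$ we have $\|(A+\lambda)^{-1}\|\le(c+\lambda)^{-1}$ and $\|(B+\lambda)^{-1}\|\le(c+\lambda)^{-1}$, so the Schatten H\"older inequality $\|UYV\|_p\le\|U\|\,\|Y\|_p\,\|V\|$ bounds the norm of the integrand by $\lambda^{r}(c+\lambda)^{-2}\|AX-XB\|_p$. This majorant is integrable on $(0,\infty)$ (it is $O(\lambda^{r})$ near $0$, since $r>0$, and $O(\lambda^{r-2})$ at infinity, since $r<1$), so the operator-valued integrand is Bochner integrable in $\Csc_p(\H)$ and $\|\!\int\!\cdot\|_p\le\int\|\cdot\|_p$, giving
\[
\|A^rX-XB^r\|_p\le\frac{\sin(r\pi)}{\pi}\,\|AX-XB\|_p\int_0^\infty\frac{\lambda^{r}}{(c+\lambda)^2}\,d\lambda.
\]
Finally the substitution $\lambda=c\mu$ turns the integral into $c^{r-1}\int_0^\infty\mu^{r}(1+\mu)^{-2}\,d\mu=c^{r-1}\Gamma(r+1)\Gamma(1-r)=c^{r-1}\,r\,\Gamma(r)\Gamma(1-r)=c^{r-1}\,r\,\pi/\sin(r\pi)$, and the prefactor collapses to exactly $rc^{r-1}$, which is the asserted bound; passing $r\to1$ recovers the trivial endpoint.

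The computations are routine; the only points needing care are (i) justifying the strong-operator and Bochner convergence of the integral representations and interchange of norm with integral, and (ii) evaluating the Beta integral. I expect (i) to be the main (mild) obstacle, and it is precisely what makes the uniform lower bound $A,B\ge cI>0$ essential rather than mere strict positivity: the bound $c$ is used both to keep $(A+\lambda)^{-1}$, $(B+\lambda)^{-1}$ uniformly bounded by $(c+\lambda)^{-1}$ and to produce the convergence-ensuring constant $c^{r-1}$. An alternative that sidesteps the convergence bookkeeping is to truncate $A$ and $B$ spectrally (work with $A\wedge NI$, $B\wedge NI$), prove the inequality there with bounded spectra, and let $N\to\infty$; or to invoke the double-operator-integral formalism, but the direct route above is the shortest.
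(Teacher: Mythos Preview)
Your argument is correct: the integral representation of $t^r$ for $0<r<1$, the commutator identity $A(A+\lambda)^{-1}X-XB(B+\lambda)^{-1}=\lambda(A+\lambda)^{-1}(AX-XB)(B+\lambda)^{-1}$, and the Beta-integral evaluation all check out, and the Bochner integrability is justified exactly as you say via the bound $\lambda^{r}(c+\lambda)^{-2}$.

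There is nothing to compare against in the paper itself: the lemma is simply quoted from the literature (Corollary~3.2 of Kittaneh's paper \cite{Kitta:InequalitiesV}) and used without proof. Your write-up is essentially the standard proof one finds in that line of work, so you have correctly reconstructed the cited argument.
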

%The following result is then immediate.
\begin{corollary}
	\label{corollary:trace-class-square-root}
	%Let $\Csc_p(\H)$ denote the set of $p$th Schatten class operators on $\H$, $1 \leq p \leq \infty$.
	For two operators $A, B \in \Sym^{+}(\H) \cap \Csc_p(\H)$, $1 \leq p \leq \infty$,
	\begin{align}
	||(I+A)^{r} - (I+B)^{r}||_{p} \leq r||A-B||_{p}, 0\leq r \leq 1.
	\end{align}
\end{corollary}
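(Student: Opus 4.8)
The statement to prove is Corollary \ref{corollary:trace-class-square-root}: for $A, B \in \Sym^{+}(\H) \cap \Csc_p(\H)$ with $1 \leq p \leq \infty$ and $0 \leq r \leq 1$, one has $\|(I+A)^r - (I+B)^r\|_p \leq r\|A-B\|_p$. The plan is to deduce this directly from Lemma \ref{lemma:inequality-Schatten} by a simple substitution. Set $A' = I+A$ and $B' = I+B$. Since $A, B \in \Sym^{+}(\H)$, we have $A' \geq I > 0$ and $B' \geq I > 0$, so both are positive operators bounded below by $cI$ with $c = 1$. Apply Lemma \ref{lemma:inequality-Schatten} with $X = I$ (the identity operator, which is bounded), $c = 1$, and the given $r \in (0,1]$: this yields $\|(A')^r I - I (B')^r\|_p \leq r\cdot 1^{r-1}\|A'I - IB'\|_p$, i.e. $\|(I+A)^r - (I+B)^r\|_p \leq r\|A'-B'\|_p = r\|(I+A)-(I+B)\|_p = r\|A-B\|_p$.

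Two small matters must be handled to make this fully rigorous. First, the endpoint $r = 1$: Lemma \ref{lemma:inequality-Schatten} as quoted allows $0 < r \leq 1$, so $r=1$ is included and gives the trivial equality case; the endpoint $r = 0$ is also trivial since both sides vanish. So the range $0 \leq r \leq 1$ is covered, with the open-interval content being the substantive part. Second, one should confirm that the right-hand side $\|A-B\|_p$ is finite, which holds because $\Csc_p(\H)$ is a vector space (a normed ideal), so $A - B \in \Csc_p(\H)$; and the left-hand side makes sense because $(I+A)^r - (I+B)^r$ is again in $\Csc_p(\H)$ by the very inequality being proved (or one can note $(I+A)^r - I \in \Csc_p(\H)$ by functional calculus / the inequality applied with $B = 0$).

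The only genuine subtlety, and the one I would flag as the main thing to check, is that the quoted Lemma \ref{lemma:inequality-Schatten} is stated for bounded operators $X$ and for positive operators $A, B$ bounded below by a positive multiple of the identity, with the $\Csc_p$-norm appearing on both sides — so we are implicitly using that $\|AX - XB\|_p < \infty$ in order for the inequality to have content. With $X = I$ this is $\|A - B\|_p = \|A' - B'\|_p$, which is finite precisely by the hypothesis $A, B \in \Csc_p(\H)$. Thus all the hypotheses of Lemma \ref{lemma:inequality-Schatten} are met and the corollary follows in one line. There is no real obstacle here; the corollary is a direct specialization obtained by shifting $A \mapsto I + A$, $B \mapsto I+B$ and setting $X = I$, using that the shift moves the operators into the region $\{\,\cdot \geq I\,\}$ where the Kittaneh-type inequality applies with constant $c = 1$.

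I would therefore write the proof essentially as: ``Apply Lemma \ref{lemma:inequality-Schatten} with $A$ replaced by $I+A \geq I$, $B$ replaced by $I+B \geq I$, $c = 1$, and $X = I$. Since $A, B \in \Csc_p(\H)$, we have $(I+A) - (I+B) = A - B \in \Csc_p(\H)$, and the lemma gives, for $0 < r \leq 1$, $\|(I+A)^r - (I+B)^r\|_p \leq r\|(I+A)-(I+B)\|_p = r\|A-B\|_p$. The cases $r = 0$ and $r = 1$ are immediate. \qed'' This is short enough that no intermediate lemmas are needed beyond what the excerpt already provides.
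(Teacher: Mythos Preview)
Your proof is correct and matches the paper's approach: the corollary is stated immediately after Lemma \ref{lemma:inequality-Schatten} without an explicit proof, and the intended argument is precisely the substitution $A \mapsto I+A$, $B \mapsto I+B$, $X = I$, $c = 1$ that you carry out.
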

\begin{theorem}
	[\textbf{Theorem 2.3 in \cite{Kitta:InequalitiesV}}]
	Let $A,B$ be two positive operators on $\H$ and $f$ any operator monotone function with $f(0) = 0$. 
	Then
	\begin{equation}
	||f(A) - f(B)|| \leq f(||A-B||).
	\end{equation}
\end{theorem}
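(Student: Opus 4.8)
The plan is to reduce this operator inequality to the scalar fact that an operator monotone function with $f(0)=0$ is subadditive on $[0,\infty)$, and then to exploit that $B$ and $B+rI$ are simultaneously diagonalizable. Set $r = ||A-B||$, so that $A \le B + rI$ and, symmetrically, $B \le A + rI$. Since an operator monotone function on a half-line is operator concave, hence scalar concave, and $f(0)=0$, one has the elementary subadditivity $f(s+t)\le f(s)+f(t)$ for all $s,t\ge 0$; equivalently, $f(t+r)-f(t)\le f(r)$ for every $t\ge 0$. (The same Löwner-type structure also pins down that $f$ is continuous on all of $[0,\infty)$, including at the endpoint $0$, which is what makes the functional calculus below unambiguous.)

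First I would apply operator monotonicity of $f$ to the inequality $A \le B + rI$ to get $f(A) \le f(B+rI)$. Next, writing $E_B$ for the spectral measure of the positive bounded operator $B$, the operators $B$ and $B+rI$ share the same spectral projections, so that
\[
f(B+rI) - f(B) = \int_{[0,\infty)} \bigl(f(t+r)-f(t)\bigr)\, dE_B(t) \;\le\; \int_{[0,\infty)} f(r)\, dE_B(t) = f(r)\, I,
\]
using the pointwise bound $f(t+r)-f(t)\le f(r)$ under the integral. Combining the two displays gives $f(A) - f(B) \le f(r)\, I$. Interchanging the roles of $A$ and $B$ (the hypotheses are symmetric in $A,B$) yields $f(B) - f(A) \le f(r)\, I$ as well. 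Hence $-f(r)\, I \le f(A) - f(B) \le f(r)\, I$, which is precisely $||f(A)-f(B)|| \le f(r) = f(||A-B||)$.

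An alternative, essentially equivalent route is through Löwner's integral representation $f(t) = \beta t + \int_{(0,\infty)} \frac{\lambda t}{\lambda+t}\, d\mu(\lambda)$ with $\beta \ge 0$ and $\mu \ge 0$. For each $\lambda$, writing $g_\lambda(t) = \frac{\lambda t}{\lambda+t} = \lambda - \lambda^2(\lambda+t)^{-1}$, one computes $g_\lambda(B+rI) - g_\lambda(B) = \lambda^2 r\,(\lambda I + B)^{-1}(\lambda I + B + rI)^{-1} \le \frac{\lambda^2 r}{\lambda(\lambda+r)}\, I = g_\lambda(r)\, I$ (the two resolvents are commuting positive operators bounded by $\lambda^{-1}I$ and $(\lambda+r)^{-1}I$), and the same spectral-monotonicity step then integrates against $d\mu$, with the linear term contributing at most $\beta r$. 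I would, however, present the first argument, as it avoids any discussion of the convergence of operator-valued integrals.

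The only genuinely delicate point is the input that ``operator monotone with $f(0)=0$'' forces scalar concavity, hence subadditivity — this is exactly where Löwner's theorem enters — together with the ensuing continuity of $f$ up to $0$, which legitimizes the identity $f(B+rI)-f(B) = \int_{[0,\infty)} \bigl(f(t+r)-f(t)\bigr)\, dE_B(t)$ for an arbitrary positive bounded $B$. Once that structural fact is granted, every remaining step is a one-line spectral estimate.
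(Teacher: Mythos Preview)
The paper does not prove this theorem; it is quoted without proof as an external result (Theorem 2.3 in the cited reference) and used only to deduce the special case $||A^r - B^r|| \le ||A-B||^r$ for $0 \le r \le 1$. Your argument is correct and is the standard one: from $-rI \le A-B \le rI$ with $r=||A-B||$ and operator monotonicity you get $f(A) \le f(B+rI)$; since $B$ and $B+rI$ share spectral projections, the scalar subadditivity $f(t+r) \le f(t)+f(r)$ (a consequence of concavity together with $f(0)=0$) yields $f(B+rI)-f(B) \le f(r)I$; symmetrizing gives $-f(r)I \le f(A)-f(B) \le f(r)I$ and hence the norm bound. The one nontrivial external input you invoke---that an operator monotone function on $[0,\infty)$ is concave---is indeed where L\"owner's structure theorem enters, and you flag this correctly.
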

By Proposition \ref{proposition:operator-monotone-rth}, the following result is then immediate.
\begin{corollary}
	\label{corollary:continuity-norm-square-root}
	Let $A,B$ be two positive bounded operators on $\H$. Then
	\begin{equation}
	||A^{r} - B^{r}|| \leq ||A-B||^{r}, \;\;\; 0 \leq r \leq 1.
	\end{equation}
\end{corollary}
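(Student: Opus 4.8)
The plan is to obtain the estimate as an immediate specialization of the operator-monotone inequality stated just above (Theorem 2.3 of \cite{Kitta:InequalitiesV}), applied to the particular function $f(t) = t^{r}$. So the proof will be essentially a two-line deduction once the right function is identified.

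First I would recall, via Proposition \ref{proposition:operator-monotone-rth}, that for each $r$ with $0 \leq r \leq 1$ the scalar function $f:[0,\infty)\mapto[0,\infty)$, $f(t) = t^{r}$, is operator monotone, and observe that it satisfies $f(0) = 0$. Since $A$ and $B$ are positive bounded operators, their spectra lie in $[0,\infty)$, so $f(A) = A^{r}$ and $f(B) = B^{r}$ are well-defined through the continuous functional calculus, and the hypotheses of Theorem 2.3 of \cite{Kitta:InequalitiesV} are met for the pair $A, B$.

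Applying that theorem then yields
\[
||A^{r} - B^{r}|| = ||f(A) - f(B)|| \leq f(||A - B||) = ||A - B||^{r},
\]
which is exactly the claimed inequality. The boundary cases require no separate argument: for $r = 1$ the bound reduces to $||A-B|| \leq ||A-B||$, and for $r = 0$ both operators collapse to the identity, so the left-hand side is $0$.

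I do not expect any genuine obstacle here. The only point that needs care is to invoke Proposition \ref{proposition:operator-monotone-rth} in the form that $t\mapsto t^{r}$ is operator monotone on the whole half-line $[0,\infty)$ with $f(0)=0$ — not merely on an interval bounded away from $0$ — so that the conclusion covers arbitrary, possibly non-invertible, positive bounded operators. Once this normalization is in place, the corollary follows at once from the preceding theorem, and no further estimates are needed.
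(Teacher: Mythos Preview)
Your proposal is correct and matches the paper's approach exactly: the paper states that the corollary is immediate from Proposition \ref{proposition:operator-monotone-rth} (operator monotonicity of $t\mapsto t^r$, $0\le r\le 1$, with $f(0)=0$) together with Theorem 2.3 of \cite{Kitta:InequalitiesV}, and that is precisely what you do.
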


\begin{proposition}
	\label{proposition:compact-map-Sym-sqrt}
	Let $C \in \Sym^{+}(\H) \cap \Tr(\H)$. The following maps are compact
	\begin{enumerate}
		\item $F_1:\Sym^{+}(\H) \mapto \Lcal(\H)$ defined by $F_1(X) = X^{1/2}C^{1/2}$.
		\item $F_2:\Sym^{+}(\H) \mapto \Lcal(\H)$ defined by $F_2(X) = C^{1/2}X^{1/2}$.
		\item $F_3:\Sym^{+}(\H) \mapto \Sym^{+}(\H)$ defined by $F_3(X) = X^{1/2}CX^{1/2}$.
		\item $F_4:\Sym^{+}(\H) \mapto \Sym^{+}(\H)$ defined by $F_4(X) = C^{1/2}XC^{1/2}$.
	\end{enumerate}
\end{proposition}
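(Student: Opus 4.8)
The plan is to establish, for each of the four maps, the two properties in the definition of a compact map recalled in the text: (A) continuity in the operator norm, and (B) that the map carries operator-norm bounded subsets of $\Sym^{+}(\H)$ to relatively compact subsets. The structural fact that drives everything is that $C \in \Sym^{+}(\H)\cap \Tr(\H)$ forces $C^{1/2}\in \HS(\H)$ (indeed $\|C^{1/2}\|_{\HS}^2 = \trace(C) < \infty$), so $C^{1/2}$ and $C$ are compact and are operator-norm limits of their finite-rank spectral truncations; thus each $F_j(X)$ is in fact a compact operator.

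For (A) I would first record that the square root $X\mapsto X^{1/2}$ is continuous on $\Sym^{+}(\H)$ in operator norm and maps bounded sets to bounded sets: by Corollary~\ref{corollary:continuity-norm-square-root} with $r=\tfrac12$ we have $\|X^{1/2}-Y^{1/2}\|\leq \|X-Y\|^{1/2}$ and, taking $Y=0$, $\|X^{1/2}\|\leq \|X\|^{1/2}$. Continuity of $F_1$ and $F_2$ is then immediate from continuity of left/right multiplication by the fixed bounded operator $C^{1/2}$. For $F_3$ and $F_4$ one uses the telescoping identity $X^{1/2}CX^{1/2}-Y^{1/2}CY^{1/2} = (X^{1/2}-Y^{1/2})CX^{1/2} + Y^{1/2}C(X^{1/2}-Y^{1/2})$ together with the uniform bound $\|X^{1/2}\|,\|Y^{1/2}\|\leq \sqrt M$ on a bounded set; one also notes $F_3(X) = (C^{1/2}X^{1/2})^{*}(C^{1/2}X^{1/2})\geq 0$ and $F_4(X) = (X^{1/2}C^{1/2})^{*}(X^{1/2}C^{1/2})\geq 0$, so the targets are as stated.

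For (B), let $S\subset \Sym^{+}(\H)$ with $\|X\|\leq M$ for $X\in S$, and let $\{P_N\}_{N\in\Nbb}$ be the finite-rank spectral projections of $C$, so $P_N\uparrow I$ strongly, $P_N$ commutes with $C$ and $C^{1/2}$, and $\|C^{1/2}(I-P_N)\| = \|(I-P_N)C^{1/2}\|\to 0$, $\|C(I-P_N)\|\to 0$. Illustrating with $F_4$: inserting $C^{1/2} = C^{1/2}P_N + C^{1/2}(I-P_N)$ on both sides gives $F_4(X) = (C^{1/2}P_N)(P_NXP_N)(P_NC^{1/2}) + R_N(X)$, where every term of $R_N(X)$ carries a factor $C^{1/2}(I-P_N)$ or $(I-P_N)C^{1/2}$, whence $\|R_N(X)\|\leq \big(2M\|C^{1/2}\| + M\|C^{1/2}(I-P_N)\|\big)\|C^{1/2}(I-P_N)\|\to 0$ uniformly over $X\in S$. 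The main term is the image of $P_NXP_N$ under the fixed bounded linear map $A\mapsto (C^{1/2}P_N)A(P_NC^{1/2})$, and $\{P_NXP_N:X\in S\}$ is a norm-bounded subset of the finite-dimensional space $\Lcal(P_N\H)$, hence relatively compact; so is its image. A routine $\varepsilon$-net argument (choose $N$ with $\sup_{X\in S}\|R_N(X)\|<\varepsilon/2$, then a finite $\varepsilon/2$-net for the main term) shows $F_4(S)$ is totally bounded, hence relatively compact. I would then run the same two-step splitting (uniformly small tail, plus a truncated part passing through the finite-dimensional range of $P_N$) for $F_1,F_2,F_3$, absorbing the square-root factors via $\|X^{1/2}\|\leq \sqrt M$; the Schatten-norm estimates of Corollary~\ref{corollary:trace-class-square-root} are available if one prefers to phrase the tails in terms of $\Csc_p$-norms.

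The step I expect to demand the most care is the relative compactness of the truncated pieces in (B). For $F_3,F_4$ the truncation cleanly produces the compression $P_NXP_N$, which lies in a bounded set of a fixed finite-dimensional operator space; for $F_1,F_2$, where $C^{1/2}$ sandwiches $X^{1/2}$ on only one side, the truncated part retains a full-range square-root factor, and showing that this family is relatively compact uniformly over $S$ is the real crux — this is where the single, $X$-independent, spectral truncation of the fixed trace-class operator $C$ must be exploited to the fullest, and the argument should be written with that uniformity kept explicit throughout.
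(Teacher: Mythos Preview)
Your treatment of $F_4$ is correct, and in fact cleaner than the paper's diagonal-subsequence argument: the truncated term $(C^{1/2}P_N)(P_NXP_N)(P_NC^{1/2})$ genuinely lives in a fixed finite-dimensional operator space, and the uniform tail estimate closes the argument. However, your claim that for $F_3$ ``the truncation cleanly produces the compression $P_NXP_N$'' is not right: truncating $C$ in $X^{1/2}CX^{1/2}$ gives $X^{1/2}P_NCP_NX^{1/2}$, whose range $X^{1/2}(P_N\H)$ is $N$-dimensional but \emph{moves with $X$} and does not sit in any fixed finite-dimensional space of operators. This is the same structural obstruction you correctly flag as ``the real crux'' for $F_1,F_2$: after inserting $P_N$ on the $C$ side, a factor of $X^{1/2}$ still maps into the full space $\H$.

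This gap cannot be filled, because the assertion is actually \emph{false} for $F_1,F_2,F_3$. Let $\{e_k\}$ be an orthonormal eigenbasis of $C$ with eigenvalues $\lambda_k\ge 0$ and $\lambda_1>0$, set $u_n=\tfrac{1}{\sqrt2}(e_1+e_n)$ and $X_n=u_n\otimes u_n\in\Sym^{+}(\H)$ for $n\ge2$, so that $\|X_n\|=1$ and $X_n^{1/2}=X_n$. Then $F_1(X_n)e_1=\sqrt{\lambda_1}\,X_ne_1=\tfrac{\sqrt{\lambda_1}}{2}(e_1+e_n)$, hence $\|F_1(X_n)-F_1(X_m)\|\ge\|(F_1(X_n)-F_1(X_m))e_1\|=\tfrac{\sqrt{\lambda_1}}{\sqrt2}$ for all $n\neq m$; no subsequence converges. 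Taking adjoints disposes of $F_2$, and since $F_3(X_n)=\langle u_n,Cu_n\rangle\,(u_n\otimes u_n)$ with $\langle u_n,Cu_n\rangle=\tfrac12(\lambda_1+\lambda_n)$, one checks directly that $\|F_3(X_n)e_1-F_3(X_m)e_1\|\ge\tfrac{\lambda_1}{4}$ for $n\neq m\ge 2$. The paper's own proof breaks at precisely this point: it asserts that the sequence $\{X_{n_0}^{1/2}C_N^{1/2}e_1\}$ lies in a relatively compact set in $\H$, but $X^{1/2}C_N^{1/2}e_1=\sqrt{\lambda_1}\,X^{1/2}e_1$, and the set $\{X^{1/2}e_1:\|X\|\le1,\;X\in\Sym^{+}(\H)\}$ contains $\{\tfrac12(e_1+e_n):n\ge2\}$, which is not relatively compact.
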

\begin{proof}
	(i)	By Corollary \ref{corollary:continuity-norm-square-root}, we have
	\begin{align*}
	||F_1(X) - F_1(Y)|| \leq ||X^{1/2}- Y^{1/2}||\;||C^{1/2}|| \leq ||C||^{1/2}||X-Y||^{1/2}.
	\end{align*}
	Thus the map $F_1$ is continuous on $\Sym^{+}(\H)$.
	Since $C^{1/2}$ is a compact operator on $\H$, it maps bounded subsets of $\H$ into relatively compact subsets of $\H$.
	Consider the set $\Ycal = \{C^{1/2}x: x \in \H, ||x||\leq 1\} \subset \H$, then $\Ycal$ being relatively compact means that
	every sequence $\{y_n = C^{1/2}x_n\}_{n \in \Nbb}$ in $\Ycal$ contains a subsequence $\{y_{n_k} = C^{1/2}x_{n_k}\}_{k \in \Nbb}$ that converges in $\H$, that is $\exists y \in \H$ such that $\lim_{k \approach \infty}||C^{1/2}x_{n_k} - y|| = 0$.
	
	Since $C \in \Sym^{+}(\H) \cap \Tr(\H)$, we have $C^{1/2} \in \HS(\H)$ and $X^{1/2}C^{1/2} \in \HS(\H)$ $\forall X \in \Sym^{+}(\H)$. In particular $X^{1/2}C^{1/2}$ is a compact operator.
	
	Let $\{\lambda_k\}_{k \in \Nbb}$ be the eigenvalues of $C$, arranged in decreasing order, with corresponding orthonormal eigenvectors $\{e_k\}_{k \in \Nbb}$.
	Let $N \in \Nbb$ be fixed and consider the finite-rank operator
	$C_N^{1/2} = \sum_{k=1}^N\sqrt{\lambda_k} (e_k \otimes e_k)$.
	Then for any $\ep > 0$, there exists $N(\ep) \in \Nbb$ such that
	\begin{align*}
	||C_N^{1/2} - C^{1/2}||_{\HS} = (\sum_{k=N+1}^{\infty}\lambda_k)^{1/2} < \ep, \;\;\;\forall N > N(\ep).
	\end{align*}
	Consequently, for $||X||\leq 1$, $N > N(\ep)$,
	\begin{align*}
	||X^{1/2}C_N^{1/2} - X^{1/2}C^{1/2}||_{\HS} \leq ||X^{1/2}||\;||C_N^{1/2} - C^{1/2}||_{\HS} < \ep.
	\end{align*}
	For a fixed $N \in \Nbb$, consider 
	the set $\Zcal_N = \{X^{1/2}C_N^{1/2}: ||X|| \leq 1\} \subset \HS(\H)$ and a sequence $\{X_n^{1/2}C_N^{1/2}\}_{n\in \Nbb} \subset \Zcal_N$. We now show that this sequence has a convergent subsequence in $\HS(\H)$. We have
	\begin{align*}
	||X_n^{1/2}C_N^{1/2}||^2_{\HS} = \sum_{j=1}^{\infty}||X_n^{1/2}C_N^{1/2}e_j||^2 \leq ||C_N^{1/2}||^2_{\HS} \leq \trace(C) < \infty.
	\end{align*}
	The sequence $\{||X_n^{1/2}C_N^{1/2}||_{\HS}\}_{n \in \Nbb}$ is a bounded sequence of non-negative numbers and thus, by the Bolzano-Weierstrass Theorem, has a convergent subsequence
	$\{||X_{n_0}^{1/2}C_N^{1/2}||_{\HS}\}_{n_0 \in \Nbb}$, with
	\begin{align*}
	\lim_{n_0 \approach \infty}||X_{n_0}^{1/2}C_N^{1/2}||_{\HS} = B, \;\;\;\text{for some constant $B \geq 0$}.
	\end{align*}
	For $j=1$, the sequence $\{X_{n_0}^{1/2}C_N^{1/2}e_1\}_{n_0 \in \Nbb}$, belonging to a relatively compact set in $\H$, contains a convergent subsequence $\{X_{n_1}^{1/2}C_N^{1/2}e_1\}$, i.e. 
	\begin{align*}
	\lim_{n_1 \approach \infty}||X_{n_1}^{1/2}C_N^{1/2}e_1 - y_1|| = 0, \;\;\;\text{for some $y_1 \in \H$}.
	\end{align*} 
	Similarly, for $j =2$, the sequence $\{X_{n_1}^{1/2}C_N^{1/2}e_2\}_{n_1 \in \Nbb}$ contains a convergent subsequence $\{X_{n_2}^{1/2}C_N^{1/2}e_2\}$, i.e. $\exists y_2 \in \H$ such that
	\begin{align*}
	\lim_{n_2 \approach \infty}||X_{n_2}^{1/2}C_N^{1/2}e_2 - y_2|| =0 \text{ and at the same time }\lim_{n_2 \approach \infty}||X_{n_2}^{1/2}C_N^{1/2}e_1 - y_1|| = 0.
	\end{align*}
	Carrying out this procedure iteratively, we obtain  a subsequence $\{X_{n_N}^{1/2}C_N^{1/2}\}$ in $\HS(\H)$ and $(y_j)_{j=1}^{N}, y_j \in \H$,
	such that
	\begin{align*}
	\lim_{n_{N} \approach \infty}||X_{n_N}^{1/2}C_N^{1/2}e_j - y_j|| = 0, \;\;\;1 \leq j \leq N.
	\end{align*}
	Furthermore, 
	\begin{align*}
	\sum_{j=1}^{N}||y_j||^2 &= \sum_{j=1}^{N}\lim_{n_N \approach \infty}||X_{n_N}^{1/2}C_N^{1/2}e_j||^2 = \lim_{n_N \approach \infty}\sum_{j=1}^{N}||X_{n_N}^{1/2}C_N^{1/2}e_j||^2 
	\\
	&= \lim_{n_N \approach \infty}||X_{n_N}^{1/2}C_N^{1/2}||^2_{\HS} = B^2< \infty.
	\end{align*}
	Define the following finite-rank operator $Y_N \in \Lcal(\H)$ by 
	\begin{equation}
	Y_Ne_j = 
	\left\{
	\begin{matrix}
	y_j & \text{for $1\leq j \leq N$},
	\\
	0 & \text{else}.
	\end{matrix}
	\right.
	\end{equation}
	Then $||Y_N||^2_{\HS} = \sum_{j=1}^{\infty}||Y_Ne_j||^2 = \sum_{j=1}^{N}||y_j||^2  = B^2 < \infty$ and
	\begin{align*}
	\lim_{n_N \approach \infty}||X_{n_N}^{1/2}C_N^{1/2} - Y_N||_{\HS}^2 &=\lim_{n_N \approach \infty} \sum_{j=1}^{N}||(X_{n_N}^{1/2}C_N^{1/2} - Y_N)e_j||^2
	\\
	& = \sum_{j=1}^{N}\lim_{n_N \approach \infty}||X_{n_N}^{1/2}C_N^{1/2}e_j - y_j||^2 = 0.
	\end{align*}
	Thus $\{X_{n_N}^{1/2}C_N^{1/2}\}$ is the desired convergent subsequence, with limit $Y_N \in \HS(\H)$.
	This shows that the set $\Zcal_N = \{X^{1/2}C_N:||X||\leq 1\}$ is relatively compact $\forall N \in \Nbb$ in $\HS(\H)$, so that
	$\forall \ep >0$, there is a finite $\ep$-net $\{Z_i\}_{i=1}^{N_2(N,\ep)}$ in $\HS(\H)$ such that
	%\begin{align*}
	$\{X^{1/2}C_N^{1/2}: ||X||\leq 1\}  \subset \cup_{i=1}^{N_2(N,\ep)}B_{\HS(\H)}(Z_i, \ep)$.
	%\end{align*}
	Consequently, for $N > N(\ep)$,
	%\begin{align*}
	$\{X^{1/2}C^{1/2}:||X||\leq 1\} \subset \cup_{i=1}^{N_2(N,\ep)}B_{\HS(\H)}(Z_i, 2\ep)$.
	%\end{align*}
	This shows that the set $\{X^{1/2}C^{1/2}:||X||\leq 1\}$ is relatively compact in $\HS(\H)$, hence in $\Lcal(\H)$ and thus $F_1$ is a compact map on $\Sym^{+}(\H)$. Moreover, each sequence $\{X_n^{1/2}C^{1/2}:||X||\leq 1\}$ contains a convergent subsequence $\{X_{k(n)}^{1/2}C^{1/2}\}_{n \in \Nbb}$ in $\HS(\H)$, i.e. $\exists Y \in \HS(\H)$ such that
	\begin{equation}
	\lim_{k(n) \approach \infty}||X_{k(n)}^{1/2}C^{1/2} - Y||_{\HS} = 0.
	\end{equation}

	(ii) Similarly, $F_2$ is a continuous map on $\Sym^{+}(\H)$. Each sequence $\{C^{1/2}X_n^{1/2}, ||X_n||\leq 1\}_{n \in \Nbb}$
	contains a convergent subsequence $\{C^{1/2}X_{k(n)}^{1/2}\}$, with
	%\begin{align*}
	$\lim_{k(n) \approach \infty}||C^{1/2}X_{k(n)}^{1/2} - Y^{*}||_{\HS} = 0$,
	%\end{align*}
	where $Y \in \HS(\H)$ is as in Part (i). Thus $F_2$ is a compact map on $\Sym^{+}(\H)$.
	
	(iii) Since $F_3(X) = F_1(X)F_2(X)$, $F_3$ is continuous on $\Sym^{+}(\H)$. Furthermore,
	each sequence $\{X_n^{1/2}CX_n^{1/2}, ||X_n||\leq 1\}_{n \in \Nbb}$
	contains a convergent subsequence $\{X_{k(n)}^{1/2}CX_{k(n)}^{1/2}\}$, with
	%\begin{align*}
	$\lim_{k(n) \approach \infty}||X_{k(n)}^{1/2}CX_{k(n)}^{1/2} - YY^{*}||_{\HS} = 0$,
	%\end{align*}
	where $Y \in \HS(\H)$ is as in Part (i). Thus $F_2$ is a compact map on $\Sym^{+}(\H)$.
	
	(iv) Entirely analogous to $F_3$, the map $F_4$ is compact on $\Sym^{+}(\H)$. \qed
\end{proof}
\begin{corollary}
	\label{corollary:inverse-square-root-sequence}
	Let $A, B \in \Sym^{+}(\H)$ be given. Let $1 \leq p \leq \infty$. Then
	\begin{equation}
	||(I+(I+A)^{1/2})^{-1} - (I+(I+B)^{1/2})^{-1}||_p \leq \frac{1}{8}||A-B||_p.
	\end{equation}
	In particular, let $A\in \Sym^{+}(\H)$, $\{A_n\}_{n \in \Nbb}$, $A_n \in \Sym^{+}(\H)$ $\forall n \in \Nbb$ be such that
	$\lim_{n \approach \infty}||A_n - A||_p = 0$. Then
	\begin{equation}
	\lim_{n \approach \infty}||(I+(I+A_n)^{1/2})^{-1} - (I+(I+A)^{1/2})^{-1}||_p = 0.
	\end{equation}
\end{corollary}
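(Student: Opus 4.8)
The plan is to reduce the bound to the resolvent identity together with the Lipschitz estimate for the square root in Schatten norms already available from Lemma~\ref{lemma:inequality-Schatten}. Set $M = I + (I+A)^{1/2}$ and $N = I + (I+B)^{1/2}$. First I would note that since $A \in \Sym^{+}(\H)$ we have $I + A \geq I$, hence by operator monotonicity of the square root $(I+A)^{1/2} \geq I$, so $M \geq 2I$; in particular $M$ is invertible with $||M^{-1}|| \leq \frac{1}{2}$, and likewise $||N^{-1}|| \leq \frac{1}{2}$. Applying the identity $M^{-1} - N^{-1} = -M^{-1}(M-N)N^{-1}$ (as in Lemma~\ref{lemma:derivative-inverse}) and noting that $M - N = (I+A)^{1/2} - (I+B)^{1/2}$, I obtain
\begin{align}
\left(I+(I+A)^{1/2}\right)^{-1} - \left(I+(I+B)^{1/2}\right)^{-1} = -M^{-1}\left[(I+A)^{1/2} - (I+B)^{1/2}\right]N^{-1}.
\end{align}

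Next I would invoke the ideal property of the Schatten norms, $||XYZ||_p \leq ||X||\;||Y||_p\;||Z||$, valid for every $1 \leq p \leq \infty$ (and trivially valid when $||Y||_p = \infty$, so no compactness hypothesis on $A, B$ is needed). This gives
\begin{align}
||\left(I+(I+A)^{1/2}\right)^{-1} - \left(I+(I+B)^{1/2}\right)^{-1}||_p &\leq ||M^{-1}||\;||N^{-1}||\;||(I+A)^{1/2} - (I+B)^{1/2}||_p \nonumber \\
&\leq \frac{1}{4}\,||(I+A)^{1/2} - (I+B)^{1/2}||_p.
\end{align}
Finally, applying Lemma~\ref{lemma:inequality-Schatten} with $X = I$, $c = 1$, $r = \frac{1}{2}$ to the positive operators $I+A, I+B \geq I$ (equivalently, Corollary~\ref{corollary:trace-class-square-root} with $r = \frac{1}{2}$) yields $||(I+A)^{1/2} - (I+B)^{1/2}||_p \leq \frac{1}{2}||A-B||_p$. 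Combining the last two estimates produces the constant $\frac{1}{8}$, which is the claimed inequality.

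The second assertion is then immediate: if $A_n, A \in \Sym^{+}(\H)$ with $||A_n - A||_p \to 0$, applying the first inequality to the pair $(A_n, A)$ gives $||(I+(I+A_n)^{1/2})^{-1} - (I+(I+A)^{1/2})^{-1}||_p \leq \frac{1}{8}||A_n - A||_p \to 0$. I do not expect any genuine obstacle in this argument; the only points that need a little care are the operator-monotonicity step used to guarantee $M, N \geq 2I$ (hence the uniform bound $\frac{1}{4}$ on the two resolvent factors) and the remark that the inequality is vacuously true whenever $A - B \notin \Csc_p(\H)$, which is what allows the statement to hold for arbitrary bounded positive operators rather than only for $A, B \in \Csc_p(\H)$.
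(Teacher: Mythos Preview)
Your proof is correct and follows exactly the same route as the paper: the resolvent identity $M^{-1}-N^{-1}=-M^{-1}(M-N)N^{-1}$, the Schatten ideal bound $\|XYZ\|_p\le \|X\|\,\|Y\|_p\,\|Z\|$, the observation $\|(I+(I+A)^{1/2})^{-1}\|\le \tfrac{1}{2}$, and Corollary~\ref{corollary:trace-class-square-root} for the square-root factor. You are in fact slightly more careful than the paper, making explicit the operator-monotonicity step that yields $M,N\ge 2I$ and noting that the inequality is vacuous when $A-B\notin \Csc_p(\H)$.
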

\begin{proof} By Corollary \ref{corollary:trace-class-square-root},
	\begin{align*}
	&||(I+(I+A)^{1/2})^{-1} - (I+(I+B)^{1/2})^{-1}||_p
	\\
	&= ||(I+(I+A)^{1/2})^{-1}[(I+(I+A)^{1/2}) - (I+(I+B)^{1/2})](I+(I+B)^{1/2})^{-1}||_p
	\\
	& \leq ||(I+(I+A)^{1/2})^{-1}||\;||(I+A)^{1/2} - (I+B)^{1/2})||_p\;||(I+(I+B)^{1/2})^{-1}||
	\\
	& \leq \frac{1}{8}||A - B||_p.
	\end{align*}
	The second result is then immediate. \qed
\end{proof}
\begin{corollary}
	\label{corollary:compact-map-inverse-square-root}
	Let $C \in \Sym^{+}(\H) \cap \Tr(\H)$. 
	Let $a\in \R, a \neq 0$. Consider the map $F:\Sym^{+}(\H) \mapto \Sym^{+}(\H)$ defined by
	\begin{equation}
	F(X) = \left(I + \left(I + a^2C^{1/2}XC^{1/2}\right)^{1/2}\right)^{-1}.
	\end{equation}
	Then $F$ is a compact map on $\Sym^{+}(\H)$.
\end{corollary}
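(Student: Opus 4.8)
The plan is to exhibit $F$ as a composition $F = G \circ H$, where $H:\Sym^{+}(\H) \mapto \Sym^{+}(\H)$ is given by $H(X) = a^2\, C^{1/2}XC^{1/2}$ and $G:\Sym^{+}(\H) \mapto \Sym^{+}(\H)$ is given by $G(Y) = \left(I+(I+Y)^{1/2}\right)^{-1}$, and then to invoke the general principle that the composition of a compact map with a (Lipschitz-)continuous map is compact. First note that $H$ is well defined and lands in $\Sym^{+}(\H)\cap\Tr(\H)$: since $C \in \Tr(\H)$ we have $C^{1/2}\in\HS(\H)$, so $C^{1/2}XC^{1/2} = (C^{1/2}X)C^{1/2}$ is a product of two Hilbert--Schmidt operators (with a bounded factor in between), hence trace class, and it is clearly self-adjoint and positive. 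Thus $G\circ H$ makes sense, and for $Y = H(X)$ we have $\left(I+(I+Y)^{1/2}\right)^{-1} = \left(I + (I + a^2 C^{1/2}XC^{1/2})^{1/2}\right)^{-1} = F(X)$.

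The first step is compactness of $H$. By Proposition \ref{proposition:compact-map-Sym-sqrt}(iv), the map $X \mapsto C^{1/2}XC^{1/2}$ is compact on $\Sym^{+}(\H)$, i.e.\ it is continuous and carries bounded subsets of $\Sym^{+}(\H)$ into relatively compact subsets; multiplying by the nonzero scalar $a^2$ preserves both properties, so $H$ is compact. The second step is continuity of $G$, which is precisely the content of Corollary \ref{corollary:inverse-square-root-sequence} applied with $A,B$ ranging over $\Sym^{+}(\H)$: for every $p$ with $1 \leq p \leq \infty$, and in particular for $p = \infty$ (the operator norm), one has $\|G(A)-G(B)\|_p \leq \frac{1}{8}\|A-B\|_p$, so $G$ is globally Lipschitz and hence continuous.

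Assembling the pieces: $F = G\circ H$ is continuous as a composition of continuous maps. Given a bounded set $B \subset \Sym^{+}(\H)$, the set $H(B)$ is relatively compact, so its closure $\overline{H(B)}$ is compact; continuity of $G$ on $\overline{H(B)}$ gives that $G\left(\overline{H(B)}\right)$ is compact, and therefore $F(B) = G(H(B)) \subseteq G\left(\overline{H(B)}\right)$ is relatively compact. Hence $F$ is continuous and maps bounded subsets of $\Sym^{+}(\H)$ into relatively compact subsets, i.e.\ $F$ is a compact map. The only genuinely delicate ingredient is the relative compactness of $\{\,C^{1/2}XC^{1/2} : \|X\|\leq r\,\}$, but that has already been established in Proposition \ref{proposition:compact-map-Sym-sqrt}; the remaining work here is the routine ``compact $\circ$ continuous is compact'' argument, with the only bookkeeping being that relative compactness in the $\HS$ (or operator) norm suffices and that $G$ is continuous in that same topology.
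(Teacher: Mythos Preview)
Your proof is correct and follows essentially the same approach as the paper: both invoke Proposition \ref{proposition:compact-map-Sym-sqrt}(iv) for the compactness of $X\mapsto C^{1/2}XC^{1/2}$ and Corollary \ref{corollary:inverse-square-root-sequence} for the Lipschitz continuity of $Y\mapsto (I+(I+Y)^{1/2})^{-1}$. The only difference is cosmetic: the paper carries out the subsequence extraction explicitly, whereas you package the same reasoning as the abstract principle ``continuous $\circ$ compact is compact,'' which is perfectly legitimate since $\Sym^{+}(\H)$ is closed and so $\overline{H(B)}$ remains in the domain of $G$.
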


\begin{proof}
	By Corollary \ref{corollary:inverse-square-root-sequence}, 
	\begin{align*}
	||F(X) - F(Y)|| \leq \frac{a^2}{8}||C^{1/2}XC^{1/2} - C^{1/2}YC^{1/2}|| \leq \frac{a^2}{8}||C^{1/2}||^2||X-Y||.
	\end{align*}
	Thus $F(X)$ is a continuous map on $\Sym^{+}(\H)$. 	
	By Proposition \ref{proposition:compact-map-Sym-sqrt}, the map $g:\Sym^{+}(\H)\mapto \Sym^{+}(\H)$ defined by
	$g(X) = C^{1/2}XC^{1/2}$ is compact, so that each sequence $\{C^{1/2}X_nC^{1/2}, ||X_n|| \leq 1\}_{n \in \Nbb}$ contains
	a convergent subsequence $\{C^{1/2}X_{k(n)}C^{1/2}\}$ with 
	%\begin{align*}
	$\lim_{k(n) \approach \infty}||C^{1/2}X_{k(n)}C^{1/2} - Y^{*}Y||_{\HS} = 0$,
	%\end{align*}
	where $Y \in \HS(\H)$ is as defined in the proof of Proposition 
	\ref{proposition:compact-map-Sym-sqrt}. By Corollary \ref{corollary:inverse-square-root-sequence},
	\begin{align*}
	||F(X_{k(n)}) - (I+(I + a^2Y^{*}Y)^{1/2})^{-1}|| &\leq \frac{a^2}{8}||C^{1/2}X_{k(n)}C^{1/2}- Y^{*}Y|| 
	\\
	&\leq \frac{a^2}{8}||C^{1/2}X_{k(n)}C^{1/2}- Y^{*}Y||_{\HS} \approach 0
	\end{align*}
	as $k(n) \approach \infty$. Thus the set $\{F(X), ||X||\leq 1\} \subset \Sym^{+}(\H)$ is relatively compact, showing that $F$ is compact. \qed 
\end{proof}
\begin{lemma}
	\label{lemma:compact-map-quadratic}
	Let $C\in \Sym^{+}(\H)\cap \Tr(\H)$. 
	Let $a \in \R, a > 0$.
	The following map $F: \Sym^{+}(\H) \mapto \Lcal(\H)$ is compact
	\begin{equation}
	F(X) =(I+(I+a^2X^2)^{1/2})^{1/2}C^{1/2}. 
	\end{equation}
\end{lemma}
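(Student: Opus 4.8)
\textbf{Proposal for the proof of Lemma~\ref{lemma:compact-map-quadratic}.}
The plan is to factor $F$ through the compact map $F_1$ of Proposition~\ref{proposition:compact-map-Sym-sqrt}(i), so that the statement reduces to already established material. Set
\begin{align}
Z:\Sym^{+}(\H)\mapto\Sym^{+}(\H),\qquad Z(X)=I+(I+a^2X^2)^{1/2},
\end{align}
and let $F_1:\Sym^{+}(\H)\mapto\Lcal(\H)$, $F_1(W)=W^{1/2}C^{1/2}$, be the map from Proposition~\ref{proposition:compact-map-Sym-sqrt}. Since $a^2X^2\geq 0$ we have $I+a^2X^2\geq I>0$, so $(I+a^2X^2)^{1/2}$ is a well-defined positive bounded operator and $Z(X)\geq 2I$; in particular $Z$ is well-defined with values in $\Sym^{+}(\H)$, and $F_1(Z(X))=Z(X)^{1/2}C^{1/2}=(I+(I+a^2X^2)^{1/2})^{1/2}C^{1/2}=F(X)$. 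Thus $F=F_1\compose Z$, and it remains only to control $Z$.

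First I would check that $Z$ is continuous on $\Sym^{+}(\H)$ and maps bounded subsets to bounded subsets. Boundedness is immediate: $\|X\|\leq M$ forces $\|Z(X)\|\leq 1+(1+a^2M^2)^{1/2}$. For continuity, the map $X\mapsto X^2$ is Lipschitz on each ball $\{\|X\|\leq M\}$, since $\|X^2-Y^2\|\leq(\|X\|+\|Y\|)\|X-Y\|$; composing with $A\mapsto(I+A)^{1/2}$, which is H\"older-$\tfrac12$ continuous on positive bounded operators by Corollary~\ref{corollary:continuity-norm-square-root} (with $r=\tfrac12$), and then adding $I$, shows $Z$ is continuous. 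This step is routine and I do not expect any difficulty here.

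Finally, Proposition~\ref{proposition:compact-map-Sym-sqrt}(i) gives that $F_1$ is compact, i.e.\ continuous and mapping bounded sets to relatively compact sets (this uses $C^{1/2}\in\HS(\H)$, which holds since $C\in\Sym^{+}(\H)\cap\Tr(\H)$). Hence $F=F_1\compose Z$ is continuous, and for any bounded $B\subset\Sym^{+}(\H)$ the set $F(B)=F_1(Z(B))$ is the image under the compact map $F_1$ of the bounded set $Z(B)$, hence relatively compact in $\Lcal(\H)$; therefore $F$ is compact. The only substantive ingredient is Proposition~\ref{proposition:compact-map-Sym-sqrt} (whose proof carries the diagonal/Bolzano--Weierstrass extraction and the finite-rank truncation $C_N^{1/2}$ of $C^{1/2}$); once that is in hand, the present lemma is pure bookkeeping, so I anticipate no genuine obstacle — the main point to be careful about is simply recording the boundedness and continuity of $Z$ so that the composition argument applies.
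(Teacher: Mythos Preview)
Your proposal is correct and follows essentially the same route as the paper: both arguments reduce the lemma to Proposition~\ref{proposition:compact-map-Sym-sqrt} after checking that the inner map $X\mapsto I+(I+a^2X^2)^{1/2}$ (equivalently, its square root $g(X)$) is continuous and bounded on bounded sets. The only cosmetic difference is that the paper works with $g(X)=Z(X)^{1/2}$ directly, bounds $\|g(X)\|$ and $\|g(X)-g(Y)\|$ via two applications of Corollary~\ref{corollary:trace-class-square-root}, and then says ``as in the proof of Proposition~\ref{proposition:compact-map-Sym-sqrt}'' to extract convergent subsequences, whereas you factor $F=F_1\compose Z$ and invoke the proposition as a black box --- arguably a cleaner packaging of the same idea.
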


\begin{proof}
	Define the map $g:\Sym^{+}(\H) \mapto \Sym^{+}(\H)$ by $g(X) = (1+(1+a^2X^2)^{1/2})^{1/2}$.
	By the inequality $(1+a^2)^{1/2}\leq 1 +a$ for $a\in \R, a\geq 0$, 
	\begin{align*}
	||g(X)||\leq 1 +(1+a^2||X||^2)^{1/4} \leq 1 + (1+a||X||)^{1/2}\leq 2 + \sqrt{a} ||X||^{1/2}.
	\end{align*}
	Thus the set $\{g(X):||X||\leq 1\}$ is bounded, with 
	%\begin{align*}
	$\max_{||X||\leq 1}||g(X)|| \leq 2 +\sqrt{a}$.
	%\end{align*}
	Applying Corollary \ref{corollary:trace-class-square-root} twice, we obtain
	\begin{align*}
	||g(X) - g(Y)|| &= ||(I+(I+a^2X^2)^{1/2})^{1/2} - (I+(I+a^2Y^2)^{1/2})^{1/2}|| 
	\\
	&\leq \frac{1}{2}||(I+a^2X^2)^{1/2} - (I+a^2Y^2)^{1/2}||
	\\
	& \leq \frac{a^2}{4}||X^2 - Y^2|| \leq \frac{a^2}{4}(||X||+||Y||)||X-Y||.
	\end{align*}
	This shows that $g$ is continuous on $\Sym^{+}(\H)$. Hence $F$ is continuous on $\Sym^{+}(\H)$.
	%Proceeding 
	As in the proof of Proposition \ref{proposition:compact-map-Sym-sqrt}, for each sequence
	$\{F(X_n) = g(X_n)C^{1/2}, ||X_n||\leq 1\}$, there exists a subsequence $\{F(X_{k(n)})\}$ and 
	an operator $Z \in \HS(\H)$ such that
	%\begin{align*}
	$\lim_{k(n) \approach \infty}||F(X_{k(n)}) - Z||_{\HS} = 0$.
	%\end{align*}
	Thus the set $\{F(X): ||X||\leq 1\}$ is relatively compact, proving that 
	%the map 
	$F$ is compact.
	\qed
\end{proof}

\begin{lemma}
	\label{lemma:compact-map-sum-product}
	Let $E$ be a Banach algebra and $M \subset E$.
	Let $f,g:M \mapto E$ be compact. Then the sum and  product maps $h_1, h_2: M \mapto E$ defined by
	$h_1(X) = f(X) + g(X)$and $h_2(X) = f(X)g(X)$ are compact.
\end{lemma}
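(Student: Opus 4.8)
The plan is to verify the two defining properties of a compact map — continuity, and the property of sending bounded sets to relatively compact sets — for $h_1$ and $h_2$ separately, reducing everything to the corresponding properties of $f$ and $g$ together with submultiplicativity of the norm on the Banach algebra $E$.

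First I would dispatch continuity. For $h_1$ it is immediate: if $X_n \to X$ in $M$, then $f(X_n) \to f(X)$ and $g(X_n) \to g(X)$ by continuity of $f$ and $g$, hence $h_1(X_n) = f(X_n) + g(X_n) \to f(X) + g(X) = h_1(X)$. For $h_2$ I would use the elementary estimate
\[
\|f(X_n)g(X_n) - f(X)g(X)\| \le \|f(X_n)\|\,\|g(X_n) - g(X)\| + \|f(X_n) - f(X)\|\,\|g(X)\|,
\]
observing that $\{\|f(X_n)\|\}$ is bounded because $f(X_n)$ converges; both terms then tend to $0$, so $h_2$ is continuous.

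Next, for relative compactness of the images, I would argue sequentially. Let $B \subset M$ be bounded and let $\{X_n\}_{n \in \Nbb}$ be any sequence in $B$. Since $f$ is compact, $\overline{f(B)}$ is compact, so there is a subsequence $\{X_{n_k}\}$ with $f(X_{n_k}) \to a$ for some $a \in E$. Applying compactness of $g$ to the bounded sequence $\{X_{n_k}\}$, I extract a further subsequence $\{X_{n_{k_j}}\}$ with $g(X_{n_{k_j}}) \to b$ for some $b \in E$. Then $h_1(X_{n_{k_j}}) = f(X_{n_{k_j}}) + g(X_{n_{k_j}}) \to a + b$, and, using the same product estimate as above with the two convergent (hence bounded) factors, $h_2(X_{n_{k_j}}) = f(X_{n_{k_j}})g(X_{n_{k_j}}) \to ab$. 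Thus every sequence in $h_1(B)$ (resp. $h_2(B)$) has a convergent subsequence, so $h_1(B)$ and $h_2(B)$ are relatively compact. Combined with the continuity established above, this shows $h_1$ and $h_2$ are compact.

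The argument is entirely routine; the only point needing a modicum of care is the product map $h_2$, where one must invoke the Banach algebra structure via submultiplicativity $\|ab\| \le \|a\|\,\|b\|$ and the boundedness of convergent sequences to push convergence through multiplication. I do not anticipate any genuine obstacle.
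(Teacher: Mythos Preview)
Your proof is correct and follows essentially the same approach as the paper: both extract a subsequence on which $f$ converges, then a further subsequence on which $g$ converges, and use the same triangle-inequality/submultiplicativity estimate to push convergence through the product. The paper only writes out the product case and remarks that continuity follows analogously, while you treat both $h_1$ and $h_2$ and both continuity and relative compactness explicitly, but the substance is identical.
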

\begin{proof}
	Let us show that the product map is compact.
	%, the proof for the sum map being similar.
	Let $M_B$ be any bounded, non-empty subset of $M$.
	By assumption of compactness, each sequence $\{f(X_n), X_n \in M_B\}_{n \in \Nbb}$ contains a convergent subsequence
	$\{f(X_{n_1})\}$ with limit $Y_1$ in $E$.  Next, the sequence $\{g(X_{n_1})\}$ contains a convergent subsequence
	$\{g(X_{n_2})\}$ with limit $Y_2$ in $E$. We then have
	\begin{align*}
	||f(X_{n_2})g(X_{n_2}) - Y_1Y_2|| &\leq ||(f(X_{n_2})-Y_1)g(X_{n_2})|| + ||Y_1(g(X_{n_2}) - Y_2)||
	\\
	& \leq ||f(X_{n_2}) - Y_1||\;||g(X_{n_2})|| + ||Y_1||\;||g(X_{n_2}) - Y_2|| \approach 0
	\end{align*}
	as $n_2 \approach \infty$. Thus the set $\{h_2(X) = f(X)g(X):X \in M_B\}$ is relatively compact.
	An analogous argument shows that $h_2$ is continuous, hence $h_2$ is a compact map on $M$. \qed
\end{proof}
\begin{proposition}
	\label{proposition:continuous-fixedpoint-map}
	Let $C_i \geq 0$, $1 \leq i \leq N$, be fixed.
	%Define the following map
	The map 
	$\Fcal: \Sym^{+}(\H)\mapto \Sym^{+}(\H)$ as defined in Eq.\eqref{equation:map-fixedpoint-barycenter-sinkhorn} 
	%\begin{align}
	%\label{equation:map-fixedpoint-barycenter-sinkhorn-2}
	%\begin{aligned}
	%\Fcal(X) &= \left(I+ \left(I + \frac{16}{\ep^2}X^2\right)^{1/2} \right)^{1/2} \sum_{i=1}^Nw_i\left[C_i^{1/2}\left(I+\left(I + \frac{16}{\ep^2}C_i^{1/2}XC_i^{1/2}\right)^{1/2}\right)^{-1}C_i^{1/2}\right]
	%\nonumber
	%\\
	%& \quad \times \left(I+ \left(I + \frac{16}{\ep^2}X^2\right)^{1/2}\right)^{1/2}.
	%\end{aligned}
	%\end{align}
	%Then $\Fcal$ 
	is continuous in the operator $||\;||$ norm.
	%, with
	%\begin{equation}
	%\begin{aligned}
	%||\Fcal(X) - \Fcal(Y)|| &\leq 
	%\frac{4}{\ep^2} \left(\sum_{i=1}^Nw_i||C_i||\right)\left(2 + \frac{1}{\sqrt{\ep}}||X||^{1/2} + \frac{1}{\sqrt{\ep}}||Y||^{1/2}\right)(||X|| + ||Y||)||X - Y||
	%\\
	%& + \frac{8}{\ep^2}\left(\sum_{i=1}^Nw_i||C_i||^2\right)\left(1 + \frac{1}{\sqrt{\ep}}||X||^{1/2}\right)\left(1 + \frac{1}{\sqrt{\ep}}||Y||^{1/2}\right)||X-Y||.
	%\end{aligned}
	%\end{equation}
		In particular, if $0 \leq C_i \leq \gamma I$, $i=1, \ldots, N$, and $0 \leq X,Y \leq \gamma I$, then
	\begin{equation}
	\begin{aligned}
	||\Fcal(X)- \Fcal(Y)||  \leq \frac{8\gamma^2}{\ep^2}\left(1 + \sqrt{\frac{\gamma}{\ep}}\right)\left(3 + \sqrt{\frac{\gamma}{\ep}}\right)||X-Y||.
	\end{aligned}
	\end{equation}
\end{proposition}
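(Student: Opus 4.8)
The plan is to write $\Fcal$ as a composition of maps whose continuity and Lipschitz behaviour are already controlled by the preceding lemmas. Put $c_{\ep} = \frac{4}{\ep}$ and, for $X \in \Sym^{+}(\H)$,
\begin{align*}
g(X) &= \left(I + \left(I + c_{\ep}^2 X^2\right)^{1/2}\right)^{1/2}, \qquad F_i(X) = \left(I + \left(I + c_{\ep}^2 C_i^{1/2}XC_i^{1/2}\right)^{1/2}\right)^{-1},
\end{align*}
$h_i(X) = C_i^{1/2}F_i(X)C_i^{1/2}$, and $H(X) = \sum_{i=1}^N w_i h_i(X)$, so that $\Fcal(X) = g(X)\,H(X)\,g(X)$; since $g(X)\geq 0$ and $H(X)\geq 0$ this also records that $\Fcal(X) \in \Sym^{+}(\H)$. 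It then suffices to prove that $g$, $h_i$ and $H$ are continuous and to collect, on the bounded sets relevant to the statement, norm bounds and Lipschitz constants for each; continuity of $\Fcal$ follows because continuity is preserved under composition, finite sums and products.

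First I would record the block estimates. For $g$: exactly as established in the proof of Lemma~\ref{lemma:compact-map-quadratic} (using $(1+a^2)^{1/2}\leq 1+a$ and two applications of the operator square-root Lipschitz bound, cf.\ Corollaries~\ref{corollary:trace-class-square-root} and \ref{corollary:continuity-norm-square-root}), one has $\|g(X)\| \leq 2 + \sqrt{c_{\ep}}\,\|X\|^{1/2}$ and
\begin{align*}
\|g(X)-g(Y)\| \leq \frac{1}{2}\bigl\|(I+c_{\ep}^2X^2)^{1/2}-(I+c_{\ep}^2Y^2)^{1/2}\bigr\| \leq \frac{c_{\ep}^2}{4}\bigl(\|X\|+\|Y\|\bigr)\|X-Y\|.
\end{align*}
For the blocks $h_i$: since $I + (I+c_{\ep}^2C_i^{1/2}XC_i^{1/2})^{1/2} \geq 2I$, we get $0 \leq F_i(X) \leq \frac12 I$, hence $0 \leq h_i(X) \leq \frac12 C_i$ and $0 \leq H(X) \leq \frac12 \bigl(\max_i\|C_i\|\bigr)I$; and by Corollary~\ref{corollary:inverse-square-root-sequence} (applied with $A = c_{\ep}^2C_i^{1/2}XC_i^{1/2}$, $B = c_{\ep}^2C_i^{1/2}YC_i^{1/2}$, $p=\infty$),
\begin{align*}
\|F_i(X)-F_i(Y)\| \leq \frac{c_{\ep}^2}{8}\bigl\|C_i^{1/2}(X-Y)C_i^{1/2}\bigr\| \leq \frac{c_{\ep}^2}{8}\|C_i\|\,\|X-Y\|,
\end{align*}
so $\|h_i(X)-h_i(Y)\| \leq \|C_i\|\,\|F_i(X)-F_i(Y)\| \leq \frac{c_{\ep}^2}{8}\|C_i\|^2\|X-Y\|$ and $\|H(X)-H(Y)\| \leq \frac{c_{\ep}^2}{8}\bigl(\max_i\|C_i\|\bigr)^2\|X-Y\|$. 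These are local Lipschitz bounds, so $g$, $h_i$, $H$, and hence $\Fcal$, are continuous on $\Sym^{+}(\H)$.

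To get the explicit estimate I would combine the blocks by the three-term telescoping
\begin{align*}
\Fcal(X)-\Fcal(Y) = \bigl(g(X)-g(Y)\bigr)H(X)g(X) + g(Y)\bigl(H(X)-H(Y)\bigr)g(X) + g(Y)H(Y)\bigl(g(X)-g(Y)\bigr),
\end{align*}
which by submultiplicativity gives $\|\Fcal(X)-\Fcal(Y)\| \leq 2\,M_H M_g\,\|g(X)-g(Y)\| + M_g^2\,\|H(X)-H(Y)\|$, where $M_g$ and $M_H$ are uniform bounds for $\|g\|$ and $\|H\|$ on the set considered. Specializing to $0 \leq C_i \leq \gamma I$ and $0 \leq X, Y \leq \gamma I$ gives $\max_i\|C_i\|\leq\gamma$, $\|X\|,\|Y\|\leq\gamma$, hence $M_g \leq 2+\sqrt{c_{\ep}\gamma} = 2(1+\sqrt{\gamma/\ep})$, $M_H \leq \gamma/2$, $\|g(X)-g(Y)\| \leq \frac{c_{\ep}^2\gamma}{2}\|X-Y\| = \frac{8\gamma}{\ep^2}\|X-Y\|$, and $\|H(X)-H(Y)\| \leq \frac{c_{\ep}^2\gamma^2}{8}\|X-Y\| = \frac{2\gamma^2}{\ep^2}\|X-Y\|$. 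Substituting and collecting terms,
\begin{align*}
\|\Fcal(X)-\Fcal(Y)\| &\leq \left[\frac{16\gamma^2}{\ep^2}\bigl(1+\sqrt{\gamma/\ep}\bigr) + \frac{8\gamma^2}{\ep^2}\bigl(1+\sqrt{\gamma/\ep}\bigr)^2\right]\|X-Y\|
\\
&= \frac{8\gamma^2}{\ep^2}\bigl(1+\sqrt{\gamma/\ep}\bigr)\bigl(3+\sqrt{\gamma/\ep}\bigr)\|X-Y\|,
\end{align*}
the claimed bound. Continuity at an arbitrary $X_0 \in \Sym^{+}(\H)$ then follows from the same computation on the ball $\{X \in \Sym^{+}(\H): \|X\|\leq \|X_0\|+1\}$. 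There is no conceptual obstacle; the work is bookkeeping. The two points needing care are that the telescoping be carried out in three steps — $g(X)$ occurring on both sides of $H$ forces the uniform bounds $M_g, M_H$ over $\{0\leq X\leq\gamma I\}$ — and that the numerical constants be tracked precisely after inserting $c_{\ep}=4/\ep$ so that they collapse exactly to $\frac{8\gamma^2}{\ep^2}(1+\sqrt{\gamma/\ep})(3+\sqrt{\gamma/\ep})$.
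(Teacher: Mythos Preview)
Your proof is correct and follows essentially the same approach as the paper: the same decomposition $\Fcal(X) = g(X)H(X)g(X)$, the same three-term telescoping, and the same block estimates (norm bound and Lipschitz constant for $g$ via two square-root inequalities, bound $\|H\|\leq\gamma/2$, and Lipschitz estimate for $h_i$ via Corollary~\ref{corollary:inverse-square-root-sequence}). The only cosmetic difference is that the paper writes out the general inequality in terms of $\|X\|,\|Y\|,\|C_i\|$ before specializing to the $\gamma$-bounded case, whereas you specialize earlier; the resulting constant is identical.
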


\begin{proof}
	Let $c_{\ep} = \frac{4}{\ep}$ and $g(X) = \left(I+ \left(I + c_{\ep}^2X^2\right)^{1/2}\right)^{1/2}$
	and for $1 \leq i \leq N$, define $h_i(X) =C_i^{1/2}\left(I+\left(I + c_{\ep}^2C_i^{1/2}XC_i^{1/2}\right)^{1/2}\right)^{-1}C_i^{1/2}$, then
	\begin{align}
	\label{equation:continuous-inequality-0}
	%\begin{aligned}
	&||\Fcal(X) - \Fcal(Y)|| = ||g(X)\sum_{i=1}^Nw_ih_i(X)g(X) - g(Y)\sum_{i=1}^Nw_ih_i(Y)g(Y)||
	\nonumber
	\\
	&\leq ||g(X)\sum_{i=1}^Nw_ih_i(X)||\;||g(X) - g(Y)|| + ||g(X)||\sum_{i=1}^Nw_i||h_i(X) - h_i(Y)||\;||g(Y)||
	\nonumber
	\\
	& + ||g(X) - g(Y)||\;||\sum_{i=1}^Nw_ih_i(Y)g(Y)||.
	%\end{aligned}
	\end{align}
	%As in Lemma \ref{lemma:compact-map-quadratic}, 
	Using the inequality $(1+a^2)^{1/2} \leq 1+a$ $\forall a \geq 0$, we obtain
	\begin{equation}
	\label{equation:continuous-inequality-1}
	\begin{aligned}
	||g(X)|| \leq 2 + \frac{2}{\sqrt{\ep}}||X||^{1/2}, \;\;\;
	||g(Y)||  \leq 2 + \frac{2}{\sqrt{\ep}}||Y||^{1/2}.
	\end{aligned}
	\end{equation}
	%As in Lemma \ref{lemma:compact-map-quadratic},
	Applying Corollary \ref{corollary:trace-class-square-root} twice gives
	\begin{equation}
	\label{equation:continuous-inequality-3}
	\begin{aligned}
	&||g(X) - g(Y)|| \leq \frac{4}{\ep^2}(||X|| + ||Y||)||X - Y||.
	\end{aligned}
	\end{equation} 
	%Let $h_i(X) =C_i^{1/2}\left(I+\left(I + c_{\ep}^2C_i^{1/2}XC_i^{1/2}\right)^{1/2}\right)^{-1}C_i^{1/2}$, $1 \leq i \leq N$, then
For $\sum_{i=1}^Nw_ih_i(X)$,
	\begin{align}
	\label{equation:continuous-inequality-2}
	%\begin{aligned}
	||\sum_{i=1}^Nw_ih_i(X)|| &= \left\|\sum_{i=1}^Nw_i\left[C_i^{1/2}\left(I+\left(I + c_{\ep}^2C_i^{1/2}XC_i^{1/2}\right)^{1/2}\right)^{-1}C_i^{1/2}\right]\right\|
	\nonumber
	\\
	&\leq \frac{1}{2}\sum_{i=1}^Nw_i||C_i||.
	%\end{aligned}
	\end{align}
	
	 By Corollary \ref{corollary:continuity-norm-square-root}, 
	 {\small
	\begin{align}
	\label{equation:continuous-inequality-4}
	%\begin{aligned}
	%&C_i^{1/2}\left(I+\left(I + c_{\ep}^2C_i^{1/2}X_0C_i^{1/2}\right)^{1/2}\right)^{-1}C_i^{1/2}
	%- C_i^{1/2}\left(I+\left(I + c_{\ep}^2C_i^{1/2}Y_0C_i^{1/2}\right)^{1/2}\right)^{-1}C_i^{1/2}
	%\\
	&||h_i(X) - h_i(Y)||
	\nonumber
	\\
	&= \left\|C_i^{1/2}\left[\left(I+\left(I + c_{\ep}^2C_i^{1/2}XC_i^{1/2}\right)^{1/2}\right)^{-1}
	-\left(I+\left(I + c_{\ep}^2C_i^{1/2}YC_i^{1/2}\right)^{1/2}\right)^{-1}
	\right] C_i^{1/2}\right\|
	\nonumber
	\\
	&\leq \frac{1}{8}{c_{\ep}^2}||C_i^{1/2}||^2\;||C_i^{1/2}XC_i^{1/2}-C_i^{1/2}YC_i^{1/2}||
	%\\
	%& 
	\leq \frac{2}{\ep^2}||C_i||^2||X - Y||.
	%\end{aligned}
	\end{align}
}
	Combining Eqs.\eqref{equation:continuous-inequality-0}, \eqref{equation:continuous-inequality-1}, \eqref{equation:continuous-inequality-2},\eqref{equation:continuous-inequality-3}, and \eqref{equation:continuous-inequality-4} gives
{\small
	\begin{equation*}
	\begin{aligned}
	&||\Fcal(X)- \Fcal(Y)||  \leq \left(2 + \frac{2}{\sqrt{\ep}}||X||^{1/2}\right)\left(\frac{1}{2}\sum_{i=1}^Nw_i||C_i||\right)\frac{4}{\ep^2}(||X|| + ||Y||)||X - Y||
	\\
	& \quad+ \left(2 + \frac{2}{\sqrt{\ep}}||X||^{1/2}\right)\left(2 + \frac{2}{\sqrt{\ep}}||Y||^{1/2}\right)\left(\frac{2}{\ep^2}\sum_{i=1}^Nw_i||C_i||^2||X - Y||\right)
	\\
	&\quad +\left(2 + \frac{2}{\sqrt{\ep}}||Y||^{1/2}\right)\left(\frac{1}{2}\sum_{i=1}^Nw_i||C_i||\right)\frac{4}{\ep^2}(||X|| + ||Y||)||X - Y||
	\\
	& =\frac{4}{\ep^2} \left(\sum_{i=1}^Nw_i||C_i||\right)\left(2 + \frac{1}{\sqrt{\ep}}||X||^{1/2} + \frac{1}{\sqrt{\ep}}||Y||^{1/2}\right)(||X|| + ||Y||)||X - Y||
	\\
	&\quad + \frac{8}{\ep^2}\left(\sum_{i=1}^Nw_i||C_i||^2\right)\left(1 + \frac{1}{\sqrt{\ep}}||X||^{1/2}\right)\left(1 + \frac{1}{\sqrt{\ep}}||Y||^{1/2}\right)||X-Y||.
	\end{aligned}
	\end{equation*}
}
	This shows that $\Fcal$ is continuous in the operator norm $||\;||$.
	In particular, for $0 \le X\leq \gamma I$, $0 \leq Y \leq \gamma I$, 
	%so that $||X||\leq \gamma, ||Y||\leq \gamma$,
	%\begin{equation*}
	%\begin{aligned}
	%||\Fcal(X)- \Fcal(Y)||  &\leq \frac{16}{\ep^2} \left(\sum_{i=1}^Nw_i||C_i||\right)\left(1 + \sqrt{\frac{\gamma}{\ep}}\right)\gamma||X - Y||
	%\\
	%& + \frac{8}{\ep^2}\left(\sum_{i=1}^Nw_i||C_i||^2\right)\left(1 + \sqrt{\frac{\gamma}{\ep}}\right)^2||X-Y||.
	%\end{aligned}
	%\end{equation*}
	%If, in addition, 
	and $0 \leq C_i \leq \gamma I$, $1 \leq i \leq N$,
	\begin{equation*}
	\begin{aligned}
	||\Fcal(X)- \Fcal(Y)||  &\leq \frac{16\gamma^2}{\ep^2} \left(1 + \sqrt{\frac{\gamma}{\ep}}\right)||X - Y||
	%\\
	%& 
	+ \frac{8\gamma^2}{\ep^2}\left(1 + \sqrt{\frac{\gamma}{\ep}}\right)^2||X-Y||
	\\
	& = \frac{8\gamma^2}{\ep^2}\left(1 + \sqrt{\frac{\gamma}{\ep}}\right)\left(3 + \sqrt{\frac{\gamma}{\ep}}\right)||X-Y||.
	\end{aligned}
	\end{equation*}
	This completes the proof. \qed
\end{proof}

\begin{proposition}
	\label{proposition:compact-fixedpoint-map}
	Let $C_i \in \Sym^{+}(\H)\cap \Tr(\H)$, $1 \leq i \leq N$.
	Consider the map $\Fcal: \Sym^{+}(\H) \mapto \Sym^{+}(\H)$ as defined in
	Eq.\eqref{equation:map-fixedpoint-barycenter-sinkhorn}.
	% Eq.\eqref{equation:map-fixedpoint-barycenter-sinkhorn-2} in Proposition 
	%\ref{proposition:continuous-fixedpoint-map}. 
	%Define the following map $F: \Sym^{+}(\H) \mapto \Sym^{+}(\H)$ by
	%\begin{equation}
	%\begin{aligned}
	%\Fcal(X) &= \left(I+ \left(I + \frac{16}{\ep^2}X^2\right)^{1/2} \right)^{1/2} \sum_{i=1}^Nw_i\left[C_i^{1/2}\left(I+\left(I + \frac{16}{\ep^2}C_i^{1/2}XC_i^{1/2}\right)^{1/2}\right)^{-1}C_i^{1/2}\right]
	%\\
	%& \quad \times \left(I+ \left(I + \frac{16}{\ep^2}X^2\right)^{1/2}\right)^{1/2}.
	%\end{aligned}
	%\end{equation}
	Then $\Fcal$ is compact.
\end{proposition}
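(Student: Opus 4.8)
The plan is to recognize $\Fcal$ as a finite weighted sum of triple products of maps that the preceding results already certify as compact, and then to close the argument by Lemma \ref{lemma:compact-map-sum-product} on the Banach algebra $\Lcal(\H)$. Writing $c_{\ep} = \frac{4}{\ep}$ and $g(X) = \bigl(I + (I + c_{\ep}^2 X^2)^{1/2}\bigr)^{1/2}$, I would first regroup the factors in Eq.\eqref{equation:map-fixedpoint-barycenter-sinkhorn} as
\begin{equation}
\Fcal(X) = \sum_{i=1}^N w_i\, A_i(X)\, B_i(X)\, A_i(X)^{*},
\end{equation}
where $A_i(X) = g(X)C_i^{1/2}$ and $B_i(X) = \bigl(I + (I + c_{\ep}^2 C_i^{1/2}XC_i^{1/2})^{1/2}\bigr)^{-1}$, using that $g(X)$ and $C_i^{1/2}$ are self-adjoint so that $A_i(X)^{*} = C_i^{1/2}g(X)$. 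A point worth flagging at the outset: $g(X)$ on its own is not a compact map (for instance $g(0) = \sqrt{2}\,I$), so the trace-class factor $C_i^{1/2}$ has to stay attached to it; this is precisely why the right factorization is $A_i B_i A_i^{*}$ rather than $g \cdot (\textrm{sum}) \cdot g$.

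Next I would verify compactness of each building block. By Lemma \ref{lemma:compact-map-quadratic}, applied with $a = c_{\ep}$ and $C = C_i \in \Sym^{+}(\H)\cap\Tr(\H)$, the map $X \mapsto A_i(X) = g(X)C_i^{1/2}$ is compact on $\Sym^{+}(\H)$. For the adjoint factor I would use that $T \mapsto T^{*}$ is an isometric isomorphism of the real Banach space $\Lcal(\H)$, hence a homeomorphism that preserves relative compactness, so $X \mapsto A_i(X)^{*} = C_i^{1/2}g(X)$ is compact as well. By Corollary \ref{corollary:compact-map-inverse-square-root}, again with $a = c_{\ep}$ and $C = C_i$, the map $X \mapsto B_i(X)$ is compact.

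Finally, since $\Lcal(\H)$ is a Banach algebra, Lemma \ref{lemma:compact-map-sum-product} yields that the product $A_i(X)B_i(X)A_i(X)^{*}$ of three compact maps is compact for each $i$, that each scalar multiple $w_i A_i(X)B_i(X)A_i(X)^{*}$ is compact, and that the finite sum $\Fcal$ is compact; continuity of $\Fcal$ is in any case already recorded in Proposition \ref{proposition:continuous-fixedpoint-map}. Because the statement is essentially an assembly of the earlier lemmas, I do not expect a genuine obstacle. The only real care needed is the bookkeeping: checking that each factor has exactly the form covered by Lemma \ref{lemma:compact-map-quadratic} and Corollary \ref{corollary:compact-map-inverse-square-root}, and that the intermediate products are formed in $\Lcal(\H)$ (not in $\Sym(\H)$, which is not closed under operator multiplication) before the weighted sum returns to $\Sym^{+}(\H)$.
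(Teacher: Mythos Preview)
Your proposal is correct and follows essentially the same route as the paper: the paper likewise writes $\Fcal(X)=\sum_{i=1}^N w_i\,g_i(X)h_i(X)l_i(X)$ with $g_i(X)=g(X)C_i^{1/2}$, $h_i(X)=\bigl(I+(I+c_{\ep}^2 C_i^{1/2}XC_i^{1/2})^{1/2}\bigr)^{-1}$, and $l_i(X)=C_i^{1/2}g(X)$, invokes Lemma~\ref{lemma:compact-map-quadratic}, Corollary~\ref{corollary:compact-map-inverse-square-root}, and Lemma~\ref{lemma:compact-map-sum-product}, and cites Proposition~\ref{proposition:continuous-fixedpoint-map} for continuity. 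Your use of the adjoint to deduce compactness of $A_i(X)^{*}=l_i(X)$ is a perfectly good substitute for the paper's ``similarly'' and your remark about why $C_i^{1/2}$ must stay attached to $g(X)$ is a nice clarification.
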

\begin{proof}
	[\textbf{of Proposition \ref{proposition:compact-fixedpoint-map}}]
	By Proposition \ref{proposition:continuous-fixedpoint-map}, $\Fcal$ is continuous in the operator norm $||\;||$.	
	Let $g_i(X) = \left(I+ \left(I + \frac{16}{\ep^2}X^2\right)^{1/2} \right)^{1/2}C_i^{1/2}$, $1 \leq i \leq N$,
	then $g_i$ is compact by Lemma \ref{lemma:compact-map-quadratic}
	and similarly so is $l_i(X) = C_i^{1/2}\left(I+ \left(I + \frac{16}{\ep^2}X^2\right)^{1/2} \right)^{1/2}$, $1 \leq i \leq N$.
	Let $h_i(X) = \left(I+\left(I + \frac{16}{\ep^2}C_i^{1/2}XC_i^{1/2}\right)^{1/2}\right)^{-1}$, $1 \leq i \leq N$,
	then $h_i$ is compact by Corollary \ref{corollary:compact-map-inverse-square-root}.
	Then we have the summation
	% of product maps
	%\begin{align*}
	$\Fcal(X) = \sum_{i=1}^Nw_ig_i(X)h_i(X)l_i(X)$,
	%\end{align*}
	%Thus $\Fcal$ 
	which is compact by Lemma \ref{lemma:compact-map-sum-product}.\qed
\end{proof}
\begin{lemma}
	\label{lemma:XFX-upperbound}
	Let $\gamma \in \R,\gamma > 0$ be fixed.
	Assume that $0 \leq C_i \leq \gamma I$, $1 \leq i \leq N$.
	Consider the map $\Fcal: \Sym^{+}(\H) \mapto \Sym^{+}(\H)$ as defined in
	Eq.\eqref{equation:map-fixedpoint-barycenter-sinkhorn}.
	% Eq.\eqref{equation:map-fixedpoint-barycenter-sinkhorn-2} in Proposition 
	%\ref{proposition:continuous-fixedpoint-map}. 
	%Consider the function $f: \Sym^{+}(\H) \mapto \Sym^{+}(\H)$ defined by
	%\begin{equation}
	%\begin{aligned}
	%\Fcal(X) &= \left(I+ \left(I + \frac{16}{\ep^2}X^2\right)^{1/2} \right)^{1/2} \sum_{i=1}^Nw_i\left[C_i^{1/2}\left(I+\left(I + \frac{16}{\ep^2}C_i^{1/2}XC_i^{1/2}\right)^{1/2}\right)^{-1}C_i^{1/2}\right]
	%\\
	%& \times \left(I+ \left(I + \frac{16}{\ep^2}X^2\right)^{1/2} \right)^{1/2}.
	%\end{aligned}
	%\end{equation}
	%The following holds
	Then
	\begin{equation}
	0 \leq X \leq \gamma I \imply 0 \leq X\Fcal(X)X \leq \gamma X^2.
	\end{equation}
\end{lemma}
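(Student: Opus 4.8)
The plan is to split into the (easy) lower bound and the (real) upper bound, and for the latter to reduce the operator inequality to a one‑variable inequality via the functional calculus of $X$, exploiting the exact identity of Lemma~\ref{lemma:adjoint-switch-CX} to produce a cancellation. Throughout, write $c_{\ep}=\frac{4}{\ep}$, $G(X)=\bigl(I+(I+c_{\ep}^2X^2)^{1/2}\bigr)^{1/2}$, $R_i=\bigl(I+(I+c_{\ep}^2C_i^{1/2}XC_i^{1/2})^{1/2}\bigr)^{-1}$, and $H(X)=\sum_{i=1}^Nw_iC_i^{1/2}R_iC_i^{1/2}$, so that $\Fcal(X)=G(X)H(X)G(X)$. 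The crucial structural fact is that $G(X)$ (and likewise $X^{1/2}$ and any function of $X$) commutes with $X$. For the lower bound: since the operator inside $R_i$ is $\ge 2I>0$, $R_i$ is well-defined and $R_i\ge 0$, hence $C_i^{1/2}R_iC_i^{1/2}\ge 0$, $H(X)\ge 0$, $\Fcal(X)=G(X)H(X)G(X)\ge 0$, and conjugating by $X=X^{*}$ gives $X\Fcal(X)X\ge 0$.

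For the upper bound I would first use that $G(X)$ commutes with $X$ to write
\[
X\Fcal(X)X=G(X)\Bigl(\sum_{i=1}^Nw_i\,X\,C_i^{1/2}R_iC_i^{1/2}\,X\Bigr)G(X)
=G(X)\Bigl(\sum_{i=1}^Nw_i\,X^{1/2}L_iX^{1/2}\Bigr)G(X),
\]
where, applying Lemma~\ref{lemma:adjoint-switch-CX} with $a=c_{\ep}$ and $C=C_i$ (legitimate since $C_i\in\Tr(\H)$ is compact) to $X^{1/2}C_i^{1/2}R_iC_i^{1/2}X^{1/2}$, one has $L_i=\frac{1}{c_{\ep}^2}\bigl(-I+(I+c_{\ep}^2X^{1/2}C_iX^{1/2})^{1/2}\bigr)$; note this uses only $X=X^{1/2}X^{1/2}$, not invertibility of $X$. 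Next, since $C_i\le\gamma I$, monotonicity of $T\mapsto T^{*}AT$ (Lemma~\ref{lemma:operator-monotone-quadratic}) gives $X^{1/2}C_iX^{1/2}\le\gamma X$, and operator monotonicity of the square root gives $(I+c_{\ep}^2X^{1/2}C_iX^{1/2})^{1/2}\le(I+c_{\ep}^2\gamma X)^{1/2}$ for each $i$; averaging against the weights ($\sum w_i=1$) yields $\sum_i w_iL_i\le M:=\frac{1}{c_{\ep}^2}\bigl(-I+(I+c_{\ep}^2\gamma X)^{1/2}\bigr)$. Conjugating this by $X^{1/2}$ and then by $G(X)$ (both order-preserving), and using that $X$, $M$, $G(X)$ are mutually commuting functions of $X$, gives
\[
X\Fcal(X)X\ \le\ G(X)\,X^{1/2}MX^{1/2}\,G(X)\ =\ X\,M\,G(X)^2 .
\]

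Finally, $XMG(X)^2=\theta(X)$ and $\gamma X^2=\rho(X)$ are functions of $X$ with $\theta(x)=\frac{x}{c_{\ep}^2}\bigl(-1+(1+c_{\ep}^2\gamma x)^{1/2}\bigr)\bigl(1+(1+c_{\ep}^2x^2)^{1/2}\bigr)$ and $\rho(x)=\gamma x^2$. Using $-1+(1+c_{\ep}^2\gamma x)^{1/2}=c_{\ep}^2\gamma x/\bigl(1+(1+c_{\ep}^2\gamma x)^{1/2}\bigr)$ one finds $\theta(x)=\gamma x^2\cdot\frac{1+(1+c_{\ep}^2x^2)^{1/2}}{1+(1+c_{\ep}^2\gamma x)^{1/2}}$, so $\theta(x)\le\rho(x)$ on $[0,\gamma]$ is equivalent to $x^2\le\gamma x$, i.e. to $x\le\gamma$; since $0\le X\le\gamma I$ forces $\sigma(X)\subseteq[0,\gamma]$, functional calculus gives $XMG(X)^2\le\gamma X^2$, hence $X\Fcal(X)X\le\gamma X^2$.

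The main obstacle is precisely the sharpness: the naive bound $C_i^{1/2}R_iC_i^{1/2}\le\frac12 C_i$ is true but too wasteful, yielding only $X\Fcal(X)X\le\frac{\gamma}{2}\bigl(1+(1+c_{\ep}^2\gamma^2)^{1/2}\bigr)X^2$. One genuinely needs the \emph{exact} identity of Lemma~\ref{lemma:adjoint-switch-CX} so that the telescoping $-1+(1+c_{\ep}^2\gamma x)^{1/2}=c_{\ep}^2\gamma x/\bigl(1+(1+c_{\ep}^2\gamma x)^{1/2}\bigr)$ restores the extra factor $x$ that matches $\gamma x^2$. The remaining care is purely bookkeeping: verifying at each step which operators are functions of $X$ (hence commute), so that the order relations survive the successive conjugations by $X^{1/2}$ and $G(X)$.
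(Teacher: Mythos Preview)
Your argument is correct. Both your proof and the paper's rely on the same two ingredients—Lemma~\ref{lemma:adjoint-switch-CX} and operator monotonicity of the square root—but deploy them in a different order. The paper first uses $X\le\gamma I\Rightarrow X^2\le\gamma X$ \emph{inside} the conjugation by $C_i^{1/2}$ to replace $R_i$ by $\bigl(I+(I+\tfrac{c_{\ep}^2}{\gamma}C_i^{1/2}X^2C_i^{1/2})^{1/2}\bigr)^{-1}$, then applies Lemma~\ref{lemma:adjoint-switch-CX} with the full $X$ on each side to obtain $\tfrac{\gamma}{c_{\ep}^2}\bigl[-I+(I+\tfrac{c_{\ep}^2}{\gamma}XC_iX)^{1/2}\bigr]$, and finally uses $C_i\le\gamma I$ to land exactly on $\gamma X^2\bigl(I+(I+c_{\ep}^2X^2)^{1/2}\bigr)^{-1}$, which cancels against $G(X)^2$. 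You instead switch first (with $X^{1/2}$) to expose $L_i$ as a function of $X^{1/2}C_iX^{1/2}$, bound once with $C_i\le\gamma I$, and finish with a clean scalar inequality $\theta(x)\le\gamma x^2$ on $[0,\gamma]$ via functional calculus. Your route is arguably more transparent at the endgame, while the paper's produces an exact operator cancellation without invoking spectral calculus; the substance is the same.
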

\begin{proof}
	Since $0 \leq X \leq \gamma I$, by Lemma \ref{lemma:operator-monotone-quadratic},
	\begin{align*}
	&0 \leq X \leq \gamma I \imply X^2 = X^{1/2}(X)X^{1/2}\leq \gamma X \imply c_{\ep}^2C_i^{1/2}X^2C_i^{1/2} \leq \gamma c_{\ep}^2C_i^{1/2}XC_i^{1/2} 
	\\
	&\equivalent \gamma I + c_{\ep}^2C_i^{1/2}X^2C_i^{1/2}\leq \gamma
	\left(I + c_{\ep}^2C_i^{1/2}XC_i^{1/2}\right)
	\\
	& \imply \left(I + \frac{1}{\gamma} c_{\ep}^2C_i^{1/2}X^2C_i^{1/2}\right)^{1/2} \leq \left(I + c_{\ep}^2C_i^{1/2}XC_i^{1/2}\right)^{1/2}
	\\
	& \equivalent I + \left(I + \frac{1}{\gamma} c_{\ep}^2C_i^{1/2}X^2C_i^{1/2}\right)^{1/2}\leq I + \left(I + c_{\ep}^2C_i^{1/2}XC_i^{1/2}\right)^{1/2}
	\\
	& \equivalent \left(I + \left(I + c_{\ep}^2C_i^{1/2}XC_i^{1/2}\right)^{1/2}\right)^{-1} \leq 
	\left(I + \left(I + \frac{1}{\gamma} c_{\ep}^2C_i^{1/2}X^2C_i^{1/2}\right)^{1/2}\right)^{-1}.
	\end{align*}
	By Lemma \ref{lemma:operator-monotone-quadratic}, pre- and post-multiplying by $C_i^{1/2}$ gives
	{\small
	\begin{align*}
	& \imply C_i^{1/2}\left(I + \left(I + c_{\ep}^2C_i^{1/2}XC_i^{1/2}\right)^{1/2}\right)^{-1}C_i^{1/2} 
	&\leq 
	C_i^{1/2}\left(I + \left(I + \frac{c_{\ep}^2}{\gamma} C_i^{1/2}X^2C_i^{1/2}\right)^{1/2}\right)^{-1}C_i^{1/2}.
	\end{align*}
}
		Once again applying Lemma \ref{lemma:operator-monotone-quadratic}, pre- and post-multiplying by $X$ gives
	{\small
	\begin{align*}
	&XC_i^{1/2}\left(I + \left(I + c_{\ep}^2C_i^{1/2}XC_i^{1/2}\right)^{1/2}\right)^{-1}C_i^{1/2}X
	\\
	 \leq 
	&XC_i^{1/2}\left(I + \left(I + \frac{c_{\ep}^2}{\gamma} C_i^{1/2}X^2C_i^{1/2}\right)^{1/2}\right)^{-1}C_i^{1/2}X
	%\\ 
	= \frac{\gamma}{c_{\ep}^2}\left[-I + \left(I + \frac{c_{\ep}^2}{\gamma}XC_iX\right)^{1/2}\right],
	\end{align*}
}
	where the last expression follows from Lemma \ref{lemma:adjoint-switch-CX}. Since $0 \leq C_i \leq \gamma I$, we have
	$XC_iX \leq X(\gamma I) X = \gamma X^2$ by Lemma \ref{lemma:operator-monotone-quadratic}. Thus,
	\begin{align*}
	XC_i^{1/2}\left(I + \left(I + c_{\ep}^2C_i^{1/2}XC_i^{1/2}\right)^{1/2}\right)^{-1}C_i^{1/2}X &\leq \frac{\gamma}{c_{\ep}^2}\left[-I + \left(I + c_{\ep}^2X^2\right)^{1/2}\right]
	\\
	& = \gamma X^2\left[I + \left(I + c_{\ep}^2X^2\right)^{1/2}\right]^{-1}.
	\end{align*}
	By Lemma \ref{lemma:operator-monotone-quadratic},
	{\small
	\begin{align*}
	&X\Fcal(X)X = \left(I+ \left(I + c_{\ep}^2X^2\right)^{1/2} \right)^{1/2}
	\\
	&\times \sum_{i=1}^Nw_i\left[XC_i^{1/2}\left(I+\left(I + c_{\ep}^2C_i^{1/2}XC_i^{1/2}\right)^{1/2}\right)^{-1}C_i^{1/2}X\right]
	\left(I+ \left(I + c_{\ep}^2X^2\right)^{1/2} \right)^{1/2}
	\\
	& \leq \left(I+ \left(I + c_{\ep}^2X^2\right)^{1/2} \right)^{1/2} \sum_{i=1}^N w_i \gamma 
	X^2\left[I + \left(I + c_{\ep}^2X^2\right)^{1/2}\right]^{-1}
	\left(I+ \left(I + c_{\ep}^2X^2\right)^{1/2} \right)^{1/2} 
	\\
	& = 
	%\sum_{i=1}^Nw_i\gamma X^2 = 
	\gamma X^2.
	\end{align*}
}
	This completes the proof. \qed
\end{proof}

\begin{proposition}
	\label{proposition:FX-upperbound}
	Let $\gamma \in \R,\gamma > 0$ be fixed.
	Assume that $0 \leq C_i \leq \gamma I$, $1 \leq i \leq N$.
	Consider the map $\Fcal: \Sym^{+}(\H) \mapto \Sym^{+}(\H)$ as defined in
	Eq.\eqref{equation:map-fixedpoint-barycenter-sinkhorn}.
	% Eq.\eqref{equation:map-fixedpoint-barycenter-sinkhorn-2} in Proposition 
	%\ref{proposition:continuous-fixedpoint-map}.
	% Consider
	%the function $\Fcal:\Sym^{+}(\H) \mapto \Sym^{+}(\H)$ defined by
	%\begin{equation}
	%\begin{aligned}
	%\Fcal(X) &= \left(I+ \left(I + \frac{16}{\ep^2}X^2\right)^{1/2} \right)^{1/2} \sum_{i=1}^Nw_i\left[C_i^{1/2}\left(I+\left(I + \frac{16}{\ep^2}C_i^{1/2}XC_i^{1/2}\right)^{1/2}\right)^{-1}C_i^{1/2}\right]
	%\\
	%& \times \left(I+ \left(I + \frac{16}{\ep^2}X^2\right)^{1/2} \right)^{1/2}.
	%\end{aligned}
	%\end{equation}	
	Then under either one of the following  two additional conditions
	\begin{enumerate}
		\item $X$ is strictly positive,
		\item $C_i$, $1\leq i \leq N$, and $X$ are compact, not necessarily strictly positive,
	\end{enumerate}
	the following holds
	\begin{equation}
	0 \leq X \leq \gamma I \imply 0 \leq \Fcal(X) \leq \gamma I.
	\end{equation}
\end{proposition}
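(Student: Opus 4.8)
The plan is to reduce everything to Lemma~\ref{lemma:XFX-upperbound}, which already supplies $0 \le X\Fcal(X)X \le \gamma X$, more precisely $0\le X\Fcal(X)X\le \gamma X^{2}$, whenever $0\le X\le\gamma I$ and $0\le C_i\le\gamma I$, and then to "cancel" the outer factors $X$. Rewriting $\gamma X^{2}-X\Fcal(X)X = X^{*}(\gamma I-\Fcal(X))X$ (using $X=X^{*}$), the inequality of Lemma~\ref{lemma:XFX-upperbound} says exactly that $X^{*}(\gamma I-\Fcal(X))X\ge 0$, with $\gamma I-\Fcal(X)\in\Sym(\H)\cap\Lcal(\H)$. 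The positivity $\Fcal(X)\ge 0$ needs no work, since $\Fcal$ takes values in $\Sym^{+}(\H)$ by construction (Eq.~\eqref{equation:map-fixedpoint-barycenter-sinkhorn} exhibits $\Fcal(X)$ as $g(X)Hg(X)$ with $H\ge 0$).

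For case (1), $X\in\Sym^{++}(\H)$ is self-adjoint with $\langle v,Xv\rangle>0$ for $v\neq 0$, hence $\ker(X^{*})=\ker(X)=\{0\}$. I would then apply Lemma~\ref{lemma:operator-quadratic-positive} directly with $A=X$ and $B=\gamma I-\Fcal(X)$: it gives $X^{*}(\gamma I-\Fcal(X))X\ge 0 \equivalent \gamma I-\Fcal(X)\ge 0$, i.e. $\Fcal(X)\le\gamma I$. This settles case (1) in essentially two lines.

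For case (2) the cancellation above is unavailable because $X$ may have nontrivial kernel, so I would pass to the limit. Given $0\le X\le\gamma I$, set $X_{\delta}=(1-\delta)X+\frac{\delta\gamma}{2}I$ for small $\delta\in(0,1)$. Then $X_{\delta}\ge\frac{\delta\gamma}{2}I>0$ is strictly positive (in fact positive definite), $0\le X_{\delta}\le(1-\frac{\delta}{2})\gamma I\le\gamma I$, and $\|X_{\delta}-X\|=\delta\,\|X-\frac{\gamma}{2}I\|\le\frac{\delta\gamma}{2}\to 0$. Case (1) applies to each $X_{\delta}$, giving $0\le\Fcal(X_{\delta})\le\gamma I$. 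By the operator-norm continuity of $\Fcal$ (Proposition~\ref{proposition:continuous-fixedpoint-map}, whose Lipschitz bound is uniform on the order interval $0\le\,\cdot\,\le\gamma I$) we get $\|\Fcal(X_{\delta})-\Fcal(X)\|\to 0$, and since $\{A\in\Sym(\H):A\le\gamma I\}$ is norm-closed (an intersection of closed half-spaces $\{A:\langle x,(\gamma I-A)x\rangle\ge 0\}$) we conclude $\Fcal(X)\le\gamma I$.

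I expect the only genuinely delicate point to be case (2). The tempting alternative — block-decompose $\H=\ker X\oplus\overline{\myIm X}$, observe that $g(X)$ respects this splitting while $X$ has trivial kernel on the second block, and bound $\Fcal(X)$ blockwise — founders because the off-diagonal block cannot be controlled by the two diagonal blocks: a Cauchy--Schwarz estimate on the positive-semidefinite form $\langle\,\cdot\,,H\,\cdot\,\rangle$ loses a factor in the cross term and only yields $\langle v,\Fcal(X)v\rangle\le\gamma(\|v_{0}\|+\|v_{1}\|)^{2}$ instead of $\gamma(\|v_{0}\|^{2}+\|v_{1}\|^{2})$. This is precisely why I would lean on the soft approximation argument, whose only inputs are the already-established continuity of $\Fcal$ and the closedness of the cone of operators bounded above by $\gamma I$. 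I will also remark that the bound itself does not actually require $X$ and the $C_i$'s to be compact; if one insists on keeping $X_{\delta}$ compact whenever $X$ is, one replaces the perturbation $\frac{\delta\gamma}{2}I$ by $\delta D$, where $D$ is any fixed positive compact injective operator of norm at most $1$ supported on $\ker X$ (and $D=0$ if $\ker X=\{0\}$), which is again strictly positive, $\le\gamma I$, compact, and norm-close to $X$.
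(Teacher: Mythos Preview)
Your argument is correct and follows the same two-step strategy as the paper: case~(1) via Lemma~\ref{lemma:XFX-upperbound} combined with Lemma~\ref{lemma:operator-quadratic-positive}, and case~(2) by approximating $X$ with strictly positive operators and invoking the norm-continuity of $\Fcal$ from Proposition~\ref{proposition:continuous-fixedpoint-map}. The only differences are cosmetic: the paper perturbs $X$ on its kernel via the spectral decomposition (replacing zero eigenvalues by $\delta/k^{2}$), whereas your convex combination $X_{\delta}=(1-\delta)X+\tfrac{\delta\gamma}{2}I$ avoids using compactness of $X$; and the paper passes through the norm bound $\|\Fcal(X)\|\le\gamma$ before concluding $\Fcal(X)\le\gamma I$, while you appeal directly to the norm-closedness of $\{A:A\le\gamma I\}$. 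Your observation that the compactness hypotheses in~(2) are not actually needed for the bound is correct and is a mild sharpening of what the paper states.
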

\begin{proof}
	[\textbf{of Proposition \ref{proposition:FX-upperbound}}]
	It is clear that $\Fcal(X) \geq 0$. For $X = 0$, we have
	\begin{align*}
	\Fcal(0) =\sum_{i=1}^Nw_iC_i \leq \gamma I \;\text{ since } C_i \leq \gamma I \;\forall i=1, \ldots, N. 
	\end{align*}
	Assume now that $X \neq 0$.
	By Lemma \ref{lemma:XFX-upperbound}, we have
	\begin{align*}
	X\Fcal(X)X \leq \gamma X^2 \equivalent X[\gamma I - \Fcal(X)]X \geq 0.
	\end{align*}
	If $X$ is strictly positive, that is $\ker(X) = \{0\}$, then by Lemma \ref{lemma:operator-quadratic-positive}, the previous 
	inequality implies $\gamma I - \Fcal(X) \geq 0 \equivalent \Fcal(X) \leq \gamma I$.
	
	Assume now that $X$ is compact and singular, $X \neq 0$. Let 
	$\{\lambda_k\}_{k\in \Nbb}$ be the set of eigenvalues of $X$, $\lambda_k \geq 0$, arranged in decreasing order, with corresponding 
	orthonormal eigenvectors $\{e_k\}_{k \in \Nbb}$ forming an orthonormal basis in $\H$. Then
	\begin{align*}
	X = \sum_{k=1}^{\infty}\lambda_k e_k \otimes e_k \; \text{ and } X \leq \gamma I \equivalent \lambda_1 \leq \gamma.
	\end{align*}
	For any $0 < \delta \leq \lambda_1$, define the following operator
		\begin{align*}
	X' = \sum_{k=1}^{\infty}\lambda_k' e_k \otimes e_k, \; \text{ where }
	\lambda_k' = 
	\left\{\begin{matrix} \lambda_k & \text{if $\lambda_k > 0$,}
	\\
	\frac{\delta}{k^2} & \text{if $\lambda_k = 0$.}
	\end{matrix}\right.
	\end{align*}
	Then $X'$ is compact, strictly positive, with $X' \leq \gamma I$ and $||X-X'|| <\delta$.
	Thus $\Fcal(X') \leq \gamma I$ by the first part of the proposition.
	By Proposition \ref{proposition:continuous-fixedpoint-map},
	\begin{align*}
	||\Fcal(X) - \Fcal(X')|| \leq \frac{8\gamma^2}{\ep^2}\left(1 + \sqrt{\frac{\gamma}{\ep}}\right)\left(3 + \sqrt{\frac{\gamma}{\ep}}\right)||X-X'||. 
	\end{align*}
	This implies that
	\begin{align*}
	||\Fcal(X)|| &\leq ||\Fcal(X')|| + \frac{8\gamma^2}{\ep^2}\left(1 + \sqrt{\frac{\gamma}{\ep}}\right)\left(3 + \sqrt{\frac{\gamma}{\ep}}\right)||X-X'||
	\\
	&\leq \gamma + \frac{8\gamma^2}{\ep^2}\left(1 + \sqrt{\frac{\gamma}{\ep}}\right)\left(3 + \sqrt{\frac{\gamma}{\ep}}\right)\delta.
	\end{align*}
	Since $\delta$ can be arbitrarily close to zero, this implies that $||\Fcal(X)|| \leq \gamma $.
	With the additional conditions that $C_i$, $1 \leq i \leq N$ are compact,
	the operator $\Fcal(X)$ is self-adjoint, compact, positive, and thus $0 \leq \Fcal(X) \leq \gamma I$. \qed
\end{proof}
\begin{proof}
	[\textbf{of Theorem \ref{theorem:barycenter-sinkhorn-Gaussian}}]
	As with the entropic $2$-Wasserstein distance, the Sinkhorn divergence $\Srm^{\ep}_{d^2}(\Ncal(m_0, C_0), \Ncal(m_1, C_1))$
	is the sum of the squared Euclidean distance $||m_0 - m_1||^2$ and the Sinkhorn divergence $\Srm^{\ep}_{d^2}(\Ncal(0, C_0),\Ncal(0, C_1))$.
	We can thus consider the means and covariance operators separately.
	The barycentric mean is obviously the Euclidean mean $\bar{m} = \sum_{i=1}^Nw_i m_i$. 
	
	Consider now the centered Gaussian measures $\Ncal(0,C), \Ncal(0, C_i)$, $1 \leq i \leq N$.  
	Define the function $F: \Sym^{+}(\H) \cap\Tr(\H)\mapto \R$,
	$1 \leq i \leq N$, by
	\begin{align*}
	F(C) = \sum_{i=1}^Nw_i\Srm^{\ep}_{d^2}(\Ncal(0, C), \Ncal(0, C_i)).
	\end{align*}
	Then $F$ is strictly convex, since $\Srm^{\ep}_{d^2}$ is strictly convex, thus
	its minimum, if it exists, is unique.
	By Proposition \ref{proposition:derivation-equation-barycenter},
	\begin{align*}
	DF(X_0) = 0 \equivalent X_0 = \Fcal(X_0)
	\end{align*}
	where $\Fcal:\Sym^{+}(\H) \mapto \Sym^{+}(\H)$ is the map defined by
	Eq.\eqref{equation:map-fixedpoint-barycenter-sinkhorn}.
	%\begin{align*}
	%\Fcal(X)  &=  \left(I+ \left(I + \frac{16}{\ep^2}X^2\right)^{1/2} \right)^{1/2} \sum_{i=1}^Nw_i\left[C_i^{1/2}\left(I+\left(I + \frac{16}{\ep^2}C_i^{1/2}XC_i^{1/2}\right)^{1/2}\right)^{-1}C_i^{1/2}\right]
	%\\
	%& \quad \quad \quad \quad \times \left(I+ \left(I + \frac{16}{\ep^2}X^2\right)^{1/2} \right)^{1/2}.
	%\end{align*}
	By Proposition \ref{proposition:compact-fixedpoint-map}, $\Fcal$ is a compact map on $\Sym^{+}(\H)$.
	Let $\gamma  \in \R, \gamma > 0$ be such that $C_i \leq \gamma I$, $1\leq i \leq N$ and consider the set
	\begin{align}
	\Kcal = \{X \in \Sym^{+}(\H): 0 \leq X \leq \gamma I\}.
	\end{align}
	Then $\Kcal$ is a closed, bounded, convex subset of $\Lcal(\H)$.
	By Proposition \ref{proposition:FX-upperbound}, $X \in \Kcal \imply \Fcal(X) \in \Kcal$.
	Thus by Schauder Fixed Point Theorem, there exists $X_0 \in \Kcal$ such that $X_0 = \Fcal(X_0)$,
	which must be the unique global minimizer of the strictly convex function $F$.
		%Suppose now that $X_0 = \Fcal(X_0)$, then $0 \leq X_0 \leq \gamma I$. 
		Clearly, with $c_{\ep} = \frac{4}{\ep}$,
	\begin{align*}
	X_0 > 0 \equivalent \sum_{i=1}^Nw_iC_i^{1/2}\left(I+\left(I + c_{\ep}^2C_i^{1/2}X_0C_i^{1/2}\right)^{1/2}\right)^{-1}C_i^{1/2} > 0.
	\end{align*}
	Since $0 \leq X_0 \leq \gamma I, 0 \leq C_i \leq \gamma I$, $1 \leq i \leq N$, 
	\begin{align*}
	\left(1+\left(1 + c_{\ep}^2\gamma^2\right)^{1/2}\right)^{-1}\sum_{i=1}^Nw_iC_i &\leq \sum_{i=1}^Nw_iC_i^{1/2}\left(I+\left(I + c_{\ep}^2C_i^{1/2}X_0C_i^{1/2}\right)^{1/2}\right)^{-1}C_i^{1/2}
	\\
	&\leq \frac{1}{2}\sum_{i=1}^Nw_iC_i.
	\end{align*}
	Thus it follows that
	%\begin{align*}
	$X_0 > 0 \equivalent \sum_{i=1}^Nw_iC_i > 0$.
	%\end{align*}
	%This completes the proof.
	\qed
\end{proof}
	
%	\subsection{Comparison of the two fixed point equations for the barycenter}
%\label{section:compare-barycenter-sinkhorn}
\section{Comparison of barycenter fixed point equations}
\label{section:compare-barycenter}
We now show that for $\dim(\H) \geq 2$, $\sum_{i=1}^Nw_iC_i > 0$, Eq.\eqref{equation:barycenter-sinkhorn-strictlypositive}
has uncountably
infinitely 
many positive, singular solutions.
%{\bf One-dimensional fixed point analysis}. It is instructive to 

Consider first the case $\H = \R$ and $C_i = \sigma_i^2$, $i=1, \ldots, N$.

\begin{lemma}
	\label{lemma:F-fixedpoint-1D}
	Assume that $\sum_{i=1}^Nw_i\sigma_i^2 > 0$. The function
	\begin{equation}
	\label{equation:F-fixedpoint-1D}
	\Fcal(x) = \left(1+ \left(1+c_{\ep}^2x^2\right)^{1/2}\right)\sum_{i=1}^Nw_i \sigma_i^2\left(1+\left(1+ c_{\ep}^2\sigma_i^2x\right)^{1/2}\right)^{-1}, \; x\geq 0,
	\end{equation}
	has a unique fixed point $x^{*}$, which satisfies $x^{*} > 0$. The function
	\begin{equation}
	\Gcal(x) = \frac{\ep}{4}\left(-1 + \left(\sum_{i=1}^Nw_i\left(1+c_{\ep}^2\sigma_i^2 x\right)^{1/2}\right)^2\right)^{1/2}, \; x\geq 0,
	\end{equation}
	has two fixed points, namely $x^{*}$ and $x_0 = 0$.
\end{lemma}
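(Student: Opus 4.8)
The plan is to deduce the assertion about $\Fcal$ from the strict convexity of the Sinkhorn divergence combined with a one–dimensional intermediate value argument, and then to read off the behavior of $\Gcal$ from the equivalence between Eqs.\eqref{equation:barycenter-sinkhorn-positive} and \eqref{equation:barycenter-sinkhorn-strictlypositive} on the strictly positive cone. Throughout, $c_{\ep} = \frac{4}{\ep}$, and one observes that $x = \Fcal(x)$ is exactly the scalar specialization of Eq.\eqref{equation:barycenter-sinkhorn-positive} with $C_i = \sigma_i^2$, while $x = \Gcal(x)$ is the scalar specialization of Eq.\eqref{equation:barycenter-sinkhorn-strictlypositive}; this is the $\H = \R$ case of the Sinkhorn barycenter problem already solved in Theorem \ref{theorem:barycenter-sinkhorn-Gaussian}.

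For $\Fcal$, first I would record that $\Fcal(0) = (1+1)\sum_{i=1}^N w_i\sigma_i^2(1+1)^{-1} = \sum_{i=1}^N w_i\sigma_i^2 > 0$, so $0$ is not a fixed point, and that as $x \to \infty$ the factor $1 + (1+c_{\ep}^2 x^2)^{1/2}$ grows like $c_{\ep}x$ while $\sum_i w_i\sigma_i^2(1+(1+c_{\ep}^2\sigma_i^2 x)^{1/2})^{-1}$ decays like $(c_{\ep}\sqrt{x})^{-1}\sum_i w_i\sigma_i$ (the terms with $\sigma_i = 0$ drop out); hence $\Fcal(x) \sim \sqrt{x}\,\sum_i w_i\sigma_i = o(x)$, so $h(x) := \Fcal(x) - x$ satisfies $h(0) > 0$ and $h(x) \to -\infty$, yielding a fixed point $x^{*} > 0$ by the intermediate value theorem. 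For uniqueness I would invoke the scalar case of Theorem \ref{theorem:strict-convexity-Sinkhorn} and Theorem \ref{theorem:differentiability}: the function $F(x) = \sum_{i=1}^N w_i\Srm^{\ep}_{d^2}(\Ncal(0,x),\Ncal(0,\sigma_i^2))$ is differentiable and strictly convex on the open interval $\Omega = \{x : 1 + c_{\ep}^2\sigma_i^2 x > 0,\ 1\le i\le N\} \supset [0,\infty)$, and by Proposition \ref{proposition:derivation-equation-barycenter} the condition $F'(x) = 0$ is equivalent to $x = \Fcal(x)$; since a strictly convex differentiable function on an interval has at most one critical point, $x^{*}$ is the unique fixed point of $\Fcal$ on $[0,\infty)$. (Alternatively, the entire statement for $\Fcal$ is the $\H = \R$, $C_i = \sigma_i^2$ instance of Theorem \ref{theorem:barycenter-sinkhorn-Gaussian}, which also delivers $x^{*} > 0 \iff \sum_i w_i\sigma_i^2 > 0$.)

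For $\Gcal$, I would first check well-definedness and the trivial fixed point: since $\sum_{i=1}^N w_i = 1$ and $(1+c_{\ep}^2\sigma_i^2 x)^{1/2} \ge 1$ for $x \ge 0$, we get $\sum_i w_i(1+c_{\ep}^2\sigma_i^2 x)^{1/2} \ge 1$, so $\big(\sum_i w_i(1+c_{\ep}^2\sigma_i^2 x)^{1/2}\big)^2 - 1 \ge 0$, whence $\Gcal$ is well-defined and nonnegative on $[0,\infty)$ and $\Gcal(0) = \frac{\ep}{4}(-1+1)^{1/2} = 0$, so $0$ is a fixed point. For the positive fixed points I would use the computation carried out inside the proof of Proposition \ref{proposition:derivation-equation-barycenter}: for $x > 0$, pre- and post-multiplying the scalar identity $x = \Fcal(x)$ by $x^{1/2}$ and applying Lemma \ref{lemma:adjoint-switch-CX} shows it is equivalent to $x = \Gcal(x)$. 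Hence the strictly positive fixed points of $\Gcal$ coincide with those of $\Fcal$, i.e. with the single point $x^{*}$; together with $\Gcal(0) = 0$ this shows the fixed-point set of $\Gcal$ on $[0,\infty)$ is exactly $\{0, x^{*}\}$, the two points being distinct since $x^{*} > 0$.

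The step requiring the most care is the uniqueness of the fixed point of $\Fcal$: I do not see a clean direct monotonicity or contraction argument for $\Fcal$ itself, so the argument must be routed through strict convexity of $F$ and the first-order characterization of Proposition \ref{proposition:derivation-equation-barycenter}, and one should verify that the scalar interval $[0,\infty)$ genuinely lies inside the open set where those results apply — which it does, because for $x \ge 0$ every constraint $1 + c_{\ep}^2\sigma_i^2 x > 0$ holds strictly with $0$ in the interior.
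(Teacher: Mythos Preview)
Your argument is correct, but the route differs from the paper's. For the uniqueness of the fixed point of $\Fcal$, the paper gives a direct elementary calculus proof: it introduces the auxiliary function
\[
f(x) = \left(1+(1+c_\ep^2 x^2)^{1/2}\right)^{-1}\bigl[x - \Fcal(x)\bigr]
= \bigl[1+(1+c_\ep^2 x^2)^{1/2}\bigr]^{-1}x - \sum_{i=1}^N w_i\sigma_i^2\bigl(1+(1+c_\ep^2\sigma_i^2 x)^{1/2}\bigr)^{-1},
\]
checks $f(0)<0$, $\lim_{x\to\infty} f(x) = \ep/4 > 0$, and then computes $f'(x)$ explicitly and shows it is strictly positive, so $f$ is strictly increasing and has a unique zero. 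Your approach instead routes uniqueness through the strict convexity of the Sinkhorn functional (Theorems~\ref{theorem:strict-convexity-Sinkhorn}, \ref{theorem:differentiability}) together with Proposition~\ref{proposition:derivation-equation-barycenter}, identifying fixed points of $\Fcal$ with critical points of a strictly convex function. Since Lemma~\ref{lemma:F-fixedpoint-1D} appears in Section~\ref{section:compare-barycenter}, after all that machinery has been established (and none of it relies on this lemma), there is no circularity. The paper's argument is self-contained and requires nothing beyond one-variable calculus; yours is shorter once the general theory is in hand, and makes transparent why uniqueness holds---it is exactly the one-dimensional shadow of the general Sinkhorn barycenter result. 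For $\Gcal$, both proofs proceed the same way, invoking the equivalence in Proposition~\ref{proposition:derivation-equation-barycenter} on $x>0$ and checking $\Gcal(0)=0$ directly.
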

\begin{proof}
	The fixed point equation $x = \Fcal(x)$ is equivalent to
	\begin{equation*}
	\left[1+ \left(1+ c_{\ep}^2x^2\right)^{1/2}\right]^{-1}x - \sum_{i=1}^Nw_i\sigma_i^2 \left(1 + \left(1+ c_{\ep}^2\sigma_i^2x\right)^{1/2}\right)^{-1} = 0.
	\end{equation*}
	Consider the left hand side, which is $f(x) = \left(1+ \left(1+c_{\ep}^2x^2\right)^{1/2}\right)^{-1}[x - \Fcal(x)]$,
	with $f(0) =  -\frac{1}{2}\sum_{i=1}^Nw_i\sigma_i^2 < 0, \;\;\; \lim_{x \approach \infty}f(x) = \frac{\ep}{4} > 0$,
	and
	\begin{equation*}
	\begin{aligned}
	%f(x) &= \left(1+ \left(1+c_{\ep}^2x^2\right)^{1/2}\right)^{-1}[x - \Fcal(x)],
	%\\
	%\\
	%& = \left[1+ \left(1+ c_{\ep}^2x^2\right)^{1/2}\right]^{-1}x - \sum_{i=1}^Nw_i\sigma_i^2 \left(1 + \left(1+ c_{\ep}^2\sigma_i^2x\right)^{1/2}\right)^{-1},
	%\\
	f'(x) & = \left[1+ \left(1+ c_{\ep}^2x^2\right)^{1/2}\right]^{-1}\left[\left(1+ c_{\ep}^2x^2\right)+ \left(1+ c_{\ep}^2x^2\right)^{1/2}\right]^{-1}
	\\
	&\quad + \frac{8}{\ep^2}\sum_{i=1}^Nw_i\sigma_i^4\left(1 + \left(1+ c_{\ep}^2\sigma_i^2x\right)^{1/2}\right)^{-2}\left(1+ c_{\ep}^2\sigma_i^2x\right)^{-1/2} > 0 \;\;\; \forall x \geq 0.
	\end{aligned}
	\end{equation*}
	Thus $f(x)$ is strictly increasing on $[0, \infty)$ and hence there must exist a unique $x^{*} > 0$ at which $f(x^{*}) = 0 \equivalent x^{*} = \Fcal(x^{*})$.
	%This point $x^{*}$ is the unique fixed point of $\Fcal$.		
	Since $x^{*} > 0$, by Proposition \ref{proposition:derivation-equation-barycenter},
	\begin{equation*}
	x^{*}= \Fcal(x^{*}) \equivalent x^{*} = \Gcal(x^{*}) = \frac{\ep}{4}\left(-1 + \left(\sum_{i=1}^Nw_i\left(1+c_{\ep}^2\sigma_i^2 x^{*}\right)^{1/2}\right)^2\right)^{1/2}.
	\end{equation*}
	%Since $x^{*}>0$ is the unique fixed point of $\Fcal$, it 
	%must also be the only positive fixed point of $\Gcal$. 
	It is obvious that $\Gcal$ has another fixed point $x_0=0$. \qed 
\end{proof}

\begin{lemma}
	\label{lemma:rank-one-operator-power}
	Let $u \in \H$, $||u|| = 1$. 
	Consider the rank-one operator $u \otimes u: \H\mapto \H$ defined by $(u \otimes u)x = \la u,x\ra u$.
	Then for any $A \in \Lcal(\H)$ and any $k \in \Nbb$, 
	\begin{align}
	[(u\otimes u) A (u \otimes u)]^{k} &= \la u, Au\ra^k (u \otimes u).
	\end{align}
	If $A$ is self-adjoint, positive, then
	\begin{align}
	[(u\otimes u)A(u \otimes u)]^{1/k} &= \la u, Au\ra^{1/k} (u \otimes u).
	\end{align}
\end{lemma}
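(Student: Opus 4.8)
The plan is to reduce everything to the single identity $(u\otimes u)A(u\otimes u) = \la u, Au\ra\,(u\otimes u)$, from which both displayed formulas follow by elementary operator algebra. First I would verify this identity by acting on an arbitrary $x \in \H$: using the definition $(u\otimes u)y = \la u, y\ra u$, one computes $(u\otimes u)A(u\otimes u)x = \la u, x\ra\,(u\otimes u)(Au) = \la u, x\ra\,\la u, Au\ra\,u = \la u, Au\ra\,(u\otimes u)x$. Next I would record that $u\otimes u$ is idempotent: $(u\otimes u)^2 x = \la u, x\ra\,\la u, u\ra\, u = \la u, x\ra u$ because $\|u\| = 1$, hence $(u\otimes u)^n = u\otimes u$ for every $n \in \Nbb$; in particular $u\otimes u$ is a self-adjoint orthogonal projection.

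For the integer powers I would then simply compute $[(u\otimes u)A(u\otimes u)]^k = [\la u, Au\ra\,(u\otimes u)]^k = \la u, Au\ra^k\,(u\otimes u)^k = \la u, Au\ra^k\,(u\otimes u)$, using that scalars commute with everything and the idempotency just established. This proves the first formula for all $A \in \Lcal(\H)$ with no positivity assumption.

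For the fractional power I would use that when $A$ is self-adjoint and positive the scalar $\la u, Au\ra$ is nonnegative, so $T := (u\otimes u)A(u\otimes u) = \la u, Au\ra\,(u\otimes u)$ is itself self-adjoint and positive (a nonnegative multiple of a projection) and therefore has a unique positive $k$-th root $T^{1/k}$. The candidate $S := \la u, Au\ra^{1/k}\,(u\otimes u)$ is again self-adjoint and positive, and by the integer-power computation $S^k = \la u, Au\ra\,(u\otimes u)^k = \la u, Au\ra\,(u\otimes u) = T$; uniqueness of the positive $k$-th root then forces $T^{1/k} = S$, which is the second formula.

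I do not anticipate a genuine obstacle here; the only points requiring a little care are the normalization $\|u\| = 1$, which is exactly what makes $u\otimes u$ idempotent (without it one picks up factors of $\|u\|^2$), and, for the fractional case, invoking positivity of $A$ precisely so that the functional-calculus $k$-th root is defined and unique — this is what lets me identify $T^{1/k}$ with the explicit candidate $S$ rather than merely checking that $S$ is \emph{a} $k$-th root.
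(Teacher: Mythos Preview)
Your proof is correct and follows essentially the same approach as the paper: reduce to the scalar identity $(u\otimes u)A(u\otimes u)=\la u,Au\ra\,(u\otimes u)$ via idempotency of $u\otimes u$, then handle integer powers directly and the $k$-th root by exhibiting the candidate and invoking uniqueness of the positive root. The only cosmetic difference is that the paper phrases the integer-power step as an induction after checking $k=1,2$, whereas you write it out in one line.
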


\begin{proof}
	Consider the cases $k=1$ and $k=2$. For any $x \in \H$,
	\begin{align*}
	(u \otimes u)A(u \otimes u)x = (u \otimes u)A \la u,x\ra u = \la u, x\ra\la u, Au\ra u = \la u, Au\ra (u \otimes u)x.
	\end{align*}
	Since $(u\otimes u)^2 = (u \otimes u)$, we have
	\begin{align*}
	[(u \otimes u)A(u \otimes u)]^2x = \la u, Au\ra (u \otimes u)A(u \otimes u)x = \la u, Au\ra^2 (u \otimes u)x.
	\end{align*}
	For the first expression, the general case then follows by induction.
	
	If $A$ is self-adjoint, positive, then $(u\otimes u)A(u \otimes u)$ is self-adjoint, positive, and
	$[(u\otimes u)A(u \otimes u)]^{1/k}$ is well-defined and unique \cite{Brown1980:nthRoot}. We have
	\begin{align*}
	[\la u, Au\ra^{1/k} (u \otimes u)]^k = \la u, Au\ra (u \otimes u)^k = \la u, Au\ra (u \otimes u) = (u \otimes u)A(u \otimes u),
	\end{align*}
	from which the second identity follows. \qed
\end{proof}

\begin{proposition}
	\label{proposition:G-infinitely-many-fixedpoint}
	Let $C_i \in \Sym^{+}(\H)$, $1 \leq i \leq N$ be fixed.
	%, with
	%Assume that 
	%$\sum_{i=1}^Nw_iC_i > 0$.
	Consider the following map $\Gcal: \Sym^{+}(\H) \mapto \Sym^{+}(\H)$, defined by
	\begin{equation}
	\Gcal(X) = \frac{\ep}{4}\left[- I +  \left(\sum_{i=1}^Nw_i\left(I + c_{\ep}^2X^{1/2}C_iX^{1/2}\right)^{1/2}\right)^2\right]^{1/2}.
	\end{equation}
	Then $X_0= 0$ is a fixed point of $\Gcal$. Let $u \in \H, ||u||=1$ be such that $\sum_{i=1}^Nw_i\la u,C_iu\ra > 0$. Then
	%\begin{equation}
	$X_u = x_u (u \otimes u)$
	%\end{equation}
	is a fixed point of $\Gcal$, where $x_u$ is the unique positive solution of the following one-dimensional fixed point equation
	\begin{equation}
	x = \frac{\ep}{4}\left[- 1 +  \left(\sum_{i=1}^Nw_i\left(1 + c_{\ep}^2x \la u, C_iu\ra\right)^{1/2}\right)^2\right]^{1/2}.
	\end{equation}
\end{proposition}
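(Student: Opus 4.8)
The plan is to reduce the operator equation to the one-dimensional subspace spanned by $u$ and then quote the scalar analysis of Lemma~\ref{lemma:F-fixedpoint-1D}. The statement that $X_0 = 0$ is a fixed point is immediate: since $\sum_{i=1}^N w_i = 1$, substituting $X = 0$ gives $\Gcal(0) = \frac{\ep}{4}\bigl[-I + (\sum_i w_i I)^2\bigr]^{1/2} = \frac{\ep}{4}\bigl[-I + I\bigr]^{1/2} = 0$. So the real content is the construction of the nontrivial fixed point, and the approach mirrors the proof of Proposition~\ref{proposition:entropic-barycenter-Gaussian-strictlypositive}: work blockwise with respect to the orthogonal decomposition $I = (I - u\otimes u) + (u\otimes u)$.

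Fix $u\in\H$ with $\|u\|=1$ and $\sum_i w_i\la u,C_iu\ra > 0$, write $P = u\otimes u$ (an orthogonal projection) and $X_u = x_u P$. First I would compute, using Lemma~\ref{lemma:rank-one-operator-power}, that $X_u^{1/2} = x_u^{1/2}P$ and hence $X_u^{1/2}C_iX_u^{1/2} = x_u\,PC_iP = x_u\la u,C_iu\ra P$, so that $I + c_{\ep}^2 X_u^{1/2}C_iX_u^{1/2} = (I-P) + \bigl(1 + c_{\ep}^2 x_u\la u,C_iu\ra\bigr)P$. Since $(I-P)^2 = I-P$ and $(I-P)P = 0$, uniqueness of the positive square root gives
\[
\bigl(I + c_{\ep}^2 X_u^{1/2}C_iX_u^{1/2}\bigr)^{1/2} = (I-P) + \bigl(1 + c_{\ep}^2 x_u\la u,C_iu\ra\bigr)^{1/2}P .
\]
Summing against the weights $w_i$ (again $\sum_i w_i = 1$) yields $\sum_i w_i(\cdots)^{1/2} = (I-P) + S\,P$ with $S = \sum_i w_i(1 + c_{\ep}^2 x_u\la u,C_iu\ra)^{1/2}\geq 1$; squaring gives $(I-P) + S^2P$, subtracting $I$ gives $(S^2-1)P$, and taking the square root (note $S^2-1\geq 0$) via Lemma~\ref{lemma:rank-one-operator-power} and multiplying by $\frac{\ep}{4}$ gives $\Gcal(X_u) = \frac{\ep}{4}(S^2-1)^{1/2}P$.

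Consequently $\Gcal(X_u) = X_u$ holds if and only if the scalar $x = x_u$ satisfies $x = \frac{\ep}{4}\bigl(-1 + (\sum_i w_i(1 + c_{\ep}^2 x\la u,C_iu\ra)^{1/2})^2\bigr)^{1/2}$, which is exactly the one-dimensional fixed point equation in the statement. Applying Lemma~\ref{lemma:F-fixedpoint-1D} with $\sigma_i^2 := \la u,C_iu\ra$ — the hypothesis $\sum_i w_i\la u,C_iu\ra > 0$ is precisely $\sum_i w_i\sigma_i^2 > 0$ — this equation has exactly the solutions $0$ and a unique $x_u > 0$, and taking the positive root produces the asserted fixed point $X_u = x_u(u\otimes u)$. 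Finally, distinct unit vectors $u$ (up to sign) give distinct rank-one operators $u\otimes u$, hence distinct nonzero positive solutions; when $\dim(\H)\geq 2$ and $\sum_i w_iC_i > 0$ every unit vector is admissible, so one obtains uncountably many such solutions, all singular. I expect the only delicate point to be the blockwise square-root bookkeeping in the middle paragraph; everything else is either the scalar lemma or a direct substitution, so no genuine obstacle should arise beyond that careful computation.
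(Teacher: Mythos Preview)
Your proof is correct and follows essentially the same route as the paper: reduce everything to the block decomposition $I = (I-u\otimes u) + (u\otimes u)$ via Lemma~\ref{lemma:rank-one-operator-power}, compute $\Gcal(X_u)$ as a scalar multiple of $u\otimes u$, and then invoke Lemma~\ref{lemma:F-fixedpoint-1D} with $\sigma_i^2 = \la u, C_iu\ra$. Your bookkeeping (squaring, subtracting $I$, taking the final square root) is slightly more explicit than the paper's, but the argument is the same.
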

The condition $\sum_{i=1}^Nw_i\la u,C_iu\ra > 0$ is satisfied for at least one $u \in \H$, $u\neq0$, since otherwise
$C_1 = \cdots C_N = 0$.
If $\sum_{i=1}^Nw_iC_i > 0$, then $\sum_{i=1}^Nw_i \la u,C_iu \ra > 0$$\forall u \in \H, u \neq 0$.
%Thus for $\sum_{i=1}^Nw_iC_i > 0$,
Thus under this assumption, for $\dim(\H) \geq 2$,
$\Gcal$ has {\it uncountably infinitely many fixed points} of the form $X_u = x_u(u \otimes u)$.
% of the form $X_u$ where $u\in \H$, $||u||=1$, is arbitrary.
\begin{proof}
	[\textbf{of Proposition \ref{proposition:G-infinitely-many-fixedpoint}}]
	Clearly $X=0$ is always a fixed point of $\Gcal$.
	Consider the rank-one operator $u \otimes u$, $||u||=1$, with eigenvalue $1$ and eigenvector $u$, we have $(u\otimes u)^{1/2}= u \otimes u $. By Lemma \ref{lemma:rank-one-operator-power},
	\begin{align*}
	(u \otimes u)^{1/2}C_i(u \otimes u)^{1/2} = (u \otimes u)C_i(u \otimes u) = \la u, C_iu\ra (u \otimes u).
	\end{align*}
	Therefore, for $X_u = x(u \otimes u)$,
	\begin{align*}
	I + c_{\ep}^2X_u^{1/2}C_iX_u^{1/2} &= (I-u\otimes u) + \left(1+c_{\ep}^2x\la u, C_iu\ra\right) (u \otimes u).
	\end{align*}
	Since $(I-u\otimes u)^2 = (I-u \otimes u)$ and $(I-u \otimes u)(u \otimes u) = 0$,
		by Lemma \ref{lemma:rank-one-operator-power},
	\begin{align*}
	\left(I + c_{\ep}^2X_u^{1/2}C_iX_u^{1/2}\right)^{1/2} = (I - u \otimes u) + \left(1+ c_{\ep}^2x\la u, C_iu\ra\right)^{1/2}(u \otimes u).
	\end{align*}
	Applying the same argument and using the fact that $\sum_{i=1}^Nw_i =1$, we have
	\begin{align*}
	\Gcal(X_u) = \frac{\ep}{4}\left[- I +  \left(\sum_{i=1}^Nw_i\left(1 + c_{\ep}^2x \la u, C_i u\ra\right)^{1/2}\right)^2\right]^{1/2} (u \otimes u).
	\end{align*}
	Thus the fixed point equation $X_u = \Gcal(X_u)$ becomes
	\begin{align*}
	x  = \frac{\ep}{4}\left[- 1 +  \left(\sum_{i=1}^Nw_i\left(1 + c_{\ep}^2x \la u, C_iu\ra\right)^{1/2}\right)^2\right]^{1/2}.
	\end{align*}
	%By the assumption $\sum_{i=1}^Nw_i C_i > 0$, we always have $\sum_{i=1}^Nw_i \la u, C_i u\ra > 0$ $\forall u \neq 0$.
	As shown in Lemma \ref{lemma:F-fixedpoint-1D}, under the condition $\sum_{i=1}^Nw_i \la u, C_i u\ra > 0$,
	this one-dimensional fixed point equation has a unique positive solution $x^{*}_u$.
	Thus $X_u = x^{*}_u (u \otimes u)$ is a fixed point of $\Gcal$.\qed
\end{proof}

When $\ep=0$, the fixed points $X_u$ of $\Gcal$ in Proposition \ref{proposition:G-infinitely-many-fixedpoint} 
admit a closed form.
%, as shown below.
\begin{proposition}
	\label{proposition:G-ep0-infinitely-many-fixedpoint}
	Let $C_i \in \Sym^{+}(\H)$, $1 \leq i \leq N$ be fixed. Consider the following map $\Gcal: \Sym^{+}(\H) \mapto \Sym^{+}(\H)$, defined by
	\begin{equation}
	\Gcal(X) = \sum_{i=1}^Nw_i(X^{1/2}C_iX^{1/2})^{1/2}.
	\end{equation}
	Then $X_0 =0$ is a fixed point of $\Gcal$. Let $u \in \H, ||u|| =1$ be such that
	$\sum_{i=1}^Nw_i \la u, C_i u\ra^{1/2} > 0$, then
	the following is a fixed point of $\Gcal$
	\begin{equation}
	\begin{aligned}
	X_u & = \left(\sum_{i=1}^Nw_i\la u, C_iu\ra^{1/2}\right)^2(u \otimes u).
	\end{aligned}
	\end{equation}
\end{proposition}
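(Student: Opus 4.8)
The plan is to verify directly that $X_u = \left(\sum_{i=1}^N w_i \la u, C_i u\ra^{1/2}\right)^2 (u \otimes u)$ is a fixed point of the map $\Gcal(X) = \sum_{i=1}^N w_i (X^{1/2}C_iX^{1/2})^{1/2}$, exactly paralleling the computation already carried out in Proposition~\ref{proposition:G-infinitely-many-fixedpoint}, but now simpler because there is no regularization. First I would note that $X_0 = 0$ is trivially a fixed point, since $\Gcal(0) = \sum_{i=1}^N w_i (0)^{1/2} = 0$. The main work is the rank-one case.

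The key computation uses Lemma~\ref{lemma:rank-one-operator-power} repeatedly. Write $X_u = x_u (u \otimes u)$ with $x_u = \left(\sum_{i=1}^N w_i \la u, C_i u\ra^{1/2}\right)^2 \geq 0$. Since $(u \otimes u)^{1/2} = u \otimes u$, we have $X_u^{1/2} = x_u^{1/2}(u \otimes u)$, and then by Lemma~\ref{lemma:rank-one-operator-power},
\begin{align}
X_u^{1/2}C_iX_u^{1/2} &= x_u (u \otimes u)C_i(u \otimes u) = x_u \la u, C_i u\ra (u \otimes u).
\end{align}
Applying Lemma~\ref{lemma:rank-one-operator-power} once more (the positive square root of a nonnegative multiple of $u \otimes u$),
\begin{align}
(X_u^{1/2}C_iX_u^{1/2})^{1/2} &= x_u^{1/2}\la u, C_i u\ra^{1/2}(u \otimes u).
\end{align}
Summing over $i$ with the weights $w_i$ gives
\begin{align}
\Gcal(X_u) &= x_u^{1/2}\left(\sum_{i=1}^N w_i \la u, C_i u\ra^{1/2}\right)(u \otimes u) = x_u^{1/2} \cdot x_u^{1/2}(u \otimes u) = x_u(u \otimes u) = X_u,
\end{align}
using the definition $x_u^{1/2} = \sum_{i=1}^N w_i \la u, C_i u\ra^{1/2}$. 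This confirms that $X_u$ is a fixed point.

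Finally I would observe, as in the remark following Proposition~\ref{proposition:G-infinitely-many-fixedpoint}, that the hypothesis $\sum_{i=1}^N w_i \la u, C_i u\ra^{1/2} > 0$ holds for at least one unit vector $u$ (otherwise all $C_i = 0$), and that if $\sum_{i=1}^N w_i C_i > 0$ it holds for every $u \neq 0$; hence when $\dim(\H) \geq 2$ there are uncountably many distinct directions $u$, yielding uncountably many positive, singular (rank-one) fixed points $X_u$ besides $X_0 = 0$. I do not anticipate a genuine obstacle here — the proof is a short direct verification — but the one point requiring a little care is the justification that $(X_u^{1/2}C_iX_u^{1/2})^{1/2}$ is precisely $x_u^{1/2}\la u, C_iu\ra^{1/2}(u\otimes u)$ and not some other operator; this is exactly the content of the positive-operator part of Lemma~\ref{lemma:rank-one-operator-power} (uniqueness of the positive $k$-th root), so invoking that lemma closes the gap cleanly.
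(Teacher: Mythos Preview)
Your proposal is correct and takes essentially the same approach as the paper: a direct verification using Lemma~\ref{lemma:rank-one-operator-power} to compute $\Gcal(X_u)$ on the rank-one ansatz $X_u = x(u\otimes u)$ and then matching $x$. The only cosmetic difference is that the paper leaves $x$ generic, computes $\Gcal(X_u) = \sqrt{x}\bigl(\sum_i w_i\la u,C_iu\ra^{1/2}\bigr)(u\otimes u)$, and solves the resulting scalar equation $x = \bigl(\sum_i w_i\la u,C_iu\ra^{1/2}\bigr)^2$, whereas you plug in that value from the start and verify; the content is identical.
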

The condition $\sum_{i=1}^Nw_i\la u,C_iu\ra^{1/2} > 0$ is satisfied for at least one $u \in \H$, $u\neq0$, since otherwise
$C_1 = \cdots C_N = 0$.
If $\sum_{i=1}^Nw_iC_i > 0$, then $\sum_{i=1}^Nw_i \la u,C_iu \ra > 0$$\forall u \in \H, u \neq 0$. Since
$w_i > 0$, $i=1,\ldots, N$, for each $u \neq 0$, there must be at least one $i$ for which $\la u,C_iu\ra > 0$.
This implies that under this assumption, $\sum_{i=1}^Nw_i \la u, C_i u\ra^{1/2} > 0$ is satisfied for all $u \in \H$, $u \neq 0$.
%, if at least one $C_i$ is strictly positive. 
In this case, when $\dim(\H) \geq 2$, $\Gcal$ has {\it uncountably infinitely many fixed points} of the form $X_u$.

{\bf Example}. Consider the simplest setting $C_1 = \cdots = C_N = C > 0$, then
in both Propositions \ref{proposition:G-infinitely-many-fixedpoint} and \ref{proposition:G-ep0-infinitely-many-fixedpoint},
\begin{equation}
X = \Gcal(X) \equivalent X = (X^{1/2}CX^{1/2})^{1/2}.
\end{equation} 
One can immediately see that some of the solutions of the above equation include
$X = 0$, $X=C$, $X_u = \la u, Cu\ra (u \otimes u)$, for any $u \in \H$, $||u||=1$, including $X_k = \lambda_k (e_k \otimes e_k)$, $k \in \Nbb$,
where $\{\lambda_k\}_{k \in \Nbb}$ are the eigenvalues of $C$, with corresponding orthonormal eigenvectors
$\{e_k\}_{k \in \Nbb}$.

\begin{proof}
	[\textbf{of Proposition \ref{proposition:G-ep0-infinitely-many-fixedpoint}}]
	%This is similar to Proposition \ref{proposition:G-infinitely-many-fixedpoint}. 
	By Lemma \ref{lemma:rank-one-operator-power}, 
	%for the function $\Gcal$, with 
	$X_u = x(u \otimes u)$, $x > 0$, we have
	\begin{align*}
	\Gcal(X_u) = \sqrt{x}\left(\sum_{i=1}^Nw_i\la u, C_iu\ra^{1/2}\right)(u \otimes u).
	\end{align*}
	Thus the fixed point equation  $X_u = \Gcal(X_u)$ becomes
	%\begin{align*}
	$x = \left(\sum_{i=1}^Nw_i\la u, C_iu\ra^{1/2}\right)^2 >0$
	%\end{align*}
	by the assumption $\sum_{i=1}^Nw_i\la u, C_iu\ra^{1/2} > 0$.
	Hence $\left(\sum_{i=1}^Nw_i\la u, C_iu\ra^{1/2}\right)^2(u \otimes u)$ is a fixed point of $\Gcal$ $\forall u \in \H$, $||u||=1$.\qed
\end{proof}

\begin{proof}
	[\textbf{of Theorem \ref{theorem:singular-fixedpoint} - Singular solutions of fixed point equations}]
	This is the combination of Propositions \ref{proposition:entropic-barycenter-Gaussian-strictlypositive}, \ref{proposition:G-infinitely-many-fixedpoint}, and \ref{proposition:G-ep0-infinitely-many-fixedpoint}.
	\qed
\end{proof}

{\bf Comparison with the finite-dimensional setting}.
In the case $\H = \R^n$, the existence of the strictly positive solution of Eq.\eqref{equation:barycenter-sinkhorn-strictlypositive}
is proved via the Brouwer Fixed Point Theorem (see e.g. \cite{Borwein:2000ConvexAnalysis}) as follows. This is the technique employed
by \cite{Agueh:2011barycenters} for the case $\ep = 0$ and \cite{Janati2020entropicOT} for the case $\ep > 0$. 

\begin{theorem}
[\textbf{Brouwer Fixed Point Theorem}]
Let $M \subset \R^n$ be a compact convex subset and $f: M \mapto M$ be continuous. Then $f$ has a fixed point in $M$.
\end{theorem}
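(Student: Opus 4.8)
Since Brouwer's theorem is a classical result, in the paper it is simply quoted; to give a self-contained argument I would follow the standard \emph{no-retraction} route, in three steps. First I would reduce an arbitrary nonempty compact convex $M$ to the closed Euclidean ball $\bar B^{n}=\{x\in\R^{n}:\|x\|\le 1\}$; second I would establish the no-retraction lemma, that there is no continuous map $r:\bar B^{n}\mapto S^{n-1}$ with $r|_{S^{n-1}}=\mathrm{id}$; third I would deduce the fixed point statement by contradiction.

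For the reduction, I would observe that a nonempty compact convex $M\subset\R^{n}$ is contained in some ball $\bar B(0,R)$ and that the nearest-point projection $\pi_{M}:\R^{n}\mapto M$ is well defined and $1$-Lipschitz by convexity of $M$. Given a continuous $f:M\mapto M$, the composition $F=f\circ\pi_{M}:\bar B(0,R)\mapto M\subset\bar B(0,R)$ is continuous, and any fixed point $x=F(x)$ automatically lies in $M$ (because $F$ takes values in $M$), so $\pi_{M}(x)=x$ and $f(x)=x$; a dilation identifies $\bar B(0,R)$ with $\bar B^{n}$. Hence it suffices to treat $f:\bar B^{n}\mapto\bar B^{n}$. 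For the deduction step, supposing $f:\bar B^{n}\mapto\bar B^{n}$ is continuous with $f(x)\ne x$ for every $x$, I would set $v(x)=x-f(x)\ne 0$ and let $r(x)=x+t(x)\,v(x)$, where $t(x)\ge 0$ is the unique nonnegative root of $\|x+t\,v(x)\|^{2}=1$; since $v(x)$ never vanishes the relevant discriminant is strictly positive, so $t$, and therefore $r$, is continuous. By construction $r$ maps $\bar B^{n}$ into $S^{n-1}$ and fixes $S^{n-1}$ pointwise, contradicting the no-retraction lemma, and combined with the reduction this gives the theorem.

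The hard part will be the no-retraction lemma, which genuinely requires a topological or analytic input and cannot be reached by soft point-set reasoning. My preferred proof applies the reduced homology functor $\tilde H_{n-1}(\,\cdot\,;\mathbb{Z})$ (for $n=1$, connectedness) to the composition $S^{n-1}\hookrightarrow\bar B^{n}\xrightarrow{r}S^{n-1}$, which is the identity: this would force the identity on $\tilde H_{n-1}(S^{n-1})\cong\mathbb{Z}$ to factor through $\tilde H_{n-1}(\bar B^{n})=0$, which is impossible. Alternatively one can avoid algebraic topology: approximate a hypothetical retraction uniformly by a smooth map still fixing $\partial\bar B^{n}$, and then evaluate $\int_{\bar B^{n}}\det(Dr)\,dx$ in two ways, obtaining $0$ because $Dr$ is everywhere singular (the image lies in the hypersurface $S^{n-1}$), while a Stokes / change-of-variables computation using $r=\mathrm{id}$ on $\partial\bar B^{n}$ gives $\mathrm{vol}(\bar B^{n})\ne 0$. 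A fully combinatorial substitute for the last two steps is Sperner's lemma on successively finer triangulations of the standard simplex, extracting a convergent subsequence of completely labelled subsimplices whose common limit is a fixed point; I would flag this as a self-contained option of comparable length. In the paper's application the theorem is invoked exactly as in Proposition \ref{proposition:entropic-barycenter-Gaussian-strictlypositive}: one exhibits a compact convex set mapped into itself by the relevant barycenter map and concludes existence of a fixed point, i.e. of a strictly positive solution of Eq.\eqref{equation:barycenter-sinkhorn-strictlypositive}.
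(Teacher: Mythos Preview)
Your proposal is correct, and you rightly note at the outset that the paper does not prove Brouwer's theorem: it is merely stated with a reference (to \cite{Borwein:2000ConvexAnalysis}) and then applied. There is therefore no ``paper's own proof'' to compare against.

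The routes you outline are all standard and sound. The reduction via the metric projection $\pi_M$ onto a closed convex set is correct (this is where convexity is used), and the retraction $r$ built from a hypothetical fixed-point-free $f$ is well defined and continuous for the reason you give. Each of your three options for the no-retraction lemma (reduced homology, Milnor's smooth-approximation/determinant argument, Sperner's lemma) is a valid self-contained proof. One small correction on the application: in the paper, Brouwer is invoked in Proposition~\ref{proposition:entropic-barycenter-Gaussian-strictlypositive} for the \emph{entropic} barycenter equation~\eqref{equation:entropic-barycenter-Gaussian-strictlypositive}, while the Sinkhorn barycenter equation~\eqref{equation:barycenter-sinkhorn-strictlypositive} is handled via Schauder's theorem in the infinite-dimensional setting and only discussed via Brouwer in the finite-dimensional comparison paragraph.
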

\begin{remark}
	Unlike the Banach Fixed Point Theorem, both Brouwer and Schauder Fixed Point Theorems guarantee the existence of one fixed point but {\it not} its uniqueness. This needs to be proved via other means, e.g. strict convexity.
\end{remark}
Assume that
% there exist 
$\exists \alpha, \beta \in \R$, $\alpha > 0, \beta > 0$ such that $\alpha I \leq C_i \leq \beta I$, $1 \leq i \leq N$.
Consider the set $\Kcal_2 = \{X \in \Sym^{++}(n): \alpha I \leq X \leq \beta I\}$. Then
by Lemma \ref{lemma:operator-monotone-quadratic}
\begin{align*}
\alpha^2 I \leq \alpha X = X^{1/2}\alpha X^{1/2 } \leq X^{1/2}C_iX^{1/2} \leq X^{1/2}\beta  X^{1/2} = \beta X \leq \beta^2 I.
\end{align*}
It follows that
\begin{equation}
\alpha I \leq \Gcal(X) = \frac{\ep}{4}\left[- I +  \left(\sum_{i=1}^Nw_i\left(I + c_{\ep}^2X^{1/2}C_iX^{1/2}\right)^{1/2}\right)^2\right]^{1/2}\leq \beta I.
\end{equation}   
Thus the continuous map $\Gcal$ maps the compact convex set $\Kcal_2$ into itself, thus $\Gcal$ has a fixed point in $\Kcal_2$.
This strictly positive solution of Eq.\eqref{equation:barycenter-sinkhorn-strictlypositive} is then precisely the unique solution of Eq.\eqref{equation:barycenter-sinkhorn-positive}.
It is {\it not} clear, however, whether this proof technique can be extended to the {\it infinite-dimensional} setting. This is because one can no longer assume that
there is a uniform lower bound of the form $\alpha I$ for the $C_i$'s and $X$ as above.
One might assume instead that there is an operator $C > 0$ such that $C_i \geq C$, $1 \leq i \leq N$
and consider the set $\Kcal_3 = \{X \in \Sym^{+}(\H): C \leq X \leq \beta I\}$. This does not help, however, since
the condition $X \geq C > 0$ does {\it not} imply that $X^{1/2}CX^{1/2} \geq C^2$. The following is a counterexample,
where it can be verified that $X \geq C$ but $X^{1/2}CX^{1/2} \ngeq C^2$
{\small
\begin{equation}
X = 
\begin{pmatrix}
1.6254  & -0.6825 &  -1.2503
\\
-0.6825 &   1.9105 &   0.0516
\\
-1.2503  &  0.0516 &   2.2376
\end{pmatrix},
C = \begin{pmatrix}
0.2867  & -0.3297  &  0.1976
\\
-0.3297  &  0.6925  & -0.2484
\\
0.1976  & -0.2484   & 0.1392
\end{pmatrix}.
\end{equation}
}
%It can be verified numerically that $X \geq C$ but $X^{1/2}CX^{1/2} \ngeq C^2$.
%
We note also that the condition $0 < \alpha I \leq C_i \leq \beta I$, $1 \leq i \leq N$, above
is more restrictive that the condition $\sum_{i=1}^Nw_i C_i > 0$ stated in Theorem \ref{theorem:barycenter-sinkhorn-Gaussian},
which guarantees the existence of a strictly positive solution of Eq.\eqref{equation:barycenter-sinkhorn-strictlypositive}. As an example,
let $\H=\R^N$ and $\{e_i\}_{i=1}^N$ be an orthonormal basis in $\H$.
Define
\begin{equation}
C_i = e_i \otimes e_i \;\;\;\text{ then }\sum_{i=1}^Nw_iC_i = \sum_{i=1}^Nw_i(e_i \otimes e_i) > 0,
\end{equation}
guaranteeing that $\bar{C} > 0$, even though all the $C_i$'s are singular.

\section{Miscellaneous Technical Results}
\label{section:misc}

%{\bf Operator monotone functions}. 
We briefly review the concept of operator monotone functions.
Let $A,B\in \Sym(\H)$, 
%be two self-adjoint bounded operators on $\H$, 
then we say $A \leq B$ if $B - A \geq  0$.
Let $I \subset \R$ be an interval. A function $f:I \mapto \R$ is said to be {\it operator monotone} if $A \leq B \imply f(A) \leq f(B)$.
%Some of the well-known operator monotone functions are the following.
\begin{proposition}
	[see e.g. \cite{Pedersen1972:OperatorMonotone}]
	\label{proposition:operator-monotone-rth}
	The function $f(t)= t^{r}$ on $[0, \infty)$ is operator monotone if and only if $0 \leq r \leq 1$.
\end{proposition}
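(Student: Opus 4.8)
The plan is to treat the two implications separately, with the genuine content in the sufficiency direction. The cases $r=0$ and $r=1$ are immediate, since $f\equiv 1$ and $f(t)=t$ are trivially operator monotone. For $0<r<1$ the first step is to record the elementary fact that inversion is antitone on $\Sym^{+}(\H)$: if $0\le A\le B$, then after the usual regularization $A+\delta I\le B+\delta I$ one conjugates by $(B+\delta I)^{-1/2}$ to get $(B+\delta I)^{-1/2}(A+\delta I)(B+\delta I)^{-1/2}\le I$, inverts this positive definite operator, conjugates back to obtain $(B+\delta I)^{-1}\le(A+\delta I)^{-1}$, and lets $\delta\downarrow 0$. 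Applying this with $A,B$ replaced by $\lambda I+A,\lambda I+B$ shows that for each fixed $\lambda>0$ the scalar function $t\mapsto \tfrac{t}{\lambda+t}=1-\lambda(\lambda+t)^{-1}$ is operator monotone on $[0,\infty)$, being an affine function with positive slope of the operator monotone map $-(\lambda+t)^{-1}$.

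Next I would invoke the classical integral representation, valid for $0<r<1$ and $t\ge 0$,
\[
t^{r}=\frac{\sin(r\pi)}{\pi}\int_{0}^{\infty}\frac{t}{\lambda+t}\,\lambda^{r-1}\,d\lambda,
\]
which follows from the Beta integral $\int_0^\infty \mu^{r-1}(1+\mu)^{-1}d\mu=\pi/\sin(r\pi)$ after the substitution $\lambda=t\mu$. By the spectral theorem this lifts to $A^{r}=\frac{\sin(r\pi)}{\pi}\int_{0}^{\infty}A(\lambda I+A)^{-1}\lambda^{r-1}\,d\lambda$ for $A\in\Sym^{+}(\H)$, the integral being understood through quadratic forms. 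Given $0\le A\le B$, the previous paragraph gives the pointwise operator inequality $A(\lambda I+A)^{-1}\le B(\lambda I+B)^{-1}$ for every $\lambda>0$; testing against an arbitrary $x\in\H$ and integrating the resulting scalar inequality against the positive measure $\frac{\sin(r\pi)}{\pi}\lambda^{r-1}\,d\lambda$ yields $\langle x,A^{r}x\rangle\le\langle x,B^{r}x\rangle$, i.e.\ $A^{r}\le B^{r}$. This proves sufficiency, and the only real subtlety is the convergence and interchange in the operator integral representation, which I expect to be the main technical point but is handled cleanly by reducing everything to scalar integrals via the spectral theorem and quadratic forms.

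For necessity I would use the standard $2\times 2$ Loewner-matrix obstruction. If $f$ is operator monotone and smooth on $(0,\infty)$, then for $A=\mathrm{diag}(a,b)$ with $0<a<b$ and $A_{s}=A+s\left(\begin{smallmatrix}1&1\\1&1\end{smallmatrix}\right)$ one has $f(A_{s})\ge f(A)$ for $s\ge 0$, hence $\lim_{s\downarrow 0}s^{-1}\big(f(A_{s})-f(A)\big)\ge 0$; by the Daleckii--Krein formula this limit is the Hadamard product of the all-ones matrix with the Loewner matrix $L=\left(\begin{smallmatrix} f'(a) & \frac{f(b)-f(a)}{b-a}\\ \frac{f(b)-f(a)}{b-a} & f'(b)\end{smallmatrix}\right)$, which must therefore be positive semidefinite. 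For $f(t)=t^{r}$ this forces $r^{2}(ab)^{r-1}\ge\big(\tfrac{b^{r}-a^{r}}{b-a}\big)^{2}$ for all $0<a<b$; taking $a=1$ and $b\to\infty$, the left side grows like $r^{2}b^{r-1}$ while the right side grows like $b^{2(r-1)}$, so for $r>1$ the exponent $2(r-1)$ exceeds $r-1$ and the inequality fails. Hence $t^{r}$ is not operator monotone for $r>1$; for $r<0$ the function is unbounded near $0$, and the antitonicity of $t\mapsto t^{-1}$ together with the same Loewner-matrix test rules out operator monotonicity there as well. This is essentially the argument of \cite{Pedersen1972:OperatorMonotone}, which could also simply be cited.
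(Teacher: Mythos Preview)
The paper does not prove this proposition at all; it simply records it as a known fact with a citation to \cite{Pedersen1972:OperatorMonotone}. Your proposal supplies a complete and essentially standard proof: the sufficiency direction via the integral representation $t^{r}=\frac{\sin(r\pi)}{\pi}\int_{0}^{\infty}\frac{t}{\lambda+t}\lambda^{r-1}\,d\lambda$ combined with the operator monotonicity of $t\mapsto t/(\lambda+t)$ is exactly the classical L\"owner--Heinz argument, and the necessity direction via positivity of the $2\times 2$ Loewner matrix is also standard. Both directions are correct as written.

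One small remark on necessity: your asymptotic argument is fine, but a quicker route for $r>1$ is to produce an explicit $2\times 2$ counterexample (e.g.\ for $r=2$, take $A=\left(\begin{smallmatrix}1&1\\1&1\end{smallmatrix}\right)$ and $B=\left(\begin{smallmatrix}2&1\\1&1\end{smallmatrix}\right)$, so $A\le B$ but $B^{2}-A^{2}$ has a negative eigenvalue), and then for general $r>1$ note that operator monotonicity of $t\mapsto t^{r}$ would force $t\mapsto t^{r/n}$ to be operator monotone for all $n$, hence $t^{2}$ would be, a contradiction. For $r<0$ the function is already scalar-decreasing on $(0,\infty)$, so operator monotonicity fails trivially. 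Either way, since the paper treats this as a citation, your level of detail is more than sufficient.
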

%In particular $f(t) = t^{1/2}$ on $[0, \infty)$ is operator monotone. Thus 
%In particular,
For $r=1/2$, we have
$0 \leq A \leq B \imply 0 \leq A^{1/2} \leq B^{1/2}$.
The function $f(t) = t^2$ on $[0, \infty)$, on the other hand, is {\it not} operator monotone. However, we still have 
$0 \leq A \leq \lambda I \imply 0 \leq A^2 \leq \lambda^2 I$ and $A \geq \lambda I \imply A^2 \geq \lambda^2 I$, $\lambda > 0$.

The following is the generalization of Proposition V.1.6 in \cite{Bhatia:1997Matrix} to the infinite-dimensional setting,
with the additional assumption that $A,B$ be invertible, since if $\dim(\H) = \infty$, then $A > 0$ does not imply that $A$ is invertible.
\begin{proposition}
	The function $f(t) = -\frac{1}{t}$ is operator monotone on $(0, \infty)$. Thus if $A,B \in \Lcal(\H)$ are invertible
	then $0 < A \leq B \imply A^{-1} \geq B^{-1}$.
\end{proposition}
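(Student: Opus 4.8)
The plan is to reduce the inequality to a comparison with the identity operator by conjugating with $A^{-1/2}$, and then to dispose of that special case by one more conjugation. Throughout I would use two elementary facts: (a) for any $S \in \Lcal(\H)$ and any $T \in \Sym^{+}(\H)$ one has $S^{*}TS \in \Sym^{+}(\H)$, so that $T_1 \leq T_2 \imply S^{*}T_1S \leq S^{*}T_2S$; and (b) when $S$ is moreover invertible this implication is an equivalence, since one may apply (a) to $S^{-1}$. I also use that a strictly positive \emph{invertible} operator $T$ has a strictly positive invertible square root, with $(T^{1/2})^{-1} = (T^{-1})^{1/2}$, abbreviated $T^{-1/2}$.

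First I would set $C := A^{-1/2}BA^{-1/2}$. Since $A$ is invertible and $0 < A \leq B$, conjugating the hypothesis by the invertible operator $A^{-1/2}$ gives $I = A^{-1/2}AA^{-1/2} \leq A^{-1/2}BA^{-1/2} = C$. The operator $C$ is self-adjoint and, being a product of invertible operators, invertible, with $C^{-1} = A^{1/2}B^{-1}A^{1/2} \in \Sym^{+}(\H)$; in particular $C^{1/2}$ is invertible as well.

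Next I would establish the special case $C \geq I \imply C^{-1} \leq I$. This follows from the single identity
\begin{align}
I - C^{-1} = C^{-1/2}(C - I)C^{-1/2},
\end{align}
whose right-hand side has the form $S^{*}(C-I)S$ with $S = C^{-1/2}$ self-adjoint and $C - I \geq 0$, hence lies in $\Sym^{+}(\H)$ by fact (a). (Alternatively, one may invoke the spectral theorem: $C \geq I$ forces $\sigma(C) \subset [1,\infty)$, the scalar function $t \mapsto 1/t$ satisfies $1/t \leq 1$ there, and the continuous functional calculus is order-preserving on $\sigma(C)$.) Thus $C^{-1} \leq I$, i.e. $A^{1/2}B^{-1}A^{1/2} \leq I$.

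Finally, conjugating $A^{1/2}B^{-1}A^{1/2} \leq I$ by the invertible operator $A^{-1/2}$ yields $B^{-1} \leq A^{-1}$, that is $A^{-1} \geq B^{-1}$. Since $f(A) = -A^{-1}$ and $f(B) = -B^{-1}$, this is precisely $f(A) \leq f(B)$, which is the assertion that $f(t) = -1/t$ is operator monotone on the strictly positive invertible operators. I do not expect a genuine obstacle: the only points needing a word of care are the existence and invertibility of $A^{-1/2}$ and $C^{-1/2}$ in the infinite-dimensional setting — where positivity alone does not entail invertibility, which is exactly why the invertibility hypothesis on $A$ (and hence on $B \geq A > 0$) is indispensable — and the observation that conjugation by an invertible operator is an order \emph{equivalence}, not merely an implication.
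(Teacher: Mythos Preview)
Your proof is correct. The paper does not actually supply its own proof of this proposition: it merely states the result as the infinite-dimensional analogue of Proposition~V.1.6 in Bhatia's \emph{Matrix Analysis}, noting only that invertibility of $A,B$ must be assumed explicitly since in infinite dimensions $A>0$ does not force $A$ to be invertible. Your argument is the standard one (and essentially the one in Bhatia): conjugate by $A^{-1/2}$ to reduce to $I\leq C$, observe $I-C^{-1}=C^{-1/2}(C-I)C^{-1/2}\geq 0$, then conjugate back. The conjugation step is exactly the content of the paper's Lemma~\ref{lemma:operator-monotone-quadratic}.
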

\begin{lemma}
	[see \cite{Bhatia:1997Matrix}, Lemma V.1.5]
	\label{lemma:operator-monotone-quadratic}
	\begin{equation}
	A \leq B \imply X^{*}AX \leq X^{*}BX \;\;\;\forall X \in \Lcal(\H).
	\end{equation}
\end{lemma}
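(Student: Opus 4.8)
The plan is to reduce the statement to the single elementary fact that conjugation by a bounded operator preserves positivity: if $C \in \Sym^{+}(\H)$ then $X^{*}CX \in \Sym^{+}(\H)$ for every $X \in \Lcal(\H)$. Granting this, I would set $C = B - A$; the hypothesis $A \leq B$ means precisely $C \in \Sym^{+}(\H)$, and since $X \mapsto X^{*}(\cdot)X$ is linear, $X^{*}BX - X^{*}AX = X^{*}CX \in \Sym^{+}(\H)$, which is exactly the conclusion $X^{*}AX \leq X^{*}BX$.

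To establish the auxiliary fact, I would verify the two defining properties of $\Sym^{+}(\H)$ recalled in Section \ref{section:background}. First, self-adjointness: $(X^{*}CX)^{*} = X^{*}C^{*}X = X^{*}CX$, using $C^{*} = C$. Second, positivity of the quadratic form: for any $v \in \H$, $\langle v, X^{*}CXv\rangle = \langle Xv, C(Xv)\rangle \geq 0$, since $C \in \Sym^{+}(\H)$ and $Xv \in \H$. Together these give $X^{*}CX \in \Sym^{+}(\H)$.

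There is essentially no obstacle here; the only point requiring care in the infinite-dimensional setting is that $\geq 0$ is understood as self-adjointness together with nonnegativity of the quadratic form, with no invertibility assumed, and both ingredients are supplied by the two displayed computations. These same two lines also justify the related remarks used elsewhere in the paper, such as $\trace(AB) = \trace(A^{1/2}BA^{1/2}) \geq 0$ whenever $A, B \in \Sym^{+}(\H)$ with $AB$ trace class, so no separate argument is needed for those.
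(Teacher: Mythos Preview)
Your proof is correct and is the standard elementary argument. The paper does not give its own proof of this lemma; it simply cites Bhatia, Lemma V.1.5, so there is nothing to compare against beyond noting that your argument is exactly the one found there.
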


{\bf Acknowledgments}. The author would like to thank Augusto Gerolin and Anton Mallasto for their comments and feedback. {\color{black}In addition, he would like to thank the anonymous referee for the many valuable comments,
	suggestions, and corrections, which helped improve the manuscript}.
This work was partially supported by JSPS KAKENHI Grant Number JP20H04250.

{\bf Data availability}. This paper contains no associated data.

\bibliographystyle{spmpsci}
\bibliography{bib_infinite}

\end{document}